\documentclass{article}
\usepackage{arxiv}

\usepackage{amsmath,amssymb,amsthm}

\usepackage{newpxtext,newpxmath}
\usepackage[round,authoryear]{natbib}
\usepackage{float}

\usepackage{verbatim}
\usepackage{microtype}
\usepackage{mathtools}
\usepackage{bm}
\usepackage{enumitem}
\usepackage{algorithm}
\usepackage{algorithmic}
\usepackage{thmtools,thm-restate}
\usepackage{xspace}
\usepackage{xargs}
\usepackage{xcolor}
\definecolor{darkblue}{rgb}{0.0, 0.0, 0.55}
\usepackage{hyperref}
\hypersetup{
    colorlinks=true,
    citecolor=darkblue,
    linkcolor=darkblue,
    urlcolor=darkblue
}
\usepackage{refcount}
\usepackage{url}
\usepackage[capitalize,noabbrev]{cleveref}
\usepackage{graphicx}
\usepackage{subcaption}
\usepackage{booktabs}
\usepackage{array}
\usepackage{etoc} 

\usepackage[para]{footmisc}

\DeclareCaptionLabelFormat{figandsub}{(\thefigure#2)}
\DeclareMathOperator*{\argmax}{arg\,max}

\setlist[itemize]{itemsep=0pt, topsep=2pt, leftmargin=*}
\setlist[enumerate]{noitemsep, topsep=2pt}

\crefname{assumption}{Assumption}{Assumptions}
\Crefname{assumption}{Assumption}{Assumptions}
\crefname{corollary}{Corollary}{Corollaries}
\Crefname{corollary}{Corollary}{Corollaries}
\crefname{convention}{Convention}{Conventions}
\Crefname{convention}{Convention}{Conventions}
\crefname{note}{Note}{Notes}
\Crefname{note}{Note}{Notes}
\crefname{algorithm}{Algorithm}{Algorithms}
\Crefname{algorithm}{Algorithm}{Algorithms}
\crefname{subsection}{Subsection}{Subsections}
\Crefname{subsection}{Subsection}{Subsections}
\crefname{table}{Table}{Tables}
\Crefname{table}{Table}{Tables}

\newcommand{\proofanchor}[1]{\phantomsection\hypertarget{proof:#1}{}}

\newcommand{\pdfref}[2]{\texorpdfstring{\Cref{#2}}{#1 \getrefnumber{#2}}}


\theoremstyle{definition}
\newtheorem{assumption}{Assumption}
\newtheorem{lemma}{Lemma}
\newtheorem{theorem}{Theorem}
\newtheorem{remark}{Remark}

\newtheorem{definition}{Definition}
\newtheorem{corollary}{Corollary}
\newtheorem{proposition}{Proposition}
\newtheorem{example}{Example}
\newtheorem{convention}{Convention}

\newcommand{\bE}{\mathbb{E}}
\newcommand{\bP}{\mathbb{P}}
\newcommand{\bR}{\mathbb{R}}

\newcommand{\bN}{\mathbb{N}}

\newcommand{\mM}{\mathcal{M}}
\newcommand{\mS}{\mathcal{S}}
\newcommand{\mA}{\mathcal{A}}
\newcommand{\mB}{\mathcal{B}}
\newcommand{\mF}{\mathcal{F}}
\newcommand{\mG}{\mathcal{G}}
\newcommand{\mX}{\mathcal{X}}
\newcommand{\mY}{\mathcal{Y}}
\newcommand{\mZ}{\mathcal{Z}}

\newcommand{\spn}{\mathrm{sp}}
\newcommand{\asp}{\mathrm{asp}}
\newcommand{\op}{\mathrm{op}}
\newcommand{\dom}{\mathrm{dom}}

\newcommand{\mT}{\mathcal{T}}

\newcommand{\Str}{\mS_{\mathrm{tr}}}

\newcommand{\KL}{\text{KL}}

\newcommand{\ones}{\mathbf{1}}

\newcommand{\indicator}{\mathbb{1}}
\newcommand{\pluseq}{\mathrel{+}=}

\setlist[enumerate,1]{leftmargin=2.0em}

\title{Mixing Regularization for Average-Reward $Q$-Learning and $Q$-Planning}

\author{
  Julien Roger \\
  \'Ecole polytechnique \\
  \texttt{julien.roger@polytechnique.org} \\
  \And
  Sam Bodmer \\
  New York University \\
  \texttt{sgb329@nyu.edu} \\
  \AND
  Jian Qian \\
  The University of Hong Kong \\
  \texttt{jianqian@hku.hk} \\
  \And
  Benjamin Riviere \\
  New York University \\
  \texttt{bpr3966@nyu.edu} \\
}

\renewcommand{\undertitle}{}
\renewcommand{\headeright}{}
\date{}

\begin{document}
\maketitle
\begin{abstract}
    The average-reward setting for reinforcement learning is challenging because its Bellman operator is known to contract in span only when the underlying Markov chain is mixing. Whereas existing work typically \textit{assumes} a contraction property, e.g.,~through environment mixing, we consider the setting where we are given a reward-agnostic $m$-step mixing policy, and we \textit{control} the contraction and exploration properties of the learned policy by regularizing toward the mixing policy. We provide finite-time analysis of $Q$-learning algorithms that can be instantiated with multiple choices of regularization, with guarantees measured with respect to the regularized fixed point. Our framework exposes novel regularizer-dependent objects that control contraction, exploration, and operator bias. In the one-step setting, our rates match the best known rates for $Q$-iteration and synchronous $Q$-learning, and for $m$-step mixing we derive new rates. We also study $Q$-planning variants, and show that planning acts as time-localized learning, where the planning horizon $H$ trades off the bias from the boundary value against the sample complexity. Finally, we present numerical studies validating the analysis: (i) knowledge of a strong mixing policy improves convergence and exploration, and its advantage increases with bottleneck severity and, (ii) our theory predicts the convergence and bias rankings of regularizers from the $\alpha$-divergence family. 
\end{abstract}

\etocdepthtag.toc{mtmain}

\section{Introduction}
\label{sec:intro}

The average-reward setting of reinforcement learning is challenging because, without a discount factor, the contraction of the Bellman operator is solely determined by the closed-loop dynamics, formed by the transitions and the policy. The classical requirement for learning algorithms is the existence of a policy that mixes the induced Markov chain, which holds when the Markov decision process (MDP) is communicating and its transition graph is aperiodic. In that case, every full-support policy mixes, albeit at potentially very different rates.

However, the design of algorithms that exploit knowledge of a strongly mixing policy to improve the constants of their own convergence and exploration remains relatively understudied. This setting is natural in domains like robotics and fine-tuning, where new tasks are learned starting from an existing policy. Although regularization toward the mixing policy is a natural approach, its effects on average-reward learning are not understood, raising the question:

\begin{center}
    \textit{How does regularization toward a mixing policy control contraction, exploration, bias and, ultimately, sample complexity of average-reward $Q$-learning? How does this analysis inform regularization design?}
\end{center}

We give the first finite-time analysis of average-reward $Q$-learning in which regularization toward a mixing policy provides both the contraction of the Bellman operator and the exploration needed to learn. Specifically, our main contributions are:

\begin{itemize}
    \item \textbf{A framework for mixing regularization.} Given a reward-agnostic policy $\mu$ that mixes in $m$ steps with parameter $\beta_m$, we identify regularizer-dependent quantities that drive contraction, exploration, and bias: the \emph{domination profile} $\gamma_\Omega$ controls contraction, at rate $\tilde\beta=\beta_m\bar\gamma^{m-1}$, and exploration, guaranteeing coverage with probability $\beta_m\bar\gamma^m\eta_{\min}$ every $m$ steps where $\eta_{\min}$ is the smallest mass of the minorizing measure. The \emph{regularization modulus} controls the operator bias with a term scaling with $m / \tilde{\beta}$, thereby exposing a trade-off between faster contraction and larger bias. Unlike existing methods, which assume contraction conditions like uniform mixing or reachability, our assumption exploits knowledge from a single user-chosen policy.
    \item \textbf{Sample complexity of regularized $Q$-learning.} We analyze algorithms that isolate, in turn, contraction, statistical error, exploration and localization. Regularized $Q$-iteration converges linearly, with $O(|\mS|^2|\mA|\,m\,\tilde\beta^{-1}\log(1/\varepsilon))$ model queries; 
    synchronous regularized $Q$-learning reaches $\varepsilon$-accuracy with high probability after $\tilde O(|\mS||\mA|\,m^4\bar w^2\tilde\beta^{-3}\varepsilon^{-2})$ samples where $\bar w$ bounds the noise; 
    epoch regularized $Q$-learning, which learns from a single trajectory of its own greedy policies, needs 
    $\tilde O(m^4\bar w^2\tilde\beta^{-4}(\bar\gamma\eta_{\min})^{-1}\varepsilon^{-2})$ transitions; regularized $Q$-planning, which restricts the learning process to the state-action pairs within $H$ steps of the reset distribution, needs $\tilde O\big((h_1+m/\tilde{\beta})^4\bar w^2q^{-1}\varepsilon^{-2}\big)$ transitions where $h_1$ is the depth after which the states reachable from the reset distribution no longer change and $q$ is the minimum visitation probability across depth. 
    In the last three cases, we match the optimal $\varepsilon^{-2}$ dependence. Although direct comparison to state-of-the-art analysis is challenging because there is not a standard setting, we achieve comparable rates, summarized in \Cref{tab:sample-complexity}.
    \item \textbf{Empirical validation.} In our first experiment, we empirically demonstrate that knowledge of a strong mixing policy can improve convergence and exploration properties, and its advantage increases with bottleneck severity. In our second experiment, we empirically demonstrate that our theory predicts the convergence and bias rankings of regularizers from the $\alpha$-divergence family. 
\end{itemize}

\section{Background on Regularized Average-Reward MDPs}
\label{sec:background}

\subsection{Average-Reward MDPs}
\label{subsec:average_reward_mdps}

We consider an infinite-horizon average-reward MDP $\mM:=(\mS,\mA,p,r)$, where $\mS$ is the state space, $\mA$ the action space, $p:\mS\times\mA\times\mS\to[0,1]$ the transition probability and $r:\mS\times\mA\to\bR$ the deterministic reward function. We assume that both $\mS$ and $\mA$ are finite. An action-value function is any $Q\in\bR^{\mS\times\mA}$. 

A policy $\pi$ maps each state $s$ to an action distribution: $\pi : \mS \rightarrow \Delta(\mA)$, inducing the state-action kernel $K_\pi(s',a'|s,a) := p(s'|s,a) \pi(a'|s')$ and the $m$-step state-action kernel $K^m_\pi(s',a'|s,a) := \sum_{\tilde{s},\tilde{a}} p(\tilde{s}|s,a)\,\pi(\tilde{a}|\tilde{s})\, K^{m-1}_\pi(s',a'|\tilde{s},\tilde{a})$ with $K_\pi^0:=I_{\mS\times\mA}$.

The Bellman (optimality) operator is $\mT(Q)(s,a):=r(s,a)+\bE_{s'\sim p(\cdot|s,a)}[\max_{a'}Q(s',a')]$. The gain of a policy is $\rho^\pi(s):=\liminf_{N\to\infty}\frac{1}{N}\bE_\pi\big[\sum_{n=0}^{N-1}r(s_n,a_n)\,\big|\,s_0=s\big]$, and the optimal gain is $\rho^*(s):=\sup_\pi\rho^\pi(s)$. Without discounting, $\mT$ is not a contraction, and the relevant notion of fixed point is the optimality equation $\mT(Q^*)=Q^*+\rho^*\ones$: for communicating MDPs, $\rho^*$ is constant and this equation has solutions~\citep{schweitzer1978functional}.

Because the optimality equation is translation invariant, it is standard to use the span seminorm: 
\begin{definition}[Span seminorms]
    \label{def:span}
    For a finite set $\mX$ and $U\in\bR^{\mX}$, the span seminorm is $\|U\|_{\mX,\spn}:=\max_{x\in\mX}U(x)-\min_{x\in\mX}U(x)$. This is used on $\bR^{\mS}$ and $\bR^{\mA}$, and for $\bR^{\mS\times\mA}$ we write $\|Q\|_\spn:=\|Q\|_{\mS\times\mA,\spn}$. We also define the \emph{action-span seminorm} $\|Q\|_\asp:=\max_{s\in\mS}\|Q(s,\cdot)\|_{\mA,\spn}$. Both vanish on constants, $\|\cdot\|_\asp$ vanishes on every $Q$ that depends only on the state, and $\|\cdot\|_\asp\leq\|\cdot\|_\spn$.
\end{definition}

\subsection{Regularized MDPs}
\label{subsec:regularized}

We use the regularized MDP framework of \citet{geist2019theory}. Let $\tau>0$ be a temperature and, for each $s\in\mS$, let $\Omega_s:\Delta(\mA)\to\bR\cup\{+\infty\}$ be a strongly convex function. With the shorthands $Q_s(a) := Q(s,a)$ and $\pi_s(a) := \pi(a|s)$, the regularized value function $\Omega^*(Q)\in\bR^\mS$ and the greedy policy $\pi_{\Omega,Q}:\mS\to\Delta(\mA)$ are
\begin{align*}
  &\Omega^*(Q)(s) := \max_{\pi_s\in\Delta(\mA)} \left\{\langle \pi_s, Q_s\rangle - \tau\,\Omega_s(\pi_s)\right\},
  &\pi_{\Omega,Q}(\cdot|s) := \argmax_{\pi_s\in\Delta(\mA)} \left\{\langle \pi_s, Q_s\rangle - \tau\,\Omega_s(\pi_s)\right\},
\end{align*}
so that $\Omega^*(\cdot)(s)$ is the convex conjugate of $\tau\Omega_s$, and the regularized Bellman operator is $\mT_\Omega(Q)(s,a):=r(s,a)+\bE_{s'\sim p(\cdot|s,a)}[\Omega^*(Q)(s')]$. 
Under \Cref{assumption:convex-analytic-regularity} below, the maximizing policy is unique and equals $\nabla\Omega^*(Q)(s)$.  

Examples include the $\alpha$-divergences to a policy $\mu$, such as the forward and reverse KL.

\section{Mixing-Regularized MDPs}
\label{sec:mixing_mdps}

We state our assumptions, define the objects that govern mixing regularization, and derive their effect on contraction, exploration and bias (\Cref{lem:mixing_main}); proofs are deferred to \Cref{app:proofs_mixing}.

\subsection{Assumptions}

\begin{assumption}[The reference policy $\mu$ is $m$-step Doeblin mixing]
    \label{assumption:m-step-mixing}
    There exists a stationary full-support policy $\mu : \mS\to\Delta(\mA)$, an integer $m \geq 1$, a parameter $\beta_m \in (0,1]$, and a full-support distribution $\nu \in \Delta(\mS)$ such that, $\forall (s,a,s')$, $\sum_{a'} K^m_{\mu}(s',a'|s,a) \geq \beta_m \nu(s')$. Furthermore, let $\eta(s,a) := \nu(s) \mu(a|s)$ and $\eta_{\min} := \min_{(s,a)} \eta(s,a)$. The $m$-step Doeblin condition implies that the MDP is communicating so $\rho^*$ is constant and $\mT(Q^*)=Q^*+\rho^*\ones$ has solutions.
\end{assumption}

\begin{assumption}[Convex regularity of $\Omega$]
    \label{assumption:convex-analytic-regularity}
    The temperature $\tau$ is positive and, for every $s\in\mS$, $\Omega_s$ is strongly convex, closed and proper.\footnote{Strict convexity suffices for all our results; strong convexity only provides the Lipschitz constant of the greedy map in \Cref{lem:omega_star_props}.(i) (\Cref{rem:strict_convexity}).}
\end{assumption}

\begin{assumption}[Admissibility of the mixing policy]
    \label{assumption:reference_policy}
    For every $s\in\mS$ we have $\mu_s\in\dom(\Omega_s)$, the effective domain of $\Omega_s$. 
\end{assumption}

\subsection{Mixing Properties}
\label{subsec:mixing_properties}

\begin{definition}[Mixing-regularization objects]
    \label{def:mixing_main}
    Let $\Omega$ satisfy \Cref{assumption:convex-analytic-regularity}, let $\mu$ be the mixing policy of \Cref{assumption:m-step-mixing}, and let $R\geq0$.
    \begin{enumerate}[label=(\roman*)]
        \item \emph{Radius}: for $Q,Q'\in\bR^{\mS\times\mA}$, $R[Q,Q']:=\max\{\|Q\|_\asp,\|Q'\|_\asp\}$ is the largest action span on the segment $[Q',Q]$ (by convexity of $\|\cdot\|_\asp$).
        \item \emph{Domination profile}: $\gamma_\Omega(R):=\inf\big\{\pi_{\Omega,Q}(a|s)/\mu(a|s)\mid(s,a)\in\mS\times\mA,\ \|Q\|_\asp\leq R\big\}$ is the largest factor such that $\pi_{\Omega,Q}\geq\gamma_\Omega(R)\,\mu$ for every $Q$ of action span at most $R$. It lies in $[0,1]$ and is non-increasing in $R$.
        \item \emph{Regularization modulus}: $\Lambda_\Omega(R):=\sup\big\{\|g_Q\|_{\mS,\spn}\mid\|Q\|_\asp\leq R\big\}$, where $g_Q(s):=\Omega^*(Q)(s)-\max_aQ(s,a)$ is the regularization gap. It is finite and non-decreasing in $R$. 
    \end{enumerate}
\end{definition}

Our finite-time results also require that greedy policies are full support. This is equivalent to the following property: 
{\renewcommand{\theHequation}{P1}
\begin{equation}
    \gamma_{\Omega}(R)>0, \quad \text{for every finite } R\geq 0.
    \tag{P1} 
    \label{eq:P1}
\end{equation}}%
Many regularizers satisfy~\eqref{eq:P1} by default, including the $\alpha$-divergences to $\mu$ with $\alpha\leq1$. For those whose greedy policies can be sparse, such as $\alpha$-divergences with $\alpha>1$, an $\epsilon$-floor that mixes the greedy policy with $\mu$ restores it\footnote{\Cref{subsec:epsilon-floor-wrapped-regularization}}.

Under \Cref{assumption:m-step-mixing,assumption:convex-analytic-regularity,assumption:reference_policy}, the regularized optimality equation $\mT_\Omega(Q^*_\Omega)=Q^*_\Omega+\rho^*_\Omega\ones$ has a solution, $\rho^*_\Omega$ is unique, and under~\eqref{eq:P1} $Q^*_\Omega$ is unique up to an additive constant (\Cref{thm:regularized_fixed_point})\footnote{Existence holds more generally, \Cref{subsec:existence_fixed_point}}.

\begin{restatable}[Mixing properties of $\Omega$-regularization]{lemma}{mixingmain}
    \label{lem:mixing_main}
    Under \Cref{assumption:m-step-mixing,assumption:convex-analytic-regularity} and for any $Q,Q'$:
    \begin{enumerate}[label=(\roman*)]
        \item \emph{Contraction}: with $\mT^k_\Omega$ the $k$-th composition of the regularized Bellman operator,
        \begin{equation*}
            \|\mT^m_\Omega(Q)-\mT^m_\Omega(Q')\|_\spn\leq\left(1-\beta_m\prod_{k=1}^{m-1}\gamma_\Omega(R[\mT_\Omega^k(Q),\mT_\Omega^k(Q')])\right)\|Q-Q'\|_\spn.
        \end{equation*}
        \item \emph{Coverage}: let $(s_n,a_n)_{n\geq0}$ be a trajectory with $a_n\sim\pi_{\Omega,Q_n}(\cdot|s_n)$ and $s_{n+1}\sim p(\cdot|s_n,a_n)$, where $Q_n$ is $\mF_{n-1}$-measurable for $\mF_n:=\sigma(Q_0,(s_k,a_k)_{k\leq n})$. If $\gamma_1,\dots,\gamma_m\geq0$ are $\mF_t$-measurable and $\pi_{\Omega,Q_{t+k}}\geq\gamma_k\mu$ almost surely, then almost surely 
        \begin{equation*}
            \bP\big((s_{t+m},a_{t+m})=(s,a)\mid\mF_t\big)\geq\beta_m\left(\prod_{k=1}^m\gamma_k\right)\eta(s,a), \quad \text{ for every } (s,a).
        \end{equation*}
        \item \emph{Bias}: the one step operator gap is: $\|\mT_\Omega(Q)-\mT(Q)\|_\spn\leq\Lambda_\Omega(\|Q\|_\asp)$. 
        Moreover, if \Cref{assumption:reference_policy} and~\eqref{eq:P1} hold, then for every solution $(Q^*,\rho^*)$ of $\mT(Q^*)=Q^*+\rho^*\ones$,
         \begin{equation*}
            \|Q^*_\Omega-Q^*\|_\spn\leq\frac{m\,\Lambda_\Omega(R^*)}{\beta_m\gamma_\Omega(R^*)^{m-1}},\qquad R^*:=\max\Big\{\|Q^*_\Omega\|_\asp,\max_{0\leq k<m}\|\mT^k_\Omega(Q^*)\|_\asp\Big\},
        \end{equation*}
        and, if $|\mS|\geq2$ and all the $\Omega_s$ have the same minimum $\underline{\Omega}$, as divergences to $\mu$ do with $\underline{\Omega}=0$,
        \begin{equation*}
            -\Lambda_\Omega(\|Q^*\|_\asp)\leq\rho^*_\Omega-\rho^*+\tau\underline{\Omega}\leq0.
        \end{equation*}
    \end{enumerate}
\end{restatable}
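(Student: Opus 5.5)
For (i), we write the difference of regularized Bellman iterates through the mean-value theorem. Since $\Omega^*(\cdot)(s)$ is the convex conjugate of a strictly convex $\tau\Omega_s$, it is differentiable with gradient $\pi_{\Omega,Q}(\cdot|s)$. Hence
\[
\mT_\Omega(Q)-\mT_\Omega(Q')=\int_0^1 K_{\pi_{\Omega,Q_\theta}}\,(Q-Q')\,d\theta,\qquad Q_\theta:=Q'+\theta(Q-Q').
\]
This is an average of stochastic kernels $\bar K:=\int_0^1 K_{\pi_{\Omega,Q_\theta}}d\theta=p\,\bar\pi$, where $\bar\pi:=\int_0^1\pi_{\Omega,Q_\theta}d\theta$. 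By convexity of $\|\cdot\|_\asp$, every $Q_\theta$ has action span at most $R[Q,Q']$, so $\bar\pi\geq\gamma_\Omega(R[Q,Q'])\,\mu$.

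Iterating $m$ times writes $\mT^m_\Omega(Q)-\mT^m_\Omega(Q')=\bar K_m\cdots\bar K_1(Q-Q')$, where $\bar K_{k+1}$ is built from the segment $[\mT^k_\Omega(Q'),\mT^k_\Omega(Q)]$. The product has the form $p\,\bar\pi_m\,p\,\bar\pi_{m-1}\cdots p\,\bar\pi_1$ acting on a function of $(s,a)$. The innermost factor $\bar\pi_1$ only reads the next state, so it needs no domination; only the $m-1$ middle policies $\bar\pi_2,\dots,\bar\pi_m$ are dominated. This matches the product over $k=1,\dots,m-1$ in the statement.

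The step needing care is the indexing. The last kernel, from $\mT^{m-1}_\Omega$, appears without domination in the composed $m$-step transition to states. I would rewrite the composite as a state-to-state kernel $p\bar\pi p\bar\pi\cdots p$ composed with a final policy $\bar\pi_{\text{last}}$ that maps functions of $(s,a)$ to functions of $s$. The Doeblin condition is on the state marginal $\sum_{a'}K^m_\mu(s',a'|s,a)\geq\beta_m\nu(s')$, and it involves only $m-1$ intermediate policies between the $m$ applications of $p$. Hence $\sum_{a'}(\bar K_m\cdots\bar K_1)(s',a'|s,a)\geq\beta_m\prod_{k=1}^{m-1}\gamma_k\,\nu(s')$. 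The leftover action dependence, carried by the final policy acting on $Q-Q'$, is handled as follows.

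Write the row of the product as $\sum_{s'}P(s'|s,a)\,\bar\pi_{\text{last}}(\cdot|s')\cdot(Q-Q')(s',\cdot)$ with $P\geq c\,\nu$, where $c:=\beta_m\prod_{k=1}^{m-1}\gamma_k$. Decompose $P=c\nu+(1-c)P'$. The $c\nu$ part gives a row-independent value, while the $(1-c)P'$ part varies by at most $(1-c)\|Q-Q'\|_\spn$. This is the standard Dobrushin argument and yields (i).

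For (ii), condition step by step. $\bP(s_{t+m}=s'\mid\mF_t)=\bE[\prod p\,\pi_{\Omega,Q_{t+k}}\cdots]$, and since $Q_{t+k}$ is $\mF_{t+k-1}$-measurable, each conditional step is dominated by $\gamma_k\mu$. The $\gamma_k$ are $\mF_t$-measurable, so they factor out of the tower expectations, leaving $\prod_{k<m}\gamma_k\,\sum_{a'}K^m_\mu\geq\prod_{k<m}\gamma_k\,\beta_m\nu(s')$. The final action is drawn from $\pi_{\Omega,Q_{t+m}}\geq\gamma_m\mu$, which gives the factor $\eta(s,a)$.

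For (iii), the one-step gap is $(\mT_\Omega Q-\mT Q)(s,a)=\bE_{s'}[g_Q(s')]$, an average of $g_Q$, so its span is at most $\|g_Q\|_{\mS,\spn}\leq\Lambda_\Omega(\|Q\|_\asp)$.

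For the fixed-point bias, set $D:=Q^*_\Omega-Q^*$ and telescope. $Q^*_\Omega-\mT^m_\Omega(Q^*)$ equals $\mT^m_\Omega(Q^*_\Omega)-\mT^m_\Omega(Q^*)$ up to constants, and by (i) its span is at most $(1-\tilde\beta)\|D\|_\spn$. Here all radii are bounded by $R^*$, using monotonicity of $\gamma_\Omega$ and the fact that $R[\mT^k Q^*_\Omega,\mT^kQ^*]\leq R^*$ because $\mT^k_\Omega Q^*_\Omega$ differs from $Q^*_\Omega$ by a constant. Next, $\mT^m_\Omega(Q^*)-\mT^m(Q^*)$ telescopes into $m$ terms $\mT^{m-j}_\Omega\mT_\Omega\mT^{j-1}Q^*-\mT^{m-j}_\Omega\mT\,\mT^{j-1}Q^*$. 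Since $\mT^{j}Q^*=Q^*$ plus a constant, each term has span at most $\Lambda_\Omega(\|Q^*\|_\asp)\leq\Lambda_\Omega(R^*)$, because the nonexpansive operator $\mT_\Omega$ does not increase span. The triangle inequality then gives $\tilde\beta\|D\|_\spn\leq m\Lambda_\Omega(R^*)$.

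For the gain bounds: since $\Omega_s\geq\underline\Omega$, we have $g_Q\leq-\tau\underline\Omega$, so $\mT_\Omega\leq\mT-\tau\underline\Omega$. Comparing $\mT_\Omega^n$ with $\mT^n$ on $Q^*$ and dividing by $n$ gives $\rho^*_\Omega+\tau\underline\Omega\leq\rho^*$. For the lower bound, $g_{Q^*}\geq-\tau\underline\Omega-\Lambda_\Omega$ up to a shift by $\max g$. I would pin $\max_s g_{Q^*}=-\tau\underline\Omega$ by evaluating $g$ at a state where the greedy action of $Q^*$ coincides with the regularizer's minimizer. This last point is the main obstacle: the natural bound is $g\geq\min g\geq\max g-\Lambda$, so I need $\max g\geq-\tau\underline\Omega$, which is not automatic. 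The $|\mS|\geq2$ hypothesis suggests a trick, perhaps choosing the pinned representative of $Q^*$ so that some state has constant action values, making $g=-\tau\underline\Omega$ there. I would first try to establish this, and otherwise settle for a bound with $\max g$ in place of $-\tau\underline\Omega$.
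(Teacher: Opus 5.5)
Everything up to the gain bounds is sound and follows the paper's route. For (i), you use the average-path kernel and factor out the terminal policy $\bar\pi_1$ before a Dobrushin bound. Your sentence claiming that the kernel built from $\mT^{m-1}_\Omega$ appears without domination is backwards, but the minorization you end with is the right one. For (ii), you condition backward step by step. For the value gap, you split $Q^*_\Omega-Q^*$ into $\mT^m_\Omega Q^*_\Omega-\mT^m_\Omega Q^*$ plus a telescoped $\mT^m_\Omega Q^*-\mT^m Q^*$. The upper gain bound, via $g_Q\le-\tau\underline{\Omega}$ and monotone iteration, is also correct.

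The genuine gap is the lower gain bound $-\Lambda_\Omega(\|Q^*\|_\asp)\le\rho^*_\Omega-\rho^*+\tau\underline{\Omega}$, and neither of your suggested fixes closes it.
\begin{itemize}
\item Adding a constant to $Q^*$ leaves the action span of every row unchanged, so no representative acquires a constant row.
\item A state with $g_{Q^*}(s)=-\tau\underline{\Omega}$ need not exist. For the reverse KL to a full-support $\mu$, $g_Q(s)=0$ if and only if $Q(s,\cdot)$ is constant.
\item So the fallback with $\max_s g_{Q^*}(s)$ in place of $-\tau\underline{\Omega}$ is strictly weaker than the statement.
\end{itemize}
The missing idea is that $\Lambda_\Omega(R)$ is a supremum over \emph{all} $Q$ with $\|Q\|_\asp\le R$, not only over $Q^*$, while $g_Q(s)$ depends only on the row $Q(s,\cdot)$. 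Fix $s$ and pick $s_0\neq s$; this is where $|\mS|\ge2$ is used. Let $\tilde Q$ agree with $Q^*$ except that $\tilde Q(s_0,\cdot)$ is constant. Then:
\begin{itemize}
\item $\|\tilde Q\|_\asp\le\|Q^*\|_\asp$ and $g_{\tilde Q}(s)=g_{Q^*}(s)$;
\item $g_{\tilde Q}(s_0)=-\tau\min\Omega_{s_0}=-\tau\underline{\Omega}$, by the common-minimum hypothesis.
\end{itemize}
Hence $-\tau\underline{\Omega}-g_{Q^*}(s)\le\|g_{\tilde Q}\|_{\mS,\spn}\le\Lambda_\Omega(\|Q^*\|_\asp)$ for every $s$.

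You must pair this with $\rho^*_\Omega-\rho^*\ge\min_s g_{Q^*}(s)$, which you only allude to. It follows from $\mT_\Omega(Q^*)(s,a)=Q^*(s,a)+\rho^*+\bE_{s'\sim p(\cdot|s,a)}[g_{Q^*}(s')]\ge Q^*(s,a)+\rho^*+\min_{s'}g_{Q^*}(s')$. Iterate this from $Q^*$ itself using monotonicity and translation equivariance of $\mT_\Omega$, then compare with $\mT^n_\Omega(Q^*_\Omega)=Q^*_\Omega+n\rho^*_\Omega\ones$. The mirror of your upper-bound comparison would instead need a uniform lower bound on $g_Q$ over all $Q$, which fails for unbounded regularizers such as the forward KL.
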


The key observation behind the proof is that the difference of the operators applied to any two $Q$-functions can be written exactly as a single kernel applied to their difference: $\mT_\Omega(Q)-\mT_\Omega(Q')=K_{\bar\pi}(Q-Q')$, where $\bar\pi$ is the average of the greedy policies along the segment $[Q',Q]$. 

Although existing work factorizes differences of hard maxima with a single policy kernel \citep[Lemma~B.6]{chen2026contraction}, ours makes this policy explicit and uses its properties: $\bar\pi$ inherits the domination of $\mu$ by the greedy policies, so composing $m$ such kernels transfers the Doeblin condition of \Cref{assumption:m-step-mixing}, which gives (i), and the same domination along a trajectory gives (ii). The profile $\gamma_\Omega$ thus controls contraction and exploration at once. 

The last item exposes the trade-off between bias and convergence. The greedy policy only depends on $Q/\tau$, so $\gamma_\Omega(R)$ only depends on $R/\tau$, and a larger temperature raises it, hence the contraction and coverage rates; but it weakens the control of the modulus: for regularizers bounded on the simplex, $\Lambda_\Omega\leq\tau\Delta_\Omega$ uniformly in the radius, with $\Delta_\Omega$ the range of $\Omega$ (\Cref{rem:bounded_modulus}), a bound that grows linearly in $\tau$. The gain gap, in contrast, involves neither the effective horizon $H_{\mathrm{eff}}:=m/(\beta_m\gamma_\Omega^{m-1})$ nor $\gamma_\Omega$, only the modulus at the $\tau$-independent radius $\|Q^*\|_\asp$: as $\tau\to0$, unlike the value bound, it vanishes linearly for bounded regularizers.

\section{Algorithms and Finite-Time Analysis}
\label{sec:results}

We analyze increasingly realistic algorithms, each isolating one mechanism of \Cref{lem:mixing_main}: $Q$-iteration the contraction, synchronous $Q$-learning the statistical error, epoch $Q$-learning the cost of exploration, and $Q$-planning the effect of localization. Missing pseudocodes are given in \Cref{app:pseudocodes} and proofs in \Cref{app:proofs_learning,app:planning}.

Throughout this section, let $\delta\in(0,1)$ be the failure probability, $T$ be the budget of the algorithm and $e_0:=\|Q_0-Q^*_\Omega\|_\spn>0$ be the initial error. The iterates stay, with high probability, within span distance $2e_0$ of $Q^*_\Omega$. On that ball, $\tilde\beta:=\beta_m\bar\gamma^{\,m-1}$ with $\bar\gamma:=\gamma_\Omega(\|Q^*_\Omega\|_\asp+2e_0)$ is the contraction rate of \Cref{lem:mixing_main}.(i), $\bar w:=\|Q^*_\Omega\|_\spn+2e_0+\|\Omega^*(0)\|_{\mS,\spn}$ bounds the noise, and $\iota:=\log(6|\mS||\mA|(T+1)/\delta)$ is the price of a union bound over pairs and time.

\subsection{Synchronous \texorpdfstring{$\Omega$}{Ω}-Regularized \texorpdfstring{$Q$}{Q}-Iteration}
\label{subsec:synchronous_regularized_q_iteration}

Synchronous $\Omega$-regularized $Q$-iteration applies the update: 
\begin{equation*}
    Q_{t+1}(s,a) = \Pi\left[\mT_\Omega(Q_t)\right](s,a), \qquad \textrm{ for all } (s,a),
\end{equation*}
where $\Pi$ is the mean-centering projection. This algorithm isolates the core convergence mechanism of regularization under $m$-step mixing, on which the model-free analyses below build. 

An immediate corollary of \Cref{lem:mixing_main}.(i) is the following.

\begin{restatable}[Synchronous $\Omega$-regularized $Q$-iteration]{corollary}{qiteration}
    \label{cor:q-iteration}
    Let \Cref{assumption:m-step-mixing,assumption:convex-analytic-regularity,assumption:reference_policy} and~\eqref{eq:P1} hold. Then $(\|Q_t-Q_\Omega^*\|_\spn)_{t\geq0}$ is non-increasing and, for every $t\geq0$,
    \begin{equation}
        \label{eq:q-iteration-rate}
        \|Q_{t+m} - Q^*_\Omega \|_\spn \leq\Big(1-\beta_m\,\gamma_\Omega\big(\|Q^*_\Omega\|_\asp+ \|Q_{t} - Q^*_\Omega \|_\spn \big)^{m-1}\Big) \|Q_{t} - Q^*_\Omega \|_\spn.
    \end{equation}
    Consequently, for $\varepsilon\in(0,e_0)$, $\|Q_T-Q_\Omega^*\|_\spn\leq\varepsilon$ for every $T\geq m\lceil\tilde{\beta}^{-1}\log(e_0/\varepsilon)\rceil$, i.e.\ after $O\big(|\mS|^2|\mA|\,m\,\tilde{\beta}^{-1}\log(e_0/\varepsilon)\big)$ model queries, one query being an evaluation of $p(s'|s,a)$.
\end{restatable}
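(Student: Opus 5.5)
The argument rests on three facts: the centering projection $\Pi$ is invisible to the span seminorm, $Q^*_\Omega$ is a fixed point of $\mT_\Omega$ up to a constant, and \Cref{lem:mixing_main}.(i) supplies the contraction.

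\textbf{Reduction to the unprojected iteration.} Since $\mT_\Omega(Q+c\ones)=\mT_\Omega(Q)+c\ones$ (the conjugate $\Omega^*$ shifts by $c$ under constant shifts), and $\Pi$ only subtracts a constant, induction gives $Q_t=\mT_\Omega^t(Q_0)+c_t\ones$ for some scalars $c_t$. By \Cref{thm:regularized_fixed_point}, $\mT_\Omega^k(Q^*_\Omega)=Q^*_\Omega+k\rho^*_\Omega\ones$. Therefore every span difference $\|Q_{t+k}-Q^*_\Omega\|_\spn$ equals $\|\mT^k_\Omega(Q_t)-\mT^k_\Omega(Q^*_\Omega)\|_\spn$, and it suffices to study the unprojected operator.

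\textbf{Monotonicity.} I will use the representation behind \Cref{lem:mixing_main}, namely $\mT_\Omega(Q)-\mT_\Omega(Q')=K_{\bar\pi}(Q-Q')$ with $K_{\bar\pi}$ a stochastic kernel. A stochastic kernel is nonexpansive in span, so $\|Q_{t+1}-Q^*_\Omega\|_\spn\leq\|Q_t-Q^*_\Omega\|_\spn$. This holds even for $m=1$; it is the $m=1$ case of \Cref{lem:mixing_main}.(i) with the $\beta$ term dropped.

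\textbf{The rate \eqref{eq:q-iteration-rate}.} Apply \Cref{lem:mixing_main}.(i) with $Q=Q_t$ and $Q'=Q^*_\Omega$. Each factor is $\gamma_\Omega(R_k)$ with
\[
R_k=\max\{\|\mT^k_\Omega(Q_t)\|_\asp,\|\mT^k_\Omega(Q^*_\Omega)\|_\asp\}.
\]
The action span is invariant under constant shifts, so $\|\mT^k_\Omega(Q^*_\Omega)\|_\asp=\|Q^*_\Omega\|_\asp$. By the triangle inequality for $\|\cdot\|_\asp$, together with $\|\cdot\|_\asp\leq\|\cdot\|_\spn$ and monotonicity, we get $\|\mT^k_\Omega(Q_t)\|_\asp\leq\|Q^*_\Omega\|_\asp+\|Q_{t+k}-Q^*_\Omega\|_\spn\leq\|Q^*_\Omega\|_\asp+\|Q_t-Q^*_\Omega\|_\spn$. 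Since $\gamma_\Omega$ is non-increasing, each of the $m-1$ factors is at least $\gamma_\Omega(\|Q^*_\Omega\|_\asp+\|Q_t-Q^*_\Omega\|_\spn)$, which gives \eqref{eq:q-iteration-rate}.

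\textbf{Iteration count.} Monotonicity gives $\|Q_t-Q^*_\Omega\|_\spn\leq e_0$ for all $t$. Hence each factor is at least $\gamma_\Omega(\|Q^*_\Omega\|_\asp+e_0)\geq\bar\gamma$, and every block of $m$ steps contracts the error by a factor at most $1-\tilde\beta$. After $j$ blocks the error is at most $(1-\tilde\beta)^je_0\leq e^{-j\tilde\beta}e_0$. Taking $j=\lceil\tilde\beta^{-1}\log(e_0/\varepsilon)\rceil$ makes this at most $\varepsilon$. Monotonicity then extends the bound to every $T\geq mj$, not only to multiples of $m$. Note that \eqref{eq:P1} guarantees $\tilde\beta>0$.

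\textbf{Query count.} One application of $\mT_\Omega$ needs $p(s'|s,a)$ for all $(s,a,s')$, which is $|\mS|^2|\mA|$ queries. The $\Omega^*$ evaluations cost no model queries, and the projection is free. Multiplying by $T=O(m\tilde\beta^{-1}\log(e_0/\varepsilon))$ iterations gives the stated total.

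The main point needing care is the radius bookkeeping: $\gamma_\Omega$ must be evaluated at iterates $k<m$ steps ahead, not at $Q_t$. This is resolved by proving the monotonicity step before the rate step.
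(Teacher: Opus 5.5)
Your proposal is correct and follows the paper's proof essentially step for step. You reduce to the unprojected operator by translation equivariance, obtain the rate from \Cref{lem:mixing_main}.(i) with the radius bound $\|\mT^k_\Omega(Q_t)\|_\asp\leq\|Q^*_\Omega\|_\asp+e_{t+k}\leq\|Q^*_\Omega\|_\asp+e_t$ (where $e_t:=\|Q_t-Q^*_\Omega\|_\spn$), and count blocks of $m$ steps, using monotonicity for times that are not multiples of $m$. The only difference is cosmetic: you get monotonicity from the row-stochastic average-path kernel, whereas the paper uses the span non-expansiveness of $\mT_\Omega$ from \Cref{cor:T_omega_props}, and both are valid.
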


For $m=1$, the domination profile is not needed: $\tilde{\beta}=\beta_1$ and the rate is $1-\beta_1$ per sweep, i.e.\ $O(|\mS|^2|\mA|\,\beta_1^{-1}\log(e_0/\varepsilon))$ model queries, which recovers the rate of relative value iteration under a one-step minorization of the kernel \citep{puterman1994markov}.

Our result in~\eqref{eq:q-iteration-rate} reveals a contraction rate that increases as the error shrinks: from $\tilde{\beta}$ to $\beta_m \gamma_\Omega(\|Q^*_\Omega\|_\asp)^{m-1}$. Although all of our analysis shares this feature, we present sample complexity results assuming the slowest convergence rate for simplicity. 

\subsection{Synchronous \texorpdfstring{$\Omega$}{Ω}-Regularized \texorpdfstring{$Q$}{Q}-Learning}
\label{subsec:synchronous_regularized_q_learning}

Synchronous $\Omega$-regularized $Q$-learning applies the update:
\begin{equation*}
    Q_{t+1}(s,a) = (1-\alpha_t)Q_t(s,a)+\alpha_t(r(s,a)+\Omega^*(Q_t)(s_t'(s,a))), \qquad \text{ for all } (s,a),
\end{equation*}
where $s_t'(s,a)$ is sampled from $p(\cdot|s,a)$ for each $(s,a)$ at each iteration. This algorithm isolates the statistical cost of learning from data.  

\begin{restatable}[Synchronous $\Omega$-regularized $Q$-learning]{theorem}{synchstochmain}
    \label{thm:sync-stoch-main}
    Let \Cref{assumption:m-step-mixing,assumption:convex-analytic-regularity,assumption:reference_policy} and~\eqref{eq:P1} hold, and set 
    $\alpha(u):= (\tilde\beta^2u^2)(9m(2+\tilde\beta)^2(1+9m)^2\bar w^2\iota)^{-1}$ 
    for $u>0$. Run \Cref{alg:synchronous_regularized_q_learning} for $T$ iterations with a constant step size $\alpha\leq\alpha(e_0)$. Then, with probability at least $1-\delta$, $\sup_{t\leq T}\|Q_t-Q^*_\Omega\|_\spn\leq2e_0$ and
    \begin{equation}
        \label{eq:sync-stoch-main}
        \|Q_T-Q^*_\Omega\|_\spn\leq\frac{(1-\tilde\beta/2)^{\frac{2\alpha T}{9m}}}{1-\tilde\beta/2}\,\|Q_0-Q_\Omega^*\|_\spn+\frac{3(2+\tilde\beta)(1+9m)}{\tilde\beta}\,\bar w\sqrt{m\alpha\iota}.
    \end{equation}
    In particular, for $\varepsilon\in(0,e_0)$, the choices $\alpha=\alpha(\varepsilon/2)$ and $T\geq\frac{9m}{\alpha\tilde\beta}\log\left(\frac{2e_0}{(1-\tilde\beta/2)\varepsilon}\right)$ give $\|Q_T-Q^*_\Omega\|_\spn\leq\varepsilon$, i.e.\ $\tilde O\big(|\mS||\mA|\,m^4\bar w^2/(\tilde\beta^3\varepsilon^2)\big)$ sampled transitions.
\end{restatable}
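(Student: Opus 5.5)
We write the iteration as a stochastic approximation of the regularized operator. Set $E_t:=Q_t-Q^*_\Omega$ and define the noise $w_t(s,a):=\Omega^*(Q_t)(s_t'(s,a))-\bE_{s'\sim p(\cdot|s,a)}[\Omega^*(Q_t)(s')]$. The update then reads $Q_{t+1}=(1-\alpha)Q_t+\alpha(\mT_\Omega(Q_t)+w_t)$. Using $\mT_\Omega(Q^*_\Omega)=Q^*_\Omega+\rho^*_\Omega\ones$ and the fact that constants are invisible to $\|\cdot\|_\spn$, this gives, modulo constants,
\[
E_{t+1}=(1-\alpha)E_t+\alpha\,K_{\bar\pi_t}E_t+\alpha w_t .
\]
Here $K_{\bar\pi_t}$ is the segment-averaged kernel from the key identity behind \Cref{lem:mixing_main}. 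The one-step map $(1-\alpha)I+\alpha K_{\bar\pi_t}$ is a stochastic matrix, so it is nonexpansive in span but does not contract by itself. Contraction appears only after composing $m$ kernels, and with the lazy factor $(1-\alpha)$ it actually needs a block of length $L\approx 9m/\alpha$.

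\textbf{Step 1: block contraction.} Fix a block of $L$ consecutive steps and expand the product $\prod_{j}\big((1-\alpha)I+\alpha K_{\bar\pi_j}\big)$. Every product of kernels in this expansion is a stochastic matrix. Those containing at least $m$ factors $K$ contract span by $1-\tilde\beta$, by the composition argument of \Cref{lem:mixing_main}.(i): each $\bar\pi_j\geq\bar\gamma\mu$ on the ball $\|E_j\|_\spn\leq 2e_0$, and grouping $m$ consecutive kernels transfers the Doeblin condition. A binomial tail bound shows that, for $L=\lceil 9m/\alpha\rceil$, at least $m$ kernels are chosen with probability at least $1/2$, once $\alpha$ is small. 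This yields a block contraction factor of at most $1-\tilde\beta/2$. Iterating over $2\alpha T/(9m)$ blocks, with one extra factor to absorb the incomplete block, gives the first term of \eqref{eq:sync-stoch-main}. The deterministic part needs the iterates to stay in the ball; this is handled in Step 3.

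\textbf{Step 2: noise accumulation.} Unrolling the recursion gives $E_T=\Phi_{T,0}E_0+\alpha\sum_{t<T}\Phi_{T,t+1}w_t$. The noise $w_t$ is a martingale difference with $|w_t(s,a)|\leq\|\Omega^*(Q_t)\|_{\mS,\spn}$. On the ball this is at most $\|Q_t\|_\spn+\|\Omega^*(0)\|_{\mS,\spn}\leq\bar w$, because $\Omega^*(Q)-\Omega^*(0)$ is sandwiched between $\min Q$ and $\max Q$.

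The transition matrices $\Phi$ depend on the future, so I would not apply a martingale bound to the sum directly. Instead, for each block I would bound the block-local partial sums $\alpha\sum_{t\in\text{block}}w_t$ entrywise with Azuma–Hoeffding. After a union bound over pairs and times, whose cost is $\iota$, each is of order $\bar w\sqrt{\alpha^2 L\iota}=\bar w\sqrt{9m\alpha\iota}$. An Abel-summation or perturbation argument then controls how the within-block kernels act on the partial sums. The Lipschitz continuity of the greedy map makes the dependence of $\bar\pi_j$ on the noise benign, at the cost of constants of the form $(1+9m)$. Finally, summing the per-block errors against the geometric block contraction divides by $\tilde\beta/2$, which gives the second term $3(2+\tilde\beta)(1+9m)\bar w\sqrt{m\alpha\iota}/\tilde\beta$.

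\textbf{Step 3: staying in the ball.} I would prove $\sup_{t\leq T}\|E_t\|_\spn\leq 2e_0$ by induction on a stopping time, on the Azuma event of probability at least $1-\delta$. The deterministic part of the recursion is nonexpansive. The noise term stays at most $e_0$ exactly when $\alpha\leq\alpha(e_0)$, and this is what the quadratic form $\alpha(u)\propto u^2$ encodes. Because of the induction, the stopped noise is used so that the martingale bounds remain valid.

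For the corollary, taking $\alpha=\alpha(\varepsilon/2)$ makes the noise term at most $\varepsilon/2$. The prescribed $T$ makes the first term at most $\varepsilon/2$. Multiplying $T$ by $|\mS||\mA|$ gives the stated order $m\cdot m^3\bar w^2\iota/(\tilde\beta^3\varepsilon^2)$.

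I expect the main obstacle to be Step 2: handling the product of random, history-dependent kernels acting on martingale noise while keeping only $\sqrt{m\alpha}$ growth, rather than a bound that worsens with $T$.
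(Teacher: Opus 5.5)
Your outline is correct, up to the constant bookkeeping noted below. It shares most of its machinery with the paper: blocks of lazy updates, the Bernoulli expansion of their product, stopped-noise Azuma bounds, and a stopping-time induction over blocks. You do, however, obtain the block contraction through a different decomposition. The paper never expands the noisy recursion around $Q^*_\Omega$. In each block it runs a noiseless virtual sequence $\tilde Q$ from $Q_{b_i}$. It proves that the deterministic block map contracts via the mean value inequality in span: the Jacobian is a product of lazy greedy kernels at frozen-flow iterates, whose Bernoulli expansion gives $1-p_m\tilde\beta$. It then bounds the tracking error $Q_k-\tilde Q_k$ separately.

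You instead keep the single exact recursion $E_{t+1}\equiv M_tE_t+\alpha w_t$, with $M_t=(1-\alpha)I+\alpha K_{\bar\pi_t}$ built from the average-path kernel of $(Q_t,Q^*_\Omega)$, and you expand $\prod_t M_t$ directly. That expansion is purely algebraic, and the terminal-action argument works for any policies dominating $\bar\gamma\mu$. So the factor $1-p\tilde\beta$ holds pathwise even though the kernels depend on the noise, and you need neither the virtual sequence nor the mean value inequality. The cost is that the contraction now requires every iterate of the block, not just the first, to lie in the ball. Your induction provides this, since inside a block $\|E_k\|_\spn\leq e_{b_i}+\|N_k\|_\spn$ uses only row-stochasticity. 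In exchange, the paper's version needs only the block's starting point in the ball, and it reuses the deterministic analysis that also serves $Q$-iteration and the epoch algorithm.

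There are two corrections to the sketch.

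\emph{Step 2.} The Lipschitz continuity of the greedy map plays no role; it would introduce $1/(\tau\sigma)$ factors that are absent from the statement. Write the within-block noise response as $N_k=R_k+U_k$, with $U_k$ the stopped martingale partial sum. Then $R_{k+1}=M_kR_k+\alpha(K_{\bar\pi_k}-I)U_k$. Row-stochasticity of $K_{\bar\pi_k}$, however it depends on the noise, then yields $\|N_k\|_\spn\leq2(1+2\alpha L)\max_{k'}\|U_{k'}\|_\infty$. This is exactly the paper's tracking lemma, so your ``Abel summation'' alone closes the step you flagged as the main obstacle.

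\emph{Block length.} Your choice does not give the stated constants. With $L=\lceil9m/\alpha\rceil$ there are only about $\alpha T/(9m)$ blocks, and the Abel factor becomes $1+18m$. The exponent $2\alpha T/(9m)$ and the term $3(1+9m)\bar w\sqrt{m\alpha\iota}$ come from blocks with step budget $\alpha L\leq9m/2$. The paper uses $m$ sub-blocks of $\lceil4/\alpha\rceil$ steps, for which a Chernoff bound still gives at least $m$ selections with probability $\geq1-e^{-9/8}\geq1/2$. These constants also need Azuma's inequality with predictable ranges of length $\bar w$; the symmetric bound $|w_t(s,a)|\leq\bar w$ loses a factor $2$.
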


Because this algorithm updates every state-action pair at every iteration, the effect of $\gamma_\Omega$ is limited to contraction, not exploration. 
The sample complexity's dependence on contraction is $\tilde{\beta}^{-3}$ or, equivalently, $(\beta_m \bar\gamma^{m-1})^{-3}$. In the comparable setting where $m=1$, the dependence on $\gamma$ disappears, and, for unit rewards, we match\footnote{\Cref{cor:sync-m-1}} the rate of \citet{zhang2021finite} i.e., $\tilde O(|\mS||\mA|\beta_1^{-5}\varepsilon^{-2})$.

\subsection{Epoch \texorpdfstring{$\Omega$}{Ω}-Regularized \texorpdfstring{$Q$}{Q}-Learning}
\label{subsec:epoch_regularized_q_learning}

The pseudocode of epoch $\Omega$-regularized $Q$-learning is given in \Cref{alg:epoch_regularized_q_learning}. It adds the effect of exploration because it learns from a single trajectory, rather than querying a generative model at arbitrary $(s,a)$ pairs.

\begin{algorithm}[H]
\caption{Epoch $\Omega$-Regularized $Q$-Learning}
\label{alg:epoch_regularized_q_learning}
\begin{algorithmic}[1]
\STATE \textbf{Input:} initial $Q_0$, regularizer $\Omega$, step sizes $\{\alpha_j\}$, batch sizes $\{B_j\}$, initial state $s_0$.
\STATE Set $n\leftarrow0$ and $j\leftarrow0$.
\WHILE{\textbf{true}}
    \STATE $N_j\leftarrow0$ and $\widehat{\mT_\Omega Q_j}\leftarrow0$ on $\mS\times\mA$.
    \WHILE{$\min_{(s,a)}N_j(s,a)<B_j$}
        \STATE Sample $a_n\sim\pi_{\Omega,Q_j}(\cdot|s_n)$ and $s_{n+1}\sim p(\cdot|s_n,a_n)$.
        \IF{$N_j(s_n,a_n)<B_j$}
            \STATE $\widehat{\mT_\Omega Q_j}(s_n,a_n)\pluseq r(s_n,a_n)+\Omega^*(Q_j)(s_{n+1})$ and $N_j(s_n,a_n)\pluseq1$.
        \ENDIF
        \STATE $n\leftarrow n+1$.
    \ENDWHILE
    \STATE $Q_{j+1}\leftarrow\Pi\big[(1-\alpha_j)Q_j+\alpha_j\widehat{\mT_\Omega Q_j}/B_j\big]$ and $j\leftarrow j+1$.
\ENDWHILE
\end{algorithmic}
\end{algorithm}

\begin{restatable}[Epoch $\Omega$-regularized $Q$-learning with $\alpha_j\equiv 1$]{theorem}{epochmainbody}
    \label{thm:epoch-main-body}
    Let \Cref{assumption:m-step-mixing,assumption:convex-analytic-regularity,assumption:reference_policy} and~\eqref{eq:P1} hold, and set $B(u):=\frac{8m^2\bar w^2\iota}{(\tilde\beta u)^2}$ for $u>0$. Run \Cref{alg:epoch_regularized_q_learning} with $\alpha_j\equiv1$ and a constant batch size $B\geq B(e_0)$, and let $J(T)$ be the number of epochs completed within $T$ transitions. Then, with probability at least $1-\delta$, $\sup_{j\leq J(T)}\|Q_j-Q^*_\Omega\|_\spn\leq2e_0$, each epoch completed within the budget lasts at most $L:=m\big\lceil(\sqrt B+\sqrt\iota)^2/(\bar\gamma\tilde\beta\eta_{\min})\big\rceil+1$ transitions, so that $J(T)\geq\lfloor T/L\rfloor$, and
    \begin{equation}
        \label{eq:epoch-main-bound}
        \|Q_{J(T)}-Q^*_\Omega\|_\spn\leq(1-\tilde\beta)^{\lfloor J(T)/m\rfloor}\|Q_0-Q_\Omega^*\|_\spn+\frac{2\sqrt2\,m\,\bar w}{\tilde\beta}\sqrt{\frac{\iota}{B}}.
    \end{equation}
    In particular, for $\varepsilon\in(0,e_0)$, the choices $B=\lceil B(\varepsilon/2)\rceil$ and $T\geq mL\lceil\tilde\beta^{-1}\log(2e_0/\varepsilon)\rceil$ give $\|Q_{J(T)}-Q^*_\Omega\|_\spn\leq\varepsilon$, i.e.\ 
    $\tilde O\big(m^4\bar w^2/(\tilde\beta^4\bar\gamma\eta_{\min}\varepsilon^2)\big)$ 
    sampled transitions.
\end{restatable}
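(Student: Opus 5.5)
The plan is to write each epoch as a noisy step of $Q$-iteration, $Q_{j+1}=\Pi[\mT_\Omega(Q_j)+\xi_j]$, where $\xi_j(s,a):=\widehat{\mT_\Omega Q_j}(s,a)/B-\mT_\Omega(Q_j)(s,a)$. Unrolling over blocks of $m$ epochs, the $m$-step contraction of \Cref{lem:mixing_main}.(i) handles the deterministic part and a concentration bound handles the noise. The proof is an induction over epochs on the good event that the iterates stay in the ball $\|Q_j-Q^*_\Omega\|_\spn\le 2e_0$. On that ball every $Q$ involved has action span at most $\|Q^*_\Omega\|_\asp+2e_0$, so $\gamma_\Omega\ge\bar\gamma$ and the contraction factor over $m$ epochs is at most $1-\tilde\beta$. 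The same radius bound applies to intermediate images $\mT^k_\Omega(Q_j)$, since $\mT_\Omega$ is nonexpansive in span and fixes $Q^*_\Omega$ up to constants. Translation invariance together with the centering $\Pi$ lets us work with spans throughout.

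\textbf{Step 1 (noise).} Within epoch $j$, $Q_j$ is fixed and $\mF$-measurable at the epoch start. For each pair, the first $B$ visits yield targets $r(s,a)+\Omega^*(Q_j)(s')$ with $s'\sim p(\cdot|s,a)$; by the strong Markov property these are conditionally i.i.d. Their span range is bounded by $\|\Omega^*(Q_j)\|_{\mS,\spn}\le\|Q_j\|_\spn+\|\Omega^*(0)\|_{\mS,\spn}$, which is at most $\bar w$ after centering. Hoeffding's inequality with a union bound over pairs and epochs gives $\|\xi_j\|_\infty\le\bar w\sqrt{\iota/(2B)}$, so $\|\xi_j\|_\spn\le\bar w\sqrt{2\iota/B}$. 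Composing $m$ noisy steps and using nonexpansiveness for the error propagation gives $\|Q_{j+m}-Q^*_\Omega\|_\spn\le(1-\tilde\beta)\|Q_j-Q^*_\Omega\|_\spn+m\bar w\sqrt{2\iota/B}$. Iterating this bound yields \eqref{eq:epoch-main-bound} with the geometric sum $m\bar w\sqrt{2\iota/B}/\tilde\beta$, up to the stated constant $2\sqrt2$; the factor $2$ absorbs the partial last block. The condition $B\ge B(e_0)$ makes the noise term at most $e_0/2$, which keeps the iterates within $2e_0$ and closes the induction.

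\textbf{Step 2 (epoch length, the main obstacle).} Here I use \Cref{lem:mixing_main}.(ii) with $\gamma_k=\bar\gamma$. Within an epoch the policy is the fixed $\pi_{\Omega,Q_j}\ge\bar\gamma\mu$ on the good event, so every pair is hit with probability at least $p_0:=\beta_m\bar\gamma^m\eta_{\min}=\bar\gamma\tilde\beta\eta_{\min}$ at each $m$-spaced time, conditionally on the past. Fix a pair and look at the subsampled times $t_0+km$ for $k=1,\dots,K$. The visit count dominates a sum of conditionally Bernoulli($\ge p_0$) indicators, and a multiplicative Chernoff/Freedman bound for martingales gives at least $B$ visits once $Kp_0\ge(\sqrt B+\sqrt\iota)^2$. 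This is the step requiring care: the dependence of $Q_j$ on past data means the domination must be applied on the good event, which should be handled via a stopping-time or indicator truncation. A union bound over pairs and epochs then shows that every epoch finishes within $L$ transitions, where the $+1$ accounts for the final transition $s_{n+1}$. Hence $J(T)\ge\lfloor T/L\rfloor$.

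\textbf{Step 3 (sample complexity).} Choose $B=\lceil B(\varepsilon/2)\rceil$, so the noise term is at most $\varepsilon/2$, and $J\ge m\lceil\tilde\beta^{-1}\log(2e_0/\varepsilon)\rceil$, so the bias term is at most $\varepsilon/2$. Then $T=mL\lceil\cdot\rceil$ transitions suffice. Substituting $L\approx mB/(\bar\gamma\tilde\beta\eta_{\min})$ and $B\propto m^2\bar w^2\iota/(\tilde\beta\varepsilon)^2$ gives $T=\tilde O(m^4\bar w^2\tilde\beta^{-4}(\bar\gamma\eta_{\min})^{-1}\varepsilon^{-2})$. Splitting $\delta$ over the three events (noise, epoch length, initial ball) matches the constant $6$ in $\iota$.
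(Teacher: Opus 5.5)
Your proposal is correct and follows essentially the same route as the paper (\Cref{cor:ep_alpha_one} applied at $R=2e_0$). The ingredients match: the $m$-step contraction of the noiseless map $\mT^m_\Omega$ on the ball; a per-epoch Hoeffding (batch) bound, propagated through span non-expansiveness, giving $e_{j+m}\le(1-\tilde\beta)e_j+m\bar w\sqrt{2\iota/B}$; an epoch-by-epoch induction keeping the iterates within $2e_0$; and a multiplicative Chernoff bound on the $m$-subsampled instants, with coverage constant $\bar\gamma\tilde\beta\eta_{\min}$, for the epoch length. The only slip is cosmetic: the ``initial ball'' is not a random event, and in the paper the constant $6$ in $\iota$ comes from three union-bounded families (per-epoch noise, block martingale, epoch length) times the factor $2$ of the two-sided Azuma bound; your two genuine events already fit within $\delta$.
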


Compared to synchronous $Q$-learning, the sample complexity worsens by a factor of $(\bar\gamma\tilde\beta\eta_{\min})^{-1} / |\mS||\mA|$. This is a natural quantity to pay in the case of exploration because $\bar\gamma\tilde\beta\eta_{\min}$ controls the per-pair visitation probability under greedy policy rollout, whereas a generative model can query arbitrary state-action pairs. The exploration price can be reduced from $\bar\gamma\tilde\beta\eta_{\min}$ to $\beta_m \eta_{\min}$ by rolling out $\mu$, but the mixing policy $\mu$ will not seek to optimize the reward\footnote{\Cref{rem:ep_behavior_mu}}. 

\subsection{\texorpdfstring{$\Omega$}{Ω}-Regularized \texorpdfstring{$Q$}{Q}-Planning}
\label{subsec:planning_results}

We present theorems and proof sketches from \Cref{app:planning,app:planning_results}. From a reset distribution $\xi\in\Delta(\mS)$, planning rolls out trajectories of length $H$ and backpropagates a value boundary condition $Q_B$, similar to Monte Carlo Tree Search~\citep{grill2020mcts}. Planning isolates the role of localization: instead of targeting $Q^*_\Omega$, it targets the planning solution defined by the backwards recursion: $Q^*_{\Omega,h}:=\mT_{\Omega,h}(Q^*_{\Omega,h+1})$ and $Q^*_{\Omega,H}:=Q_B|_{\mS_H \times \mA}$, where $Q_h$ and $\mT_{\Omega,h}$ are the depth-dependent value function and Bellman operator, and $\mS_h$ is the set of $h$-step reachable states. In other words, the planning solution is the best update possible with the information available in the first $H$ steps. 

Given this setting, we relax the mixing requirement from \Cref{assumption:m-step-mixing} to only consider the reachable states from $\xi$. 
\begin{assumption}[Localized $m$-step Doeblin mixing]
    \label{assumption:pl_local_mixing}
    There exist integers $m\geq1$ and $h_1\geq0$, a full-support policy $\mu$, $\beta_m\in(0,1]$ and $\nu\in\Delta(\mS)$ with support $\mS_\infty$ such that (i) $\mS_h=\mS_\infty$ for every $h\geq h_1$, and (ii) $\sum_{a'}K^m_\mu(s',a'|s,a)\geq\beta_m\nu(s')$ for all $(s,a)\in\mS_\infty\times\mA$ and $s'\in\mS_\infty$.
\end{assumption}

The first planning algorithm is the single backward sweep: $Q_h = \mT_{\Omega,h}(Q_{h+1})$. This case terminates and returns the planning solution exactly after $O(H (\max_h |\mS_h|)^2|\mA|)$ model queries. Its root level is $\varepsilon$-accurate for $Q^*_\Omega$ once $H\geq h_1+m+H_{\mathrm{eff}}\log_+(\Delta_B/\varepsilon)$\footnote{\Cref{prop:pl_horizon_main}}. If $H$ is too small, there is insufficient backup information, compared to the boundary mismatch $\Delta_B$. %

The last algorithm we consider estimates each depth operator from rollouts. Whereas the learning algorithms iterate the stationary update $Q_{j+1}=\mT_\Omega(Q_j)$, the planning algorithm applies, up to sampling noise, the depth-shifted update $Q_{j+1,h}=\mT_{\Omega,h}(Q_{j,h+1})$: level $h$ at epoch $j+1$ is computed from level $h+1$ at epoch $j$, which is frozen during epoch $j$. In particular, choosing $H \geq h_1+m+H_{\mathrm{eff}}\log_+(2 \Delta_B/\varepsilon)$, gives $\|Q_{J,0} - Q^*_\Omega\|_{\spn,\mS_0\times\mA} \leq \varepsilon$ with high probability, after $\tilde O\big((h_1+H_{\mathrm{eff}})^4\,\bar w^2/(q\,\varepsilon^2)\big)$ sampled transitions\footnote{\Cref{thm:pl-epoch-main}}, where $q$ is the minimum visitation probability across depth. 

The root error is a sum of level residuals damped at rate $\tilde\beta$ every $m$ levels, so noise does not accumulate over the horizon and only the $h_1+H_{\mathrm{eff}}$ levels closest to the root contribute. Merging the $H$ tables into a single table learned from resets removes the extra factor $1/\tilde\beta$.

\section{Instances and Experiments}

\subsection{Instances of Base Regularizers}
\label{subsec:instances_base_regularizers}

We instantiate our regularizers with the $\alpha$-divergences from Amari~\citep{cichocki2010families}: 
\begin{equation*}
    D_\alpha(p\|q):=\left(\sum_ap_a^\alpha q_a^{1-\alpha}-1\right)/\big(\alpha(\alpha-1)\big).
\end{equation*}
These include $D_1=\KL(p\|q)$ and $D_0=\KL(q\|p)$ by continuity, which have been used for KL-regularized reinforcement learning \citep{vieillard2020leverage}, the $\chi^2$-divergence underlying advantage-weighted updates \citep{belousov2019entropic}, and, for a uniform $\mu$, the Tsallis entropies up to a rescaling of $\tau$ \citep{lee2018sparse}. 

We define $\Omega_{\alpha,s}:=D_\alpha(\cdot\|\mu_s)$ and we compute the relevant quantities in \Cref{tab:alpha_family}\footnote{Notation and proofs in \Cref{app:proofs_alpha}.}. In particular, on the window $0<\alpha\leq1$, the divergences exhibit two favorable properties: the modulus is bounded uniformly in $R$ and the greedy policy is full support. 

\begin{table}[!ht]
    \centering
    \caption{The $\alpha$-divergence regularizers}
    \label{tab:alpha_family}
    \footnotesize
    \setlength{\tabcolsep}{2.4pt}
    \renewcommand{\arraystretch}{1.35}
    \begin{tabular}{@{}l|ccccc@{}}
        \toprule
         & general $\alpha$ & reverse KL & forward KL & Hellinger & $\chi^2$ \\
         & & $\alpha=1$ & $\alpha=0$ & $\alpha=\frac12$ & $\alpha=2$ \\
        \midrule
        $\Omega_{\alpha,s}(\pi)$ & $D_\alpha(\pi\|\mu_s)$ & $\KL(\pi\|\mu_s)$ & $\KL(\mu_s\|\pi)$ & $2\|\sqrt{\pi}-\sqrt{\mu_s}\|_2^2$ & $\frac12\chi^2(\pi\|\mu_s)$\\
        strong convexity & $\mu_{\min}^{(1-\alpha)_+}$ if $\alpha\leq2$ & $1$ & $\mu_{\min}$ & $\sqrt{\mu_{\min}}$ & $1$\\
        $\pi_{\Omega,Q}(a|s)/\mu_a$ & $\big[1+\frac{(\alpha-1)(Q_a-\zeta_s)}{\tau}\big]_+^{\frac{1}{\alpha-1}}$ & $\frac{e^{Q_a/\tau}}{\sum_b\mu_be^{Q_b/\tau}}$ & $\big(1+\frac{\zeta_s-Q_a}{\tau}\big)^{-1}$ & $\big(1+\frac{\zeta_s-Q_a}{2\tau}\big)^{-2}$ & $\big[1+\frac{Q_a-\zeta_s}{\tau}\big]_+$\\
        $\gamma_{\Omega}(R)\geq$ & $\exp_{2-\alpha}(-R/\tau)$ & $e^{-R/\tau}$ & $\frac{\tau}{\tau+R}$ & $\big(1+\frac{R}{2\tau}\big)^{-2}$ & $\big[1-\frac{R}{\tau}\big]_+$\\
        $\Lambda_{\Omega}(R)\leq$ & $\tau\frac{\mu_{\min}^{1-\alpha}-1}{\alpha(\alpha-1)}$ if $\alpha>0$ & $\tau\log\frac{1}{\mu_{\min}}$ & $\tau\big(1+\log_+\frac{R}{\tau}\big)$ & $4\tau(1-\sqrt{\mu_{\min}})$ & $\frac{\tau}{2}\big(\frac{1}{\mu_{\min}}-1\big)$\\
        \eqref{eq:P1} holds & iff $\alpha\leq1$ & yes & yes & yes & no\\
        $\Lambda_\Omega$ bounded in $R$ & iff $\alpha>0$ & yes & no & yes & yes\\
        \bottomrule
    \end{tabular}
\end{table}

\subsection{Experiments}
\label{subsec:experiments}

\textit{Does knowledge of $\mu$ improve convergence and exploration?} We consider the \textit{Two Rooms} MDP (\Cref{fig:two_rooms_mdp}): two $5\times5$ gridworld rooms joined by a two-cell doorway, with the four cardinal moves as actions. Moves are deterministic, except entering the doorway, which succeeds with probability $p_\text{cross}$ and otherwise leaves the agent in place. The agent starts in the top-left corner and receives a reward of $1$ in the bottom-right corner of the right room. The reference policy $\mu$ is uniform except at the two doorway entrances, where the entering action has probability $\mu(\text{DOOR})$ ($0.25$ is uniform): the doorway is a bottleneck, and $\mu$ can increase the flow through it. We use the $\epsilon$-floored reverse KL ($\tau=2$, $\epsilon=0.05$) and sweep $\mu(\text{DOOR}) \in \{0.25, 0.4, 0.55, 0.7, 0.85, 0.95\}$ and $p_\text{cross} \in \{0.05, 0.3, 1\}$; details and the bias are in \Cref{app:two_rooms}.

\Cref{fig:two_rooms_convergence} measures convergence with synchronous regularized $Q$-iteration (\Cref{alg:synchronous_regularized_q_iteration}), which isolates contraction from sampling noise, through the number of iterations per $e$-fold, $1/\rho$, where $\|Q_k - Q_{\Omega}^*\|_\spn \propto e^{-\rho k}$ once the rate has settled. It grows with the bottleneck severity and decreases with $\mu(\text{DOOR})$ when the doorway is a bottleneck ($\times 2.4$ at $p_\text{cross}=0.05$), but is flat without one, consistent with the direction predicted by the contraction mechanism.

\Cref{fig:two_rooms_exploration} measures exploration with epoch regularized $Q$-learning ( \Cref{alg:epoch_regularized_q_learning}), through the epoch length relative to the uniform $\mu$: solid, data collected under $\mu$; dotted, on-policy; dashed, the coverage constant $\min_m m/(\beta_m\eta_{\min})$. Biasing $\mu$ towards the doorway shortens epochs by up to $\times 5$ at $p_\text{cross}=0.05$ but triples them at $p_\text{cross}=1$: knowledge of $\mu$ helps only when it targets an actual bottleneck. The coverage constant reproduces the shape of this dependence, though not its scale.

\begin{figure}[H]
    \centering
    \begin{subfigure}[b]{0.32\textwidth}
        \centering
        \includegraphics[width=\textwidth]{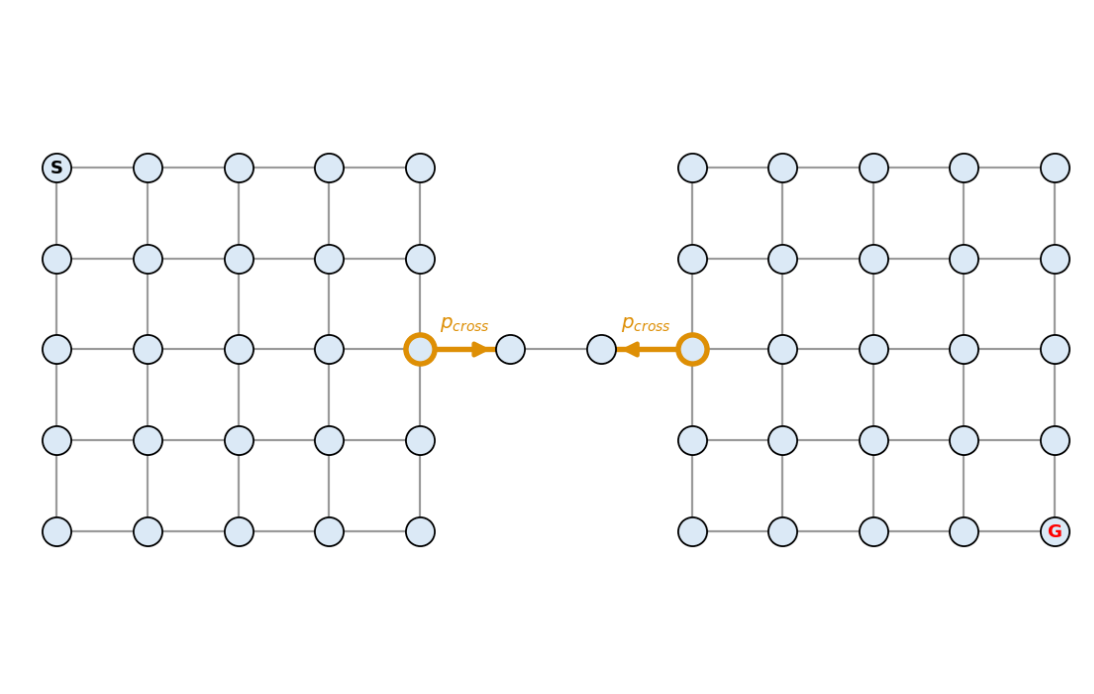}
        \caption{The Two-Rooms MDP.}
        \label{fig:two_rooms_mdp}
    \end{subfigure}
    \hfill %
    \begin{subfigure}[b]{0.32\textwidth}
        \centering
        \includegraphics[width=\textwidth]{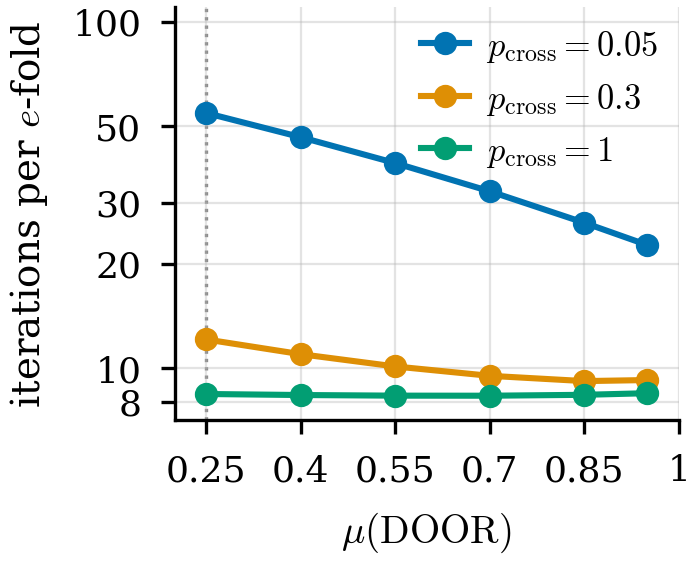}
        \caption{$\mu$'s effect on convergence.}
        \label{fig:two_rooms_convergence}
    \end{subfigure}
    \hfill %
    \begin{subfigure}[b]{0.32\textwidth}
        \centering
        \includegraphics[width=\textwidth]{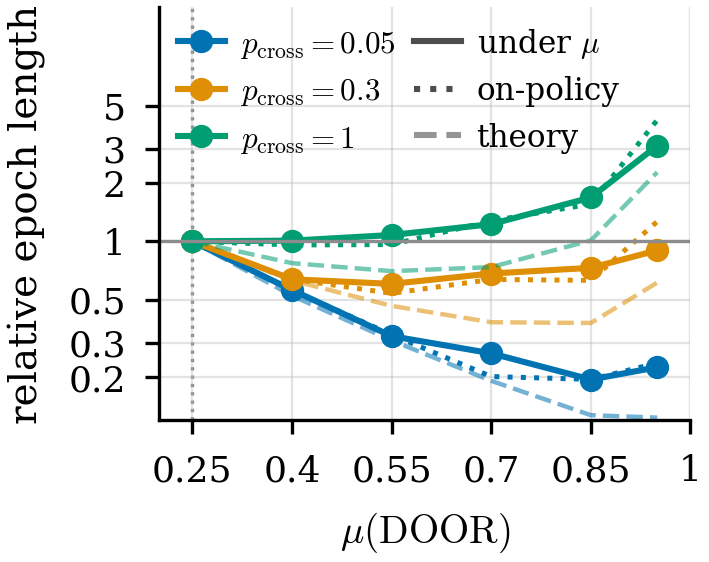}
        \caption{$\mu$'s effect on exploration.}
        \label{fig:two_rooms_exploration}
    \end{subfigure}
    \vspace{0.2cm}
    \caption{Empirical results on the Two-Rooms MDP. Biasing the reference policy $\mu$ towards the bottleneck improves both convergence (b) and exploration (c).}
    \label{fig:two_rooms_main}
\end{figure}

\textit{Does theory predict regularizer ranking?} We consider the RiverSwim MDP \citep{strehl2008analysis}, with the parameterization of \citet{osband2013more} (\Cref{fig:river_mdp}), where $\mS=\{0,\ldots,n-1\}$ and $\mA=\{\text{LEFT},\text{RIGHT}\}$. LEFT deterministically moves the agent to $s-1$, with a small reward $r(0,\text{LEFT})=0.005$ at the leftmost state. RIGHT swims against the current: it moves to $s+1$ with probability $0.35$, stays with probability $0.6$ and moves to $s-1$ with probability $0.05$, with a large reward $r(n-1,\text{RIGHT})=1$ at the rightmost state. The reference policy $\mu$ is uniform and $n=6$. We compare regularizers from the $\alpha$-divergence family $D_\alpha(\pi\|\mu)$: $\chi^2(\mu\|\pi)$ ($\alpha=-1$), forward KL ($\alpha=0$), Hellinger ($\alpha=0.5$), reverse KL ($\alpha=1$), $D_{1.5}$ and $\chi^2(\pi\|\mu)$ ($\alpha=2$).

\Cref{fig:river_mdp_bias} plots the bias $\|Q^*_\Omega-Q^*\|_\spn$ against $\tau$ (solid: without floor; dashed: the $\alpha > 1$ regularizers wrapped by the $\epsilon$-floor, $\epsilon = 0.05$). At a common temperature, the bias decreases as $\alpha$ increases, consistent with the regularization modulus $\Lambda_\Omega$ decreasing in $\alpha$ (\Cref{tab:alpha_family}), and all curves merge at large $\tau$, where every regularized policy approaches $\mu$. Comparing convergence at a common $\tau$ would therefore confound speed with bias.

We thus choose, for each regularizer, the temperature giving the same bias $\|Q_\Omega^*-Q^*\|_\spn=2$, and run epoch regularized $Q$-learning ($B=64$, $20$ seeds). Since increasing $\alpha$ lowers the domination profile $\gamma_\Omega$, our theory predicts that at equal bias convergence slows as $\alpha$ increases. \Cref{fig:river_mdp_convergence} confirms this ranking: it plots the span error of each method to its own target ($Q^*_\Omega$ for the regularized methods, $Q^*$ for the baselines) against environment steps, and convergence slows as $\alpha$ increases. At equal bias, the choice of regularizer is thus not a trade-off but a ranking, in favour of small $\alpha$. As baselines, the unregularized epoch algorithm never completes an epoch since the greedy policy does not cover the state-action space, and asynchronous RVI $Q$-learning only escapes the local reward trap after $\sim10^{8}$ steps (for asynchronous RVI $Q$-learning, we plot the span distance between the iterates and the hard fixed point).

\begin{figure}[H]
    \centering
    \begin{subfigure}[b]{0.32\textwidth}
        \centering
        \includegraphics[width=\textwidth, trim={6em 1em 5em 1em}, clip]{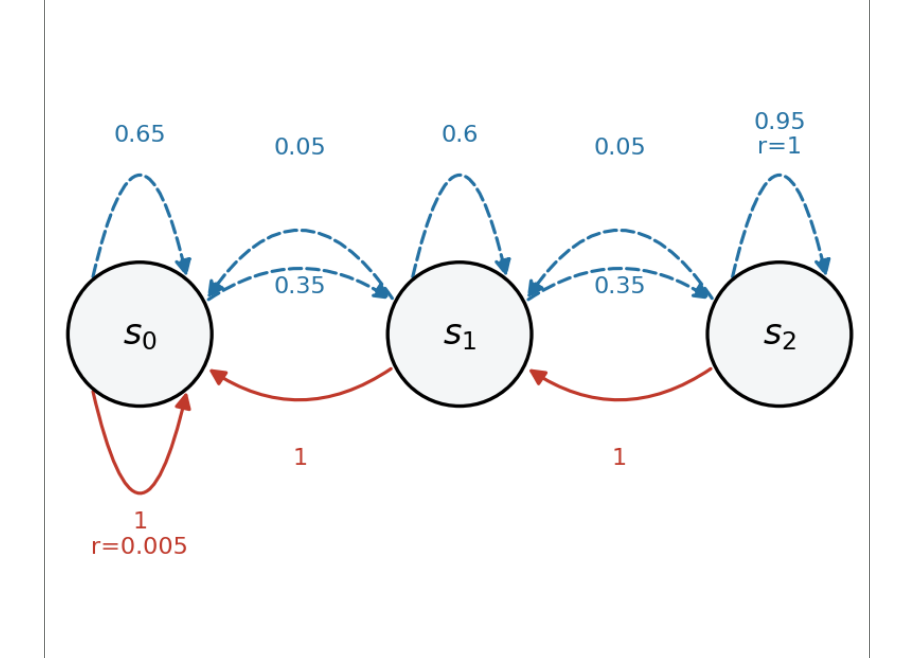}
        \caption{River Swim MDP ($n=3$).}
        \label{fig:river_mdp}
    \end{subfigure}
    \hfill %
    \begin{subfigure}[b]{0.32\textwidth}
        \centering
        \includegraphics[width=\textwidth]{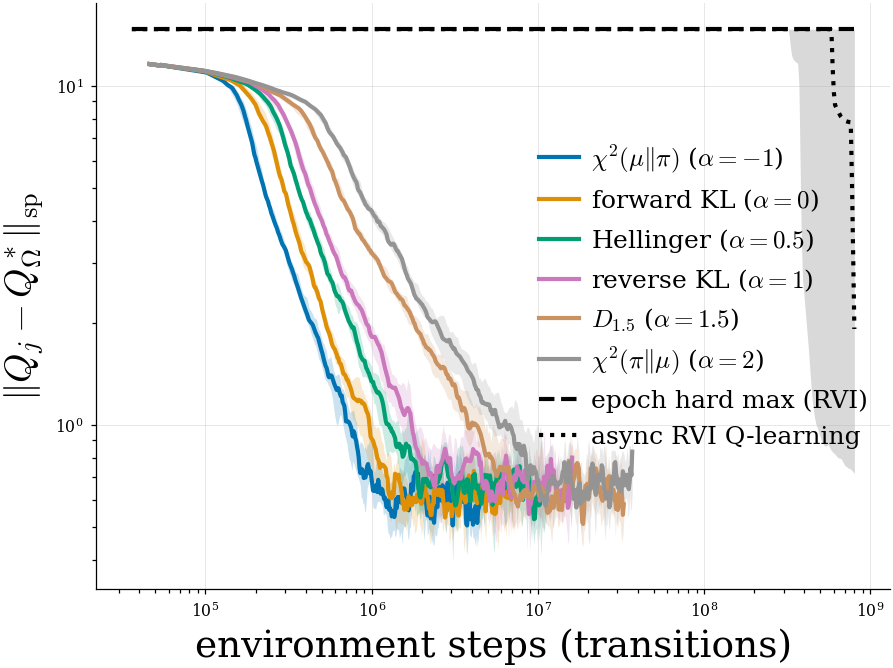}
        \caption{$\Omega$'s effect on convergence.}
        \label{fig:river_mdp_convergence}
    \end{subfigure}
    \hfill
    \begin{subfigure}[b]{0.32\textwidth}
        \centering
        \includegraphics[width=\textwidth]{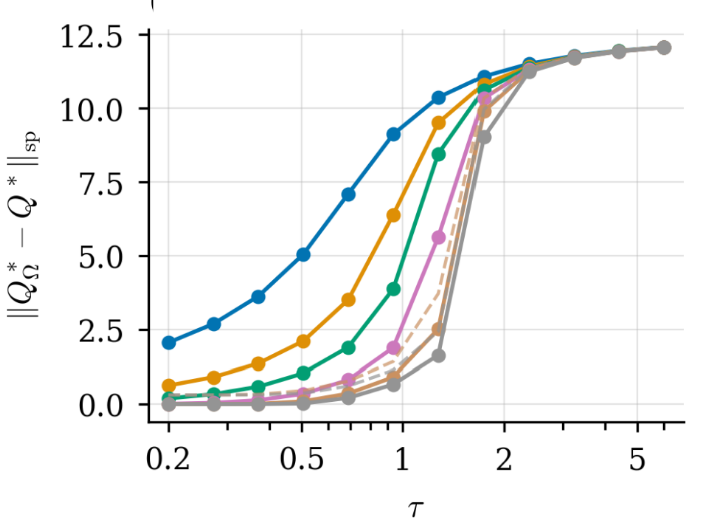}
        \caption{$\Omega$'s effect on bias.}
        \label{fig:river_mdp_bias}
    \end{subfigure}
    \vspace{0.2cm}
    \caption{Comparison of regularizers from the $\alpha$-divergence family on the RiverSwim MDP. As predicted by the theory, convergence slows as $\alpha$ increases at equal bias level.}
    \label{fig:river_swim_main}
\end{figure}

\section{Related Work}
\label{sec:related_work}

\begin{table}[t]
    \centering
    \small
    \caption{
    Sample complexity of average-reward methods, in sampled transitions and up to logarithmic factors; for $Q$-iteration we count evaluations of $p(s'|s,a)$. Let $t_{\mathrm{mix}}$ be the mixing time, $K$ the worst expected hitting time of a reference state, and $\theta<1$ a span-contraction factor. 
    Whereas other methods target $Q^*$, we target the regularized fixed point $Q^*_\Omega$. 
    }
    \label{tab:sample-complexity}
    \renewcommand{\arraystretch}{1.2}
    \begin{tabular}{@{}>{\raggedright\arraybackslash}p{0.20\linewidth}>{\raggedright\arraybackslash}p{0.42\linewidth}>{\raggedright\arraybackslash}p{0.31\linewidth}@{}}
        \toprule
        Paper & Assumptions & Sample complexity \\
        \midrule
        \multicolumn{3}{@{}l}{\emph{$Q$-iteration (known transitions)}} \\
        \citet{puterman1994markov} & one-step minorization $p(\cdot|s,a)\geq\beta\nu$ & $|\mS|^2|\mA|\,\beta^{-1}\log(1/\varepsilon)$ \\
        \textbf{Ours, \Cref{cor:q-iteration}} & $m$-step Doeblin policy $\mu$; regularizer satisfying~\eqref{eq:P1} & $|\mS|^2|\mA|\,m\,\tilde\beta^{-1}\log(1/\varepsilon)$ \\
        \midrule
        \multicolumn{3}{@{}l}{\emph{Generative model, model-based}} \\
        \citet{wang2024optimal} & all stationary policies uniformly mixing & $|\mS||\mA|\,t_{\mathrm{mix}}\,\varepsilon^{-2}$ \\
        \citet{zurek2024span} & weakly communicating & $|\mS||\mA|\,\|\max_a Q^*\|_\spn\,\varepsilon^{-2}$ \\
        \midrule
        \multicolumn{3}{@{}l}{\emph{Generative model, model-free}} \\
        \citet{lee2025anchoring} & weakly communicating & $|\mS||\mA|\,\|\max_a Q^*\|_\spn^2\,\varepsilon^{-2}$ \\
        \citet{zhang2021finite} & unichain; $J$-step span contraction at rate $\theta$ & $|\mS||\mA|\,J^3(1-\theta)^{-5}\,\varepsilon^{-2}$ \\
        \citet{chen2026contraction} & reachability, hitting time $K$ & $|\mS||\mA|\,2^{6K}\,\varepsilon^{-2}$ \\
        \textbf{Ours, \Cref{thm:sync-stoch-main}} & same as for $Q$-iteration & $|\mS||\mA|\,m^4\,\bar w^2\,\tilde\beta^{-3}\,\varepsilon^{-2}$ \\
        \midrule
        \multicolumn{3}{@{}l}{\emph{Single trajectory, model-free}} \\
        \citet{chen2025nonasymptotic} (asynchronous) & one-step span contraction with factor $\theta$; fixed full-support behavior policy; irreducible aperiodic behavior chain & $|\mS|^3|\mA|^3\,\eta_{\min}^{-2}(1-\theta)^{-5}\,\varepsilon^{-2}$ \\
        \citet{chen2026contraction} (asynchronous) & reachability, hitting time $K$; fixed full-support behavior policy; aperiodic chain & $|\mS|^3|\mA|^3\,2^{6K}\,\eta_{\min}^{-2}\,\varepsilon^{-2}$ \\
        \textbf{Ours, \Cref{thm:epoch-main-body} (epoch)} & same as for $Q$-iteration & $m^4\bar w^2\tilde\beta^{-3}(\bar\gamma\tilde\beta\eta_{\min})^{-1}\varepsilon^{-2}$ \\
        \midrule
        \multicolumn{3}{@{}l}{\emph{Planning (resets to $\xi$), model-free}} \\
        \textbf{Ours, \Cref{thm:pl-epoch-main}} & $m$-step Doeblin policy $\mu$ on the states reachable from $\xi$ (\Cref{assumption:pl_local_mixing}); regularizer satisfying~\eqref{eq:P1} & $(h_1+H_{\mathrm{eff}})^4\,\bar w^2\,q^{-1}\,\varepsilon^{-2}$\\
        \bottomrule
    \end{tabular}
\end{table}

\paragraph{Where the contraction comes from.}
The average-reward Bellman operator is not a contraction, and finite-time analyses of $Q$-learning differ in how they deal with this. \citet{zhang2021finite} and \citet{chen2025nonasymptotic} assume that the Bellman operator itself is a span contraction. \citet{chen2026contraction} derive a contraction from a reachability assumption by running $Q$-learning on lazy dynamics, in which the system stays in its current state with a fixed probability. \citet{jiao2026sample} analyze $Q$-learning in weakly communicating MDPs without any contraction, at the price of an $\varepsilon^{-3}$ dependence, and other works rely on discounted approximations \citep{jin2024feasible} or Halpern iterations with a generative model \citep{lee2025anchoring}. With a generative model, model-based methods are minimax optimal and set the target rates \citep{wang2024optimal,zurek2024span}. We take a different route: the user supplies a policy that mixes, and regularizing toward it yields a contraction whose rate is regularizer dependent and leads to an $\varepsilon^{-2}$ sample complexity, at the price of a bias that we bound (\Cref{tab:sample-complexity}). The same policy also provides the coverage that asynchronous analyses usually pay through a covering or mixing time of the behavior policy \citep{evendar2003learning,li2020sample}. 

\paragraph{Regularization toward a reference policy.}
Regularizing toward a default policy is widespread: KL regularization averages errors in approximate value iteration \citep{vieillard2020leverage}, behavior priors speed up exploration and transfer \citep{teh2017distral,tirumala2022behavior, Ouhamma_2026}, and KL penalties to a reference policy underpin offline RL and RLHF \citep{wu2019behavior,xiong2024iterative}. Closest to our instances, \citet{labbi2026beyond} regularize policy gradient toward a reference policy with the same $\alpha$-divergences, in the discounted setting. Their convergence constants are driven by the inverse link $(f')^{-1}$ at large negative arguments, which decays exponentially for the KL and polynomially for $\alpha<1$; our domination profile is bounded below by this same quantity (\Cref{tab:alpha_family}). In the average-reward setting, \citet{neu2017unified} and \citet{adamczyk2025avgentropy} study entropy-regularized objectives from the viewpoints of policy optimization and deep RL. These works use the reference policy for stability, prior knowledge, conservatism or conditioning; none uses it to make the dynamics mix, which is the mechanism we exploit.

\paragraph{Planning depth.}
In discounted planning, the depth of the lookahead tree is set by the discount factor \citep{kearns2002sparse}. Regularized tree search enjoys fast convergence at the root \citep{xiao2019ments} and polynomial sample complexity \citep{grill2019planning}, but it targets the regularized optimum, whose greedy actions can be suboptimal for the original objective \citep{painter2023monte}. In average-reward problems, finite-horizon values minus their linear growth converge as the horizon grows \citep{schweitzer1977asymptotic}. We make the depth needed for a given accuracy explicit in the mixing and the regularizer, with $H_{\mathrm{eff}}=m/\tilde\beta$ in the role of $1/(1-\gamma)$, and control the regularization bias separately.

\section{Discussion and Limitations}

The Doeblin pair $(m,\beta_m)$ and the profile $\gamma_\Omega$ are uniform over states, so a bottleneck anywhere dictates the rate everywhere; state-dependent constants, weighted along the paths of $\mu$, could yield much faster rates. Our criterion, uniform accuracy on all pairs, also favors exploring with $\mu$ and does not credit acting with the learned policy, which instance-dependent approaches may be able to capture. Future directions include relaxing \Cref{assumption:m-step-mixing} to allow transient states for single-trajectory learning, matching the $(1-\gamma)^{-4}\varepsilon^{-2}$ rate of discounted $Q$-learning \citep{li2021minimax} with $H_{\mathrm{eff}}$ in place of $1/(1-\gamma)$, an asynchronous analysis \citep{chen2025nonasymptotic}, function approximation as well as temperature schedules to remove the bias while keeping early contraction.

\paragraph{AI use statement}

We used generative AI tools to identify relevant literature, polish writing, improve readability, and help implement the experiments. We have not used generative AI tools to develop the theoretical framework, to propose hypotheses, or for proving mathematical claims. We have reviewed all AI-assisted work. We take responsibility for the final content of this work, including text, claims or artifacts produced with the aid of generative AI.

\paragraph{Acknowledgements}

This work was supported by the NYU Office of Global Science and Technology.

\bibliographystyle{plainnat}
\bibliography{references}

\begin{thebibliography}{65}
\providecommand{\natexlab}[1]{#1}
\providecommand{\url}[1]{\texttt{#1}}
\expandafter\ifx\csname urlstyle\endcsname\relax
  \providecommand{\doi}[1]{doi: #1}\else
  \providecommand{\doi}{doi: \begingroup \urlstyle{rm}\Url}\fi

\bibitem[Adamczyk et~al.(2025)Adamczyk, Makarenko, Tiomkin, and Kulkarni]{adamczyk2025avgentropy}
Jacob Adamczyk, Volodymyr Makarenko, Stas Tiomkin, and Rahul~V. Kulkarni.
\newblock Average-reward soft actor-critic.
\newblock \emph{Reinforcement Learning Journal}, 6:\penalty0 412--430, 2025.

\bibitem[Arapostathis et~al.(1993)Arapostathis, Borkar, Fern{\'a}ndez-Gaucherand, Ghosh, and Marcus]{arapostathis1993survey}
Aristotle Arapostathis, Vivek~S. Borkar, Emmanuel Fern{\'a}ndez-Gaucherand, Mrinal~K. Ghosh, and Steven~I. Marcus.
\newblock Discrete-time controlled {M}arkov processes with average cost criterion: A survey.
\newblock \emph{SIAM Journal on Control and Optimization}, 31\penalty0 (2):\penalty0 282--344, 1993.

\bibitem[Bartlett and Tewari(2009)]{bartlett2009regal}
Peter~L. Bartlett and Ambuj Tewari.
\newblock {REGAL}: A regularization based algorithm for reinforcement learning in weakly communicating {MDP}s.
\newblock In \emph{Conference on Uncertainty in Artificial Intelligence}, 2009.

\bibitem[Belousov and Peters(2019)]{belousov2019entropic}
Boris Belousov and Jan Peters.
\newblock Entropic regularization of {M}arkov decision processes.
\newblock \emph{Entropy}, 21\penalty0 (7):\penalty0 674, 2019.

\bibitem[Bertsekas(2012)]{bertsekas2012dynamic}
Dimitri~P. Bertsekas.
\newblock \emph{Dynamic Programming and Optimal Control, Vol.~{II}}.
\newblock Athena Scientific, 4th edition, 2012.

\bibitem[Chen(2025)]{chen2025nonasymptotic}
Zaiwei Chen.
\newblock Non-asymptotic guarantees for average-reward {$Q$}-learning with adaptive stepsizes.
\newblock In \emph{Advances in Neural Information Processing Systems}, 2025.

\bibitem[Chen and Nanda(2026)]{chen2026pointwise}
Zaiwei Chen and Phalguni Nanda.
\newblock From set convergence to pointwise convergence: Finite-time guarantees for average-reward {$Q$}-learning with adaptive stepsizes.
\newblock \emph{arXiv preprint arXiv:2504.18743v2}, 2026.

\bibitem[Chen et~al.(2026)Chen, Chen, Si, and Wang]{chen2026contraction}
Zijun Chen, Zaiwei Chen, Nian Si, and Shengbo Wang.
\newblock Achieving {$\varepsilon^{-2}$} dependence for average-reward {$Q$}-learning with a new contraction principle.
\newblock \emph{arXiv preprint arXiv:2601.21301}, 2026.

\bibitem[Cichocki and Amari(2010)]{cichocki2010families}
Andrzej Cichocki and Shun-ichi Amari.
\newblock Families of alpha- beta- and gamma- divergences: Flexible and robust measures of similarities.
\newblock \emph{Entropy}, 12\penalty0 (6):\penalty0 1532--1568, 2010.

\bibitem[Crandall and Tartar(1980)]{crandall1980some}
Michael~G. Crandall and Luc Tartar.
\newblock Some relations between nonexpansive and order preserving mappings.
\newblock \emph{Proceedings of the American Mathematical Society}, 78\penalty0 (3):\penalty0 385--390, 1980.

\bibitem[De~Pasquale et~al.(2024)De~Pasquale, Smith, Bullo, and Valcher]{depasquale2024dual}
Giulia De~Pasquale, Kevin~D. Smith, Francesco Bullo, and Maria~Elena Valcher.
\newblock Dual seminorms, ergodic coefficients and semicontraction theory.
\newblock \emph{IEEE Transactions on Automatic Control}, 69\penalty0 (5):\penalty0 3040--3053, 2024.

\bibitem[Dieudonn{\'e}(1960)]{dieudonne1960foundations}
Jean Dieudonn{\'e}.
\newblock \emph{Foundations of Modern Analysis}, volume~10 of \emph{Pure and Applied Mathematics}.
\newblock Academic Press, 1960.

\bibitem[Dobrushin(1956)]{dobrushin1956central}
Roland~L. Dobrushin.
\newblock Central limit theorem for nonstationary {M}arkov chains. {I}.
\newblock \emph{Theory of Probability and Its Applications}, 1\penalty0 (1):\penalty0 65--80, 1956.

\bibitem[Even-Dar and Mansour(2003)]{evendar2003learning}
Eyal Even-Dar and Yishay Mansour.
\newblock Learning rates for {$Q$}-learning.
\newblock \emph{Journal of Machine Learning Research}, 5:\penalty0 1--25, 2003.

\bibitem[Freedman(1975)]{freedman1975tail}
David~A. Freedman.
\newblock On tail probabilities for martingales.
\newblock \emph{The Annals of Probability}, 3\penalty0 (1):\penalty0 100--118, 1975.

\bibitem[Gaubert and Qu(2015)]{gaubert2015dobrushin}
St{\'e}phane Gaubert and Zheng Qu.
\newblock {D}obrushin's ergodicity coefficient for {M}arkov operators on cones.
\newblock \emph{Integral Equations and Operator Theory}, 81\penalty0 (1):\penalty0 127--150, 2015.

\bibitem[Geist et~al.(2019)Geist, Scherrer, and Pietquin]{geist2019theory}
Matthieu Geist, Bruno Scherrer, and Olivier Pietquin.
\newblock A theory of regularized {M}arkov decision processes.
\newblock In \emph{International Conference on Machine Learning}, 2019.

\bibitem[Grill et~al.(2019)Grill, Darwiche~Domingues, M{\'e}nard, Munos, and Valko]{grill2019planning}
Jean-Bastien Grill, Omar Darwiche~Domingues, Pierre M{\'e}nard, R{\'e}mi Munos, and Michal Valko.
\newblock Planning in entropy-regularized {M}arkov decision processes and games.
\newblock In \emph{Advances in Neural Information Processing Systems}, 2019.

\bibitem[Grill et~al.(2020)Grill, Altch{\'e}, Tang, Hubert, Valko, Antonoglou, and Munos]{grill2020mcts}
Jean-Bastien Grill, Florent Altch{\'e}, Yunhao Tang, Thomas Hubert, Michal Valko, Ioannis Antonoglou, and R{\'e}mi Munos.
\newblock {M}onte-{C}arlo tree search as regularized policy optimization.
\newblock In \emph{International Conference on Machine Learning}, 2020.

\bibitem[Hern{\'a}ndez-Lerma and Lasserre(1996)]{hernandez1996discrete}
On{\'e}simo Hern{\'a}ndez-Lerma and Jean~B. Lasserre.
\newblock \emph{Discrete-Time {M}arkov Control Processes: Basic Optimality Criteria}, volume~30 of \emph{Applications of Mathematics}.
\newblock Springer, 1996.

\bibitem[Hiriart-Urruty and Lemar{\'e}chal(2001)]{hiriart2001fundamentals}
Jean-Baptiste Hiriart-Urruty and Claude Lemar{\'e}chal.
\newblock \emph{Fundamentals of Convex Analysis}.
\newblock Grundlehren Text Editions. Springer, 2001.

\bibitem[Jaksch et~al.(2010)Jaksch, Ortner, and Auer]{jaksch2010near}
Thomas Jaksch, Ronald Ortner, and Peter Auer.
\newblock Near-optimal regret bounds for reinforcement learning.
\newblock \emph{Journal of Machine Learning Research}, 11:\penalty0 1563--1600, 2010.

\bibitem[Jiao et~al.(2026)Jiao, Woo, Li, Joshi, and Chi]{jiao2026sample}
Yuchen Jiao, Jiin Woo, Gen Li, Gauri Joshi, and Yuejie Chi.
\newblock Sample complexity of average-reward {$Q$}-learning: From single-agent to federated reinforcement learning.
\newblock \emph{arXiv preprint arXiv:2601.13642}, 2026.

\bibitem[Jin et~al.(2024)Jin, Gummadi, Zhou, and Blanchet]{jin2024feasible}
Ying Jin, Ramki Gummadi, Zhengyuan Zhou, and Jose Blanchet.
\newblock Feasible {$Q$}-learning for average reward reinforcement learning.
\newblock In \emph{International Conference on Artificial Intelligence and Statistics}, 2024.

\bibitem[Kakade et~al.(2012)Kakade, Shalev-Shwartz, and Tewari]{kakade2012regularization}
Sham~M. Kakade, Shai Shalev-Shwartz, and Ambuj Tewari.
\newblock Regularization techniques for learning with matrices.
\newblock \emph{Journal of Machine Learning Research}, 13\penalty0 (59):\penalty0 1865--1890, 2012.

\bibitem[Kearns et~al.(2002)Kearns, Mansour, and Ng]{kearns2002sparse}
Michael Kearns, Yishay Mansour, and Andrew~Y. Ng.
\newblock A sparse sampling algorithm for near-optimal planning in large {M}arkov decision processes.
\newblock \emph{Machine Learning}, 49\penalty0 (2--3):\penalty0 193--208, 2002.

\bibitem[Kuszmaul and Qi(2021)]{kuszmaul2021multiplicative}
William Kuszmaul and Qi~Qi.
\newblock The multiplicative version of {A}zuma's inequality, with an application to contention analysis.
\newblock \emph{arXiv preprint arXiv:2102.05077}, 2021.

\bibitem[Labbi et~al.(2026)Labbi, Tiapkin, Mangold, and Moulines]{labbi2026beyond}
Safwan Labbi, Daniil Tiapkin, Paul Mangold, and {\'E}ric Moulines.
\newblock Beyond softmax and entropy: Convergence rates of policy gradients with $f$-{SoftArgmax} parameterization \& coupled regularization.
\newblock In \emph{International Conference on Learning Representations}, 2026.

\bibitem[Lee et~al.(2025)Lee, Bravo, and Cominetti]{lee2025anchoring}
Jongmin Lee, Mario Bravo, and Roberto Cominetti.
\newblock Near-optimal sample complexity for {MDP}s via anchoring.
\newblock In \emph{International Conference on Machine Learning}, 2025.

\bibitem[Lee et~al.(2018)Lee, Choi, and Oh]{lee2018sparse}
Kyungjae Lee, Sungjoon Choi, and Songhwai Oh.
\newblock Sparse {M}arkov decision processes with causal sparse {T}sallis entropy regularization for reinforcement learning.
\newblock \emph{IEEE Robotics and Automation Letters}, 3\penalty0 (3):\penalty0 1466--1473, 2018.

\bibitem[Lemmens and Nussbaum(2012)]{lemmens2012nonlinear}
Bas Lemmens and Roger Nussbaum.
\newblock \emph{Nonlinear {P}erron--{F}robenius Theory}, volume 189 of \emph{Cambridge Tracts in Mathematics}.
\newblock Cambridge University Press, 2012.

\bibitem[Li et~al.(2020)Li, Wei, Chi, Gu, and Chen]{li2020sample}
Gen Li, Yuting Wei, Yuejie Chi, Yuantao Gu, and Yuxin Chen.
\newblock Sample complexity of asynchronous {$Q$}-learning: Sharper analysis and variance reduction.
\newblock In \emph{Advances in Neural Information Processing Systems}, 2020.

\bibitem[Li et~al.(2024)Li, Cai, Chen, Wei, and Chi]{li2021minimax}
Gen Li, Changxiao Cai, Yuxin Chen, Yuting Wei, and Yuejie Chi.
\newblock Is {$Q$}-learning minimax optimal? a tight sample complexity analysis.
\newblock \emph{Operations Research}, 2024.

\bibitem[Mitzenmacher and Upfal(2005)]{mitzenmacher2005probability}
Michael Mitzenmacher and Eli Upfal.
\newblock \emph{Probability and Computing: Randomized Algorithms and Probabilistic Analysis}.
\newblock Cambridge University Press, 2005.

\bibitem[Nanda and Chen(2026)]{nanda2026minimal}
Phalguni Nanda and Zaiwei Chen.
\newblock A minimal-assumption analysis of {$Q$}-learning with time-varying policies.
\newblock \emph{Proceedings of the ACM on Measurement and Analysis of Computing Systems}, 10\penalty0 (2), 2026.

\bibitem[Neu et~al.(2017)Neu, Jonsson, and G{\'o}mez]{neu2017unified}
Gergely Neu, Anders Jonsson, and Vicen{\c{c}} G{\'o}mez.
\newblock A unified view of entropy-regularized {M}arkov decision processes.
\newblock \emph{arXiv preprint arXiv:1705.07798}, 2017.

\bibitem[Osband et~al.(2013)Osband, Russo, and Van~Roy]{osband2013more}
Ian Osband, Daniel Russo, and Benjamin Van~Roy.
\newblock (more) efficient reinforcement learning via posterior sampling.
\newblock In \emph{Advances in Neural Information Processing Systems}, 2013.

\bibitem[Ouhamma and Kamgarpour(2026)]{Ouhamma_2026}
Reda Ouhamma and Maryam Kamgarpour.
\newblock Learning in zero-sum {M}arkov games: Relaxing strong reachability and mixing time assumptions.
\newblock In \emph{AAAI Conference on Artificial Intelligence}, 2026.

\bibitem[Painter et~al.(2023)Painter, Baioumy, Hawes, and Lacerda]{painter2023monte}
Michael Painter, Mohamed Baioumy, Nick Hawes, and Bruno Lacerda.
\newblock {M}onte {C}arlo tree search with {B}oltzmann exploration.
\newblock In \emph{Advances in Neural Information Processing Systems}, 2023.

\bibitem[Peters et~al.(2019)Peters, Niculae, and Martins]{peters2019sparse}
Ben Peters, Vlad Niculae, and Andr{\'e} F.~T. Martins.
\newblock Sparse sequence-to-sequence models.
\newblock In \emph{Annual Meeting of the Association for Computational Linguistics}, 2019.

\bibitem[Puterman(1994)]{puterman1994markov}
Martin~L. Puterman.
\newblock \emph{{M}arkov Decision Processes: Discrete Stochastic Dynamic Programming}.
\newblock John Wiley \& Sons, 1994.

\bibitem[Roch(2024)]{roch2024modern}
S{\'e}bastien Roch.
\newblock \emph{Modern Discrete Probability: An Essential Toolkit}.
\newblock Cambridge Series in Statistical and Probabilistic Mathematics. Cambridge University Press, 2024.

\bibitem[Rockafellar(1970)]{rockafellar1970convex}
R.~Tyrrell Rockafellar.
\newblock \emph{Convex Analysis}.
\newblock Princeton University Press, 1970.

\bibitem[Roulet et~al.(2025)Roulet, Liu, Vieillard, Sander, and Blondel]{roulet2025loss}
Vincent Roulet, Tianlin Liu, Nino Vieillard, Michael~Eli Sander, and Mathieu Blondel.
\newblock Loss functions and operators generated by $f$-divergences.
\newblock In \emph{International Conference on Machine Learning}, 2025.

\bibitem[Schweitzer(1984)]{schweitzer1984existence}
Paul~J. Schweitzer.
\newblock On the existence of relative values for undiscounted {M}arkovian decision processes with a scalar gain rate.
\newblock \emph{Journal of Mathematical Analysis and Applications}, 104\penalty0 (1):\penalty0 67--78, 1984.

\bibitem[Schweitzer(1985)]{schweitzer1985undiscounted}
Paul~J. Schweitzer.
\newblock On undiscounted {M}arkovian decision processes with compact action spaces.
\newblock \emph{RAIRO -- Operations Research}, 19\penalty0 (1):\penalty0 71--86, 1985.

\bibitem[Schweitzer and Federgruen(1977)]{schweitzer1977asymptotic}
Paul~J. Schweitzer and Awi Federgruen.
\newblock The asymptotic behavior of undiscounted value iteration in {M}arkov decision problems.
\newblock \emph{Mathematics of Operations Research}, 2\penalty0 (4):\penalty0 360--381, 1977.

\bibitem[Schweitzer and Federgruen(1978)]{schweitzer1978functional}
Paul~J. Schweitzer and Awi Federgruen.
\newblock The functional equations of undiscounted {M}arkov renewal programming.
\newblock \emph{Mathematics of Operations Research}, 3\penalty0 (4):\penalty0 308--321, 1978.

\bibitem[Seneta(2006)]{seneta2006nonnegative}
Eugene Seneta.
\newblock \emph{Non-negative Matrices and {M}arkov Chains}.
\newblock Springer Series in Statistics. Springer, 2006.

\bibitem[Strehl and Littman(2008)]{strehl2008analysis}
Alexander~L. Strehl and Michael~L. Littman.
\newblock An analysis of model-based interval estimation for {M}arkov decision processes.
\newblock \emph{Journal of Computer and System Sciences}, 2008.

\bibitem[Sutton and Barto(2018)]{sutton2018reinforcement}
Richard~S. Sutton and Andrew~G. Barto.
\newblock \emph{Reinforcement Learning: An Introduction}.
\newblock MIT Press, 2nd edition, 2018.

\bibitem[Teh et~al.(2017)Teh, Bapst, Czarnecki, Quan, Kirkpatrick, Hadsell, Heess, and Pascanu]{teh2017distral}
Yee~Whye Teh, Victor Bapst, Wojciech~M. Czarnecki, John Quan, James Kirkpatrick, Raia Hadsell, Nicolas Heess, and Razvan Pascanu.
\newblock Distral: Robust multitask reinforcement learning.
\newblock In \emph{Advances in Neural Information Processing Systems}, 2017.

\bibitem[Tirumala et~al.(2022)Tirumala, Galashov, Noh, Hasenclever, Pascanu, Schwarz, Desjardins, Czarnecki, Ahuja, Teh, and Heess]{tirumala2022behavior}
Dhruva Tirumala, Alexandre Galashov, Hyeonwoo Noh, Leonard Hasenclever, Razvan Pascanu, Jonathan Schwarz, Guillaume Desjardins, Wojciech~Marian Czarnecki, Arun Ahuja, Yee~Whye Teh, and Nicolas Heess.
\newblock Behavior priors for efficient reinforcement learning.
\newblock \emph{Journal of Machine Learning Research}, 23\penalty0 (221):\penalty0 1--68, 2022.

\bibitem[Todorov(2007)]{todorov2007linearly}
Emanuel Todorov.
\newblock Linearly-solvable {M}arkov decision problems.
\newblock In \emph{Advances in Neural Information Processing Systems}, 2007.

\bibitem[Tsitsiklis(2007)]{tsitsiklis2007nphard}
John~N. Tsitsiklis.
\newblock {NP}-hardness of checking the unichain condition in average cost {MDP}s.
\newblock \emph{Operations Research Letters}, 35\penalty0 (3):\penalty0 319--323, 2007.

\bibitem[Vieillard et~al.(2020)Vieillard, Kozuno, Scherrer, Pietquin, Munos, and Geist]{vieillard2020leverage}
Nino Vieillard, Tadashi Kozuno, Bruno Scherrer, Olivier Pietquin, R{\'e}mi Munos, and Matthieu Geist.
\newblock Leverage the average: An analysis of {KL} regularization in reinforcement learning.
\newblock In \emph{Advances in Neural Information Processing Systems}, 2020.

\bibitem[Wan et~al.(2024)Wan, Yu, and Sutton]{wan2024convergence}
Yi~Wan, Huizhen Yu, and Richard~S. Sutton.
\newblock On convergence of average-reward {$Q$}-learning in weakly communicating {M}arkov decision processes.
\newblock \emph{arXiv preprint arXiv:2408.16262}, 2024.

\bibitem[Wang et~al.(2024)Wang, Blanchet, and Glynn]{wang2024optimal}
Shengbo Wang, Jose Blanchet, and Peter Glynn.
\newblock Optimal sample complexity for average reward {M}arkov decision processes.
\newblock In \emph{International Conference on Learning Representations}, 2024.

\bibitem[Wielandt(1950)]{wielandt1950}
Helmut Wielandt.
\newblock Unzerlegbare, nicht negative {M}atrizen.
\newblock \emph{Mathematische Zeitschrift}, 52:\penalty0 642--648, 1950.

\bibitem[Wu et~al.(2019)Wu, Tucker, and Nachum]{wu2019behavior}
Yifan Wu, George Tucker, and Ofir Nachum.
\newblock Behavior regularized offline reinforcement learning.
\newblock \emph{arXiv preprint arXiv:1911.11361}, 2019.

\bibitem[Xiao et~al.(2019)Xiao, Huang, Mei, Schuurmans, and M{\"u}ller]{xiao2019ments}
Chenjun Xiao, Ruitong Huang, Jincheng Mei, Dale Schuurmans, and Martin M{\"u}ller.
\newblock Maximum entropy {M}onte-{C}arlo planning.
\newblock In \emph{Advances in Neural Information Processing Systems}, 2019.

\bibitem[Xiong et~al.(2024)Xiong, Dong, Ye, Wang, Zhong, Ji, Jiang, and Zhang]{xiong2024iterative}
Wei Xiong, Hanze Dong, Chenlu Ye, Ziqi Wang, Han Zhong, Heng Ji, Nan Jiang, and Tong Zhang.
\newblock Iterative preference learning from human feedback: Bridging theory and practice for {RLHF} under {KL}-constraint.
\newblock In \emph{International Conference on Machine Learning}, 2024.

\bibitem[Zhang et~al.(2021)Zhang, Zhang, and Maguluri]{zhang2021finite}
Sheng Zhang, Zhe Zhang, and Siva~Theja Maguluri.
\newblock Finite sample analysis of average-reward {TD} learning and {$Q$}-learning.
\newblock In \emph{Advances in Neural Information Processing Systems}, 2021.

\bibitem[Zhu et~al.(2023)Zhu, Chen, Schlegel, and White]{zhu2023general}
Lingwei Zhu, Zheng Chen, Matthew Schlegel, and Martha White.
\newblock General {M}unchausen reinforcement learning with {T}sallis {K}ullback--{L}eibler divergence.
\newblock In \emph{Advances in Neural Information Processing Systems}, 2023.

\bibitem[Zurek and Chen(2024)]{zurek2024span}
Matthew Zurek and Yudong Chen.
\newblock Span-based optimal sample complexity for weakly communicating and general average reward {MDP}s.
\newblock In \emph{Advances in Neural Information Processing Systems}, 2024.

\end{thebibliography}

\newpage 
\appendix

\etocdepthtag.toc{mtappendix}
\etocsettagdepth{mtmain}{none}          %
\etocsettagdepth{mtappendix}{subsection} %
\etocstandardlines                      %
\etocsettocstyle{\section*{Appendix Contents}}{}
\tableofcontents
\clearpage

\makeatletter
\@for\thm@ctr:=assumption,lemma,theorem,remark,note,definition,corollary,proposition,example,convention\do{%
  \expandafter\counterwithin\expandafter{\thm@ctr}{section}%
  \expandafter\xdef\csname theH\thm@ctr\endcsname{\noexpand\thesection.\noexpand\arabic{\thm@ctr}}}
\makeatother
\counterwithin{equation}{section}
\renewcommand{\theHequation}{\thesection.\arabic{equation}}

\crefalias{section}{appendix}
\crefalias{subsection}{appendix}
\crefalias{subsubsection}{appendix}

\section{Notation Summary}

\begin{table}[H]
    \centering
    \small
    \caption{Notation used across sections.}
    \label{tab:notation}
    \begin{tabular}{@{}lp{0.7\linewidth}@{}}
        \toprule
        \multicolumn{2}{@{}l}{\emph{MDP and kernels}}\\
        $\mS$, $\mA$, $p$, $r$ & finite state and action sets, transition kernel, deterministic reward\\
        $\bP_\pi(\cdot\mid s_0=s)$, $\bE_\pi[\cdot\mid s_0=s]$ & law and expectation of the chain started at $s$, with $a_t\sim\pi(\cdot|s_t)$ and $s_{t+1}\sim p(\cdot|s_t,a_t)$\\
        $P$ & rectangular kernel $(s,a)\mapsto p(\cdot|s,a)$, from $\mS\times\mA$ to $\mS$\\
        $P_\pi$ & state kernel $P_\pi(s'|s)=\sum_a\pi(a|s)p(s'|s,a)$, acting on $\bR^\mS$\\
        $K_\pi$ & state-action kernel $K_\pi(s',a'|s,a)=p(s'|s,a)\pi(a'|s')$, acting on $\bR^{\mS\times\mA}$\\
        $K_{\pi_1:\pi_m}$, $K^m_\mu$ & $m$-step state-action kernels, \eqref{eq:mstep_sa_kernel}; $K^m_\mu(s'|s,a)$ is the state marginal\\
        $\Pi$ & centering projection $\Pi[Q]=Q-\big(\ones^\top Q/|\mS||\mA|\big)\ones$\\
        \midrule
        \multicolumn{2}{@{}l}{\emph{Norms and seminorms}}\\
        $\|\cdot\|_\spn$, $\|Q\|_\asp$ & span seminorm, action-span seminorm, \Cref{def:span}\\
        $\|\cdot\|_{\spn,\op}$ & operator span seminorm, \Cref{def:operator_span_seminorm}\\
        $\|\cdot\|_{\mathrm{TV}}$, $\alpha(P)$ & total variation distance; Dobrushin coefficient, \Cref{def:dobrushin}\\
        \midrule
        \multicolumn{2}{@{}l}{\emph{Regularization}}\\
        $\tau$, $\Omega_s$, $\dom(\Omega_s)$ & temperature; regularizer at $s$; its effective domain\\
        $\Omega^*$, $\pi_{\Omega,Q}$ & conjugate of $\tau\Omega$ and greedy policy, \Cref{subsec:regularized}\\
        $\mT$, $\mT_\Omega$ & hard and regularized Bellman operators\\
        $Q^*,\rho^*$; $Q^*_\Omega,\rho^*_\Omega$ & optimal and regularized relative value and gain\\
        $\tilde r_\pi$ & regularized reward $\langle\pi_s,r_s\rangle-\tau\Omega_s(\pi_s)$\\
        $\underline{\Omega}_s$, $\underline{\Omega}$, $\overline{\Omega}$, $\Delta_\Omega$ & $\min\Omega_s$; $\min_s\underline{\Omega}_s$; $\max_{s,a}\Omega_s(\delta_a)$; $\overline{\Omega}-\underline{\Omega}$\\
        $\bar\pi_{\Omega,[Q,Q']}$, $R[Q,Q']$ & average-path policy and radius, \Cref{def:mixing_objects}.(i)--(ii)\\
        $\gamma_\Omega(R)$ & domination profile, \Cref{def:mixing_main}.(ii)\\
        $g_Q$, $\Lambda_\Omega(R)$, $\Lambda_\Omega^{\mathrm{pt}}(R)$ & regularization gap and moduli, \Cref{def:mixing_main}.(iii), \Cref{def:pointwise-modulus}\\
        \midrule
        \multicolumn{2}{@{}l}{\emph{Mixing and rates}}\\
        $\mu$, $m$, $\beta_m$, $\nu$ & mixing policy and its $m$-step Doeblin constants, \Cref{assumption:m-step-mixing}\\
        $\eta$, $\eta_{\min}$ & $\eta(s,a)=\nu(s)\mu(a|s)$ and its minimum\\
        $\mu_{\min}$, $\nu_{\min}$ & $\min_{s,a}\mu(a|s)$ and $\min_s\nu(s)$\\
        $\tilde\beta_m(A)$ & regularized $m$-step contraction rate $\beta_m\gamma_\Omega(A)^{m-1}$\\
        \midrule
        \multicolumn{2}{@{}l}{\emph{Learning analysis}}\\
        $\delta$, $T$, $e_0$, $\iota$ & failure probability, budget, initial error $\|Q_0-Q^*_\Omega\|_\spn$, $\log(6|\mS||\mA|(T+1)/\delta)$, \Cref{sec:results}\\
        $\bar\gamma$, $\tilde\beta$, $\bar w$ & domination constant $\gamma_\Omega(\|Q^*_\Omega\|_\asp+2e_0)$, contraction rate $\beta_m\bar\gamma^{\,m-1}$ and noise scale on the ball of radius $2e_0$, \Cref{sec:results}\\
        $A_R$, $C_R$, $\bar\beta(R)$, $\bar w(R)$ & localized radii and constants, \Cref{def:localized-radii}\\
        $p_m$, $\kappa_0$ & probability of $m$ effective updates per block and sub-block budget, \Cref{lem:pickup_synchronous}\\
        $\sigma_R$, $e_t$ & exit time from the ball of radius $R$ and error at time $t$, \Cref{def:localized-radii}\\
        $\mB_i$, $b_i$, $\mB_{\mathrm{res}}$, $I(T)$ & blocks, their first indices, last incomplete block, number of completed blocks, \Cref{def:sub_blocks,subsubsec:sync_stochastic}\\
        $J(T)$, $\ell_j$, $L_j(\delta')$, $q_R$ & completed epochs, epoch length and its bound, coverage constant, \Cref{lem:ep_cover_time}\\
        \midrule
        \multicolumn{2}{@{}l}{\emph{Planning analysis}}\\
        $\xi$, $H$, $Q_B$ & reset distribution, planning horizon, boundary value, \Cref{subsec:planning_results}\\
        $\mS_h$, $\mX_h$, $\mS'$, $\bar S_H$ & reachable layers, $\mS_h\times\mA$, reachable set $\bigcup_h\mS_h$, $\max_{h\leq H}|\mS_h|$, \Cref{app:planning_results}\\
        $h_1$, $\mS_\infty$ & depth after which the layers no longer change, and their common value, \Cref{assumption:pl_local_mixing}\\
        $\mT_{\Omega,h}$, $Q^*_{\Omega,h}$ & depth-dependent operator from $\bR^{\mX_{h+1}}$ to $\bR^{\mX_h}$, planning solution, \Cref{subsec:planning_results}\\
        $\Delta_B$, $D_0$ & boundary mismatch $\|Q_B-Q^*_\Omega\|_{\spn,\mX_H}$, initialization error $\max_{h<H}\|Q_{0,h}-Q^*_{\Omega,h}\|_\spn$; in planning, $e_0:=D_0+\Delta_B$\\
        $H_{\mathrm{eff}}(A)$, $M(h,A)$ & effective horizon $m/\tilde\beta_m(A)$ and $(h_1-h)_++H_{\mathrm{eff}}(A)$, \eqref{eq:pl_Heff}; $H_{\mathrm{eff}}$ and $M$ denote their values at $A=A_{2e_0}$ and $h=0$, \Cref{app:planning_results}\\    
        $q_\mu$, $q_\pi(A)$, $q$ & coverage constants under $\mu$ and under a behavior dominating $\gamma_\Omega(A)\mu$, \eqref{eq:pl_q}; $q$ is the constant of \Cref{thm:pl-epoch-main}\\
        $\iota_H$ & $\log(4JH\bar S_H|\mA|/\delta)$, \Cref{thm:pl-epoch-main}\\
        \bottomrule
    \end{tabular}
\end{table}

\newpage

\section{Algorithm Pseudocodes}
\label{app:pseudocodes}

\subsection{Learning}

All the learning algorithms apply the centering projection $\Pi[Q]:=Q-\frac{1}{|\mS||\mA|}\ones^\top Q\ones$, which only shifts the iterates by a constant and does not affect the analysis in span seminorm.

\begin{algorithm}[htbp]
\caption{Synchronous $\Omega$-Regularized $Q$-Iteration}
\label{alg:synchronous_regularized_q_iteration}
\begin{algorithmic}[1]
\STATE \textbf{Input:} initial $Q_0$, regularizer $\Omega$, transition kernel $p$.
\FOR{$t=0,1,\dots$}
    \STATE $\mT_\Omega(Q_t)(s,a)\leftarrow r(s,a)+\sum_{s'}p(s'|s,a)\,\Omega^*(Q_t)(s')$ for every $(s,a)\in\mS\times\mA$.
    \STATE $Q_{t+1}\leftarrow\Pi[\mT_\Omega(Q_t)]$.
\ENDFOR
\end{algorithmic}
\end{algorithm}

\begin{algorithm}[htbp]
\caption{Synchronous $\Omega$-Regularized $Q$-Learning}
\label{alg:synchronous_regularized_q_learning}
\begin{algorithmic}[1]
\STATE \textbf{Input:} initial $Q_0$, regularizer $\Omega$, step sizes $\{\alpha_t\}_{t\geq0}$, generative model of $p$.
\FOR{$t=0,1,\dots$}
    \STATE Sample $s'_t(s,a)\sim p(\cdot|s,a)$ independently for every $(s,a)\in\mS\times\mA$.
    \STATE $\widehat{\mT_\Omega Q_t}(s,a)\leftarrow r(s,a)+\Omega^*(Q_t)\big(s'_t(s,a)\big)$ for every $(s,a)$.
    \STATE $Q_{t+1}\leftarrow\Pi\big[(1-\alpha_t)Q_t+\alpha_t\widehat{\mT_\Omega Q_t}\big]$.
\ENDFOR
\end{algorithmic}
\end{algorithm}

\subsection{Planning}

The planning algorithms take the reachable layers $\mS_0,\dots,\mS_H$ of \Cref{app:planning_results} as input; they depend only on $\xi$ and on the support of $p$, and the stopping rules of the model-free algorithms require them.

\begin{algorithm}[htbp]
\caption{Backward Synchronous $\Omega$-Regularized $Q$-Planning}
\label{alg:pl_backward_sync}
\begin{algorithmic}[1]
\STATE \textbf{Input:} boundary value $Q_B$, regularizer $\Omega$, transition kernel $p$, layers $\mS_0,\dots,\mS_H$.
\STATE Set $Q_H\leftarrow Q_B|_{\mX_H}$.
\FOR{$h=H-1,\dots,0$}
    \STATE $Q_h(s,a)\leftarrow r(s,a)+\sum_{s'\in\mS_{h+1}}p(s'|s,a)\,\Omega^*(Q_{h+1})(s')$ for every $(s,a)\in\mX_h$.
\ENDFOR
\STATE \textbf{Return} $(Q_h)_{h=0}^{H}$.
\end{algorithmic}
\end{algorithm}

\begin{algorithm}[htbp]
\caption{Epoch $\Omega$-Regularized $Q$-Planning}
\label{alg:epoch_regularized_q_planning}
\begin{algorithmic}[1]
\STATE \textbf{Input:} initial $Q_0$, boundary value $Q_B$, regularizer $\Omega$, batch sizes $\{B_j\}$, reset distribution $\xi$, layers $\mS_0,\dots,\mS_H$.
\STATE Set $Q_{0,h}\leftarrow Q_0|_{\mX_h}$ for $h<H$, $Q_{j,H}:=Q_B|_{\mX_H}$ for every $j$, and $j\leftarrow0$.
\WHILE{\textbf{true}}
    \STATE $N_{j,h}\leftarrow0$ and $\widehat{\mT_{\Omega,h}Q_{j,h+1}}\leftarrow0$ on $\mX_h$, for every $h<H$.
    \WHILE{$\min_{h<H}\min_{(s,a)\in\mX_h}N_{j,h}(s,a)<B_j$}
        \STATE Sample $s_0\sim\xi$.
        \FOR{$h=0,\dots,H-1$}
            \STATE Sample $a_h\sim\pi_{\Omega,Q_{j,h}}(\cdot|s_h)$ and $s_{h+1}\sim p(\cdot|s_h,a_h)$.
            \IF{$N_{j,h}(s_h,a_h)<B_j$}
                \STATE $\widehat{\mT_{\Omega,h}Q_{j,h+1}}(s_h,a_h)\pluseq r(s_h,a_h)+\Omega^*(Q_{j,h+1})(s_{h+1})$ and $N_{j,h}(s_h,a_h)\pluseq1$.
            \ENDIF
        \ENDFOR
    \ENDWHILE
    \STATE $Q_{j+1,h}\leftarrow\widehat{\mT_{\Omega,h}Q_{j,h+1}}/B_j$ for every $h<H$, and $j\leftarrow j+1$.
\ENDWHILE
\end{algorithmic}
\end{algorithm}

\begin{algorithm}[H]
\caption{Epoch $\Omega$-Regularized $Q$-Learning with Resets}
\label{alg:pl_reset_learning}
\begin{algorithmic}[1]
\STATE \textbf{Input:} initial $Q_0\in\bR^{\mS_\infty\times\mA}$, regularizer $\Omega$, step sizes $\{\alpha_j\}$, batch sizes $\{B_j\}$, reset distribution $\xi$, burn-in depth $h_1$ and core $\mS_\infty$, rollout length $H>h_1$.
\STATE Set $j\leftarrow0$.
\WHILE{\textbf{true}}
    \STATE $N_j\leftarrow0$ and $\widehat{\mT_\Omega Q_j}\leftarrow0$ on $\mS_\infty\times\mA$.
    \WHILE{$\min_{(s,a)\in\mS_\infty\times\mA}N_j(s,a)<B_j$}
        \STATE Sample $s_0\sim\xi$.
        \FOR{$h=0,\dots,H-1$}
            \STATE Sample $a_h\sim\mu(\cdot|s_h)$ if $h<h_1$ and $a_h\sim\pi_{\Omega,Q_j}(\cdot|s_h)$ otherwise, and $s_{h+1}\sim p(\cdot|s_h,a_h)$.
            \IF{$h\geq h_1$ \textbf{and} $N_j(s_h,a_h)<B_j$}
                \STATE $\widehat{\mT_\Omega Q_j}(s_h,a_h)\pluseq r(s_h,a_h)+\Omega^*(Q_j)(s_{h+1})$ and $N_j(s_h,a_h)\pluseq1$.
            \ENDIF
        \ENDFOR
    \ENDWHILE
    \STATE $Q_{j+1}\leftarrow\Pi\big[(1-\alpha_j)Q_j+\alpha_j\widehat{\mT_\Omega Q_j}/B_j\big]$ and $j\leftarrow j+1$.
\ENDWHILE
\end{algorithmic}
\end{algorithm}

\section{Additional Experiments}
\label{app:experiments}

\subsection{Two Rooms: experimental details and bias}
\label{app:two_rooms}

\paragraph{Setting.} At the two doorway entrances, $\mu$ puts mass $\mu(\text{DOOR})$ on the action entering the doorway and $(1-\mu(\text{DOOR}))/3$ on each of the three others; everywhere else it is uniform. The regularizer is the $\epsilon$-floored reverse KL with $\tau=2$ and $\epsilon=0.05$ in all panels.

\paragraph{Convergence.} Synchronous regularized $Q$-iteration is exact, so no seeds are involved. The rate is fitted on the part of the error curve with $10^{-9}<\|Q_k-Q^*_\Omega\|_\spn/\|Q_0-Q^*_\Omega\|_\spn<10^{-3}$. The convergence accelerates before settling: early iterates have a larger action-span, hence a sharper policy, a smaller domination profile $\gamma_\Omega$ and a slower contraction. On the slowest configuration ($p_\text{cross}=0.05$, uniform $\mu$) the first $e$-fold takes about $170$ iterations against $54.5$ once settled, so a rate fitted on a fixed initial window depends on the window; we report the settled rate, which does not.

\paragraph{Exploration.} Epoch lengths are for $B_j=1$ accepted sample per state-action pair, with a cap of $4\cdot10^5$ transitions per epoch. Under $\mu$: median over $12$ seeds. On-policy: $4$ epochs, median over $5$ seeds of the mean of the last $2$ epochs, the first epochs being dominated by the initialization $Q_0=0$. All curves are normalized by their value at the uniform $\mu$. In absolute terms, one epoch under the uniform $\mu$ costs $12\,304$, $3\,096$ and $1\,901$ transitions for $p_\text{cross}=0.05,0.3,1$, and the coverage constant sits $69$, $52$ and $39$ times above these values, which is why only its shape is compared.

\begin{figure}[H]
    \centering
    \includegraphics[width=0.32\textwidth]{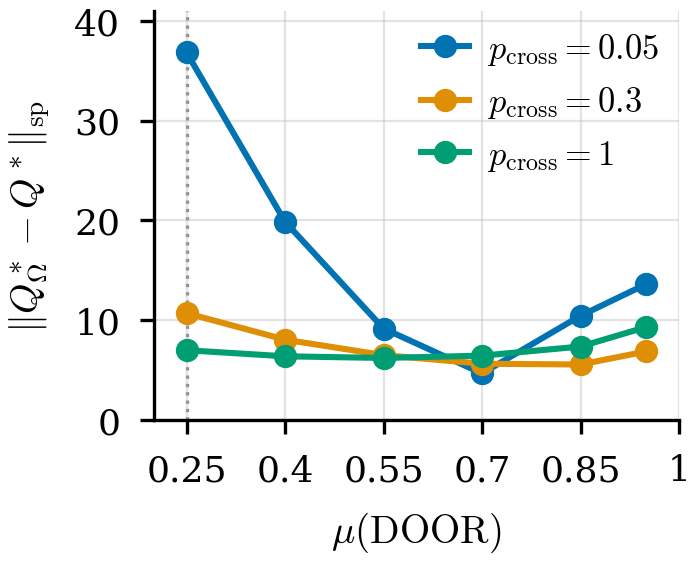}
    \caption{Bias $\|Q^*_\Omega-Q^*\|_\spn$ on Two Rooms against $\mu(\text{DOOR})$, exact. Dotted line: uniform $\mu$.}
    \label{fig:two_rooms_bias}
\end{figure}

\paragraph{Bias.} \Cref{fig:two_rooms_bias} reports the bias $\|Q^*_\Omega-Q^*\|_\spn$, computed exactly. When the doorway is a bottleneck, biasing $\mu$ towards it reduces the bias from $36.8$ to $4.7$, but past $\mu(\text{DOOR})\approx0.7$ the bias rises again, likely because the same knob also pushes the agent from the goal room back through the doorway. The convergence and exploration panels reward a large $\mu(\text{DOOR})$; the bias is what prevents pushing it indefinitely.

\subsection{Choice of the regularizer}
\label{app:omega_choice}

\begin{figure}[H]
    \centering
    \includegraphics[width=\linewidth]{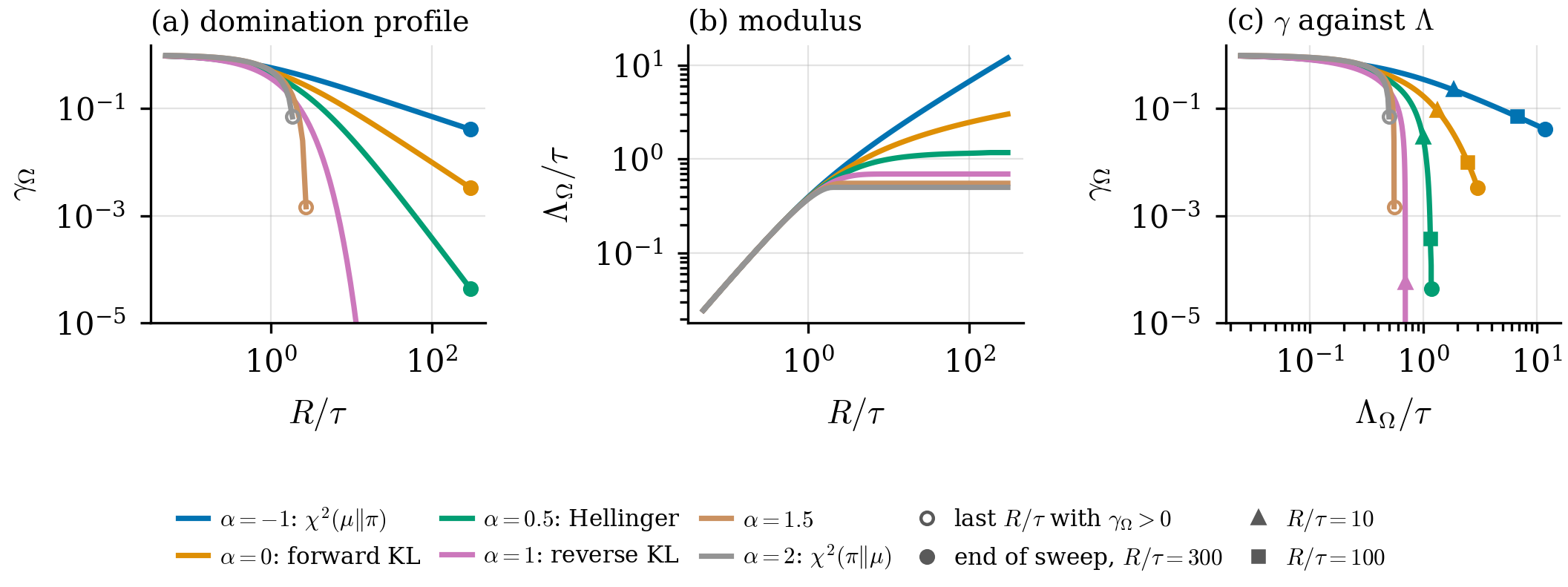}
    \caption{The objects of \Cref{def:mixing_main} for the $\alpha$-divergences to a uniform $\mu$ over two actions, and the resulting bias. (a) Domination profile $\gamma_\Omega$ and (b) regularization modulus $\Lambda_\Omega/\tau$, both functions of $R/\tau$ only. (c) The pair $(\Lambda_\Omega/\tau,\gamma_\Omega)$ as $R/\tau$ grows.}
    \label{fig:omega_choice}
\end{figure}

\Cref{fig:omega_choice} computes the two objects that govern our rates and bias for the $\alpha$-divergences to $\mu$. Since the greedy policy only depends on $Q/\tau$, $\gamma_\Omega(R)$ and $\Lambda_\Omega(R)/\tau$ only depend on $R/\tau$, so panels (a) and (b) hold for every temperature.

Panel (a) illustrates \Cref{thm:alpha}.(iv): for $\alpha>1$, the profile vanishes at a finite radius (open circles), at $R/\tau=\mu_{\max}^{1-\alpha}/(\alpha-1)$, that is $2$ for $\alpha=2$ and $2\sqrt2$ for $\alpha=1.5$, so that~\eqref{eq:P1} fails; for $\alpha\leq1$, it stays positive, with an exponential decay for the reverse KL and a polynomial one for $\alpha<1$, slower for smaller $\alpha$. Panel (b) illustrates \Cref{thm:alpha}.(v): the modulus saturates for $\alpha>0$, at $\log2$ for the reverse KL and $4(1-2^{-1/2})$ for the Hellinger regularizer, and grows without bound for $\alpha\leq0$, as $\log(R/\tau)$ for the forward KL and $(R/\tau)^{1/2}$ for $\alpha=-1$.

Panel (c) traces both objects together as the radius grows: a regularizer is preferable when its curve lies higher, for a faster contraction and coverage, and further left, for a smaller bias. No value of $\alpha$ dominates. Divergences with $\alpha\leq0$ keep a large profile at the price of an unbounded modulus, and those with $\alpha>1$ have a small modulus but lose~\eqref{eq:P1}; only the window $0<\alpha\leq1$ keeps the modulus bounded and the profile positive. Within it, the Hellinger regularizer keeps a larger profile at large $R/\tau$, whereas the reverse KL has the smaller modulus for this balanced $\mu$, as discussed in \Cref{rem:alpha_exact}.

The value bias on RiverSwim (\Cref{fig:river_mdp_bias}) follows the ordering of the moduli in panel (b): at a given temperature, it decreases with $\alpha$, and as $\tau\to0$ it vanishes fastest for the bounded regularizers $\alpha>0$, consistently with \Cref{lem:mixing_main}.(iii) and \Cref{thm:alpha}.(v). For large $\tau$, all greedy policies approach $\mu$ and the biases coincide. The $\epsilon$-floor restores~\eqref{eq:P1} for $\alpha>1$ at the price of a slightly larger bias at small temperature.

\section{Discussion on \pdfref{Assumption}{assumption:m-step-mixing}}
\label{app:mixing_assumption}

We characterize the MDPs that satisfy \Cref{assumption:m-step-mixing} and its constants. Throughout, $\mu$ has full support and $K^n_\mu(s'|s,a)=\sum_{s_1}p(s_1|s,a)P^{n-1}_\mu(s'|s_1)$ (\Cref{app:math}).

\paragraph{Canonical constants.}
Let $\underline{V}^n_\mu(s'):=\min_{(s,a)}K^n_\mu(s'|s,a)$. Since the minorization of \Cref{assumption:m-step-mixing} is uniform in the starting pair, a pair $(\beta,\tilde\nu)$ satisfies it at horizon $m$ if and only if $\beta\tilde\nu\leq\underline{V}^m_\mu$ pointwise. The assumption thus holds at horizon $m$ if and only if $\underline{V}^m_\mu>0$, and the canonical pair $\beta_m:=\|\underline{V}^m_\mu\|_1$, $\nu:=\underline{V}^m_\mu/\beta_m$, which we always use, dominates every other one, both in the contraction constant $\beta$ and in the coverage profile $\beta\tilde\nu\otimes\mu$. Averaging $K^n_\mu(s'|s_1,a_1)\geq\underline{V}^n_\mu(s')$ over $a_1\sim\mu(\cdot|s_1)$ and decomposing $K^{n+1}_\mu$ on its first step gives $\underline{V}^{n+1}_\mu\geq\underline{V}^n_\mu$: the assumption then holds at every horizon $n\geq m$, with canonical constants at least as good.

In fact, the contraction and the exploration rely on two different properties of $K^m_\mu$. The contraction only requires a bound on its Dobrushin coefficient, which the minorization provides as $\alpha(K^m_\mu)\leq1-\beta_m$, possibly strictly (\Cref{lem:dobrushin_doeblin}), whereas the exploration requires the minorization itself, which bounds below the probability of reaching each pair. We state both through the Doeblin pair $(\beta_m,\nu)$, so that the domination profile transfers them at once: the greedy policies contract at rate $\beta_m\gamma_\Omega^{m-1}$ and visit every pair with probability at least $\beta_m\gamma_\Omega^m\eta_{\min}$ every $m$ steps (\Cref{lem:mixing_main}).

\begin{definition}[Union graph]
    \label{def:union_graph}
    The \emph{union graph} $G$ is the directed graph on $\mS$ with an edge $s\to s'$ whenever $p(s'|s,a)>0$ for some $a\in\mA$. For $X\subseteq\mS$ and $k\geq0$, $XG^k$ is the set of states reached from $X$ by a walk of length exactly $k$. $G$ is \emph{primitive} if it is strongly connected and aperiodic, and its \emph{exponent} $\mathrm{ex}(G)$ is then the smallest $k\geq1$ such that $\{s\}G^k=\mS$ for every $s$.
\end{definition}

Since $\mu$ has full support, $P^n_\mu(s'|s)>0$ if and only if $s'\in\{s\}G^n$: $G$ is the support graph of $P_\mu$ for every full-support $\mu$.

\begin{proposition}[Which MDPs satisfy \Cref{assumption:m-step-mixing}]
    \label{prop:mixing_characterization}
    For every full-support $\mu$, \Cref{assumption:m-step-mixing} holds for some $m$ if and only if $P_\mu$ is irreducible and aperiodic, if and only if $G$ is primitive, if and only if the MDP is communicating and $G$ is aperiodic. In particular, \Cref{assumption:m-step-mixing} holds with some full-support policy if and only if it holds with every full-support policy: whether it holds only depends on the support of $p$, and $\mu$ only affects its constants $\beta_m$ and $\nu$.
\end{proposition}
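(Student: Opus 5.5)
The plan is to prove the chain of equivalences by going around a cycle, using the support-graph observation just stated: for full-support $\mu$, $P^n_\mu(s'|s)>0$ iff $s'\in\{s\}G^n$. All conditions except the first are phrased in terms of the support of $p$. So once the cycle is closed, the final claim follows immediately: the assumption holds for one full-support policy iff it holds for all, and $\mu$ only changes the constants.

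\emph{Assumption $\Rightarrow$ $G$ primitive.} Suppose $\sum_{a'}K^m_\mu(s'|s,a)\geq\beta_m\nu(s')>0$ for all $(s,a,s')$, with $\nu$ of full support. Since $K^m_\mu(s'|s,a)=\sum_{s_1}p(s_1|s,a)P^{m-1}_\mu(s'|s_1)$, every $s'$ is reachable from every $s$ by a walk of length exactly $m$. Hence $\{s\}G^m=\mS$ for all $s$. This gives strong connectivity. It also gives aperiodicity: the monotonicity $\underline V^{n+1}_\mu\geq\underline V^n_\mu$ from the canonical-constants paragraph shows $\{s\}G^{n}=\mS$ for every $n\geq m$, so $s$ lies on closed walks of lengths $m$ and $m+1$, and the gcd of these lengths is $1$.

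\emph{$G$ primitive $\Rightarrow$ assumption.} Take $k=\mathrm{ex}(G)$ and set $m:=k+1$. Then
\[
K^m_\mu(s'|s,a)=\sum_{s_1}p(s_1|s,a)P^{k}_\mu(s'|s_1)>0 .
\]
This holds because some $s_1$ has $p(s_1|s,a)>0$, and $P^k_\mu(s'|s_1)>0$ since $\{s_1\}G^k=\mS$ and $\mu$ has full support. Finiteness of $\mS\times\mA$ then gives $\underline V^m_\mu>0$, and the canonical pair $(\beta_m,\nu)$ works. The one point needing care is that primitivity in the sense of the definition yields such an exponent. This is the standard fact that an irreducible aperiodic nonnegative matrix has a strictly positive power (Wielandt/Seneta), and I will cite it rather than reprove it.

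\emph{Remaining equivalences.} $P_\mu$ has support graph exactly $G$, so $P_\mu$ is irreducible and aperiodic iff $G$ is strongly connected and aperiodic, that is, primitive. For the last equivalence I will argue that the MDP is communicating iff $G$ is strongly connected. If $G$ is strongly connected, the full-support policy $\mu$ reaches every state from every state. Conversely, if some policy connects every pair of states, every edge it uses is an edge of $G$, so $G$ is strongly connected. Adding aperiodicity to both sides closes the cycle.

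I expect the main obstacle to be pinning down the aperiodicity direction cleanly: avoiding periodic chains, where $\{s\}G^n$ covers $\mS$ only along a subsequence, and checking that the "exactly $m$ steps" convention of $K^m_\mu$ (one $p$-step followed by $m-1$ policy steps) matches walks of length $m$ in $G$. Both points are handled by the explicit formula for $K^m_\mu$ and the monotonicity of $\underline V^n_\mu$.
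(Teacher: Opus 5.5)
Your proposal is correct and follows the paper's route. The forward direction uses positivity of $P^m_\mu$ and $P^{m+1}_\mu$ to get closed walks of coprime lengths $m$ and $m+1$; the converse takes $m:=\mathrm{ex}(G)+1$ with the first-step decomposition of $K^m_\mu$; and the remaining equivalences come from $G$ being the support graph of $P_\mu$. The only cosmetic differences are that you close the cycle through $G$ rather than $P_\mu$, and you cite Wielandt for the existence of the exponent, which the paper takes as part of the definition of primitivity.
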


\begin{proof}
    The second equivalence holds because $G$ is the support graph of $P_\mu$, and the third because $G$ is strongly connected if and only if the MDP is communicating. If $\underline{V}^m_\mu>0$, then $P^m_\mu>0$ and $P^{m+1}_\mu>0$ entrywise, so $P_\mu$ is irreducible, with closed walks of lengths $m$ and $m+1$ through every state, hence aperiodic. Conversely, if $G$ is primitive, then $\{s_1\}G^{\mathrm{ex}(G)}=\mS$ for every $s_1$, so that, for $m:=\mathrm{ex}(G)+1$ and any $s_1$ with $p(s_1|s,a)>0$, $K^m_\mu(\cdot|s,a)\geq p(s_1|s,a)P^{m-1}_\mu(\cdot|s_1)>0$, i.e.\ $\underline{V}^m_\mu>0$.
\end{proof}

\begin{proposition}[The minimal horizon]
    \label{prop:minimal_horizon}
    Under \Cref{assumption:m-step-mixing}, let $e(X):=\min\{k\geq0\mid XG^k=\mS\}$ for nonempty $X\subseteq\mS$. The smallest horizon at which \Cref{assumption:m-step-mixing} holds is
    \begin{equation*}
        m^*=1+\max_{(s,a)\in\mS\times\mA}e\big(\mathrm{supp}\,p(\cdot|s,a)\big),
    \end{equation*}
    and $\mathrm{ex}(G)\leq m^*\leq\mathrm{ex}(G)+1\leq(|\mS|-1)^2+2$.
\end{proposition}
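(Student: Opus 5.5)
The plan is to reduce everything to a support computation on the union graph $G$, using the one-step decomposition $K^n_\mu(s'|s,a)=\sum_{s_1}p(s_1|s,a)P^{n-1}_\mu(s'|s_1)$. Write $X_{sa}:=\mathrm{supp}\,p(\cdot|s,a)$, which is nonempty. Since $\mu$ has full support, $P^{n-1}_\mu(s'|s_1)>0$ if and only if $s'\in\{s_1\}G^{n-1}$. Hence $K^n_\mu(s'|s,a)>0$ if and only if $s'\in X_{sa}G^{n-1}$.

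By the canonical-constants discussion, \Cref{assumption:m-step-mixing} holds at horizon $n$ if and only if $\underline{V}^n_\mu>0$. Because $\mS\times\mA$ is finite, this is equivalent to $X_{sa}G^{n-1}=\mS$ for every $(s,a)$.

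\textbf{Step 1: the admissible horizons are an up-set.} If $XG^k=\mS$, then $XG^{k+1}=\mS G$. Under \Cref{assumption:m-step-mixing}, $G$ is strongly connected by \Cref{prop:mixing_characterization}, so every state has an in-neighbour and $\mS G=\mS$. Consequently $X_{sa}G^{n-1}=\mS$ holds exactly when $n-1\geq e(X_{sa})$. Each $e(X_{sa})$ is finite because the assumption holds at some horizon. The smallest admissible $n$ is therefore $m^*=1+\max_{(s,a)}e(X_{sa})$. This is consistent with the monotonicity $\underline{V}^{n+1}_\mu\geq\underline{V}^n_\mu$ already noted in the text.

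\textbf{Step 2: the upper bound $m^*\leq\mathrm{ex}(G)+1$.} Pick any $s_1\in X_{sa}$. Then $\mS=\{s_1\}G^{\mathrm{ex}(G)}\subseteq X_{sa}G^{\mathrm{ex}(G)}$, so $e(X_{sa})\leq\mathrm{ex}(G)$ for every pair.

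\textbf{Step 3: the lower bound $\mathrm{ex}(G)\leq m^*$.} For any state $s$, decompose on the first step:
\begin{equation*}
\{s\}G^{m^*}=\bigcup_a X_{sa}G^{m^*-1}.
\end{equation*}
Each set in the union equals $\mS$ by the definition of $m^*$. So $\{s\}G^{m^*}=\mS$ for every $s$, and since $m^*\geq1$, we get $\mathrm{ex}(G)\leq m^*$.

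\textbf{Step 4: the final inequality.} It is Wielandt's bound $\mathrm{ex}(G)\leq(|\mS|-1)^2+1$ for primitive digraphs \citep{wielandt1950}. The graph $G$ is primitive by \Cref{prop:mixing_characterization}, so adding $1$ gives $\mathrm{ex}(G)+1\leq(|\mS|-1)^2+2$.

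The only delicate point is Step 1. The identity $\mS G=\mS$ is what makes the set of admissible horizons upward closed. It needs strong connectivity, which gives every state an in-neighbour; a mere out-degree argument does not give it. Everything else is routine set manipulation.
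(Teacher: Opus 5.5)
Your proof is correct and follows the paper's argument essentially step for step: the first-step support characterization $K^n_\mu(\cdot|s,a)>0$ iff $X_{sa}G^{n-1}=\mS$, upward closure of the admissible horizons via strong connectivity of $G$, $e(X)\leq\mathrm{ex}(G)$ for the upper bound, and Wielandt's bound for the last inequality. The only cosmetic difference is in the lower bound $\mathrm{ex}(G)\leq m^*$: you use the graph identity $\{s\}G^{m^*}=\bigcup_a X_{sa}G^{m^*-1}$, whereas the paper uses the equivalent positivity of $P^{m^*}_\mu(s'|s)=\sum_a\mu(a|s)K^{m^*}_\mu(s'|s,a)$.
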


\begin{proof}
    By the first-step decomposition and the full support of $\mu$, $K^m_\mu(\cdot|s,a)>0$ if and only if $X_{s,a}G^{m-1}=\mS$, with $X_{s,a}:=\mathrm{supp}\,p(\cdot|s,a)$. As $G$ is strongly connected, every state has a predecessor, so $XG^k=\mS$ implies $XG^{k+1}=\mS$, and $\{k\mid XG^k=\mS\}=\{k\geq e(X)\}$. Hence $\underline{V}^m_\mu>0$ if and only if $m-1\geq e(X_{s,a})$ for every $(s,a)$, which gives the formula. Since $e(X)\leq\mathrm{ex}(G)$ for every nonempty $X$, $m^*\leq\mathrm{ex}(G)+1$; since $P^{m^*}_\mu(s'|s)=\sum_a\mu(a|s)K^{m^*}_\mu(s'|s,a)>0$ for every $(s,s')$, $m^*\geq\mathrm{ex}(G)$; and $\mathrm{ex}(G)\leq(|\mS|-1)^2+1$ by \citet{wielandt1950}.
\end{proof}

The threshold $m^*$ is thus a property of the transition graph alone, and it is small when the dynamics can idle: if every state stays in place with positive probability under some action, then $\mathrm{ex}(G)$ is the diameter of $G$, since any path can be lengthened with self-loops, and $m^*$ is at most the diameter plus one. The horizon $m$ of our rates, in contrast, is not a property of the MDP: any $m\geq m^*$ is valid and, since the algorithms do not use it, our guarantees hold at every such $m$. Larger horizons can only improve $\beta_m$ and $\beta_m\eta_{\min}$, but they weaken the transfer factors $\gamma_\Omega^{m-1}$ for the contraction and $\gamma_\Omega^m$ for the exploration, so the best horizon to read our guarantees balances the mixing of $\mu$ against the domination of the greedy policies.

\begin{remark}[Comparison with other assumptions]
    \label{rem:mixing_comparison}
    Under \Cref{assumption:m-step-mixing}, the MDP is communicating (\Cref{cor:mixing_implies_span}) and every full-support policy induces an irreducible aperiodic chain (\Cref{prop:mixing_characterization}). The assumption is thus equivalent to the standard one on the behavior policy of asynchronous $Q$-learning, a full-support policy inducing an irreducible aperiodic chain \citep[Assumption~3.1]{chen2025nonasymptotic}. \citet{chen2025nonasymptotic} additionally assumes that the Bellman operator is a span contraction (his Assumption~2.1), whereas we obtain the contraction from regularization toward $\mu$, and the constants of $\mu$ enter the rates. It is incomparable with the unichain condition: two states swapped by a single action form a unichain MDP whose $G$ is periodic, and adding a self-loop action makes $G$ primitive but the MDP not unichain. Unlike the unichain condition, whose verification is NP-hard \citep{tsitsiklis2007nphard}, it is checked in polynomial time on $G$, and it is preserved by adding actions or transitions. Periodicity can be removed by laziness, i.e.\ replacing $p(\cdot|s,a)$ by $\frac12(\delta_s+p(\cdot|s,a))$, which preserves the gain of every stationary policy. Finally, if $P_\mu$ is unichain and aperiodic but has transient states, then for $m$ large enough $\beta_m$ is still positive but $\nu$ vanishes exactly on these states, which a trajectory following full-support policies visits finitely often: full support of $\nu$ is what makes single-trajectory learning possible on every pair.
\end{remark}

\section{Mathematical Background}
\label{app:math}

The following convention will be useful in the proofs.

\begin{convention}
    \label{convention:product_order}
    Throughout, $\mX$, $\mY$ and $\mZ$ denote finite sets, typically $\mS$ or $\mS\times\mA$, and $P$ is a row-stochastic matrix from $\mX$ to $\mY$. It maps functions $f\in\bR^{\mY}$ to $Pf\in\bR^{\mX}$ and distributions $\phi\in\Delta(\mX)$ to $\phi P\in\Delta(\mY)$:
    \[
        (Pf)(x)=\sum_{y\in\mY}P(y|x)f(y), \qquad (\phi P)(y)=\sum_{x\in\mX}\phi(x)P(y|x).
    \]
    The product of $P_1$ from $\mX$ to $\mY$ and $P_2$ from $\mY$ to $\mZ$ is $(P_1P_2)(z|x)=\sum_{y\in\mY}P_1(y|x)P_2(z|y)$, so that $(P_1P_2)f=P_1(P_2f)$ and $\phi(P_1P_2)=(\phi P_1)P_2$. In a product $P_1P_2\cdots P_n$, the Markov chain therefore takes its first step with $P_1$ and its last with $P_n$, whereas a function is transformed first by $P_n$ and last by $P_1$.
\end{convention}

For a policy $\pi$, we also write $P_\pi(s'|s):=\sum_a\pi(a|s)\,p(s'|s,a)$ for its state kernel, and $K^m_\pi(s'|s,a):=\sum_{a'}K^m_\pi(s',a'|s,a)$ for the law of the state reached after $m$ steps from $(s,a)$; in this notation, \Cref{assumption:m-step-mixing} reads $K^m_\mu(\cdot|s,a)\geq\beta_m\nu$.

\subsection{Concentration inequalities}
\label{app:concentration}

The analysis of the model-free algorithms controls two different random quantities.

The number of visits to a state-action pair is a sum of indicators whose conditional probabilities are only known to be bounded below, on an event, by a small constant such as $q_R=\beta_m\gamma_\Omega(A_R)^m\eta_{\min}$; \Cref{lem:self-bounding} bounds its lower tail multiplicatively, which keeps the cover times of \Cref{lem:ep_cover_time} linear in $1/q_R$.

The noise in the Bellman targets is a martingale with bounded increments, whose running maximum is controlled by \Cref{lem:azuma_range}.

\begin{lemma}[Self-bounding lower tail]\label{lem:self-bounding}
    Let $(Z_i)_{i=1}^{B}$ be adapted to a filtration $(\mathcal{G}_i)$ with $Z_i\in[0,b]$ almost surely. Let $S:=\sum_{i=1}^B Z_i$ and $\bar{S}:=\sum_{i=1}^B\bE[Z_i\mid\mathcal{G}_{i-1}]$. For any deterministic $a>0$ and $c\in(0,a)$,
    \begin{equation}
        \bP\big(S<c,\ \bar{S}\geq a\big) \leq\exp\Big(-\frac{1}{b}\Big(a-c-c\log\frac{a}{c}\Big)\Big)\label{eq:exact-chernoff}
    \end{equation}
\end{lemma}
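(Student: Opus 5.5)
The plan is a standard exponential-supermartingale (Chernoff) argument with a negative exponent, applied to the lower tail. First reduce to $b=1$ by rescaling: replacing $Z_i$ by $Z_i/b$ replaces $S,\bar S,a,c$ by $S/b,\bar S/b,a/b,c/b$. The exponent $a-c-c\log(a/c)$ is homogeneous of degree one, so it becomes $(a-c-c\log(a/c))/b$, which is exactly the bound claimed. From now on assume $Z_i\in[0,1]$.

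For $\lambda>0$, use convexity of $z\mapsto e^{-\lambda z}$ on $[0,1]$, which gives $e^{-\lambda z}\leq 1-z(1-e^{-\lambda})$. Taking conditional expectations and using $1+x\leq e^x$,
\begin{equation*}
\bE\big[e^{-\lambda Z_i}\mid\mathcal{G}_{i-1}\big]\leq\exp\big(-(1-e^{-\lambda})\,\bE[Z_i\mid\mathcal{G}_{i-1}]\big).
\end{equation*}
Write $p_i:=\bE[Z_i\mid\mathcal{G}_{i-1}]$, which is $\mathcal{G}_{i-1}$-measurable. Then
\begin{equation*}
M_k:=\exp\Big(-\lambda\sum_{i\leq k}Z_i+(1-e^{-\lambda})\sum_{i\leq k}p_i\Big)
\end{equation*}
is a nonnegative supermartingale with $M_0=1$, so $\bE[M_B]\leq1$. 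The key point is that the compensator $\bar S$ enters the exponent with a positive sign. This is why the event $\{\bar S\geq a\}$ can be used even though $\bar S$ is random.

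On the event $\{S<c,\ \bar S\geq a\}$ we have $M_B\geq\exp(-\lambda c+(1-e^{-\lambda})a)$. Markov's inequality therefore gives
\begin{equation*}
\bP(S<c,\ \bar S\geq a)\leq\exp\big(\lambda c-(1-e^{-\lambda})a\big).
\end{equation*}
Optimize with $e^{-\lambda}=c/a$, which is admissible since $c<a$. Then $\lambda=\log(a/c)$, and the exponent equals $c\log(a/c)-a+c$, which is exactly $-(a-c-c\log(a/c))$.

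I do not expect a real obstacle. The only delicate step is handling the random compensator. Working with the supermartingale that includes $\bar S$ in the exponent avoids any stopping-time or union-bound argument over the values of $\bar S$. The fact that $a$ and $c$ are deterministic is what lets us choose $\lambda$ in advance.
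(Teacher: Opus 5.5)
Your proof is correct and follows essentially the same route as the paper. Both use the nonnegative supermartingale $M_k$, in which the compensator enters the exponent with a positive sign, then apply Markov's inequality on the event $\{S<c,\ \bar S\geq a\}$ and choose $\lambda=\log(a/c)$. The only difference is cosmetic: you normalize to $b=1$ at the outset, whereas the paper carries $Z_i/b$ through the computation.
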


\begin{proof}
    This is the lower tail of the multiplicative Azuma inequality \citep[App.~A]{kuszmaul2021multiplicative}, stated on the event $\{\bar{S}\geq a\}$ in the manner of \citet{freedman1975tail} rather than under an almost-sure lower bound on $\bar{S}$; we recall the argument.
    Let $\theta>0$ and $p_i:=\bE[Z_i/b\mid\mathcal{G}_{i-1}]$. Convexity of $y\mapsto e^{-\theta y}$ on $[0,1]$ and $1-x\leq e^{-x}$ give $\bE[e^{-\theta Z_i/b}\mid\mathcal{G}_{i-1}]\leq1-p_i(1-e^{-\theta})\leq e^{-p_i(1-e^{-\theta})}$, so that $M_k:=\exp\big(-\frac{\theta}{b}\sum_{i\leq k}Z_i+(1-e^{-\theta})\sum_{i\leq k}p_i\big)$ is a non-negative supermartingale with $M_0=1$.
    On the event $\{S<c,\ \bar{S}\geq a\}$ we have $M_B\geq\exp\big(-\frac{\theta c}{b}+(1-e^{-\theta})\frac{a}{b}\big)$, so Markov's inequality bounds its probability by $\exp\big(\frac{1}{b}(\theta c-(1-e^{-\theta})a)\big)$, which is minimized at $\theta=\log(a/c)$; this is~\eqref{eq:exact-chernoff}.
\end{proof}

\begin{lemma}[Maximal Azuma--Hoeffding inequality with predictable ranges]
    \label{lem:azuma_range}
    Let $(M_k)_{0\leq k\leq n}$ be a martingale with $M_0=0$, adapted to $(\mathcal{F}_k)$, such that for each $k$ the increment $M_k-M_{k-1}$ lies a.s.\ in an interval $[\mathcal{L}_k,\mathcal{L}_k+c_k]$ with $\mathcal{L}_k$ $\mathcal{F}_{k-1}$-measurable and $c_k>0$ deterministic. Then for every $u>0$,
    \begin{equation*}
        \bP\left(\max_{0\leq k\leq n}|M_k|\geq u\right)\leq2\exp\left(-\frac{2u^2}{\sum_{k=1}^nc_k^2}\right).
    \end{equation*}
\end{lemma}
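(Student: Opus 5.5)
The plan is the standard Freedman/Ville argument: build an exponential supermartingale, apply Ville's maximal inequality to each tail, and union bound. Fix $\lambda>0$.

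\emph{Step 1 (conditional Hoeffding).} Write $D_k:=M_k-M_{k-1}$. Then $\bE[D_k\mid\mF_{k-1}]=0$ and $D_k\in[\mathcal{L}_k,\mathcal{L}_k+c_k]$ with $\mathcal{L}_k$ being $\mF_{k-1}$-measurable. Conditionally on $\mF_{k-1}$, $D_k$ is therefore a centered variable supported in an interval of known length $c_k$. Hoeffding's lemma applies conditionally and gives
\begin{equation*}
\bE[e^{\lambda D_k}\mid\mF_{k-1}]\leq e^{\lambda^2c_k^2/8}.
\end{equation*}
The only point to check is that Hoeffding's lemma depends on the interval's length and not on its location. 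This is why a predictable (random) endpoint $\mathcal{L}_k$ is harmless: the usual convexity argument goes through verbatim with the endpoints frozen by conditioning.

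\emph{Step 2 (supermartingale).} Define
\begin{equation*}
W_k:=\exp\Big(\lambda M_k-\tfrac{\lambda^2}{8}\sum_{i\leq k}c_i^2\Big).
\end{equation*}
Step 1 shows that $W_k$ is a nonnegative supermartingale with $W_0=1$. Since the $c_k$ are deterministic, the event $\{\max_k M_k\geq u\}$ is contained in the event
\begin{equation*}
\Big\{\max_k W_k\geq \exp\Big(\lambda u-\tfrac{\lambda^2}{8}\sum_{i\leq n}c_i^2\Big)\Big\}.
\end{equation*}

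\emph{Step 3 (maximal inequality).} Ville's (or Doob's) maximal inequality for nonnegative supermartingales gives
\begin{equation*}
\bP\big(\max_{k\leq n}W_k\geq x\big)\leq 1/x.
\end{equation*}
One can equivalently use optional stopping at the first time $k$ with $M_k\geq u$, capped at $n$. This yields
\begin{equation*}
\bP\big(\max_k M_k\geq u\big)\leq\exp\Big(-\lambda u+\tfrac{\lambda^2}{8}\sum_{k}c_k^2\Big),
\end{equation*}
and choosing $\lambda=4u/\sum_kc_k^2$ gives $\exp\big(-2u^2/\sum_kc_k^2\big)$.

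\emph{Step 4 (other tail).} Apply Steps 1–3 to the martingale $-M_k$, whose increments lie in $[-\mathcal{L}_k-c_k,-\mathcal{L}_k]$, again predictable with length $c_k$. A union bound over the two tails then gives the factor $2$. The $k=0$ term is trivial since $M_0=0<u$.

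I expect no real obstacle. The one subtlety is Step 1: Hoeffding's lemma must be applied conditionally with a random but $\mF_{k-1}$-measurable support. A careful proof either cites the conditional version or redoes the two-line convexity bound pointwise in $\omega$, or via a regular conditional distribution.
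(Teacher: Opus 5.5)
Your argument is correct and takes the same route as the paper. The paper applies a cited one-sided maximal Azuma--Hoeffding inequality with predictable ranges (Theorem~3.2.1 of Roch's book) to $(M_k)$ and to $(-M_k)$ and takes a union bound, and your Steps~1--3 are the standard proof of that cited result: conditional Hoeffding lemma, exponential supermartingale, Ville's inequality, then optimizing at $\lambda=4u/\sum_k c_k^2$.
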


\begin{proof}
    Apply \citet[Theorem~3.2.1]{roch2024modern} to $(M_k)$ with $A_k=\mathcal{L}_k$ and $B_k=\mathcal{L}_k+c_k$, and to $(-M_k)$ with $A_k=-\mathcal{L}_k-c_k$ and $B_k=-\mathcal{L}_k$, then take a union bound.
\end{proof}

\subsection{On span seminorms and the Dobrushin coefficient}

This section provides additional definitions and lemmas that are useful when working with the span seminorm.

\begin{definition}[Total-variation distance]
    \label{def:tv_norm}
    The total-variation distance between two probability distributions $\phi, \phi' \in \Delta(\mX)$ is defined as:
    \begin{equation}
        \|\phi - \phi'\|_{\mathrm{TV}} = \frac{1}{2}\sum_{x\in \mX} |\phi(x) - \phi'(x)|.
    \end{equation}
\end{definition}

\begin{definition}[Dobrushin coefficient]
    \label{def:dobrushin}
    The Dobrushin coefficient of a row-stochastic matrix $P\in\bR^{\mX\times\mY}$ is defined as:
    \begin{equation}
        \alpha(P) = \sup_{\phi\neq\phi' \in \Delta(\mX)}\frac{\|\phi P -\phi' P\|_{\mathrm{TV}}}{\|\phi -\phi'\|_{\mathrm{TV}}}=\sup_{x,x'\in\mX}\|P(\cdot|x)-P(\cdot|x')\|_{\mathrm{TV}},
    \end{equation} 
    the equality of the two expressions being due to \citet{dobrushin1956central}.
\end{definition}

The following lemma turns a Doeblin minorization, such as \Cref{assumption:m-step-mixing}, into a bound on the Dobrushin coefficient, hence into a contraction in span seminorm by \Cref{lem:row-stochastic-span}.

\begin{lemma}[Doeblin and Dobrushin coefficients]
    \label{lem:dobrushin_doeblin}
    Let $\mX$ and $\mY$ be finite and let $P$ be a row-stochastic matrix from $\mX$ to $\mY$. If there exist $\beta\in(0,1]$ and $\nu\in\Delta(\mY)$ with $P(y|x)\geq\beta\nu(y)$ for all $(x,y)\in\mX\times\mY$, then $\alpha(P)\leq 1-\beta$.
\end{lemma}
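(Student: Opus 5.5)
We use the second expression for $\alpha(P)$ in \Cref{def:dobrushin}, the supremum over pairs of rows $x,x'\in\mX$ of $\|P(\cdot|x)-P(\cdot|x')\|_{\mathrm{TV}}$. The equality of the two expressions is taken from \citet{dobrushin1956central}, as in the definition. The plan is to split each row into a common part and a residual. Fix $x,x'$ and write
\[
    P(\cdot|x)=\beta\nu+(1-\beta)R_x .
\]
For $\beta<1$, the residual $R_x:=(P(\cdot|x)-\beta\nu)/(1-\beta)$ is nonnegative by the minorization. It sums to one because both $P(\cdot|x)$ and $\nu$ are probability vectors, so $R_x\in\Delta(\mY)$. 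We define $R_{x'}$ in the same way.

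In the difference $P(\cdot|x)-P(\cdot|x')=(1-\beta)(R_x-R_{x'})$, the common part $\beta\nu$ cancels. Taking total variation gives $\|P(\cdot|x)-P(\cdot|x')\|_{\mathrm{TV}}=(1-\beta)\|R_x-R_{x'}\|_{\mathrm{TV}}\leq 1-\beta$, since the total variation distance between two distributions is at most one. Taking the supremum over $(x,x')$ yields $\alpha(P)\leq1-\beta$.

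The only case needing separate care is $\beta=1$, where the residual is undefined. There, $P(y|x)\geq\nu(y)$ for all $y$, and both sides sum to one, so $P(\cdot|x)=\nu$ for every $x$. All rows then coincide and $\alpha(P)=0$, as claimed.

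An alternative avoids the division entirely. One uses $\|\phi-\phi'\|_{\mathrm{TV}}=1-\sum_y\min(\phi(y),\phi'(y))$ and notes that $\min(P(y|x),P(y|x'))\geq\beta\nu(y)$. Summing over $y$ gives an overlap of at least $\beta$, and hence a distance of at most $1-\beta$. This version handles $\beta=1$ uniformly, so I would likely present it. I do not expect any genuine obstacle; the only point to get right is the overlap identity for total variation, which follows from $|u-v|=u+v-2\min(u,v)$.
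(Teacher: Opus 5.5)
Your proposal is correct, and the overlap argument you say you would present is exactly the paper's proof: the paper writes $\|P(\cdot|x)-P(\cdot|x')\|_{\mathrm{TV}}=1-\sum_y\min\{P(y|x),P(y|x')\}$ via $|a-b|=a+b-2\min\{a,b\}$ and bounds each minimum below by $\beta\nu(y)$. Your first route, which splits each row as $\beta\nu+(1-\beta)R_x$ and treats $\beta=1$ separately, is also valid and is just the coupling form of the same bound.
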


\begin{proof}
    This is classical \citep[\S4.3]{seneta2006nonnegative}; we recall the argument.
    Since $|a-b|=a+b-2\min\{a,b\}$ and both rows sum to one, $\|P(\cdot|x)-P(\cdot|x')\|_{\mathrm{TV}}=1-\sum_{y}\min\{P(y|x),P(y|x')\}$ for all $x,x'\in\mX$.
    Each minimum is at least $\beta\nu(y)$ by the minorization, so the sum is at least $\beta$ and every row distance is at most $1-\beta$.
\end{proof}

\begin{lemma}[The terminal action does not affect the contraction]
    \label{lem:terminal_action}
    Let $n\geq1$, let $\pi_1,\dots,\pi_n$ be policies and $K:=K_{\pi_1}\cdots K_{\pi_n}$, the chain taking its first step with $K_{\pi_1}$ (\Cref{convention:product_order}). If $\pi_k\geq\gamma_k\mu$ for $k\in\llbracket1,n-1\rrbracket$ and $K^n_\mu(\cdot|s,a)\geq\beta\nu$ for every $(s,a)$, then $\alpha(K)\leq1-\beta\prod_{k=1}^{n-1}\gamma_k$.
\end{lemma}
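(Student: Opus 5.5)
We bound the Dobrushin coefficient through a Doeblin minorization of $K$ and then invoke \Cref{lem:dobrushin_doeblin}. Since $\alpha(K)=\sup_{x,x'}\|K(\cdot|x)-K(\cdot|x')\|_{\mathrm{TV}}$, and the total-variation distance between two rows equals $1-\sum_y\min\{K(y|x),K(y|x')\}$, it suffices to exhibit a single nonnegative measure on $\mS\times\mA$ of total mass $\beta\prod_{k=1}^{n-1}\gamma_k$ lying below every row of $K$. The last policy $\pi_n$ is the only one not dominated, so the minorizing measure must be built on the state marginal and then pushed through $\pi_n$ without losing mass.

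The first step is a monotonicity argument on the first $n-1$ policies. By \Cref{convention:product_order}, the row of $K$ at $(s,a)$ is the law of $(s_n,a_n)$ where $s_1\sim p(\cdot|s,a)$, $a_k\sim\pi_k(\cdot|s_k)$ and $s_{k+1}\sim p(\cdot|s_k,a_k)$. Write the state marginal as $\sum_{a_1,\dots,a_{n-1}}\prod_k\pi_k(a_k|s_k)\,p(\cdots)$. Each factor $\pi_k(a_k|s_k)\geq\gamma_k\mu(a_k|s_k)$ for $k\leq n-1$, and all other terms are nonnegative. Hence the law of $s_n$ under $K$ dominates $\prod_{k=1}^{n-1}\gamma_k$ times the law of $s_n$ under $\mu$, which is $K^n_\mu(\cdot|s,a)$ with its action marginalized. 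Combined with the hypothesis $K^n_\mu(\cdot|s,a)\geq\beta\nu$, this gives $\bP_K(s_n=s'\mid s_0,a_0=s,a)\geq\beta\prod_{k<n}\gamma_k\,\nu(s')$ for every starting pair.

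The second step treats the terminal action and is the only subtle point. The row of $K$ factorizes as $K(s',a'|s,a)=\bP(s_n=s'\mid s,a)\,\pi_n(a'|s')$, because $a_n$ is drawn from $\pi_n(\cdot|s')$ regardless of the history. Therefore $K(s',a'|s,a)\geq\beta\prod_{k<n}\gamma_k\,\nu(s')\pi_n(a'|s')$. The measure $\nu\otimes\pi_n$ is a probability distribution on $\mS\times\mA$ that does not depend on $(s,a)$, even though it involves the undominated $\pi_n$. This is why no factor $\gamma_n$ is needed. Applying \Cref{lem:dobrushin_doeblin} with $\beta\prod_{k<n}\gamma_k$ and $\nu\otimes\pi_n$ yields $\alpha(K)\leq1-\beta\prod_{k=1}^{n-1}\gamma_k$.

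The main obstacle is the bookkeeping of which factor carries the terminal action. In the hypothesis, $K^n_\mu$ is the state marginal, so the $\mu$ at step $n$ is summed out. One must therefore check that the domination is applied only to steps $1,\dots,n-1$, and that the step-$n$ action appears as a common factor $\pi_n(a'|s')$. Once that is written out, the rest is immediate.
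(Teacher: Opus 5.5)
Your proof is correct and is essentially the paper's argument. Both use the factorization $K(s',a'|s,a)=A(s'|s,a)\,\pi_n(a'|s')$, with $A$ the law of the $n$-th state, and the transport inequality on the $n-1$ intermediate steps giving $A\geq\beta\prod_{k=1}^{n-1}\gamma_k\,\nu$; the only difference is cosmetic: the paper uses the common terminal factor to get $\alpha(K)=\alpha(A)$ and applies \Cref{lem:dobrushin_doeblin} to $A$, whereas you push the minorizing measure through $\pi_n$ and apply it to $K$ with the start-independent distribution $\nu\otimes\pi_n$.
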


\begin{proof}
    The rows of $K$ factor as $K(s',a'|x)=A(s'|x)\,\pi_n(a'|s')$, where $A(\cdot|x)$ is the law of the $n$-th state. The terminal factor being common to all rows, $1-\alpha(K)=\min_{x,y}\sum_{s'}\min\{A(s'|x),A(s'|y)\}=1-\alpha(A)$. Only the $n-1$ intermediate actions enter $A$, so the transport inequality $P_{\pi_k}\geq\gamma_kP_\mu$ gives $A(\cdot|x)\geq\big(\prod_{k=1}^{n-1}\gamma_k\big)K^n_\mu(\cdot|x)\geq\beta\big(\prod_{k=1}^{n-1}\gamma_k\big)\nu$, and \Cref{lem:dobrushin_doeblin} concludes.
\end{proof}

\begin{definition}[Operator span seminorm]
    \label{def:operator_span_seminorm}
    The operator span seminorm of a linear operator $A:\bR^p \to \bR^q$ is defined as:
    \begin{equation}
        \|A\|_{\spn,\op} = \sup_{\|x\|_{\spn}\leq 1}\|Ax\|_{\spn}\in\bR^+\cup\{+\infty\}.
    \end{equation}
\end{definition}

\begin{lemma}[Operator span seminorm]
    \label{lem:spop}
    Let $A:\bR^{p}\to\bR^{q}$ be linear. The following two conditions are equivalent:
    \begin{enumerate}[label=(\roman*)]
        \item $\|A\|_{\spn,\op}<+\infty$;
        \item $A\cdot\mathrm{Ker}\|\cdot\|_{\spn}\subseteq\mathrm{Ker}\|\cdot\|_{\spn}$, i.e. $A\ones\in\bR\ones$, i.e. $A$ has constant row sums.
    \end{enumerate}
    When they hold, for every $f\in\bR^{p}$ and every linear map $B:\bR^{p'}\to\bR^{p}$ satisfying (ii),
    \begin{equation}
        \label{eq:sub-multiplicativity}
        \|Af\|_\spn\leq\|A\|_{\spn,\op}\|f\|_\spn, \qquad \|AB\|_{\spn,\op}\leq\|A\|_{\spn,\op}\|B\|_{\spn,\op}.
    \end{equation}
\end{lemma}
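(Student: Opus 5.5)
The two claims reduce to the row-sum characterization in (ii), so I would prove the equivalence (i)$\Leftrightarrow$(ii) first and then derive the two inequalities in \eqref{eq:sub-multiplicativity} from it. Throughout I write $\ones$ for the all-ones vector and use that $\mathrm{Ker}\|\cdot\|_\spn=\bR\ones$. This holds because the span seminorm vanishes exactly on constant vectors (\Cref{def:span}).

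\emph{(ii)$\Rightarrow$(i).} Assume $A\ones=c\ones$. For any $x$, set $\bar x:=x-\min_i x_i\,\ones$. Then $0\le\bar x\le\|x\|_\spn\ones$ entrywise, and $Ax=A\bar x+c\min_i x_i\,\ones$, so $\|Ax\|_\spn=\|A\bar x\|_\spn$. Each coordinate of $A\bar x$ is bounded in absolute value by $\max_j\sum_k|A_{jk}|\,\|x\|_\spn$. The span of $A\bar x$ is at most twice its sup norm, which gives
\[
\|A\|_{\spn,\op}\le 2\max_j\textstyle\sum_k|A_{jk}|<\infty.
\]

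\emph{(i)$\Rightarrow$(ii).} I argue by contraposition. Suppose $A\ones\notin\bR\ones$, so $\|A\ones\|_\spn>0$. For $t\in\bR$ we have $\|t\ones\|_\spn=0\le1$, so $t\ones$ is admissible in the supremum defining $\|A\|_{\spn,\op}$. Its image has $\|A(t\ones)\|_\spn=|t|\,\|A\ones\|_\spn$, which tends to infinity with $t$. Hence the supremum is $+\infty$. The main care point is the statement "$A\cdot\mathrm{Ker}\subseteq\mathrm{Ker}$". It is equivalent to $A\ones\in\bR\ones$ because the kernel is one-dimensional, spanned by $\ones$, and having constant row sums is exactly $A\ones\in\bR\ones$.

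\emph{Inequalities.} Assume (i) and (ii) hold and fix $f$.
\begin{itemize}
\item If $\|f\|_\spn>0$, apply the definition of $\|A\|_{\spn,\op}$ to $f/\|f\|_\spn$ and use homogeneity.
\item If $\|f\|_\spn=0$, then $f=c\ones$, so $Af\in\bR\ones$ by (ii) and $\|Af\|_\spn=0$.
\end{itemize}
This gives $\|Af\|_\spn\le\|A\|_{\spn,\op}\|f\|_\spn$ in both cases. For the product, $B$ satisfies (ii), so $\|B\|_{\spn,\op}<\infty$ by the equivalence just proved. For any $x$ with $\|x\|_\spn\le1$, applying the first inequality twice gives
\[
\|ABx\|_\spn\le\|A\|_{\spn,\op}\|Bx\|_\spn\le\|A\|_{\spn,\op}\|B\|_{\spn,\op}.
\]
Taking the supremum over such $x$ yields $\|AB\|_{\spn,\op}\le\|A\|_{\spn,\op}\|B\|_{\spn,\op}$. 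I expect no real obstacle. The only subtlety is the degenerate case $\|f\|_\spn=0$, and it is handled exactly by (ii).
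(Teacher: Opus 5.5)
Your proof is correct, and your direction (i)$\Rightarrow$(ii) is exactly the paper's scaling argument with $c\ones$. You differ in (ii)$\Rightarrow$(i) and in the two inequalities.

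You work with matrix entries. You shift $x$ to $\bar x=x-\min_i x_i\ones$ to get the explicit estimate $\|A\|_{\spn,\op}\le 2\max_j\sum_k|A_{jk}|$. You then derive \eqref{eq:sub-multiplicativity} by homogeneity, treating the case $\|f\|_\spn=0$ separately.

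The paper instead passes to the quotient $\bR^p/\bR\ones$, on which $\|\cdot\|_\spn$ is a norm. Condition (ii) makes $A$ descend to a linear map $\tilde A$ between the quotients, whose operator norm is exactly $\|A\|_{\spn,\op}$. Finiteness then follows from finite dimensionality, and both inequalities are standard operator-norm facts. The degenerate case disappears there, since $[f]=0$ when $\|f\|_\spn=0$.

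Your route is self-contained and quantitative, though the bound is loose. It gives $2$ for a row-stochastic matrix, whose exact operator span seminorm, the Dobrushin coefficient $\alpha(P)\le 1$, is computed separately in \Cref{lem:row-stochastic-span}. The paper's route is shorter and sets up the quotient normed space that it reuses in the proof of the span mean value inequality (\Cref{prop:mvi-span}).
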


\begin{proof}
    Assume $A\ones\notin\bR\ones$, i.e.\ $\|A\ones\|_\spn>0$. For every $c>0$ the vector $c\ones$ satisfies $\|c\ones\|_\spn=0\leq1$, while $\|A(c\ones)\|_\spn=c\|A\ones\|_\spn\to+\infty$ as $c\to+\infty$. Hence $\|A\|_{\spn,\op}=+\infty$.

    Conversely, the span seminorm vanishes exactly on the constant vectors, so it induces a norm, still denoted $\|\cdot\|_\spn$, on the quotient spaces $E_{p}:=\bR^{p}/\bR\ones$ and $E_{q}:=\bR^{q}/\bR\ones$.
    Under (ii), the factorization theorem gives that $A$ induces a well-defined linear map $\tilde{A}:E_{p}\to E_{q}$, $\tilde{A}[f]=[Af]$.
    By construction $\|Af\|_\spn=\|\tilde{A}[f]\|_\spn$ and $\|f\|_\spn=\|[f]\|_\spn$, so $\|A\|_{\spn,\op}$ is the operator norm of $\tilde{A}$ between the normed spaces $(E_{p},\|\cdot\|_\spn)$ and $(E_{q},\|\cdot\|_\spn)$, which is finite since $E_{p}$ is finite-dimensional.

    Finally, the first inequality in~\eqref{eq:sub-multiplicativity} is the operator-norm inequality $\|\tilde{A}[f]\|_\spn\leq\|\tilde{A}\|\,\|[f]\|_\spn$. For the second, $AB$ also satisfies (ii) and induces the map $\tilde{A}\tilde{B}$, and operator norms are submultiplicative.
\end{proof}

\begin{lemma}[Span seminorm of a row-stochastic matrix]
    \label{lem:row-stochastic-span}
    Let $P\in\bR^{q\times p}$ be row-stochastic. Then its operator span seminorm equals its Dobrushin coefficient, $\|P\|_{\spn,\op}=\alpha(P)$, and consequently $\|Pf\|_\spn\leq\alpha(P)\|f\|_\spn$ for every $f\in\bR^p$.
\end{lemma}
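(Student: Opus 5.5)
We prove the two claims separately: first $\|P\|_{\spn,\op}\leq\alpha(P)$ through a pairwise bound on coordinates of $Pf$, then the reverse inequality through an explicit test vector. The inequality $\|Pf\|_\spn\leq\alpha(P)\|f\|_\spn$ then follows from \Cref{lem:spop}. That lemma applies because a row-stochastic $P$ satisfies $P\ones=\ones$, so condition (ii) holds and the operator span seminorm is finite.

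\emph{Upper bound.} Fix $f\in\bR^p$ and two rows $x,x'$, and write $\delta:=P(\cdot|x)-P(\cdot|x')$. Since $\sum_y\delta(y)=0$, we have $\sum_y\delta(y)^+=\sum_y\delta(y)^-=\|P(\cdot|x)-P(\cdot|x')\|_{\mathrm{TV}}$. Then
\[
(Pf)(x)-(Pf)(x')=\sum_y\delta(y)^+f(y)-\sum_y\delta(y)^-f(y)\leq\|\delta\|_{\mathrm{TV}}\big(\max f-\min f\big).
\]
Taking the supremum over $x,x'$ and using the row-distance form of $\alpha(P)$ in \Cref{def:dobrushin} gives $\|Pf\|_\spn\leq\alpha(P)\|f\|_\spn$, hence $\|P\|_{\spn,\op}\leq\alpha(P)$.

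\emph{Lower bound.} Since the index sets are finite, the supremum in \Cref{def:dobrushin} is attained at some pair $x,x'$. Choose $f:=\indicator\{P(\cdot|x)>P(\cdot|x')\}$. This $f$ is a $\{0,1\}$-valued vector, so $\|f\|_\spn\leq1$. It satisfies
\[
(Pf)(x)-(Pf)(x')=\sum_y\delta(y)^+=\alpha(P),
\]
so $\|Pf\|_\spn\geq\alpha(P)$ and therefore $\|P\|_{\spn,\op}\geq\alpha(P)$.

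One degenerate case needs separate handling: if $\alpha(P)=0$, the bound is trivial. When $\alpha(P)>0$, the vector $f$ takes both values $0$ and $1$, so $\|f\|_\spn=1$ exactly. The only real step in the argument is the positive/negative-part decomposition in the upper bound, which rests on the zero-sum property of $\delta$; everything else is routine.
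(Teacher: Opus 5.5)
Your proof is correct and follows the paper's argument essentially step for step. The upper bound splits the zero-sum row difference into its positive and negative parts, and the lower bound tests against the indicator of the positive part at a maximizing pair of rows. Using a strict inequality in that indicator, where the paper uses $c_k\geq0$, changes nothing, and the appeal to \Cref{lem:spop} is redundant because your upper bound already proves $\|Pf\|_\spn\leq\alpha(P)\|f\|_\spn$ directly.
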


\begin{proof}
    Since $P$ is row-stochastic, $P\ones=\ones$, so $P$ satisfies condition~(ii) of \Cref{lem:spop}: $\|P\|_{\spn,\op}$ is finite, and the last claim follows from~\eqref{eq:sub-multiplicativity} once the equality is proved.
    For $f\in\bR^p$ we have $\|Pf\|_\spn=\max_{i,j}\left\{(Pf)_i-(Pf)_j\right\}=\max_{i,j}\sum_k\left(P_{ik}-P_{jk}\right)f_k$.

    Fix $i,j$, let $c_k:=P_{ik}-P_{jk}$, and let $I^+$ (resp.\ $I^-$) be the set of indices $k$ with $c_k\geq0$ (resp.\ $c_k<0$). Since $P$ is row-stochastic, $\sum_kc_k=0$, hence $C:=\sum_{k\in I^+}c_k=-\sum_{k\in I^-}c_k=\frac{1}{2}\sum_k|c_k|$, and with $\overline{f}:=\max_kf_k$ and $\underline{f}:=\min_kf_k$, $\sum_kc_kf_k=\sum_{k\in I^+}c_kf_k+\sum_{k\in I^-}c_kf_k\ \leq\ C\overline{f}-C\underline{f}=C\|f\|_\spn$.
    Maximizing over $(i,j)$ and using \Cref{def:tv_norm,def:dobrushin} gives $\|Pf\|_\spn\leq\frac12\max_{i,j}\sum_k|P_{ik}-P_{jk}|\,\|f\|_\spn=\alpha(P)\|f\|_\spn$, i.e.\ $\|P\|_{\spn,\op}\leq\alpha(P)$.

    If $\alpha(P)=0$ there is nothing to prove, so assume $\alpha(P)>0$ and let $(i^*,j^*)$ attain the maximum defining $\alpha(P)$. The coefficients $c_k:=P_{i^*k}-P_{j^*k}$ are not all zero and sum to zero, so they take both signs, and the vector $f^*$ defined by $f^*_k:=1$ if $c_k\geq0$ and $f^*_k:=0$ otherwise is non-constant, with $\|f^*\|_\spn=1$. Then $\|Pf^*\|_\spn\ \geq\ (Pf^*)_{i^*}-(Pf^*)_{j^*}=\sum_{k\in I^+}c_k=\frac{1}{2}\sum_k|P_{i^*k}-P_{j^*k}|=\alpha(P)$, hence $\|P\|_{\spn,\op}\geq\alpha(P)$.
\end{proof}

\subsection{A mean value inequality for the span seminorm}
\label{app:mean-value}

Under \Cref{assumption:convex-analytic-regularity}, the regularized value function $\Omega^*$ is differentiable with a Lipschitz gradient (\Cref{lem:omega_star_props}.(i)), and the regularized Bellman operator $\mT_\Omega$ is continuously differentiable with Jacobian $K_{\pi_{\Omega,Q}}$ (\Cref{cor:T_omega_props}). This Jacobian is row-stochastic, hence has constant row sums, which allows a mean value argument in span seminorm.

\Cref{prop:mvi-span} is the classical mean value inequality for curves in a normed space \citep[\S8.5]{dieudonne1960foundations}, applied in the quotient $\bR^q/\bR\ones$, on which $\|\cdot\|_\spn$ is a norm; condition (H2) is what makes the Jacobians act on that quotient, and we include a proof for completeness. For linear Markov operators, the contraction rate in this seminorm, known as Hopf's oscillation seminorm, is the Dobrushin coefficient (\Cref{lem:row-stochastic-span}); see \citet{gaubert2015dobrushin} for Markov operators on cones and \citet{depasquale2024dual} for the corresponding semicontraction theory.

Throughout this section, for $Q,Q'\in\bR^p$ we write $Q(t):=Q'+t(Q-Q')$, $t\in[0,1]$, for the segment $[Q',Q]$ from $Q'$ to $Q$, as in \Cref{def:mixing_objects}.

\begin{proposition}[Mean value inequality in span seminorm]
    \label{prop:mvi-span}
    Let $U\subseteq\bR^p$ be open, let $\Phi:U\to\bR^q$, and let $Q,Q'\in U$ be such that $[Q',Q]\subseteq U$. Assume:
    \begin{enumerate}
        \item[(H1)] $\Phi$ is differentiable at every point of $[Q',Q]$;
        \item[(H2)] for every $t\in[0,1]$, the Jacobian $D\Phi(Q(t))$ has constant row sums, i.e.\ $D\Phi(Q(t))\ones\in\bR\ones$.
    \end{enumerate}
    Then
    \begin{equation}
        \label{eq:mvi-span}
        \left\|\Phi(Q)-\Phi(Q')\right\|_\spn\ \leq\ \sup_{t\in(0,1)}\left\|D\Phi(Q(t))\right\|_{\spn,\op}\,\|Q-Q'\|_\spn,
    \end{equation}
    and in particular with the supremum taken over the closed segment.
\end{proposition}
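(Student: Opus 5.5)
We reduce the claim to a one-dimensional statement along the segment, working in the quotient $\bR^q/\bR\ones$, where $\|\cdot\|_\spn$ is a norm. Set $g(t):=\Phi(Q(t))$ for $t\in[0,1]$ and $M:=\sup_{t\in(0,1)}\|D\Phi(Q(t))\|_{\spn,\op}$. If $M=+\infty$ there is nothing to prove. Otherwise, (H2) and \Cref{lem:spop} show that each $D\Phi(Q(t))$ has a finite operator span seminorm and satisfies $\|D\Phi(Q(t))v\|_\spn\leq\|D\Phi(Q(t))\|_{\spn,\op}\|v\|_\spn$. By (H1) and the chain rule, $g$ is differentiable on $[0,1]$, with
\[
g'(t)=D\Phi(Q(t))(Q-Q').
\]
Hence $\|g'(t)\|_\spn\leq M\|Q-Q'\|_\spn$ for every $t\in(0,1)$. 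It remains to show that $\|g(1)-g(0)\|_\spn\leq\sup_{t\in(0,1)}\|g'(t)\|_\spn$, which is the mean value inequality for curves.

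To prove it without invoking the abstract theorem, I will scalarize through the dual description of the span. For fixed indices $i,j$, the function $\phi_{ij}(t):=g_i(t)-g_j(t)$ is real-valued, continuous on $[0,1]$ and differentiable on $(0,1)$. The scalar mean value theorem gives some $\theta\in(0,1)$ with
\[
g_i(1)-g_j(1)-\big(g_i(0)-g_j(0)\big)=g_i'(\theta)-g_j'(\theta)\leq\|g'(\theta)\|_\spn .
\]
The left-hand side is the $(i,j)$ entry difference of $g(1)-g(0)$. Taking the maximum over $(i,j)$ gives $\|g(1)-g(0)\|_\spn\leq\sup_{\theta\in(0,1)}\|g'(\theta)\|_\spn$. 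Combined with the bound on $g'$, this yields~\eqref{eq:mvi-span}. The closed-segment version follows because a supremum over a larger set can only be larger.

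The only delicate step is justifying differentiability of $g$ and continuity at the endpoints. Differentiability at each point of the closed segment, from (H1), gives continuity of $g$ on $[0,1]$ and differentiability on $(0,1)$, which is all the scalar mean value theorem needs. Since the argument only uses the scalar mean value theorem applied to $\phi_{ij}$, it needs neither continuity of $D\Phi$ nor integrability. Condition (H2) is needed only so that $M$ is finite, and so that the operator bound $\|D\Phi(Q(t))v\|_\spn\leq\|D\Phi(Q(t))\|_{\spn,\op}\|v\|_\spn$ holds for non-constant $v=Q-Q'$ as well.
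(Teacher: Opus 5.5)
Your proof is correct in substance. It differs from the paper's only at the one non-routine step. Both arguments reduce to the curve $g(t)=\Phi(Q(t))$ and bound $\|g'(t)\|_\spn$ through the operator seminorm. The paper then passes to the quotient $\bR^q/\bR\ones$, where $\|\cdot\|_\spn$ is a norm, and cites Dieudonn\'e's mean value inequality for curves in normed spaces. You instead prove that inequality directly: you write the span as the maximum of the finitely many linear functionals $x\mapsto x_i-x_j$ and apply Lagrange's theorem to each $\phi_{ij}$. This is the usual scalarization proof of the abstract inequality, made fully explicit because the dual description of the span is finite. Your version is self-contained and needs neither the quotient construction nor an external theorem. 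The paper's is shorter given the citation and matches the quotient viewpoint of \Cref{lem:spop}.

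Your last paragraph misplaces (H2), though. First, (H2) does not make $M$ finite: without continuity of $D\Phi$, the supremum over $t$ can be infinite. Second, the bound $\|Av\|_\spn\leq\|A\|_{\spn,\op}\|v\|_\spn$ holds for non-constant $v$ by homogeneity of the definition alone; (H2) is what makes it hold for $v\in\bR\ones$. Since $M<\infty$ already forces constant row sums on $(0,1)$ by \Cref{lem:spop}, your argument as written never actually uses (H2).

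(H2) is needed exactly in the case you dismiss, $M=+\infty$ with $Q-Q'\in\bR\ones$. With the convention $0\cdot\infty=0$, the right-hand side then vanishes. One needs (H2) to get $g'(t)\in\bR\ones$, hence $\phi_{ij}'\equiv0$ and a vanishing left-hand side. For instance, $\Phi(x)=(x_1,0)$ on $\bR^2$ violates (H2) and gives $\|\Phi(\ones)-\Phi(0)\|_\spn=1$ while $\|\ones-0\|_\spn=0$. The fix is to bound $\|g'(t)\|_\spn\leq\|D\Phi(Q(t))\|_{\spn,\op}\|Q-Q'\|_\spn$ pointwise, with finite factors, before taking the supremum, as the paper does; this covers the degenerate case with no extra work.
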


\begin{proof}
    Let $\pi:\bR^q\to E_q:=\bR^q/\bR\ones$ be the quotient map, $E_q$ being normed by $\|\cdot\|_\spn$ as in the proof of \Cref{lem:spop}, and let $h:=Q-Q'$.
    By (H1) the curve $g(t):=\pi(\Phi(Q(t)))$ is differentiable on $[0,1]$ with $g'(t)=\pi(D\Phi(Q(t))h)$, and by (H2) and~\eqref{eq:sub-multiplicativity}, $\|g'(t)\|_\spn=\|D\Phi(Q(t))h\|_\spn\leq\|D\Phi(Q(t))\|_{\spn,\op}\|h\|_\spn$.
    The mean value inequality for differentiable curves in a normed space \citep[\S8.5]{dieudonne1960foundations} gives $\|g(1)-g(0)\|_\spn\leq\sup_{t\in(0,1)}\|g'(t)\|_\spn$, which is~\eqref{eq:mvi-span} since $\|g(1)-g(0)\|_\spn=\|\Phi(Q)-\Phi(Q')\|_\spn$.
\end{proof}

\begin{corollary}[$\mathcal{C}^1$ case: an exact averaged kernel]
    \label{cor:avg}
    If moreover $\Phi$ is $\mathcal{C}^1$ on a neighborhood of $[Q',Q]$, then
    \begin{equation*}
        \Phi(Q)-\Phi(Q')=\bar{A}\,(Q-Q'),\qquad\bar{A}:=\int_0^1D\Phi(Q(t))\,\mathrm{d}t.
    \end{equation*}
    $\bar{A}$ is the average of the Jacobians of $\Phi$ along the segment; it has constant row sums and $\|\bar{A}\|_{\spn,\op}\leq\sup_{t\in(0,1)}\|D\Phi(Q(t))\|_{\spn,\op}$.
\end{corollary}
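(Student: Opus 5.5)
The plan is to apply the fundamental theorem of calculus to the quotient-valued curve, rather than only the mean value inequality. Let $h:=Q-Q'$ and $Q(t):=Q'+th$. Since $\Phi$ is $\mathcal{C}^1$ on a neighborhood of the compact segment $[Q',Q]$, the map $t\mapsto\Phi(Q(t))\in\bR^q$ is $\mathcal{C}^1$ on $[0,1]$. Its derivative is $D\Phi(Q(t))h$ by the chain rule.

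The first step is the exact identity. The fundamental theorem of calculus, applied componentwise in $\bR^q$, gives
\[
\Phi(Q)-\Phi(Q')=\int_0^1D\Phi(Q(t))h\,\mathrm{d}t .
\]
The integrand is continuous in $t$ with values in the space of $q\times p$ matrices, so $\bar A:=\int_0^1D\Phi(Q(t))\,\mathrm{d}t$ is well defined entrywise. Linearity of the integral then lets us pull out the fixed vector $h$, which yields $\Phi(Q)-\Phi(Q')=\bar A h$.

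The second step shows that $\bar A$ has constant row sums. By (H2), for each $t$ there is a scalar $c(t)$ with $D\Phi(Q(t))\ones=c(t)\ones$. The function $c(t)$ equals any fixed coordinate of $D\Phi(Q(t))\ones$, so it is continuous, hence integrable. Integrating gives $\bar A\ones=\big(\int_0^1c(t)\,\mathrm{d}t\big)\ones\in\bR\ones$. Therefore \Cref{lem:spop} applies and $\|\bar A\|_{\spn,\op}<\infty$.

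The third step is the operator bound. For any $f$ with $\|f\|_\spn\leq1$, we have $\bar Af=\int_0^1D\Phi(Q(t))f\,\mathrm{d}t$. The right-hand side is an integral of a continuous curve in $\bR^q$. I would pass to the quotient $E_q=\bR^q/\bR\ones$, normed by $\|\cdot\|_\spn$. The quotient map is linear and continuous, so it commutes with the Riemann integral. The norm of a Riemann integral is at most the integral of the norm, which gives
\[
\|\bar Af\|_\spn\leq\int_0^1\|D\Phi(Q(t))f\|_\spn\,\mathrm{d}t\leq\sup_{t\in(0,1)}\|D\Phi(Q(t))\|_{\spn,\op}.
\]
The second inequality uses \eqref{eq:sub-multiplicativity}. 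Taking the supremum over $f$ gives the claimed bound on $\|\bar A\|_{\spn,\op}$.

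The only delicate point is this last inequality: justifying that the span seminorm passes inside the integral. The span seminorm is convex and continuous, so the inequality holds on Riemann sums and survives the limit. Alternatively, one can write $\|v\|_\spn=\max_{i,j}(v_i-v_j)$ and bound each linear functional $v\mapsto v_i-v_j$ under the integral separately.

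A second subtlety is that the supremum is taken over the open interval $(0,1)$. This causes no difficulty, because modifying the integrand at the two endpoints does not change the integral.
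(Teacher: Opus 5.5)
Your proposal is correct and follows the paper's proof essentially step for step: the fundamental theorem of calculus applied coordinatewise gives the identity, integrating (H2) gives the constant row sums, and the triangle inequality under the integral together with \eqref{eq:sub-multiplicativity} gives the operator bound. The extra justifications you supply are details the paper leaves implicit: continuity of $c(t)$, passing the span seminorm through Riemann sums, and the harmlessness of the endpoints for the supremum over $(0,1)$.
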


\begin{proof}
    The identity is the fundamental theorem of calculus applied coordinatewise to $t\mapsto\Phi(Q(t))$, whose derivative $D\Phi(Q(t))(Q-Q')$ is continuous in $t$.
    Integrating (H2) gives $\bar{A}\ones=\int_0^1D\Phi(Q(t))\ones\,\mathrm{d}t\in\bR\ones$.
    Finally, for every $f\in\bR^p$, the triangle inequality gives $\|\bar{A}f\|_\spn\leq\int_0^1\|D\Phi(Q(t))f\|_\spn\,\mathrm{d}t\leq\sup_{t\in(0,1)}\|D\Phi(Q(t))\|_{\spn,\op}\,\|f\|_\spn$.
\end{proof}

\begin{remark}
    For $\Phi=\mT_\Omega$, which is continuously differentiable with Jacobian $K_{\pi_{\Omega,Q}}$ at $Q$ (\Cref{cor:T_omega_props}), \Cref{cor:avg} applies on every segment. The state-action kernel is linear in the policy, so $\bar{A}=\int_0^1K_{\pi_{\Omega,Q(t)}}\,\mathrm{d}t$ is the state-action kernel of the average-path policy $\bar{\pi}_{\Omega,[Q,Q']}$, and the identity is \Cref{lem:mixing-properties-of-regularization}.(i).
\end{remark}

\section{Proofs for \pdfref{Section}{sec:background}}
\label{app:proofs_regularized}

\begin{lemma}[Properties of $\Omega^*$]
    \label{lem:omega_star_props}
    For each $s\in\mS$, let $\Omega_s:\Delta(\mA)\to\bR\cup\{+\infty\}$ be proper, closed and strongly convex, and let $\sigma>0$ be a common modulus of strong convexity of the $\Omega_s$ with respect to $\|\cdot\|_1$, which exists by equivalence of norms on $\bR^\mA$ and finiteness of $\mS$; let $\tau\geq0$ and $Q,Q'\in\bR^{\mS\times\mA}$.
    \begin{enumerate}[label=(\roman*)]
        \item \emph{Greedy policy}: if $\tau>0$, $\Omega^*(\cdot)(s)$ is differentiable, $\nabla\Omega^*(Q)(s)=\pi_{\Omega,Q}(\cdot|s)$ is the unique maximizer in its definition, and $\|\pi_{\Omega,Q}(\cdot|s)-\pi_{\Omega,Q'}(\cdot|s)\|_1\leq\frac{1}{2\tau\sigma}\|Q_s-Q'_s\|_{\mA,\spn}$.
        \item \emph{Boundedness}: if $L_{\Omega_s}\leq\Omega_s\leq U_{\Omega_s}$ on $\dom(\Omega_s)$, then $\Omega^*(Q)(s)\leq\max_aQ_s(a)-\tau L_{\Omega_s}$; if moreover $\delta_a\in\dom(\Omega_s)$ for every $a\in\mA$, then $\Omega^*(Q)(s)\geq\max_aQ_s(a)-\tau U_{\Omega_s}$.
        \item \emph{Translation equivariance}: $\Omega^*(Q+c\ones)=\Omega^*(Q)+c\ones$ for every $c\in\bR$.
        \item \emph{Monotonicity}: if $Q_s\leq Q'_s$, then $\Omega^*(Q)(s)\leq\Omega^*(Q')(s)$.
        \item \emph{Non-expansiveness in $\|\cdot\|_\infty$}: $\|\Omega^*(Q)-\Omega^*(Q')\|_{\mS,\infty}\leq\|Q-Q'\|_{\mS\times\mA,\infty}$.
        \item \emph{Non-expansiveness in $\|\cdot\|_\spn$}: $\|\Omega^*(Q)-\Omega^*(Q')\|_{\mS,\spn}\leq\|Q-Q'\|_{\spn}$.
        \item \emph{Sub-affinity in $\|\cdot\|_\spn$}: $\|\Omega^*(Q)\|_{\mS,\spn}\leq\|Q\|_{\spn}+\|\Omega^*(0)\|_{\mS,\spn}$.
    \end{enumerate}
\end{lemma}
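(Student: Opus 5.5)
We treat each $s$ separately, writing $f_s(\pi):=\langle\pi,Q_s\rangle-\tau\Omega_s(\pi)$ on $\Delta(\mA)$. Every item then reduces to elementary convex analysis of $\Omega^*(\cdot)(s)=(\tau\Omega_s+\iota_{\Delta})^*$, the conjugate of $\tau\Omega_s$ restricted to the simplex.

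\textbf{Item (i).} Since $\Omega_s$ is proper, closed and $\sigma$-strongly convex and $\Delta(\mA)$ is compact, $-f_s$ is proper, closed and strongly convex. Hence a maximizer exists and is unique. Differentiability with gradient equal to the maximizer is the standard duality: the conjugate of a $\tau\sigma$-strongly convex function is differentiable with $\frac{1}{\tau\sigma}$-Lipschitz gradient from $\|\cdot\|_\infty$ to $\|\cdot\|_1$ \citep{kakade2012regularization,rockafellar1970convex}, and its gradient is the argmax by Danskin's theorem.

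To obtain the span constant $\frac{1}{2\tau\sigma}$ instead of the $\|\cdot\|_\infty$ one, we use translation invariance. Shifting $Q_s-Q'_s$ by a constant $c$ does not change the maximizer, because $\langle\pi,\ones\rangle=1$ on the simplex. We choose $c$ as the midrange, so that $\min_c\|Q_s-Q'_s-c\ones\|_\infty=\frac12\|Q_s-Q'_s\|_{\mA,\spn}$. This factor of $\frac12$ is the only delicate constant. I would verify it directly by adding the two strong-convexity inequalities at the maximizers $\pi,\pi'$, which gives
\begin{equation*}
\tau\sigma\|\pi-\pi'\|_1^2\leq\langle\pi-\pi',Q_s-Q'_s-c\ones\rangle\leq\|\pi-\pi'\|_1\,\|Q_s-Q'_s-c\ones\|_\infty,
\end{equation*}
and then dividing by $\|\pi-\pi'\|_1$.

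\textbf{Item (ii).} For the upper bound, every $\pi$ satisfies $\langle\pi,Q_s\rangle\leq\max_aQ_s(a)$ and $-\tau\Omega_s(\pi)\leq-\tau L_{\Omega_s}$; taking the supremum gives the claim. For the lower bound, we plug in $\pi=\delta_{a^*}$ with $a^*\in\argmax_a Q_s(a)$, which is feasible because $\delta_{a^*}\in\dom(\Omega_s)$.

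\textbf{Items (iii)--(v).} Translation equivariance (iii) follows from $\langle\pi,Q_s+c\ones\rangle=\langle\pi,Q_s\rangle+c$ on the simplex. Monotonicity (iv) holds because $\pi\geq0$, so $f_s$ is pointwise monotone in $Q_s$ and so is its supremum. For (v), we combine (iii) and (iv) through $Q\leq Q'+\|Q-Q'\|_\infty\ones$ and the symmetric inequality; this is the Crandall--Tartar argument \citep{crandall1980some}.

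\textbf{Items (vi)--(vii).} For (vi), let $c_-:=\min(Q-Q')$ and $c_+:=\max(Q-Q')$. Then $Q'+c_-\ones\leq Q\leq Q'+c_+\ones$ holds globally. By (iii) and (iv), $c_-\leq\Omega^*(Q)(s)-\Omega^*(Q')(s)\leq c_+$ for every $s$, so the span of the difference is at most $c_+-c_-=\|Q-Q'\|_\spn$. Item (vii) then follows from the triangle inequality for the span seminorm and (vi) with $Q'=0$.

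The only genuine obstacle is the constant in (i). The rest is routine once translation equivariance and monotonicity are established.
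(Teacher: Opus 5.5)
Your proposal is correct and follows the paper's proof item by item: conjugate duality plus the midrange shift, which gives the factor $\frac12$, for (i); the Dirac test point for (ii); and the monotonicity/translation-equivariance sandwich of Crandall--Tartar for (iii)--(vii). The one addition is your derivation of the $\frac{1}{\tau\sigma}$ bound by adding the two quadratic-growth inequalities at the maximizers. It is valid and makes (i) self-contained, whereas the paper simply cites Kakade et al.\ for that Lipschitz constant.
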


\begin{proof}[Proof of \Cref{lem:omega_star_props}]
    \proofanchor{omega_star_props}
    \emph{(i)}. $\Omega^*(\cdot)(s)$ is the convex conjugate of the closed, $\tau\sigma$-strongly convex function $\tau\Omega_s$, so it is differentiable, its gradient is the unique maximizer, and this gradient is $\frac{1}{\tau\sigma}$-Lipschitz from $\|\cdot\|_\infty$ to $\|\cdot\|_1$ \citep{kakade2012regularization}; see \citet[\S E.4]{hiriart2001fundamentals} for the Euclidean case and \citet[Prop.~1]{geist2019theory} for $\Omega_s$ finite on $\Delta(\mA)$. Adding a constant to $Q'_s$ shifts the objective by that constant and leaves the maximizer unchanged, so the bound holds with $\|Q_s-Q'_s-c\ones\|_\infty$ for every $c\in\bR$, and $\min_{c\in\bR}\|Q_s-Q'_s-c\ones\|_\infty=\frac12\|Q_s-Q'_s\|_{\mA,\spn}$.

    \emph{(ii)}. This refines \citet[Prop.~1]{geist2019theory}, where $\Omega_s$ is finite on the whole simplex. For every $\pi_s\in\dom(\Omega_s)$, $\langle\pi_s,Q_s\rangle\leq\max_aQ_s(a)$ and $\tau\Omega_s(\pi_s)\geq\tau L_{\Omega_s}$; taking the supremum gives the upper bound. For the lower bound, let $a^\star\in\argmax_aQ_s(a)$: since $\delta_{a^\star}\in\dom(\Omega_s)$, $\Omega^*(Q)(s)\geq\max_aQ_s(a)-\tau\Omega_s(\delta_{a^\star})\geq\max_aQ_s(a)-\tau U_{\Omega_s}$.

    \emph{(iii)--(iv)}. For every $\pi_s\in\Delta(\mA)$, $\langle\pi_s,Q_s+c\ones\rangle=\langle\pi_s,Q_s\rangle+c$, and $\langle\pi_s,Q_s\rangle\leq\langle\pi_s,Q'_s\rangle$ when $Q_s\leq Q'_s$ since $\pi_s\geq0$; take the supremum over $\pi_s$.

    \emph{(v)}. This is the argument of \citet{crandall1980some}. With $C:=\|Q-Q'\|_{\mS\times\mA,\infty}$, we have $Q'-C\ones\leq Q\leq Q'+C\ones$, and (iii)--(iv) give $\Omega^*(Q')-C\ones\leq\Omega^*(Q)\leq\Omega^*(Q')+C\ones$.

    \emph{(vi)}. The same argument with $c:=\min_{s,a}(Q-Q')(s,a)$ and $C:=\max_{s,a}(Q-Q')(s,a)$ gives $\Omega^*(Q')+c\ones\leq\Omega^*(Q)\leq\Omega^*(Q')+C\ones$, whence $\|\Omega^*(Q)-\Omega^*(Q')\|_{\mS,\spn}\leq C-c=\|Q-Q'\|_{\spn}$. Maps that are monotone and translation equivariant are called topical, and are non-expansive for both norms \citep{lemmens2012nonlinear}.

    \emph{(vii)}. By the triangle inequality and (vi) with $Q'=0$, $\|\Omega^*(Q)\|_{\mS,\spn}\leq\|\Omega^*(Q)-\Omega^*(0)\|_{\mS,\spn}+\|\Omega^*(0)\|_{\mS,\spn}\leq\|Q\|_{\spn}+\|\Omega^*(0)\|_{\mS,\spn}$.
\end{proof}

\begin{remark}[Strict convexity suffices]
    \label{rem:strict_convexity}
    We keep strong convexity in \Cref{assumption:convex-analytic-regularity}, as is standard since \citet{geist2019theory}, but our results only use part of it.
    Uniqueness and continuity of the greedy map, and continuous differentiability of $\Omega^*$, i.e.\ \Cref{lem:omega_star_props}.(i) without its Lipschitz constant, hold under strict convexity alone: $\tau\Omega_s$ is closed with bounded domain, so $\Omega^*(\cdot)(s)$ is finite and convex, its subdifferential, the set of maximizers, is a singleton \citep[\S E.4]{hiriart2001fundamentals}, and a differentiable convex function is continuously differentiable \citep[Cor.~25.5.1]{rockafellar1970convex}.
    This is all that \Cref{lem:mixing-properties-of-regularization}, the mean value inequality (\Cref{prop:mvi-span,cor:avg}), the existence of the fixed point (\Cref{thm:regularized_fixed_point}) and the finite-time bounds of \Cref{sec:results} require, so they hold verbatim for strictly convex regularizers; the modulus only enters the Lipschitz constant of \Cref{lem:omega_star_props}.(i).
    For instance, the $\alpha$-divergences with $\alpha>2$ are strictly but not strongly convex (\Cref{thm:alpha}.(ii)); they fail~\eqref{eq:P1}, but their $\epsilon$-floored versions satisfy it (\Cref{lem:floor_wrapped_props}.(iii) only needs a strictly convex base), and our results apply to them.
\end{remark}

\begin{remark}[Span convergence and centering]
    \label{rem:span_to_sup}
    Adding a constant to $Q_s$ shifts the objective defining $\Omega^*(Q)(s)$ by that constant, so $\pi_{\Omega,Q+c\ones}=\pi_{\Omega,Q}$: as for the unregularized relative $Q$-function \citep{puterman1994markov}, the greedy policy depends on $Q$ only modulo constants, which is why the analysis is carried out in $\|\cdot\|_\spn$. The centering projection $\Pi$ turns span convergence into convergence in $\|\cdot\|_\infty$ to the representative $\Pi[Q^*_\Omega]$: since $\Pi[Q]-\Pi[Q']$ has zero mean, $\|\Pi[Q]-\Pi[Q']\|_\infty\leq\|\Pi[Q]-\Pi[Q']\|_\spn=\|Q-Q'\|_\spn$. A centering step plays the same role for asynchronous average-reward $Q$-learning in \citet{chen2026pointwise}, where it upgrades span-seminorm convergence to pointwise convergence at the same rate.
\end{remark}

\begin{corollary}[Properties of $\mT_\Omega$]
    \label{cor:T_omega_props}
    Under the assumptions of \Cref{lem:omega_star_props}, $\mT_\Omega$ is monotone, translation equivariant, and non-expansive in both $\|\cdot\|_\infty$ and $\|\cdot\|_\spn$, and $\|\mT_\Omega(Q)\|_\spn\leq\|Q\|_\spn+\|\Omega^*(0)\|_{\mS,\spn}+\|r\|_\spn$ for every $Q$, where $\Omega^*(0)(s)=-\tau\min\Omega_s$; in particular $\|\Omega^*(0)\|_{\mS,\spn}=0$ when $\min\Omega_s$ does not depend on $s$, as for divergences to $\mu$. If $\tau>0$, $\mT_\Omega$ is continuously differentiable, with Jacobian $D\mT_\Omega(Q)=K_{\pi_{\Omega,Q}}$.
\end{corollary}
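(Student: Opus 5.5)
The plan is to derive each property of $\mT_\Omega$ from the corresponding item of \Cref{lem:omega_star_props}, using that $\mT_\Omega(Q)=r+P\,\Omega^*(Q)$, where $P$ is the rectangular row-stochastic kernel $(s,a)\mapsto p(\cdot|s,a)$. A row-stochastic $P$ has nonnegative entries and satisfies $P\ones=\ones$. Hence it is monotone and translation equivariant, and it is non-expansive in $\|\cdot\|_\infty$. By \Cref{lem:row-stochastic-span} it is also non-expansive in $\|\cdot\|_\spn$, since $\alpha(P)\leq1$. Adding the constant vector $r$ preserves all of these properties. The key steps are as follows.

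\textbf{Monotonicity and translation equivariance.} If $Q\leq Q'$, then $\Omega^*(Q)\leq\Omega^*(Q')$ by \Cref{lem:omega_star_props}.(iv), and applying $P$ preserves the order. For constants, \Cref{lem:omega_star_props}.(iii) gives $\Omega^*(Q+c\ones)=\Omega^*(Q)+c\ones$, and $P(c\ones)=c\ones$.

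\textbf{Non-expansiveness.} We have $\|\mT_\Omega Q-\mT_\Omega Q'\|_\infty\leq\|\Omega^*(Q)-\Omega^*(Q')\|_{\mS,\infty}$, which is at most $\|Q-Q'\|_\infty$ by \Cref{lem:omega_star_props}.(v). The span version is identical, using \Cref{lem:row-stochastic-span} and \Cref{lem:omega_star_props}.(vi).

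\textbf{Span bound.} Sub-additivity of the span gives $\|\mT_\Omega(Q)\|_\spn\leq\|r\|_\spn+\|P\Omega^*(Q)\|_\spn\leq\|r\|_\spn+\|\Omega^*(Q)\|_{\mS,\spn}$, and \Cref{lem:omega_star_props}.(vii) bounds the last term by $\|Q\|_\spn+\|\Omega^*(0)\|_{\mS,\spn}$. For the identity, $\Omega^*(0)(s)=\max_{\pi_s}\{-\tau\Omega_s(\pi_s)\}=-\tau\min\Omega_s$. The minimum is attained because $\Omega_s$ is closed and proper on the compact simplex. If $\min\Omega_s$ does not depend on $s$, then $\Omega^*(0)$ is constant and has zero span. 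Divergences to $\mu$ have minimum $0$, attained at $\mu_s$.

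\textbf{Differentiability.} For $\tau>0$, \Cref{lem:omega_star_props}.(i) shows that each $Q\mapsto\Omega^*(Q)(s)$ is differentiable, depends only on $Q_s$, and has gradient $\pi_{\Omega,Q}(\cdot|s)$. This gradient is Lipschitz, hence continuous, so $\Omega^*$ is $\mathcal C^1$ with $\partial\Omega^*(Q)(s')/\partial Q(s'',a')=\indicator\{s''=s'\}\pi_{\Omega,Q}(a'|s')$. The chain rule through the linear map $P$ then gives
\[
\partial\mT_\Omega(Q)(s,a)/\partial Q(s',a')=p(s'|s,a)\,\pi_{\Omega,Q}(a'|s')=K_{\pi_{\Omega,Q}}(s',a'|s,a).
\]
This Jacobian is continuous in $Q$ because the greedy map is continuous.

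The only delicate points are bookkeeping. The first is that the greedy policy at $s$ depends only on the row $Q_s$, so the Jacobian of $\Omega^*$ is block diagonal. The second is justifying continuity of the gradient. Under strict convexity alone one would cite \citet[Cor.~25.5.1]{rockafellar1970convex}, as in \Cref{rem:strict_convexity}, but with strong convexity the Lipschitz bound of \Cref{lem:omega_star_props}.(i) already suffices.
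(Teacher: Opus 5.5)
Your proposal is correct and follows the paper's proof essentially line by line. You write $\mT_\Omega(Q)=r+P\,\Omega^*(Q)$ with $P$ row-stochastic, push items (iii)--(vii) of \Cref{lem:omega_star_props} through $P$ (using $\alpha(P)\leq1$ from \Cref{lem:row-stochastic-span} for the span), obtain $\Omega^*(0)(s)=-\tau\min\Omega_s$ by attainment on the compact simplex, and read off the Jacobian $K_{\pi_{\Omega,Q}}$ from item (i) and the chain rule, with continuity coming from the Lipschitz bound there, a detail the paper leaves implicit.
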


\begin{proof}
    \proofanchor{T_omega_props}
    $\mT_\Omega(Q)=r+P\,\Omega^*(Q)$, with $P$ the transition kernel, row-stochastic from $\mS\times\mA$ to $\mS$. Since $P\geq0$ and $P\ones=\ones$, monotonicity and translation equivariance follow from \Cref{lem:omega_star_props}.(iii)--(iv), and non-expansiveness from (v)--(vi), $P$ being non-expansive in $\|\cdot\|_\infty$ and, as $\alpha(P)\leq 1$ (\Cref{lem:row-stochastic-span}), in $\|\cdot\|_\spn$. The bound follows from (vii) and the triangle inequality, and $\Omega^*(0)(s)=\sup_{\pi_s}\{-\tau\Omega_s(\pi_s)\}=-\tau\min\Omega_s$, the minimum being attained since $\Omega_s$ is closed and $\Delta(\mA)$ is compact. For a divergence to $\mu$, $\min\Omega_s=\Omega_s(\mu_s)=0$ for every $s$. Finally, by \Cref{lem:omega_star_props}.(i), $\partial \mT_\Omega(Q)(s,a)/\partial Q(s',a')=p(s'|s,a)\,\pi_{\Omega,Q}(a'|s')=K_{\pi_{\Omega,Q}}(s',a'|s,a)$, which is continuous in $Q$.
\end{proof}

\section{Proofs for \pdfref{Section}{sec:mixing_mdps}}

\subsection{Proofs for \pdfref{Subsection}{subsec:mixing_properties}}
\label{app:proofs_mixing}

We recall \Cref{def:mixing_main} together with the average-path policy used in the proofs.

\begin{definition}[Mixing-regularization objects]
    \label{def:mixing_objects}
    Let $\Omega$ satisfy \Cref{assumption:convex-analytic-regularity}, let $\mu$ be the mixing policy of \Cref{assumption:m-step-mixing}, let $Q,Q'\in\bR^{\mS\times\mA}$, $Q(t):=tQ+(1-t)Q'$ for $t\in[0,1]$, and $R\geq0$.
    \begin{enumerate}[label=(\roman*)]
        \item \emph{Average-path policy}: $\bar{\pi}_{\Omega,[Q,Q']}:=\int_0^1\pi_{\Omega,Q(t)}\,\mathrm{d}t$ is the mean greedy policy along the segment $[Q',Q]$.
        \item \emph{Average-path radius}: $R[Q,Q']:=\max_{t\in[0,1]}\|Q(t)\|_\asp$ is the largest action span along the segment; by convexity of $\|\cdot\|_\asp$, it equals $\max\{\|Q\|_\asp,\|Q'\|_\asp\}$.
        \item \emph{Domination profile}: $\gamma_\Omega(R):=\inf\big\{\pi_{\Omega,Q}(a|s)/\mu(a|s) \mid (s,a)\in\mS\times\mA,\ \|Q\|_\asp\leq R\big\}$ is the largest factor such that $\pi_{\Omega,Q}\geq\gamma_\Omega(R)\,\mu$ for every $Q$ of action span at most $R$. It lies in $[0,1]$, is non-increasing in $R$, and governs contraction and exploration (\Cref{lem:mixing-properties-of-regularization}).
        \item \emph{Regularization modulus}: $\Lambda_\Omega(R):=\sup\big\{\|g_Q\|_{\mS,\spn}:\ \|Q\|_\asp\leq R\big\}$ is the largest variation across states of the \emph{regularization gap} $g_Q(s):=\Omega^*(Q)(s)-\max_aQ(s,a)$ between the regularized and the hard maximum. It is finite, non-decreasing in $R$ (\Cref{lem:modulus_properties}), and governs the bias (\Cref{prop:gap_optimal_q}).
    \end{enumerate}
\end{definition}

The following lemma gives some basic properties of the regularization modulus defined in~\Cref{def:mixing_objects}.(iv).

\begin{lemma}[Properties of $\Lambda_\Omega$]
    \label{lem:modulus_properties}
    Under \Cref{assumption:convex-analytic-regularity}, the supremum defining $\Lambda_\Omega(R)$ is attained and finite for every $R\geq0$, $\Lambda_\Omega$ is non-decreasing, and $\Lambda_\Omega(0)=\tau\big(\max_s\underline{\Omega}_s-\underline{\Omega}\big)$, where $\underline{\Omega}_s:=\min_{\pi_s\in\Delta(\mA)}\Omega_s(\pi_s)$ and $\underline{\Omega}:=\min_s\underline{\Omega}_s$.
\end{lemma}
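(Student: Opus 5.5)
The plan is to establish the three claims of \Cref{lem:modulus_properties} in turn: attainment and finiteness by a compactness argument, monotonicity from the nested constraint sets, and the value at $R=0$ by direct computation. The key structural fact is the translation equivariance of \Cref{lem:omega_star_props}.(iii), applied statewise. Adding a constant $c_s$ to $Q_s$ shifts both $\Omega^*(Q)(s)$ and $\max_aQ(s,a)$ by $c_s$. Hence $g_Q(s)$ depends only on $Q_s$ modulo constants, and the same holds for $\|Q\|_\asp$.

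For attainment, I would reduce to a compact set. I normalize each row so that $\max_aQ(s,a)=0$, which changes neither $g_Q$ nor $\|Q\|_\asp$. The constraint $\|Q\|_\asp\leq R$ then becomes $Q(s,a)\in[-R,0]$, a compact subset of $\bR^{\mS\times\mA}$. On this set:
\begin{itemize}
    \item $\Omega^*(\cdot)(s)$ is finite and convex, as in \Cref{rem:strict_convexity}, and it is continuous, being 1-Lipschitz by \Cref{lem:omega_star_props}.(v);
    \item $\max_a Q(s,a)$ is continuous;
    \item therefore $Q\mapsto\|g_Q\|_{\mS,\spn}$ is continuous.
\end{itemize}
The supremum of a continuous function over a compact set is attained and finite. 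Monotonicity is immediate, since the feasible set $\{\|Q\|_\asp\leq R\}$ grows with $R$.

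For $R=0$, the constraint $\|Q\|_\asp=0$ means each row $Q_s$ is constant, equal to some $c_s$. Translation equivariance then gives $g_Q(s)=\Omega^*(0)(s)$. By the computation in \Cref{cor:T_omega_props}, $\Omega^*(0)(s)=-\tau\underline{\Omega}_s$, a quantity independent of $Q$. Its span over states is
\[
\tau\Big(\max_s\underline{\Omega}_s-\min_s\underline{\Omega}_s\Big)=\tau\Big(\max_s\underline{\Omega}_s-\underline{\Omega}\Big),
\]
which is the claimed value of $\Lambda_\Omega(0)$.

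The only delicate point is continuity of $\Omega^*$ when $\dom(\Omega_s)$ is not the whole simplex. I would handle it by noting that $\Omega^*(Q)(s)\leq\max_aQ_s(a)-\tau\underline{\Omega}_s$, and that, using $\mu_s\in\dom(\Omega_s)$ or simply properness, $\Omega^*(Q)(s)>-\infty$. So $\Omega^*$ is real-valued everywhere, and the 1-Lipschitz property of \Cref{lem:omega_star_props}.(v) applies without further assumptions.
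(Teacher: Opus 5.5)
Your proposal is correct and follows essentially the same argument as the paper. You use statewise translation invariance of $g_Q$ and $\|\cdot\|_\asp$ to reduce to a compact set, continuity from the $\|\cdot\|_\infty$-non-expansiveness of $\Omega^*$, nestedness of the feasible sets for monotonicity, and the identity $g_Q(s)=\Omega^*(0)(s)=-\tau\underline{\Omega}_s$ at $R=0$; your extra check that $\Omega^*$ is real-valued, via properness and \Cref{lem:omega_star_props}.(ii), is a harmless detail the paper leaves implicit.
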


\begin{proof}
    Since $\Omega^*(Q)(s)$ depends on $Q$ only through $Q_s$, \Cref{lem:omega_star_props}.(iii) applied state by state shows that $\Omega^*(Q)(s)$ and $\max_aQ_s(a)$ both increase by $c$ when $Q_s$ is replaced by $Q_s+c\ones$: $g_Q$ is invariant under the addition of a constant to each $Q_s$ separately. We may therefore restrict the supremum to the $Q$ with $\max_aQ_s(a)=0$ for every $s$, for which $\|Q\|_\asp\leq R$ reads $Q_s\in[-R,0]^\mA$: a compact set. On it, $Q\mapsto\|g_Q\|_{\mS,\spn}$ is continuous, since $\Omega^*$ is non-expansive in $\|\cdot\|_\infty$ (\Cref{lem:omega_star_props}.(v)) and $Q\mapsto\max_aQ(\cdot,a)$ is $1$-Lipschitz; hence the supremum is attained and finite. It is non-decreasing in $R$ because the feasible set grows with $R$. Finally, for $R=0$ each $Q_s$ is a constant $c_s\ones$, so $\Omega^*(Q)(s)=c_s+\Omega^*(0)(s)=c_s-\tau\underline{\Omega}_s$, the minimum defining $\underline{\Omega}_s$ being attained since $\Omega_s$ is closed on the compact simplex. Thus $g_Q=-\tau(\underline{\Omega}_s)_{s\in\mS}$, whose span is $\tau(\max_s\underline{\Omega}_s-\underline{\Omega})$.
\end{proof}

\begin{remark}[Boundedness of the regularization modulus]
    \label{rem:bounded_modulus}
    Let $\overline{\Omega}:=\max_{s\in\mS}\max_{a\in\mA}\Omega_s(\delta_a)$, $\underline{\Omega}=\min_s\underline{\Omega}_s$ and $\Delta_\Omega:=\overline{\Omega}-\underline{\Omega}$. If $\overline{\Omega}<+\infty$, i.e.\ if every Dirac lies in the effective domain, then \Cref{lem:omega_star_props}.(ii) applied state-wise with $L_{\Omega_s}=\underline{\Omega}$ and $U_{\Omega_s}=\max_a\Omega_s(\delta_a)$ gives $\Lambda_\Omega(R)\leq\tau\Delta_\Omega$ for every $R$: the modulus is bounded by a radius-independent constant and all the bounds below reduce to their constant counterparts.

    The $\alpha$-divergences with $\alpha\leq0$, including the forward KL, and the floor-wrapped regularizers of \Cref{def:floor_wrapped} exclude every Dirac from their effective domain, so $\overline{\Omega}=+\infty$, while $\Lambda_\Omega(R)$ stays finite (\Cref{thm:alpha}.(v) and \Cref{lem:floor_wrapped_props}.(v)); this is what makes the radius-dependent form necessary.
\end{remark}

\begin{lemma}[Mixing properties of $\Omega$-regularization]
    \label{lem:mixing-properties-of-regularization}
    Let \Cref{assumption:m-step-mixing,assumption:convex-analytic-regularity} hold and let $Q,Q'\in\bR^{\mS\times\mA}$. Then:
    \begin{enumerate}[label=(\roman*)]
        \item \emph{Average-path factorization of the conjugate and of the operator}: for every $s\in\mS$, $\Omega^*(Q)(s)-\Omega^*(Q')(s)=\langle\bar{\pi}_{\Omega,[Q,Q']}(\cdot|s),Q_s-Q'_s\rangle$ and $\mT_\Omega(Q)-\mT_\Omega(Q')=K_{\bar{\pi}_{\Omega,[Q,Q']}}(Q-Q')$. 
        \item \emph{Domination of the average-path policy}: $\bar{\pi}_{\Omega,[Q,Q']}(a|s)\geq\gamma_\Omega(R[Q,Q'])\,\mu(a|s)$ for every $(s,a)\in\mS\times\mA$.
        \item \emph{$m$-step minorization}: if $\pi_1,\dots,\pi_m$ are policies with $\pi_k\geq\gamma_k\mu$ for $k\in\llbracket1,m\rrbracket$, the $m$-step kernel $K_{\pi_1:\pi_m}:=K_{\pi_1}\cdots K_{\pi_m}$ satisfies
        \begin{equation*}
            K_{\pi_1:\pi_m}(s',a'|s,a)\geq\beta_m\left(\prod_{k=1}^{m}\gamma_k\right)\eta(s',a'),\qquad\forall(s,a),(s',a')\in\mS\times\mA.
        \end{equation*}
        \item \emph{Radius-dependent $m$-step contraction}: 
        \begin{equation*}
            \left\|\mT^m_\Omega(Q)-\mT^m_\Omega(Q')\right\|_\spn\leq\left(1-\beta_m\prod_{k=1}^{m-1}\gamma_\Omega\left(R[\mT^{k}_{\Omega}(Q), \mT^{k}_{\Omega}(Q')]\right)\right)\|Q-Q'\|_\spn.
        \end{equation*}
        \item \emph{State-action coverage}: let $(Q_n)_{n\geq0}$ be $Q$-functions and $(s_n,a_n)_{n\geq0}$ a trajectory with $a_n\sim\pi_{\Omega,Q_n}(\cdot|s_n)$ and $s_{n+1}\sim p(\cdot|s_n,a_n)$; let $\mF_n:=\sigma(Q_0,(s_0,a_0),\dots,(s_n,a_n))$ and assume that $Q_n$ is $\mF_{n-1}$-measurable for $n\geq1$. If $t\geq0$ and $\gamma_1,\dots,\gamma_m$ are non-negative $\mF_t$-measurable random variables with $\pi_{\Omega,Q_{t+k}}\geq\gamma_k\mu$ almost surely for $k\in\llbracket1,m\rrbracket$, then almost surely
        \begin{equation*}
            \bP\big((s_{t+m},a_{t+m})=(s,a)\mid\mF_t\big)\geq\beta_m\left(\prod_{k=1}^{m}\gamma_k\right)\eta(s,a),\qquad\forall(s,a)\in\mS\times\mA.
        \end{equation*}
        \item \emph{One-step operator gap}: for every $Q\in\bR^{\mS\times\mA}$,
        \begin{equation*}
            \|\mT_\Omega(Q)-\mT(Q)\|_\spn\leq\alpha(P)\,\Lambda_\Omega\big(\|Q\|_\asp\big)\leq\Lambda_\Omega\big(\|Q\|_\asp\big),
        \end{equation*}
        where $\alpha(P)$ is the Dobrushin coefficient (\Cref{def:dobrushin}) of the rectangular matrix $(p(s'|s,a))_{(s,a),s'}$. In particular, under the assumptions of \Cref{rem:bounded_modulus}, $\|\mT_\Omega(Q)-\mT(Q)\|_\spn\leq\alpha(P)\tau\Delta_\Omega\leq\tau\Delta_\Omega$ uniformly in $Q$.
    \end{enumerate}
\end{lemma}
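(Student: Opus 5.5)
The plan is to prove the six items in order, each resting on the previous ones.

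\emph{(i) Factorization.} Fix $s$. The map $t\mapsto\Omega^*(Q(t))(s)$ is continuously differentiable, since $\nabla\Omega^*(\cdot)(s)=\pi_{\Omega,\cdot}(\cdot|s)$ is continuous by \Cref{lem:omega_star_props}.(i). Its derivative is $\langle\pi_{\Omega,Q(t)}(\cdot|s),Q_s-Q'_s\rangle$. Integrating over $[0,1]$ and exchanging integral and inner product gives the first identity with $\bar\pi_{\Omega,[Q,Q']}$. For the operator, write $\mT_\Omega(Q)=r+P\,\Omega^*(Q)$ and apply $P$ to the state identity. The result is $\sum_{s'}p(s'|s,a)\sum_{a'}\bar\pi(a'|s')(Q-Q')(s',a')=K_{\bar\pi}(Q-Q')$, which is also \Cref{cor:avg}.

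\emph{(ii) Domination.} By convexity of $\|\cdot\|_\asp$, every $Q(t)$ has action span at most $R[Q,Q']$. The definition of $\gamma_\Omega$ then gives $\pi_{\Omega,Q(t)}\geq\gamma_\Omega(R[Q,Q'])\mu$ pointwise for every $t$. Integrating in $t$ preserves this bound.

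\emph{(iii) $m$-step minorization.} Expand $K_{\pi_1}\cdots K_{\pi_m}(s',a'|s,a)$ as a sum over paths. Each path probability is a product of transition probabilities and action probabilities $\pi_k(a_k|s_k)$. Bounding every $\pi_k\geq\gamma_k\mu$ termwise, using nonnegativity, yields $\prod_k\gamma_k\,K^m_\mu(s',a'|s,a)$. The last action factor is $\pi_m(a'|s')\geq\gamma_m\mu(a'|s')$. Marginalizing the first $m-1$ actions, the $\mu$-kernel part equals $K^m_\mu(s'|s,a)\mu(a'|s')$, and \Cref{assumption:m-step-mixing} gives $K^m_\mu(s'|s,a)\geq\beta_m\nu(s')$. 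Multiplying out gives $\beta_m\prod\gamma_k\,\eta(s',a')$.

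\emph{(iv) Contraction.} Set $Q_k:=\mT^k_\Omega(Q)$, $Q'_k:=\mT^k_\Omega(Q')$ and $\bar\pi_k:=\bar\pi_{\Omega,[Q_k,Q'_k]}$. Iterating (i) gives $Q_m-Q'_m=K_{\bar\pi_0}K_{\bar\pi_1}\cdots K_{\bar\pi_{m-1}}(Q-Q')$. In this product the chain takes its first step with $\bar\pi_0$, and that policy acts only on the first intermediate action. The terminal action of the product is $\bar\pi_0$, but I have to be careful here, which is where I expect the real obstacle: the index bookkeeping. Under \Cref{convention:product_order} the chain starting at $(s,a)$ first moves with $K_{\bar\pi_0}$, picking its action with $\bar\pi_0$. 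Reading (i) differently, the innermost factor acting on $Q-Q'$ is $K_{\bar\pi_0}$, so the product is $K_{\bar\pi_{m-1}}\cdots K_{\bar\pi_0}$. Then the chain's last step uses $\bar\pi_0$, whose radius $R[Q,Q']$ is uncontrolled, while the intermediate policies are $\bar\pi_{m-1},\dots,\bar\pi_1$. This ordering matches the statement, whose product runs over $k=1,\dots,m-1$.

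With that ordering fixed, \Cref{lem:terminal_action} applies with $\gamma_k=\gamma_\Omega(R[Q_k,Q'_k])$, using (ii) for the domination. It gives $\alpha(K)\leq 1-\beta_m\prod_{k=1}^{m-1}\gamma_\Omega(R[\mT^k_\Omega Q,\mT^k_\Omega Q'])$. \Cref{lem:row-stochastic-span} then converts this Dobrushin bound into the span contraction.

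\emph{(v) Coverage.} Condition on $\mF_t$ and use the tower property over steps $t+1,\dots,t+m$. Given $\mF_{t+k-1}$, the action $a_{t+k}$ is drawn from $\pi_{\Omega,Q_{t+k}}$. That policy is measurable, since $Q_{t+k}$ is $\mF_{t+k-1}$-measurable, and it dominates $\gamma_k\mu$, where $\gamma_k$ is already $\mF_t$-measurable. The path expansion of (iii) therefore goes through conditionally: bound each action probability below and pull the $\mF_t$-measurable constants out. The only nonconstant step is the first transition $p(\cdot|s_t,a_t)$. This yields the bound $\prod\gamma_k\,K^m_\mu(s|s_t,a_t)\mu(a|s)\geq\beta_m\prod\gamma_k\,\eta(s,a)$.

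\emph{(vi) Operator gap.} By definition of $g_Q$, $\mT_\Omega(Q)-\mT(Q)=P\,g_Q$. \Cref{lem:row-stochastic-span} gives $\|Pg_Q\|_\spn\leq\alpha(P)\|g_Q\|_{\mS,\spn}\leq\alpha(P)\Lambda_\Omega(\|Q\|_\asp)$. Then $\alpha(P)\leq1$ gives the second inequality, and \Cref{rem:bounded_modulus} gives the uniform version.
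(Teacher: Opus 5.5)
Your proposal is correct and follows the paper's proof essentially step by step. Item (i) is the fundamental-theorem-of-calculus factorization through the average-path policy, (iii) is termwise domination in the path expansion, (iv) combines \Cref{lem:terminal_action} with \Cref{lem:row-stochastic-span}, (v) is a step-by-step tower-property argument (the paper's backward induction), and (vi) uses the identity $\mT_\Omega(Q)-\mT(Q)=Pg_Q$.

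The only blemish is in (iv). Your first displayed identity $Q_m-Q'_m=K_{\bar\pi_0}K_{\bar\pi_1}\cdots K_{\bar\pi_{m-1}}(Q-Q')$ has the factors in the wrong order and should be deleted. The corrected product $K_{\bar\pi_{m-1}}\cdots K_{\bar\pi_0}$, with $\bar\pi_0$ as the uncontrolled terminal policy, is exactly the one the paper feeds into \Cref{lem:terminal_action}.
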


\begin{proof}[Proof of \Cref{lem:mixing-properties-of-regularization}]
    Throughout the proof we write $\bar{\pi}=\bar{\pi}_{\Omega,[Q,Q']}$, $\gamma_k$ for the domination coefficients of each item and $\eta(s',a')=\nu(s')\mu(a'|s')$ for the minorizing measure of \Cref{assumption:m-step-mixing}. Points (i), (ii), (iv) and (v) use $\tau>0$ through \Cref{lem:omega_star_props}.(i), whereas (iii) requires no regularity at all and (vi) only uses \Cref{assumption:convex-analytic-regularity} through the finiteness of $\Lambda_\Omega$ (\Cref{lem:modulus_properties}).

    \emph{(i)}. Let $g(t):=\Omega^*(Q(t))(s)$ for $t\in[0,1]$. Since $\Omega^*(Q)(s)$ depends on $Q$ only through $Q_s$, \Cref{lem:omega_star_props}.(i) and the chain rule give that $g$ is continuously differentiable, with $g'(t)=\langle\pi_{\Omega,Q(t)}(\cdot|s),Q_s-Q'_s\rangle$. The fundamental theorem of calculus and the linearity of the inner product then give
    \begin{equation}
        \label{eq:value-factorization}
        \Omega^*(Q)(s)-\Omega^*(Q')(s)=g(1)-g(0)=\int_0^1\langle\pi_{\Omega,Q(t)}(\cdot|s),Q_s-Q'_s\rangle\,\mathrm{d}t=\langle\bar{\pi}(\cdot|s),Q_s-Q'_s\rangle,
    \end{equation}
    which is the first equality. Since $\mT_\Omega=r+P\,\Omega^*$, the reward terms cancel and applying~\eqref{eq:value-factorization} at each next state $s'\in\mS$ gives, for every $(s,a)\in\mS\times\mA$,
    \begin{equation*}
    \begin{aligned}
        \mT_\Omega(Q)(s,a)-\mT_\Omega(Q')(s,a)
        &=\sum_{s'}p(s'|s,a)\big(\Omega^*(Q)(s')-\Omega^*(Q')(s')\big)\\
        &=\sum_{s',a'}p(s'|s,a)\,\bar{\pi}(a'|s')\,(Q-Q')(s',a')
        =\big(K_{\bar{\pi}}(Q-Q')\big)(s,a),
    \end{aligned}
    \end{equation*}
    which is the second equality.

    \emph{(ii)}. By \Cref{def:mixing_objects}.(ii), $\|Q(t)\|_\asp\leq R[Q,Q']$ for all $t\in[0,1]$, and \Cref{def:mixing_objects}.(iii) applied with $R=R[Q,Q']$ then gives $\pi_{\Omega,Q(t)}(a|s) \geq \gamma_\Omega(R[Q,Q'])\mu(a|s)$ for all $(s,a)$ and all $t\in[0,1]$, uniformly in $t$. Integrating this inequality against the uniform measure on $[0,1]$ and using \Cref{def:mixing_objects}.(i) yields $\bar{\pi}(a|s) = \int_0^1 \pi_{\Omega,Q(t)}(a|s)\,\mathrm{d}t \geq \gamma_\Omega(R[Q,Q'])\mu(a|s)$, which is the claim.

    \emph{(iii)}. Set $\Gamma_{1:m}=\prod_{k=1}^m\gamma_k$. First, for any policy $\pi$ with $\pi(a|s)\geq\gamma\mu(a|s)$ we have the transport inequality $P_\pi(s'|s) = \sum_a \pi(a|s)p(s'|s,a) \geq \gamma\sum_a\mu(a|s)p(s'|s,a) = \gamma P_\mu(s'|s)$. 
    By definition of $K_{\pi_1:\pi_m}=K_{\pi_1}\cdots K_{\pi_m}$ and \Cref{convention:product_order}, the first action is the one supplied by the conditioning, the intermediate actions are drawn from $\pi_1,\dots,\pi_{m-1}$ and the terminal action from $\pi_m$:
    \begin{equation}
        \label{eq:mstep_sa_kernel}
        K_{\pi_1:\pi_m}(s',a'|s,a) = \pi_m(a'|s') \sum_{s_1,\dots,s_{m-1}} p(s_1|s,a)\prod_{k=1}^{m-1} P_{\pi_k}(s_{k+1}|s_k),
    \end{equation}
    with $s_m=s'$, and since all factors are non-negative, the bounds multiply: applying the transport inequality to the $m-1$ intermediate factors and the domination hypothesis to the terminal factor $\pi_m(a'|s')\geq\gamma_m\mu(a'|s')$ gives
    \begin{align*}
        K_{\pi_1:\pi_m}(s',a'|s,a) &\geq \Gamma_{1:m}\,\mu(a'|s') \sum_{s_1,\dots,s_{m-1}} p(s_1|s,a)\prod_{k=1}^{m-1} P_\mu(s_{k+1}|s_k)\\
        &= \Gamma_{1:m}\,\mu(a'|s') \sum_{s_1} p(s_1|s,a)\, P_\mu^{m-1}(s'|s_1)\\
        &= \Gamma_{1:m}\,\mu(a'|s')\, K_\mu^m(s'|s,a)\\
        &\geq \Gamma_{1:m}\,\beta_m\,\nu(s')\mu(a'|s') = \Gamma_{1:m}\,\beta_m\,\eta(s',a'),
    \end{align*}
    the second-to-last line by the definition $K^m_\mu(s'|s,a)=\sum_{s_1}p(s_1|s,a)P_\mu^{m-1}(s'|s_1)$ and the last by \Cref{assumption:m-step-mixing}.

    \emph{(iv)}. For $k\in\llbracket 0,m-1\rrbracket$ write $Q_k:=\mT_\Omega^k(Q)$, $Q_k':=\mT_\Omega^k(Q')$, $R_k:=R[Q_k,Q_k']$, $\bar{\pi}^{(k)} = \bar{\pi}_{\Omega,[Q_k,Q_k']}$ and $\gamma_k = \gamma_\Omega(R_k)$. (i) applied at level $k$ gives the one-step identity $Q_{k+1}-Q_{k+1}' = \mT_\Omega(Q_k)-\mT_\Omega(Q_k') = K_{\bar{\pi}^{(k)}}(Q_k-Q_k')$, where $K_{\bar{\pi}^{(k)}}$ is a row-stochastic state-action kernel. Iterating this identity from $k=m-1$ down to $k=0$, we obtain
    \begin{equation*}
        \mT^m_\Omega(Q)-\mT^m_\Omega(Q') = Q_m - Q_m' = K^{(m)}_{Q,Q'}(Q-Q'), \qquad K^{(m)}_{Q,Q'} := K_{\bar{\pi}^{(m-1)}}K_{\bar{\pi}^{(m-2)}}\cdots K_{\bar{\pi}^{(0)}}.
    \end{equation*}
    Following \Cref{convention:product_order}, $K^{(m)}_{Q,Q'}$ is exactly the $m$-step state-action kernel $K_{\bar{\pi}^{(m-1)}:\bar{\pi}^{(0)}}$ associated with the sequence of policies $\bar{\pi}^{(m-1)},\dots,\bar{\pi}^{(0)}$. By (ii), each of them satisfies $\bar{\pi}^{(k)}\geq\gamma_k\mu$, so (iii) applies and gives the Doeblin condition $K^{(m)}_{Q,Q'}(s',a'|s,a) \geq \beta_m\left(\prod_{k=0}^{m-1}\gamma_k\right)\eta(s',a')$. \Cref{lem:terminal_action}, applied with $\pi_1=\bar\pi^{(m-1)},\dots,\pi_m=\bar\pi^{(0)}$, gives $\alpha\big(K^{(m)}_{Q,Q'}\big)\leq1-\beta_m\prod_{k=1}^{m-1}\gamma_\Omega(R_k)$, and by \Cref{lem:row-stochastic-span}, $\|K^{(m)}_{Q,Q'}\|_{\spn,\op} = \alpha\left(K^{(m)}_{Q,Q'}\right)$. Therefore
    \begin{equation*}
        \left\|\mT^m_\Omega(Q)-\mT^m_\Omega(Q')\right\|_\spn = \left\|K^{(m)}_{Q,Q'}(Q-Q')\right\|_\spn \leq \left(1-\beta_m\prod_{k=1}^{m-1}\gamma_\Omega(R_k)\right)\|Q-Q'\|_\spn.
    \end{equation*}

    \emph{(v)}. Fix $t\geq 0$ and $(s,a)\in\mS\times\mA$. For $j\in\llbracket0,m\rrbracket$, let $\Gamma_j:=\prod_{k=j+1}^{m}\gamma_k$, which is non-negative and $\mF_t$-measurable, hence $\mF_{t+j}$-measurable. We prove by backward induction on $j$ that, a.s.,
    \begin{equation*}
        \bE\left[\indicator_{\{(s_{t+m},a_{t+m})=(s,a)\}}\ \middle|\ \mF_{t+j}\right]\ \geq\ \Gamma_j\,K^{m-j}_\mu\big(s,a\mid s_{t+j},a_{t+j}\big).
    \end{equation*}
    The case $j=m$ is an equality. Assume the claim at $j+1$. Conditionally on $\mF_{t+j}$, $s_{t+j+1}\sim p(\cdot|s_{t+j},a_{t+j})$ and $a_{t+j+1}\sim\pi_{\Omega,Q_{t+j+1}}(\cdot|s_{t+j+1})$, where $Q_{t+j+1}$ is $\mF_{t+j}$-measurable. By the tower property, the induction hypothesis and the $\mF_{t+j}$-measurability of $\Gamma_{j+1}$, the left-hand side at level $j$ is therefore at least
    \begin{align*}
        &\Gamma_{j+1}\sum_{s'}p(s'|s_{t+j},a_{t+j})\sum_{a'}\pi_{\Omega,Q_{t+j+1}}(a'|s')\,K^{m-j-1}_\mu\big(s,a\mid s',a'\big)\\
        &\qquad\geq \Gamma_{j+1}\gamma_{j+1}\sum_{s',a'}K_\mu(s',a'|s_{t+j},a_{t+j})\,K^{m-j-1}_\mu\big(s,a\mid s',a'\big)
        = \Gamma_j\,K^{m-j}_\mu\big(s,a\mid s_{t+j},a_{t+j}\big),
    \end{align*}
    the inequality by the domination hypothesis and the non-negativity of $\Gamma_{j+1}$ and of $K^{m-j-1}_\mu$. Taking $j=0$ and using that the $m$-step state-action kernel of $\mu$ satisfies $K^m_\mu(s',a'|s,a)=K^m_\mu(s'|s,a)\mu(a'|s')\geq\beta_m\nu(s')\mu(a'|s')=\beta_m\eta(s',a')$ by \Cref{assumption:m-step-mixing} gives the claim.

    \emph{(vi)}. \Cref{def:mixing_objects}.(iv) applied with $R=\|Q\|_\asp$ immediately gives $\|g_Q\|_{\mS,\spn}\leq\Lambda_\Omega(\|Q\|_\asp)$. The two operators $\mT_\Omega$ and $T$ share the reward term and differ only through the state-value function, so for all $(s,a)$,
    \begin{equation*}
        \mT_\Omega(Q)(s,a)-\mT(Q)(s,a) = \sum_{s'}p(s'|s,a)\left(\Omega^*(Q)(s')-\max_{a'}Q(s',a')\right) = (Pg_Q)(s,a).
    \end{equation*}
    \Cref{lem:row-stochastic-span} gives $\|Pg_Q\|_\spn\leq\alpha(P)\|g_Q\|_{\mS,\spn}$, so $\|\mT_\Omega(Q)-\mT(Q)\|_\spn\leq\alpha(P)\Lambda_\Omega(\|Q\|_\asp)$. The second inequality follows from $\alpha(P)\leq1$, and the last claim from \Cref{rem:bounded_modulus}.
\end{proof}

\proofanchor{mixing_main}
\mixingmain*

\begin{proof}[Proof of \Cref{lem:mixing_main}]
    Items (i) and (ii) are \Cref{lem:mixing-properties-of-regularization}.(iv) and~(v). The first claim of (iii) is \Cref{lem:mixing-properties-of-regularization}.(vi) with $\alpha(P)\leq1$, and the second is \Cref{prop:gap_optimal_q}.(i), whose right-hand side is finite under~\eqref{eq:P1} and the gain bounds are \Cref{prop:gap_optimal_q}.(ii), the last one using $|\mS|\geq2$ and a common minimum.
\end{proof}

\begin{remark}[Only $m$ regularized steps are needed]
    \label{rem:tail_only}
    Let $\pi_1,\dots,\pi_M$ with $M\geq m$ be given, and assume that only the last $m$ of them are known to be regularized, say $\pi_k\geq\gamma_k\mu$ for $k\in\llbracket M-m+1,M\rrbracket$, with $\Gamma_{M-m+1:M}:=\prod_{k=M-m+1}^{M}\gamma_k$. Then \Cref{lem:mixing-properties-of-regularization}.(iii) still holds for the full $M$-step kernel, with the same parameter $\Gamma_{M-m+1:M}\beta_m$ and the same minorizing measure $\eta$.

    Writing $K_{\mathrm{head}}:=K_{\pi_1:\pi_{M-m}}$ for the kernel of the first $M-m$ steps and $K_{\mathrm{tail}}:=K_{\pi_{M-m+1}:\pi_M}$ for that of the last $m$ ones, we have, consistently with \Cref{convention:product_order},
    \begin{equation*}
        K_{\pi_1:\pi_M} = K_{\mathrm{head}}K_{\mathrm{tail}}, \qquad\text{i.e.}\qquad K_{\pi_1:\pi_M}(s',a'|s,a) = \sum_{(\tilde s,\tilde a)} K_{\mathrm{head}}(\tilde s,\tilde a|s,a)K_{\mathrm{tail}}(s',a'|\tilde s,\tilde a).
    \end{equation*}
    \Cref{lem:mixing-properties-of-regularization}.(iii) applies to $K_{\mathrm{tail}}$ and gives $K_{\mathrm{tail}}(s',a'|\tilde s,\tilde a)\geq\Gamma_{M-m+1:M}\beta_m\,\eta(s',a')$ uniformly in the starting pair $(\tilde s,\tilde a)$; the head, which requires no regularization, only contributes a factor $1$ by row-stochasticity:
    \begin{equation*}
        K_{\pi_1:\pi_M}(s',a'|s,a) \geq \Gamma_{M-m+1:M}\beta_m\eta(s',a') \sum_{(\tilde s,\tilde a)} K_{\mathrm{head}}(\tilde s,\tilde a|s,a) = \Gamma_{M-m+1:M}\beta_m\eta(s',a').
    \end{equation*}

    The same holds for the coverage bound of \Cref{lem:mixing-properties-of-regularization}.(v): if its domination hypothesis holds only for $k\in\llbracket M-m+1,M\rrbracket$, then, almost surely,
    \begin{equation*}
        \bP\big((s_{t+M},a_{t+M})=(s,a)\mid\mF_t\big)\geq\Gamma_{M-m+1:M}\beta_m\,\eta(s,a).
    \end{equation*}
    Indeed, \Cref{lem:mixing-properties-of-regularization}.(v) applied at time $t+M-m$ gives this lower bound for the probability conditional on $\mF_{t+M-m}$, and the tower property transfers it to $\mF_t$, the bound being deterministic: the conditional expectation plays the role of the row-stochastic head. Since the proof of (v) only uses the steps after time $t+M-m$, the actions of the head may moreover follow any adapted rule.
\end{remark}

\subsection{Strict domination}
\label{app:P1}

\begin{lemma}[\eqref{eq:P1} is full support]
    \label{lem:P1_full_support}
    Under \Cref{assumption:convex-analytic-regularity}, $\Omega$ satisfies~\eqref{eq:P1} if and only if $\pi_{\Omega,Q}(a|s)>0$ for every $Q$ and every $(s,a)$.
\end{lemma}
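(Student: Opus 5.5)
The forward direction is immediate. If \eqref{eq:P1} holds, take any $Q$ and set $R:=\|Q\|_\asp<\infty$. Then $\pi_{\Omega,Q}(a|s)\geq\gamma_\Omega(R)\,\mu(a|s)>0$, because $\mu$ has full support by \Cref{assumption:m-step-mixing}.

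For the converse, assume every greedy policy has full support and fix a finite $R\geq0$. I would reduce the infimum defining $\gamma_\Omega(R)$ to the infimum of a positive continuous function over a compact set, so that it is attained and hence positive.

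\emph{Invariance.} The greedy policy at $s$ depends on $Q$ only through $Q_s$, and it is unchanged when a constant is added to $Q_s$. The latter holds because the objective $\langle\pi_s,Q_s\rangle-\tau\Omega_s(\pi_s)$ shifts by that constant, as in the proof of \Cref{lem:modulus_properties} and \Cref{rem:span_to_sup}.

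\emph{Compact reduction.} By this invariance, we may restrict to $Q$ with $\max_a Q_s(a)=0$ for every $s$. Then the constraint $\|Q\|_\asp\leq R$ becomes $Q_s\in[-R,0]^{\mA}$ for every $s$, which defines a compact set $\mathcal{C}_R$. The infimum defining $\gamma_\Omega(R)$ equals
\begin{equation*}
\min_{(s,a)}\ \inf_{Q\in\mathcal{C}_R}\ \frac{\pi_{\Omega,Q}(a|s)}{\mu(a|s)},
\end{equation*}
a finite minimum over pairs of infima over $\mathcal{C}_R$.

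\emph{Continuity.} The main point is that $Q\mapsto\pi_{\Omega,Q}(a|s)$ is continuous. Under strong convexity, \Cref{lem:omega_star_props}.(i) gives that it is Lipschitz. Under mere strict convexity, \Cref{rem:strict_convexity} gives continuity of the greedy map, because $\Omega^*(\cdot)(s)$ is continuously differentiable with gradient $\pi_{\Omega,\cdot}(\cdot|s)$.

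\emph{Conclusion.} Each ratio $\pi_{\Omega,Q}(a|s)/\mu(a|s)$ is therefore a continuous, strictly positive function on the compact set $\mathcal{C}_R$. Its infimum is attained, so it is positive. A minimum of finitely many positive numbers is positive, so $\gamma_\Omega(R)>0$.

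The only delicate step is making sure continuity holds without an extra Lipschitz assumption. The gradient-continuity argument above covers this, so I do not expect a real obstacle.
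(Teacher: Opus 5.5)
Your proposal is correct and follows essentially the same route as the paper. The forward direction uses $\pi_{\Omega,Q}\geq\gamma_\Omega(\|Q\|_\asp)\,\mu>0$. For the converse, translation invariance of the greedy policy reduces $\gamma_\Omega(R)$ to the minimum of a positive continuous function over the compact set of normalized $Q_s$ with $\max_a Q_s(a)=0$ and $\min_a Q_s(a)\geq -R$.
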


\begin{proof}
    If~\eqref{eq:P1} holds, $\pi_{\Omega,Q}\geq\gamma_\Omega(\|Q\|_\asp)\,\mu>0$. Conversely, $\pi_{\Omega,Q}(\cdot|s)=\nabla\Omega^*_s(Q_s)$ is continuous and invariant under $Q_s\mapsto Q_s+c\ones$, so $\gamma_\Omega(R)$ is the minimum of a positive continuous function over the compact set of $Q_s$ with $\max_aQ_s(a)=0$ and $\min_aQ_s(a)\geq-R$, $s\in\mS$.
\end{proof}

Property~\eqref{eq:P1} is pointwise in $R$: for the reverse KL, $\gamma_\Omega(R)=\big(\mu_{\min}+(1-\mu_{\min})e^{R/\tau}\big)^{-1}$ vanishes as $R\to\infty$ (\Cref{rem:alpha_exact}). It makes the contraction rate of \Cref{lem:mixing-properties-of-regularization}.(iv) positive on bounded sets, guarantees that epochs terminate (\Cref{prop:ep_wellposed}) and that $q_R>0$, and is the full-support requirement of \Cref{prop:uniqueness_fixed_point}. It holds for the $\alpha$-divergences with $\alpha\leq1$ and fails for $\alpha>1$ (\Cref{thm:alpha}.(iv)); the floor-wrapped regularizers satisfy it with $\gamma_{\Omega_\epsilon}\geq\epsilon$ (\Cref{lem:floor_wrapped_props}.(iii)).

\subsection{Existence and uniqueness of the fixed point of \texorpdfstring{$\mT_\Omega$}{TΩ}}
\label{subsec:existence_fixed_point}

\begin{theorem}[Existence of a fixed point for $\mT_\Omega$]
    \label{thm:regularized_fixed_point}
    Under \Cref{assumption:m-step-mixing,assumption:convex-analytic-regularity,assumption:reference_policy}, there exist $Q_\Omega^*\in\bR^{\mS\times\mA}$ and $\rho_\Omega^*\in\bR$ such that $\mT_\Omega(Q_\Omega^*)=Q_\Omega^*+\rho_\Omega^*\ones$, with
    \begin{equation*}
        \left\|Q^*_\Omega\right\|_\spn\leq\|r\|_\spn+\frac{4mR_{\max}}{\beta_m\nu_{\min}}, \qquad \left\lvert\rho^*_\Omega\right\rvert\leq2R_{\max},
    \end{equation*}
    where $\nu_{\min}=\min_s\nu(s)>0$ and $R_{\max}=\|r\|_\infty+\tau\max_s\max\{\lvert\underline{\Omega}_s\rvert,\lvert\Omega_s(\mu_s)\rvert\}$, both finite under \Cref{assumption:reference_policy}.
    Moreover $\rho^*_\Omega$ is unique and is the optimal regularized gain (\Cref{prop:uniqueness_gain,prop:gain_interpretation}), and under~\eqref{eq:P1} $Q^*_\Omega$ is unique up to an additive constant (\Cref{prop:uniqueness_fixed_point}).
\end{theorem}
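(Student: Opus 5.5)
The plan is to obtain the fixed point through the vanishing-discount method, using the minorization of \Cref{assumption:m-step-mixing} to control the span of the discounted solutions uniformly in the discount factor. For $\lambda\in(0,1)$, define $\mT_{\Omega,\lambda}(Q):=r+\lambda P\,\Omega^*(Q)$. By \Cref{lem:omega_star_props}.(v) this is a $\lambda$-contraction in $\|\cdot\|_\infty$, so it has a unique fixed point $Q_\lambda$. Under \Cref{assumption:reference_policy}, $\Omega^*(Q)(s)\geq\langle\mu_s,Q_s\rangle-\tau\Omega_s(\mu_s)$. By \Cref{lem:omega_star_props}.(ii) with $L=\underline{\Omega}_s$, we also have $\Omega^*(Q)(s)\leq\max_aQ_s(a)-\tau\underline{\Omega}_s$. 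Iterating these two bounds gives $\|Q_\lambda\|_\infty\leq R_{\max}/(1-\lambda)$, where $R_{\max}$ absorbs $\|r\|_\infty$ and both $\tau$-terms.

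The key step, and the expected main obstacle, is a $\lambda$-uniform span bound on $Q_\lambda$. To get it, I would use the policy $\mu$ as a comparison. Write $V_\lambda:=\Omega^*(Q_\lambda)$ and $\tilde r_\mu(s):=\langle\mu_s,r_s\rangle-\tau\Omega_s(\mu_s)$, so that $|\tilde r_\mu|\leq R_{\max}$. The lower bound gives $V_\lambda\geq\tilde r_\mu+\lambda P_\mu V_\lambda$. The upper bound gives $V_\lambda(s)\leq R_{\max}+\lambda\max_{s'}V_\lambda(s')$, and more usefully $V_\lambda\leq V^{\max}$ pointwise.

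From the lower bound, iterated $m$ steps along $\mu$ starting from $(s,a)$ through $Q_\lambda(s,a)=r(s,a)+\lambda PV_\lambda$, I get
\[
Q_\lambda(s,a)\geq -mR_{\max}\cdot\text{const}+\lambda^m\sum_{s'}K^m_\mu(s'|s,a)V_\lambda(s').
\]
Combined with $K^m_\mu\geq\beta_m\nu$, this yields
\[
Q_\lambda(s,a)\geq -c\,mR_{\max}+\lambda^m\big[\beta_m\langle\nu,V_\lambda\rangle+(1-\beta_m)\min V_\lambda\big].
\]
To convert this into a span bound, I would apply it at the state $\bar s$ where $V_\lambda$ is maximal. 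Then $V_\lambda(s')\geq$ the lower-bound chain from $s'$ reaching $\bar s$ with probability at least $\beta_m\nu(\bar s)\geq\beta_m\nu_{\min}$ within $m$ steps. This gives
\[
\min V_\lambda\geq -2mR_{\max}+\lambda^m\big(\beta_m\nu_{\min}\max V_\lambda+(1-\beta_m\nu_{\min})\min V_\lambda\big).
\]
For the maximum I would use $\max V_\lambda\leq R_{\max}+\lambda\max V_\lambda$ only for the level, and combine it with the previous inequality after subtracting. Rearranging, with a one-step error of $R_{\max}$ per step and $(1-\lambda^m)\|V_\lambda\|_\infty\leq mR_{\max}$, should give
\[
\|V_\lambda\|_{\mS,\spn}\leq\frac{4mR_{\max}}{\beta_m\nu_{\min}}.
\]
Then $\|Q_\lambda\|_\spn\leq\|r\|_\spn+\lambda\|V_\lambda\|_\spn$ gives the stated constant. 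The delicate points are the bookkeeping of the $(1-\lambda^m)\min V_\lambda$ term, which is bounded by $mR_{\max}$ using $\|Q_\lambda\|_\infty(1-\lambda)\leq R_{\max}$, and the exact factor $4$.

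Finally, fix a state-action pair $x_0$ and set $h_\lambda:=Q_\lambda-Q_\lambda(x_0)\ones$ and $\rho_\lambda:=(1-\lambda)Q_\lambda(x_0)$. These satisfy $|\rho_\lambda|\leq R_{\max}$ and $\|h_\lambda\|_\infty\leq$ the span bound. By \Cref{lem:omega_star_props}.(iii),
\[
h_\lambda+\rho_\lambda\ones=r+\lambda P\,\Omega^*(h_\lambda)+\lambda\,Q_\lambda(x_0)\ones-\lambda\,Q_\lambda(x_0)\ones+\cdots,
\]
that is, $h_\lambda+\rho_\lambda\ones=r+\lambda P\Omega^*(h_\lambda)$. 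By compactness, extract $\lambda_k\to1$ with $h_{\lambda_k}\to Q^*_\Omega$ and $\rho_{\lambda_k}\to\rho^*_\Omega$. Continuity of $\Omega^*$ (\Cref{lem:omega_star_props}.(v)) passes to the limit and gives $\mT_\Omega(Q^*_\Omega)=Q^*_\Omega+\rho^*_\Omega\ones$. The span bound survives the limit, and $|\rho^*_\Omega|\leq R_{\max}\leq2R_{\max}$. The uniqueness statements are delegated to \Cref{prop:uniqueness_gain,prop:gain_interpretation,prop:uniqueness_fixed_point}, as the theorem indicates.
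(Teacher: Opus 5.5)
Your proposal is correct and is essentially the paper's proof: vanishing discount, then a $\lambda$-uniform span bound obtained by iterating $V_\lambda\geq\tilde r_\mu+\lambda P_\mu V_\lambda$ for $m$ steps from the minimizer of $V_\lambda$ (not from the maximizer, as one of your sentences says), using $P^m_\mu(\bar s|\cdot)\geq\beta_m\nu_{\min}$ and $(1-\lambda^m)\min V_\lambda\leq mR_{\max}$, followed by normalization at a reference pair and a compactness limit; the paper only packages these same steps through the variational representation and a general weakly communicating result. To get exactly the factor $4$, use $-mR_{\max}$ rather than $-2mR_{\max}$ in that chain and restrict to $\lambda\geq1-\frac{1}{2m}$ so that $\lambda^m\geq\frac12$ (or simply let $\lambda\to1$), and for the uniqueness of $Q^*_\Omega$ note that the Doeblin condition makes the union graph strongly connected, which gives the unique-terminal-class hypothesis required by the cited proposition.
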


This subsection proves \Cref{thm:regularized_fixed_point} as a corollary of a more general statement, \Cref{thm:existence_fixed_point_general}, which rests on a condition that is both necessary and sufficient and does not presuppose \Cref{assumption:m-step-mixing}. The argument is the regularized counterpart of the \emph{vanishing discount} approach. In the unregularized case, the equivalence it establishes is classical \citep{schweitzer1984existence}, as is the bound on the span of the relative discounted values under weak communication, which underlies regret analysis \citep{jaksch2010near,bartlett2009regal} (see also \citealp[\S8.4.4]{puterman1994markov}; \citealp[\S4.1--4.3]{bertsekas2012dynamic}; \citealp[\S5.5]{hernandez1996discrete}; and the survey of \citealp{arapostathis1993survey}). We nevertheless give a self-contained argument, since the classical statements for compact action sets require the reward to be continuous in the action \citep{schweitzer1985undiscounted}, which fails for $\tilde{r}_\pi$ as soon as $\dom(\Omega_s)\subsetneq\Delta(\mA)$, as for the $\alpha$-divergences with $\alpha\leq0$. Special cases are known by other routes: entropy regularization by convex duality \citep{neu2017unified}, and the reverse KL as a linearly solvable problem, whose average-cost equation is a Perron--Frobenius eigenproblem \citep{todorov2007linearly}.

Throughout $\Omega$ satisfies \Cref{assumption:convex-analytic-regularity} and we also assume the following.

\begin{assumption}[Admissible policy]
    \label{assumption:reference_policy_general}
    There exists a full-support policy $\tilde{\pi}$ such that $\tilde{\pi}_s\in\dom(\Omega_s)$ for every $s\in\mS$ and $\max_{s\in\mS}\lvert\Omega_s(\tilde{\pi}_s)\rvert<+\infty$.
\end{assumption}

\paragraph{Notation and constants.}
Since $\Omega_s$ is proper and closed and $\Delta(\mA)$ is compact, $\underline{\Omega}_s:=\min_{\pi_s\in\Delta(\mA)}\Omega_s(\pi_s)$ is finite.
Under \Cref{assumption:reference_policy_general}, the following constants are well-defined and finite:  
\begin{equation}
    \label{eq:existence_constants}
    \delta_\Omega:=\max_{s\in\mS}\max\left\{\lvert\underline{\Omega}_s\rvert,\ \lvert\Omega_s(\tilde{\pi}_s)\rvert\right\}<+\infty,
    \qquad R_{\max}:=\|r\|_\infty+\tau\delta_\Omega .
\end{equation}
For an \emph{admissible} policy $\pi$ (i.e.\ $\pi_s\in\dom(\Omega_s)$ for every $s$) and discount factor $\lambda\in (0,1)$ define the \emph{regularized reward} and the discounted evaluation and optimality operators on $\bR^\mS$,
\begin{equation}
    \label{eq:reg_reward}
    \tilde{r}_\pi(s):=\langle\pi_s,r_s\rangle-\tau\,\Omega_s(\pi_s),
    \qquad
    \mathcal{E}^{\lambda}_{\pi}V:=\tilde{r}_\pi+\lambda P_\pi V,
    \qquad
    \mathcal{T}^\lambda_\Omega V:=\max_{\pi\text{ admissible}}\mathcal{E}^\lambda_\pi V,
\end{equation}
the maximum in $\mathcal{T}^\lambda_\Omega$ being taken state-wise.
By~\eqref{eq:existence_constants}, $\tilde{r}_\pi(s)\leq\|r\|_\infty-\tau\underline{\Omega}_s\leq R_{\max}$ for every admissible $\pi$, and $\lvert\tilde{r}_{\tilde{\pi}}(s)\rvert\leq R_{\max}$.

\begin{remark}[Sufficient conditions for \Cref{assumption:reference_policy_general}]
    \label{rem:bounded_reg_sufficient}
    Under \Cref{assumption:m-step-mixing}, \Cref{assumption:reference_policy} gives \Cref{assumption:reference_policy_general} with $\tilde{\pi}=\mu$. More generally, if every Dirac lies in the domain, as in \Cref{rem:bounded_modulus}, convexity gives $\Omega_s(\pi_s)\leq\max_a\Omega_s(\delta_a)<+\infty$ on $\Delta(\mA)$, so any full-support $\tilde{\pi}$ works; this covers the negative entropy, the Tsallis negentropies, the quadratic regularizer and the $\alpha$-divergences to $\mu$ with $\alpha>0$ (\Cref{thm:alpha}.(i)). For $\alpha\leq0$, including the forward KL, $\Omega_s$ is unbounded on the relative interior of $\Delta(\mA)$, but $\tilde{\pi}=\mu$ still gives $\Omega_s(\mu_s)=0$; for the floor-wrapped regularizers, $\Omega_{\epsilon,s}(\mu_s)=(1-\epsilon)\Omega_{0,s}(\mu_s)$ (\Cref{lem:floor_wrapped_props}). The assumption says nothing about the domination profile: regularizers with sparse greedy policies satisfy it while failing~\eqref{eq:P1} (\Cref{rem:full_support_uniqueness}).
\end{remark}

\subsubsection{The discounted regularized problem}

\begin{lemma}[Discounted fixed point]
    \label{lem:discounted_fixed_point}
    Let $\lambda\in(0,1)$ and $\mT^\lambda_\Omega(Q):=r+\lambda P\,\Omega^*(Q)$.
    Then $\mT^\lambda_\Omega$ is a $\lambda$-contraction for $\|\cdot\|_\infty$ with unique fixed point $Q^\lambda$, and $V^\lambda:=\Omega^*(Q^\lambda)$ satisfies
    \begin{equation}
        \label{eq:discounted_Q}
        Q^\lambda(s,a)=r(s,a)+\lambda\sum_{s'}p(s'|s,a)\,V^\lambda(s'),
        \qquad
        V^\lambda=\mathcal{T}^\lambda_\Omega V^\lambda .
    \end{equation}
    The operator $\mathcal{T}^\lambda_\Omega$ is monotone and $\lambda$-translation equivariant, and the maximum defining $\mathcal{T}^\lambda_\Omega V^\lambda$ is attained at the unique $\pi^\lambda:=\pi_{\Omega,Q^\lambda}=\nabla\Omega^*(Q^\lambda)$; in particular $V^\lambda=\mathcal{E}^\lambda_{\pi^\lambda}V^\lambda\geq\mathcal{E}^\lambda_{\pi}V^\lambda$ for every admissible $\pi$.
\end{lemma}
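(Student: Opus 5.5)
We plan to prove the statement in three steps: first the contraction and the fixed point of $\mT^\lambda_\Omega$, then the identity $V^\lambda=\mathcal{T}^\lambda_\Omega V^\lambda$, and finally the order properties of $\mathcal{T}^\lambda_\Omega$.

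\emph{Contraction and fixed point.} For $Q,Q'$, write $\mT^\lambda_\Omega(Q)-\mT^\lambda_\Omega(Q')=\lambda P\big(\Omega^*(Q)-\Omega^*(Q')\big)$. Since $P$ is row-stochastic it is non-expansive in $\|\cdot\|_\infty$, and \Cref{lem:omega_star_props}.(v) makes $\Omega^*$ non-expansive in $\|\cdot\|_\infty$. Hence $\|\mT^\lambda_\Omega(Q)-\mT^\lambda_\Omega(Q')\|_\infty\leq\lambda\|Q-Q'\|_\infty$. Banach's fixed point theorem on the complete space $\bR^{\mS\times\mA}$ then gives a unique $Q^\lambda$. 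Setting $V^\lambda:=\Omega^*(Q^\lambda)$, the first identity in~\eqref{eq:discounted_Q} is just the fixed-point equation written coordinatewise.

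\emph{The identity $V^\lambda=\mathcal{T}^\lambda_\Omega V^\lambda$.} We expand the conjugate state by state. For an admissible $\pi$ we have
\[
\langle\pi_s,Q^\lambda_s\rangle-\tau\Omega_s(\pi_s)=\langle\pi_s,r_s\rangle-\tau\Omega_s(\pi_s)+\lambda\sum_a\pi_s(a)\sum_{s'}p(s'|s,a)V^\lambda(s')=(\mathcal{E}^\lambda_\pi V^\lambda)(s).
\]
Maximizing over $\pi_s\in\Delta(\mA)$ is the same as maximizing over $\dom(\Omega_s)$, because points outside the domain give $-\infty$. This is also why maximizing over admissible policies is a product of independent state-wise maximizations. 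The left-hand maximum is $\Omega^*(Q^\lambda)(s)=V^\lambda(s)$, and by \Cref{lem:omega_star_props}.(i), which uses $\tau>0$, its unique maximizer is $\pi^\lambda_s=\nabla\Omega^*(Q^\lambda)(s)$. The maximizer lies in $\dom(\Omega_s)$ because the value is finite. Therefore $V^\lambda=\mathcal{E}^\lambda_{\pi^\lambda}V^\lambda\geq\mathcal{E}^\lambda_\pi V^\lambda$ for every admissible $\pi$, which gives $V^\lambda=\mathcal{T}^\lambda_\Omega V^\lambda$ with the maximum attained exactly at $\pi^\lambda$.

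\emph{Monotonicity and translation equivariance.} If $V\leq V'$, then $P_\pi V\leq P_\pi V'$ because $P_\pi\geq0$. So $\mathcal{E}^\lambda_\pi V\leq\mathcal{E}^\lambda_\pi V'$ for each $\pi$, and taking the state-wise supremum gives $\mathcal{T}^\lambda_\Omega V\leq\mathcal{T}^\lambda_\Omega V'$. Since $P_\pi\ones=\ones$, we get $\mathcal{E}^\lambda_\pi(V+c\ones)=\mathcal{E}^\lambda_\pi V+\lambda c\ones$, and the supremum inherits this shift, so $\mathcal{T}^\lambda_\Omega$ is $\lambda$-translation equivariant.

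\emph{Expected difficulty.} There is no real obstacle here. The only point needing care is that the supremum defining $\mathcal{T}^\lambda_\Omega$ is finite and attained for a general $V$, since $\Omega_s$ may take the value $+\infty$. For monotonicity and translation equivariance it suffices to read the supremum in the extended reals; it is bounded above because $\tilde r_\pi\leq R_{\max}$. For $V=V^\lambda$ itself, finiteness and attainment follow directly from the conjugate identity above.
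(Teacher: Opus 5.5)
Your proposal is correct and follows the paper's proof essentially step for step: the $\lambda$-contraction from the row-stochasticity of $P$ and the $\|\cdot\|_\infty$ non-expansiveness of $\Omega^*$ with Banach's theorem, then substituting the fixed-point identity into the conjugate to obtain $V^\lambda=\mathcal{T}^\lambda_\Omega V^\lambda$ with unique maximizer $\nabla\Omega^*(Q^\lambda)$, and finally monotonicity and $\lambda$-translation equivariance inherited from each $\mathcal{E}^\lambda_\pi$ through the state-wise maximum. Your extra care about finiteness of the supremum for a general $V$ is a harmless refinement that the paper leaves implicit; it can also be read off from the identity $\mathcal{T}^\lambda_\Omega V=\Omega^*(r+\lambda PV)$, which gives finiteness and attainment for every $V$ at once.
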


\begin{proof}
    We have $\mT^\lambda_\Omega(Q)-\mT^\lambda_\Omega(Q')=\lambda P(\Omega^*(Q)-\Omega^*(Q'))$ with $P$ row-stochastic and $\Omega^*$ non-expansive for $\|\cdot\|_\infty$ (\Cref{lem:omega_star_props}.(v)), so $\mT^\lambda_\Omega$ is a $\lambda$-contraction on the complete space $(\bR^{\mS\times\mA},\|\cdot\|_\infty)$ and Banach's theorem applies; this is the regularized discounted operator of~\citep[Thm.~1]{geist2019theory}.
    Substituting the first identity of~\eqref{eq:discounted_Q} into the definition of $\Omega^*$ gives, for every $s$,
    \begin{equation}
        \label{eq:discounted_ACOE}
        V^\lambda(s)=\max_{\pi_s\in\dom(\Omega_s)}\left\{\tilde{r}_\pi(s)+\lambda\left(P_\pi V^\lambda\right)(s)\right\}
        =\left(\mathcal{T}^\lambda_\Omega V^\lambda\right)(s),
    \end{equation}
    the maximizer being unique and equal to $\nabla\Omega^*(Q^\lambda)$ by \Cref{lem:omega_star_props}.(i), since $\tau>0$.
    Monotonicity and translation equivariance of $\mathcal{T}^\lambda_\Omega$ are immediate from~\eqref{eq:reg_reward}, each $\mathcal{E}^\lambda_\pi$ having them and the state-wise maximum preserving them.
\end{proof}

\begin{lemma}[Variational representation of the discounted values]
    \label{lem:variational_representation}
    For an admissible stationary policy $\pi$ define the discounted regularized return $J^\lambda_\pi(s):=\bE_\pi\left[\sum_{t\geq0}\lambda^t\tilde{r}_\pi(s_t) | s_0=s\right]$, which is finite since $\tilde{r}_\pi$ is a finite vector.
    Then, for every $\lambda\in(0,1)$ and every $s\in\mS$,
    \begin{equation}
        \label{eq:variational_representation}
        V^\lambda(s)=\max_{\pi\text{ admissible stationary}}J^\lambda_\pi(s),
    \end{equation}
    the maximum being attained at $\pi^\lambda$.
\end{lemma}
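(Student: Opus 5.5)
The plan is the standard policy-evaluation/optimality argument for discounted MDPs, transplanted to the regularized reward $\tilde r_\pi$. First, for an admissible stationary $\pi$, note that $\tilde r_\pi$ is a finite vector and $P_\pi$ is row-stochastic. Hence $\mathcal{E}^\lambda_\pi$ is a $\lambda$-contraction in $\|\cdot\|_\infty$. Expanding the series shows that $J^\lambda_\pi=\sum_{t\geq0}\lambda^tP_\pi^t\tilde r_\pi=(I-\lambda P_\pi)^{-1}\tilde r_\pi$ is its unique fixed point, i.e. $J^\lambda_\pi=\mathcal{E}^\lambda_\pi J^\lambda_\pi$.

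\emph{Attainment at $\pi^\lambda$.} By \Cref{lem:discounted_fixed_point}, $V^\lambda=\mathcal{E}^\lambda_{\pi^\lambda}V^\lambda$, so $V^\lambda$ is the fixed point of $\mathcal{E}^\lambda_{\pi^\lambda}$. By uniqueness, $V^\lambda=J^\lambda_{\pi^\lambda}$. Here we must check that $\pi^\lambda$ is admissible. It is, because it is the maximizer in the definition of $\Omega^*$, which has finite value only on $\dom(\Omega_s)$: the supremum is finite since $\Omega^*(Q)(s)$ is finite, so the maximizer has $\Omega_s(\pi^\lambda_s)<+\infty$.

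\emph{Upper bound.} Fix any admissible stationary $\pi$. By \Cref{lem:discounted_fixed_point}, $V^\lambda\geq\mathcal{E}^\lambda_\pi V^\lambda$. Since $P_\pi\geq0$, the operator $\mathcal{E}^\lambda_\pi$ is monotone, and iterating gives $V^\lambda\geq(\mathcal{E}^\lambda_\pi)^nV^\lambda$ for every $n$. Letting $n\to\infty$ and using the contraction, the right-hand side converges to $J^\lambda_\pi$. This yields $V^\lambda\geq J^\lambda_\pi$ statewise. Equivalently, one can write $(I-\lambda P_\pi)(V^\lambda-J^\lambda_\pi)\geq0$ and use that $(I-\lambda P_\pi)^{-1}=\sum_t\lambda^tP_\pi^t$ has non-negative entries. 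Combined with the previous step, this gives \eqref{eq:variational_representation}, with the maximum attained at $\pi^\lambda$ simultaneously for all $s$.

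The only delicate point is admissibility and finiteness. Policies with $\Omega_s(\pi_s)=+\infty$ are excluded by the definition of admissibility. For admissible $\pi$, $\tilde r_\pi$ is bounded above by $R_{\max}$ and is finite, so every quantity above is finite and the Neumann series argument applies. I expect no real obstacle beyond verifying that the maximizer $\pi^\lambda$ is admissible.
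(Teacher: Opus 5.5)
Your proposal is correct and follows essentially the same route as the paper. You iterate $V^\lambda\geq\mathcal{E}^\lambda_\pi V^\lambda$ using monotonicity and pass to the limit to get $V^\lambda\geq J^\lambda_\pi$, then get equality at $\pi^\lambda$ from $V^\lambda=\mathcal{E}^\lambda_{\pi^\lambda}V^\lambda$; where the paper reruns the same iteration with equalities, you invoke uniqueness of the fixed point of the contraction $\mathcal{E}^\lambda_{\pi^\lambda}$, which is equivalent, and your check that $\pi^\lambda$ is admissible is a point the paper treats as implicit in \Cref{lem:discounted_fixed_point}.
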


\begin{proof}
    Let $\pi$ be an admissible stationary policy.
    Iterating $V^\lambda\geq\mathcal{E}^\lambda_\pi V^\lambda$ (\Cref{lem:discounted_fixed_point}) $K$ times, the operator $\mathcal{E}^\lambda_\pi$ being monotone, and using the Markov property we obtain $V^\lambda(s)\geq\left(\mathcal{E}^\lambda_\pi\right)^K V^\lambda(s)=\bE_\pi\left[\sum_{t<K}\lambda^t\tilde{r}_\pi(s_t)|s_0=s\right]+\lambda^K P^K_\pi V^\lambda(s)$.
    Since $V^\lambda$ is a finite vector and $P^K_\pi$ is row-stochastic, $\left\|\lambda^K P^K_\pi V^\lambda\right\|_{\mS,\infty}\leq\lambda^K\|V^\lambda\|_{\mS,\infty}\to0$ as $K\to+\infty$ at fixed $\lambda$, so $V^\lambda\geq J^\lambda_\pi$.
    The same computation with $\pi=\pi^\lambda$, for which $V^\lambda=\mathcal{E}^\lambda_{\pi^\lambda}V^\lambda$ holds with equality, gives $V^\lambda=J^\lambda_{\pi^\lambda}$.
\end{proof}

\begin{lemma}[Normalized values are uniformly bounded]
    \label{lem:uniform_sup_bound}
    For every $\lambda\in(0,1)$,
    \begin{equation}
        \label{eq:uniform_sup_bound}
        (1-\lambda)\left\|V^\lambda\right\|_{\mS,\infty}\leq R_{\max},
        \qquad
        (1-\lambda)\left\|Q^\lambda\right\|_{\infty}\leq 2R_{\max}.
    \end{equation}
\end{lemma}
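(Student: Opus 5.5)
The plan is to bound $V^\lambda$ from above and below through the variational representation of \Cref{lem:variational_representation}, and then to transfer the bound to $Q^\lambda$ through the first identity of~\eqref{eq:discounted_Q}.

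\emph{Upper bound.} For every admissible stationary $\pi$, the bound recorded after~\eqref{eq:reg_reward} gives $\tilde r_\pi(s)\leq R_{\max}$ for every $s$. Hence $J^\lambda_\pi(s)\leq\sum_{t\geq0}\lambda^tR_{\max}=R_{\max}/(1-\lambda)$. Taking the maximum over $\pi$ in~\eqref{eq:variational_representation} yields $V^\lambda(s)\leq R_{\max}/(1-\lambda)$.

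\emph{Lower bound.} Here we use the admissible full-support policy $\tilde\pi$ of \Cref{assumption:reference_policy_general}. It is stationary and admissible, and it satisfies $|\tilde r_{\tilde\pi}(s)|\leq R_{\max}$. Therefore $V^\lambda(s)\geq J^\lambda_{\tilde\pi}(s)\geq-R_{\max}/(1-\lambda)$. Together with the upper bound, this gives $(1-\lambda)\|V^\lambda\|_{\mS,\infty}\leq R_{\max}$.

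\emph{Bound on $Q^\lambda$.} From $Q^\lambda(s,a)=r(s,a)+\lambda\sum_{s'}p(s'|s,a)V^\lambda(s')$ we get
\[
|Q^\lambda(s,a)|\leq\|r\|_\infty+\lambda\frac{R_{\max}}{1-\lambda}.
\]
Multiplying by $(1-\lambda)$ gives $(1-\lambda)\|r\|_\infty+\lambda R_{\max}\leq R_{\max}$, because $\|r\|_\infty\leq R_{\max}$. This is in fact stronger than the claimed $2R_{\max}$.

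The only delicate point is the lower bound. It needs an admissible policy whose regularized reward is finite, which fails for generic policies when $\dom(\Omega_s)\subsetneq\Delta(\mA)$. \Cref{assumption:reference_policy_general} is exactly what supplies it.
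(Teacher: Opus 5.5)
Your proof is correct and follows the paper's argument: the termwise geometric-series bound on $J^\lambda_\pi$ for every admissible $\pi$, combined with \Cref{lem:variational_representation}, gives the upper bound; the single policy $\tilde\pi$ gives the lower bound; and the first identity of \eqref{eq:discounted_Q} transfers the bound to $Q^\lambda$. Your sharpening $(1-\lambda)\|Q^\lambda\|_\infty\leq(1-\lambda)\|r\|_\infty+\lambda R_{\max}\leq R_{\max}$ is also valid, since $\|r\|_\infty\leq R_{\max}$; the paper simply states the looser constant $2R_{\max}$.
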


\begin{proof}
    By~\eqref{eq:existence_constants}, every admissible $\pi$ satisfies the one-sided bound $\tilde{r}_\pi\leq\|r\|_\infty-\tau\underline{\Omega}_s\leq R_{\max}$, while the policy $\tilde{\pi}$ of \Cref{assumption:reference_policy_general} satisfies the two-sided bound $\lvert\tilde{r}_{\tilde{\pi}}(s)\rvert\leq\|r\|_\infty+\tau\lvert\Omega_s(\tilde{\pi}_s)\rvert\leq R_{\max}$.
    Bounding the summand of $J^\lambda_\pi$ termwise and summing the geometric series therefore gives $J^\lambda_\pi\leq\frac{R_{\max}}{1-\lambda}$ for every admissible $\pi$, and $J^\lambda_{\tilde{\pi}}\geq-\frac{R_{\max}}{1-\lambda}$.
    Since $V^\lambda$ is the maximum of $J^\lambda_\pi$ over admissible $\pi$ (\Cref{lem:variational_representation}), the upper bound propagates to $V^\lambda$ and the lower bound is inherited from the single policy $\tilde{\pi}$, whence $\|V^\lambda\|_{\mS,\infty}\leq\frac{R_{\max}}{1-\lambda}$.
    The claim on $Q^\lambda$ follows from~\eqref{eq:discounted_Q}: $(1-\lambda)\|Q^\lambda\|_\infty\leq(1-\lambda)\|r\|_\infty+\lambda(1-\lambda)\|V^\lambda\|_{\mS,\infty}\leq2R_{\max}$.
\end{proof}

\subsubsection{A necessary and sufficient condition for the existence of the regularized fixed point}

\begin{assumption}[Uniformly bounded relative values]
    \label{assumption:span_bounded}
    There exist $\lambda_0\in(0,1)$ and $C_V<+\infty$ such that $\left\|V^\lambda\right\|_{\mS,\spn}\leq C_V$ for every $\lambda\in[\lambda_0,1)$.
\end{assumption}

\begin{theorem}[Existence of a fixed point for $\mT_\Omega$]
    \label{thm:existence_fixed_point_general}
    Let \Cref{assumption:convex-analytic-regularity,assumption:reference_policy_general,assumption:span_bounded} hold.
    Then there exist $Q^*_\Omega\in\bR^{\mS\times\mA}$ and $\rho^*_\Omega\in\bR$ such that
    \begin{equation}
        \label{eq:fixed_point_equation}
        \mT_\Omega\left(Q^*_\Omega\right)=Q^*_\Omega+\rho^*_\Omega\ones,
    \end{equation}
    with $\left\|Q^*_\Omega\right\|_\spn\leq\|r\|_\spn+C_V$ and $\lvert\rho^*_\Omega\rvert\leq2R_{\max}$.
\end{theorem}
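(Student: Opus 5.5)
We follow the vanishing-discount approach, using the discounted objects of \Cref{lem:discounted_fixed_point,lem:uniform_sup_bound}. Fix a reference state-action pair $(s_0,a_0)$. For $\lambda\in[\lambda_0,1)$ we define the relative function $h^\lambda:=Q^\lambda-Q^\lambda(s_0,a_0)\ones$ and the scaled scalar $\rho^\lambda:=(1-\lambda)Q^\lambda(s_0,a_0)$. The idea is to extract a convergent subsequence as $\lambda\to1$ and pass to the limit in the discounted fixed-point equation.

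\textbf{Uniform bounds.} From \eqref{eq:discounted_Q} we have $Q^\lambda=r+\lambda PV^\lambda$. Since $P$ is row-stochastic, \Cref{lem:row-stochastic-span} gives
\[
\|Q^\lambda\|_\spn\leq\|r\|_\spn+\lambda\|V^\lambda\|_{\mS,\spn}\leq\|r\|_\spn+C_V
\]
by \Cref{assumption:span_bounded}. As $h^\lambda(s_0,a_0)=0$, this yields $\|h^\lambda\|_\infty\leq\|r\|_\spn+C_V$. On the other hand, \Cref{lem:uniform_sup_bound} gives $|\rho^\lambda|\leq2R_{\max}$. Both families therefore lie in compact sets, and we can choose $\lambda_n\to1$ with $h^{\lambda_n}\to Q^*_\Omega$ and $\rho^{\lambda_n}\to\rho^*_\Omega$. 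The stated bounds follow because both the span and the absolute value are continuous.

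\textbf{Rewriting the fixed point.} Set $c_\lambda:=Q^\lambda(s_0,a_0)$. By translation equivariance (\Cref{lem:omega_star_props}.(iii)), $\Omega^*(Q^\lambda)=\Omega^*(h^\lambda)+c_\lambda\ones$. Substituting into $Q^\lambda=r+\lambda P\Omega^*(Q^\lambda)$ and using $P\ones=\ones$ gives
\[
h^\lambda+c_\lambda\ones=r+\lambda P\Omega^*(h^\lambda)+\lambda c_\lambda\ones,
\]
that is,
\[
h^\lambda+\rho^\lambda\ones=r+\lambda P\,\Omega^*(h^\lambda).
\]

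\textbf{Passing to the limit.} The map $\Omega^*$ is non-expansive in $\|\cdot\|_\infty$ (\Cref{lem:omega_star_props}.(v)), hence continuous. Also $\|\Omega^*(h^\lambda)\|_\infty$ stays bounded, since $\|\Omega^*(h)-\Omega^*(0)\|_\infty\leq\|h\|_\infty$. It follows that $\lambda_nP\Omega^*(h^{\lambda_n})\to P\Omega^*(Q^*_\Omega)$. The limit of the rewritten equation is $Q^*_\Omega+\rho^*_\Omega\ones=r+P\Omega^*(Q^*_\Omega)=\mT_\Omega(Q^*_\Omega)$, which is \eqref{eq:fixed_point_equation}.

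\textbf{Expected obstacle.} I expect the delicate points to be the uniform span control and the continuity needed in the limit. In the unregularized case with compact actions, the classical argument needs the reward to be continuous in the action. Here that difficulty is avoided entirely: we work with $\Omega^*$ on $Q$-space, where it is finite and non-expansive even when $\dom(\Omega_s)\subsetneq\Delta(\mA)$. The span bound itself is supplied directly by \Cref{assumption:span_bounded}, so no further argument is needed there.
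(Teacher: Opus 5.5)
Your proposal is correct and follows the paper's proof essentially step for step. It uses the same normalization $h^\lambda=Q^\lambda-Q^\lambda(s_0,a_0)\ones$ and $\rho^\lambda=(1-\lambda)Q^\lambda(s_0,a_0)$, the same uniform bounds from \Cref{lem:uniform_sup_bound} and \Cref{assumption:span_bounded} (through \Cref{lem:row-stochastic-span}), Bolzano--Weierstrass, and the same passage to the limit via the non-expansiveness of $\Omega^*$.
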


\begin{proof}
    Fix $(s_0,a_0)\in\mS\times\mA$ and set, for $\lambda\in[\lambda_0,1)$, $\bar{Q}^\lambda:=Q^\lambda-Q^\lambda(s_0,a_0)\ones$ and $\rho^\lambda:=(1-\lambda)Q^\lambda(s_0,a_0)$.

    By translation equivariance of $\Omega^*$ (\Cref{lem:omega_star_props}.(iii)) and $P\ones=\ones$, substituting $Q^\lambda=\bar{Q}^\lambda+Q^\lambda(s_0,a_0)\ones$ into~\eqref{eq:discounted_Q} and subtracting $\lambda Q^\lambda(s_0,a_0)$ gives
    \begin{equation}
        \label{eq:normalized_discounted}
        \bar{Q}^\lambda(s,a)+\rho^\lambda=r(s,a)+\lambda\sum_{s'}p(s'|s,a)\,\Omega^*\!\left(\bar{Q}^\lambda\right)(s').
    \end{equation}

    By \Cref{lem:uniform_sup_bound}, $\lvert\rho^\lambda\rvert\leq(1-\lambda)\|Q^\lambda\|_\infty\leq2R_{\max}$.
    By~\eqref{eq:discounted_Q} and \Cref{lem:row-stochastic-span}, $\|Q^\lambda\|_\spn\leq\|r\|_\spn+\lambda\|V^\lambda\|_{\mS,\spn}\leq\|r\|_\spn+C_V$ under \Cref{assumption:span_bounded}, hence $\|\bar{Q}^\lambda\|_\infty=\max_{s,a}\lvert Q^\lambda(s,a)-Q^\lambda(s_0,a_0)\rvert\leq\|Q^\lambda\|_\spn\leq\|r\|_\spn+C_V$.
    Both bounds are uniform in $\lambda\in[\lambda_0,1)$.

    Let $\lambda_n\uparrow1$ with $\lambda_n\geq\lambda_0$.
    The pairs $(\bar{Q}^{\lambda_n},\rho^{\lambda_n})$ lie in a compact subset of $\bR^{\mS\times\mA}\times\bR$, so by Bolzano--Weierstrass some subsequence $\lambda_{\varphi(n)}\uparrow1$ satisfies $\bar{Q}^{\lambda_{\varphi(n)}}\to Q^*_\Omega$ and $\rho^{\lambda_{\varphi(n)}}\to\rho^*_\Omega$.
    Since $\Omega^*$ is non-expansive hence continuous, letting $n\to+\infty$ in~\eqref{eq:normalized_discounted} gives $Q^*_\Omega+\rho^*_\Omega\ones=r+P\,\Omega^*(Q^*_\Omega)=\mT_\Omega(Q^*_\Omega)$.
    The bounds are preserved in the limit.
\end{proof}

\begin{proposition}[\Cref{assumption:span_bounded} is also necessary]
    \label{prop:span_bounded_necessary}
    Conversely, if~\eqref{eq:fixed_point_equation} admits a solution $(Q^*_\Omega,\rho^*_\Omega)$, then, with $V^*:=\Omega^*(Q^*_\Omega)$,
    \begin{equation*}
        \left\|V^\lambda\right\|_{\mS,\spn}\ \leq\ 2\left\|V^*\right\|_{\mS,\spn}
        \qquad\text{for every }\lambda\in(0,1),
    \end{equation*}
    so \Cref{assumption:span_bounded} holds.
    It therefore characterizes exactly the solvability of the regularized average-reward optimality equation, as in the unregularized case \citetext{\citealp{schweitzer1978functional}; \citealp[\S8.10]{puterman1994markov}}.
\end{proposition}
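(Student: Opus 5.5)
We need to prove: if the regularized optimality equation $\mT_\Omega(Q^*_\Omega)=Q^*_\Omega+\rho^*_\Omega\ones$ has a solution, then $\|V^\lambda\|_{\mS,\spn}\leq 2\|V^*\|_{\mS,\spn}$ for every $\lambda\in(0,1)$, where $V^*:=\Omega^*(Q^*_\Omega)$. The plan is a sandwich argument. We lift the average-reward solution to the state-value operator $\mathcal T^\lambda_\Omega$, show that a shifted copy of $V^*$ is an almost-fixed point of it, and then use monotonicity and $\lambda$-translation equivariance to trap $V^\lambda$ between two constant shifts of $V^*$.

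\textbf{Step 1 (state-value form of the average-reward equation).} Set $h:=V^*$. Apply $\Omega^*$ state-wise to $Q^*_\Omega=r+Ph-\rho^*_\Omega\ones$. Exactly as in \eqref{eq:discounted_ACOE}, with $\lambda=1$, this yields
\[
h+\rho^*_\Omega\ones=\max_{\pi\text{ admissible}}\{\tilde r_\pi+P_\pi h\}=:\mathcal T^1_\Omega h.
\]
The same substitution also shows, for every $\lambda$, that $\mathcal T^\lambda_\Omega h=\max_\pi\{\tilde r_\pi+P_\pi h-(1-\lambda)P_\pi h\}$.

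\textbf{Step 2 (one-step comparison).} Write $\underline h:=\min h$ and $\bar h:=\max h$. Since $P_\pi h$ is a convex combination of the values of $h$, we have $\underline h\ones\leq P_\pi h\leq\bar h\ones$ for every $\pi$. Therefore
\[
h+\rho^*_\Omega\ones-(1-\lambda)\bar h\ones\ \leq\ \mathcal T^\lambda_\Omega h\ \leq\ h+\rho^*_\Omega\ones-(1-\lambda)\underline h\ones .
\]
Define $c_\pm$ through $c_+:=(\rho^*_\Omega-(1-\lambda)\underline h)/(1-\lambda)$ and $c_-:=(\rho^*_\Omega-(1-\lambda)\bar h)/(1-\lambda)$. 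Consider $W_\pm:=h+c_\pm\ones$. By $\lambda$-translation equivariance (\Cref{lem:discounted_fixed_point}), $\mathcal T^\lambda_\Omega W_+=\mathcal T^\lambda_\Omega h+\lambda c_+\ones\leq h+(1-\lambda)c_+\ones+\lambda c_+\ones=W_+$. Symmetrically, $\mathcal T^\lambda_\Omega W_-\geq W_-$.

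\textbf{Step 3 (sandwich).} $\mathcal T^\lambda_\Omega$ is monotone. From Step 2 it is also a $\lambda$-contraction in $\|\cdot\|_\infty$: it is the maximum of the affine maps $\mathcal E^\lambda_\pi$, each $\lambda$-Lipschitz, though $\tilde r_\pi$ may be unbounded over $\pi$. An alternative route to this contraction is the conjugation with $\mT^\lambda_\Omega$ in \Cref{lem:discounted_fixed_point}, using $V=\Omega^*(r+\lambda PV)$. Iterating from $W_+$ gives a non-increasing sequence $(\mathcal T^\lambda_\Omega)^kW_+$, which converges to the unique fixed point $V^\lambda$; hence $V^\lambda\leq W_+$. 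Likewise $V^\lambda\geq W_-$. Then
\[
\|V^\lambda\|_{\mS,\spn}\leq\max W_+-\min W_-=(\bar h+c_+)-(\underline h+c_-)=\|h\|_{\spn}+(\bar h-\underline h)=2\|V^*\|_{\mS,\spn}.
\]

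\textbf{Main obstacle.} The delicate point is justifying the convergence in Step 3, namely that $V^\lambda$ is the unique fixed point of $\mathcal T^\lambda_\Omega$ and that the iterates from $W_\pm$ converge to it. I would handle this by the correspondence $V\mapsto Q:=r+\lambda PV$. It maps $\mathcal T^\lambda_\Omega V=\Omega^*(r+\lambda PV)$ to $\mT^\lambda_\Omega$, so the contraction of $\mT^\lambda_\Omega$ transfers to $\mathcal T^\lambda_\Omega$, with $\|\Omega^*(Q)-\Omega^*(Q')\|_\infty\leq\|Q-Q'\|_\infty\leq\lambda\|V-V'\|_\infty$.

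Alternatively, one can avoid convergence altogether. Pick an admissible $\pi$ and iterate $W_-\leq\mathcal E^\lambda_{\pi_-}W_-$, where $\pi_-$ is the greedy policy of $r+\lambda PW_-$, and then use \Cref{lem:variational_representation}. For the upper bound, iterate $W_+\geq\mathcal E^\lambda_{\pi^\lambda}W_+$ to get $W_+\geq J^\lambda_{\pi^\lambda}=V^\lambda$.

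Finally, since $\mS$ is finite, the necessity of \Cref{assumption:span_bounded} follows with $C_V=2\|V^*\|_{\mS,\spn}$. Combined with \Cref{thm:existence_fixed_point_general}, this gives the characterization.
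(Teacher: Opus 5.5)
Your proposal is correct and follows the paper's argument. You build super- and sub-solutions of $\mathcal{T}^\lambda_\Omega$ of the form $V^*+c\ones$ from the average-reward optimality equation, then trap $V^\lambda$ between them using monotonicity, $\lambda$-translation equivariance and convergence of the monotone iterates. The differences are minor: you work with $\min V^*$ and $\max V^*$ instead of first centering $V^*$ so that $\|V^*\|_{\mS,\infty}=\tfrac12\|V^*\|_{\mS,\spn}$, and you explicitly justify that the iterates converge to $V^\lambda$ (via $\mathcal{T}^\lambda_\Omega V=\Omega^*(r+\lambda PV)$ or \Cref{lem:variational_representation}), which the paper uses without comment.
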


\begin{proof}
    Replacing $Q^*_\Omega$ by $Q^*_\Omega+c\ones$ changes $V^*$ by $c$ (\Cref{lem:omega_star_props}.(iii)) and leaves~\eqref{eq:fixed_point_equation} unchanged, so we may center $V^*$ so that $\|V^*\|_{\mS,\infty}=\frac{1}{2}\|V^*\|_{\mS,\spn}=:\kappa$.
    Taking the conjugate in~\eqref{eq:fixed_point_equation} as in~\eqref{eq:discounted_ACOE} gives the regularized average-reward optimality equation
    \begin{equation}
        \label{eq:ACOE}
        V^*+\rho^*_\Omega\ones=\max_{\pi\text{ admissible}}\left\{\tilde{r}_\pi+P_\pi V^*\right\}.
    \end{equation}
    Set $\tilde{V}:=V^*+\frac{\rho^*_\Omega}{1-\lambda}\ones$.
    Using~\eqref{eq:ACOE} and $P_\pi V^*\geq-\kappa\ones$ termwise,
    \begin{equation*}
        \begin{aligned}
            \mathcal{T}^\lambda_\Omega\tilde{V}
            &=\max_\pi\left\{\tilde{r}_\pi+P_\pi V^*-(1-\lambda)P_\pi V^*\right\}+\frac{\lambda\rho^*_\Omega}{1-\lambda}\ones\\
            &\leq V^*+\rho^*_\Omega\ones+(1-\lambda)\kappa\ones+\frac{\lambda\rho^*_\Omega}{1-\lambda}\ones
            =\tilde{V}+(1-\lambda)\kappa\ones.
        \end{aligned}
    \end{equation*}
    Hence $\mathcal{T}^\lambda_\Omega(\tilde{V}+\kappa\ones)\leq\tilde{V}+(1-\lambda)\kappa\ones+\lambda\kappa\ones=\tilde{V}+\kappa\ones$ by $\lambda$-translation equivariance, and monotonicity of $\mathcal{T}^\lambda_\Omega$ (\Cref{lem:discounted_fixed_point}) makes the iterates from $\tilde{V}+\kappa\ones$ non-increasing, so their limit $V^\lambda$ satisfies $V^\lambda\leq\tilde{V}+\kappa\ones$.
    The symmetric argument gives $V^\lambda\geq\tilde{V}-\kappa\ones$, whence $\|V^\lambda-\tilde{V}\|_{\mS,\infty}\leq\kappa$ and $\|V^\lambda\|_{\mS,\spn}\leq\|V^*\|_{\mS,\spn}+2\kappa=2\|V^*\|_{\mS,\spn}$.
\end{proof}

\subsubsection{Sufficient structural conditions}

We now record the classical setting in which \Cref{assumption:span_bounded} holds, and show that \Cref{assumption:m-step-mixing} is a special case of it with explicit constants.

\begin{assumption}[Weakly communicating MDP]
    \label{assumption:weakly_communicating}
    The state space splits as $\mS=\mathcal{C}\sqcup\Str$, where
    \begin{enumerate}[label=(\roman*)]
        \item $\mathcal{C}$ is \emph{closed}: $s\in\mathcal{C}$, $a\in\mA$ and $p(s'|s,a)>0$ imply $s'\in\mathcal{C}$;
        \item $\mathcal{C}$ is \emph{communicating}: for all $s,s'\in\mathcal{C}$ there exist a stationary policy $\pi$ and $k\geq0$ with $P^k_\pi(s'|s)>0$;
        \item every state of $\Str$ is transient under every stationary policy.
    \end{enumerate}
    This is the standard definition \citep[\S8.3.1]{puterman1994markov}; $\Str$ may be empty, in which case the MDP is communicating.
\end{assumption}

\begin{lemma}[Stopped evaluation]
    \label{lem:stopped_comparison}
    Let $\lambda\in(0,1)$, $\mathcal{B}\subseteq\mS$, $\sigma:=\inf\{t\geq0 | s_t\in\mathcal{B}\}\in\bN\cup\{+\infty\}$, and let $\pi$ be an admissible stationary policy.
    For every $K\geq0$ and $s\in\mS$,
    \begin{equation}
        \label{eq:stopped_comparison}
        V^\lambda(s)\ \geq\ \bE_\pi\left[\sum_{t=0}^{\sigma\wedge K-1}\lambda^t\tilde{r}_\pi(s_t)+\lambda^{\sigma\wedge K}V^\lambda\left(s_{\sigma\wedge K}\right) \mid s_0=s\right],
    \end{equation}
    with equality when $\pi=\pi^\lambda$.
    If moreover $\sigma<+\infty$ $\,\bP_\pi(\cdot|s_0=s)$-almost surely, the same holds with $\sigma\wedge K$ replaced by $\sigma$.
\end{lemma}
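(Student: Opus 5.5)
The plan is to compare the $K$-step unrolling of the discounted optimality equation from \Cref{lem:discounted_fixed_point} with its version stopped at $\sigma$. Monotonicity of the evaluation operators does the work, and optional stopping handles the stopping time. The starting point is the pointwise inequality $V^\lambda\geq\mathcal{E}^\lambda_\pi V^\lambda$, with equality for $\pi=\pi^\lambda$. Written out, it says that for every state $x$,
\[
V^\lambda(x)\geq\tilde r_\pi(x)+\lambda\sum_{x'}P_\pi(x'|x)V^\lambda(x').
\]

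\textbf{Supermartingale.} I would define the process
\[
M_n:=\sum_{t=0}^{n-1}\lambda^t\tilde r_\pi(s_t)+\lambda^nV^\lambda(s_n)
\]
under $\bP_\pi(\cdot|s_0=s)$, adapted to the natural filtration of $(s_t)$. Conditioning on $s_n$ and applying the inequality above at $x=s_n$ gives
\[
\bE[M_{n+1}\mid\mF_n]=\sum_{t<n}\lambda^t\tilde r_\pi(s_t)+\lambda^n\big(\tilde r_\pi(s_n)+\lambda(P_\pi V^\lambda)(s_n)\big)\leq M_n .
\]
So $(M_n)$ is a supermartingale, and a martingale when $\pi=\pi^\lambda$. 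Its increments are bounded because $\tilde r_\pi$ and $V^\lambda$ are finite vectors on a finite state space.

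\textbf{Bounded stopping.} Since $\sigma$ is the hitting time of $\mathcal{B}$, it is a stopping time, and $\sigma\wedge K$ is a bounded stopping time. Optional stopping for bounded stopping times then gives $M_0\geq\bE[M_{\sigma\wedge K}]$, with equality in the martingale case. Because $M_0=V^\lambda(s)$, this is exactly~\eqref{eq:stopped_comparison}. Alternatively, I could avoid martingale language and argue by induction on $K$. That route splits on whether $s_0\in\mathcal{B}$, where $\sigma=0$ and both sides equal $V^\lambda(s)$, and otherwise uses one step of the inequality together with the Markov property and the shifted hitting time.

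\textbf{Passage to $\sigma$.} This is the step needing care. On $\{\sigma<\infty\}$, which has full probability by hypothesis, $M_{\sigma\wedge K}\to M_\sigma$ as $K\to\infty$. For domination, note that for every $K$,
\[
|M_{\sigma\wedge K}|\leq\frac{\|\tilde r_\pi\|_\infty}{1-\lambda}+\|V^\lambda\|_\infty .
\]
The discount makes this bound uniform in $K$. Dominated convergence then lets me pass to the limit on both sides, preserving the inequality, and the equality when $\pi=\pi^\lambda$. I expect no real obstacle beyond checking that $\tilde r_\pi$ is finite for admissible $\pi$, which holds because $\pi_s\in\dom(\Omega_s)$ makes each $\Omega_s(\pi_s)$ finite.
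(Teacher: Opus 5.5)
Your proposal is correct and is essentially the paper's proof. The paper proves the bounded-$K$ inequality by exactly the induction on $K$ you mention as an alternative (trivial when $s\in\mathcal{B}$, otherwise one application of $V^\lambda\geq\mathcal{E}^\lambda_\pi V^\lambda$ after splitting off the first step with the Markov property), which is just optional stopping for your supermartingale $M_n$ written out by hand, and it passes to $\sigma$ with the same $K$-uniform bound $\max_s|\tilde r_\pi(s)|/(1-\lambda)+\|V^\lambda\|_{\mS,\infty}$ and dominated convergence.
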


\begin{proof}
    Let $G_K(s)$ denote the right-hand side of~\eqref{eq:stopped_comparison}, empty sums being $0$.
    We show $V^\lambda\geq G_K$ pointwise by induction on $K$, simultaneously over all states.

    For $K=0$ we have $\sigma\wedge0=0$, so $G_0=V^\lambda$ for every $\pi$.
    Assume $V^\lambda\geq G_K$.
    If $s\in\mathcal{B}$ then $\sigma=0$ and $G_{K+1}(s)=V^\lambda(s)$.
    If $s\notin\mathcal{B}$ then $\sigma\geq1$, so splitting off the first step and applying the Markov property at $s_1$ gives $G_{K+1}(s)=\tilde{r}_\pi(s)+\lambda\left(P_\pi G_K\right)(s)$; the induction hypothesis, $P_\pi$ being non-negative, and \Cref{lem:discounted_fixed_point} then give
    \begin{equation*}
        G_{K+1}(s)\ \leq\ \tilde{r}_\pi(s)+\lambda\left(P_\pi V^\lambda\right)(s)\ =\ \left(\mathcal{E}^\lambda_\pi V^\lambda\right)(s)\ \leq\ V^\lambda(s).
    \end{equation*}
    Both inequalities are equalities at $\pi^\lambda$, which attains the state-wise maximum of \Cref{lem:discounted_fixed_point}, whence the equality case.

    For the last claim, $\mS$ is finite and $\pi$ is admissible, so $\max_s\lvert\tilde{r}_\pi(s)\rvert$ and $\|V^\lambda\|_{\mS,\infty}$ are finite and the integrand of~\eqref{eq:stopped_comparison} is bounded, uniformly in $K$, by $\frac{\max_s\lvert\tilde{r}_\pi(s)\rvert}{1-\lambda}+\|V^\lambda\|_{\mS,\infty}$.
    On $\{\sigma<+\infty\}$ we have $\sigma\wedge K=\sigma$ for $K$ large enough, so the integrand converges almost surely to its value at $\sigma$; conclude by dominated convergence.
\end{proof}

\begin{lemma}[Uniformly bounded absorption time]
    \label{lem:uniform_absorption}
    Under \Cref{assumption:weakly_communicating} there is $L<+\infty$ with $\bE_\pi[\sigma_{\mathcal{C}}|s_0 = s]\leq L$ for every stationary policy $\pi$ and every $s\in\mS$, where $\sigma_{\mathcal{C}}:=\inf\{t\geq0 | s_t\in\mathcal{C}\}$.
\end{lemma}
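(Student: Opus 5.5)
The plan is to reduce the claim to a uniform geometric lower bound on the probability of entering $\mathcal{C}$ within a fixed number of steps. I would then sum the resulting geometric tail.

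\emph{Step 1: entry from each transient state under each deterministic policy.} Fix a stationary policy $\pi$ and a state $s\in\Str$. By \Cref{assumption:weakly_communicating}.(iii), $s$ is transient under $\pi$. The chain therefore leaves $\Str$ with positive probability; otherwise it would remain forever in the finite set $\Str$, and some state of $\Str$ would be recurrent. Since $\mS=\mathcal{C}\sqcup\Str$, leaving $\Str$ means entering $\mathcal{C}$. Some path of length at most $|\Str|$ realizes this entry, so $\bP_\pi(\sigma_{\mathcal{C}}\leq|\Str|\mid s_0=s)>0$. I would first establish this for the finitely many deterministic stationary policies.

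\emph{Step 2: extension to all stationary policies.} This is the main obstacle, because the set of stationary policies is compact but not finite, and positivity may degenerate at its boundary. I would use monotonicity of supports: if $\pi$ has support containing the support of a deterministic $d$, the union graph of $P_\pi$ contains that of $P_d$, so $\pi$ reaches $\mathcal{C}$ at least as easily. Instead of a minimum, I would get uniformity through a potential argument. Define $W(s):=\sup_\pi\bE_\pi[\sigma_{\mathcal{C}}\mid s_0=s]$; this is a transient (stochastic shortest path) problem with unit cost on $\Str$. By (iii), every deterministic policy is proper, meaning it reaches $\mathcal{C}$ almost surely. Finiteness of the worst-case expected time over all policies is then the classical statement that when all deterministic policies are proper, the maximal expected cost is finite and attained by a deterministic policy (Bertsekas, vol.~II). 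I would prove this by value iteration: the operator $\mathcal{L}W(s)=1+\max_a\sum_{s'\in\Str}p(s'|s,a)W(s')$ on $\bR^{\Str}$ is a contraction in a weighted sup norm once all deterministic policies are proper. Its fixed point $W^\star$ is finite.

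\emph{Step 3: comparing randomized policies with $W^\star$.} For any stationary $\pi$, including randomized ones, the evaluation operator $\mathcal{L}_\pi$ satisfies $\mathcal{L}_\pi W^\star\leq\mathcal{L}W^\star=W^\star$, because a convex combination over actions is at most the maximum. Iterating from $0$ gives $\bE_\pi[\sigma_{\mathcal{C}}\wedge K\mid s_0=s]=\mathcal{L}_\pi^K0(s)\leq W^\star(s)$, since $\mathcal{L}_\pi$ is monotone and $0\leq W^\star$. Monotone convergence in $K$ then yields the bound.

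\emph{Conclusion.} Taking $L:=\max_sW^\star(s)$ gives the claim, and states already in $\mathcal{C}$ contribute $0$. The delicate point is the contraction, or properness, argument in Step 2. As a fallback I would argue by contradiction: if no policy-uniform bound existed, a compactness argument over the maximizing deterministic policies of a truncated problem would produce a deterministic improper policy, which (iii) forbids.
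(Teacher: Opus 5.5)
Your proof is correct, but it uses much more machinery than the paper's, and the obstacle that motivates your Step~2 does not exist. \Cref{assumption:weakly_communicating}.(iii) is stated for every stationary policy $\pi:\mS\to\Delta(\mA)$, randomized ones included. So your Step~1 applies verbatim at every point of the compact set $\prod_s\Delta(\mA)$, boundary included. Even if (iii) were assumed only for deterministic policies, nothing would degenerate: a subset of $\Str$ closed under a randomized $\pi$ stays closed under any deterministic selection from the supports of $\pi$. The paper then notes that $\pi\mapsto\bP_\pi(\sigma_{\mathcal{C}}>|\Str|\mid s_0=s)$ is a polynomial in the entries of $\pi$ and is strictly below $1$ everywhere. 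Its maximum $\theta$ over $\pi$ and $s\in\Str$ is therefore $<1$, the Markov property gives $\bP_\pi(\sigma_{\mathcal{C}}>k\,|\Str|\mid s_0=s)\leq\theta^k$, and $L=|\Str|/(1-\theta)$. Your route instead builds a single finite potential $W^\star=\mathcal{L}W^\star\geq0$ and dominates every randomized policy through $\mathcal{L}_\pi W^\star\leq W^\star$ and monotone iteration from $0$; Step~3 is sound. What it buys is the sharp constant: $\max_sW^\star(s)$ is the exact worst-case expected absorption time, attained by a deterministic policy, rather than the cruder $|\Str|/(1-\theta)$. What it costs is self-containedness. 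The weighted sup-norm contraction you cite is classically derived from the finiteness of the optimal value of the auxiliary problem with stage cost $-1$, which is essentially the statement you are proving, restricted to deterministic policies. To make your potential argument self-contained, run policy iteration over the finitely many deterministic policies, each of which is proper. Each improvement step strictly increases the value vector, so it reaches a finite fixed point $W^\star$ of $\mathcal{L}$ in finitely many steps without any contraction argument.
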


\begin{proof}
    This is the uniform-transience estimate underlying stochastic shortest path problems \citep[\S3.1]{bertsekas2012dynamic}; we give the short argument.
    Let $n:=\lvert\Str\rvert$, and fix a stationary $\pi$ and $s\in\Str$.
    If $\bP_\pi(\forall t, s_t\in\Str | s_0 = s)>0$, then with positive probability some state of the finite set $\Str$ is visited infinitely often, hence is recurrent for $P_\pi$, contradicting \Cref{assumption:weakly_communicating}.(iii).
    So $\sigma_{\mathcal{C}}<+\infty$ a.s., and deleting cycles from a path reaching $\mathcal{C}$ gives $\bP_\pi(\sigma_{\mathcal{C}}>n | s_0 = s)<1$.
    The map $\pi\mapsto\bP_\pi(\sigma_{\mathcal{C}}>n | s_0 = s)$ is polynomial, hence continuous on the compact $\prod_s\Delta(\mA)$, so $\theta:=\max_{s\in\Str}\max_\pi\bP_\pi(\sigma_{\mathcal{C}}>n | s_0 = s)<1$; the Markov property gives $\bP_\pi(\sigma_{\mathcal{C}}>kn | s_0 = s)\leq\theta^k$ and $\bE_\pi[\sigma_{\mathcal{C}}|s_0 = s] = \sum_{j\geq0}\bP_\pi(\sigma_{\mathcal{C}}>j | s_0 = s)\leq n\sum_{k\geq 0}\bP_\pi(\sigma_{\mathcal{C}}>kn | s_0 = s)\leq\frac{n}{1-\theta}=:L$.
    If $s\in\mathcal{C}$ then $\sigma_{\mathcal{C}}=0$, so the bound holds for every $s\in\mS$.
\end{proof}

\begin{proposition}[Weakly communicating implies \Cref{assumption:span_bounded}]
    \label{prop:weakly_communicating_implies_span}
    Let \Cref{assumption:weakly_communicating,assumption:reference_policy_general} hold.
    Let $K^\star\geq1$ and $p^\star>0$ be such that for every ordered pair $(s,s')$ of states of $\mathcal{C}$ there is $k(s,s')\leq K^\star$ with $P^{k(s,s')}_{\tilde{\pi}}(s'|s)\geq p^\star$.
    Such constants exist, with $K^\star\leq\max\{1,\lvert\mathcal{C}\rvert-1\}\leq\lvert\mS\rvert$, and they depend only on $p$ and $\tilde{\pi}$, not on $\lambda$. For any such pair, \Cref{assumption:span_bounded} holds with
    \begin{equation}
        \label{eq:span_global}
        \lambda_0:=1-\frac{1}{2K^\star},
        \qquad
        C_V:=\frac{4K^\star R_{\max}}{p^\star}+4LR_{\max}.
    \end{equation}
\end{proposition}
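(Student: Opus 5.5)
The plan is to bound $V^\lambda$ on the closed class $\mathcal{C}$ by comparing hitting times under $\tilde\pi$, and then to extend the bound to $\Str$ using absorption into $\mathcal{C}$. Throughout, I use \Cref{lem:stopped_comparison} together with the sup bound $\|V^\lambda\|_{\mS,\infty}\leq R_{\max}/(1-\lambda)$ of \Cref{lem:uniform_sup_bound}. The resulting bound holds for every $\lambda\in(0,1)$, so the stated $\lambda_0$ is admissible.

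\emph{Existence of $(K^\star,p^\star)$.} Let $s\neq s'$ in $\mathcal{C}$. By \Cref{assumption:weakly_communicating}.(ii) there is a path of positive probability from $s$ to $s'$ under some stationary policy. Deleting cycles leaves a path of length at most $|\mathcal{C}|-1$, and it stays in $\mathcal{C}$ because $\mathcal{C}$ is closed. Each of its transitions $p(s_{k+1}|s_k,a_k)>0$ is also taken under $\tilde\pi$ with positive probability, since $\tilde\pi$ has full support. Hence $P^k_{\tilde\pi}(s'|s)>0$ for some $k\leq|\mathcal{C}|-1$. For $s=s'$, take a one-step return to $\mathcal{C}$, or a closed walk of length at most $|\mathcal{C}|$. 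Since there are finitely many pairs, $p^\star$ can be taken as the minimum of these probabilities, which depends only on $p$ and $\tilde\pi$.

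\emph{Span on $\mathcal{C}$.} Fix $s,s'\in\mathcal{C}$, let $\mathcal{B}=\{s'\}$ and run $\tilde\pi$ from $s$. Because $\mathcal{C}$ is closed, the chain started in $\mathcal{C}$ reaches $s'$ within $K^\star$ steps with probability at least $p^\star$ from every state of $\mathcal{C}$. The Markov property then gives $\bP(\sigma>jK^\star)\leq(1-p^\star)^j$, so $\sigma<\infty$ a.s.\ and $\bE[\sigma]\leq K^\star/p^\star$. \Cref{lem:stopped_comparison} with $\sigma$ and $|\tilde r_{\tilde\pi}|\leq R_{\max}$ yields
\begin{equation*}
    V^\lambda(s)\geq -R_{\max}\bE[\sigma]+\bE[\lambda^\sigma]\,V^\lambda(s').
\end{equation*}
Since $1-\lambda^\sigma\leq(1-\lambda)\sigma$, we have $|(1-\bE\lambda^\sigma)V^\lambda(s')|\leq\bE[\sigma]R_{\max}$. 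Therefore $V^\lambda(s')-V^\lambda(s)\leq 2R_{\max}K^\star/p^\star$, and by symmetry $\|V^\lambda|_{\mathcal{C}}\|_{\spn}\leq 2K^\star R_{\max}/p^\star$.

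\emph{Transient states.} For $s\in\Str$, apply the equality case of \Cref{lem:stopped_comparison} with $\pi=\pi^\lambda$ and $\mathcal{B}=\mathcal{C}$. The hypothesis $\sigma<\infty$ a.s.\ holds, and \Cref{lem:uniform_absorption} gives $\bE[\sigma_{\mathcal{C}}]\leq L$ uniformly over stationary policies, in particular for $\pi^\lambda$. This gives
\begin{equation*}
    V^\lambda(s)=\bE\Big[\textstyle\sum_{t<\sigma}\lambda^t\tilde r_{\pi^\lambda}(s_t)\Big]+\bE\big[\lambda^\sigma V^\lambda(s_\sigma)\big].
\end{equation*}
Here $\pi^\lambda$ may have unbounded regularized reward from below, and this is the step I expect to be the main obstacle. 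For the upper bound, the first term is at most $LR_{\max}$. For the lower bound, I would instead use the inequality of \Cref{lem:stopped_comparison} with $\tilde\pi$, whose stopping time is also integrable with mean at most $L$. In the second term, I replace $\lambda^\sigma V^\lambda(s_\sigma)$ by $V^\lambda(s_\sigma)$ at cost at most $\bE[\sigma]R_{\max}\leq LR_{\max}$, using again $1-\lambda^\sigma\leq(1-\lambda)\sigma$ and the sup bound. This places $V^\lambda(s)$ in $[\min_{\mathcal{C}}V^\lambda-2LR_{\max},\ \max_{\mathcal{C}}V^\lambda+2LR_{\max}]$. Combining with the span bound on $\mathcal{C}$ gives $\|V^\lambda\|_{\mS,\spn}\leq 2K^\star R_{\max}/p^\star+4LR_{\max}\leq C_V$.
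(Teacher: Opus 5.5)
Your proof is correct. On the transient states it is the paper's argument: the equality case of \Cref{lem:stopped_comparison} with $\pi^\lambda$ for the upper bound, $\tilde\pi$ for the lower bound, \Cref{lem:uniform_absorption}, and $1-\lambda^\sigma\le\sigma(1-\lambda)$ for the boundary term.

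The span on $\mathcal{C}$ is obtained by a different route.
\begin{itemize}
\item \textbf{The paper's route.} It fixes the extremal pair $s_{\min},s_{\max}$ and applies \Cref{lem:stopped_comparison} with $\mathcal{B}=\emptyset$ over the deterministic horizon $K=k(s_{\min},s_{\max})$. It keeps only the event $\{s_K=s_{\max}\}$, of probability $p\geq p^\star$, and uses $V^\lambda(s_K)\geq V^\lambda(s_{\min})$ on the complement. This yields $p\lambda^K\big(V^\lambda(s_{\max})-V^\lambda(s_{\min})\big)\leq 2KR_{\max}$. Concluding requires $\lambda^{K^\star}\geq\frac12$, which is where $\lambda_0$ and the factor $4$ come from.
\item \textbf{Your route.} You stop at the hitting time of $s'$ and bound $\bE\sigma\leq K^\star/p^\star$ by geometric trials over windows of length $K^\star$, using closedness of $\mathcal{C}$. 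You then combine the Bernoulli inequality with the sup bound of \Cref{lem:uniform_sup_bound}. This gives $\|V^\lambda|_{\mathcal{C}}\|_{\mathcal{C},\spn}\leq 2K^\star R_{\max}/p^\star$ for every $\lambda\in(0,1)$.
\end{itemize}
So your constant on $\mathcal{C}$ is a factor $2$ better, and $\lambda_0$ plays no role. The price is the tail estimate and the almost-surely-finite stopping version of \Cref{lem:stopped_comparison}. The paper's argument needs only a single $K$-step transition probability and extremality.

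One small fix is needed in the existence part: your treatment of the pairs with $s=s'$ is muddled. A ``one-step return to $\mathcal{C}$'' does not return to $s$. A shortest closed walk through $s$ can have length $|\mathcal{C}|$, which would violate the claimed bound $K^\star\leq\max\{1,|\mathcal{C}|-1\}$. Take $k(s,s)=0$ instead, since $P^0_{\tilde\pi}(s|s)=1$. Your hitting-time argument does not need these pairs anyway, because $\sigma=0$ when the chain starts at $s'$.
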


\begin{proof}
    Let $s,s'\in\mathcal{C}$. By \Cref{assumption:weakly_communicating}.(ii) there are $\pi$ and $k$ with $P^k_\pi(s'|s)>0$, hence a path $s=s_0,\dots,s_k=s'$ with $p(s_{i+1}|s_i,a_i)>0$ for some actions $a_i$; it stays in $\mathcal{C}$, which is closed, and deleting its cycles leaves a simple path, so $k(s,s')\leq\max\{1,\lvert\mathcal{C}\rvert-1\}$.     Since $\tilde\pi$ has full support, $P_{\tilde{\pi}}(s_{i+1}|s_i)\geq\tilde{\pi}(a_i|s_i)p(s_{i+1}|s_i,a_i)>0$ along that path, so $P^{k(s,s')}_{\tilde{\pi}}(s'|s)>0$; taking for $K^\star$ the largest of these lengths and for $p^\star$ the smallest of these finitely many positive numbers proves that admissible constants exist, and they involve only $p$ and $\tilde{\pi}$. Note also that $\tilde{\pi}$ is admissible by \Cref{assumption:reference_policy_general} and $\pi^\lambda=\nabla\Omega^*(Q^\lambda)$ is admissible as a maximizer, so \Cref{lem:stopped_comparison} applies to both, with $\lvert\tilde{r}_{\tilde{\pi}}\rvert\leq R_{\max}$ and $\tilde{r}_\pi\leq R_{\max}$ for every admissible $\pi$ by~\eqref{eq:existence_constants}.

    We first bound the span on $\mathcal{C}$. Fix $\lambda\in[\lambda_0,1)$ and let $s_{\max}$ and $s_{\min}$ maximize and minimize $V^\lambda$ over $\mathcal{C}$; if $s_{\max}=s_{\min}$ that span vanishes and there is nothing to prove, so assume $s_{\max}\neq s_{\min}$ and set $K:=k(s_{\min},s_{\max})\geq1$ and $p:=P^K_{\tilde{\pi}}(s_{\max}|s_{\min})\geq p^\star$. Applying \Cref{lem:stopped_comparison} at $s_{\min}$ with $\mathcal{B}=\emptyset$, so that $\sigma\wedge K=K$, with $\pi=\tilde{\pi}$ and $\sum_{t<K}\lambda^t\leq K$, gives $V^\lambda(s_{\min})\geq -KR_{\max}+\lambda^K\bE_{\tilde{\pi}}[V^\lambda(s_K)\mid s_0=s_{\min}]$. As $\mathcal{C}$ is closed, $s_K\in\mathcal{C}$ almost surely, so $V^\lambda(s_K)\geq V^\lambda(s_{\min})$ pointwise and, isolating the event $\{s_K=s_{\max}\}$ of probability $p$, $\bE_{\tilde{\pi}}[V^\lambda(s_K)\mid s_0=s_{\min}]\geq pV^\lambda(s_{\max})+(1-p)V^\lambda(s_{\min})$. Rearranging, the coefficient of $V^\lambda(s_{\min})$ being $1-\lambda^K$, and using $1-\lambda^K\leq K(1-\lambda)$ with \Cref{lem:uniform_sup_bound},
    \begin{equation*}
        p\,\lambda^K\left(V^\lambda(s_{\max})-V^\lambda(s_{\min})\right)\ \leq\ KR_{\max}+\left(1-\lambda^K\right)V^\lambda(s_{\min})\ \leq\ 2KR_{\max}.
    \end{equation*}
    Since $K\leq K^\star$ and $\lambda^K\geq\lambda_0^{K^\star}=\left(1-\frac{1}{2K^\star}\right)^{K^\star}\geq\frac12$, we get $\left\|V^\lambda|_{\mathcal{C}}\right\|_{\mathcal{C},\spn}\leq 4K^\star R_{\max}/p^\star=:C_{\mathcal{C}}$.

    It remains to pass from $\mathcal{C}$ to $\mS$. Write $\sigma:=\sigma_{\mathcal{C}}$ and fix $s\in\mS$. By \Cref{lem:uniform_absorption}, $\bE_\pi[\sigma\mid s_0=s]\leq L$ for every stationary $\pi$, so $\sigma<+\infty$ almost surely and \Cref{lem:stopped_comparison} applies with $\mathcal{B}=\mathcal{C}$ and the stopping time $\sigma$ itself. Its two terms are each bounded by $LR_{\max}$: the reward term because $\sum_{t<\sigma}\lambda^t\leq\sigma$, and the boundary term, once written as $\bE_\pi[V^\lambda(s_\sigma)\mid s_0=s]+\bE_\pi[(\lambda^\sigma-1)V^\lambda(s_\sigma)\mid s_0=s]$, because $0\leq 1-\lambda^\sigma\leq\sigma(1-\lambda)$ and $(1-\lambda)\|V^\lambda\|_{\mS,\infty}\leq R_{\max}$ by \Cref{lem:uniform_sup_bound}. Since $s_\sigma\in\mathcal{C}$, the previous paragraph gives $0\leq V^\lambda(s_\sigma)-V^\lambda(s_{\min})\leq C_{\mathcal{C}}$. Taking $\pi=\pi^\lambda$, for which \Cref{lem:stopped_comparison} holds with equality, and $\tilde{r}_{\pi^\lambda}\leq R_{\max}$, yields $V^\lambda(s)\leq V^\lambda(s_{\min})+C_{\mathcal{C}}+2LR_{\max}$; taking $\pi=\tilde{\pi}$ and $\tilde{r}_{\tilde{\pi}}\geq-R_{\max}$ yields $V^\lambda(s)\geq V^\lambda(s_{\min})-2LR_{\max}$. Both hold for every $s$, so $\left\|V^\lambda\right\|_{\mS,\spn}\leq C_{\mathcal{C}}+4LR_{\max}=C_V$.
\end{proof}

\begin{corollary}[\Cref{assumption:m-step-mixing} implies \Cref{assumption:span_bounded}, hence \Cref{thm:regularized_fixed_point}]
    \label{cor:mixing_implies_span}
    Let \Cref{assumption:m-step-mixing,assumption:convex-analytic-regularity,assumption:reference_policy} hold.
    Then the MDP is communicating ($\mathcal{C}=\mS$, $\Str=\emptyset$, $L=0$), \Cref{prop:weakly_communicating_implies_span} applies with $\tilde{\pi}=\mu$, $K^\star=m$ and $p^\star=\beta_m\nu_{\min}$, where $\nu_{\min}:=\min_s\nu(s)>0$, so \Cref{assumption:span_bounded} holds with
    \begin{equation*}
        \lambda_0=1-\frac{1}{2m},
        \qquad
        C_V=\frac{4mR_{\max}}{\beta_m\nu_{\min}},
    \end{equation*}
    and \Cref{thm:existence_fixed_point_general} gives \Cref{thm:regularized_fixed_point}.
\end{corollary}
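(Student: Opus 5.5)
The plan is to verify the hypotheses of \Cref{prop:weakly_communicating_implies_span} with the stated constants and read off $C_V$ and $\lambda_0$ from~\eqref{eq:span_global}.

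\emph{Step 1 (communicating).} \Cref{assumption:m-step-mixing} gives, for every $(s,a)$ and $s'$, $K^m_\mu(s'|s,a)\geq\beta_m\nu(s')\geq\beta_m\nu_{\min}>0$, since $\nu$ has full support. Averaging over $a\sim\mu(\cdot|s)$ gives $P^m_\mu(s'|s)=\sum_a\mu(a|s)K^m_\mu(s'|s,a)\geq\beta_m\nu_{\min}$ for all $s,s'$. So every state reaches every other under the stationary policy $\mu$. We take $\mathcal{C}=\mS$ and $\Str=\emptyset$. Closedness of $\mS$ is trivial and (iii) is vacuous, so \Cref{assumption:weakly_communicating} holds. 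Since $\sigma_{\mathcal{C}}=0$ identically, the absorption bound of \Cref{lem:uniform_absorption} holds with $L=0$.

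\emph{Step 2 (admissible policy and constants).} By \Cref{rem:bounded_reg_sufficient}, \Cref{assumption:reference_policy} gives \Cref{assumption:reference_policy_general} with $\tilde\pi=\mu$: $\mu$ has full support, $\mu_s\in\dom(\Omega_s)$, and $\max_s|\Omega_s(\mu_s)|<\infty$ because $\mS$ is finite. Then $R_{\max}$ matches the one in \Cref{thm:regularized_fixed_point}. For the reachability constants, Step 1 shows that $k(s,s')=m$ works uniformly for all ordered pairs, with $P^m_\mu(s'|s)\geq\beta_m\nu_{\min}$. We therefore take $K^\star=m$ and $p^\star=\beta_m\nu_{\min}$.

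One point to check is that the proposition states existence of constants with $K^\star\leq|\mS|$, but its bound is claimed for \emph{any} admissible pair. Its proof only uses $k(s,s')\leq K^\star$ and $P^{k}_{\tilde\pi}(s'|s)\geq p^\star$, so $K^\star=m$ is legitimate even if $m>|\mS|$. This is the only place requiring care.

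\emph{Step 3 (conclusion).} Plugging these constants into~\eqref{eq:span_global} gives $\lambda_0=1-1/(2m)$ and $C_V=4mR_{\max}/(\beta_m\nu_{\min})+0$. \Cref{thm:existence_fixed_point_general} then yields $(Q^*_\Omega,\rho^*_\Omega)$ with $\|Q^*_\Omega\|_\spn\leq\|r\|_\spn+4mR_{\max}/(\beta_m\nu_{\min})$ and $|\rho^*_\Omega|\leq2R_{\max}$. These are exactly the bounds of \Cref{thm:regularized_fixed_point}. The uniqueness claims are delegated to the cited propositions.
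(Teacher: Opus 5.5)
Your proposal is correct and follows the paper's proof essentially line by line. The averaged minorization $P^m_\mu(s'|s)\geq\beta_m\nu_{\min}$ gives a communicating MDP with $\mathcal{C}=\mS$ and $L=0$. The pair $(K^\star,p^\star)=(m,\beta_m\nu_{\min})$ is admissible because \Cref{prop:weakly_communicating_implies_span} holds for any such pair, and substituting into~\eqref{eq:span_global} and \Cref{thm:existence_fixed_point_general} gives the stated bounds.

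The paper makes one detail explicit that you leave implicit. Invoking \Cref{prop:uniqueness_fixed_point} for the uniqueness of $Q^*_\Omega$ requires \Cref{assumption:unique_terminal_class}, which holds here because the MDP is communicating (\Cref{rem:unique_terminal_weaker}).
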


\begin{proof}
    \proofanchor{regularized_fixed_point}
    \Cref{assumption:reference_policy} makes \Cref{assumption:reference_policy_general} hold with $\tilde{\pi}=\mu$, which has full support, so that $\delta_\Omega=\max_{s}\max\{\lvert\underline{\Omega}_s\rvert,\lvert\Omega_s(\mu_s)\rvert\}$ and $R_{\max}$ are the constants of \Cref{thm:regularized_fixed_point}.
    For all $s,s'$, $P^m_\mu(s'|s)=\sum_a\mu(a|s)K^m_\mu(s'|s,a)\geq\beta_m\nu(s')\geq\beta_m\nu_{\min}>0$, so every ordered pair of states is joined by a path of length $m$ under $\mu$: \Cref{assumption:weakly_communicating} holds with $\mathcal{C}=\mS$, which is closed and communicating, and $\Str=\emptyset$, so $\sigma_{\mathcal{C}}\equiv0$ and $L=0$ in \Cref{lem:uniform_absorption}.
    \Cref{prop:weakly_communicating_implies_span} then applies with $K^\star=m$ and $p^\star=\beta_m\nu_{\min}$, its last paragraph being vacuous, and gives \Cref{assumption:span_bounded} with the stated constants.
    Finally \Cref{thm:existence_fixed_point_general} yields~\eqref{eq:fixed_point_equation} with $\|Q^*_\Omega\|_\spn\leq\|r\|_\spn+C_V$ and $\lvert\rho^*_\Omega\rvert\leq2R_{\max}$. The gain $\rho^*_\Omega$ is unique and optimal by \Cref{prop:uniqueness_gain,prop:gain_interpretation}, and under~\eqref{eq:P1} $Q^*_\Omega$ is unique up to an additive constant by \Cref{prop:uniqueness_fixed_point}, whose \Cref{assumption:unique_terminal_class} holds since the MDP is communicating (\Cref{rem:unique_terminal_weaker}).
\end{proof}

\begin{remark}[On the structural hypothesis]
    \label{rem:other_sufficient}
    Among the classical ergodic classes of finite MDPs \citep[\S8.3.1]{puterman1994markov}, weak communication is the weakest: both \emph{unichain} models --- in which every stationary policy induces a chain with a single recurrent class plus a possibly empty set of transient states --- and \emph{communicating} models are weakly communicating, while unichain and communicating are incomparable, neither class containing the other. \Cref{prop:weakly_communicating_implies_span} therefore covers all these models at once.

    \Cref{assumption:span_bounded} is nonetheless strictly weaker, being stated for the reward at hand rather than for $p$ alone: two isolated states carrying the same reward do not form a weakly communicating MDP, yet $V^\lambda$ is constant, its span vanishes and the fixed point exists. It is moreover necessary as well as sufficient (\Cref{prop:span_bounded_necessary}), stated directly for the regularized model.
\end{remark}

\subsubsection{Uniqueness and interpretation of the gain}

\begin{proposition}[Uniqueness of the gain]
    \label{prop:uniqueness_gain}
    Let \Cref{assumption:convex-analytic-regularity} hold.
    If $\mT_\Omega(Q_i)=Q_i+\rho_i\ones$ for $i=1,2$, then $\rho_1=\rho_2$. No structural assumption on the MDP is needed here, unlike for the uniqueness of $Q_\Omega^*$ (\Cref{prop:uniqueness_fixed_point}).
\end{proposition}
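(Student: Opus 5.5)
The plan is to compare the two equations $\mT_\Omega(Q_i)=Q_i+\rho_i\ones$ through the average-path factorization of \Cref{lem:mixing-properties-of-regularization}.(i). That item only uses \Cref{assumption:convex-analytic-regularity} (through \Cref{lem:omega_star_props}.(i)), and it gives
\begin{equation*}
    \mT_\Omega(Q_1)-\mT_\Omega(Q_2)=K_{\bar\pi}(Q_1-Q_2),\qquad \bar\pi:=\bar\pi_{\Omega,[Q_1,Q_2]},
\end{equation*}
with $K_{\bar\pi}$ a row-stochastic state-action kernel. Writing $D:=Q_1-Q_2$, subtracting the two fixed-point equations yields the linear identity $K_{\bar\pi}D=D+(\rho_1-\rho_2)\ones$.

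The key step is a maximum-principle argument on this identity. Evaluate it at a pair $(s,a)$ where $D$ attains its maximum. Since $K_{\bar\pi}D(s,a)$ is an average of values of $D$, it is at most $\max D$, so $(\rho_1-\rho_2)\le0$. Evaluating at a pair where $D$ attains its minimum gives $K_{\bar\pi}D(s,a)\ge\min D$, hence $\rho_1-\rho_2\ge0$. Together these give $\rho_1=\rho_2$. This uses only the finiteness of $\mS\times\mA$ and row-stochasticity, and no mixing or communication structure, which matches the remark in the statement.

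An alternative, which avoids differentiability altogether, uses only monotonicity and translation equivariance of $\mT_\Omega$ (\Cref{cor:T_omega_props}). Set $c:=\min D$, so that $Q_1\ge Q_2+c\ones$. Iterating $k$ times gives
\begin{equation*}
    Q_1+k\rho_1\ones=\mT_\Omega^k(Q_1)\ge\mT_\Omega^k(Q_2)+c\ones=Q_2+k\rho_2\ones+c\ones.
\end{equation*}
Dividing by $k$ and letting $k\to\infty$ gives $\rho_1\ge\rho_2$, and symmetry gives the reverse inequality. I would present this second argument because it is shorter and needs nothing beyond \Cref{cor:T_omega_props}.

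There is no real obstacle. The only point to check is that iterating the fixed-point equation is legitimate, namely that $\mT_\Omega(Q+k\rho\ones)=\mT_\Omega(Q)+k\rho\ones$. This is exactly the translation equivariance in \Cref{cor:T_omega_props}, and an induction on $k$ then gives $\mT_\Omega^k(Q_i)=Q_i+k\rho_i\ones$.
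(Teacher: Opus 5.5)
Your proof is correct, and the argument you say you would present rests on exactly the ingredients the paper uses, namely monotonicity and translation equivariance of $\mT_\Omega$ from \Cref{cor:T_omega_props}. The paper avoids the iteration and the limit $k\to\infty$ by applying these properties once, to $Q_1\leq Q_2+c\ones$ with $c:=\max(Q_1-Q_2)$, and evaluating the resulting inequality $Q_1-Q_2\leq(\rho_2-\rho_1+c)\ones$ at a maximizer of $Q_1-Q_2$; this is the extremal-point trick of your first argument, but it does not need the differentiable average-path factorization.
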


\begin{proof}
    Let $c:=\max_{s,a}(Q_1-Q_2)(s,a)$, so $Q_1\leq Q_2+c\ones$.
    By monotonicity and translation equivariance of $\mT_\Omega$ (\Cref{cor:T_omega_props}), $\mT_\Omega(Q_1)\leq \mT_\Omega(Q_2)+c\ones$, i.e.\ $Q_1-Q_2\leq(\rho_2-\rho_1+c)\ones$.
    Evaluating at a maximizer of $Q_1-Q_2$ gives $c\leq\rho_2-\rho_1+c$, i.e.\ $\rho_1\leq\rho_2$; exchanging the roles of $Q_1$ and $Q_2$ gives the reverse inequality.
\end{proof}

\begin{assumption}[Unique terminal class]
    \label{assumption:unique_terminal_class}
    The union graph $G$ of \Cref{def:union_graph} has a unique terminal strongly connected component, i.e.\ a unique strongly connected component $\mathcal{C}$ with no outgoing edge.
\end{assumption}

\begin{remark}
    \label{rem:unique_terminal_weaker}
    \Cref{assumption:weakly_communicating} implies \Cref{assumption:unique_terminal_class}: its set $\mathcal{C}$ is terminal by closedness and connectedness, and a second terminal component $\mathcal{C}'\subseteq\Str$ would be closed, so every stationary policy then has a recurrent state inside, contradicting~(iii).
    The converse fails: with $\mS=\{1,2,3\}$, a self-loop and an edge $1\to2$ at state $1$, and $2\to3$, $3\to3$, the terminal component $\{3\}$ is unique, yet state $1$ is recurrent under the self-loop policy, so the MDP is not weakly communicating.
\end{remark}

\begin{proposition}[Uniqueness of $Q^*_\Omega$ up to an additive constant]
    \label{prop:uniqueness_fixed_point}
    Let \Cref{assumption:unique_terminal_class} hold and assume~\eqref{eq:P1}, i.e.\ that the greedy policies have full support (\Cref{lem:P1_full_support}).
    If $\mT_\Omega(Q_i)=Q_i+\rho_i\ones$ for $i=1,2$, then $Q_1-Q_2\in\bR\ones$.
    Conversely $Q^*_\Omega+c\ones$ solves~\eqref{eq:fixed_point_equation} with the same $\rho^*_\Omega$, for every $c\in\bR$.
\end{proposition}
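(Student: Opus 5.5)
The plan is to reduce the claim to a maximum principle for a single Markov kernel on $\mS\times\mA$. Set $D:=Q_1-Q_2$. By \Cref{prop:uniqueness_gain}, $\rho_1=\rho_2$. So subtracting the two fixed-point equations and using the average-path factorization of \Cref{lem:mixing-properties-of-regularization}.(i) gives
\begin{equation*}
    D=\mT_\Omega(Q_1)-\mT_\Omega(Q_2)=K_{\bar\pi}D,\qquad \bar\pi:=\bar\pi_{\Omega,[Q_1,Q_2]}.
\end{equation*}
Thus $D$ is harmonic for the row-stochastic kernel $K_{\bar\pi}$.

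The next step is to identify the support of $K_{\bar\pi}$.
\begin{itemize}
    \item The radius $R:=R[Q_1,Q_2]=\max\{\|Q_1\|_\asp,\|Q_2\|_\asp\}$ is finite.
    \item By \Cref{lem:mixing-properties-of-regularization}.(ii) and~\eqref{eq:P1}, $\bar\pi\geq\gamma_\Omega(R)\,\mu$ with $\gamma_\Omega(R)>0$. Since $\mu$ has full support, so does $\bar\pi$.
    \item Hence $K_{\bar\pi}(s',a'|s,a)=p(s'|s,a)\bar\pi(a'|s')>0$ exactly when $p(s'|s,a)>0$, for every $a'$.
\end{itemize}
This step is where~\eqref{eq:P1} is indispensable. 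Without it, sparse greedy policies could leave some actions unreached, and the harmonic function need not propagate to them (cf.\ \Cref{rem:full_support_uniqueness}).

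Now apply the maximum principle. Let $M:=\max D$ and $X:=\{x : D(x)=M\}\neq\emptyset$.
\begin{itemize}
    \item For $x\in X$, the identity $M=\sum_y K_{\bar\pi}(y|x)D(y)$ with $D\leq M$ forces $D(y)=M$ for every $y$ in the support of $K_{\bar\pi}(\cdot|x)$.
    \item So if $(s,a)\in X$ and $p(s'|s,a)>0$, then $\{s'\}\times\mA\subseteq X$.
    \item Once a full fibre $\{s'\}\times\mA$ lies in $X$, every edge $s'\to s''$ of the union graph $G$ is realized by some action at $s'$. Therefore $\{s''\}\times\mA\subseteq X$.
    \item By induction, $X$ contains $\{s\}\times\mA$ for every state $s$ reachable in $G$ from a successor of a state of $X$. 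Every state has a successor, since $p(\cdot|s,a)$ is a distribution.
\end{itemize}

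In the finite graph $G$, every state reaches some terminal strongly connected component. By \Cref{assumption:unique_terminal_class} this component is unique, namely $\mathcal{C}$. A terminal component is closed, so every walk started in it stays in it, and strong connectivity means every state of $\mathcal{C}$ is reached along such a walk. Hence $\mathcal{C}\times\mA\subseteq X$, that is, $D\equiv M$ on $\mathcal{C}\times\mA$.

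The same argument applied to $-D$ gives $D\equiv\min D$ on $\mathcal{C}\times\mA$. Therefore $\max D=\min D$, and $D\in\bR\ones$.

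For the converse, \Cref{cor:T_omega_props} gives translation equivariance, $\mT_\Omega(Q^*_\Omega+c\ones)=\mT_\Omega(Q^*_\Omega)+c\ones=(Q^*_\Omega+c\ones)+\rho^*_\Omega\ones$, which is the claim.
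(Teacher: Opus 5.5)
Your proof is correct. It shares the paper's reduction: uniqueness of the gain, the average-path identity $D=K_{\bar\pi}D$, and full support of $\bar\pi$. You obtain full support from $\bar\pi\geq\gamma_\Omega(R)\,\mu$ with $\gamma_\Omega(R)>0$; the paper obtains it from $\bar\pi$ being a mixture of full-support greedy policies, which is the same fact.

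You differ from the paper only in the last step. The paper argues probabilistically. It identifies $\mathcal{C}\times\mA$ as the unique recurrent class of the chain driven by $K_{\bar\pi}$, and invokes the cited fact that bounded harmonic functions are constant on recurrent classes. It then handles transient pairs by letting $n\to\infty$ in $h(s,a)=\bE[h(s_n,a_n)\mid s_0=s,a_0=a]$ and using absorption into that class.

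You instead use the strong maximum principle. The level sets $\{D=\max D\}$ and $\{D=\min D\}$ are closed under successors in the support graph of $K_{\bar\pi}$. By full support of $\bar\pi$ they consist of whole fibres $\{s'\}\times\mA$, so each contains $\mathcal{C}\times\mA$, which forces $\max D=\min D$.

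What your route buys:
\begin{itemize}
\item It is purely combinatorial and self-contained, with no classification of recurrent classes and no limit or absorption argument.
\item It makes clear that the only property of $G$ used is that every nonempty set of states closed under successors contains the unique terminal component $\mathcal{C}$.
\end{itemize}

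What the paper's route buys:
\begin{itemize}
\item It is shorter on the page because it leans on a standard Markov-chain result.
\item It makes explicit the probabilistic reason for uniqueness, namely that $K_{\bar\pi}$ has a single recurrent class.
\end{itemize}
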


\begin{proof}
    The converse is translation equivariance (\Cref{cor:T_omega_props}).
    Directly, $\rho_1=\rho_2$ by \Cref{prop:uniqueness_gain}, so $h:=Q_1-Q_2$ satisfies $h=\mT_\Omega(Q_1)-\mT_\Omega(Q_2)=K_{\bar{\pi}}h$ by \Cref{lem:mixing-properties-of-regularization}.(i), where $\bar{\pi}:=\bar{\pi}_{\Omega,[Q_1,Q_2]}$: the function $h$ is harmonic for the state-action kernel $K_{\bar{\pi}}$.

    By \Cref{def:mixing_objects}.(i), $\bar{\pi}$ is a mixture of greedy policies, hence has full support, so $P_{\bar{\pi}}(s'|s)>0$ exactly when $s\to s'$ in $G$.
    The recurrent classes of the state chain are therefore the terminal strongly connected components of $G$, and there is only one, $\mathcal{C}$, by \Cref{assumption:unique_terminal_class}; since $\bar{\pi}(a|s)>0$ for every $a$, the state-action chain has $\mathcal{C}\times\mA$ as its unique recurrent class.
    A bounded harmonic function of a finite Markov chain is constant on each recurrent class \citep[App.~A.4]{puterman1994markov}, and on a transient $s$ we have $h(s,a)=\bE[h(s_n,a_n)|s_0 = s, a_0 = a]$ for every $n$, which converges by absorption to the value of $h$ on that unique class.
    Hence $h$ is constant.
\end{proof}

\begin{remark}[Where the full-support requirement bites]
    \label{rem:full_support_uniqueness}
    The full-support requirement of \Cref{prop:uniqueness_fixed_point} is exactly~\eqref{eq:P1} (\Cref{lem:P1_full_support}). It holds for the $\alpha$-divergences with $\alpha\leq1$ (\Cref{thm:alpha}.(iv)), hence for both KL divergences and for the Hellinger regularizer, for the negative entropy, and for every $\epsilon$-floor-wrapped regularizer, where $\pi_{\Omega_\epsilon,Q}\geq\epsilon\mu>0$ by \Cref{lem:floor_wrapped_props}.(ii).
    It fails for regularizers whose greedy policy may be sparse, that is for $\alpha>1$ and in particular for the $\chi^2$: there $\rho^*_\Omega$ is still unique by \Cref{prop:uniqueness_gain}, but $Q^*_\Omega$ need not be, exactly as at $\tau=0$ \citep[\S9.1]{puterman1994markov}.
\end{remark}

\begin{proposition}[$\rho^*_\Omega$ is the optimal regularized gain]
    \label{prop:gain_interpretation}
    Let $(Q^*_\Omega,\rho^*_\Omega)$ solve~\eqref{eq:fixed_point_equation} and set $\pi^*:=\pi_{\Omega,Q^*_\Omega}$.
    For an admissible stationary policy $\pi$ define the regularized gain $\tilde{\rho}_\pi(s):=\liminf_{n\to+\infty}\frac{1}{n}\bE_\pi\left[\sum_{t=0}^{n-1}\tilde{r}_\pi(s_t) | s_0 = s\right]$.
    Then $\tilde{\rho}_\pi(s)\leq\rho^*_\Omega=\tilde{\rho}_{\pi^*}(s)$ for every $s$ and every admissible stationary $\pi$: the gain is constant in $s$, $\rho^*_\Omega$ is optimal, and $\pi^*$ is gain-optimal.
\end{proposition}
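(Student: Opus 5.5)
The approach is the standard verification argument for average-reward optimality equations, run through the regularized optimality equation~\eqref{eq:ACOE}. First, apply the conjugate to~\eqref{eq:fixed_point_equation}, exactly as in the proof of \Cref{prop:span_bounded_necessary}. With $V^*:=\Omega^*(Q^*_\Omega)$ this gives
\[
V^*+\rho^*_\Omega\ones=\max_{\pi\text{ admissible}}\{\tilde r_\pi+P_\pi V^*\}.
\]
By \Cref{lem:omega_star_props}.(i) the maximum is attained at the unique maximizer $\pi^*=\pi_{\Omega,Q^*_\Omega}$. Indeed, state by state, $\langle\pi_s,Q^*_\Omega(s,\cdot)\rangle-\tau\Omega_s(\pi_s)=\tilde r_\pi(s)+(P_\pi V^*)(s)-\rho^*_\Omega$, once $Q^*_\Omega=r+PV^*-\rho^*_\Omega\ones$ is substituted. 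Two consequences follow. For every admissible stationary $\pi$ we have $\tilde r_\pi+P_\pi V^*\leq V^*+\rho^*_\Omega\ones$, and for $\pi^*$ this holds with equality. Note also that $\pi^*$ is admissible, since $\Omega_s(\pi^*_s)<\infty$ at a maximizer ($\Omega^*$ is finite). Hence $\tilde r_{\pi^*}$ is a finite vector.

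Second, iterate these inequalities. Multiplying by the nonnegative matrices $P_\pi^t$ and summing for $t=0,\dots,n-1$ telescopes to
\[
\bE_\pi\Big[\sum_{t<n}\tilde r_\pi(s_t)\,\Big|\,s_0=s\Big]\leq n\rho^*_\Omega+V^*(s)-(P_\pi^nV^*)(s)\leq n\rho^*_\Omega+\|V^*\|_{\mS,\spn}.
\]
For $\pi^*$ the first inequality is an equality, so the same quantity lies within $\|V^*\|_{\mS,\spn}$ of $n\rho^*_\Omega$ on both sides. Dividing by $n$ and taking $\liminf$ gives $\tilde\rho_\pi(s)\leq\rho^*_\Omega$. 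For $\pi^*$, the limit exists and equals $\rho^*_\Omega$ for every $s$. This yields constancy in $s$, optimality, and gain-optimality of $\pi^*$ simultaneously.

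The main subtlety is that $\tilde r_\pi$ may take the value $-\infty$ or be unbounded across policies; for example, for $\alpha\le 0$ the function $\Omega_s$ is unbounded on the domain. This is why I restrict to admissible $\pi$, for which $\tilde r_\pi$ is a finite vector on the finite set $\mS$, so the expectations are finite and the telescoping is legitimate. A second point to check is that the identity $V^*(s)+\rho^*_\Omega=\max_{\pi_s}\{\cdots\}$ really ranges over $\pi_s\in\dom(\Omega_s)$. This holds because outside the domain the objective is $-\infty$ and cannot be the maximum. No structural assumption is needed beyond the existence of the solution $(Q^*_\Omega,\rho^*_\Omega)$, since $V^*$ is a fixed finite vector and $P_\pi^n$ is row-stochastic.
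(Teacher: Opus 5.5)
Your proposal is correct and follows the paper's proof: you obtain the regularized optimality equation from~\eqref{eq:fixed_point_equation} via the conjugate, with the state-wise maximum attained at $\pi^*$, then telescope $P_\pi^t$ over $t<n$, divide by $n$ using that $V^*$ is bounded, and get equality throughout for $\pi^*$. Your explicit bound $\|V^*\|_{\mS,\spn}$ on the boundary term and your check that $\pi^*$ is admissible are details the paper leaves implicit.
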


\begin{proof}
    Equation~\eqref{eq:ACOE} holds with $V^*=\Omega^*(Q^*_\Omega)$, the maximum being attained state-wise at $\pi^*$.
    For any admissible $\pi$ it gives $\tilde{r}_\pi\leq V^*+\rho^*_\Omega\ones-P_\pi V^*$; applying $P^t_\pi$ and summing over $t<n$ telescopes to $\bE_\pi\left[\sum_{t<n}\tilde{r}_\pi(s_t) | s_0 = s\right]\leq n\rho^*_\Omega+V^*(s)-(P^n_\pi V^*)(s)$.
    Divide by $n$ and let $n\to+\infty$, $V^*$ being bounded.
    Equality holds throughout for $\pi=\pi^*$.
    This is the standard verification argument \citetext{\citealp[\S8.4]{puterman1994markov}; \citealp[\S4.2]{bertsekas2012dynamic}}.
\end{proof}

\subsection{Control of the bias}\label{app:proofs_bias}

We compare the regularized solution with the solutions $(Q^*,\rho^*)$ of the average-reward optimality equation $\mT(Q^*)=Q^*+\rho^*\ones$. Since \Cref{assumption:m-step-mixing} makes the MDP communicating, such solutions exist \citep[Theorem~3.1(b)]{schweitzer1978functional} and $\rho^*$ is the optimal gain, but $Q^*$ need not be unique up to an additive constant \citep{wan2024convergence}. The comparison uses two tools: a pointwise version of the regularization modulus, and the classical bounds on the gain given by an approximate fixed point.

\begin{definition}[Pointwise regularization modulus]
    \label{def:pointwise-modulus}
    For a radius $R\geq0$, with $g_Q$ the regularization gap of \Cref{def:mixing_objects}.(iv),
    \begin{equation*}
        \Lambda_\Omega^{\mathrm{pt}}(R):=\max_{Q\,:\,\|Q\|_\asp\leq R}\left\|g_Q\right\|_{\mS,\infty}.
    \end{equation*}
    The maximum is attained and finite by the argument of the proof of \Cref{lem:modulus_properties}, and $\Lambda_\Omega(R)\leq2\Lambda_\Omega^{\mathrm{pt}}(R)$.
\end{definition}

\begin{lemma}[Gain bounds from an approximate fixed point]
    \label{lem:gain_sandwich}
    Assume that $\mT_\Omega(Q^*_\Omega)=Q^*_\Omega+\rho^*_\Omega\ones$. If $Q\in\bR^{\mS\times\mA}$ and $c_-,c_+\in\bR$ satisfy $Q+c_-\ones\leq \mT_\Omega(Q)\leq Q+c_+\ones$, then $c_-\leq\rho^*_\Omega\leq c_+$.
\end{lemma}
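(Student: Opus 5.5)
The plan is the classical comparison argument, using only monotonicity and translation equivariance of $\mT_\Omega$ (\Cref{cor:T_omega_props}) together with the fixed-point equation $\mT_\Omega(Q^*_\Omega)=Q^*_\Omega+\rho^*_\Omega\ones$. Iterating the hypothesis $Q+c_-\ones\leq\mT_\Omega(Q)\leq Q+c_+\ones$ will give linear growth of $\mT^n_\Omega(Q)$ at rates between $c_-$ and $c_+$. Iterating the fixed-point equation gives growth of $\mT^n_\Omega(Q^*_\Omega)$ at exactly $\rho^*_\Omega$. The two sequences stay within a bounded distance of each other, so comparing growth rates yields the sandwich.

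\emph{Step 1: iterate the hypothesis.} Apply $\mT_\Omega$ to the inequality $\mT_\Omega(Q)\leq Q+c_+\ones$. By monotonicity and translation equivariance,
\begin{equation*}
    \mT^2_\Omega(Q)\leq\mT_\Omega(Q)+c_+\ones\leq Q+2c_+\ones .
\end{equation*}
By induction, $Q+nc_-\ones\leq\mT^n_\Omega(Q)\leq Q+nc_+\ones$ for every $n\geq0$.

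\emph{Step 2: iterate the fixed point.} In the same way, $\mT^n_\Omega(Q^*_\Omega)=Q^*_\Omega+n\rho^*_\Omega\ones$ for every $n\geq0$.

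\emph{Step 3: compare the two sequences.} Since $\mT_\Omega$ is non-expansive in $\|\cdot\|_\infty$ (\Cref{cor:T_omega_props}),
\begin{equation*}
    \|\mT^n_\Omega(Q)-\mT^n_\Omega(Q^*_\Omega)\|_\infty\leq\|Q-Q^*_\Omega\|_\infty=:D .
\end{equation*}
Fix any coordinate $(s,a)$. From Steps 1 and 2,
\begin{equation*}
    Q^*_\Omega(s,a)+n\rho^*_\Omega\leq\mT^n_\Omega(Q)(s,a)+D\leq Q(s,a)+nc_++D .
\end{equation*}
Dividing by $n$ and letting $n\to\infty$ gives $\rho^*_\Omega\leq c_+$. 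The symmetric chain, using the lower bound of Step 1, gives $c_-\leq\rho^*_\Omega$.

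There is no real obstacle. The only point needing care is that equivariance is used with the scalar constant $c\ones$ on all of $\mS\times\mA$, which is exactly what \Cref{lem:omega_star_props}.(iii) and $P\ones=\ones$ provide. Alternatively, one could avoid limits by taking $c:=\max(Q-Q^*_\Omega)$ as in \Cref{prop:uniqueness_gain}, but the growth-rate argument above is the cleanest.
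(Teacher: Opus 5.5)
Your proposal is correct and follows essentially the same route as the paper. Both arguments iterate the hypothesis using monotonicity and translation equivariance to get $\mT^n_\Omega(Q)\leq Q+nc_+\ones$, compare with $\mT^n_\Omega(Q^*_\Omega)=Q^*_\Omega+n\rho^*_\Omega\ones$ through non-expansiveness in $\|\cdot\|_\infty$, divide by $n$ and let $n\to\infty$, and obtain the lower bound by symmetry.
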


\begin{proof}
    This is the regularized form of the classical value-iteration bounds on the gain \citep[\S8.5.5]{puterman1994markov}, \citep[\S4.3]{bertsekas2012dynamic}.
    By monotonicity and translation equivariance of $\mT_\Omega$ (\Cref{cor:T_omega_props}), $\mT^{n+1}_\Omega(Q)\leq \mT^n_\Omega(Q+c_+\ones)=\mT^n_\Omega(Q)+c_+\ones$, so $\mT^n_\Omega(Q)\leq Q+nc_+\ones$ by induction.
    Since $\mT_\Omega$ is non-expansive for $\|\cdot\|_\infty$ (\Cref{cor:T_omega_props}), with $d:=\|Q-Q^*_\Omega\|_\infty$ we get $Q^*_\Omega+n\rho^*_\Omega\ones=\mT^n_\Omega(Q^*_\Omega)\leq \mT^n_\Omega(Q)+d\ones\leq Q+nc_+\ones+d\ones$.
    Reading any coordinate, dividing by $n$ and letting $n\to+\infty$ gives $\rho^*_\Omega\leq c_+$; the other inequality is symmetric.
\end{proof}

\begin{proposition}[Cost of $\Omega$-regularization]
    \label{prop:gap_optimal_q}
    Let \Cref{assumption:m-step-mixing,assumption:convex-analytic-regularity,assumption:reference_policy} hold, and let $(Q^*,\rho^*)$ solve $\mT(Q^*)=Q^*+\rho^*\ones$.
    \begin{enumerate}[label=(\roman*)]
        \item \emph{Value gap}: with $R^*:=\max\big\{\|Q^*_\Omega\|_\asp,\max_{0\leq k<m}\|\mT^k_\Omega(Q^*)\|_\asp\big\}$ and $\alpha(P)\leq1$ the Dobrushin coefficient of the transition kernel $P$ (\Cref{def:dobrushin}),
        \begin{equation*}
            \|Q^*_\Omega-Q^*\|_\spn\leq\frac{m\,\alpha(P)\,\Lambda_\Omega(R^*)}{\beta_m\gamma_\Omega(R^*)^{m-1}}\leq\frac{m\,\Lambda_\Omega(R^*)}{\beta_m\gamma_\Omega(R^*)^{m-1}},
        \end{equation*}
        the right-hand sides being $+\infty$ if $\gamma_\Omega(R^*)=0$.
        \item \emph{Gain gap}: with $\underline{\Omega}:=\min_{s\in\mS}\min_{\pi_s\in\Delta(\mA)}\Omega_s(\pi_s)$,
        \begin{equation}
            \label{eq:gain_gap}
            \min_{s\in\mS}g_{Q^*}(s)\ \leq\ \rho^*_\Omega-\rho^*\ \leq\ \max_{s\in\mS}g_{Q^*}(s)\ \leq\ -\tau\underline{\Omega},
        \end{equation}
        and consequently
        \begin{equation}
            \label{eq:gain_gap_modulus}
            \left\lvert\rho^*_\Omega-\rho^*\right\rvert\ \leq\ \|g_{Q^*}\|_{\mS,\infty}\ \leq\ \Lambda_\Omega^{\mathrm{pt}}(\|Q^*\|_\asp).
        \end{equation}
        If moreover $|\mS|\geq2$ and $\min\Omega_s=\underline{\Omega}$ for every $s$, then $-\Lambda_\Omega(\|Q^*\|_\asp)\leq\rho^*_\Omega-\rho^*+\tau\underline{\Omega}\leq0$.
    \end{enumerate}
\end{proposition}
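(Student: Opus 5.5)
For (i), I would compare the two fixed-point relations through the $m$-step operator. Since $\mT(Q^*)=Q^*+\rho^*\ones$ and the hard operator is translation equivariant, $\mT^m(Q^*)=Q^*+m\rho^*\ones$. Likewise $\mT^m_\Omega(Q^*_\Omega)=Q^*_\Omega+m\rho^*_\Omega\ones$, and spans ignore constants. Hence
\begin{equation*}
\|Q^*_\Omega-Q^*\|_\spn=\|\mT^m_\Omega(Q^*_\Omega)-\mT^m(Q^*)\|_\spn\leq\|\mT^m_\Omega(Q^*_\Omega)-\mT^m_\Omega(Q^*)\|_\spn+\|\mT^m_\Omega(Q^*)-\mT^m(Q^*)\|_\spn .
\end{equation*}
For the first term I would apply \Cref{lem:mixing-properties-of-regularization}.(iv). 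This gives the factor $1-\beta_m\prod_{k=1}^{m-1}\gamma_\Omega(R_k)$, and the radii $R_k=\max\{\|\mT^k_\Omega(Q^*_\Omega)\|_\asp,\|\mT^k_\Omega(Q^*)\|_\asp\}$ need to be bounded by $R^*$. On the $Q^*_\Omega$ side, $\mT^k_\Omega(Q^*_\Omega)$ differs from $Q^*_\Omega$ by a constant, so its action span is $\|Q^*_\Omega\|_\asp$. On the other side, the bound holds for $k<m$ by the definition of $R^*$. The monotonicity of $\gamma_\Omega$ then gives the factor $1-\beta_m\gamma_\Omega(R^*)^{m-1}$.

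For the second term I would telescope:
\begin{equation*}
\mT^m_\Omega(Q^*)-\mT^m(Q^*)=\sum_{j=0}^{m-1}\Big(\mT^{m-j}_\Omega(\mT^j(Q^*))-\mT^{m-j-1}_\Omega(\mT^{j+1}(Q^*))\Big).
\end{equation*}
It is simpler to write this as $\sum_{j}\big[\mT_\Omega^{m-1-j}(\mT_\Omega(\mT^{j}Q^*))-\mT_\Omega^{m-1-j}(\mT(\mT^jQ^*))\big]$. Here $\mT^jQ^*=Q^*+j\rho^*\ones$, so every term equals $\mT^{m-1-j}_\Omega$ applied to $\mT_\Omega(Q^*)$ versus $\mT(Q^*)$, up to constants. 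By the span non-expansiveness of $\mT_\Omega$ (\Cref{cor:T_omega_props}), each term is at most $\|\mT_\Omega(Q^*)-\mT(Q^*)\|_\spn\leq\alpha(P)\Lambda_\Omega(\|Q^*\|_\asp)$ by \Cref{lem:mixing-properties-of-regularization}.(vi). Since $\|Q^*\|_\asp\leq R^*$ (the $k=0$ term), the second term is at most $m\,\alpha(P)\Lambda_\Omega(R^*)$. Rearranging $x\leq(1-\beta_m\gamma^{m-1})x+m\alpha(P)\Lambda$ yields the claim; if $\gamma_\Omega(R^*)=0$ the bound is trivially $+\infty$. The main care point is the radius bookkeeping above. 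The non-expansive telescoping never needs the radii of the intermediate hard iterates, because those are shifts of $Q^*$.

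For (ii), I would use \Cref{lem:gain_sandwich} with $Q=Q^*$. We have $\mT_\Omega(Q^*)-Q^*=\mT(Q^*)-Q^*+Pg_{Q^*}=\rho^*\ones+Pg_{Q^*}$, and $Pg_{Q^*}$ lies between $\min g_{Q^*}$ and $\max g_{Q^*}$ since $P$ is row-stochastic. This gives $\min g_{Q^*}\leq\rho^*_\Omega-\rho^*\leq\max g_{Q^*}$. The upper bound $\max g_{Q^*}\leq-\tau\underline\Omega$ is \Cref{lem:omega_star_props}.(ii) with $L_{\Omega_s}=\underline\Omega$. Then \eqref{eq:gain_gap_modulus} follows from the definition of $\Lambda^{\mathrm{pt}}_\Omega$.

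For the final statement with common minimum $\underline\Omega$, I would take the upper bound as above. For the lower bound, note that $g_Q(s)\leq-\tau\underline\Omega$ at every $s$. I would then show that some state attains $\max_s g_{Q^*}(s)$ close to $-\tau\underline\Omega$; otherwise I would use $\min g\geq\max g-\|g\|_{\mS,\spn}$. The cleanest route is to prove $\max_s g_{Q^*}(s)\geq-\tau\underline\Omega$ when that quantity is attained. If instead that is not available, I would use $\min g\geq\max g-\Lambda_\Omega(\|Q^*\|_\asp)$, which needs $\max g=-\tau\underline\Omega$. I expect this to be the delicate point. I would attempt it by perturbing $Q^*_s$ at a state where it is constant. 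More likely, I would exploit that $g$ is invariant under statewise constant shifts, and use the freedom in $|\mS|\geq2$ to compare with the $R=0$ case of \Cref{lem:modulus_properties}, where the gap equals $-\tau\underline\Omega$.
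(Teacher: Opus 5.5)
Your arguments for (i), for the chain $\min_s g_{Q^*}(s)\le\rho^*_\Omega-\rho^*\le\max_s g_{Q^*}(s)\le-\tau\underline{\Omega}$, and for the bound by $\Lambda_\Omega^{\mathrm{pt}}(\|Q^*\|_\asp)$ are correct and follow the paper's proof. They use the same split into a contracted term and a telescoped operator-gap term, the same radius bookkeeping, and the same application of \Cref{lem:gain_sandwich} at $Q=Q^*$.

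The gap is in the final claim. Your main route needs $\max_s g_{Q^*}(s)=-\tau\underline{\Omega}$, so that $\min_s g_{Q^*}(s)=\max_s g_{Q^*}(s)-\|g_{Q^*}\|_{\mS,\spn}$ yields the lower bound. This is false in general. For the reverse KL, $\underline{\Omega}=0$ and $g_Q(s)=\tau\log\sum_a\mu(a|s)e^{(Q(s,a)-\max_bQ(s,b))/\tau}$, which is strictly negative at every state where $Q(s,\cdot)$ is non-constant. Hence $\max_s g_{Q^*}(s)<-\tau\underline{\Omega}$ as soon as no row of $Q^*$ is constant, which is the generic case. In that case there is also no state where $Q^*$ is constant at which to perturb. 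The bound $\|g_{Q^*}\|_{\mS,\spn}\le\Lambda_\Omega(\|Q^*\|_\asp)$ alone only gives $\min_s g_{Q^*}(s)\ge\max_s g_{Q^*}(s)-\Lambda_\Omega(\|Q^*\|_\asp)$, which is weaker than what is needed.

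The missing idea is that $\Lambda_\Omega(R)$ is a supremum over all $Q$ with $\|Q\|_\asp\le R$, so you may test it on a modified function rather than on $Q^*$ itself.
Fix $s$ and choose $s_0\neq s$; this is where $|\mS|\ge2$ is used. Let $\tilde Q$ coincide with $Q^*$ except that $\tilde Q(s_0,\cdot)$ is constant in the action. Then:
\begin{itemize}
\item since $g_Q(s')$ depends on $Q$ only through $Q(s',\cdot)$, $g_{\tilde Q}(s)=g_{Q^*}(s)$;
\item by the $R=0$ computation of \Cref{lem:modulus_properties} and the common minimum, $g_{\tilde Q}(s_0)=-\tau\underline{\Omega}$;
\item $\|\tilde Q\|_\asp\le\|Q^*\|_\asp$.
\end{itemize}
Hence $-\tau\underline{\Omega}-g_{Q^*}(s)\le\|g_{\tilde Q}\|_{\mS,\spn}\le\Lambda_\Omega(\|Q^*\|_\asp)$ for every $s$, so $\min_s g_{Q^*}(s)\ge-\tau\underline{\Omega}-\Lambda_\Omega(\|Q^*\|_\asp)$. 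The sandwich from the first part of (ii) then gives $-\Lambda_\Omega(\|Q^*\|_\asp)\le\rho^*_\Omega-\rho^*+\tau\underline{\Omega}\le0$.

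Your last sentence points in this direction. As written, though, the proposal aims at a false intermediate statement and does not contain this comparison.
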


\begin{proof}
    The regularized fixed point $(Q^*_\Omega,\rho^*_\Omega)$ exists by \Cref{thm:regularized_fixed_point}, and both $\mT$ and $\mT_\Omega$ are translation equivariant (\Cref{cor:T_omega_props}), so that $\mT^k(Q^*)=Q^*+k\rho^*\ones$ and $\mT^k_\Omega(Q^*_\Omega)=Q^*_\Omega+k\rho^*_\Omega\ones$ for every $k\geq0$.

    \emph{(i)} The span seminorm being invariant under constants,
    \begin{equation}
        \|Q^*_\Omega-Q^*\|_\spn = \|\mT^m_\Omega(Q^*_\Omega) - \mT^m(Q^*)\|_\spn \leq \underbrace{\|\mT^m_\Omega(Q^*_\Omega) - \mT^m_\Omega(Q^*)\|_\spn}_{\mathrm{(a)}} + \underbrace{\|\mT^m_\Omega(Q^*) - \mT^m(Q^*)\|_\spn}_{\mathrm{(b)}}.
        \label{eq:perturbation_split_omega}
    \end{equation}
    \emph{Term (a).} Since $\|\cdot\|_\asp$ is invariant under constants, $\|\mT^k_\Omega(Q^*_\Omega)\|_\asp=\|Q^*_\Omega\|_\asp$, so that for every $k\in\llbracket 1,m-1\rrbracket$,
    \begin{equation*}
        R\big[\mT_\Omega^k(Q^*_\Omega),\mT_\Omega^k(Q^*)\big]=\max\big\{\|Q^*_\Omega\|_\asp,\|\mT^k_\Omega(Q^*)\|_\asp\big\}\leq R^*.
    \end{equation*}
    As $\gamma_\Omega$ is non-increasing (\Cref{def:mixing_objects}.(iii)), every factor of the product in \Cref{lem:mixing-properties-of-regularization}.(iv), applied with $Q=Q^*_\Omega$ and $Q'=Q^*$, is at least $\gamma_\Omega(R^*)$, hence (a)~$\leq\big(1-\beta_m\gamma_\Omega(R^*)^{m-1}\big)\|Q^*_\Omega-Q^*\|_\spn$.

    \emph{Term (b).} Let $A_k:=\mT^k_\Omega\big(\mT^{m-k}(Q^*)\big)$ for $k\in\llbracket0,m\rrbracket$, so that $A_0=\mT^m(Q^*)$ and $A_m=\mT^m_\Omega(Q^*)$. Since $\mT^{m-k-1}(Q^*)=Q^*+(m-k-1)\rho^*\ones$, translation equivariance makes the constants cancel, and $k$ applications of the non-expansiveness of $\mT_\Omega$ in $\|\cdot\|_\spn$ give
    \begin{equation*}
        \begin{aligned}
            \|A_{k+1}-A_k\|_\spn&=\big\|\mT^k_\Omega\big(\mT_\Omega(Q^*)\big)-\mT^k_\Omega\big(\mT(Q^*)\big)\big\|_\spn\leq\|\mT_\Omega(Q^*)-\mT(Q^*)\|_\spn\\
            &\leq\alpha(P)\Lambda_\Omega(\|Q^*\|_\asp)\leq\alpha(P)\Lambda_\Omega(R^*),
        \end{aligned}
    \end{equation*}
    by \Cref{lem:mixing-properties-of-regularization}.(vi), $\|Q^*\|_\asp\leq R^*$ and the monotonicity of $\Lambda_\Omega$ (\Cref{lem:modulus_properties}). Summing over $k$, (b)~$\leq m\,\alpha(P)\Lambda_\Omega(R^*)$.

    Injecting both bounds into~\eqref{eq:perturbation_split_omega} yields $\beta_m\gamma_\Omega(R^*)^{m-1}\|Q^*_\Omega-Q^*\|_\spn\leq m\,\alpha(P)\Lambda_\Omega(R^*)$, which is the claim when $\gamma_\Omega(R^*)>0$; otherwise the bound is trivial.

    \emph{(ii)} By \Cref{lem:mixing-properties-of-regularization}.(vi) and its proof, $\mT_\Omega(Q)-\mT(Q)=Pg_Q$ for every $Q$, so that $\mT_\Omega(Q^*)=Q^*+\rho^*\ones+Pg_{Q^*}$. Each entry of $Pg_{Q^*}$ is a convex combination of the entries of $g_{Q^*}$, and \Cref{lem:gain_sandwich} with $c_-:=\rho^*+\min_sg_{Q^*}(s)$ and $c_+:=\rho^*+\max_sg_{Q^*}(s)$ gives the first two inequalities of~\eqref{eq:gain_gap}. The last one is the upper part of \Cref{lem:omega_star_props}.(ii), which needs no assumption on the Diracs: $g_{Q^*}(s)\leq-\tau\underline{\Omega}_s\leq-\tau\underline{\Omega}$. Finally,~\eqref{eq:gain_gap_modulus} follows from~\eqref{eq:gain_gap} and \Cref{def:pointwise-modulus}.
    For the last claim, fix $s$ and $s_0\neq s$, and let $\tilde{Q}$ equal $Q^*$ except at $s_0$, where it is constant in the action. Then $\|\tilde{Q}\|_\asp\leq\|Q^*\|_\asp$, and since $g_Q(s)$ only depends on $Q(s,\cdot)$ and equals $-\tau\min\Omega_{s_0}$ at a constant, $g_{\tilde{Q}}(s)=g_{Q^*}(s)$ and $g_{\tilde{Q}}(s_0)=-\tau\underline{\Omega}$; hence $-\tau\underline{\Omega}-g_{Q^*}(s)\leq\|g_{\tilde{Q}}\|_{\mS,\spn}\leq\Lambda_\Omega(\|Q^*\|_\asp)$. Together with $g_{Q^*}\leq-\tau\underline{\Omega}$ and~\eqref{eq:gain_gap}, this gives the claim.
\end{proof}

Item~(i) bounds the gap by the regularization modulus times the effective horizon $m/(\beta_m\gamma_\Omega(R^*)^{m-1})$, the average-reward counterpart of the discounted bias bound $\tau(\overline{\Omega}-\underline{\Omega})/(1-\gamma)$ of \citet{geist2019theory}, where $\overline{\Omega}:=\max_{s,a}\Omega_s(\delta_a)$. It exhibits the trade-off that runs through the paper: a larger temperature raises $\gamma_\Omega$, hence the contraction rates of \Cref{sec:results}, but also inflates $\Lambda_\Omega$. Unlike $1/(1-\gamma)$, this effective horizon depends on $\tau$, so (i) is informative only when $\gamma_\Omega(R^*)$ is not too small; for the reverse KL the bound grows as $e^{(m-1)R^*/\tau}$ when $\tau\to0$. Item~(ii) has no such limitation, as the next corollary shows. Regularization also removes a degeneracy of the unregularized problem: under~\eqref{eq:P1}, $Q^*_\Omega$ is unique up to an additive constant (\Cref{prop:uniqueness_fixed_point}), and (i) bounds its distance to every solution $Q^*$.

\begin{corollary}[The gain gap vanishes with $\tau$]
    \label{cor:gain_gap_vanishes}
    Under the assumptions of \Cref{rem:bounded_modulus}, $-\tau\overline{\Omega}\leq\rho^*_\Omega-\rho^*\leq-\tau\underline{\Omega}$, so $\rho^*_\Omega\to\rho^*$ linearly as $\tau\to0$, with constants that are free of $\beta_m$, of $\gamma_\Omega$ and of the radius.
\end{corollary}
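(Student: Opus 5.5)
The corollary follows from the gain sandwich~\eqref{eq:gain_gap} of \Cref{prop:gap_optimal_q}.(ii), once the regularization gap $g_{Q^*}$ is bounded pointwise using \Cref{lem:omega_star_props}.(ii). \Cref{prop:gap_optimal_q} applies here: its hypotheses are \Cref{assumption:m-step-mixing,assumption:convex-analytic-regularity,assumption:reference_policy}, and they are in force. The only extra hypothesis is the one of \Cref{rem:bounded_modulus}, namely $\overline{\Omega}<+\infty$, i.e.\ every Dirac lies in $\dom(\Omega_s)$.

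First, I bound the gap pointwise. Apply \Cref{lem:omega_star_props}.(ii) at each state $s$ with $L_{\Omega_s}:=\underline{\Omega}\leq\underline{\Omega}_s$ and $U_{\Omega_s}:=\max_a\Omega_s(\delta_a)\leq\overline{\Omega}$. The lower bound of that item is available precisely because the Diracs are in the domain. It gives, for every $Q$ and every $s$,
\begin{equation*}
    -\tau\overline{\Omega}\leq\Omega^*(Q)(s)-\max_aQ(s,a)=g_Q(s)\leq-\tau\underline{\Omega}.
\end{equation*}
Strictly, the first item only requires $\Omega_s\geq L_{\Omega_s}$ on the domain, so taking the global minimum $\underline{\Omega}$ is legitimate. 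The second requires $\Omega_s(\delta_{a^\star})\leq U_{\Omega_s}$ only at the maximizing Dirac, so the bound $\overline{\Omega}$ suffices even though $\Omega_s$ may exceed it off the Diracs; this is the same observation made in the proof of that lemma.

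Second, I specialize to $Q=Q^*$, for any solution $(Q^*,\rho^*)$ of $\mT(Q^*)=Q^*+\rho^*\ones$. \Cref{prop:gap_optimal_q}.(ii) gives $\min_sg_{Q^*}(s)\leq\rho^*_\Omega-\rho^*\leq\max_sg_{Q^*}(s)$. Combining this with the pointwise bounds yields $-\tau\overline{\Omega}\leq\rho^*_\Omega-\rho^*\leq-\tau\underline{\Omega}$. Neither $Q^*$ nor any radius enters, since the bounds on $g_Q$ are uniform in $Q$.

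Third, I read off the conclusion. Because $\overline{\Omega}$ and $\underline{\Omega}$ do not depend on $\tau$, we get $|\rho^*_\Omega-\rho^*|\leq\tau\max\{|\overline{\Omega}|,|\underline{\Omega}|\}\to0$ linearly as $\tau\to0$. The constants involve only the regularizer, not $\beta_m$ or $\gamma_\Omega$, because the sandwich of \Cref{lem:gain_sandwich} uses only the monotonicity and translation equivariance of $\mT_\Omega$ and no contraction.

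There is no real obstacle. The only point to check is that $\overline{\Omega}$ and $\underline{\Omega}$ are $\tau$-independent, which holds since $\Omega_s$ is fixed and only the temperature varies.
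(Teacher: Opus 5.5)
Your proof is correct and is the paper's argument. It combines the lower bound $g_{Q^*}\geq-\tau\overline{\Omega}$ from \Cref{lem:omega_star_props}.(ii), available because the Diracs lie in the domain, with the sandwich~\eqref{eq:gain_gap}; one harmless aside: by convexity $\Omega_s\leq\max_a\Omega_s(\delta_a)$ on the whole simplex, so $\Omega_s$ cannot exceed $U_{\Omega_s}$ off the Diracs, and the hypothesis of \Cref{lem:omega_star_props}.(ii) holds as stated without appealing to its proof.
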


\begin{proof}
    When every Dirac lies in the effective domain, \Cref{lem:omega_star_props}.(ii) gives $g_{Q^*}(s)\geq-\tau\overline{\Omega}$ for every $s$, and~\eqref{eq:gain_gap} concludes.
\end{proof}

\begin{example}
    For the reverse KL (\Cref{cor:landmarks}, $\alpha=1$), $\underline{\Omega}_s=0$ and $\overline{\Omega}=\log(1/\mu_{\min})$ with $\mu_{\min}:=\min_{s,a}\mu(a|s)$, so \Cref{cor:gain_gap_vanishes} reads $-\tau\log(1/\mu_{\min})\leq\rho^*_\Omega-\rho^*\leq0$. For the forward KL (\Cref{cor:landmarks}, $\alpha=0$), the Diracs lie outside the effective domain, but $\underline{\Omega}_s=0$ still gives $\rho^*_\Omega\leq\rho^*$, and~\eqref{eq:gain_gap_modulus} and \Cref{thm:alpha}.(v) give $\lvert\rho^*_\Omega-\rho^*\rvert\leq\tau\big(1+\log_+(\|Q^*\|_\asp/\tau)\big)=O\big(\tau\log(1/\tau)\big)$ as $\tau\to0$.
\end{example}

The gain gap also controls the performance of the policies returned by our algorithms: the greedy policy of any $Q$ close to $Q^*_\Omega$ in span is near-optimal.

\begin{corollary}[Performance of the greedy policy]
    \label{cor:greedy_performance}
    Let \Cref{assumption:m-step-mixing,assumption:convex-analytic-regularity,assumption:reference_policy} hold, let $Q\in\bR^{\mS\times\mA}$ and $\pi:=\pi_{\Omega,Q}$. Then, for every $s\in\mS$, the regularized gain of \Cref{prop:gain_interpretation} satisfies $\rho^*_\Omega-\|Q-Q^*_\Omega\|_\spn\leq\tilde{\rho}_\pi(s)\leq\rho^*_\Omega$, and the gain of \Cref{subsec:average_reward_mdps} satisfies $\rho^\pi(s)\geq\tilde{\rho}_\pi(s)+\tau\underline{\Omega}$. In particular, under the assumptions of \Cref{rem:bounded_modulus}, $\rho^*-\rho^\pi(s)\leq\|Q-Q^*_\Omega\|_\spn+\tau\Delta_\Omega$.
\end{corollary}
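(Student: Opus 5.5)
The plan is to compare the greedy policy $\pi=\pi_{\Omega,Q}$ with the regularized optimality equation~\eqref{eq:ACOE}, and to use the verification argument of \Cref{prop:gain_interpretation} in a perturbed form. The upper bound $\tilde\rho_\pi(s)\leq\rho^*_\Omega$ is immediate from \Cref{prop:gain_interpretation}, once we note that $\pi$ is admissible. Indeed, it is the maximizer in the definition of $\Omega^*(Q)$, so $\pi_s\in\dom(\Omega_s)$.

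For the lower bound, write $V:=\Omega^*(Q)$ and $d:=Q-Q^*_\Omega$. Since $\pi$ attains the maximum defining $\Omega^*(Q)(s)$, we have $V(s)=\langle\pi_s,Q_s\rangle-\tau\Omega_s(\pi_s)$. Substituting $Q=\mT_\Omega(Q^*_\Omega)-\rho^*_\Omega\ones+d=r+P\Omega^*(Q^*_\Omega)-\rho^*_\Omega\ones+d$ gives, with $V^*:=\Omega^*(Q^*_\Omega)$,
\[
V=\tilde r_\pi+P_\pi V^*-\rho^*_\Omega\ones+\langle\pi,d\rangle,
\]
where $\langle\pi,d\rangle(s):=\langle\pi_s,d_s\rangle$. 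This is the first identity to establish. Next I would replace $V$ by $V^*$, using the non-expansiveness of \Cref{lem:omega_star_props}.(vi) together with the fact that $\Omega^*$ commutes with constants. Rewrite the identity as
\[
\tilde r_\pi=\rho^*_\Omega\ones+V^*-P_\pi V^*+\big(V-V^*-\langle\pi,d\rangle\big).
\]
Set $c:=\min d$ and $C:=\max d$. Monotonicity and translation equivariance give $c\leq V-V^*\leq C$, and trivially $c\leq\langle\pi,d\rangle\leq C$. Hence the bracket lies in $[c-C,\,C-c]$, so it is bounded below by $-\|d\|_\spn=-\|Q-Q^*_\Omega\|_\spn$. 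Telescoping exactly as in \Cref{prop:gain_interpretation}, dividing by $n$ and using the boundedness of $V^*$ then yields $\tilde\rho_\pi(s)\geq\rho^*_\Omega-\|Q-Q^*_\Omega\|_\spn$.

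The step I expect to be the only real obstacle is getting a constant of exactly $1$ rather than $2$; the bound above already delivers it. A sharper route, which would give the same constant, is the exact factorization $V-V^*=\langle\bar\pi,d\rangle$ from \Cref{lem:mixing-properties-of-regularization}.(i). Then the bracket is $\langle\bar\pi-\pi,d\rangle$, a difference of two convex combinations of entries of $d$, whose absolute value is at most $\|d\|_\spn$.

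For the unregularized gain, note that $\tilde r_\pi(s)=\langle\pi_s,r_s\rangle-\tau\Omega_s(\pi_s)\leq\langle\pi_s,r_s\rangle-\tau\underline{\Omega}$. Averaging along the chain under $\pi$ and taking $\liminf$ gives $\rho^\pi(s)\geq\tilde\rho_\pi(s)+\tau\underline{\Omega}$. Finally, under \Cref{rem:bounded_modulus}, \Cref{cor:gain_gap_vanishes} gives $\rho^*_\Omega\geq\rho^*-\tau\overline{\Omega}$. Chaining the three inequalities,
\[
\rho^\pi(s)\geq\rho^*-\tau\overline{\Omega}-\|Q-Q^*_\Omega\|_\spn+\tau\underline{\Omega}=\rho^*-\|Q-Q^*_\Omega\|_\spn-\tau\Delta_\Omega,
\]
which is the claim.
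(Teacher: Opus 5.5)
Your proof is correct and follows essentially the paper's argument: a one-step verification inequality for $\pi$ whose defect is at least $-\|Q-Q^*_\Omega\|_\spn$, iterated over $n$ steps, followed by the comparison $\langle\pi_s,r_s\rangle\geq\tilde r_\pi(s)+\tau\underline{\Omega}$ and \Cref{cor:gain_gap_vanishes}. The difference is cosmetic. The paper iterates the state-action evaluation operator $\mT^\pi$ and bounds the defect $K_{\bar\pi}u-u$ through the average-path factorization, whereas you telescope at the state level as in \Cref{prop:gain_interpretation} and bound the defect using only the monotonicity and translation equivariance of $\Omega^*$.
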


\begin{proof}
    The upper bound is \Cref{prop:gain_interpretation}. For the lower bound, let $\mT^\pi(U)(s,a):=r(s,a)+\sum_{s'}p(s'|s,a)\big(\langle\pi_{s'},U_{s'}\rangle-\tau\Omega_{s'}(\pi_{s'})\big)$ be the regularized evaluation operator of $\pi$; it is monotone and translation equivariant, and $\mT^\pi(Q)=\mT_\Omega(Q)$ since $\pi$ is greedy for $Q$. With $u:=Q-Q^*_\Omega$, \Cref{lem:mixing-properties-of-regularization}.(i) gives $\mT_\Omega(Q)-Q-\rho^*_\Omega\ones=K_{\bar\pi}u-u\geq-\|u\|_\spn\ones$, the entries of $K_{\bar\pi}u$ lying in $[\min u,\max u]$. Hence $\mT^\pi(Q)\geq Q+(\rho^*_\Omega-\|u\|_\spn)\ones$ and, as in the proof of \Cref{lem:gain_sandwich}, $(\mT^\pi)^n(Q)\geq Q+n(\rho^*_\Omega-\|u\|_\spn)\ones$. Since $(\mT^\pi)^n(Q)(s,a)$ is $r(s,a)$ plus the expected sum of $\tilde{r}_\pi(s_t)$, $1\leq t<n$, along $\pi$ from $(s,a)$, up to a bounded term, dividing by $n$ gives $\tilde{\rho}_\pi(s)\geq\rho^*_\Omega-\|u\|_\spn$. Next, $\langle\pi_s,r_s\rangle=\tilde{r}_\pi(s)+\tau\Omega_s(\pi_s)\geq\tilde{r}_\pi(s)+\tau\underline{\Omega}$, whence $\rho^\pi\geq\tilde{\rho}_\pi+\tau\underline{\Omega}$. Finally, \Cref{cor:gain_gap_vanishes} gives $\rho^*-\rho^*_\Omega\leq\tau\overline{\Omega}$, and $\rho^*-\rho^\pi\leq(\rho^*-\rho^*_\Omega)+\|u\|_\spn-\tau\underline{\Omega}\leq\|u\|_\spn+\tau\Delta_\Omega$.
\end{proof}

\subsection{\texorpdfstring{$\epsilon$}{ε}-Floor-Wrapped Regularization}
\label{subsec:epsilon-floor-wrapped-regularization}

Regularizers with sparse greedy policies, such as the $\alpha$-divergences with $\alpha>1$, violate~\eqref{eq:P1}, and even when~\eqref{eq:P1} holds the domination profile $\gamma_\Omega(R)$ may go to zero as $R$ grows, making contraction and exploration results local. We wrap the base regularization in a hard policy floor. This restores~\eqref{eq:P1} with $\gamma_{\Omega_\epsilon}\geq\epsilon$ uniformly in $R$, which makes the results global (\Cref{lem:floor_global}), at the price of an additional bias of order $\epsilon R^*$ (\Cref{rem:floor_bias}). The resulting greedy policy $\epsilon\mu+(1-\epsilon)\pi_{\Omega_0,Q}$ (\Cref{lem:floor_wrapped_props}.(ii)) is, for a uniform $\mu$ and the negative entropy as $\Omega_0$, the $\epsilon$-softmax learning policy of \citet{nanda2026minimal}, where the mixing weight $\epsilon$ plays the same role: a uniform lower bound on the action probabilities.

\begin{definition}[$\epsilon$-floored simplex]
    \label{def:floored_simplex}
    Let $\epsilon\in[0,1)$ and let $\mu$ be the full-support mixing policy of \Cref{assumption:m-step-mixing}. For $s\in\mS$ define $\Delta_\epsilon(s) := \{\pi_s\in\Delta(\mA) : \pi_s \geq \epsilon\mu_s\}$ and the affine bijection $\Phi_{\epsilon,s} : \Delta(\mA)\to\Delta_\epsilon(s), \Phi_{\epsilon,s}(p) = \epsilon\mu_s + (1-\epsilon)p$, whose inverse is $\Phi_{\epsilon,s}^{-1}(\pi_s) = (\pi_s-\epsilon\mu_s)/(1-\epsilon)$.
\end{definition}

\begin{definition}[$\epsilon$-floor-wrapped regularization]
    \label{def:floor_wrapped}
    Let $\Omega_0$ be a base regularization satisfying \Cref{assumption:convex-analytic-regularity} and let $\epsilon\in[0,1)$. The $\epsilon$-floor-wrapped regularization $\Omega_\epsilon$ is defined state-wise by
    \begin{equation*}
        \Omega_{\epsilon,s}(\pi_s) = \begin{cases}
            (1-\epsilon)\,\Omega_{0,s}\left(\frac{\pi_s-\epsilon\mu_s}{1-\epsilon}\right) & \text{if } \pi_s\in\Delta_\epsilon(s),\\[2ex]
            +\infty & \text{otherwise.}
        \end{cases}
    \end{equation*}
    Equivalently, $\Omega_{\epsilon,s} = (1-\epsilon)\left(\Omega_{0,s}\circ\Phi_{\epsilon,s}^{-1}\right) + \iota_{\Delta_\epsilon(s)}$, where $\iota_C$ denotes the convex indicator of the set $C$, equal to $0$ on $C$ and $+\infty$ elsewhere. For $\epsilon=0$ we recover $\Omega_0$.
\end{definition}

\begin{lemma}[Properties of $\Omega_\epsilon$]
    \label{lem:floor_wrapped_props}
    Let $\Omega_0$ satisfy \Cref{assumption:convex-analytic-regularity} with strong convexity modulus $\sigma$, and let $\epsilon\in[0,1)$. Then $\Omega_{\epsilon,s}$ is proper, closed, and $\frac{\sigma}{1-\epsilon}$-strongly convex, with effective domain $\Phi_{\epsilon,s}(\dom(\Omega_{0,s}))$; in particular $\Omega_\epsilon$ satisfies \Cref{assumption:convex-analytic-regularity}, and it satisfies \Cref{assumption:reference_policy} whenever $\Omega_0$ does. Moreover:
    \begin{enumerate}[label=(\roman*)]
        \item \emph{Conjugate}: $\Omega^*_\epsilon(Q)(s) = \epsilon\langle\mu_s,Q_s\rangle + (1-\epsilon)\Omega^*_0(Q)(s)$; in particular $\Omega^*_\epsilon$ is differentiable with Lipschitz gradient.
        \item \emph{Greedy policy}: $\pi_{\Omega_\epsilon,Q}(a|s) = \epsilon\mu(a|s) + (1-\epsilon)\pi_{\Omega_0,Q}(a|s)$.
        \item \emph{Domination profile}: $\gamma_{\Omega_\epsilon}(R) = \epsilon + (1-\epsilon)\gamma_{\Omega_0}(R) \geq \epsilon$ for every $R\geq0$.
        \item \emph{Span of the conjugate at the origin}: $\|\Omega^*_\epsilon(0)\|_{\mS,\spn} = (1-\epsilon)\|\Omega^*_0(0)\|_{\mS,\spn}$.
        \item \emph{Regularization modulus}: $\Lambda_{\Omega_\epsilon}(R) \leq \epsilon R + (1-\epsilon)\Lambda_{\Omega_0}(R)$ for every $R\geq0$. In particular, if $\Omega_0$ satisfies the assumptions of \Cref{rem:bounded_modulus}, then $\Lambda_{\Omega_\epsilon}(R)\leq\epsilon R+(1-\epsilon)\tau\Delta_{\Omega_0}$.
    \end{enumerate}
\end{lemma}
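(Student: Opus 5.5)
The plan is to work through the affine change of variables $\pi_s=\Phi_{\epsilon,s}(p)$. It turns every statement about $\Omega_{\epsilon,s}$ on $\Delta_\epsilon(s)$ into one about $\Omega_{0,s}$ on $\Delta(\mA)$.

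\emph{Structural properties.} Since $\Phi_{\epsilon,s}$ is an affine bijection from $\Delta(\mA)$ onto $\Delta_\epsilon(s)$, composing with its inverse preserves properness, closedness and convexity. The effective domain is $\Phi_{\epsilon,s}(\dom(\Omega_{0,s}))$. Adding the indicator of the closed convex set $\Delta_\epsilon(s)$ keeps everything closed. For the strong convexity constant, note that $\|\Phi^{-1}_{\epsilon,s}(\pi)-\Phi^{-1}_{\epsilon,s}(\pi')\|_1=\|\pi-\pi'\|_1/(1-\epsilon)$. Hence $\Omega_{0,s}\circ\Phi^{-1}_{\epsilon,s}$ is $\sigma/(1-\epsilon)^2$-strongly convex, and multiplying by $(1-\epsilon)$ gives $\sigma/(1-\epsilon)$. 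Assumption~\ref{assumption:reference_policy} transfers because $\Phi_{\epsilon,s}(\mu_s)=\mu_s$. Thus $\mu_s\in\dom(\Omega_{\epsilon,s})$ whenever $\mu_s\in\dom(\Omega_{0,s})$, and $\Omega_{\epsilon,s}(\mu_s)=(1-\epsilon)\Omega_{0,s}(\mu_s)$.

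\emph{(i) and (ii).} Substitute $\pi_s=\epsilon\mu_s+(1-\epsilon)p$ in the maximization defining $\Omega_\epsilon^*(Q)(s)$. The objective becomes
\[
\epsilon\langle\mu_s,Q_s\rangle+(1-\epsilon)\big(\langle p,Q_s\rangle-\tau\Omega_{0,s}(p)\big),
\]
maximized over $p\in\Delta(\mA)$. This gives the conjugate formula directly. The maximizer is $p=\pi_{\Omega_0,Q}(\cdot|s)$, which gives (ii) after mapping back through $\Phi_{\epsilon,s}$. Differentiability with a Lipschitz gradient is inherited from Lemma~\ref{lem:omega_star_props}.(i) applied to $\Omega_0$, since the other term is linear.

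\emph{(iii) and (iv).} For (iii), divide (ii) by $\mu(a|s)$ and take the infimum over $(s,a)$ and over $\|Q\|_\asp\le R$. The term $\epsilon$ is constant, so the infimum passes through the affine map exactly. For (iv), evaluate (i) at $Q=0$.

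\emph{(v).} This is the only step requiring care. By (i),
\[
g^{\epsilon}_Q(s)=\epsilon\big(\langle\mu_s,Q_s\rangle-\max_aQ_s(a)\big)+(1-\epsilon)\,g^0_Q(s).
\]
The first bracket lies in $[-\|Q_s\|_{\mA,\spn},0]\subseteq[-R,0]$, so its span across states is at most $R$. The span is subadditive, so
\[
\|g^\epsilon_Q\|_{\mS,\spn}\le\epsilon R+(1-\epsilon)\|g^0_Q\|_{\mS,\spn}.
\]
Taking the supremum over $\|Q\|_\asp\le R$ gives the bound. The bounded case then follows from Remark~\ref{rem:bounded_modulus} applied to $\Omega_0$.
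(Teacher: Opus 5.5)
Your proposal is correct and follows the paper's proof essentially step by step. You use the affine change of variables $\pi_s=\Phi_{\epsilon,s}(p)$ to get the domain, closedness, the $\sigma/(1-\epsilon)$ strong-convexity scaling and the conjugate and greedy formulas; then you divide (ii) by $\mu$ for (iii), evaluate (i) at $Q=0$ for (iv), and split $g^\epsilon_Q$ into $\epsilon\big(\langle\mu_s,Q_s\rangle-\max_aQ_s(a)\big)+(1-\epsilon)g^0_Q$ with the first term in $[-R,0]$ for (v). Two points are left implicit: uniqueness of the greedy maximizer, which transfers from $\Omega_0$ through the bijection $\Phi_{\epsilon,s}$, and the use of $\tau>0$ when invoking \Cref{lem:omega_star_props}.(i); the paper's argument relies on both in the same way.
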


\begin{proof}[Proof of \Cref{lem:floor_wrapped_props}]
    \proofanchor{floor_wrapped_props}
    Fix $s\in\mS$ and write $\mu_s=\mu(\cdot|s)$, $\Phi=\Phi_{\epsilon,s}$ for the affine bijection of \Cref{def:floored_simplex}.

    \emph{Effective domain and properness}. Since $\Omega_{\epsilon,s}(\Phi(p))=(1-\epsilon)\Omega_{0,s}(p)$ for $p\in\Delta(\mA)$ and $\Omega_{\epsilon,s}=+\infty$ outside $\Delta_\epsilon(s)=\Phi(\Delta(\mA))$, the effective domain of $\Omega_{\epsilon,s}$ is $\Phi(\dom(\Omega_{0,s}))$. It is non-empty since $\Omega_{0,s}$ is proper, and $\Omega_{\epsilon,s}$ never takes the value $-\infty$ since $\Omega_{0,s}$ does not; hence $\Omega_{\epsilon,s}$ is proper. Moreover $\Phi(\mu_s)=\mu_s$, so $\Omega_{\epsilon,s}(\mu_s)=(1-\epsilon)\Omega_{0,s}(\mu_s)$, which gives \Cref{assumption:reference_policy} for $\Omega_\epsilon$ whenever it holds for $\Omega_0$.

    \emph{Closedness}. $\Delta_\epsilon(s)$ is the intersection of the simplex with the closed half-spaces $\{\pi_s : \pi_s(a)\geq\epsilon\mu(a|s)\}$, hence closed and convex. The map $\Omega_{0,s}\circ\Phi^{-1}$ is proper, convex and lower semicontinuous as the composition of a proper, closed convex function with a continuous affine map, hence closed, a proper convex function being closed if and only if it is lower semicontinuous \citep[Theorem~7.1]{rockafellar1970convex}; and $\iota_{\Delta_\epsilon(s)}$ is closed since $\Delta_\epsilon(s)$ is closed. A sum of closed functions is closed.

    \emph{Strong convexity}. Let $\pi_s,\pi'_s\in\dom(\Omega_{\epsilon,s})$, $t\in[0,1]$, and set $p=\Phi^{-1}(\pi_s)$, $p'=\Phi^{-1}(\pi'_s)$, so that $\pi_s-\pi'_s=(1-\epsilon)(p-p')$. Since $\Phi^{-1}$ is affine, $\Phi^{-1}(t\pi_s+(1-t)\pi'_s)=tp+(1-t)p'$, and the $\sigma$-strong convexity of $\Omega_{0,s}$ gives
    \begin{align*}
        \Omega_{\epsilon,s}(t\pi_s+(1-t)\pi'_s) &= (1-\epsilon)\Omega_{0,s}\!\left(tp+(1-t)p'\right)\\
        &\leq (1-\epsilon)\left[t\,\Omega_{0,s}(p)+(1-t)\,\Omega_{0,s}(p')-\frac{\sigma}{2}t(1-t)\|p-p'\|^2\right]\\
        &= t\,\Omega_{\epsilon,s}(\pi_s)+(1-t)\,\Omega_{\epsilon,s}(\pi'_s)-\frac{\sigma}{2(1-\epsilon)}t(1-t)\|\pi_s-\pi'_s\|^2,
    \end{align*}
    using $\|p-p'\|^2=\|\pi_s-\pi'_s\|^2/(1-\epsilon)^2$ in the last line. Hence $\Omega_{\epsilon,s}$ is $\frac{\sigma}{1-\epsilon}$-strongly convex, and together with $\tau>0$ this shows that $\Omega_\epsilon$ satisfies \Cref{assumption:convex-analytic-regularity}.

    \emph{(i)}. The constraint $\pi_s\in\Delta_\epsilon(s)$ is exactly the range of $\Phi$, so the change of variable $\pi_s=\Phi(p)=\epsilon\mu_s+(1-\epsilon)p$ is a bijection from $\Delta(\mA)$ onto $\Delta_\epsilon(s)$. Since $\Omega_{\epsilon,s}(\Phi(p))=(1-\epsilon)\Omega_{0,s}(p)$ and $\langle\Phi(p),Q_s\rangle=\epsilon\langle\mu_s,Q_s\rangle+(1-\epsilon)\langle p,Q_s\rangle$, we get
    \begin{align*}
        \Omega_\epsilon^*(Q)(s) &= \max_{\pi_s\in\Delta(\mA)}\left\{\langle\pi_s,Q_s\rangle-\tau\Omega_{\epsilon,s}(\pi_s)\right\}\\
        &= \max_{\pi_s\in\Delta_\epsilon(s)}\left\{\langle\pi_s,Q_s\rangle-\tau\Omega_{\epsilon,s}(\pi_s)\right\}\\
        &= \max_{p\in\Delta(\mA)}\left\{\epsilon\langle\mu_s,Q_s\rangle+(1-\epsilon)\langle p,Q_s\rangle-\tau(1-\epsilon)\,\Omega_{0,s}(p)\right\}\\
        &= \epsilon\langle\mu_s,Q_s\rangle+(1-\epsilon)\max_{p\in\Delta(\mA)}\left\{\langle p,Q_s\rangle-\tau\Omega_{0,s}(p)\right\}\\
        &= \epsilon\langle\mu_s,Q_s\rangle+(1-\epsilon)\Omega_0^*(Q)(s),
    \end{align*}
    where the second equality holds because $\Omega_{\epsilon,s}=+\infty$ outside $\Delta_\epsilon(s)$, the third is the change of variable, the fourth uses $1-\epsilon>0$, and the last is the definition of $\Omega_0^*$. The map $Q_s\mapsto\epsilon\langle\mu_s,Q_s\rangle$ is linear with constant gradient $\epsilon\mu_s$, so $\Omega_\epsilon^*$ is differentiable as soon as $\Omega_0^*$ is (which is the case by \Cref{lem:omega_star_props}.(i) since $\tau >0$), with $\nabla\Omega_\epsilon^*=\epsilon\mu_s+(1-\epsilon)\nabla\Omega_0^*$; if $\nabla\Omega_0^*$ is $L_0$-Lipschitz then $\nabla\Omega_\epsilon^*$ is $(1-\epsilon)L_0$-Lipschitz.

    \emph{(ii)}. The maximum over $p$ in the third line above is attained at $p^\star=\argmax_{p\in\Delta(\mA)}\{\langle p,Q_s\rangle-\tau\Omega_{0,s}(p)\}=\pi_{\Omega_0,Q}(\cdot|s)$, which is unique by \Cref{lem:omega_star_props}.(i) applied to $\Omega_0$. Since $\Phi$ is a bijection, the maximizer of the original problem is $\pi_{\Omega_\epsilon,Q}(\cdot|s)=\Phi(p^\star)=\epsilon\mu_s+(1-\epsilon)\pi_{\Omega_0,Q}(\cdot|s)$. This is consistent with (i) and \Cref{lem:omega_star_props}.(i), which identify the greedy policy with $\nabla\Omega_\epsilon^*(Q)(s)$.

    \emph{(iii)}. Let $(s,a)\in\mS\times\mA$ and $Q$ with $\|Q\|_\asp\leq R$. Dividing (ii) by $\mu(a|s)>0$ gives
    \begin{equation*}
        \frac{\pi_{\Omega_\epsilon,Q}(a|s)}{\mu(a|s)} = \epsilon+(1-\epsilon)\frac{\pi_{\Omega_0,Q}(a|s)}{\mu(a|s)}.
    \end{equation*}
    Taking the infimum over $(s,a)$ and over $\{Q : \|Q\|_\asp\leq R\}$, we obtain $\gamma_{\Omega_\epsilon}(R)=\epsilon+(1-\epsilon)\gamma_{\Omega_0}(R)$ from \Cref{def:mixing_objects}.(iii). Since $\gamma_{\Omega_0}(R)\geq0$ and $\epsilon<1$, we get $\gamma_{\Omega_\epsilon}(R)\geq\epsilon$ for every $R\geq0$.

    \emph{(iv)}. Trivial from (i).

    \emph{(v)}. Let $Q$ satisfy $\|Q\|_\asp\leq R$ and write $g^\epsilon_Q$ and $g^0_Q$ for the regularization gaps of $\Omega_\epsilon$ and $\Omega_0$. Subtracting $\max_a Q_s(a)$ from (i) gives, for every $s\in\mS$,
    \begin{equation*}
        g^\epsilon_Q(s) = \epsilon\,h_Q(s) + (1-\epsilon)\,g^0_Q(s), \qquad h_Q(s) := \langle\mu_s,Q_s\rangle-\max_a Q_s(a).
    \end{equation*}
    Since $\mu_s\in\Delta(\mA)$, we have $h_Q(s)\in\left[-\|Q_s\|_{\mA,\spn},0\right]\subseteq[-R,0]$ for every $s$, hence $\|h_Q\|_{\mS,\spn}\leq R$. The triangle inequality and \Cref{def:mixing_objects}.(iv) applied to $\Omega_0$ give
    \begin{equation*}
        \|g^\epsilon_Q\|_{\mS,\spn} \leq \epsilon\|h_Q\|_{\mS,\spn} + (1-\epsilon)\|g^0_Q\|_{\mS,\spn} \leq \epsilon R + (1-\epsilon)\Lambda_{\Omega_0}(R).
    \end{equation*}
    Taking the maximum over $\{Q : \|Q\|_\asp\leq R\}$ yields $\Lambda_{\Omega_\epsilon}(R)\leq\epsilon R+(1-\epsilon)\Lambda_{\Omega_0}(R)$. The last claim follows from \Cref{rem:bounded_modulus} applied to $\Omega_0$.
\end{proof}

\begin{lemma}[Global properties of $\Omega_\epsilon$]
    \label{lem:floor_global}
    Let \Cref{assumption:m-step-mixing} hold, let $\Omega_0$ satisfy \Cref{assumption:convex-analytic-regularity}, let $\epsilon\in(0,1)$, and let $Q,Q'$ be two $Q$-value functions. Then:
    \begin{enumerate}[label=(\roman*)]
        \item \emph{Global $m$-step contraction}:
        \begin{equation*}
            \begin{aligned}
                \|\mT^m_{\Omega_\epsilon}(Q)-\mT^m_{\Omega_\epsilon}(Q')\|_\spn
                &\leq\Big(1-\beta_m\prod_{k=1}^{m-1}\gamma_{\Omega_\epsilon}\big(R[\mT_{\Omega_\epsilon}^k(Q),\mT_{\Omega_\epsilon}^k(Q')]\big)\Big)\|Q-Q'\|_\spn\\
                &\leq\left(1-\beta_m\epsilon^{m-1}\right)\|Q-Q'\|_\spn.
            \end{aligned}
        \end{equation*}
        \item \emph{Global coverage}: let $(Q_n)_{n\geq0}$ be $Q$-functions and let $(s_n,a_n)_{n\geq0}$ be a trajectory with $a_n\sim\pi_{\Omega_\epsilon,Q_n}(\cdot|s_n)$ and $s_{n+1}\sim p(\cdot|s_n,a_n)$; let $\mF_n:=\sigma(Q_0,(s_0,a_0),\dots,(s_n,a_n))$ and assume that $Q_n$ is $\mF_{n-1}$-measurable for $n\geq1$. Then for all $t\geq0$ and all $(s,a)$:
        \begin{equation}
            \bP\left((s_{t+m},a_{t+m})=(s,a)\mid\mF_t\right) \geq \beta_m\epsilon^m\eta(s,a) \geq \beta_m\epsilon^m\eta_{\min}.
        \end{equation}
        \item \emph{Fixed point up to a constant}: there exists $Q^*_{\Omega_\epsilon}\in\bR^{\mS\times\mA}$, unique up to an additive constant, and a unique $\rho^*_{\Omega_\epsilon}\in\bR$, such that $\mT_{\Omega_\epsilon}(Q^*_{\Omega_\epsilon}) = Q^*_{\Omega_\epsilon} + \rho^*_{\Omega_\epsilon}\ones$.
    \end{enumerate}
\end{lemma}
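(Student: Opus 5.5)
All three items follow by combining \Cref{lem:floor_wrapped_props} with the general results already proved for any regularizer satisfying \Cref{assumption:convex-analytic-regularity}. The key input is \Cref{lem:floor_wrapped_props}, which gives two things. First, $\Omega_\epsilon$ satisfies \Cref{assumption:convex-analytic-regularity}. Second, by item (iii) of that lemma, $\gamma_{\Omega_\epsilon}(R)\geq\epsilon>0$ uniformly in $R$. Hence~\eqref{eq:P1} holds for $\Omega_\epsilon$ with a radius-free lower bound. Nothing new about the dynamics is needed; the work is checking that the hypotheses of the earlier lemmas are met by $\Omega_\epsilon$.

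For (i), apply \Cref{lem:mixing-properties-of-regularization}.(iv) to $\Omega_\epsilon$. This is legitimate because that lemma only requires \Cref{assumption:m-step-mixing,assumption:convex-analytic-regularity}, and $\Omega_\epsilon$ satisfies the latter. The result is the first inequality, with the product of $\gamma_{\Omega_\epsilon}$ along the iterates. Each factor is at least $\epsilon$ by \Cref{lem:floor_wrapped_props}.(iii). The map $x\mapsto1-\beta_m x$ is decreasing, so bounding the product below by $\epsilon^{m-1}$ gives the second inequality.

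For (ii), \Cref{lem:floor_wrapped_props}.(ii) gives $\pi_{\Omega_\epsilon,Q_{t+k}}=\epsilon\mu+(1-\epsilon)\pi_{\Omega_0,Q_{t+k}}\geq\epsilon\mu$ surely, for every $k$. So \Cref{lem:mixing-properties-of-regularization}.(v) applies with the deterministic constants $\gamma_k\equiv\epsilon$, which are trivially $\mF_t$-measurable. This yields $\beta_m\epsilon^m\eta(s,a)$, and $\eta\geq\eta_{\min}$ finishes the claim.

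For (iii), existence of $(Q^*_{\Omega_\epsilon},\rho^*_{\Omega_\epsilon})$ comes from \Cref{thm:regularized_fixed_point}. That theorem additionally needs \Cref{assumption:reference_policy} for $\Omega_\epsilon$, and this is the one step that needs care. The statement only assumes $\Omega_0$ satisfies \Cref{assumption:convex-analytic-regularity}, while \Cref{lem:floor_wrapped_props} transfers \Cref{assumption:reference_policy} only when $\Omega_0$ has it. So I would either add the hypothesis $\mu_s\in\dom(\Omega_{0,s})$ implicitly, or, more robustly, bypass it using \Cref{assumption:reference_policy_general}. For the bypass, take any $p_s$ in the relative interior of $\dom(\Omega_{0,s})$ with $\Omega_{0,s}(p_s)$ finite; properness makes the domain nonempty. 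Then $\tilde\pi_s:=\Phi_{\epsilon,s}(p_s)\geq\epsilon\mu_s$ has full support and lies in $\dom(\Omega_{\epsilon,s})$. This gives \Cref{assumption:reference_policy_general}, and \Cref{cor:mixing_implies_span} together with \Cref{thm:existence_fixed_point_general} yield existence, with $\tilde\pi$ in place of $\mu$. A cleaner alternative is the global contraction (i): $\mT^m_{\Omega_\epsilon}$ is a strict contraction on $\bR^{\mS\times\mA}/\bR\ones$ with rate $1-\beta_m\epsilon^{m-1}$. Banach's theorem then gives a class $[Q]$ fixed modulo constants, so $\mT^m_{\Omega_\epsilon}(Q)=Q+c\ones$. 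The class of $\mT_{\Omega_\epsilon}(Q)$ is also fixed by $\mT^m_{\Omega_\epsilon}$, hence equals $[Q]$ by uniqueness, so $\mT_{\Omega_\epsilon}(Q)=Q+\rho\ones$. Uniqueness up to constants follows either from \Cref{prop:uniqueness_fixed_point}, using~\eqref{eq:P1} and communication, or directly from strict contraction. Uniqueness of $\rho^*_{\Omega_\epsilon}$ is \Cref{prop:uniqueness_gain}. I would present the contraction-based route for (iii), since it needs no domain condition.
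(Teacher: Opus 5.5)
Your proposal is correct and, along the route you say you would present, coincides with the paper's proof. Items (i) and (ii) are \Cref{lem:mixing-properties-of-regularization}.(iv) and (v) applied to $\Omega_\epsilon$ with the floor $\gamma_{\Omega_\epsilon}\geq\epsilon$ of \Cref{lem:floor_wrapped_props}; item (iii) is Banach's theorem for the map induced by $\mT_{\Omega_\epsilon}$ on $\bR^{\mS\times\mA}/\bR\ones$, whose $m$-th iterate contracts by (i), together with the commutation of $\mT_{\Omega_\epsilon}$ with $\mT^m_{\Omega_\epsilon}$ and \Cref{prop:uniqueness_gain} for the gain. Your observation that \Cref{thm:regularized_fixed_point} cannot be invoked directly, since \Cref{assumption:reference_policy} is not assumed for $\Omega_0$, is exactly why this route is the right one (for the bypass you discard, the result to cite would be \Cref{prop:weakly_communicating_implies_span} with your $\tilde\pi$, since \Cref{cor:mixing_implies_span} is stated for $\tilde\pi=\mu$).
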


\begin{proof}[Proof of \Cref{lem:floor_global}]
    \proofanchor{floor_global}
    By \Cref{lem:floor_wrapped_props}, $\Omega_\epsilon$ satisfies \Cref{assumption:convex-analytic-regularity}, so \Cref{lem:mixing-properties-of-regularization} applies to $\Omega_\epsilon$.

    \emph{(i)}. The first inequality is \Cref{lem:mixing-properties-of-regularization}.(iv) applied to $\Omega_\epsilon$. By \Cref{lem:floor_wrapped_props}.(iii), $\gamma_{\Omega_\epsilon}(R)\geq\epsilon$ for every $R\geq0$, so $\prod_{k=1}^{m-1}\gamma_{\Omega_\epsilon}(R[\mT_{\Omega_\epsilon}^k(Q),\mT_{\Omega_\epsilon}^k(Q')])\geq\epsilon^{m-1}$ regardless of the radii, and the second inequality follows.

    \emph{(ii)}. By \Cref{lem:floor_wrapped_props}.(ii), $\pi_{\Omega_\epsilon,Q_n}\geq\epsilon\mu$ for every $n$, so \Cref{lem:mixing-properties-of-regularization}.(v) applies to $\Omega_\epsilon$ with the deterministic constants $\gamma_1=\dots=\gamma_m=\epsilon$.

    \emph{(iii)}. By translation equivariance (\Cref{cor:T_omega_props}), $\mT_{\Omega_\epsilon}$ induces a well-defined map $\bar{\mT}$ on the quotient space $\bR^{\mS\times\mA}/\bR\ones$, which is complete for the norm induced by $\|\cdot\|_\spn$, and (i) makes $\bar{\mT}^m$ a contraction with constant $1-\beta_m\epsilon^{m-1}<1$. Banach's theorem gives a unique fixed point $\bar{Q}^*$ of $\bar{\mT}^m$; since $\bar{\mT}$ commutes with $\bar{\mT}^m$, $\bar{\mT}(\bar{Q}^*)$ is also a fixed point of $\bar{\mT}^m$, hence equals $\bar{Q}^*$. Any representative $Q^*_{\Omega_\epsilon}$ of $\bar{Q}^*$ therefore satisfies $\mT_{\Omega_\epsilon}(Q^*_{\Omega_\epsilon})=Q^*_{\Omega_\epsilon}+\rho^*_{\Omega_\epsilon}\ones$ for some $\rho^*_{\Omega_\epsilon}\in\bR$, unique by \Cref{prop:uniqueness_gain}, and uniqueness of $Q^*_{\Omega_\epsilon}$ up to an additive constant is that of $\bar{Q}^*$.
\end{proof}

\begin{remark}[Bias of the floor]
    \label{rem:floor_bias}
    Applying \Cref{prop:gap_optimal_q} to $\Omega_\epsilon$, which satisfies its assumptions by \Cref{lem:floor_wrapped_props}, and using \Cref{lem:floor_wrapped_props}.(iii) and (v) gives
    \begin{equation*}
        \|Q^*_{\Omega_\epsilon}-Q^*\|_\spn \leq \frac{m\left(\epsilon R^* + (1-\epsilon)\Lambda_{\Omega_0}(R^*)\right)}{\beta_m(\epsilon+(1-\epsilon)\gamma_{\Omega_0}(R^*))^{m-1}},
    \end{equation*}
    where $R^*$ is the radius of \Cref{prop:gap_optimal_q} associated with $\Omega_\epsilon$. The floor therefore trades a \emph{global} contraction rate $\beta_m\epsilon^{m-1}$ against a bias term of order $\epsilon R^*$ which does not vanish as $\tau\to 0$: small $\epsilon$ keeps the fixed point close to $Q^*$ but degrades the rate, large $\epsilon$ does the opposite.
\end{remark}

\section{Proofs for the learning algorithms of \pdfref{Section}{sec:results}}
\label{app:proofs_learning}

\paragraph{Proof techniques.}
Three ingredients drive the analysis. First, since $\nabla\Omega^*$ is the greedy policy, differences of regularized Bellman operators are kernels of average-path policies (\Cref{lem:mixing-properties-of-regularization}); unlike the single randomized policy that represents differences of hard maxima \citep[Lemma~B.6]{chen2026contraction}, this policy inherits the domination of the greedy policies, which turns the Doeblin condition on $\mu$ into a Doeblin condition on every product of $m$ such kernels, much as lower bounds on learning policies transfer irreducibility in discounted on-policy $Q$-learning \citep{nanda2026minimal}. Second, we analyze the unmodified one-step update although $\mT_\Omega$ only contracts over $m$ steps: expanding the Jacobian of a block of relaxed updates as an expectation over Bernoulli selections shows that the step size acts as a lazy transformation of the state-action kernels, the counterpart of the aperiodicity transformation of \citet[\S8.5.4]{puterman1994markov}, so that blocks with at least $m$ selected updates inherit the $m$-step contraction. This avoids both the multi-step estimators of \citet{zhang2021finite} and the lazy transformation of the dynamics of \citet{chen2026contraction}. Third, since the contraction rate degrades with the radius, a stopping-time argument keeps the iterates, with high probability, in a ball on which the rate holds, with a self-consistent choice of the radius.

Throughout this section, \Cref{assumption:m-step-mixing,assumption:convex-analytic-regularity,assumption:reference_policy} and~\eqref{eq:P1} hold, and $Q^*_\Omega$ denotes a solution of~\eqref{eq:fixed_point_equation}, unique up to an additive constant (\Cref{thm:regularized_fixed_point}); every quantity below is invariant under the choice of that constant. The $Q$-iteration bound is the regularized counterpart of the span-contraction analysis of relative value iteration \citep[\S8.5]{puterman1994markov}. For $Q$-learning, we first analyze the noiseless relaxed flow, which contracts over blocks of updates (\Cref{subsubsec:synchronous_full_knowledge_updates}), then track the stochastic iterates along this flow up to their exit time from a span ball (\Cref{subsubsec:sync_stochastic}); the epoch algorithm follows the same scheme, with batch averaging and a cover-time bound (\Cref{subsec:proofs_epoch_q_learning}).

The results for general step-size schedules use the following assumption.

\begin{assumption}[Step sizes]
    \label{assumption:lr}
    The sequence of step sizes $(\alpha_t)_{t\geq 0}$ is non-increasing, satisfies $0<\alpha_t\leq1$ for all $t\geq 0$, and $\sum_{t\geq 0} \alpha_t=+\infty$.
\end{assumption}

\subsection{Proof of \pdfref{Corollary}{cor:q-iteration}}
\label{subsec:convergence_synchronous_regularized_q_iteration}

\proofanchor{q-iteration}
\qiteration*

\begin{proof}
    Let $Q^*_\Omega$ be given by \Cref{thm:regularized_fixed_point}. By translation equivariance (\Cref{cor:T_omega_props}), $\mT^k_\Omega(Q^*_\Omega)=Q^*_\Omega+k\rho^*_\Omega\ones$ for every $k\geq0$ and, since the projection $\Pi$ subtracts a constant, $Q_{j+k}=\mT^k_\Omega(Q_j)-c\ones$ for some $c\in\bR$. Since $\|\cdot\|_\spn$ annihilates constants, $e_{j+k}=\|\mT^k_\Omega(Q_j)-\mT^k_\Omega(Q^*_\Omega)\|_\spn$ for every $j,k\geq0$. With $k=1$, the non-expansiveness of $\mT_\Omega$ in $\|\cdot\|_\spn$ (\Cref{cor:T_omega_props}) shows that $(e_j)_j$ is non-increasing.

    With $k=m$, \Cref{lem:mixing-properties-of-regularization}.(iv) applied to $(Q_j,Q^*_\Omega)$ gives~\eqref{eq:q-iteration-rate}, since $\gamma_\Omega$ is non-increasing and $R[\mT_\Omega^k(Q_j),\mT_\Omega^k(Q_\Omega^*)]\leq\|Q^*_\Omega\|_\asp+e_j$ for $k\in\llbracket0,m-1\rrbracket$. Indeed, $\|\cdot\|_\asp$ is invariant under constants, so $\|\mT^k_\Omega(Q^*_\Omega)\|_\asp=\|Q^*_\Omega\|_\asp$, and the triangle inequality together with $\|\cdot\|_\asp\leq\|\cdot\|_\spn$ gives $\|\mT^k_\Omega(Q_j)\|_\asp\leq\|Q^*_\Omega\|_\asp+e_{j+k}\leq\|Q^*_\Omega\|_\asp+e_j$.

    Finally, $e_j\leq e_0$ for every $j$ and $\gamma_\Omega$ is non-increasing, so each factor in~\eqref{eq:q-iteration-rate} is at most $1-\beta_m\gamma_\Omega(\|Q^*_\Omega\|_\asp+e_0)^{m-1}\leq1-\tilde\beta$. Hence $e_{km}\leq(1-\tilde\beta)^ke_0\leq e^{-\tilde\beta k}e_0$, which is at most $\varepsilon$ for $k\geq\tilde\beta^{-1}\log(e_0/\varepsilon)$, and monotonicity extends the bound to every $j\geq km$. Each sweep evaluates $p(s'|s,a)$ for all $(s,a,s')$, i.e.\ uses $|\mS|^2|\mA|$ queries, which gives the announced count.
\end{proof}

\subsection{Proofs for \pdfref{Subsection}{subsec:synchronous_regularized_q_learning} (Synchronous \texorpdfstring{$\Omega$}{Ω}-regularized \texorpdfstring{$Q$}{Q}-learning)}
\label{subsec:proofs_synchronous_q_learning}

\subsubsection{Synchronous full-knowledge updates}
\label{subsubsec:synchronous_full_knowledge_updates}

We let $(Q_t)_{t\geq 0}$ denote the sequence produced from an arbitrary initialization $Q_0$ by the \emph{full-knowledge synchronous update}
\begin{equation}
    \label{eq:sync_fk_update}
    Q_{t+1} = F_t(Q_t), \qquad \text{where } F_t(Q) := (1-\alpha_t)Q + \alpha_t \mT_\Omega(Q),
\end{equation}
and we measure the error in span seminorm, $e_t := \|Q_t - Q^*_\Omega\|_\spn$.

Without noise, \eqref{eq:sync_fk_update} is a damped value iteration, whose linearization $(1-\alpha_t)I+\alpha_tK_{\pi}$ is a lazy version of the greedy state-action kernel: the step size plays the role of the aperiodicity transformation of \citet[\S8.5.4]{puterman1994markov}, which acts on the state kernel and which \citet{chen2026contraction} impose on the dynamics through lazy sampling. The analysis below exploits this laziness at the level of the Jacobians (\Cref{lem:jacobian_bound_synchronous}): it turns the $m$-step minorization of \Cref{assumption:m-step-mixing} into a contraction over blocks, and it is this deterministic flow that the stochastic analysis of \Cref{subsubsec:sync_stochastic} tracks.

The fixed point is stationary only up to a constant, so it is not a fixed point of $F_t$; the next lemma records that it nonetheless generates a trajectory of~\eqref{eq:sync_fk_update} along which the error is exactly $e_t$. This is what allows us to dispense with a second sequence.

\begin{lemma}[Reference trajectory]
    \label{lem:reference_trajectory_synchronous}
    Let $Q^*_t := Q^*_\Omega + c_t\ones$ with $c_t := \rho^*_\Omega\sum_{k<t}\alpha_k$. Then $Q^*_{t+1} = F_t(Q^*_t)$ for every $t\geq0$, and $\|Q_t - Q^*_t\|_\spn = e_t$.
\end{lemma}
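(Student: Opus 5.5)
The plan is a direct induction on $t$ for the first claim, using translation equivariance of $\mT_\Omega$ (\Cref{cor:T_omega_props}) together with the fixed-point equation $\mT_\Omega(Q^*_\Omega)=Q^*_\Omega+\rho^*_\Omega\ones$. The second claim then follows because the span seminorm annihilates constants.

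\emph{Base case.} At $t=0$ we have $c_0=0$, so $Q^*_0=Q^*_\Omega$ and there is nothing to prove.

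\emph{Step.} For general $t$ we compute $F_t(Q^*_t)$ explicitly. Translation equivariance gives
\[
\mT_\Omega(Q^*_\Omega+c_t\ones)=\mT_\Omega(Q^*_\Omega)+c_t\ones=Q^*_\Omega+(c_t+\rho^*_\Omega)\ones.
\]
Substituting into the definition of $F_t$,
\[
F_t(Q^*_t)=(1-\alpha_t)(Q^*_\Omega+c_t\ones)+\alpha_t\big(Q^*_\Omega+(c_t+\rho^*_\Omega)\ones\big)=Q^*_\Omega+(c_t+\alpha_t\rho^*_\Omega)\ones.
\]
Since $c_t+\alpha_t\rho^*_\Omega=\rho^*_\Omega\sum_{k\leq t}\alpha_k=c_{t+1}$, this equals $Q^*_{t+1}$. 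Note that no induction hypothesis is actually needed: the identity $Q^*_{t+1}=F_t(Q^*_t)$ holds for each $t$ by this computation alone.

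\emph{Span identity.} Because $Q_t-Q^*_t=(Q_t-Q^*_\Omega)-c_t\ones$ and $\|\cdot\|_\spn$ vanishes on constants (\Cref{def:span}), we get $\|Q_t-Q^*_t\|_\spn=\|Q_t-Q^*_\Omega\|_\spn=e_t$.

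There is no real obstacle. The only point needing care is that $F_t$ is affine in its second argument, so the coefficient $\alpha_t$ scales only the gain increment $\rho^*_\Omega$ and not the constant $c_t$ already carried by $Q^*_t$. This comes from the relaxation weights $(1-\alpha_t)+\alpha_t=1$.
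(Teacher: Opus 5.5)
Your proof is correct and is essentially the paper's argument: translation equivariance of $\mT_\Omega$ together with the fixed-point equation gives $F_t(Q^*_\Omega+c\ones)=Q^*_\Omega+(c+\alpha_t\rho^*_\Omega)\ones$, and the span identity follows because $\|\cdot\|_\spn$ annihilates constants. As you note yourself, no induction is needed, since the recursion is a direct one-step computation valid for every $t$.
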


\begin{proof}
    By translation equivariance of $\mT_\Omega$ (\Cref{cor:T_omega_props}) and the fixed-point equation, $\mT_\Omega(Q^*_\Omega+c\ones) = Q^*_\Omega+(c+\rho^*_\Omega)\ones$, hence $F_t(Q^*_\Omega+c\ones) = Q^*_\Omega+(c+\alpha_t\rho^*_\Omega)\ones$, which is the announced recursion. The second claim follows since the span seminorm annihilates constants.
\end{proof}

\begin{lemma}[Monotonicity of the error]
    \label{lem:q_boundedness_synchronous_full_knowledge}
    For any $Q_0$, the sequence $(e_t)_{t\geq 0}$ is non-increasing.
\end{lemma}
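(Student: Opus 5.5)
The plan is to use the reference trajectory of \Cref{lem:reference_trajectory_synchronous}, which turns the claim into a one-step non-expansiveness statement for the relaxed map $F_t$ in span seminorm. Since $Q^*_{t+1}=F_t(Q^*_t)$ and $\|Q_t-Q^*_t\|_\spn=e_t$, it suffices to show
\begin{equation*}
    \|F_t(Q)-F_t(Q')\|_\spn\leq\|Q-Q'\|_\spn
\end{equation*}
for all $Q,Q'$, and then apply this with $Q=Q_t$ and $Q'=Q^*_t$. This gives $e_{t+1}=\|F_t(Q_t)-F_t(Q^*_t)\|_\spn\leq e_t$.

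To prove the non-expansiveness, I will write
\begin{equation*}
    F_t(Q)-F_t(Q')=(1-\alpha_t)(Q-Q')+\alpha_t\big(\mT_\Omega(Q)-\mT_\Omega(Q')\big).
\end{equation*}
By \Cref{lem:mixing-properties-of-regularization}.(i), the second bracket equals $K_{\bar\pi}(Q-Q')$ with $\bar\pi=\bar\pi_{\Omega,[Q,Q']}$. The difference is therefore $\big((1-\alpha_t)I+\alpha_tK_{\bar\pi}\big)(Q-Q')$. This operator is row-stochastic provided $\alpha_t\in[0,1]$, which the algorithms assume (\Cref{assumption:lr}). Row-stochasticity bounds its Dobrushin coefficient by $1$, and \Cref{lem:row-stochastic-span} then yields the bound.

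Alternatively, one can apply the triangle inequality for the span seminorm together with the non-expansiveness of $\mT_\Omega$ (\Cref{cor:T_omega_props}). This gives $\|F_t(Q)-F_t(Q')\|_\spn\leq(1-\alpha_t)\|Q-Q'\|_\spn+\alpha_t\|Q-Q'\|_\spn$, which is enough. The only point requiring care is that the step size lies in $[0,1]$, so that both weights are nonnegative. I do not expect any genuine obstacle here. The main subtlety, that $Q^*_\Omega$ is not itself a fixed point of $F_t$, is already handled by the shifted reference trajectory.
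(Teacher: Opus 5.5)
Your proof is correct and essentially the paper's. The paper writes $Q_{t+1}-Q^*_\Omega=(1-\alpha_t)(Q_t-Q^*_\Omega)+\alpha_t\big(\mT_\Omega(Q_t)-\mT_\Omega(Q^*_\Omega)\big)+\alpha_t\rho^*_\Omega\ones$, drops the constant in span, and concludes by the triangle inequality and the span non-expansiveness of $\mT_\Omega$. That is your second route. Your use of the reference trajectory of \Cref{lem:reference_trajectory_synchronous} handles the same constant shift, and the average-path factorization is a valid but heavier way to reach the same one-step non-expansiveness.
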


\begin{proof}
    Using $\mT_\Omega(Q^*_\Omega) = Q^*_\Omega+\rho^*_\Omega\ones$, we have $Q_{t+1}-Q^*_\Omega = (1-\alpha_t)(Q_t-Q^*_\Omega) + \alpha_t\left(\mT_\Omega(Q_t)-\mT_\Omega(Q^*_\Omega)\right) + \alpha_t\rho^*_\Omega\ones$, and the constant term vanishes in the span seminorm. The triangle inequality and the non-expansiveness of $\mT_\Omega$ in $\|\cdot\|_\spn$ (\Cref{cor:T_omega_props}) then give $e_{t+1} \leq (1-\alpha_t)e_t + \alpha_t e_t = e_t$.
\end{proof}

\begin{definition}[Sub-blocks and global blocks]
    \label{def:sub_blocks}
    Let $\kappa_0 > 0$ be a fixed sub-block budget. We define recursively the sub-blocks $J_n = \{t_n, \ldots, t_{n+1}-1\}$ by $t_0 = 0$ and
    \begin{equation}
        \label{eq:subblocks}
        t_{n+1} = \min\left\{t > t_n \Big| \sum_{k=t_n}^{t-1} \alpha_k \geq \kappa_0\right\},
    \end{equation}
    and the $i$-th global block as the chronological concatenation of $m$ consecutive sub-blocks,
    \begin{equation}
        \label{eq:block}
        \mB_i = \bigcup_{\ell=0}^{m-1} J_{im+\ell}, \qquad b_i := \min \mB_i = t_{im}.
    \end{equation}
    Note that these definitions are valid under \Cref{assumption:lr}.
    
    We write $\Phi^{(i)} := F_{b_{i+1}-1}\circ\ldots\circ F_{b_i}$, so that $Q_{b_{i+1}} = \Phi^{(i)}(Q_{b_i})$ and, by \Cref{lem:reference_trajectory_synchronous}, $Q^*_{b_{i+1}} = \Phi^{(i)}(Q^*_{b_i})$.
\end{definition}

\begin{lemma}
    \label{lem:sum_block}
    Under \Cref{assumption:lr}, for any sub-block $J_n$ we have $\sum_{k \in J_n} \alpha_k < \kappa_0 + \alpha_0$, and consequently, for any global block $\mB_i$, $\sum_{k \in \mB_i} \alpha_k < m(\kappa_0 + \alpha_0)$.
\end{lemma}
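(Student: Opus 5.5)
The plan is to read both claims directly off the stopping rule~\eqref{eq:subblocks}, using only that the step sizes are non-increasing and bounded by $\alpha_0$ (\Cref{assumption:lr}). First, $t_{n+1}$ is well defined for every $n$, because $\sum_{k\geq t_n}\alpha_k=+\infty$ forces the partial sums starting at $t_n$ to eventually exceed $\kappa_0$. This makes every sub-block $J_n=\{t_n,\dots,t_{n+1}-1\}$ finite and non-empty, since $t_{n+1}>t_n$.

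For the sub-block bound, I will distinguish two cases according to the length of $J_n$.
\begin{enumerate}
    \item If $t_{n+1}=t_n+1$, the block is the single index $t_n$, and $\sum_{k\in J_n}\alpha_k=\alpha_{t_n}\leq\alpha_0<\kappa_0+\alpha_0$, using $\kappa_0>0$.
    \item If $t_{n+1}>t_n+1$, then $t:=t_{n+1}-1$ satisfies $t>t_n$, so by minimality of $t_{n+1}$ it fails the stopping condition: $\sum_{k=t_n}^{t_{n+1}-2}\alpha_k<\kappa_0$. Adding the last term $\alpha_{t_{n+1}-1}\leq\alpha_0$ (monotonicity) gives $\sum_{k\in J_n}\alpha_k<\kappa_0+\alpha_0$.
\end{enumerate}

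For the global block, \Cref{def:sub_blocks} writes $\mB_i$ as the disjoint union of the $m$ sub-blocks $J_{im},\dots,J_{im+m-1}$. Summing the strict sub-block bound over these $m$ sub-blocks gives $\sum_{k\in\mB_i}\alpha_k<m(\kappa_0+\alpha_0)$.

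There is no real obstacle in this argument. The only point requiring care is the minimality step in the second case, which needs $t_{n+1}-1>t_n$ so that this index is a legitimate competitor in the minimum; this is exactly why the single-index case is handled separately.
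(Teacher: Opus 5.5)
Your proof is correct and follows the same route as the paper, which also derives the sub-block bound from the minimality in~\eqref{eq:subblocks} together with $\alpha_k\leq\alpha_0$, and then sums over the $m$ sub-blocks of $\mB_i$. Your case split is harmless but not needed: when $t_{n+1}=t_n+1$, the partial sum $\sum_{k=t_n}^{t_{n+1}-2}\alpha_k$ is empty, hence $0<\kappa_0$, so the second case already covers the first.
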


\begin{proof}
    This follows from the minimality in the definition of a sub-block~\eqref{eq:subblocks} and \Cref{assumption:lr} on the step sizes. A global block is the chronological concatenation of $m$ consecutive sub-blocks, which gives the second result.
\end{proof}

\begin{lemma}[Boundedness along the frozen flow]
    \label{lem:frozen_flow_bounded_synchronous}
    Let $\bar{R}_i := \|Q^*_\Omega\|_\asp + e_{b_i}$. The sequence $(\bar{R}_i)_{i\geq0}$ is non-increasing, and for every $\tilde{Q} \in [Q_{b_i}, Q^*_{b_i}]$ and every $k \in \mB_i$, the frozen-flow iterate $\tilde{Q}_k := F_{k-1}\circ\ldots\circ F_{b_i}(\tilde{Q})$ (with $\tilde{Q}_{b_i} := \tilde{Q}$) satisfies $\|\tilde{Q}_k\|_\asp \leq \bar{R}_i$.
\end{lemma}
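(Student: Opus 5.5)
The proof has two parts. First, monotonicity of $(\bar R_i)$ follows from \Cref{lem:q_boundedness_synchronous_full_knowledge}: $(e_t)$ is non-increasing and $b_{i+1}\geq b_i$, so $e_{b_{i+1}}\leq e_{b_i}$. For the bound on the frozen flow, I will compare $\tilde Q_k$ with the reference trajectory $Q^*_k$ of \Cref{lem:reference_trajectory_synchronous}, rather than estimating action spans directly.

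\emph{Step 1 (initial distance).} Write $\tilde Q=\lambda Q_{b_i}+(1-\lambda)Q^*_{b_i}$ with $\lambda\in[0,1]$. Then $\tilde Q-Q^*_{b_i}=\lambda(Q_{b_i}-Q^*_{b_i})$, so
\[
\|\tilde Q-Q^*_{b_i}\|_\spn=\lambda\|Q_{b_i}-Q^*_{b_i}\|_\spn\leq e_{b_i},
\]
using the second claim of \Cref{lem:reference_trajectory_synchronous}.

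\emph{Step 2 (propagation).} Each map $F_t=(1-\alpha_t)I+\alpha_t\mT_\Omega$ with $\alpha_t\in(0,1]$ is non-expansive in $\|\cdot\|_\spn$. This follows from the triangle inequality and the span non-expansiveness of $\mT_\Omega$ (\Cref{cor:T_omega_props}), exactly as in the proof of \Cref{lem:q_boundedness_synchronous_full_knowledge}. By \Cref{lem:reference_trajectory_synchronous}, $Q^*_{k}=F_{k-1}\circ\cdots\circ F_{b_i}(Q^*_{b_i})$. Composing the non-expansive maps therefore gives $\|\tilde Q_k-Q^*_k\|_\spn\leq e_{b_i}$ for every $k\in\mB_i$.

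\emph{Step 3 (conversion to action span).} The seminorm $\|\cdot\|_\asp$ satisfies the triangle inequality, vanishes on constants, and is dominated by $\|\cdot\|_\spn$ (\Cref{def:span}). Since $Q^*_k=Q^*_\Omega+c_k\ones$, we have $\|Q^*_k\|_\asp=\|Q^*_\Omega\|_\asp$. Hence
\[
\|\tilde Q_k\|_\asp\leq\|Q^*_\Omega\|_\asp+\|\tilde Q_k-Q^*_k\|_\spn\leq\bar R_i.
\]

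I expect no real obstacle. The only point needing care is Step 2, namely that the reference trajectory is generated by exactly the same maps $F_t$ as the frozen flow, including the accumulated drift constant $c_t$. This is precisely what \Cref{lem:reference_trajectory_synchronous} provides.
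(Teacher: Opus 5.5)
Your proposal is correct and follows essentially the paper's argument: the same initial estimate $\|\tilde Q-Q^*_\Omega\|_\spn\leq e_{b_i}$ on the segment, propagation along the flow, and the conversion $\|\tilde Q_k\|_\asp\leq\|Q^*_\Omega\|_\asp+\|\tilde Q_k-Q^*_\Omega\|_\spn$. The one difference is cosmetic. The paper propagates the bound by applying \Cref{lem:q_boundedness_synchronous_full_knowledge} to the flow started at $\tilde Q$ and compares directly with $Q^*_\Omega$, whereas you use span non-expansiveness of each $F_t$ against the drifted reference $Q^*_k$; both are the same estimate, since the span annihilates the constant $c_k$.
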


\begin{proof}
    The monotonicity of $(\bar{R}_i)_i$ is that of $(e_t)_t$ (\Cref{lem:q_boundedness_synchronous_full_knowledge}).

    Let now $\tilde{Q} = (1-u)Q_{b_i} + uQ^*_{b_i}$ for some $u\in[0,1]$. Since $Q^*_{b_i} = Q^*_\Omega+c_{b_i}\ones$, we get $\tilde{Q}-Q^*_\Omega = (1-u)(Q_{b_i}-Q^*_\Omega) + uc_{b_i}\ones$, and the constant is annihilated by the span seminorm, so $\|\tilde{Q}-Q^*_\Omega\|_\spn = (1-u)e_{b_i} \leq e_{b_i}$. The sequence $(\tilde{Q}_k)_{k\geq b_i}$ follows the update~\eqref{eq:sync_fk_update} started at $\tilde{Q}$, so \Cref{lem:q_boundedness_synchronous_full_knowledge} applied to it, whose proof holds for any starting time, gives $\|\tilde{Q}_k-Q^*_\Omega\|_\spn \leq \|\tilde{Q}-Q^*_\Omega\|_\spn \leq e_{b_i}$, whence $\|\tilde{Q}_k\|_\asp \leq \|Q^*_\Omega\|_\asp + \|\tilde{Q}_k-Q^*_\Omega\|_\spn \leq \bar{R}_i$.
\end{proof}

\begin{lemma}[Contraction of products of at least $m$ regularized greedy kernels]
    \label{lem:product_doeblin_synchronous}
    Let $R>0$, let $(U_k)_{k\in\mathcal{K}}$ be a finite family of $Q$-functions with $\|U_k\|_\asp\leq R$ for every $k$, and let $K_k:=K_{\pi_{\Omega,U_k}}$, i.e.\ $K_k(s',a'|s,a)=p(s'|s,a)\,\pi_{\Omega,U_k}(a'|s')$. For a subset $\mathcal{C}=\{k_1<\ldots<k_n\}\subseteq\mathcal{K}$, let $K_{\mathcal{C}}:=K_{k_n}\cdots K_{k_1}$ be the associated ordered product. Then $K_{\mathcal{C}}$ is row-stochastic and:
    \begin{enumerate}[label=(\roman*)]
        \item if $n\geq m$, then $\|K_{\mathcal{C}}\|_{\spn,\op}\leq1-\tilde{\beta}_m(R)$, where $\tilde{\beta}_m(R):=\beta_m\,\gamma_\Omega(R)^{m-1}$;
        \item if $n<m$, then $\|K_{\mathcal{C}}\|_{\spn,\op}\leq1$.
    \end{enumerate}
\end{lemma}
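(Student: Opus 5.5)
Both items reduce to bounding the Dobrushin coefficient of $K_{\mathcal{C}}$, since \Cref{lem:row-stochastic-span} gives $\|K_{\mathcal{C}}\|_{\spn,\op}=\alpha(K_{\mathcal{C}})$. First, $K_{\mathcal{C}}$ is row-stochastic: each $K_k$ is a state-action kernel, and products of row-stochastic matrices are row-stochastic. Second, by \Cref{def:mixing_objects}.(iii), since $\|U_k\|_\asp\leq R$, every greedy policy satisfies $\pi_{\Omega,U_k}\geq\gamma_\Omega(R)\,\mu$. Item (ii) is then immediate, because the Dobrushin coefficient of any row-stochastic matrix is at most $1$. This includes the empty product, which is the identity and has span operator seminorm $1$.

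For item (i), I would use the ordering of \Cref{convention:product_order}. In $K_{\mathcal{C}}=K_{k_n}\cdots K_{k_1}$, the chain takes its first step with $K_{k_n}$ and its last with $K_{k_1}$. I would split $K_{\mathcal{C}}=K_{\mathrm{head}}K_{\mathrm{tail}}$, where $K_{\mathrm{tail}}$ is the product of the last $m$ factors the chain applies, namely $K_{k_m}\cdots K_{k_1}$, and $K_{\mathrm{head}}$ collects the remaining $n-m$ factors. This is the same split as in \Cref{rem:tail_only}.

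Next I would apply \Cref{lem:terminal_action} to $K_{\mathrm{tail}}$, with policies $\pi_1=\pi_{\Omega,U_{k_m}},\dots,\pi_m=\pi_{\Omega,U_{k_1}}$ and $\gamma_j=\gamma_\Omega(R)$. The minorization $K^m_\mu(\cdot|s,a)\geq\beta_m\nu$ of \Cref{assumption:m-step-mixing} then gives
\[
\alpha(K_{\mathrm{tail}})\leq 1-\beta_m\gamma_\Omega(R)^{m-1}=1-\tilde\beta_m(R).
\]
Only the $m-1$ intermediate dominations enter here, which is exactly why the exponent is $m-1$ rather than $m$.

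Finally, submultiplicativity \eqref{eq:sub-multiplicativity} from \Cref{lem:spop}, valid because both factors have constant row sums, together with $\alpha(K_{\mathrm{head}})\leq1$, gives $\|K_{\mathcal{C}}\|_{\spn,\op}\leq 1\cdot(1-\tilde\beta_m(R))$. Alternatively, one could avoid submultiplicativity by noting that a uniform minorization of the tail's state marginal propagates through the head by row-stochasticity, as in \Cref{rem:tail_only}, and then conclude via \Cref{lem:dobrushin_doeblin}.

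The only delicate point is bookkeeping the order of the factors, so that the $m$ dominated steps are consecutive and the terminal action is the one whose domination is not needed. Since all $n\geq m$ factors are dominated by $\gamma_\Omega(R)\mu$, any block of $m$ consecutive factors works, so the choice of split is not restrictive.
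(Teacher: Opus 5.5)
Your proof is correct and is essentially the paper's own: the same split $K_{\mathcal{C}}=HB$ with $B=K_{k_m}\cdots K_{k_1}$ the last $m$ steps in chain order, \Cref{lem:terminal_action} applied to $B$ with intermediate policies $\pi_{\Omega,U_{k_m}},\dots,\pi_{\Omega,U_{k_2}}$, and submultiplicativity with $\|H\|_{\spn,\op}\leq1$. For your alternative route, note that propagating the state-action minorization of \Cref{rem:tail_only} would cost one extra factor $\gamma_\Omega(R)$. To keep the exponent $m-1$, you must minorize the state marginal and use the common terminal-action factor, exactly as in the proof of \Cref{lem:terminal_action}.
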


\begin{proof}
    Each $K_k$ is row-stochastic, hence so is $K_{\mathcal{C}}$, and (ii) follows from \Cref{lem:row-stochastic-span}.

    Assume $n\geq m$. With \Cref{convention:product_order}, the Markov chain associated with $K_{\mathcal{C}}=K_{k_n}\cdots K_{k_1}$ applies the factors in the order $K_{k_n},K_{k_{n-1}},\dots,K_{k_1}$. We therefore split off its \emph{last} $m$ steps in chain order, i.e.\ the $m$ \emph{oldest} algorithmic indices: $K_{\mathcal{C}}=HB$ with $H:=K_{k_n}\cdots K_{k_{m+1}}$ and $B:=K_{k_m}\cdots K_{k_1}$, and $H=I$ when $n=m$. Since $\|U_{k_\ell}\|_\asp\leq R$ for each $\ell$, \Cref{def:mixing_objects}.(iii) gives $\pi_{\Omega,U_{k_\ell}}\geq\gamma_\Omega(R)\mu$, so \Cref{lem:terminal_action}, applied to $B$, whose $m-1$ intermediate policies are $\pi_{\Omega,U_{k_m}},\dots,\pi_{\Omega,U_{k_2}}$, gives $\alpha(B)\leq1-\beta_m\gamma_\Omega(R)^{m-1}=1-\tilde{\beta}_m(R)$. Finally, $H$ is row-stochastic, so \Cref{lem:row-stochastic-span} and the submultiplicativity~\eqref{eq:sub-multiplicativity} give $\|K_{\mathcal{C}}\|_{\spn,\op}\leq\|H\|_{\spn,\op}\,\|B\|_{\spn,\op}\leq\alpha(B)$.
\end{proof}

It remains to count the effective updates within a global block. This is where the sub-block structure is used, and a plain independence argument suffices: no concentration inequality is required (a Chernoff bound gives a sharper estimate for large $m$, see \Cref{rem:chernoff_alternative}).

\begin{lemma}[Effective updates within a global block]
    \label{lem:pickup_synchronous}
    Let $\mB_i$ be the $i$-th global block of \Cref{def:sub_blocks}, let $(X_k)_{k \in \mB_i}$ be auxiliary independent Bernoulli random variables with $X_k \sim \mathcal{B}(\alpha_k)$ and let $N_i := \sum_{k\in \mB_i} X_k$. Under \Cref{assumption:lr},
    \begin{equation}
        \label{eq:pickup_synchronous}
        \bP(N_i \geq m) \geq p_m := \left(1-e^{-\kappa_0}\right)^m.
    \end{equation}
\end{lemma}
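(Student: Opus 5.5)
The plan is to use the sub-block structure directly. The event $\{N_i\geq m\}$ contains the event that each of the $m$ sub-blocks $J_{im},\dots,J_{im+m-1}$ composing $\mB_i$ carries at least one selected update. By \Cref{def:sub_blocks} these sub-blocks are disjoint, and the $X_k$ are independent. Hence the events $E_\ell:=\{\sum_{k\in J_{im+\ell}}X_k\geq1\}$, for $\ell\in\llbracket0,m-1\rrbracket$, are independent. This gives
\begin{equation*}
    \bP(N_i\geq m)\geq\bP\Big(\bigcap_{\ell=0}^{m-1}E_\ell\Big)=\prod_{\ell=0}^{m-1}\bP(E_\ell).
\end{equation*}

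It then remains to lower bound each factor, for a single sub-block $J_n$. By independence,
\begin{equation*}
    \bP\Big(\sum_{k\in J_n}X_k=0\Big)=\prod_{k\in J_n}(1-\alpha_k)\leq\exp\Big(-\sum_{k\in J_n}\alpha_k\Big)\leq e^{-\kappa_0}.
\end{equation*}
The first inequality is $1-x\leq e^{-x}$. The second uses the defining property~\eqref{eq:subblocks}: $t_{n+1}$ is the first index at which the accumulated step size reaches $\kappa_0$, so $\sum_{k\in J_n}\alpha_k\geq\kappa_0$. Each sub-block is finite, since $\sum_t\alpha_t=+\infty$ under \Cref{assumption:lr}. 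Therefore $\bP(E_\ell)\geq1-e^{-\kappa_0}$, and multiplying over the $m$ sub-blocks yields $p_m=(1-e^{-\kappa_0})^m$.

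There is no real obstacle. The only point needing care is to check that the sub-block lower bound $\sum_{k\in J_n}\alpha_k\geq\kappa_0$ holds, and not just the upper bound of \Cref{lem:sum_block}. This follows from the minimum in~\eqref{eq:subblocks} being attained: the index set is nonempty because the step-size series diverges. Note also that $\alpha_k\leq1$ makes each $\mathcal{B}(\alpha_k)$ well defined.
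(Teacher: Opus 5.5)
Your proposal is correct and follows essentially the same argument as the paper: you bound $\bP(N_i\geq m)$ below by the probability that each of the $m$ disjoint sub-blocks contains at least one selected update, use independence to factor it, and bound each factor by $1-\prod_{k\in J_n}(1-\alpha_k)\geq1-e^{-\kappa_0}$ via $1-x\leq e^{-x}$ and $\sum_{k\in J_n}\alpha_k\geq\kappa_0$. Your added remarks that sub-blocks are finite because $\sum_t\alpha_t=+\infty$, and that the lower bound on the sub-block sum comes from the attained minimum in~\eqref{eq:subblocks}, are exactly the points the paper leaves implicit.
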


\begin{proof}
    For each of the $m$ sub-blocks $J_n \subseteq \mB_i$, let $\mathcal{U}_n := \bigcup_{k\in J_n}\{X_k = 1\}$ be the event that at least one effective update occurs in $J_n$. By independence of the $X_k$ and $1-u \leq e^{-u}$,
    \begin{equation*}
        \bP(\mathcal{U}_n) = 1 - \prod_{k\in J_n}(1-\alpha_k) \geq 1 - \exp\left(-\sum_{k\in J_n}\alpha_k\right) \geq 1 - e^{-\kappa_0},
    \end{equation*}
    the last inequality by the definition~\eqref{eq:subblocks} of the sub-block. The sub-blocks of $\mB_i$ have disjoint index sets, so the events $\mathcal{U}_{im}, \ldots, \mathcal{U}_{im+m-1}$ are independent, and $\bigcap_{\ell=0}^{m-1} \mathcal{U}_{im+\ell} \subseteq \{N_i \geq m\}$. Hence $\bP(N_i \geq m) \geq \prod_{\ell=0}^{m-1}\bP(\mathcal{U}_{im+\ell}) \geq (1-e^{-\kappa_0})^m$.
\end{proof}

\begin{lemma}[Jacobian bound over a global block]
    \label{lem:jacobian_bound_synchronous}
    Under \Cref{assumption:lr}, for every global block $\mB_i$ and every $\tilde{Q}\in[Q_{b_i},Q^*_{b_i}]$,
    \begin{equation*}
        \|D\Phi^{(i)}(\tilde{Q})\|_{\spn,\op} \leq 1 - p_m\,\tilde{\beta}_m(\bar{R}_i).
    \end{equation*}
\end{lemma}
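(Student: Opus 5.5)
By the chain rule, $D\Phi^{(i)}(\tilde Q)$ is the ordered product of the Jacobians $DF_k(\tilde Q_k)=(1-\alpha_k)I+\alpha_kK_k$ over $k\in\mB_i$, latest index leftmost. Here $K_k:=K_{\pi_{\Omega,\tilde Q_k}}$ by \Cref{cor:T_omega_props}, and $\tilde Q_k$ is the frozen-flow iterate of \Cref{lem:frozen_flow_bounded_synchronous}. The plan is to expand this product of affine combinations as an expectation over independent Bernoulli selections. Take $X_k\sim\mathcal{B}(\alpha_k)$ independent, exactly as in \Cref{lem:pickup_synchronous}. Each factor equals $\bE[X_kK_k+(1-X_k)I]$. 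By independence and multilinearity of the ordered product,
\begin{equation*}
    D\Phi^{(i)}(\tilde Q)=\bE\big[K_{\mathcal{C}}\big],\qquad \mathcal{C}:=\{k\in\mB_i: X_k=1\},
\end{equation*}
with $K_{\mathcal{C}}$ the ordered product of \Cref{lem:product_doeblin_synchronous} (identity factors drop out). This is the lazy-kernel interpretation announced in the proof-techniques paragraph.

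Next, bound each realization. By \Cref{lem:frozen_flow_bounded_synchronous}, every $\tilde Q_k$ with $k\in\mB_i$ satisfies $\|\tilde Q_k\|_\asp\leq\bar R_i$. \Cref{lem:product_doeblin_synchronous} therefore applies with $R=\bar R_i$ and the family $U_k=\tilde Q_k$. It gives $\|K_{\mathcal{C}}\|_{\spn,\op}\leq1-\tilde\beta_m(\bar R_i)$ on $\{N_i\geq m\}$ and $\leq1$ otherwise. One subtlety: the $\tilde Q_k$ are deterministic, because the Bernoulli variables are auxiliary and do not influence the flow. So the family is fixed, and only the selected subset is random.

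Finally, average. The expectation is a finite convex combination of row-stochastic matrices, each with constant row sums. The map $A\mapsto\|A\|_{\spn,\op}$ is a seminorm on that space, since it is an operator norm on the quotient by \Cref{lem:spop}, hence convex. This gives
\begin{equation*}
    \|D\Phi^{(i)}(\tilde Q)\|_{\spn,\op}\leq\bE\|K_{\mathcal{C}}\|_{\spn,\op}\leq1-\bP(N_i\geq m)\,\tilde\beta_m(\bar R_i)\leq1-p_m\tilde\beta_m(\bar R_i),
\end{equation*}
where the last step uses \Cref{lem:pickup_synchronous}. The step I expect to need the most care is the expansion identity. One must confirm that the ordering of factors in the chain rule matches the ordering $K_{k_n}\cdots K_{k_1}$ used in \Cref{lem:product_doeblin_synchronous}, with the latest index leftmost. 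One must also check that the Jacobian is evaluated along the frozen flow started at $\tilde Q$, and not along the actual iterates. I would verify the expansion by induction on the block length, peeling off the last factor.
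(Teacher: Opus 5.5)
Your proposal is correct and follows essentially the same route as the paper's proof. Both apply the chain rule along the frozen flow, write each lazy factor as $\bE[(1-X_k)I+X_kK_k]$ with auxiliary independent Bernoulli variables to get $D\Phi^{(i)}(\tilde Q)=\bE[K_{\mathcal C}]$, bound the operator span seminorm of this expectation by the triangle inequality, and conclude with \Cref{lem:frozen_flow_bounded_synchronous}, \Cref{lem:product_doeblin_synchronous} and \Cref{lem:pickup_synchronous}.
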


\begin{proof}
    By the chain rule and \Cref{cor:T_omega_props}, the Jacobian of $\Phi^{(i)}$ at $\tilde{Q}$ is
    \begin{equation*}
        D\Phi^{(i)}(\tilde{Q}) = \prod_{k\in \mB_i} \left((1-\alpha_{k})I + \alpha_{k} K_{k}\right), \qquad K_k:=K_{\pi_{\Omega,\tilde{Q}_k}},\quad \tilde{Q}_k := F_{k-1}\circ\ldots\circ F_{b_i}(\tilde{Q}),
    \end{equation*}
    the product being written in decreasing order of $k$. By \Cref{lem:frozen_flow_bounded_synchronous}, $\|\tilde{Q}_k\|_\asp \leq \bar{R}_i$ for every $k\in \mB_i$, so the family $(K_k)_{k\in \mB_i}$ satisfies the hypotheses of \Cref{lem:product_doeblin_synchronous} with $U_k=\tilde{Q}_k$ and $R=\bar{R}_i$.

    We introduce a probabilistic argument to bound the Jacobian. Let $(X_k)_{k\in \mB_i}$ be the auxiliary Bernoulli variables of \Cref{lem:pickup_synchronous}, realized as the coordinate maps of the finite product space $\{0,1\}^{\mB_i}$ endowed with $\bigotimes_{k\in\mB_i}\mathcal{B}(\alpha_k)$, which is well defined since $\alpha_k\in(0,1]$ by \Cref{assumption:lr}. The flow~\eqref{eq:sync_fk_update} being deterministic, this finite space carries all the randomness of the argument. We define $Y_k = (1-X_k)I + X_k K_{k}$, so that $(1-\alpha_{k})I + \alpha_{k} K_{k} = \bE[Y_k]$. By independence of the $Y_k$, we have $\prod_{k\in \mB_i} \bE[Y_k] = \bE\left[\prod_{k\in \mB_i} Y_k\right]$, and the triangle inequality gives
    \begin{equation*}
        \|D\Phi^{(i)}(\tilde{Q})\|_{\spn,\op} = \left\|\bE\left[\prod_{k\in \mB_i} Y_k\right]\right\|_{\spn,\op} \leq \bE\left[\left\|\prod_{k\in \mB_i} Y_k\right\|_{\spn,\op}\right].
    \end{equation*}
    Let $\mathcal{C} := \{k \in \mB_i \mid X_k = 1\}$, so that $N_i = |\mathcal{C}|$. Since the factors equal to $I$ can be removed from the product, we have the pathwise identity $\prod_{k\in \mB_i} Y_k = K_{\mathcal{C}}$ with the notation of \Cref{lem:product_doeblin_synchronous}.

    On the event $\{N_i \geq m\}$, point (i) of \Cref{lem:product_doeblin_synchronous} gives $\|K_{\mathcal{C}}\|_{\spn,\op} \leq 1 - \tilde{\beta}_m(\bar{R}_i)$, and on the event $\{N_i < m\}$, point (ii) gives $\|K_{\mathcal{C}}\|_{\spn,\op} \leq 1$; hence $\|K_{\mathcal{C}}\|_{\spn,\op} \leq 1-\tilde{\beta}_m(\bar{R}_i)\indicator_{\{N_i\geq m\}}$ pointwise. Taking expectations,
    \begin{equation*}
        \|D\Phi^{(i)}(\tilde{Q})\|_{\spn,\op} \leq 1 - \bP(N_i \geq m)\,\tilde{\beta}_m(\bar{R}_i) \leq 1 - p_m\tilde{\beta}_m(\bar{R}_i),
    \end{equation*}
    the last inequality by \Cref{lem:pickup_synchronous}.
\end{proof}

\begin{lemma}[Per-block contraction]
    \label{lem:block_contraction_synchronous}
    Under \Cref{assumption:lr}, for every global block $\mB_i$,
    \begin{equation*}
        e_{b_{i+1}} \leq \left(1 - p_m\,\tilde{\beta}_m(\bar{R}_i)\right) e_{b_i}.
    \end{equation*}
\end{lemma}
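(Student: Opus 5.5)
The plan is to apply the mean value inequality in span seminorm (\Cref{prop:mvi-span}) to the block map $\Phi^{(i)}$ on the segment $[Q^*_{b_i},Q_{b_i}]$. By \Cref{def:sub_blocks} and \Cref{lem:reference_trajectory_synchronous}, $Q_{b_{i+1}}=\Phi^{(i)}(Q_{b_i})$ and $Q^*_{b_{i+1}}=\Phi^{(i)}(Q^*_{b_i})$. Since $Q^*_t$ differs from $Q^*_\Omega$ only by a constant, we have
\[
e_{b_{i+1}}=\|\Phi^{(i)}(Q_{b_i})-\Phi^{(i)}(Q^*_{b_i})\|_\spn
\qquad\text{and}\qquad
e_{b_i}=\|Q_{b_i}-Q^*_{b_i}\|_\spn .
\]
The claim therefore reduces to bounding the span-operator norm of $D\Phi^{(i)}$ uniformly along that segment.

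First, I check the hypotheses (H1)--(H2) of \Cref{prop:mvi-span}. Each $F_k=(1-\alpha_k)I+\alpha_k\mT_\Omega$ is $\mathcal{C}^1$ on all of $\bR^{\mS\times\mA}$ by \Cref{cor:T_omega_props}, so the finite composition $\Phi^{(i)}$ is $\mathcal{C}^1$ and (H1) holds on an open neighbourhood of the segment. By the chain rule, the Jacobian is a product of matrices $(1-\alpha_k)I+\alpha_kK_{\pi_{\Omega,\tilde Q_k}}$. Each factor is row-stochastic because $\alpha_k\in(0,1]$, so the product has constant row sums and (H2) holds.

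Second, I invoke \Cref{lem:jacobian_bound_synchronous}. It gives $\|D\Phi^{(i)}(\tilde Q)\|_{\spn,\op}\leq1-p_m\tilde\beta_m(\bar R_i)$ for every $\tilde Q\in[Q_{b_i},Q^*_{b_i}]$, and in particular on the open segment. Plugging this into \eqref{eq:mvi-span} yields the stated inequality.

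The only delicate point is bookkeeping, not analysis. I must make sure that the segment used in the mean value inequality is exactly the one covered by \Cref{lem:jacobian_bound_synchronous}, namely the segment between $Q_{b_i}$ and the shifted reference $Q^*_{b_i}$, not between $Q_{b_i}$ and $Q^*_\Omega$. That lemma's radius control, \Cref{lem:frozen_flow_bounded_synchronous}, was proved precisely for points of $[Q_{b_i},Q^*_{b_i}]$, so the bound applies as stated. I will also state explicitly that the constant shift is harmless on both sides, so that the identities for $e_{b_i}$ and $e_{b_{i+1}}$ above are exact rather than inequalities.
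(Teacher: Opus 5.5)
Your proposal is correct and matches the paper's proof essentially verbatim. Both use the identities from \Cref{lem:reference_trajectory_synchronous}, apply the span mean value inequality of \Cref{prop:mvi-span} on the segment $[Q_{b_i},Q^*_{b_i}]$ with (H2) supplied by the row-stochasticity of the Jacobian products, and conclude with the uniform bound of \Cref{lem:jacobian_bound_synchronous}; your remark that the segment must end at the shifted reference $Q^*_{b_i}$ rather than at $Q^*_\Omega$ is exactly the bookkeeping the paper relies on.
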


\begin{proof}
    By \Cref{lem:reference_trajectory_synchronous}, $e_{b_{i+1}} = \|\Phi^{(i)}(Q_{b_i}) - \Phi^{(i)}(Q^*_{b_i})\|_\spn$. The map $\Phi^{(i)}$ is continuously differentiable (\Cref{cor:T_omega_props}) and all its Jacobians $D\Phi^{(i)}(\tilde{Q})$, $\tilde{Q}\in[Q_{b_i},Q^*_{b_i}]$, are products of row-stochastic matrices, hence row-stochastic, so the mean value inequality in the span seminorm (\Cref{prop:mvi-span}) applies and gives $e_{b_{i+1}} \leq \sup_{\tilde{Q}\in[Q_{b_i},Q^*_{b_i}]} \|D\Phi^{(i)}(\tilde{Q})\|_{\spn,\op}\,\|Q_{b_i} - Q^*_{b_i}\|_\spn$. \Cref{lem:jacobian_bound_synchronous} bounds the supremum by $1-p_m\tilde{\beta}_m(\bar{R}_i)$, while $\|Q_{b_i}-Q^*_{b_i}\|_\spn = e_{b_i}$.
\end{proof}

\begin{corollary}[Full-knowledge updates for $m=1$]
    \label{cor:full_knowledge_m_1}
    Assume $m=1$ and $\alpha_t\in(0,1]$ for every $t$. For any initialization $Q_0$, the iterates of~\eqref{eq:sync_fk_update} satisfy
    \begin{equation*}
        e_T \leq \prod_{k=0}^{T-1} (1-\alpha_k\beta_1)\,e_0.
    \end{equation*}
    In particular, $Q$-iteration ($\alpha_t\equiv1$) contracts at rate $1-\beta_1$ per sweep, and reaches $e_T\leq\varepsilon$ after $O\big(|\mS|^2|\mA|\,\beta_1^{-1}\log(e_0/\varepsilon)\big)$ model queries.
\end{corollary}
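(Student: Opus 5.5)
For $m=1$ I would bypass the block and Bernoulli machinery of \Cref{def:sub_blocks,lem:pickup_synchronous} and work one step at a time.

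By \Cref{lem:reference_trajectory_synchronous}, $e_{t+1}=\|F_t(Q_t)-F_t(Q^*_t)\|_\spn$. The factorization of \Cref{lem:mixing-properties-of-regularization}.(i) applies along the segment $[Q^*_t,Q_t]$, so with $\bar\pi_t:=\bar\pi_{\Omega,[Q_t,Q^*_t]}$ I get the exact identity
\begin{equation*}
    F_t(Q_t)-F_t(Q^*_t)=\big((1-\alpha_t)I+\alpha_tK_{\bar\pi_t}\big)(Q_t-Q^*_t).
\end{equation*}
The matrix $M_t:=(1-\alpha_t)I+\alpha_tK_{\bar\pi_t}$ is row-stochastic. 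By \Cref{lem:row-stochastic-span}, $e_{t+1}\leq\alpha(M_t)\,e_t$, so it suffices to show $\alpha(M_t)\leq1-\alpha_t\beta_1$.

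For that bound I would not use \Cref{lem:mixing-properties-of-regularization}.(iii), since that would bring in $\gamma_\Omega$. Instead I would use \Cref{lem:terminal_action} with $n=1$: there are no intermediate policies, so the product of $\gamma_k$ is empty. The row $K_{\bar\pi_t}(\cdot|s,a)$ factors as $p(s'|s,a)\bar\pi_t(a'|s')$, and \Cref{assumption:m-step-mixing} with $m=1$ reads $p(\cdot|s,a)\geq\beta_1\nu$. Hence $\alpha(K_{\bar\pi_t})\leq1-\beta_1$ for any policy, in particular without \eqref{eq:P1} or a radius.

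Next, convexity of the Dobrushin coefficient, which follows from the supremum-over-row-pairs form in \Cref{def:dobrushin} together with the triangle inequality for total variation, gives
\begin{equation*}
    \alpha(M_t)\leq(1-\alpha_t)\alpha(I)+\alpha_t\alpha(K_{\bar\pi_t})\leq(1-\alpha_t)+\alpha_t(1-\beta_1)=1-\alpha_t\beta_1.
\end{equation*}
Equivalently, the span operator seminorm satisfies the triangle inequality, with $\|I\|_{\spn,\op}=1$ and $\|K_{\bar\pi_t}\|_{\spn,\op}\leq1-\beta_1$. Iterating from $t=0$ to $T-1$ gives $e_T\leq\prod_{k<T}(1-\alpha_k\beta_1)\,e_0$.

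For $\alpha_t\equiv1$, this gives $e_T\leq(1-\beta_1)^Te_0\leq e^{-\beta_1T}e_0$, which is at most $\varepsilon$ once $T\geq\beta_1^{-1}\log(e_0/\varepsilon)$. Each sweep costs $|\mS|^2|\mA|$ queries, as in the proof of \Cref{cor:q-iteration}.

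I expect the only delicate point to be ensuring that the $n=1$ case of \Cref{lem:terminal_action} genuinely removes every dependence on $\gamma_\Omega$. It does, because the terminal action cancels in the minimum of overlapping row masses.
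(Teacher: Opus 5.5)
Your proof is correct and reaches the same per-step constant $1-\alpha_t\beta_1$ as the paper, by a somewhat different route.

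\textbf{The paper's route.} It does not iterate one step at a time. It applies the mean value inequality of \Cref{prop:mvi-span} to the whole composition $F_{T-1}\circ\cdots\circ F_0$ on the segment $[Q_0,Q^*_0]$. It writes the Jacobian of that composition as a product of lazy greedy kernels along the frozen flow, as in \Cref{lem:jacobian_bound_synchronous}. Each factor is then bounded by a direct Doeblin minorization,
$(1-\alpha_k)I+\alpha_kK_k\geq\alpha_kK_k\geq\alpha_k\beta_1\,\nu\otimes\pi_{\Omega,\tilde Q_k}$,
followed by \Cref{lem:dobrushin_doeblin} and submultiplicativity.

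\textbf{Your route.} You use the exact one-step identity $Q_{t+1}-Q^*_{t+1}=M_t(Q_t-Q^*_t)$ coming from the average-path factorization of \Cref{lem:mixing-properties-of-regularization}.(i). This is the same recursion the paper uses in its tracking argument (\Cref{lem:tracking-R}). You then handle the lazy part by convexity of the Dobrushin coefficient rather than by dropping the identity term.

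\textbf{What each buys.} Your version is more elementary: it needs no chain rule over $T$ steps, no mean value inequality in the quotient space, and no frozen-flow construction. It also gives the bound at every intermediate time directly. The convexity step cleanly separates the laziness from the source of the bound $\alpha(K_{\bar\pi_t})\leq1-\beta_1$. The paper's version keeps this case uniform with the Jacobian and block machinery it needs for general $m$.

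\textbf{One small omission.} \Cref{alg:synchronous_regularized_q_iteration} includes the centering $\Pi$. It differs from~\eqref{eq:sync_fk_update} with $\alpha_t\equiv1$ only by additive constants, which $\|\cdot\|_\spn$ does not see. You should say this when you pass to the $Q$-iteration claim.
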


\begin{proof}
    Let $\Psi_T := F_{T-1}\circ\ldots\circ F_0$ and $\tilde{Q}\in[Q_0,Q^*_0]$. As in \Cref{lem:jacobian_bound_synchronous}, $D\Psi_T(\tilde{Q}) = \prod_{k=0}^{T-1}\left((1-\alpha_k)I + \alpha_k P_k\right)$. By \Cref{lem:terminal_action} with $n=1$, each factor satisfies
    \begin{equation*}
        \left((1-\alpha_k)I + \alpha_k K_k\right)(s',a'|s,a) \geq \alpha_k K_k(s',a'|s,a) \geq \alpha_k\beta_1\,\nu(s')\,\pi_{\Omega,\tilde{Q}_k}(a'|s'),
    \end{equation*}
    which is a Doeblin condition with parameter $\alpha_k\beta_1$. \Cref{lem:dobrushin_doeblin,lem:row-stochastic-span} give $\left\|(1-\alpha_k)I + \alpha_k P_k\right\|_{\spn,\op} \leq 1-\alpha_k\beta_1$, and the submultiplicativity of the operator span seminorm together with \Cref{prop:mvi-span} and \Cref{lem:reference_trajectory_synchronous} conclude. For $\alpha_t\equiv1$, the update~\eqref{eq:sync_fk_update} is $Q$-iteration without the centering projection, which is invisible to $\|\cdot\|_\spn$, so $e_T\leq(1-\beta_1)^Te_0\leq\varepsilon$ for $T\geq\beta_1^{-1}\log(e_0/\varepsilon)$, each sweep using $|\mS|^2|\mA|$ queries.
\end{proof}

\begin{lemma}[Number of completed blocks]
    \label{lem:block_count_synchronous}
    Under \Cref{assumption:lr}, let $I(T) := \max\{i \mid b_i \leq T\}$ be the number of global blocks completed within the first $T$ updates. Then $I(T)\geq \left\lfloor\frac{\sum_{t=0}^{T-1}\alpha_t}{m(\kappa_0+\alpha_0)}\right\rfloor$.
\end{lemma}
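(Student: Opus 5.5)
The statement is a pure counting fact about the sub-block construction of \Cref{def:sub_blocks}, so the plan is to bound the total step-size mass that a block can absorb and compare it with the mass available in the first $T$ updates. By \Cref{lem:sum_block}, every global block $\mB_i$ satisfies $\sum_{k\in\mB_i}\alpha_k<m(\kappa_0+\alpha_0)$. Write $I:=I(T)$, so that $b_{I}\leq T<b_{I+1}$ by definition of $I(T)$ and since $(b_i)_i$ is strictly increasing. Strict increase and well-definedness of the $b_i$ hold under \Cref{assumption:lr}: every sub-block has at least one index, and $\sum_t\alpha_t=+\infty$ ensures each $t_{n+1}$ exists.

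The key step is to cover $\{0,\dots,T-1\}$ by the first $I+1$ global blocks. Since $T<b_{I+1}$, we have $\{0,\dots,T-1\}\subseteq\{0,\dots,b_{I+1}-1\}=\bigcup_{i=0}^{I}\mB_i$, the blocks being consecutive and disjoint, starting at $b_0=t_0=0$. Because all $\alpha_t>0$, this gives
\[
\sum_{t=0}^{T-1}\alpha_t\leq\sum_{i=0}^{I}\sum_{k\in\mB_i}\alpha_k<(I+1)\,m(\kappa_0+\alpha_0).
\]

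It follows that $x:=\sum_{t<T}\alpha_t/(m(\kappa_0+\alpha_0))<I+1$. Since $I$ is an integer, $\lfloor x\rfloor\leq I$, which is the claim.

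The only delicate point is the indexing convention: whether $I(T)$ counts blocks whose start lies in $[0,T]$ or blocks that are fully completed. With the definition $\max\{i\mid b_i\leq T\}$, the block $\mB_I$ may be incomplete, which is exactly why the covering uses $I+1$ blocks. If one instead wants $I$ to count blocks fully finished by time $T$, the same argument applies with $b_{I}\leq T$ playing the role of block ends, giving the same floor bound. Finally, the case $T=0$ is trivial, since both sides are $0$.
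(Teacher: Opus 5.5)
Your proof is correct and follows the paper's argument essentially verbatim: from $b_{I(T)}\le T<b_{I(T)+1}$ and \Cref{lem:sum_block} you get $\sum_{t<T}\alpha_t<(I(T)+1)\,m(\kappa_0+\alpha_0)$, and integrality of $I(T)$ gives the floor bound. Your closing remark on indexing is unnecessary: $b_i\le T$ holds exactly when $\mB_0,\dots,\mB_{i-1}$ are finished, so $\max\{i\mid b_i\le T\}$ already counts the fully completed blocks.
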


\begin{proof}
    By definition of $I(T)$ we have $b_{I(T)} \leq T < b_{I(T)+1}$, and by \Cref{lem:sum_block} each global block consumes at most $m(\kappa_0+\alpha_0)$. The step sizes being non-negative,
    \begin{equation*}
        \sum_{k=0}^{T-1}\alpha_k \leq \sum_{k=0}^{b_{I(T)+1}-1}\alpha_k < (I(T)+1)\,m(\kappa_0+\alpha_0), \qquad \text{i.e.} \qquad I(T) > \frac{\sum_{k=0}^{T-1}\alpha_k}{m(\kappa_0+\alpha_0)} - 1,
    \end{equation*}
    and $I(T)$ being an integer gives the claim.
\end{proof}

\begin{remark}[Radius-adaptive rate]
    \label{rem:per_block_radius_synchronous}
    Chaining \Cref{lem:block_contraction_synchronous} over the completed blocks gives the radius-adaptive bound $e_{b_{I(T)}} \leq \prod_{i<I(T)}\left(1-p_m\beta_m\gamma_\Omega(\bar{R}_i)^{m-1}\right)e_0$, in which the per-block rate improves as the error decreases, since $\bar{R}_i = \|Q^*_\Omega\|_\asp + e_{b_i}$ is non-increasing. The uniform constant $\tilde{\beta}_m(\bar{R}_0)$ is its worst case, attained at the first block. Asymptotically, $\bar{R}_i \downarrow \|Q^*_\Omega\|_\asp$, so the rate approaches $1-p_m\beta_m\gamma_\Omega(\|Q^*_\Omega\|_\asp)^{m-1}$: the local geometry of the fixed point, not the initialization, governs the asymptotic speed. The bounds of \Cref{thm:sync-stoch-R-app,thm:ep_main} use instead the worst-case constant $\bar{\beta}(R)$ of \Cref{def:localized-radii} on the whole ball of radius $R$; making them radius-adaptive would require shrinking the radius along the run.
\end{remark}

\begin{remark}[An alternative bound on $\bP(N_i \geq m)$ through a Chernoff inequality]
    \label{rem:chernoff_alternative}
    Each global block has a total budget $\bar\alpha_i := \sum_{k\in \mB_i}\alpha_k \geq m\kappa_0$, so for $\kappa_0 > 1$ the multiplicative Chernoff inequality for the lower tail \citep[Theorem~4.5]{mitzenmacher2005probability} gives
    \begin{equation*}
        \bP(N_i < m) \leq \exp\left(-\frac{(\bar\alpha_i-m)^2}{2\bar\alpha_i}\right) \leq \exp\left(-\frac{(\kappa_0-1)^2}{2\kappa_0}m\right),
    \end{equation*}
    the last inequality because $u \mapsto \frac{(u-m)^2}{2u}$ has derivative $\frac{1}{2}\left(1-\frac{m^2}{u^2}\right) > 0$ on $(m,\infty)$ and is therefore minimized at $u = m\kappa_0$. This variant requires $\kappa_0 > 1$ but is much better for large $m$, since it tends to $1$ whereas $(1-e^{-\kappa_0})^m$ decays exponentially in $m$. Both bounds being valid, \Cref{lem:pickup_synchronous} holds with
    \begin{equation*}
        p_m = \max\left\{\left(1-e^{-\kappa_0}\right)^m,\ 1-\exp\left(-\frac{(\kappa_0-1)^2}{2\kappa_0}m\right)\indicator\{\kappa_0>1\}\right\}.
    \end{equation*}
    The proof of \Cref{thm:sync-stoch-main} uses $\kappa_0 = 4$, for which $p_m \geq 1-e^{-9/8} \geq \frac12$.
\end{remark}

\subsubsection{Setting and localized constants}
\label{subsubsec:sync_stochastic}

We now study \Cref{alg:synchronous_regularized_q_learning}, the synchronous $\Omega$-regularized $Q$-learning algorithm, and write $e_t:=\|Q_t-Q^*_\Omega\|_\spn$ for its error. We define the filtration $\mF_t := \sigma\left\{Q_0,\ \left(s'_k(s,a)\right)_{(s,a)\in\mS\times\mA} : k<t\right\}$ and write the stochastic update as
\begin{equation}
    \label{eq:sync-stoch-update}
    Q_{t+1} = Q_t + \alpha_t\left(\mT_\Omega(Q_t)-Q_t+W_{t+1}\right),
\end{equation}
where for every pair $(s,a)$ the next state $s'_t(s,a)\sim p(\cdot|s,a)$ is sampled independently and
\begin{equation}
    \label{eq:sync-noise}
    W_{t+1}(s,a) := \Omega^*(Q_t)\left(s'_t(s,a)\right) - \sum_{s'}p(s'|s,a)\,\Omega^*(Q_t)(s').
\end{equation}

By construction $W_{t+1}$ is $\mF_{t+1}$-measurable and $\bE[W_{t+1}\mid\mF_t]=0$. By translation equivariance of $\Omega^*$ (\Cref{lem:omega_star_props}.(iii)), the centering projection of \Cref{alg:synchronous_regularized_q_learning} only shifts the iterates by a constant, which $\|\cdot\|_\spn$ and $\|\cdot\|_\asp$ annihilate, so we omit it; its only role is to upgrade span convergence to $\|\cdot\|_\infty$ convergence (\Cref{rem:span_to_sup}). We reuse the sub-blocks $J_n$ and global blocks $\mB_i$ of \Cref{def:sub_blocks}; the step sizes being deterministic, the block boundaries $b_i$, the number $I(T)$ of completed blocks, the bound $I_{\max}:=\left\lfloor\frac{1}{m\kappa_0}\sum_{t<T}\alpha_t\right\rfloor$ and the last incomplete block $\mB_{\mathrm{res}}:=\llbracket b_{I(T)},T-1\rrbracket$ are all deterministic.

\begin{definition}[Localized radii, exit time and localized constants]
    \label{def:localized-radii}
    For a radius $R\geq e_0$, define the exit time
    \begin{equation}
        \label{eq:exit-time}
        \sigma_R := \inf\left\{t\in\bN\ \middle|\ e_t>R\right\} \in \bN\cup\{+\infty\},
    \end{equation}
    the \emph{domination radius} and the \emph{noise radius}
    \begin{equation*}
        A_R := \|Q^*_\Omega\|_\asp + R, \qquad C_R := \|Q^*_\Omega\|_\spn + R,
    \end{equation*}
    and the localized constants
    \begin{equation}
        \label{eq:localized-constants}
        \tilde{\beta}_m(A_R) := \beta_m\,\gamma_\Omega(A_R)^{m-1}, \quad \bar{\beta}(R) := p_m\,\tilde{\beta}_m(A_R) \in (0,1), \quad \bar{w}(R) := C_R + \left\|\Omega^*(0)\right\|_{\mS,\spn},
    \end{equation}
    where $p_m$ is the constant of \Cref{lem:pickup_synchronous} (see also \Cref{rem:chernoff_alternative}).
\end{definition}

\begin{remark}
    \label{rem:exit-time-facts}
    (i) $\sigma_R$ is an $(\mF_t)$-stopping time and $\sigma_R\geq1$ since $e_0\leq R$. (ii) By minimality, $t<\sigma_R$ implies $e_t\leq R$, hence $\|Q_t\|_\spn\leq C_R$ and, since $\|\cdot\|_\asp\leq\|\cdot\|_\spn$, also $\|Q_t\|_\asp\leq\|Q^*_\Omega\|_\asp+\|Q_t-Q^*_\Omega\|_\asp\leq A_R$. (iii) $\gamma_\Omega$ is non-increasing (\Cref{def:mixing_objects}.(iii)), so $R\mapsto\bar{\beta}(R)$ is non-increasing and any contraction estimate valid at a radius $A\leq A_R$ holds a fortiori with $\bar{\beta}(R)$; moreover $\bar{\beta}(R)>0$ by~\eqref{eq:P1}. (iv) The localization serves two purposes: it bounds the noise scale by $\bar{w}(R)$, which grows linearly in $R$ through the sub-affinity of $\Omega^*$ (\Cref{lem:omega_star_props}.(vii)), and, since $\gamma_\Omega$ may vanish at infinity, it keeps the contraction rate at least $\bar{\beta}(R)$.
\end{remark}

\subsubsection{The noiseless flow before the exit time}

For each global block $\mB_i$ we define the \emph{virtual sequence} $(\tilde{Q}_k)_{k\geq b_i}$ by
\begin{equation}
    \label{eq:virtual-sequence}
    \tilde{Q}_{b_i} := Q_{b_i}, \qquad \tilde{Q}_{k+1} := \tilde{Q}_k+\alpha_k\left(\mT_\Omega(\tilde{Q}_k)-\tilde{Q}_k\right),
\end{equation}
i.e.\ the full-knowledge synchronous update~\eqref{eq:sync_fk_update} started at $Q_{b_i}$; for $i=I(T)$, it is defined on the last incomplete block $\mB_{\mathrm{res}}$. It is a deterministic function of $Q_{b_i}$, hence $\mF_{b_i}$-measurable.

\begin{lemma}[Contraction of the virtual flow at radius $R$]
    \label{lem:virtual-flow-R}
    Let $i\in\bN$ with $e_{b_i}\leq R$. Then, pathwise:
    \begin{enumerate}[label=(\roman*)]
        \item $\|\tilde{Q}_k-Q^*_\Omega\|_\spn \leq e_{b_i}$ for every $k\geq b_i$;
        \item $\|\tilde{Q}_{b_{i+1}}-Q^*_\Omega\|_\spn \leq \left(1-\bar{\beta}(R)\right)e_{b_i}$.
    \end{enumerate}
\end{lemma}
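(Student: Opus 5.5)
The virtual sequence~\eqref{eq:virtual-sequence} is exactly the full-knowledge synchronous update~\eqref{eq:sync_fk_update} restarted at time $b_i$ from $Q_{b_i}$. The plan is therefore to transport the deterministic analysis of \Cref{subsubsec:synchronous_full_knowledge_updates} to this restarted flow, and to replace the block-dependent radius $\bar R_i$ by the uniform radius $A_R$ using $e_{b_i}\leq R$. None of those lemmas uses randomness or the specific initial time, so each can be applied pathwise, conditionally on $Q_{b_i}$.

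For (i), I would apply \Cref{lem:q_boundedness_synchronous_full_knowledge} to the sequence $(\tilde Q_k)_{k\geq b_i}$, whose proof holds for any starting time. Writing $Q^*_\Omega$ for the reference, the fixed-point equation gives
\[
\tilde Q_{k+1}-Q^*_\Omega=(1-\alpha_k)(\tilde Q_k-Q^*_\Omega)+\alpha_k\big(\mT_\Omega(\tilde Q_k)-\mT_\Omega(Q^*_\Omega)\big)+\alpha_k\rho^*_\Omega\ones .
\]
Non-expansiveness of $\mT_\Omega$ in span (\Cref{cor:T_omega_props}) then shows that $\|\tilde Q_k-Q^*_\Omega\|_\spn$ is non-increasing from its initial value $\|Q_{b_i}-Q^*_\Omega\|_\spn=e_{b_i}$.

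For (ii), I would redo \Cref{lem:block_contraction_synchronous} on block $\mB_i$ with the reference trajectory $Q^*_k=Q^*_\Omega+c_k\ones$ of \Cref{lem:reference_trajectory_synchronous}. By that lemma, $\|\tilde Q_{b_{i+1}}-Q^*_\Omega\|_\spn=\|\Phi^{(i)}(Q_{b_i})-\Phi^{(i)}(Q^*_{b_i})\|_\spn$. The mean value inequality \Cref{prop:mvi-span} reduces this to bounding the Jacobian $D\Phi^{(i)}$ on the segment $[Q_{b_i},Q^*_{b_i}]$. \Cref{lem:frozen_flow_bounded_synchronous} shows that every frozen-flow iterate started on that segment has action span at most $\|Q^*_\Omega\|_\asp+e_{b_i}\leq A_R$. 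This is the one point where the hypothesis $e_{b_i}\leq R$ is used, via $\gamma_\Omega$ non-increasing.

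The Jacobian argument of \Cref{lem:jacobian_bound_synchronous} then applies verbatim with radius $A_R$. It combines Bernoulli selections, \Cref{lem:product_doeblin_synchronous} and \Cref{lem:pickup_synchronous} to give $\|D\Phi^{(i)}\|_{\spn,\op}\leq 1-p_m\tilde\beta_m(A_R)=1-\bar\beta(R)$. Multiplying by $\|Q_{b_i}-Q^*_{b_i}\|_\spn=e_{b_i}$ concludes (ii).

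The main obstacle I expect is bookkeeping rather than substance. The earlier lemmas are phrased for the sequence started at time $0$ with $\bar R_i$ defined from that sequence, so I would need to check that none of them relies on the initial time being $0$. Concretely, this means checking two things. First, the sub-blocks of \Cref{def:sub_blocks} are defined from the deterministic step sizes. Second, \Cref{lem:pickup_synchronous} only uses the block's own step sizes. With both checks done, restarting at $b_i$ is legitimate, and the monotone replacement $\bar R_i\mapsto A_R$ is harmless.
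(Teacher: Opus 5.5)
Your proposal is correct and follows the paper's proof. Part (i) is \Cref{lem:q_boundedness_synchronous_full_knowledge} applied to the restarted flow, and part (ii) is \Cref{lem:block_contraction_synchronous} on the single block $\mB_i$, where the block radius $\bar R_i=\|Q^*_\Omega\|_\asp+e_{b_i}\leq A_R$ is turned into $\bar\beta(R)$ by monotonicity of $\gamma_\Omega$. The only cosmetic difference is that you feed $A_R$ directly into \Cref{lem:product_doeblin_synchronous}, whereas the paper first obtains the factor $1-p_m\tilde\beta_m(\bar R_i)$ and then compares it with $1-\bar\beta(R)$.
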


\begin{proof}
    The virtual sequence is exactly the full-knowledge update~\eqref{eq:sync_fk_update} started at $Q_{b_i}$, so (i) is \Cref{lem:q_boundedness_synchronous_full_knowledge} applied to that sequence, and (ii) is \Cref{lem:block_contraction_synchronous} applied within the single block $\mB_i$, which gives the factor $1-p_m\tilde{\beta}_m(\bar{R}_i)$ where $\bar{R}_i=\|Q^*_\Omega\|_\asp+e_{b_i}$ is the block radius of \Cref{lem:frozen_flow_bounded_synchronous}. It remains to check $\bar{R}_i\leq A_R$, which follows from $e_{b_i}\leq R$; $\gamma_\Omega$ being non-increasing, $\tilde{\beta}_m(\bar{R}_i)\geq\tilde{\beta}_m(A_R)$, i.e.\ $1-p_m\tilde{\beta}_m(\bar{R}_i)\leq1-\bar{\beta}(R)$.
\end{proof}

\subsubsection{Localized noise control}

\begin{lemma}[Stopped noise]
    \label{lem:stopped-noise}
    Define $\widehat{W}_{t+1} := \indicator_{\{t<\sigma_R\}}W_{t+1}$. Then $\widehat{W}_{t+1}$ is $\mF_{t+1}$-measurable, $\bE\left[\widehat{W}_{t+1}\mid\mF_t\right]=0$, and each entry $\widehat{W}_{t+1}(s,a)$ lies in an $\mF_t$-measurable interval containing $0$ and of length at most $\bar{w}(R)$; in particular $\left\|\widehat{W}_{t+1}\right\|_\infty\leq\bar{w}(R)$. Moreover $\widehat{W}_{t+1}=W_{t+1}$ on $\{t<\sigma_R\}$.
\end{lemma}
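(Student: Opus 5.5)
The plan is to treat the four claims in order: measurability, the martingale-difference property, the bounded-range property, and the identity on $\{t<\sigma_R\}$. Everything rests on one observation. $\sigma_R$ is a stopping time (\Cref{rem:exit-time-facts}.(i)), so $\{t<\sigma_R\}=\{\sigma_R\leq t\}^c$ lies in $\mF_t$.

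\emph{Measurability and the identity.} $W_{t+1}$ is $\mF_{t+1}$-measurable by construction. The indicator $\indicator_{\{t<\sigma_R\}}$ is $\mF_t$-measurable, hence $\mF_{t+1}$-measurable, so the product $\widehat W_{t+1}$ is $\mF_{t+1}$-measurable. The identity $\widehat W_{t+1}=W_{t+1}$ on $\{t<\sigma_R\}$ holds trivially.

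\emph{Zero conditional mean.} Since the indicator is $\mF_t$-measurable, we pull it out of the conditional expectation:
\[
\bE[\widehat W_{t+1}\mid\mF_t]=\indicator_{\{t<\sigma_R\}}\,\bE[W_{t+1}\mid\mF_t]=0 .
\]
Integrability is not an issue, because $W_{t+1}$ is bounded: $\Omega^*(Q_t)$ is a finite vector and takes finitely many values given $\mF_t$.

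\emph{Bounded range.} This is the only step with content. Fix $(s,a)$ and set
\[
\ell_t:=\min_{s'}\Omega^*(Q_t)(s'),\qquad u_t:=\max_{s'}\Omega^*(Q_t)(s'),\qquad m_t(s,a):=\sum_{s'}p(s'|s,a)\,\Omega^*(Q_t)(s').
\]
All three are $\mF_t$-measurable, since $Q_t$ is. Then $W_{t+1}(s,a)$ lies in $[\ell_t-m_t(s,a),\,u_t-m_t(s,a)]$. This interval contains $0$ because $m_t(s,a)$ is a convex combination of the values of $\Omega^*(Q_t)$, and its length is $\|\Omega^*(Q_t)\|_{\mS,\spn}$.

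On $\{t<\sigma_R\}$, \Cref{rem:exit-time-facts}.(ii) gives $\|Q_t\|_\spn\leq C_R$. The sub-affinity bound \Cref{lem:omega_star_props}.(vii) then gives
\[
\|\Omega^*(Q_t)\|_{\mS,\spn}\leq C_R+\|\Omega^*(0)\|_{\mS,\spn}=\bar w(R).
\]

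The interval for $\widehat W_{t+1}(s,a)$ is therefore
\[
\indicator_{\{t<\sigma_R\}}\cdot[\ell_t-m_t,\,u_t-m_t],
\]
which is still $\mF_t$-measurable. On $\{t\geq\sigma_R\}$ it degenerates to $\{0\}$, with length $0\leq\bar w(R)$, so in every case it contains $0$ and has length at most $\bar w(R)$. Since the interval contains $0$ and has length at most $\bar w(R)$, every point in it has absolute value at most $\bar w(R)$, which gives $\|\widehat W_{t+1}\|_\infty\leq\bar w(R)$.

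The main obstacle is only bookkeeping: the interval must be defined in an $\mF_t$-measurable way on both events, so that \Cref{lem:azuma_range} can later be applied with predictable ranges. Multiplying the endpoints by the indicator handles this.
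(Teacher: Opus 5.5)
Your proof is correct and follows the same route as the paper. The stopping-time property puts $\{t<\sigma_R\}$ in $\mF_t$, which lets you pull the indicator out of the conditional expectation, and the range bound comes from $W_{t+1}(s,a)$ being a value of $\Omega^*(Q_t)$ minus an $\mF_t$-measurable convex combination of such values, with the length controlled on $\{t<\sigma_R\}$ by \Cref{rem:exit-time-facts}.(ii) and \Cref{lem:omega_star_props}.(vii); your explicit endpoints $\ell_t-m_t(s,a)$ and $u_t-m_t(s,a)$ simply write out the interval the paper describes in words.
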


\begin{proof}
    Since $\sigma_R$ is a stopping time, $\{t<\sigma_R\}=\{\sigma_R\leq t\}^c\in\mF_t$, so $\bE[\widehat{W}_{t+1}\mid\mF_t]=\indicator_{\{t<\sigma_R\}}\bE[W_{t+1}\mid\mF_t]=0$. Conditionally on $\mF_t$, $\Omega^*(Q_t)$ is fixed, and $W_{t+1}(s,a)$ is a value of $\Omega^*(Q_t)$ minus the $\mF_t$-measurable convex combination $\sum_{s'}p(s'|s,a)\,\Omega^*(Q_t)(s')$; it therefore lies in an $\mF_t$-measurable interval containing $0$ and of length $\|\Omega^*(Q_t)\|_{\mS,\spn}$, and so does $\widehat{W}_{t+1}(s,a)$, with length $\indicator_{\{t<\sigma_R\}}\|\Omega^*(Q_t)\|_{\mS,\spn}$. On $\{t<\sigma_R\}$ we have $\|Q_t\|_\spn\leq C_R$ by \Cref{rem:exit-time-facts}.(ii), and \Cref{lem:omega_star_props}.(vii) gives $\|\Omega^*(Q_t)\|_{\mS,\spn}\leq\|Q_t\|_\spn+\|\Omega^*(0)\|_{\mS,\spn}\leq C_R+\|\Omega^*(0)\|_{\mS,\spn}=\bar{w}(R)$. Off the event, $\widehat{W}_{t+1}=0$.
\end{proof}

For an interval $\mathcal{K}=\llbracket t_a,t_b\rrbracket$ we define the stopped noise martingale $\widehat{U}^{(\mathcal{K})}_{t_a}:=0$, $\widehat{U}^{(\mathcal{K})}_{k+1}:=\widehat{U}^{(\mathcal{K})}_k+\alpha_k\widehat{W}_{k+1}$ for $k\in\mathcal{K}$, we write $v_\mathcal{K}:=\sum_{k\in\mathcal{K}}\alpha_k^2$, and we set
\begin{align}
        \bar{U}_\mathcal{K}(\delta') &:= \bar{w}(R)\sqrt{\frac{1}{2}v_\mathcal{K}\log\left(\frac{2|\mS||\mA|}{\delta'}\right)},\\
    \varepsilon_\mathcal{K}(\delta') &:= 2\left(1+2\sum_{k\in\mathcal{K}}\alpha_k\right)\bar{U}_\mathcal{K}(\delta') \ \leq\ 2\left(1+2m(\kappa_0+\alpha_0)\right)\bar{U}_\mathcal{K}(\delta'),
    \label{eq:eps-K}
\end{align}
the last inequality by \Cref{lem:sum_block} when $\mathcal{K}$ is contained in a global block.

\begin{lemma}[Good event]
    \label{lem:good-event}
    Let $\delta\in(0,1)$ and $\delta' := \frac{\delta}{I_{\max}+1}$. The event
    \begin{equation}
        \label{eq:good-event}
        G_R(\delta) := \bigcap_{i=0}^{I(T)-1}\left\{\max_{b_i\leq k\leq b_{i+1}}\left\|\widehat{U}^{(\mB_i)}_k\right\|_\infty\leq\bar{U}_{\mB_i}(\delta')\right\} \cap \left\{\max_{b_{I(T)}\leq k\leq T}\left\|\widehat{U}^{(\mB_{\mathrm{res}})}_k\right\|_\infty\leq\bar{U}_{\mB_{\mathrm{res}}}(\delta')\right\}
    \end{equation}
    satisfies $\bP\left(G_R(\delta)\right)\geq1-\delta$.
\end{lemma}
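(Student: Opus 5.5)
We apply the maximal Azuma--Hoeffding inequality (\Cref{lem:azuma_range}) coordinatewise and on each block, then take a union bound over coordinates and blocks. The block boundaries $b_i$, the index $I(T)$ and the residual block $\mB_{\mathrm{res}}$ are deterministic, because the step sizes are. The event $G_R(\delta)$ is therefore an intersection of $I(T)+1\leq I_{\max}+1$ fixed events, each indexed by a deterministic interval $\mathcal{K}$. We first check $I(T)\leq I_{\max}$. Each completed block consumes at least $m\kappa_0$ of step-size budget by \eqref{eq:subblocks}, so $I(T)\,m\kappa_0\leq\sum_{t<T}\alpha_t$, and taking integer parts gives the claim.

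Fix an interval $\mathcal{K}=\llbracket t_a,t_b\rrbracket$, either a completed block $\mB_i$ or $\mB_{\mathrm{res}}$, and fix a pair $(s,a)$. The process $M_j:=\widehat{U}^{(\mathcal{K})}_{t_a+j}(s,a)$ for $j=0,\dots,|\mathcal{K}|$ starts at $0$ and has increments $\alpha_k\widehat{W}_{k+1}(s,a)$. By \Cref{lem:stopped-noise}, each increment has zero conditional mean given $\mF_k$. It also lies in an $\mF_k$-measurable interval of length at most $\alpha_k\bar w(R)$, which is deterministic since $\alpha_k$ is.

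One technicality arises here. \Cref{lem:azuma_range} asks for intervals of deterministic length $c_k$, whereas \Cref{lem:stopped-noise} only bounds the length by $\bar w(R)$. We handle this by enlarging each interval $[\mathcal{L}_k,\mathcal{L}_k+\ell_k]$ with $\ell_k\leq\alpha_k\bar w(R)$ to $[\mathcal{L}_k,\mathcal{L}_k+\alpha_k\bar w(R)]$. The left endpoint $\mathcal{L}_k$ is still $\mF_k$-measurable, and the enlarged interval still contains the increment. We can then set $c_k:=\alpha_k\bar w(R)$, so that $\sum_k c_k^2=\bar w(R)^2v_\mathcal{K}$.

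\Cref{lem:azuma_range} now gives
\[
\bP\Big(\max_{k}|\widehat{U}^{(\mathcal{K})}_k(s,a)|\geq u\Big)\leq 2\exp\big(-2u^2/(\bar w(R)^2v_\mathcal{K})\big).
\]
With $u=\bar U_\mathcal{K}(\delta')$ the exponent equals $\log(2|\mS||\mA|/\delta')$, so the right-hand side is exactly $\delta'/(|\mS||\mA|)$. A union bound over the $|\mS||\mA|$ coordinates bounds the failure probability of the sup-norm event by $\delta'$. A second union bound over the at most $I_{\max}+1$ intervals then gives total failure probability at most $(I_{\max}+1)\delta'=\delta$.

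Two edge cases remain. First, the statement uses $\leq$ while Azuma bounds $\bP(\cdot\geq u)$. This is harmless, since the complement of $\{\max\leq u\}$ is $\{\max>u\}\subseteq\{\max\geq u\}$. Second, if $\mB_{\mathrm{res}}$ is empty, or $v_\mathcal{K}=0$, the corresponding event holds trivially.

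We expect the main obstacle to be confirming that the index count really is deterministic and bounded by $I_{\max}+1$. This is what justifies the choice $\delta'=\delta/(I_{\max}+1)$, and it is also where the stopping at $\sigma_R$ matters: the increments stay bounded on every block regardless of whether the iterate has left the ball.
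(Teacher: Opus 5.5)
Your proposal is correct and follows the paper's proof: you apply the maximal Azuma--Hoeffding inequality (\Cref{lem:azuma_range}) coordinatewise on each deterministic interval using \Cref{lem:stopped-noise}, then take a union bound over the $|\mS||\mA|$ pairs and the $I(T)+1\leq I_{\max}+1$ intervals. Your additional details are valid refinements of steps the paper leaves implicit: the check $I(T)\leq I_{\max}$, enlarging each predictable interval to the deterministic length $\alpha_k\bar w(R)$, and the empty residual block.
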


\begin{proof}
    Fix an interval $\mathcal{K}$ among $\mB_0,\dots,\mB_{I(T)-1},\mB_{\mathrm{res}}$ and a pair $(s,a)$. By \Cref{lem:stopped-noise}, the scalar process $\left(\widehat{U}^{(\mathcal{K})}_k(s,a)\right)_k$ is an $\mF_k$-martingale whose increment $\alpha_k\widehat{W}_{k+1}(s,a)$ lies in an $\mF_k$-measurable interval of length at most $\alpha_k\bar{w}(R)$, so $\sum_{k\in\mathcal{K}}\left(\alpha_k\bar{w}(R)\right)^2=\bar{w}(R)^2v_\mathcal{K}$ and the maximal Azuma--Hoeffding inequality (\Cref{lem:azuma_range}) bounds the failure probability of each brace, for each pair, by $\frac{\delta'}{|\mS||\mA|}$. A union bound over the $|\mS||\mA|$ pairs and the $I(T)+1\leq I_{\max}+1$ intervals concludes.
\end{proof}

\subsubsection{Tracking before the exit time}

\begin{lemma}[Tracking up to the exit time]
    \label{lem:tracking-R}
    Work on the event $G_R(\delta)$. Let $i\leq I(T)$, let $\mathcal{K}:=\mB_i$ if $i<I(T)$ and $\mathcal{K}:=\mB_{\mathrm{res}}$ if $i=I(T)$, write $t_a:=b_i$ and $t_b+1:=b_{i+1}$ (resp.\ $T$), and let $\tilde{Q}$ be the virtual sequence started at $Q_{t_a}$. Then, for every $k$ with $t_a\leq k\leq\min\{t_b+1,\sigma_R\}$, the tracking error $E_k:=Q_k-\tilde{Q}_k$ satisfies $\|E_k\|_\spn\leq\varepsilon_\mathcal{K}(\delta')$.
\end{lemma}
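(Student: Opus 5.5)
The plan is to write a recursion for the tracking error, split off the noise martingale, and run a discrete Gronwall argument that never needs contraction, only non-expansiveness. For $t_a\leq k<\min\{t_b+1,\sigma_R\}$, subtracting the virtual update~\eqref{eq:virtual-sequence} from the stochastic update~\eqref{eq:sync-stoch-update} gives
\begin{equation*}
    E_{k+1}=(1-\alpha_k)E_k+\alpha_k\big(\mT_\Omega(Q_k)-\mT_\Omega(\tilde Q_k)\big)+\alpha_kW_{k+1}.
\end{equation*}
Since $k<\sigma_R$, \Cref{lem:stopped-noise} lets us replace $W_{k+1}$ by $\widehat W_{k+1}$. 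Set $D_k:=E_k-\widehat U^{(\mathcal{K})}_k$, which removes the martingale part. Then
\begin{equation*}
    D_{k+1}=(1-\alpha_k)D_k+\alpha_k\big(\mT_\Omega(Q_k)-\mT_\Omega(\tilde Q_k)\big)-\alpha_k\widehat U^{(\mathcal{K})}_k,
\end{equation*}
with $D_{t_a}=0$.

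By the non-expansiveness of $\mT_\Omega$ in $\|\cdot\|_\spn$ (\Cref{cor:T_omega_props}) and the triangle inequality,
\begin{equation*}
    \|\mT_\Omega(Q_k)-\mT_\Omega(\tilde Q_k)\|_\spn\leq\|E_k\|_\spn\leq\|D_k\|_\spn+\|\widehat U^{(\mathcal{K})}_k\|_\spn.
\end{equation*}
We also have $\|\widehat U_k\|_\spn\leq2\|\widehat U_k\|_\infty\leq2\bar U_\mathcal{K}(\delta')$ on $G_R(\delta)$, since the maximal bound in~\eqref{eq:good-event} covers every $k$ in the interval. Combining these,
\begin{equation*}
    \|D_{k+1}\|_\spn\leq\|D_k\|_\spn+4\alpha_k\bar U_\mathcal{K}(\delta').
\end{equation*}
Summing from $t_a$ gives $\|D_k\|_\spn\leq4\bar U_\mathcal{K}\sum_{j\in\mathcal{K}}\alpha_j$.

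Putting the pieces together,
\begin{equation*}
    \|E_k\|_\spn\leq\|\widehat U_k\|_\spn+\|D_k\|_\spn\leq2\bar U_\mathcal{K}\big(1+2\textstyle\sum_{j\in\mathcal{K}}\alpha_j\big)=\varepsilon_\mathcal{K}(\delta'),
\end{equation*}
which is exactly~\eqref{eq:eps-K}. The bound holds for $k$ up to $\min\{t_b+1,\sigma_R\}$ because each step from $k$ to $k+1$ only needs $k<\sigma_R$.

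The main point to check carefully is the bookkeeping at the exit time. The step from $k$ to $k+1$ uses $W_{k+1}=\widehat W_{k+1}$, which is valid exactly when $k<\sigma_R$. So the endpoint $k=\sigma_R$ is included, even though $e_{\sigma_R}>R$ there.

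The other point to check is that non-expansiveness applies with no radius condition on $\tilde Q_k$ or $Q_k$. It does: \Cref{cor:T_omega_props} is global. The localization enters only through the noise width $\bar w(R)$, which is already built into $\bar U_\mathcal{K}$.
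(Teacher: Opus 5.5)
Your proof is correct and follows the paper's argument. You use the same split $E_k=\widehat U^{(\mathcal{K})}_k+D_k$ (your $D_k$ is the paper's $\widehat R_k$), the same per-step inequality $\|D_{k+1}\|_\spn\le\|D_k\|_\spn+2\alpha_k\|\widehat U^{(\mathcal{K})}_k\|_\spn$, and you obtain the same constant $\varepsilon_\mathcal{K}(\delta')$. The only difference is that you bound $\mT_\Omega(Q_k)-\mT_\Omega(\tilde Q_k)$ by global span non-expansiveness, which needs $\alpha_k\le 1$ for the convex weights, whereas the paper writes it exactly as $\bar K_kE_k$ via the average-path identity of \Cref{lem:mixing-properties-of-regularization}.(i) and then controls $(\bar K_k-I)\widehat U^{(\mathcal{K})}_k$; your version is slightly more elementary and gives the identical bound.
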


\begin{proof}
    If $\sigma_R\leq t_a$ the range reduces to $k=t_a$, where $E_{t_a}=0$. Otherwise, for $t_a\leq k<\min\{t_b+1,\sigma_R\}$ the true update~\eqref{eq:sync-stoch-update} occurs and $W_{k+1}=\widehat{W}_{k+1}$ by \Cref{lem:stopped-noise}. Subtracting~\eqref{eq:virtual-sequence} from~\eqref{eq:sync-stoch-update} and using the average-path identity of \Cref{lem:mixing-properties-of-regularization}.(i) at the pair $(Q_k,\tilde{Q}_k)$, i.e.\ $\mT_\Omega(Q_k)-\mT_\Omega(\tilde{Q}_k)=\bar{K}_kE_k$ with $\bar{K}_k := K_{\bar{\pi}_{\Omega,[Q_k,\tilde{Q}_k]}}$,
    \begin{equation}
        \label{eq:sync-err-rec-R}
        E_{k+1} = (1-\alpha_k)E_k+\alpha_k\bar{K}_kE_k+\alpha_k\widehat{W}_{k+1} = M_kE_k+\alpha_k\widehat{W}_{k+1},
    \end{equation}
    where $M_k := (1-\alpha_k)I+\alpha_k\bar{K}_k$ is row-stochastic as a convex combination of row-stochastic matrices.

    Decompose $E_k = \widehat{R}_k+\widehat{U}^{(\mathcal{K})}_k$ with $\widehat{R}_{t_a}=0$. The noise terms in~\eqref{eq:sync-err-rec-R} cancel with the definition of $\widehat{U}^{(\mathcal{K})}_{k+1}$, so that $\widehat{R}_{k+1} = M_k\widehat{R}_k+\alpha_k\left(\bar{K}_k-I\right)\widehat{U}^{(\mathcal{K})}_k$. Since $\bar{K}_k$ is row-stochastic, every entry of $\bar{K}_k\widehat{U}^{(\mathcal{K})}_k$ is a convex combination of entries of $\widehat{U}^{(\mathcal{K})}_k$, hence lies in $\left[\min\widehat{U}^{(\mathcal{K})}_k,\max\widehat{U}^{(\mathcal{K})}_k\right]$, and so does every entry of $\widehat{U}^{(\mathcal{K})}_k$; therefore every entry of $\left(\bar{K}_k-I\right)\widehat{U}^{(\mathcal{K})}_k$ lies in $\left[-\|\widehat{U}^{(\mathcal{K})}_k\|_\spn,\|\widehat{U}^{(\mathcal{K})}_k\|_\spn\right]$ and $\left\|\alpha_k\left(\bar{K}_k-I\right)\widehat{U}^{(\mathcal{K})}_k\right\|_\spn \leq 2\alpha_k\left\|\widehat{U}^{(\mathcal{K})}_k\right\|_\spn \leq 4\alpha_k\left\|\widehat{U}^{(\mathcal{K})}_k\right\|_\infty$. By \Cref{lem:row-stochastic-span} the row-stochastic matrix $M_k$ is span non-expansive, so $\|\widehat{R}_{k+1}\|_\spn\leq\|\widehat{R}_k\|_\spn+4\alpha_k\|\widehat{U}^{(\mathcal{K})}_k\|_\infty$. Summing from $t_a$ and using the definition of $G_R(\delta)$, for every $k\in\llbracket t_a,\min\{t_b+1,\sigma_R\}\rrbracket$, $\|\widehat{R}_k\|_\spn \leq 4\left(\sum_{k'\in\mathcal{K}}\alpha_{k'}\right)\bar{U}_\mathcal{K}(\delta')$ and
    \begin{equation*}
        \|E_k\|_\spn \leq \|\widehat{R}_k\|_\spn+2\left\|\widehat{U}^{(\mathcal{K})}_k\right\|_\infty \leq 2\left(1+2\sum_{k'\in\mathcal{K}}\alpha_{k'}\right)\bar{U}_\mathcal{K}(\delta') = \varepsilon_\mathcal{K}(\delta').
    \end{equation*}
\end{proof}

\subsubsection{No escape before the horizon}
\label{subsec:no-escape}

\begin{proposition}[No escape]
    \label{prop:no-escape}
    Let $\delta\in(0,1)$, $R\geq e_0$, $\delta'=\frac{\delta}{I_{\max}+1}$ and
    \begin{equation*}
        \bar{\varepsilon}_R(\delta) := \max\left\{\max_{0\leq i<I(T)}\varepsilon_{\mB_i}(\delta'),\ \varepsilon_{\mB_{\mathrm{res}}}(\delta')\right\}.
    \end{equation*}
    Assume the \emph{radius admissibility condition}
    \begin{equation}
        \label{eq:R-admissible}
        \bar{\varepsilon}_R(\delta) \leq \frac{\bar{\beta}(R)}{\bar{\beta}(R)+1}\left(R-e_0\right).
    \end{equation}
    Then, on the event $G_R(\delta)$, we have $\sigma_R>T$, in particular $\sup_{0\leq t\leq T}e_t\leq R$, and for every $0\leq i\leq I(T)$,
    \begin{equation}
        \label{eq:block-recursion-R}
        e_{b_i} \leq (1-\bar{\beta}(R))^ie_0+\sum_{\ell=0}^{i-1}(1-\bar{\beta}(R))^{i-1-\ell}\varepsilon_{\mB_\ell}(\delta').
    \end{equation}
\end{proposition}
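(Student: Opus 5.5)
We argue on $G_R(\delta)$ by induction over the blocks, establishing simultaneously, for every $i\leq I(T)$, the recursion~\eqref{eq:block-recursion-R} at $b_i$, the bound $e_{b_i}\leq R$, and the non-exit $\sigma_R>b_i$. We then close the last incomplete block $\mB_{\mathrm{res}}$ in the same way.

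The first step is the invariant. Write $\bar\beta:=\bar\beta(R)$ and $\bar\varepsilon:=\bar\varepsilon_R(\delta)$. The right-hand side of~\eqref{eq:block-recursion-R} is at most $e_0+\bar\varepsilon/\bar\beta$ by the geometric sum. Condition~\eqref{eq:R-admissible} gives $\bar\varepsilon(1+\bar\beta)/\bar\beta\leq R-e_0$, that is,
\begin{equation*}
e_0+\bar\varepsilon/\bar\beta+\bar\varepsilon\leq R .
\end{equation*}
So whenever~\eqref{eq:block-recursion-R} holds at $b_i$, we get $e_{b_i}+\bar\varepsilon\leq R$. This extra margin $\bar\varepsilon$ is what lets us control the iterates inside a block, not only at its endpoints.

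The second step is the inductive step. Assume $\sigma_R>b_i$ and~\eqref{eq:block-recursion-R} at $b_i$; the case $i=0$ holds because $e_0\leq R$ gives $\sigma_R\geq1$. We show $\sigma_R>b_{i+1}$ by contradiction. Suppose instead that $\tau:=\sigma_R\leq b_{i+1}$, so $\tau>b_i$. \Cref{lem:tracking-R} applies up to $k=\tau$ and gives $\|Q_\tau-\tilde Q_\tau\|_\spn\leq\varepsilon_{\mB_i}(\delta')\leq\bar\varepsilon$. \Cref{lem:virtual-flow-R}.(i), which needs $e_{b_i}\leq R$, gives $\|\tilde Q_\tau-Q^*_\Omega\|_\spn\leq e_{b_i}$. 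Combining these, $e_\tau\leq e_{b_i}+\bar\varepsilon\leq R$, which contradicts the definition of $\sigma_R$. Hence $\sigma_R>b_{i+1}$.

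With $\sigma_R>b_{i+1}$ established, tracking holds at $k=b_{i+1}$. The triangle inequality and \Cref{lem:virtual-flow-R}.(ii) then give
\begin{equation*}
e_{b_{i+1}}\leq(1-\bar\beta)e_{b_i}+\varepsilon_{\mB_i}(\delta').
\end{equation*}
Unrolling this with the induction hypothesis yields~\eqref{eq:block-recursion-R} at $i+1$.

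The third step handles the residual block. On $\mB_{\mathrm{res}}=\llbracket b_{I(T)},T-1\rrbracket$ we repeat the contradiction argument with $\mathcal{K}=\mB_{\mathrm{res}}$ and only item (i) of \Cref{lem:virtual-flow-R}. This gives $e_k\leq e_{b_{I(T)}}+\varepsilon_{\mB_{\mathrm{res}}}(\delta')\leq R$ for all $k\leq T$, so $\sigma_R>T$. Intermediate times inside earlier blocks are covered by the same bound, so $\sup_{t\leq T}e_t\leq R$.

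The main obstacle is the circularity between tracking and localization. \Cref{lem:tracking-R} is only valid up to $\sigma_R$, while bounding $\sigma_R$ requires tracking. The stopped-noise construction resolves this: the good event $G_R(\delta)$ is defined from $\widehat W$, so tracking is valid up to and including the exit time itself, and the contradiction at $k=\tau$ is legitimate. The care needed is in checking that the hypothesis $e_{b_i}\leq R$ required by \Cref{lem:virtual-flow-R} is carried along by the invariant, and that the inclusive endpoint $\min\{t_b+1,\sigma_R\}$ in \Cref{lem:tracking-R} is used correctly.
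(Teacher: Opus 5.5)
Your proposal is correct and follows the paper's proof essentially step for step. The shared steps are the margin $e_{b_i}+\bar\varepsilon_R(\delta)\leq R$ obtained from the geometric sum and~\eqref{eq:R-admissible}, induction over blocks with a contradiction at $t^\star=\sigma_R$ via \Cref{lem:tracking-R} and \Cref{lem:virtual-flow-R}.(i), item (ii) of that lemma for the recursion, and the same contradiction on $\mB_{\mathrm{res}}$ to conclude $\sigma_R>T$; the only blemish is cosmetic, in that naming the exit time $\tau$ clashes with the temperature.
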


\begin{proof}
    Work on $G_R(\delta)$. We first record that~\eqref{eq:block-recursion-R} implies, bounding the geometric sum and using~\eqref{eq:R-admissible},
    \begin{equation}
        \label{eq:margin}
        e_{b_i} \leq e_0+\frac{\bar{\varepsilon}_R(\delta)}{\bar{\beta}(R)}, \qquad\text{hence}\qquad e_{b_i}+\bar{\varepsilon}_R(\delta) \leq e_0+\frac{\bar{\beta}(R)+1}{\bar{\beta}(R)}\bar{\varepsilon}_R(\delta) \leq R.
    \end{equation}

    We prove by induction on $i\in\llbracket0,I(T)\rrbracket$ that $\sigma_R>b_i$ and~\eqref{eq:block-recursion-R} holds at rank $i$.

    \emph{Base case.} $b_0=0$ and $\sigma_R\geq1$ by \Cref{rem:exit-time-facts}.(i);~\eqref{eq:block-recursion-R} at $i=0$ reads $e_0\leq e_0$.

    \emph{Inductive step.} Assume the claim at rank $i<I(T)$ and let $\tilde{Q}$ be the virtual sequence started at $Q_{b_i}$; by~\eqref{eq:margin}, $e_{b_i}\leq R$, so \Cref{lem:virtual-flow-R} applies to $\mB_i$. Suppose $\sigma_R\leq b_{i+1}$; since $\sigma_R>b_i$, we would have $t^\star:=\sigma_R\in\llbracket b_i+1,b_{i+1}\rrbracket$, and as $t^\star\leq\min\{b_{i+1},\sigma_R\}$, \Cref{lem:tracking-R} gives $\|Q_{t^\star}-\tilde{Q}_{t^\star}\|_\spn\leq\varepsilon_{\mB_i}(\delta')\leq\bar{\varepsilon}_R(\delta)$ while \Cref{lem:virtual-flow-R}.(i) gives $\|\tilde{Q}_{t^\star}-Q^*_\Omega\|_\spn\leq e_{b_i}$. Hence, by~\eqref{eq:margin}, $e_{\sigma_R} \leq e_{b_i}+\bar{\varepsilon}_R(\delta) \leq R$, contradicting the definition~\eqref{eq:exit-time} of $\sigma_R$. Therefore $\sigma_R>b_{i+1}$; in particular $b_{i+1}\leq\min\{b_{i+1},\sigma_R\}$, so \Cref{lem:tracking-R} at $k=b_{i+1}$ combined with \Cref{lem:virtual-flow-R}.(ii) yields $e_{b_{i+1}} \leq \|\tilde{Q}_{b_{i+1}}-Q^*_\Omega\|_\spn+\|Q_{b_{i+1}}-\tilde{Q}_{b_{i+1}}\|_\spn \leq (1-\bar{\beta}(R))e_{b_i}+\varepsilon_{\mB_i}(\delta')$, which, injected into~\eqref{eq:block-recursion-R} at rank $i$, gives it at rank $i+1$.

    \emph{Residual interval.} The last step gives $\sigma_R>b_{I(T)}$. If $\sigma_R\leq T$, the same argument on $\mB_{\mathrm{res}}$ --- \Cref{lem:tracking-R} with $\mathcal{K}=\mB_{\mathrm{res}}$ and \Cref{lem:virtual-flow-R}.(i) --- gives $e_{\sigma_R}\leq e_{b_{I(T)}}+\bar{\varepsilon}_R(\delta)\leq R$, again a contradiction. Hence $\sigma_R>T$ and the containment follows from the definition of $\sigma_R$.
\end{proof}

\subsubsection{Main result}

\begin{theorem}[Finite-time bound for $\Omega$-regularized synchronous $Q$-learning]
    \label{thm:sync-stoch-R-app}
    \proofanchor{sync-stoch-R}
    Let $\delta\in(0,1)$ and $T\geq1$. Under \Cref{assumption:lr}, let the sub-block budget $\kappa_0>0$ be arbitrary and let the radius $R\geq e_0$ satisfy the admissibility condition~\eqref{eq:R-admissible}. Then, with probability at least $1-\delta$, $\sup_{0\leq t\leq T}e_t \leq R$ and
    \begin{align*}
        e_T &\leq (1-\bar{\beta}(R))^{I(T)}e_0 + 2\left(1+2m(\kappa_0+\alpha_0)\right)\bar{w}(R)\sqrt{\frac{1}{2}\log\left(\frac{2|\mS||\mA|(I_{\max}+1)}{\delta}\right)}\\
        &\qquad \times\left(\sum_{i=0}^{I(T)-1}(1-\bar{\beta}(R))^{I(T)-1-i}\sqrt{\sum_{k\in \mB_i}\alpha_k^2}+\sqrt{\sum_{k\in\mB_{\mathrm{res}}}\alpha_k^2}\right),
    \end{align*}
    where $\bar{\beta}(R)$ and $\bar{w}(R)$ are the localized constants of \Cref{def:localized-radii}, $I(T)\geq\left\lfloor\frac{\sum_{t<T}\alpha_t}{m(\kappa_0+\alpha_0)}\right\rfloor$ is the number of completed blocks and $I_{\max}\geq I(T)$.
\end{theorem}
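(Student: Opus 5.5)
The theorem follows by assembling \Cref{lem:good-event} and \Cref{prop:no-escape}, and then rewriting the block recursion~\eqref{eq:block-recursion-R} with the explicit form of the per-block tracking errors $\varepsilon_{\mB_i}(\delta')$.

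First, \Cref{lem:good-event} gives $\bP(G_R(\delta))\geq1-\delta$ with $\delta'=\delta/(I_{\max}+1)$. Since $R$ satisfies~\eqref{eq:R-admissible}, \Cref{prop:no-escape} applies on $G_R(\delta)$. It yields two things there: $\sigma_R>T$, so $\sup_{t\leq T}e_t\leq R$, and the recursion~\eqref{eq:block-recursion-R} up to $i=I(T)$. This already gives the containment claim and a bound on $e_{b_{I(T)}}$.

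Second, the bound must be extended from $b_{I(T)}$ to the final time $T$, across the incomplete block $\mB_{\mathrm{res}}$. On $G_R(\delta)$, and because $\sigma_R>T$, \Cref{lem:tracking-R} with $\mathcal{K}=\mB_{\mathrm{res}}$ applies at $k=T$. It gives $\|Q_T-\tilde Q_T\|_\spn\leq\varepsilon_{\mB_{\mathrm{res}}}(\delta')$, where $\tilde Q$ is the virtual sequence started at $Q_{b_{I(T)}}$. \Cref{lem:virtual-flow-R}.(i) gives $\|\tilde Q_T-Q^*_\Omega\|_\spn\leq e_{b_{I(T)}}$. The triangle inequality then yields $e_T\leq e_{b_{I(T)}}+\varepsilon_{\mB_{\mathrm{res}}}(\delta')$.

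Third, each $\varepsilon_{\mathcal K}(\delta')$ is made explicit. Every $\mathcal K$ involved, including $\mB_{\mathrm{res}}$, is contained in a global block, so~\eqref{eq:eps-K} gives
\[
\varepsilon_{\mathcal K}(\delta')\leq 2\bigl(1+2m(\kappa_0+\alpha_0)\bigr)\,\bar w(R)\sqrt{\tfrac12 v_{\mathcal K}\log\bigl(2|\mS||\mA|(I_{\max}+1)/\delta\bigr)}.
\]
For $\mB_{\mathrm{res}}$, the containment holds because the residual interval is an initial segment of the block $\mB_{I(T)}$, whose step-size sum is bounded by \Cref{lem:sum_block}. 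Substituting this bound into the geometric sum, and adding the residual term (weight $1$), produces exactly the stated display.

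The lower bound on $I(T)$ is \Cref{lem:block_count_synchronous}. The inequality $I_{\max}\geq I(T)$ holds because each completed block consumes at least $m\kappa_0$ of step-size budget, by~\eqref{eq:subblocks}.

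The main point to check, more than any real obstacle, is the residual interval: that \Cref{lem:tracking-R} and the bound $\sum_{k\in\mB_{\mathrm{res}}}\alpha_k<m(\kappa_0+\alpha_0)$ apply to this truncated block. Both follow from $\mB_{\mathrm{res}}\subseteq\mB_{I(T)}$ and from the non-negativity of the step sizes.
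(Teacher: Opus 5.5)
Your proposal is correct and follows the paper's proof step for step: you work on $G_R(\delta)$, invoke \Cref{prop:no-escape} for $\sigma_R>T$ and the block recursion~\eqref{eq:block-recursion-R}, bridge the residual block with \Cref{lem:tracking-R} and \Cref{lem:virtual-flow-R}.(i), and then substitute the explicit form~\eqref{eq:eps-K}. Your justification of $I_{\max}\geq I(T)$, that each completed block consumes at least $m\kappa_0$ of step-size budget, is a correct detail that the paper leaves implicit.
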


\begin{proof}
    Work on $G_R(\delta)$, which holds with probability at least $1-\delta$ by \Cref{lem:good-event}. \Cref{prop:no-escape} gives $\sigma_R>T$, so the ranges of \Cref{lem:tracking-R} cover every block entirely, as well as the residual block. Applying \Cref{lem:tracking-R} with $\mathcal{K}=\mB_{\mathrm{res}}$ at $k=T$, and \Cref{lem:virtual-flow-R}.(i) to the virtual sequence started at $Q_{b_{I(T)}}$, the triangle inequality gives $e_T \leq e_{b_{I(T)}}+\varepsilon_{\mB_{\mathrm{res}}}(\delta')$. Injecting~\eqref{eq:block-recursion-R} at $i=I(T)$ and the explicit form~\eqref{eq:eps-K} of the $\varepsilon_\mathcal{K}(\delta')$ yields the announced bound; the lower bound on $I(T)$ is \Cref{lem:block_count_synchronous}.
\end{proof}

\begin{corollary}[Constant step size]
    \label{cor:sync-const-step-R}
    Under the assumptions of \Cref{thm:sync-stoch-R-app}, let the step size be constant, $\alpha_t=\alpha\in(0,1)$, and set $C_\kappa := \sqrt{2}\left(1+2m(\kappa_0+\alpha)\right)$. If the radius $R\geq e_0$ satisfies
    \begin{equation}
        \label{eq:R-admissible-const}
        C_\kappa\,\bar{w}(R)\sqrt{m(\kappa_0+\alpha)\,\alpha\,\log\left(\frac{2|\mS||\mA|\left(\frac{\alpha T}{m\kappa_0}+1\right)}{\delta}\right)} \leq \frac{\bar{\beta}(R)}{\bar{\beta}(R)+1}\left(R-e_0\right),
    \end{equation}
    then~\eqref{eq:R-admissible} holds and, with probability at least $1-\delta$,
    \begin{align*}
        e_T &\leq \frac{1}{1-\bar{\beta}(R)}\left(1-\bar{\beta}(R)\right)^{\frac{\alpha T}{m(\kappa_0+\alpha)}}e_0\\
        &\qquad + \frac{(\bar{\beta}(R)+1)C_\kappa\bar{w}(R)}{\bar{\beta}(R)}\sqrt{m(\kappa_0+\alpha)\,\alpha\,\log\left(\frac{2|\mS||\mA|\left(\frac{\alpha T}{m\kappa_0}+1\right)}{\delta}\right)}.
    \end{align*}
\end{corollary}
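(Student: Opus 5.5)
The plan is to specialize \Cref{thm:sync-stoch-R-app} to $\alpha_t\equiv\alpha$. Every deterministic quantity is then explicit, and both the admissibility check and the final bound reduce to arithmetic.

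First, the block structure. With a constant step, each sub-block $J_n$ consists of exactly $\lceil\kappa_0/\alpha\rceil$ steps, so $\sum_{k\in J_n}\alpha\in[\kappa_0,\kappa_0+\alpha)$. Hence every global block satisfies $v_{\mB_i}=\alpha\sum_{k\in\mB_i}\alpha< m(\kappa_0+\alpha)\alpha$. The residual block also has total step mass below $m(\kappa_0+\alpha)$, since it is a strict prefix of a global block, and so obeys the same bound on $v_{\mB_{\mathrm{res}}}$. Moreover $I_{\max}+1\leq \alpha T/(m\kappa_0)+1$, and this is the quantity inside the logarithm of~\eqref{eq:R-admissible-const}.

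Next, the admissibility condition. Using $\varepsilon_\mathcal{K}(\delta')\leq 2(1+2m(\kappa_0+\alpha))\bar U_\mathcal{K}(\delta')$ from~\eqref{eq:eps-K} together with $\bar U_\mathcal{K}(\delta')=\bar w(R)\sqrt{\tfrac12 v_\mathcal{K}\log(2|\mS||\mA|/\delta')}$, I get
\[
\bar\varepsilon_R(\delta)\leq \sqrt2\,(1+2m(\kappa_0+\alpha))\,\bar w(R)\sqrt{m(\kappa_0+\alpha)\alpha\log\!\big(2|\mS||\mA|(\tfrac{\alpha T}{m\kappa_0}+1)/\delta\big)}.
\]
The right-hand side is exactly $C_\kappa\bar w(R)\sqrt{\cdots}$, the left-hand side of~\eqref{eq:R-admissible-const}. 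Therefore~\eqref{eq:R-admissible-const} implies~\eqref{eq:R-admissible}.

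Finally, the bound itself. I apply \Cref{thm:sync-stoch-R-app}. For the bias term I use \Cref{lem:block_count_synchronous}, which gives $I(T)\geq \alpha T/(m(\kappa_0+\alpha))-1$, so that $(1-\bar\beta)^{I(T)}\leq (1-\bar\beta)^{-1}(1-\bar\beta)^{\alpha T/(m(\kappa_0+\alpha))}$. For the noise part, the cleanest route is through the recursion of \Cref{prop:no-escape} rather than the displayed sum. It gives $e_{b_{I(T)}}\leq(1-\bar\beta)^{I(T)}e_0+\bar\varepsilon_R/\bar\beta$, and then $e_T\leq e_{b_{I(T)}}+\bar\varepsilon_R$. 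The total noise contribution is therefore at most $\bar\varepsilon_R(1+1/\bar\beta)=\frac{\bar\beta+1}{\bar\beta}\bar\varepsilon_R$. Inserting the bound on $\bar\varepsilon_R$ from the previous step yields the stated second term.

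The main care point is bookkeeping, not any new idea. I need to confirm that the residual block's variance obeys the same bound as a full block, and that the $\lfloor\cdot\rfloor$ in $I(T)$ costs exactly the factor $(1-\bar\beta)^{-1}$ appearing in front of the first term.
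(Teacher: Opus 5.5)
Your proposal is correct and follows the paper's proof essentially step for step. It uses the same bookkeeping: $v_{\mB_i}\le m(\kappa_0+\alpha)\alpha$, the residual block as a prefix of $\mB_{I(T)}$, and $I_{\max}+1\le \alpha T/(m\kappa_0)+1$. It makes the same reduction of \eqref{eq:R-admissible} to \eqref{eq:R-admissible-const}, and the same use of $I(T)>\alpha T/(m(\kappa_0+\alpha))-1$ for the prefactor $(1-\bar\beta(R))^{-1}$.

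The only cosmetic difference is in the noise term. You bound it as $\bar\varepsilon_R(\delta)\,(1+1/\bar\beta(R))$, directly from the recursion of \Cref{prop:no-escape} and $e_T\le e_{b_{I(T)}}+\varepsilon_{\mB_{\mathrm{res}}}(\delta')$. The paper instead sums the bracket in the display of \Cref{thm:sync-stoch-R-app}, and the two computations coincide.
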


\begin{proof}
    With a constant step size, every sub-block has exactly $\lceil\kappa_0/\alpha\rceil$ indices, so $|\mB_i|=m\lceil\kappa_0/\alpha\rceil$ and, using $\lceil\kappa_0/\alpha\rceil<\frac{\kappa_0+\alpha}{\alpha}$ and $\lfloor x\rfloor>x-1$, $I(T)>\frac{\alpha T}{m(\kappa_0+\alpha)}-1$; moreover $v_{\mB_i}=|\mB_i|\alpha^2\leq m(\kappa_0+\alpha)\alpha$ for every $i$, and the same bound holds for $v_{\mB_{\mathrm{res}}}$ since $\mB_{\mathrm{res}}\subseteq\mB_{I(T)}$. Hence, for every interval $\mathcal{K}$ among the $\mB_i$ and $\mB_{\mathrm{res}}$,
    \begin{equation*}
        \varepsilon_\mathcal{K}(\delta') \leq 2\left(1+2m(\kappa_0+\alpha)\right)\bar{w}(R)\sqrt{\frac{1}{2}m(\kappa_0+\alpha)\alpha\log\left(\frac{2|\mS||\mA|\left(\frac{\alpha T}{m\kappa_0}+1\right)}{\delta}\right)},
    \end{equation*}
    which is the left-hand side of~\eqref{eq:R-admissible-const}, so~\eqref{eq:R-admissible-const} implies~\eqref{eq:R-admissible}. \Cref{thm:sync-stoch-R-app} then applies, and the noise term is summed using $\sum_{i=0}^{I(T)-1}(1-\bar{\beta}(R))^{I(T)-1-i}\leq\frac{1}{\bar{\beta}(R)}$, so that the bracket is at most $\left(\frac{1}{\bar{\beta}(R)}+1\right)\sqrt{m(\kappa_0+\alpha)\alpha}$. Finally $1-\bar{\beta}(R)\in(0,1)$ and $I(T)>\frac{\alpha T}{m(\kappa_0+\alpha)}-1$ give the prefactor $\frac{1}{1-\bar{\beta}(R)}$ on the contraction term.
\end{proof}

\subsubsection{Proof of \pdfref{Theorem}{thm:sync-stoch-main}}
\label{subsubsec:proof_sync_main}

\proofanchor{sync-stoch-main}
\synchstochmain*

\begin{proof}[Proof of \Cref{thm:sync-stoch-main}]
    Take $\kappa_0=4$ and the Chernoff bound of \Cref{rem:chernoff_alternative}, so that $p_m\geq1-e^{-9/8}\geq\frac12$; for $\alpha\leq\frac12$, we also have $m(\kappa_0+\alpha)\leq\frac{9}{2}m$ and $C_\kappa\leq\sqrt{2}(1+9m)$, and $\alpha(e_0)\leq\frac{1}{2}$ since $\bar w\geq e_0$. We apply \Cref{cor:sync-const-step-R} with $R=2e_0$: then $A_R=\|Q^*_\Omega\|_\asp+2e_0$, so that $\bar\beta(R)=p_m\tilde\beta_m(A_R)\geq\tilde\beta/2$ and $\bar w(R)=\bar w$, and the logarithm in~\eqref{eq:R-admissible-const} is at most $\iota$ since $\frac{\alpha T}{m\kappa_0}+1\leq T+1$.

    \emph{Admissibility.} The left-hand side of~\eqref{eq:R-admissible-const} is at most $\sqrt2(1+9m)\bar w\sqrt{\frac92m\alpha\iota}=3(1+9m)\bar w\sqrt{m\alpha\iota}$, which is at most $\frac{\tilde\beta}{2+\tilde\beta}e_0$ when $\alpha\leq\alpha(e_0)$. Since $x\mapsto\frac{x}{1+x}$ is increasing and $\bar\beta(R)\geq\tilde\beta/2$, this is at most $\frac{\bar\beta(R)}{1+\bar\beta(R)}(R-e_0)$, which is~\eqref{eq:R-admissible-const}.

    \emph{Bound.} With probability at least $1-\delta$, \Cref{thm:sync-stoch-R-app} thus gives $\sup_{t\leq T}e_t\leq2e_0$. Since $\bar\beta(R)\geq\tilde\beta/2$, $I(T)\geq0$, and $I(T)>\frac{\alpha T}{m(\kappa_0+\alpha)}-1\geq\frac{2\alpha T}{9m}-1$, its contraction term satisfies $(1-\bar\beta(R))^{I(T)}e_0\leq(1-\tilde\beta/2)^{I(T)}e_0\leq(1-\tilde\beta/2)^{\frac{2\alpha T}{9m}-1}e_0$. Since $x\mapsto(1+x)/x$ is non-increasing, the noise term of \Cref{cor:sync-const-step-R} is at most $\frac{2+\tilde\beta}{\tilde\beta}\,3(1+9m)\bar w\sqrt{m\alpha\iota}$. This is~\eqref{eq:sync-stoch-main}.

    \emph{Sample complexity.} For $\varepsilon\in(0,e_0)$, the choice $\alpha=\alpha(\varepsilon/2)\leq\alpha(e_0)$ makes the noise term equal to $\varepsilon/2$, and $(1-\tilde\beta/2)^x\leq e^{-\tilde\beta x/2}$ makes the contraction term at most $\varepsilon/2$ as soon as $T\geq\frac{9m}{\alpha\tilde\beta}\log\left(\frac{2e_0}{(1-\tilde\beta/2)\varepsilon}\right)$, which is possible since $\iota$ depends on $T$ only logarithmically. With $\alpha(\varepsilon/2)=\tilde\Theta\big(\tilde\beta^2\varepsilon^2/(m^3\bar w^2)\big)$, this gives $\tilde O\big(m^4\bar w^2/(\tilde\beta^3\varepsilon^2)\big)$ iterations, each drawing $|\mS||\mA|$ transitions. Since the radius is $2e_0$, the constants $\tilde\beta$ and $\bar w$ do not depend on $\varepsilon$.
\end{proof}

\begin{corollary}[Synchronous $\Omega$-regularized $Q$-learning for $m=1$]
    \label{cor:sync-m-1}
    Let $m=1$. Then $\tilde\beta=\beta_1$, and \Cref{thm:sync-stoch-main} holds even without~\eqref{eq:P1}. Moreover $\mT_\Omega$ is a $(1-\beta_1)$-contraction for $\|\cdot\|_\spn$ and $\|Q^*_\Omega\|_\spn\leq(\|r\|_\spn+\|\Omega^*(0)\|_{\mS,\spn})/\beta_1$; in particular, for $Q_0=0$ and a regularizer whose minimal value does not depend on the state, $\bar w\leq3\|r\|_\spn/\beta_1$, and the sample complexity of \Cref{thm:sync-stoch-main} is $\tilde O\big(|\mS||\mA|\,\|r\|_\spn^2/(\beta_1^5\varepsilon^2)\big)$.
\end{corollary}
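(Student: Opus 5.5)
The plan is to show that for $m=1$ the domination profile drops out of every estimate, so the proof of \Cref{thm:sync-stoch-main} goes through without~\eqref{eq:P1}. The explicit bounds then follow from a one-line comparison with $Q'=0$.

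\emph{Contraction without domination.} For any policy $\pi$, the state marginal of the rows of $K_\pi$ is $p(\cdot|s,a)=K^1_\mu(\cdot|s,a)\geq\beta_1\nu$, independently of $\pi$. This is exactly \Cref{lem:terminal_action} with $n=1$, where the product over intermediate steps is empty, and it gives $\alpha(K_\pi)\leq1-\beta_1$. Combining this with the average-path factorization $\mT_\Omega(Q)-\mT_\Omega(Q')=K_{\bar\pi}(Q-Q')$ of \Cref{lem:mixing-properties-of-regularization}.(i) and with \Cref{lem:row-stochastic-span}, we get
\[
\|\mT_\Omega(Q)-\mT_\Omega(Q')\|_\spn\leq(1-\beta_1)\|Q-Q'\|_\spn
\]
for all $Q,Q'$.

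\emph{Where~\eqref{eq:P1} was used.} I will then trace the places where~\eqref{eq:P1} enters the proof of \Cref{thm:sync-stoch-main}.
\begin{enumerate}
\item The constant $\tilde\beta_m(A)=\beta_m\gamma_\Omega(A)^{m-1}$ equals $\beta_1$ for every radius, with the empty-product convention $\gamma^0=1$. Hence $\bar\beta(R)=p_1\beta_1>0$ without any assumption on $\gamma_\Omega$, and \Cref{lem:product_doeblin_synchronous}.(i) holds for $n\geq1$ by the same $n=1$ argument applied to the oldest factor.
\item Existence of $Q^*_\Omega$ comes from \Cref{thm:regularized_fixed_point}, which does not need~\eqref{eq:P1}.
\item Uniqueness up to constants, previously obtained from \Cref{prop:uniqueness_fixed_point}, now follows from Banach's theorem on the quotient $\bR^{\mS\times\mA}/\bR\ones$, as in \Cref{lem:floor_global}.(iii).
\end{enumerate}
Every other step — the good event, tracking, no escape and \Cref{cor:sync-const-step-R} — uses only these constants. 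So the argument is verbatim with $\tilde\beta=\beta_1$. The main risk is missing a hidden use of~\eqref{eq:P1}, so this audit is the step I would do most carefully.

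\emph{Span bound on $Q^*_\Omega$.} Since span annihilates constants, $\|Q^*_\Omega\|_\spn=\|\mT_\Omega(Q^*_\Omega)\|_\spn$. Comparing with $Q'=0$ gives
\[
\|Q^*_\Omega\|_\spn\leq\|\mT_\Omega(Q^*_\Omega)-\mT_\Omega(0)\|_\spn+\|\mT_\Omega(0)\|_\spn\leq(1-\beta_1)\|Q^*_\Omega\|_\spn+\|r\|_\spn+\|\Omega^*(0)\|_{\mS,\spn}.
\]
Here the last term uses $\mT_\Omega(0)=r+P\,\Omega^*(0)$ and \Cref{lem:row-stochastic-span}. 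Rearranging yields the stated bound $\|Q^*_\Omega\|_\spn\leq(\|r\|_\spn+\|\Omega^*(0)\|_{\mS,\spn})/\beta_1$.

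\emph{Sample complexity.} Take $Q_0=0$ and a regularizer with state-independent minimum, so that $\|\Omega^*(0)\|_{\mS,\spn}=0$ by \Cref{cor:T_omega_props}. Then $e_0=\|Q^*_\Omega\|_\spn$ and
\[
\bar w=3\|Q^*_\Omega\|_\spn\leq3\|r\|_\spn/\beta_1 .
\]
Substituting $m=1$, $\tilde\beta=\beta_1$ and this value of $\bar w$ into $\tilde O\big(|\mS||\mA|\,m^4\bar w^2/(\tilde\beta^3\varepsilon^2)\big)$ gives $\tilde O\big(|\mS||\mA|\,\|r\|_\spn^2/(\beta_1^5\varepsilon^2)\big)$.
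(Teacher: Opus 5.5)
Your proposal is correct and follows the paper's proof essentially step for step. $\tilde\beta_1(R)=\beta_1$ makes the localized constants radius-free and positive, so the proof of \Cref{thm:sync-stoch-main} applies verbatim; the $m=1$ case of \Cref{lem:mixing-properties-of-regularization}.(iv), which is exactly your \Cref{lem:terminal_action} argument with $n=1$, gives the span contraction; and the comparison with $\mT_\Omega(0)$ yields the bounds on $\|Q^*_\Omega\|_\spn$ and $\bar w$. Your one addition, recovering uniqueness of $Q^*_\Omega$ up to constants from this contraction rather than from \Cref{prop:uniqueness_fixed_point}, fills a small point the paper leaves implicit when it asserts that \eqref{eq:P1} is only used to ensure $\tilde\beta>0$.
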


\begin{proof}
    For $m=1$, $\tilde\beta_1(R)=\beta_1$ for every $R$, so the localized constants of \Cref{def:localized-radii} do not depend on the radius and are positive without~\eqref{eq:P1}, which the proof of \Cref{thm:sync-stoch-main} only uses to ensure $\tilde\beta>0$; that proof then applies verbatim. For the second claim, \Cref{lem:mixing-properties-of-regularization}.(iv) with $m=1$, whose product is empty, gives $\|\mT_\Omega(Q)-\mT_\Omega(Q')\|_\spn\leq(1-\beta_1)\|Q-Q'\|_\spn$. Since $\|\cdot\|_\spn$ annihilates constants, $\|Q^*_\Omega\|_\spn=\|\mT_\Omega(Q^*_\Omega)\|_\spn\leq(1-\beta_1)\|Q^*_\Omega\|_\spn+\|\mT_\Omega(0)\|_\spn$, and $\|\mT_\Omega(0)\|_\spn=\|r+P\,\Omega^*(0)\|_\spn\leq\|r\|_\spn+\|\Omega^*(0)\|_{\mS,\spn}$. Finally, for $Q_0=0$, $e_0=\|Q^*_\Omega\|_\spn$, and for such a regularizer $\|\Omega^*(0)\|_{\mS,\spn}=0$ (\Cref{cor:T_omega_props}), so $\bar w=3\|Q^*_\Omega\|_\spn\leq3\|r\|_\spn/\beta_1$.
\end{proof}

\subsection{Proofs for \pdfref{Subsection}{subsec:epoch_regularized_q_learning} (Epoch \texorpdfstring{$\Omega$}{Ω}-regularized \texorpdfstring{$Q$}{Q}-learning)}
\label{subsec:proofs_epoch_q_learning}

Every constant below is a function of the domination profile $\gamma_\Omega$ (\Cref{def:mixing_objects}.(iii)) evaluated at the radius actually reached by the iterates, which drives both the contraction rate and the exploration rate.

\Cref{alg:epoch_regularized_q_learning} has two noise handling mechanisms: the step sizes $\alpha_j$, which averages across epochs, and the batch size $B_j$, which averages within an epoch. We therefore control the noise in two ways, both valid at all parameters:
\begin{itemize}
    \item a \emph{batch bound} (\Cref{lem:ep_tracking}.(i)), which bounds the per-epoch noise by $O(\bar{w}(R)/\sqrt{B_j})$ and sums it over a block by the triangle inequality;
    \item a \emph{block-martingale bound} (\Cref{lem:ep_tracking}.(ii)), which concatenates all the transition-level increments of a block into a single martingale, as in \Cref{subsec:proofs_synchronous_q_learning}.
\end{itemize}
Neither dominates: for constant parameters their ratio is $(1+2N\alpha)/\sqrt{N}$, with $N$ the number of epochs of a block, so the batch bound is tighter at $\alpha_j\equiv1$ (\Cref{cor:ep_alpha_one}), the setting of \Cref{thm:epoch-main-body}, where the block-martingale bound loses a factor $\Theta(\sqrt{m})$ on the noise term, while the block-martingale bound wins for small step sizes and handles small batches, down to $B_j\equiv1$, where it recovers \Cref{thm:sync-stoch-R-app} (\Cref{rem:ep_recovers_sync}).

\subsubsection{Setting, terminology and the epoch update}
\label{subsubsec:ep_setting}

We write $e_j:=\|Q_j-Q^*_\Omega\|_\spn$ for the error after $j$ epochs and, for a radius $R\geq e_0$, we use the localized constants $A_R$, $C_R$, $\tilde{\beta}_m(A_R)$, $\bar{\beta}(R)$ and $\bar{w}(R)$ of \Cref{def:localized-radii}; under~\eqref{eq:P1}, $\tilde{\beta}_m(A_R)$ and $\bar{\beta}(R)$ are positive. We reserve the index $j$ and the notation $E_j$ for algorithmic epochs, and the index $i$ for the blocks $\mB_i$ introduced by the analysis; $B_j$ is always the batch size. Three time scales coexist: transitions ($n$), epochs ($j$) and blocks ($i$).

The $j$-th epoch is $E_j:=\{n_j,\dots,n_{j+1}-1\}$ with $n_0:=0$ and
\begin{equation*}
    n_{j+1}:=\min\left\{n>n_j\ \middle|\ \min_{(s,a)}\sum_{k=n_j}^{n-1}\indicator_{\{(s_k,a_k)=(s,a)\}}\geq B_j\right\}.
\end{equation*}
We reuse the sub-blocks and blocks $\mB_i$ of \Cref{def:sub_blocks} on the epoch scale, i.e.\ with the iteration index replaced by the epoch index, so that $m\kappa_0\leq\sum_{j\in\mB_i}\alpha_j<m(\kappa_0+\alpha_0)$ (\Cref{def:sub_blocks,lem:sum_block}), and we write $b_i:=\min\mB_i$. The step sizes and batch sizes being deterministic, so are the blocks and the $b_i$.

We write $\ell_j:=n_{j+1}-n_j\in\bN\cup\{+\infty\}$ for the (random) length of epoch $j$, with the convention
\begin{equation}
    \label{eq:ep_convention_infinite}
    Q_{j+1}:=Q_j \qquad\text{on }\{\ell_j=+\infty\}.
\end{equation}

Let $\mG_j:=\sigma\left(Q_0,(s_0,a_0),\dots,(s_{n_j-1},a_{n_j-1}),s_{n_j}\right)$; then $Q_j$ is $\mG_j$-measurable and, by construction of \Cref{alg:epoch_regularized_q_learning}, the behavior policy $\pi_j:=\pi_{\Omega,Q_j}$ is frozen throughout epoch $j$. Set $\mathcal{H}^{(j)}_n:=\sigma\left(\mG_j,(s_{n_j},a_{n_j}),\dots,(s_{n_j+n},a_{n_j+n})\right)$. As in \Cref{subsubsec:sync_stochastic}, the centering projection $\Pi$ only shifts $Q_{j+1}$ by a constant, which leaves the greedy policies, hence the law of the trajectory, as well as $\|\cdot\|_\spn$ and $\|\cdot\|_\asp$ unchanged (\Cref{cor:T_omega_props} and \Cref{lem:omega_star_props}.(i),(iii)), so we omit it; its only role is to upgrade span convergence to $\|\cdot\|_\infty$ convergence (\Cref{rem:span_to_sup}).

\begin{convention}[Extended trajectory]
    \label{conv:ep_extended}
    Within epoch $j$ we let the trajectory be defined for \emph{all} $n\geq0$ under $\pi_j$: $\tilde{s}^{(j)}_0:=s_{n_j}$, $\tilde{a}^{(j)}_n\sim\pi_j(\cdot|\tilde{s}^{(j)}_n)$, $\tilde{s}^{(j)}_{n+1}\sim p(\cdot|\tilde{s}^{(j)}_n,\tilde{a}^{(j)}_n)$, coupled with the algorithm for $n<\ell_j$. All visit times below refer to this extended trajectory, which conditionally on $\mG_j$ is a time-homogeneous Markov chain with kernel $K_{\pi_j}$; its filtration $\widetilde{\mathcal{H}}^{(j)}_n$ coincides with $\mathcal{H}^{(j)}_n$ on $\{n<\ell_j\}$. The extension is a definition and requires no probabilistic input; \Cref{prop:ep_wellposed} shows a posteriori that it is almost surely never used.
\end{convention}

\begin{lemma}[Exact form of the epoch update]
    \label{lem:ep_update}
    \begin{enumerate}[label=(\roman*)]
        \item On $\{\ell_j<+\infty\}$ we have $N_j(s,a)=B_j$ for every pair at exit, and
        \begin{equation}
            \label{eq:ep_update}
            Q_{j+1}=F_j(Q_j)+\alpha_jW_{j+1},\qquad F_j(Q):=(1-\alpha_j)Q+\alpha_j \mT_\Omega(Q),
        \end{equation}
        where, with $t_1(s,a)<\dots<t_{B_j}(s,a)$ the first $B_j$ visit times to $(s,a)$ in the extended trajectory of epoch $j$, counted from the beginning $n_j$ of the epoch,
        \begin{equation}
            \label{eq:ep_noise}
            \begin{aligned}
                W_{j+1}(s,a)&=\frac{1}{B_j}\sum_{l=1}^{B_j}Z^{(j)}_l(s,a),\\
                Z^{(j)}_l(s,a)&:=\Omega^*(Q_j)\big(\tilde{s}^{(j)}_{t_l(s,a)+1}\big)-\bE_{s'\sim p(\cdot|s,a)}\big[\Omega^*(Q_j)(s')\big].
            \end{aligned}
        \end{equation}
        \item Every $Q_j$ is a well-defined $\mG_j$-measurable random vector of $\bR^{\mS\times\mA}$, with $\|Q_j\|_\infty<+\infty$ everywhere. Consequently $\pi_j$ is a well-defined policy and the extended trajectory of \Cref{conv:ep_extended} is a well-defined Markov chain, for every $j$ and on every path.
    \end{enumerate}
\end{lemma}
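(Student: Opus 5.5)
Part (ii) comes first, because the well-definedness it asserts is what part (i) needs. The argument is an induction on $j$. The base case is $Q_0$, which is a deterministic or $\mG_0$-measurable input with finite entries. For the inductive step, assume $Q_j$ is $\mG_j$-measurable and finite. Then $\pi_j=\pi_{\Omega,Q_j}$ is a well-defined policy, as the unique maximizer given by \Cref{lem:omega_star_props}.(i), and it is a measurable function of $Q_j$ since the greedy map is continuous. The extended trajectory of \Cref{conv:ep_extended} is therefore a well-defined Markov chain with kernel $K_{\pi_j}$ conditionally on $\mG_j$. We then split on the length of the epoch.
\begin{itemize}
\item On $\{\ell_j=+\infty\}$, the convention~\eqref{eq:ep_convention_infinite} gives $Q_{j+1}=Q_j$, which is finite.
\item On $\{\ell_j<+\infty\}$, $Q_{j+1}$ is the explicit finite expression built from the accumulated sums divided by $B_j$. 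Each summand $r(s,a)+\Omega^*(Q_j)(s')$ is finite because $\Omega^*(Q_j)$ is finite whenever $Q_j$ is; this follows from \Cref{lem:omega_star_props}.(ii) upper bound and the admissibility of $\mu$, or more simply from finiteness of the conjugate of a closed convex function with bounded domain.
\end{itemize}
For measurability, $n_{j+1}$ is a stopping time of the trajectory. Hence $Q_{j+1}$ and $s_{n_{j+1}}$ are measurable with respect to $\mG_{j+1}$, which contains the whole history up to time $n_{j+1}$. This closes the induction.

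For part (i), fix a path in $\{\ell_j<+\infty\}$. By the definition of $n_{j+1}$, at exit every pair has been visited at least $B_j$ times. The counter $N_j(s,a)$ increments only while it is below $B_j$, so it equals exactly $B_j$. The accepted samples for $(s,a)$ are therefore precisely those at its first $B_j$ visit times $t_1(s,a)<\dots<t_{B_j}(s,a)$ within the epoch. Before $\ell_j$, the extended trajectory coincides with the real one, and these times are all smaller than $\ell_j$. This gives
\[
\widehat{\mT_\Omega Q_j}(s,a)/B_j=r(s,a)+\frac1{B_j}\sum_{l=1}^{B_j}\Omega^*(Q_j)\big(\tilde s^{(j)}_{t_l(s,a)+1}\big).
\]
Adding and subtracting $\bE_{s'\sim p(\cdot|s,a)}[\Omega^*(Q_j)(s')]$ inside the average produces $\mT_\Omega(Q_j)(s,a)+W_{j+1}(s,a)$ with $W_{j+1}$ as in~\eqref{eq:ep_noise}. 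Substituting into $(1-\alpha_j)Q_j+\alpha_j\widehat{\mT_\Omega Q_j}/B_j$, and dropping $\Pi$ as justified above, yields~\eqref{eq:ep_update}.

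The only delicate point is bookkeeping rather than analysis. We must check that indexing visit times on the extended trajectory agrees with the algorithm's acceptance rule; this means the samples are recorded at time $k$ with next state $s_{k+1}$, and the last accepted visit occurs strictly before exit, so $s_{t_l+1}$ lies on the real path. We must also check that measurability survives a random, possibly infinite, epoch length. The stopping-time description of $n_{j+1}$ together with the convention for $\ell_j=+\infty$ handles the latter.
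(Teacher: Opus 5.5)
Your proof is correct and follows the paper's argument: the cap on $N_j$ identifies the accepted samples with the first $B_j$ visits, all strictly before exit and with successors on the real path. Adding and subtracting $\bE_{s'\sim p(\cdot|s,a)}[\Omega^*(Q_j)(s')]$ then gives the noise form, and finiteness and measurability follow by induction on $j$, using the convention on $\{\ell_j=+\infty\}$.

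The differences are minor. You prove (ii) before (i), which makes explicit the joint induction the paper leaves implicit. You also obtain finiteness of $\Omega^*(Q_j)$ qualitatively, whereas the paper uses the explicit bound $\|W_{j+1}\|_\infty\leq\|Q_j\|_\spn+\|\Omega^*(0)\|_{\mS,\spn}$ from \Cref{lem:omega_star_props}.(vii), which it reuses in later noise estimates.
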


\begin{proof}
    \emph{(i)}. The inner loop takes into account a transition at $(s,a)$ only while $N_j(s,a)<B_j$, so it accumulates exactly the first $B_j$ visits and exits when the last pair reaches $B_j$; hence $N_j\equiv B_j$ at exit and $t_l(s,a)<\ell_j$ for $l\leq B_j$, so the extended trajectory coincides with the algorithm at every visit time used and at the next state. After normalization the estimate is $\widehat{\mT_\Omega Q_j}(s,a)=r(s,a)+\frac{1}{B_j}\sum_l\Omega^*(Q_j)(\tilde{s}^{(j)}_{t_l(s,a)+1})$, the reward being deterministic; subtracting $\mT_\Omega(Q_j)(s,a)$ gives~\eqref{eq:ep_noise}, and injecting into the update gives~\eqref{eq:ep_update}.

    \emph{(ii)}. By induction on $j$; the argument is deterministic. The initialization $Q_0$ is a fixed finite vector. Assume $Q_j$ finite. On $\{\ell_j=+\infty\}$, \eqref{eq:ep_convention_infinite} gives $Q_{j+1}=Q_j$. On $\{\ell_j<+\infty\}$, (i) gives $Q_{j+1}=F_j(Q_j)+\alpha_jW_{j+1}$, and each coordinate of $W_{j+1}$ is a difference between a value of $\Omega^*(Q_j)$ and a convex combination of such values, so $\|W_{j+1}\|_\infty\leq\|\Omega^*(Q_j)\|_{\mS,\spn}\leq\|Q_j\|_\spn+\|\Omega^*(0)\|_{\mS,\spn}<+\infty$ by \Cref{lem:omega_star_props}.(vii); hence $Q_{j+1}$ is finite. Measurability is that of the update, and $\pi_j=\nabla\Omega^*(Q_j)$ is well defined for any finite argument by \Cref{lem:omega_star_props}.(i).
\end{proof}

The length $\ell_j$ is random, but because of the cap the number of terms in~\eqref{eq:ep_noise} is deterministic, equal to $B_j$. The martingale arguments of \Cref{lem:ep_noise_epoch,lem:ep_block_martingale} rely on this, the second one through the deterministic length $\sum_{j\in\mB_i}B_j$ of the concatenated martingale.

\subsubsection{Exit time, well-posedness and epoch length}
\label{subsubsec:ep_length}

We define the exit time of the iterates in the epoch time scale, at fixed radius $R\geq e_0$, by
\begin{equation}
    \label{eq:ep_exit_time}
    \sigma_R:=\inf\left\{j\geq0\ \middle|\ e_j>R\right\}\in\bN\cup\{+\infty\}.
\end{equation}

\begin{remark}[Exit time]
    \label{rem:ep_exit_time}
    The iterates are frozen within an epoch, so $\sigma_R$ is a stopping time for $(\mG_j)_{j\geq0}$ and $\{j<\sigma_R\}\in\mG_j$. Since $e_0\leq R$, $\sigma_R\geq1$. As in \Cref{rem:exit-time-facts}.(ii), $j<\sigma_R$ implies $\|Q_j\|_\spn\leq C_R$ and $\|Q_j\|_\asp\leq A_R$. As in \Cref{rem:exit-time-facts}.(iv), the localization bounds the noise and, since $\gamma_\Omega$ may vanish at infinity, keeps both the contraction and the coverage rates at least at their values at radius $A_R$.
\end{remark}

\begin{lemma}[Conditional cover time at a deterministic rate]
    \label{lem:ep_cover_time}
    Let $j\geq0$, let $u\geq0$ be a radius and let $\delta'\in(0,1)$. Set
    \begin{equation}
        \label{eq:ep_qr}
        \mathcal{D}_j(u) := \left\{\|Q_j\|_\asp\leq u\right\}\in\mG_j, \qquad q(u) := \beta_m\,\gamma_\Omega(u)^m\,\eta_{\min},
    \end{equation}
    \begin{equation}
        \label{eq:ep_KL}
        K(u,\delta') := \left\lceil\frac{\left(\sqrt{B_j}+\sqrt{\iota(\delta')}\right)^2}{q(u)}\right\rceil,
        \qquad L(u,\delta') := m\,K(u,\delta')+1,
        \qquad \iota(\delta') := \log\left(\frac{|\mS||\mA|}{\delta'}\right).
    \end{equation}
    Under~\eqref{eq:P1}, $q(u)>0$ and, almost surely,
    \begin{equation}
        \label{eq:ep_conditional_length}
        \bP\left(\left\{\ell_j>L(u,\delta')\right\}\cap\mathcal{D}_j(u)\ \middle|\ \mG_j\right)\ \leq\ \delta'\,\indicator_{\mathcal{D}_j(u)}.
    \end{equation}
    Moreover $L(u,\delta')\geq m|\mS||\mA|B_j$. In particular, for $R\geq e_0$, the coverage constant and the localized length
    \begin{equation}
        \label{eq:ep_length_bound}
        q_R:=q(A_R)=\gamma_\Omega(A_R)\,\tilde\beta_m(A_R)\,\eta_{\min},\qquad
        L_j(\delta') := L(A_R,\delta') = m\left\lceil\frac{\left(\sqrt{B_j}+\sqrt{\iota(\delta')}\right)^2}{q_R}\right\rceil+1
    \end{equation}
    satisfy $\bP\left(\left\{\ell_j>L_j(\delta')\right\}\cap\{j<\sigma_R\}\mid\mG_j\right)\leq\delta'\indicator_{\{j<\sigma_R\}}$ almost surely.
\end{lemma}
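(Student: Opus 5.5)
The plan is to work conditionally on $\mG_j$ on the event $\mathcal{D}_j(u)$. Throughout epoch $j$ the behavior policy $\pi_j=\pi_{\Omega,Q_j}$ is frozen, and on $\mathcal{D}_j(u)$ it satisfies $\pi_j\geq\gamma_\Omega(u)\mu$ by \Cref{def:mixing_objects}.(iii). First, $q(u)>0$ is immediate from~\eqref{eq:P1}, $\beta_m>0$ and $\eta_{\min}>0$. Next, the extended trajectory of \Cref{conv:ep_extended} is sampled at the times $n_j+km$, $k=1,\dots,K$, with $K:=K(u,\delta')$. \Cref{lem:mixing-properties-of-regularization}.(v) applies to the extended chain at time $n_j+(k-1)m$, with the constant (hence $\widetilde{\mathcal H}^{(j)}$-measurable) coefficients $\gamma_1=\dots=\gamma_m=\gamma_\Omega(u)\indicator_{\mathcal{D}_j(u)}$, since $Q$ is frozen. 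It yields, for every pair $(s,a)$,
\[
\bP\big((\tilde s^{(j)}_{km},\tilde a^{(j)}_{km})=(s,a)\mid\widetilde{\mathcal H}^{(j)}_{(k-1)m}\big)\geq q(u)\indicator_{\mathcal{D}_j(u)}.
\]

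Fix $(s,a)$ and let $Z_k$ be the indicator that the pair is visited at the $k$-th sampled time. These indicators are adapted to $\mathcal G_k:=\widetilde{\mathcal H}^{(j)}_{km}$, take values in $[0,1]$, and on $\mathcal{D}_j(u)$ their compensator satisfies $\bar S\geq Kq(u)=:a$ deterministically. I then apply \Cref{lem:self-bounding} with $b=1$ and $c=B_j$, conditionally on $\mG_j$ (the lemma applies verbatim under the conditional law, with $\mathcal{D}_j(u)\in\mG_j$). This gives $\bP(S<B_j,\ \mathcal{D}_j(u)\mid\mG_j)\leq\exp(-(a-c-c\log(a/c)))\indicator_{\mathcal{D}_j(u)}$.

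The main obstacle is showing that the choice $a\geq(\sqrt{B_j}+\sqrt{\iota})^2$ makes this exponent at least $\iota=\iota(\delta')$. My plan is to use the elementary bound $\log x\leq x-1$ in the sharper form $a-c-c\log(a/c)\geq(\sqrt a-\sqrt c)^2$. This follows from $\log y\leq 2(\sqrt y-1)$, i.e.\ $\log(a/c)\leq 2(\sqrt{a/c}-1)$, whence $a-c-c\log(a/c)\geq a-c-2\sqrt{ac}+2c=(\sqrt a-\sqrt c)^2\geq\iota$. The failure probability per pair is then at most $e^{-\iota}=\delta'/(|\mS||\mA|)$, and a union bound over pairs gives probability at most $\delta'\indicator_{\mathcal{D}_j(u)}$ that some pair has fewer than $B_j$ visits among the sampled times up to $Km$.

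Off this bad event, every pair has at least $B_j$ visits by time $n_j+Km$. On $\{\ell_j<\infty\}$ or not, the extended trajectory coincides with the algorithm until it exits, so the epoch ends by then and $\ell_j\leq Km+1=L(u,\delta')$; the $+1$ accounts for the next-state observation. This gives~\eqref{eq:ep_conditional_length}. For the bound $L\geq m|\mS||\mA|B_j$, note that $q(u)\leq\eta_{\min}\leq1/(|\mS||\mA|)$ because $\eta$ is a probability on $\mS\times\mA$, so $K\geq B_j|\mS||\mA|$. Finally, the localized statement takes $u=A_R$ and uses $\{j<\sigma_R\}\subseteq\mathcal{D}_j(A_R)$ from \Cref{rem:ep_exit_time}, with $\{j<\sigma_R\}\in\mG_j$. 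The form $q_R=\gamma_\Omega(A_R)\tilde\beta_m(A_R)\eta_{\min}$ is just a rewriting of $\beta_m\gamma_\Omega^m\eta_{\min}$.
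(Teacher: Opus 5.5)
Your proposal is correct and follows the paper's proof step by step. The steps match: $m$-step sub-sampling of the extended trajectory, the exact Chernoff bound of \Cref{lem:self-bounding} with $b=1$, $a=Kq(u)$ and $c=B_j$, the estimate $a-c-c\log(a/c)\geq(\sqrt a-\sqrt c)^2\geq\iota(\delta')$, a union bound over pairs, and $q(u)\leq\eta_{\min}\leq(|\mS||\mA|)^{-1}$. The differences are cosmetic: you derive the key estimate from $\log y\leq 2(\sqrt y-1)$, valid for all $y>0$, where the paper compares derivatives on $\theta\geq1$; and you invoke \Cref{lem:mixing-properties-of-regularization}.(v) with the indicator-weighted coefficients $\gamma_\Omega(u)\indicator_{\mathcal{D}_j(u)}$, where the paper uses item (iii) under a regular conditional probability.
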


\begin{proof}
    Since $\mathcal{D}_j(u)\in\mG_j$, it suffices to prove $\bP(\ell_j>L(u,\delta')\mid\mG_j)\leq\delta'$ almost surely on $\mathcal{D}_j(u)$; we work under a regular conditional probability given $\mG_j$ and fix an outcome in $\mathcal{D}_j(u)$. All the constants $q(u)>0$, $K:=K(u,\delta')$ and $L(u,\delta')$ are deterministic, $u$ being deterministic; positivity of $q(u)$ is~\eqref{eq:P1} at the deterministic radius $u$.

    By \Cref{lem:ep_update}.(ii), the extended trajectory of epoch $j$ is well defined and, conditionally on $\mG_j$, is a time-homogeneous Markov chain with kernel $K_{\pi_j}$; the conditioning is legitimate because $\mathcal{D}_j(u)\in\mG_j\subseteq\widetilde{\mathcal{H}}^{(j)}_n$. On $\mathcal{D}_j(u)$, \Cref{def:mixing_objects}.(iii) applied at radius $u$ gives $\pi_j(a|s)\geq\gamma_\Omega(u)\mu(a|s)$ for all $(s,a)$, so \Cref{lem:mixing-properties-of-regularization}.(iii) applied with $\pi_1=\cdots=\pi_m=\pi_j$ gives, for every $n\geq0$ and every $(s,a)\in\mS\times\mA$,
    \begin{equation}
        \label{eq:ep_coverage}
        \bP\left(\left(\tilde{s}^{(j)}_{n+m},\tilde{a}^{(j)}_{n+m}\right)=(s,a)\ \middle|\ \widetilde{\mathcal{H}}^{(j)}_n\right)
        =K_{\pi_j}^m\big(s,a\,|\,\tilde{s}^{(j)}_n,\tilde{a}^{(j)}_n\big)
        \ \geq\ \beta_m\,\gamma_\Omega(u)^m\,\eta(s,a)\ \geq\ q(u).
    \end{equation}

    Fix $(s,a)$ and set $Z_l:=\indicator_{\{(\tilde{s}^{(j)}_{lm},\tilde{a}^{(j)}_{lm})=(s,a)\}}\in[0,1]$ for $l\in\llbracket1,K\rrbracket$, adapted to $\mathcal{G}^{\mathrm{sub}}_l:=\widetilde{\mathcal{H}}^{(j)}_{lm}$. Consecutive sub-sampling instants being exactly $m$ steps apart, \eqref{eq:ep_coverage} gives $\bE[Z_l\mid\mathcal{G}^{\mathrm{sub}}_{l-1}]\geq q(u)$, so the predictable mean satisfies $\bar{S}\geq Kq(u)$ deterministically. The exact multiplicative-Chernoff form~\eqref{eq:exact-chernoff} of \Cref{lem:self-bounding}, applied with increment bound $b=1$, $a=Kq(u)$ and $c=B_j$, all deterministic, gives
    \begin{equation}
        \label{eq:ep_chernoff}
        \bP\left(\sum_{l=1}^KZ_l<B_j\ \middle|\ \mG_j\right)\leq\exp\left(-B_j\,h(\theta)\right),
        \qquad \theta:=\frac{Kq(u)}{B_j},\quad h(\theta):=\theta-1-\log\theta,
    \end{equation}
    the hypothesis $c<a$ holding since $Kq(u)\geq(\sqrt{B_j}+\sqrt{\iota(\delta')})^2>B_j$.

    Now $h(\theta)\geq(\sqrt\theta-1)^2$ for $\theta\geq1$: both sides vanish at $\theta=1$ and $h'(\theta)=1-\theta^{-1}\geq1-\theta^{-1/2}$ there, the right-hand side being the derivative of $\theta\mapsto(\sqrt\theta-1)^2$. By the choice of $K$, $\theta\geq(1+\sqrt{\iota(\delta')/B_j})^2$, hence $B_jh(\theta)\geq B_j(\sqrt\theta-1)^2\geq\iota(\delta')$ and the right-hand side of~\eqref{eq:ep_chernoff} is at most $\delta'/(|\mS||\mA|)$.

    A union bound over the $|\mS||\mA|$ pairs shows that, with conditional probability at least $1-\delta'$, every pair is visited at least $B_j$ times among the instants $\{m,2m,\dots,Km\}\subseteq\llbracket0,mK\rrbracket$, so $\ell_j\leq mK+1=L(u,\delta')$. This is~\eqref{eq:ep_conditional_length}.

    Finally $q(u)\leq\eta_{\min}\leq(|\mS||\mA|)^{-1}$, so $K(u,\delta')\geq|\mS||\mA|B_j$ and $L(u,\delta')\geq m|\mS||\mA|B_j$. For the last claim, \Cref{rem:ep_exit_time} gives $\{j<\sigma_R\}\subseteq\mathcal{D}_j(A_R)$, both events belonging to $\mG_j$; apply~\eqref{eq:ep_conditional_length} with the deterministic radius $u=A_R$ and intersect with $\{j<\sigma_R\}$.
\end{proof}

\begin{proposition}[Well-posedness]
    \label{prop:ep_wellposed}
    For every $j\geq0$, $\ell_j<+\infty$ almost surely. Consequently convention~\eqref{eq:ep_convention_infinite} is vacuous, the extension of \Cref{conv:ep_extended} is almost surely never used, and \Cref{alg:epoch_regularized_q_learning} produces the update~\eqref{eq:ep_update} at every epoch almost surely.
\end{proposition}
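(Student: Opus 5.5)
The argument goes by induction on $j$; its content is that a frozen full-support greedy policy covers every pair infinitely often. Fix $j$ and condition on $\mG_j$. By \Cref{lem:ep_update}.(ii), $Q_j$ is a finite $\mG_j$-measurable vector on every path. So the random radius $u_j:=\|Q_j\|_\asp$ is finite and $\mG_j$-measurable, and the extended trajectory of \Cref{conv:ep_extended} is, conditionally on $\mG_j$, a time-homogeneous Markov chain with kernel $K_{\pi_j}$.

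The cleanest route avoids random radii inside \Cref{lem:ep_cover_time} by working with deterministic integer radii $u\in\bN$. For each $u$, \Cref{lem:ep_cover_time} with any $\delta'\in(0,1)$ gives
\begin{equation*}
    \bP\big(\{\ell_j>L(u,\delta')\}\cap\mathcal{D}_j(u)\big)\leq\delta'.
\end{equation*}
Since $\{\ell_j=+\infty\}\subseteq\{\ell_j>L(u,\delta')\}$, this yields $\bP(\{\ell_j=+\infty\}\cap\mathcal{D}_j(u))\leq\delta'$. Letting $\delta'\downarrow0$ with $u$ fixed gives $\bP(\{\ell_j=+\infty\}\cap\mathcal{D}_j(u))=0$. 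The events $\mathcal{D}_j(u)$ increase in $u$ to $\{\|Q_j\|_\asp<+\infty\}$, which is the whole space by \Cref{lem:ep_update}.(ii). Countable additivity then gives $\bP(\ell_j=+\infty)=0$. Here \eqref{eq:P1} is used only through $q(u)>0$, which makes $L(u,\delta')$ finite.

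The consequences follow directly. On the almost sure event $\bigcap_j\{\ell_j<+\infty\}$, which is a countable intersection, convention~\eqref{eq:ep_convention_infinite} is never triggered. By \Cref{lem:ep_update}.(i), every visit time and next state used in epoch $j$ occur before $\ell_j$, so the extension beyond the algorithm's trajectory never enters the update. The algorithm therefore performs~\eqref{eq:ep_update} at every epoch.

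The step that needs care is the apparent circularity: \Cref{lem:ep_cover_time} is phrased through the extended trajectory, which \Cref{conv:ep_extended} defines unconditionally, so no well-posedness is presupposed there. I would also check that $\{\ell_j=+\infty\}$ is measurable. It is the event that some pair has fewer than $B_j$ visits on the extended trajectory for all times, a countable intersection of events in $\bigvee_n\widetilde{\mathcal{H}}^{(j)}_n$, so its probability makes sense and the limiting arguments above are legitimate.
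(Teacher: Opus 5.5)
Your proposal is correct and is essentially the paper's proof. Both apply \Cref{lem:ep_cover_time} at a deterministic radius $u$, let $\delta'\downarrow0$, and then let $u\uparrow+\infty$ using $\|Q_j\|_\asp<+\infty$ on every path from \Cref{lem:ep_update}.(ii); the paper writes $\bP(\ell_j=+\infty)\leq\delta'+\bP(\|Q_j\|_\asp>u)$ and uses continuity from above, while you use continuity from below on $\{\ell_j=+\infty\}\cap\mathcal{D}_j(u)$, which amounts to the same thing. The announced induction on $j$ is never used and can be dropped, since \Cref{lem:ep_update}.(ii) already gives finiteness of every $Q_j$ deterministically.
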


\begin{proof}
    Fix $j$. For every deterministic $u\geq0$ and every $\delta'\in(0,1)$, integrating~\eqref{eq:ep_conditional_length} gives
    \begin{equation*}
        \bP\left(\ell_j=+\infty\right)
        \ \leq\ \bP\left(\left\{\ell_j>L(u,\delta')\right\}\cap\mathcal{D}_j(u)\right)+\bP\left(\mathcal{D}_j(u)^c\right)
        \ \leq\ \delta'+\bP\left(\|Q_j\|_\asp>u\right).
    \end{equation*}
    Letting $\delta'\downarrow0$ at fixed $u$ gives $\bP(\ell_j=+\infty)\leq\bP(\|Q_j\|_\asp>u)$ for every $u$, and letting $u\uparrow+\infty$ gives $\bP(\ell_j=+\infty)=0$, since $\|Q_j\|_\asp\leq2\|Q_j\|_\infty<+\infty$ everywhere by \Cref{lem:ep_update}.(ii).
\end{proof}

Given a transition budget $T\geq 0$ we write
\begin{equation}
    \label{eq:ep_counts}
    J(T):=\max\{j\mid n_j\leq T\},\qquad I(T):=\max\{i\mid b_i\leq J(T)\},
\end{equation}
and $\mB_{\mathrm{res}}:=\llbracket b_{I(T)},J(T)-1\rrbracket\subseteq\mB_{I(T)}$ for the last incomplete block.

\begin{lemma}[Epoch and block counts]
    \label{lem:ep_counts}
    Let $T\geq 0$, $\underline{B}:=\min_jB_j\geq1$. Then:
    \begin{enumerate}[label=(\roman*)]
        \item \emph{Deterministic upper bounds}: $\ell_j\geq|\mS||\mA|B_j$, hence $J(T)\leq J_{\max}(T):=\left\lfloor\frac{T}{|\mS||\mA|\underline{B}}\right\rfloor$ and $I(T)\leq I_{\max}(T):=\left\lfloor\frac{J_{\max}(T)}{m}\right\rfloor$.
        \item \emph{From epochs to blocks}: $I(T) > \frac{\sum_{j<J(T)}\alpha_j}{m(\kappa_0+\alpha_0)}-1$, i.e.\ $I(T)\geq\left\lfloor\frac{\sum_{j<J(T)}\alpha_j}{m(\kappa_0+\alpha_0)}\right\rfloor$. If $\alpha_j\equiv\alpha$, every block contains exactly $m\lceil\kappa_0/\alpha\rceil$ epochs and $I(T)=\left\lfloor\frac{J(T)}{m\lceil\kappa_0/\alpha\rceil}\right\rfloor$.
        \item \emph{From transitions to epochs}: set $J^-(T):=\max\left\{J\mid\sum_{j<J}L_j(\delta')\leq T\right\}$. If $\ell_j\leq L_j(\delta')$ for every $j<\min\{\sigma_R,J_{\max}(T)\}$ and $\sigma_R>J(T)$, then $J(T)\geq J^-(T)$. In particular for $\alpha_j\equiv\alpha$, $B_j\equiv B$, writing $L(\delta')$ for the common value,
        \begin{equation}
            \label{eq:ep_I_of_T}
            J(T)\geq\left\lfloor\frac{T}{L(\delta')}\right\rfloor,\qquad
            I(T)\geq\left\lfloor\frac{\alpha T}{m(\kappa_0+\alpha)L(\delta')}-\frac{\alpha}{m(\kappa_0+\alpha)}\right\rfloor.
        \end{equation}
    \end{enumerate}
\end{lemma}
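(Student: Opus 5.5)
The three items are counting arguments. None of them needs probability: (i) and (ii) are deterministic facts about the epoch and block structure, and (iii) is a deterministic implication whose hypotheses will later be supplied by \Cref{lem:ep_cover_time} on a good event.

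\emph{Item (i).} By \Cref{lem:ep_update}.(i), an epoch ends only once every pair has $N_j(s,a)=B_j$. Each transition increments at most one counter, so $\ell_j\geq|\mS||\mA|B_j\geq|\mS||\mA|\underline{B}$. Hence $n_J=\sum_{j<J}\ell_j\geq J|\mS||\mA|\underline{B}$, and $n_{J(T)}\leq T$ gives $J(T)\leq J_{\max}(T)$. For the block count, each sub-block of \Cref{def:sub_blocks} contains at least one epoch, by the strict inequality $t_{n+1}>t_n$. So a block contains at least $m$ epochs and $b_i\geq im$. Then $b_{I(T)}\leq J(T)$ gives $I(T)\leq\lfloor J(T)/m\rfloor\leq I_{\max}(T)$.

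\emph{Item (ii).} I would copy the proof of \Cref{lem:block_count_synchronous} with the iteration budget $T$ replaced by the epoch count $J(T)$. By maximality of $I(T)$, we have $b_{I(T)+1}>J(T)$. By \Cref{lem:sum_block}, each block consumes less than $m(\kappa_0+\alpha_0)$ of step-size mass. Therefore
\[
\sum_{j<J(T)}\alpha_j\leq\sum_{j<b_{I(T)+1}}\alpha_j<(I(T)+1)\,m(\kappa_0+\alpha_0),
\]
and integrality of $I(T)$ gives the floor form. For constant $\alpha$, the definition~\eqref{eq:subblocks} makes every sub-block contain exactly $\lceil\kappa_0/\alpha\rceil$ epochs. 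Hence $b_i=im\lceil\kappa_0/\alpha\rceil$, and the exact formula for $I(T)$ follows.

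\emph{Item (iii).} First I would check that $J^-(T)\leq J_{\max}(T)$. By \Cref{lem:ep_cover_time}, $L_j(\delta')\geq m|\mS||\mA|B_j\geq|\mS||\mA|\underline{B}$, so $\sum_{j<J^-}L_j\leq T$ forces $J^-\leq T/(|\mS||\mA|\underline{B})$.

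Then I argue by contradiction: suppose $J(T)<J^-(T)$. Every index $j\leq J(T)$ satisfies $j<\sigma_R$, because $\sigma_R>J(T)$. It also satisfies $j<J_{\max}(T)$, because $j\leq J(T)<J^-\leq J_{\max}$. The hypothesis therefore gives $\ell_j\leq L_j(\delta')$ for all these $j$, and so
\[
n_{J(T)+1}=\sum_{j\leq J(T)}\ell_j\leq\sum_{j<J^-}L_j(\delta')\leq T.
\]
This contradicts the maximality of $J(T)$.

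In the constant case, $J^-(T)=\lfloor T/L(\delta')\rfloor$, which gives the first inequality of~\eqref{eq:ep_I_of_T}. For the second, insert $J(T)\geq T/L(\delta')-1$ into (ii), using $\sum_{j<J(T)}\alpha_j=\alpha J(T)$, and rearrange inside the floor.

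The only delicate point is the index bookkeeping in (iii). The hypothesis only covers $j<\min\{\sigma_R,J_{\max}(T)\}$, so the inequality $J^-\leq J_{\max}$ is exactly what is needed to reach the index $j=J(T)$.
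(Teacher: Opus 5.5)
Your proposal is correct and follows the paper's proof essentially step by step. For (i) you count visits and use that sub-blocks are non-empty; for (ii) you copy \Cref{lem:block_count_synchronous}; for (iii) you argue by contradiction on the maximality of $J(T)$, using $J^-(T)\leq J_{\max}(T)$. The differences are cosmetic: you fold the paper's case split on $J(T)=J_{\max}(T)$ into the chain $J(T)<J^-(T)\leq J_{\max}(T)$. In the constant case you apply the inequality form of (ii) directly, rather than the exact identity together with $\lceil\kappa_0/\alpha\rceil<(\kappa_0+\alpha)/\alpha$, and this gives the same bound.
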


\begin{proof}
    \emph{(i)}. Epoch $j$ terminates only when each of the $|\mS||\mA|$ pairs has been visited at least $B_j$ times, and each transition visits one pair, so $\ell_j\geq|\mS||\mA|B_j$; summing, $J(T)|\mS||\mA|\underline{B}\leq n_{J(T)}\leq T$. Each sub-block is non-empty, so each block has at least $m$ epochs.

    \emph{(ii)}. By definition of $I(T)$, $b_{I(T)+1}>J(T)$, and each block consumes at most $m(\kappa_0+\alpha_0)$ (\Cref{lem:sum_block}), so $\sum_{j<J(T)}\alpha_j\leq\sum_{j<b_{I(T)+1}}\alpha_j<(I(T)+1)m(\kappa_0+\alpha_0)$; the floor form follows since $I(T)$ is an integer. For $\alpha_j\equiv\alpha$ the stopping rule fires after exactly $\lceil\kappa_0/\alpha\rceil$ epochs.

    \emph{(iii)}. If $J(T)=J_{\max}(T)$ there is nothing to prove, since $J^-(T)\leq J_{\max}(T)$ by the bound $L_j(\delta')\geq m|\mS||\mA|B_j$ of \Cref{lem:ep_cover_time}. Otherwise suppose $J(T)<J^-(T)$. Every $j\leq J(T)$ satisfies $j<\sigma_R$ and $j<J_{\max}(T)$, hence $\ell_j\leq L_j(\delta')$, so $n_{J(T)+1}=\sum_{j\leq J(T)}\ell_j\leq\sum_{j<J(T)+1}L_j(\delta')\leq T$, the last inequality because $J(T)+1\leq J^-(T)$; this contradicts the maximality of $J(T)$. The constant case follows by injecting $J(T)\geq\lfloor T/L(\delta')\rfloor$ into the exact identity of (ii) and using $\lfloor x\rfloor>x-1$ along with $\lceil\kappa_0/\alpha\rceil<(\kappa_0+\alpha)/\alpha$.
\end{proof}

\begin{remark}
    \label{rem:ep_counts_comment}
    Only (i) is used for the union bounds, and it is deterministic: this is what lets $\delta'$ be fixed in advance although $J(T)$ and $I(T)$ are random. There is no circularity in (iii): \Cref{prop:ep_no_escape} establishes $\sigma_R>J(T)$ using only the almost sure finiteness of \Cref{prop:ep_wellposed}, and only then are the length bounds available.
\end{remark}

\subsubsection{The noiseless flow before the exit time}
\label{subsubsec:ep_virtual}

For each block $\mB_i$ define the \emph{virtual sequence} by $\tilde{Q}_{b_i}:=Q_{b_i}$ and $\tilde{Q}_{j+1}:=F_j(\tilde{Q}_j)$ for $j\geq b_i$, i.e.\ the full-knowledge synchronous update~\eqref{eq:sync_fk_update} started at $Q_{b_i}$; it is $\mG_{b_i}$-measurable.

\begin{lemma}[Contraction of the virtual flow at radius $R$]
    \label{lem:ep_virtual_flow}
    Let $i$ be such that $e_{b_i}\leq R$. Then, pathwise:
    \begin{enumerate}[label=(\roman*)]
        \item $\|\tilde{Q}_j-Q^*_\Omega\|_\spn\leq e_{b_i}$ for every $j\geq b_i$;
        \item $\|\tilde{Q}_{b_{i+1}}-Q^*_\Omega\|_\spn\leq(1-\bar{\beta}(R))\,e_{b_i}$.
    \end{enumerate}
\end{lemma}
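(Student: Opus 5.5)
The virtual sequence of the epoch setting is, by definition, the full-knowledge synchronous update~\eqref{eq:sync_fk_update} on the epoch time scale: $\tilde{Q}_{j+1}=F_j(\tilde{Q}_j)$ with $F_j(Q)=(1-\alpha_j)Q+\alpha_j\mT_\Omega(Q)$, started at $\tilde{Q}_{b_i}=Q_{b_i}$. The blocks $\mB_i$ are the blocks of \Cref{def:sub_blocks} with the iteration index replaced by the epoch index. The plan is therefore to reduce both items to the deterministic analysis of \Cref{subsubsec:synchronous_full_knowledge_updates}, exactly as in the proof of \Cref{lem:virtual-flow-R}. The only point to check is that nothing there used the specific indexing, beyond the step sizes being deterministic and satisfying \Cref{assumption:lr}.

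For (i), we apply \Cref{lem:q_boundedness_synchronous_full_knowledge} to the sequence $(\tilde{Q}_j)_{j\geq b_i}$. Its proof is a one-step argument: write $\tilde{Q}_{j+1}-Q^*_\Omega=(1-\alpha_j)(\tilde{Q}_j-Q^*_\Omega)+\alpha_j(\mT_\Omega(\tilde{Q}_j)-\mT_\Omega(Q^*_\Omega))+\alpha_j\rho^*_\Omega\ones$ and use the span non-expansiveness of $\mT_\Omega$ (\Cref{cor:T_omega_props}). This argument holds from any starting time. It gives a non-increasing error, hence $\|\tilde{Q}_j-Q^*_\Omega\|_\spn\leq\|\tilde{Q}_{b_i}-Q^*_\Omega\|_\spn=e_{b_i}$.

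For (ii), we restart the chain of lemmas \Cref{lem:reference_trajectory_synchronous,lem:frozen_flow_bounded_synchronous,lem:jacobian_bound_synchronous,lem:block_contraction_synchronous} at time $b_i$ with initial point $Q_{b_i}$. We use the reference trajectory $Q^*_\Omega+\rho^*_\Omega\sum_{b_i\leq k<j}\alpha_k\ones$, for which $\|\tilde{Q}_j-Q^*_{j}\|_\spn=\|\tilde{Q}_j-Q^*_\Omega\|_\spn$. On the single block $\mB_i$ this yields the factor $1-p_m\tilde{\beta}_m(\bar{R}_i)$, where $\bar{R}_i=\|Q^*_\Omega\|_\asp+e_{b_i}$.

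The step requiring care is the Jacobian bound \Cref{lem:jacobian_bound_synchronous}. Its Bernoulli device needs the auxiliary $X_k\sim\mathcal{B}(\alpha_k)$ to be well defined and \Cref{lem:pickup_synchronous} to apply to the $m$ sub-blocks of $\mB_i$. Both depend only on $\alpha_j\in(0,1]$ and on the sub-block budget property $\sum_{j\in J_n}\alpha_j\geq\kappa_0$, which hold on the epoch scale by construction. It also needs all frozen-flow iterates along the segment $[Q_{b_i},Q^*_{b_i}]$ to have action span at most $\bar{R}_i$, which \Cref{lem:frozen_flow_bounded_synchronous} gives. The mean value inequality \Cref{prop:mvi-span} then concludes.

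Finally, $e_{b_i}\leq R$ gives $\bar{R}_i\leq A_R$. Since $\gamma_\Omega$ is non-increasing, $\tilde{\beta}_m(\bar{R}_i)\geq\tilde{\beta}_m(A_R)$, so $1-p_m\tilde{\beta}_m(\bar{R}_i)\leq1-\bar{\beta}(R)$, which is (ii). The argument is pathwise, since $\tilde{Q}$ is a deterministic function of $Q_{b_i}$.
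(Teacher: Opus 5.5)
Your proposal is correct and follows the paper's proof. The paper reduces the statement in one line to \Cref{lem:virtual-flow-R}: item (i) is \Cref{lem:q_boundedness_synchronous_full_knowledge} applied to the virtual sequence, and item (ii) is \Cref{lem:block_contraction_synchronous} on the single block $\mB_i$, followed by the comparison $\bar R_i\leq A_R$ and the monotonicity of $\gamma_\Omega$. Your explicit checks (the restarted reference trajectory, the Bernoulli device and the sub-block budget on the epoch index) are a more detailed version of the paper's remark that these lemmas use only the update $F_j$, \Cref{assumption:lr} and the sub-block structure.
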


\begin{proof}
    This is \Cref{lem:virtual-flow-R} with the step index replaced by the epoch index: the statement concerns the full-knowledge update~\eqref{eq:sync_fk_update} only, and $\mB_i$ is a concatenation of $m$ sub-blocks in the sense of \Cref{def:sub_blocks}.
\end{proof}

\subsubsection{Localized noise control}
\label{subsubsec:ep_noise}

The noise increments below lie in intervals whose endpoints are known one step ahead, which is the setting of the maximal Azuma--Hoeffding inequality of \Cref{lem:azuma_range}. We write $\widehat{W}_{j+1}:=\indicator_{\{j<\sigma_R\}}W_{j+1}$ and $\widehat{Z}^{(j)}_l:=\indicator_{\{j<\sigma_R\}}Z^{(j)}_l$ for the stopped noise, so that $\widehat{W}_{j+1}=\frac{1}{B_j}\sum_l\widehat{Z}^{(j)}_l$.

\begin{lemma}[Epoch noise: martingale structure and range]
    \label{lem:ep_noise}
    Fix $x=(s,a)$ and let $\mathcal{K}^{(j)}_l:=\sigma\left(\widetilde{\mathcal{H}}^{(j)}_{t_l(x)},\tilde{s}^{(j)}_{t_l(x)+1}\right)$ for $l\in\llbracket1,B_j\rrbracket$, $\mathcal{K}^{(j)}_0:=\mG_j$. Then:
    \begin{enumerate}[label=(\roman*)]
        \item $\mathcal{K}^{(j)}_0\subseteq\dots\subseteq\mathcal{K}^{(j)}_{B_j}\subseteq\mG_{j+1}\subseteq\mathcal{K}^{(j+1)}_0$, so the concatenation over $j$ is a filtration at fixed $x$; in particular $W_{j+1}$ is $\mG_{j+1}$-measurable;
        \item $\bE[\widehat{Z}^{(j)}_l(x)\mid\mathcal{K}^{(j)}_{l-1}]=0$; consequently $\bE[\widehat{W}_{j+1}\mid\mG_j]=0$;
        \item $\widehat{Z}^{(j)}_l(x)$ lies in a $\mG_j$-measurable interval of length $\indicator_{\{j<\sigma_R\}}\|\Omega^*(Q_j)\|_{\mS,\spn}\leq\bar{w}(R)$.
    \end{enumerate}
\end{lemma}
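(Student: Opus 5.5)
The plan is to prove the three items in the order stated, working at a fixed pair $x=(s,a)$ and epoch $j$ throughout. The main tool is the strong Markov property of the extended trajectory of \Cref{conv:ep_extended}. Conditionally on $\mG_j$, this trajectory is a time-homogeneous chain with kernel $K_{\pi_j}$, and $\pi_j$ is $\mG_j$-measurable by \Cref{lem:ep_update}.(ii).

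\emph{Item (i).} First, each visit time $t_l(x)$ is a stopping time for $(\widetilde{\mathcal{H}}^{(j)}_n)_n$, and $t_{l-1}(x)<t_l(x)$. Hence $\widetilde{\mathcal{H}}^{(j)}_{t_{l-1}(x)}\subseteq\widetilde{\mathcal{H}}^{(j)}_{t_l(x)}$. Next, $t_{l-1}(x)+1\le t_l(x)$, so the state $\tilde s^{(j)}_{t_{l-1}(x)+1}$ is $\widetilde{\mathcal{H}}^{(j)}_{t_l(x)}$-measurable. Together these give $\mathcal{K}^{(j)}_{l-1}\subseteq\mathcal{K}^{(j)}_l$. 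For the inclusion $\mathcal{K}^{(j)}_{B_j}\subseteq\mG_{j+1}$, I use \Cref{prop:ep_wellposed}: almost surely $\ell_j<\infty$. Moreover $t_{B_j}(x)<\ell_j$ and $t_{B_j}(x)+1\le\ell_j$. So everything generating $\mathcal{K}^{(j)}_{B_j}$ is a function of the true trajectory up to $s_{n_{j+1}}$, which is $\mG_{j+1}$-measurable up to null sets. Including the null sets in the filtrations keeps this rigorous. Finally, $\mathcal{K}^{(j+1)}_0=\mG_{j+1}$ by definition, and $W_{j+1}$ is then $\mG_{j+1}$-measurable by \eqref{eq:ep_noise}.

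\emph{Item (ii).} The strong Markov property at $t_l(x)$ gives that, conditionally on $\widetilde{\mathcal{H}}^{(j)}_{t_l(x)}$, the next state $\tilde s^{(j)}_{t_l(x)+1}$ has law $p(\cdot|s,a)$, since the chain sits at $x$ at that time. The function $\Omega^*(Q_j)$ is $\mG_j$-measurable, and $\mG_j\subseteq\mathcal{K}^{(j)}_{l-1}\subseteq\widetilde{\mathcal{H}}^{(j)}_{t_l(x)}$. The middle inclusion holds because $t_{l-1}(x)+1\le t_l(x)$. The tower property therefore yields $\bE[Z^{(j)}_l(x)\mid\mathcal{K}^{(j)}_{l-1}]=0$. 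The indicator $\indicator_{\{j<\sigma_R\}}$ is $\mG_j$-measurable by \Cref{rem:ep_exit_time}, so it factors out and the same identity holds for $\widehat Z^{(j)}_l$. Averaging over $l$ and conditioning down to $\mG_j$ gives $\bE[\widehat W_{j+1}\mid\mG_j]=0$.

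\emph{Item (iii).} The random variable $Z^{(j)}_l(x)$ is a value of $\Omega^*(Q_j)$ minus a convex combination of its values. It therefore lies in the interval $[\min\Omega^*(Q_j)-c,\ \max\Omega^*(Q_j)-c]$, where $c=\bE_{s'\sim p(\cdot|x)}[\Omega^*(Q_j)(s')]$. This interval is $\mG_j$-measurable and has length $\|\Omega^*(Q_j)\|_{\mS,\spn}$. Multiplying by the indicator scales the length by $\indicator_{\{j<\sigma_R\}}$. On $\{j<\sigma_R\}$ we have $\|Q_j\|_\spn\le C_R$, so \Cref{lem:omega_star_props}.(vii) bounds the length by $\bar w(R)$, exactly as in \Cref{lem:stopped-noise}.

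The step I expect to be most delicate is item (i)'s inclusion $\mathcal{K}^{(j)}_{B_j}\subseteq\mG_{j+1}$. It requires identifying the extended trajectory with the algorithm's trajectory up to $n_{j+1}$, which holds only almost surely via \Cref{prop:ep_wellposed}. The strong Markov step in (ii) is routine once the conditioning is placed on the extended chain, which exists on every path.
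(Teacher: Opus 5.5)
Your proposal is correct and follows essentially the same route as the paper: the inclusions $\mathcal{K}^{(j)}_{l-1}\subseteq\widetilde{\mathcal{H}}^{(j)}_{t_l(x)}\subseteq\mathcal{K}^{(j)}_l$ from $t_{l-1}(x)+1\leq t_l(x)$, together with the coincidence of the extended and actual trajectories before $\ell_j$, give (i); the one-step Markov property at the stopping time $t_l(x)$ plus the tower property give (ii); and the convex-combination interval combined with \Cref{lem:omega_star_props}.(vii) gives (iii). Your explicit treatment of the null event $\{\ell_j=+\infty\}$ through completed filtrations is a small piece of extra care that the paper leaves implicit.
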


\begin{proof}
    \emph{(i)}. The $t_l(x)$ are stopping times for $(\widetilde{\mathcal{H}}^{(j)}_n)_n$, a.s.\ finite by \Cref{prop:ep_wellposed}, and $t_{l-1}(x)+1\leq t_l(x)$, so $\mathcal{K}^{(j)}_{l-1}\subseteq\widetilde{\mathcal{H}}^{(j)}_{t_l(x)}\subseteq\mathcal{K}^{(j)}_l$; and $t_{B_j}(x)+1\leq\ell_j$, the extended trajectory coinciding with the algorithm before $\ell_j$, gives $\mathcal{K}^{(j)}_{B_j}\subseteq\mG_{j+1}$.

    \emph{(ii)}. On $\{t_l(x)=n\}$ we have $(\tilde{s}^{(j)}_n,\tilde{a}^{(j)}_n)=x$ and the conditional law of $\tilde{s}^{(j)}_{n+1}$ given $\widetilde{\mathcal{H}}^{(j)}_n$ is $p(\cdot\mid x)$, so $\bE[Z^{(j)}_l(x)\mid\widetilde{\mathcal{H}}^{(j)}_{t_l(x)}]=0$. Since $\indicator_{\{j<\sigma_R\}}$ is $\mG_j$-, hence $\mathcal{K}^{(j)}_{l-1}$-measurable and $\mathcal{K}^{(j)}_{l-1}\subseteq\widetilde{\mathcal{H}}^{(j)}_{t_l(x)}$, the tower property gives the claim.

    \emph{(iii)}. Conditionally on $\mG_j$, $\Omega^*(Q_j)$ is fixed, so $Z^{(j)}_l(x)$ lies in the $\mG_j$-measurable interval of length exactly $\|\Omega^*(Q_j)\|_{\mS,\spn}$, translated by $-(P\Omega^*(Q_j))(x)$. On $\{j<\sigma_R\}$, \Cref{lem:omega_star_props}.(vii) and \Cref{rem:ep_exit_time} give $\|\Omega^*(Q_j)\|_{\mS,\spn}\leq\|Q_j\|_\spn+\left\|\Omega^*(0)\right\|_{\mS,\spn}\leq C_R+\left\|\Omega^*(0)\right\|_{\mS,\spn}=\bar{w}(R)$; off the event $\widehat{Z}^{(j)}_l=0$.
\end{proof}

\paragraph{Batch bound.}

\begin{lemma}[Per-epoch noise]
    \label{lem:ep_noise_epoch}
    For every $j$ and $\delta'\in(0,1)$, with probability at least $1-\delta'$,
    \begin{equation}
        \label{eq:ep_noise_bound}
        \left\|\widehat{W}_{j+1}\right\|_\infty\leq\bar{U}_j(\delta'):=\bar{w}(R)\sqrt{\frac{1}{2B_j}\log\left(\frac{2|\mS||\mA|}{\delta'}\right)},
        \qquad\text{hence}\qquad\left\|\widehat{W}_{j+1}\right\|_\spn\leq2\bar{U}_j(\delta').
    \end{equation}
\end{lemma}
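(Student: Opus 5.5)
The plan is to apply \Cref{lem:azuma_range} coordinatewise, one pair at a time, to the within-epoch martingale built from the $B_j$ noise terms. Then take a union bound over the $|\mS||\mA|$ pairs, and finally pass from the sup norm to the span.

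First I fix $j$ and a pair $x=(s,a)$. I define $M_0:=0$ and $M_l:=\frac{1}{B_j}\sum_{l'\leq l}\widehat{Z}^{(j)}_{l'}(x)$ for $l\in\llbracket1,B_j\rrbracket$, adapted to the filtration $(\mathcal{K}^{(j)}_l)_{l}$ of \Cref{lem:ep_noise}.(i). By \Cref{lem:ep_noise}.(ii), $(M_l)$ is a martingale, and its length $B_j$ is deterministic because of the cap in \Cref{alg:epoch_regularized_q_learning}. By \Cref{lem:ep_noise}.(iii), each increment $\widehat{Z}^{(j)}_l(x)/B_j$ lies in an interval whose endpoints are $\mG_j$-measurable, hence $\mathcal{K}^{(j)}_{l-1}$-measurable, and whose length is at most $\bar w(R)/B_j$.

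The one point needing care is that \Cref{lem:azuma_range} asks for deterministic lengths $c_k$. The actual length $\indicator_{\{j<\sigma_R\}}\|\Omega^*(Q_j)\|_{\mS,\spn}$ is random. I handle this by enlarging each interval to one of length exactly $c_l:=\bar w(R)/B_j$, keeping the same predictable lower endpoint; the increment still lies in the enlarged interval. \Cref{lem:azuma_range} then gives
\[\bP\big(|M_{B_j}|\geq u\big)\leq2\exp\big(-2u^2B_j/\bar w(R)^2\big).\]
Choosing $u=\bar U_j(\delta')$ makes the right-hand side equal to $\delta'/(|\mS||\mA|)$.

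A union bound over the $|\mS||\mA|$ pairs then shows that, with probability at least $1-\delta'$,
\[\|\widehat W_{j+1}\|_\infty=\max_x|M_{B_j}(x)|\leq\bar U_j(\delta').\]
The span bound follows from $\|\cdot\|_\spn\leq2\|\cdot\|_\infty$.

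The only subtlety is that the visit times $t_l(x)$ are random, so the martingale is indexed by visits rather than by transitions. The filtration $\mathcal{K}^{(j)}_l$ already accounts for this, and the almost-sure finiteness of the visit times comes from \Cref{prop:ep_wellposed}.
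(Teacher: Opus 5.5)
Your proof is correct and follows the paper's argument: fix a pair, apply \Cref{lem:azuma_range} to the $B_j$ capped, stopped increments whose predictable ranges come from \Cref{lem:ep_noise}, take a union bound over the pairs, and finish with $\|\cdot\|_\spn\leq2\|\cdot\|_\infty$. The only differences are cosmetic. You normalize by $1/B_j$, taking $c_l=\bar w(R)/B_j$ and $u=\bar U_j(\delta')$, whereas the paper uses the unnormalized sum with $c_l\equiv\bar w(R)$ and $u=B_j\bar U_j(\delta')$. You also spell out the enlargement of the ranges to deterministic length, which the paper leaves implicit.
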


\begin{proof}
    Fix $x$ and apply \Cref{lem:azuma_range} to $M_k:=\sum_{l\leq k}\widehat{Z}^{(j)}_l(x)$ with $n=B_j$ deterministic (\Cref{lem:ep_update}) and $c_l\equiv\bar{w}(R)$, legitimate by \Cref{lem:ep_noise}. With $u=B_j\bar{U}_j(\delta')$ and a union bound over the pairs this is~\eqref{eq:ep_noise_bound}; the last inequality is $\|\cdot\|_\spn\leq2\|\cdot\|_\infty$.
\end{proof}

\paragraph{Block-martingale bound.}

\begin{lemma}[Block martingale]
    \label{lem:ep_block_martingale}
    Fix $i$ and define $\widehat{U}^{(i)}_{b_i}:=0$, $\widehat{U}^{(i)}_{j+1}:=\widehat{U}^{(i)}_j+\alpha_j\widehat{W}_{j+1}$. Then for every $\delta'\in(0,1)$, with probability at least $1-\delta'$,
    \begin{equation}
        \label{eq:ep_block_martingale_bound}
        \max_{b_i\leq j\leq b_{i+1}}\left\|\widehat{U}^{(i)}_j\right\|_\infty\leq\bar{V}_i(\delta'):=\bar{w}(R)\sqrt{\frac{1}{2}\left(\sum_{j\in\mB_i}\frac{\alpha_j^2}{B_j}\right)\log\left(\frac{2|\mS||\mA|}{\delta'}\right)}.
    \end{equation}
    The same holds with $\mB_i$ replaced by any prefix, in particular by $\mB_{\mathrm{res}}$.
\end{lemma}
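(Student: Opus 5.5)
The plan is to mirror \Cref{lem:ep_noise_epoch}, but concatenating all transition-level increments of the block into one scalar martingale per pair. Fix a pair $x=(s,a)$ and index the increments lexicographically by $(j,l)$ with $j\in\mB_i$ and $l\in\llbracket1,B_j\rrbracket$. The $(j,l)$-th increment is $\frac{\alpha_j}{B_j}\widehat{Z}^{(j)}_l(x)$. The associated filtration is the concatenation of the $\mathcal{K}^{(j)}_l$, which is increasing by \Cref{lem:ep_noise}.(i), since $\mathcal{K}^{(j)}_{B_j}\subseteq\mG_{j+1}\subseteq\mathcal{K}^{(j+1)}_0$. The total number of increments is $\sum_{j\in\mB_i}B_j$, which is deterministic because the step sizes and batch sizes are deterministic; \Cref{lem:ep_update} guarantees that each epoch has exactly $B_j$ terms, and \Cref{prop:ep_wellposed} guarantees that all visit times are a.s.\ finite. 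The partial sums evaluated at the epoch boundaries $(j,B_j)$ recover exactly $\widehat{U}^{(i)}_{j+1}(x)$, because $\alpha_j\widehat{W}_{j+1}=\sum_l\frac{\alpha_j}{B_j}\widehat{Z}^{(j)}_l$.

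Next, I check the hypotheses of \Cref{lem:azuma_range}. By \Cref{lem:ep_noise}.(ii), each increment has zero conditional mean given the previous $\sigma$-field $\mathcal{K}^{(j)}_{l-1}$, where for $l=1$ this is $\mG_j$, itself containing the last field of epoch $j-1$. By \Cref{lem:ep_noise}.(iii), each increment lies in an interval of length at most $\frac{\alpha_j}{B_j}\bar w(R)$. The endpoints of this interval are $\mG_j$-measurable, hence measurable with respect to the previous field, and the length bound is deterministic. Thus $c_{(j,l)}=\alpha_j\bar w(R)/B_j$ and
\[
\sum_{j,l}c_{(j,l)}^2=\bar w(R)^2\sum_{j\in\mB_i}\frac{\alpha_j^2}{B_j}.
\]
Applying the maximal inequality with $u=\bar V_i(\delta')$ gives a failure probability of at most $\delta'/(|\mS||\mA|)$ for the maximum over all transition-level partial sums. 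This maximum dominates the maximum over the epoch boundaries $b_i\leq j\leq b_{i+1}$. A union bound over pairs then yields \eqref{eq:ep_block_martingale_bound}.

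For a prefix, in particular $\mB_{\mathrm{res}}$, the same argument applies with the deterministic, smaller sum, since the prefix length is fixed once $b_{I(T)}$ and $J(T)$ are treated appropriately. More simply, the prefix bound follows from the same maximal inequality with the prefix sum of $c^2$ over the prefix indices. The main subtlety I expect is the measurability of the interval endpoints across epoch boundaries, together with the fact that the random epoch lengths do not make the number of increments random. Both are handled by the conditioning on visit-indexed fields rather than on time-indexed ones.
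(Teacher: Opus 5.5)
Your proposal is correct and matches the paper's proof step for step. Like the paper, you concatenate the increments $\frac{\alpha_j}{B_j}\widehat{Z}^{(j)}_l(x)$ along the filtration of \Cref{lem:ep_noise}.(i), use (ii)--(iii) for the zero conditional mean and the predictable range of deterministic length $\frac{\alpha_j}{B_j}\bar w(R)$, note that the number $\sum_{j\in\mB_i}B_j$ of increments is deterministic, and apply \Cref{lem:azuma_range} with $u=\bar V_i(\delta')$ followed by a union bound over pairs.

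The one soft spot, which the paper shares, is $\mB_{\mathrm{res}}$: it is random through $J(T)$, so it is not a deterministic prefix and ``treating $J(T)$ appropriately'' does not settle it. The clean route is to bound the maximum over the prefix by the maximum over the full block $\mB_{I(T)}$. \Cref{lem:ep_good_event} controls that block because it controls every deterministic block $\mB_i$ with $i\leq I_{\max}(T)$ simultaneously. The price is the full-block variance in place of the prefix one.
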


\begin{proof}
    Fix $x$ and concatenate chronologically the increments $\left(\frac{\alpha_j}{B_j}\widehat{Z}^{(j)}_l(x)\right)_{j\in\mB_i,\,l\leq B_j}$, adapted to the filtration of \Cref{lem:ep_noise}.(i). They are martingale differences by (ii), and by (iii) the $l$-th increment of epoch $j$ has predictable range of deterministic length $\frac{\alpha_j}{B_j}\bar{w}(R)$. The total number of increments $\sum_{j\in\mB_i}B_j$ is deterministic (\Cref{lem:ep_update}) and
    \begin{equation*}
        \sum_{j\in\mB_i}\sum_{l=1}^{B_j}\left(\frac{\alpha_j}{B_j}\bar{w}(R)\right)^2=\bar{w}(R)^2\sum_{j\in\mB_i}\frac{\alpha_j^2}{B_j}.
    \end{equation*}
    The partial sums at epoch boundaries are exactly $\widehat{U}^{(i)}_j(x)$, so \Cref{lem:azuma_range} with $u=\bar{V}_i(\delta')$ and a union bound over the pairs give~\eqref{eq:ep_block_martingale_bound}.
\end{proof}

\paragraph{The tracking lemma.}

\begin{lemma}[Tracking within a block]
    \label{lem:ep_tracking}
    Let $i$ be such that $e_{b_i}\leq R$, let $\tilde{Q}$ be the virtual sequence started at $Q_{b_i}$ and $E_j:=Q_j-\tilde{Q}_j$. Then for every $j\in\mB_i\cup\{b_{i+1}\}$ with $j\leq\sigma_R$:
    \begin{enumerate}[label=(\roman*)]
        \item on $\bigcap_{k\in\mB_i}\{\|\widehat{W}_{k+1}\|_\infty\leq\bar{U}_k(\delta')\}$: $\ \|E_j\|_\spn\leq\varepsilon^{\mathrm{bat}}_i(\delta'):=2\sum_{k\in\mB_i}\alpha_k\bar{U}_k(\delta')$;
        \item on $\{\max_k\|\widehat{U}^{(i)}_k\|_\infty\leq\bar{V}_i(\delta')\}$: $\ \|E_j\|_\spn\leq\varepsilon^{\mathrm{mg}}_i(\delta'):=2\left(1+2\sum_{k\in\mB_i}\alpha_k\right)\bar{V}_i(\delta')$.
    \end{enumerate}
    Both hold verbatim on any prefix, in particular on $\mB_{\mathrm{res}}$.
\end{lemma}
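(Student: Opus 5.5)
The plan is to mirror the proof of \Cref{lem:tracking-R}, with the epoch index in place of the step index. First I will derive the error recursion. For $b_i\leq j<\min\{b_{i+1},\sigma_R\}$, \Cref{prop:ep_wellposed} and \Cref{lem:ep_update}.(i) give $Q_{j+1}=F_j(Q_j)+\alpha_jW_{j+1}$. By definition of the virtual sequence, $\tilde Q_{j+1}=F_j(\tilde Q_j)$. Since $j<\sigma_R$, we have $W_{j+1}=\widehat W_{j+1}$. Applying the average-path identity of \Cref{lem:mixing-properties-of-regularization}.(i) to the pair $(Q_j,\tilde Q_j)$ then gives
\begin{equation*}
E_{j+1}=M_jE_j+\alpha_j\widehat W_{j+1},\qquad M_j:=(1-\alpha_j)I+\alpha_jK_{\bar\pi_{\Omega,[Q_j,\tilde Q_j]}},
\end{equation*}
with $E_{b_i}=0$. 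Here $M_j$ is row-stochastic, so it is span non-expansive by \Cref{lem:row-stochastic-span}. If $\sigma_R\leq b_i$ (impossible here, since $e_{b_i}\leq R$ forces $\sigma_R>b_i$ whenever $\sigma_R>b_i-1$, which holds), only $j=b_i$ is in range, and there $E_{b_i}=0$ trivially.

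For (i), I will use the triangle inequality on the recursion: $\|E_{j+1}\|_\spn\leq\|E_j\|_\spn+\alpha_j\|\widehat W_{j+1}\|_\spn$. On the stated event, $\|\widehat W_{j+1}\|_\spn\leq2\|\widehat W_{j+1}\|_\infty\leq2\bar U_j(\delta')$. Summing from $b_i$ to $j-1$, and using that all increments are non-negative, bounds $\|E_j\|_\spn$ by the full block sum $\varepsilon^{\mathrm{bat}}_i(\delta')$.

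For (ii), I will decompose $E_j=\widehat R_j+\widehat U^{(i)}_j$ with $\widehat R_{b_i}=0$. Since $\widehat U^{(i)}_{j+1}=\widehat U^{(i)}_j+\alpha_j\widehat W_{j+1}$, the noise cancels, leaving
\begin{equation*}
\widehat R_{j+1}=M_j\widehat R_j+\alpha_j(\bar K_j-I)\widehat U^{(i)}_j.
\end{equation*}
As in \Cref{lem:tracking-R}, the entries of $\bar K_j\widehat U$ are convex combinations of the entries of $\widehat U$. This gives $\|(\bar K_j-I)\widehat U\|_\spn\leq2\|\widehat U\|_\spn\leq4\|\widehat U\|_\infty$. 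Summing yields $\|\widehat R_j\|_\spn\leq4\sum_{k\in\mB_i}\alpha_k\bar V_i(\delta')$. Adding $\|\widehat U^{(i)}_j\|_\spn\leq2\bar V_i(\delta')$ gives $\varepsilon^{\mathrm{mg}}_i(\delta')$.

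The prefix and $\mB_{\mathrm{res}}$ versions follow because every bound is a monotone sum over the indices actually used. The only delicate point is the range $j\leq\sigma_R$. The recursion is applied only at steps $k<\min\{b_{i+1},\sigma_R\}$, where the stopped and true noise coincide. At $j=\sigma_R$ itself, the bound still holds because it comes from steps strictly before $\sigma_R$. I do not expect a real obstacle, since the argument is a transcription of \Cref{lem:tracking-R}. The one thing to check is that the epoch update holds almost surely at all these epochs, which is exactly what \Cref{prop:ep_wellposed} provides.
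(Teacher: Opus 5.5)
Your proposal is correct and follows the paper's proof essentially line by line: the same recursion $E_{k+1}=M_kE_k+\alpha_k\widehat W_{k+1}$ obtained from the average-path identity, a direct telescoping for (i), and the decomposition $E_k=\widehat R_k+\widehat U^{(i)}_k$ taken from \Cref{lem:tracking-R} for (ii). One small slip: $e_{b_i}\leq R$ does not rule out an earlier exit $\sigma_R<b_i$, but that case is harmless because the range of admissible $j$ is then empty.
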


\begin{proof}
    For $k\in\mB_i$ with $k<\sigma_R$, $\widehat{W}_{k+1}=W_{k+1}$ and $\ell_k<+\infty$ (\Cref{prop:ep_wellposed}), so~\eqref{eq:ep_update} applies. Subtracting the virtual recursion and using the average-path identity of \Cref{lem:mixing-properties-of-regularization}.(i), i.e.\ $\mT_\Omega(Q_k)-\mT_\Omega(\tilde{Q}_k)=\bar{K}_kE_k$ with $\bar{K}_k:=K_{\bar{\pi}_{\Omega,[Q_k,\tilde{Q}_k]}}$,
    \begin{equation}
        \label{eq:ep_err_rec}
        E_{k+1}=M_kE_k+\alpha_k\widehat{W}_{k+1},\qquad M_k:=(1-\alpha_k)I+\alpha_k\bar{K}_k,
    \end{equation}
    with $M_k$ row-stochastic, hence span non-expansive (\Cref{lem:row-stochastic-span}).

    \emph{(i)}. $\|E_{k+1}\|_\spn\leq\|E_k\|_\spn+2\alpha_k\bar{U}_k(\delta')$ by~\eqref{eq:ep_noise_bound}, and an immediate induction from $E_{b_i}=0$ concludes.

    \emph{(ii)}. The recursion~\eqref{eq:ep_err_rec} has the same form as~\eqref{eq:sync-err-rec-R}, so the decomposition $E_k=\widehat{R}_k+\widehat{U}^{(i)}_k$ in the proof of \Cref{lem:tracking-R}, with $\widehat{U}^{(i)}$ and $\bar{V}_i(\delta')$ in place of $\widehat{U}^{(\mathcal{K})}$ and $\bar{U}_\mathcal{K}(\delta')$, gives $\|E_j\|_\spn\leq\varepsilon^{\mathrm{mg}}_i(\delta')$.
\end{proof}

\subsubsection{No escape before the horizon}
\label{subsubsec:ep_no_escape}

Let $\delta\in(0,1)$, $T\geq 0$ and $\delta':=\frac{\delta}{3(J_{\max}(T)+1)}$. We set
\begin{equation}
    \label{eq:ep_eps_i}
    \varepsilon_i(\delta'):=\min\left\{\varepsilon^{\mathrm{bat}}_i(\delta'),\varepsilon^{\mathrm{mg}}_i(\delta')\right\},\qquad
    \bar{\varepsilon}_R(\delta):=\max_{0\leq i\leq I_{\max}(T)}\varepsilon_i(\delta'),
\end{equation}
and write $\varepsilon_{\mathrm{res}}(\delta')$ for the same quantity with $\mB_i$ replaced by $\mB_{\mathrm{res}}$. Both bounds of \Cref{lem:ep_tracking} are non-decreasing when a prefix is extended, so $\varepsilon_{\mathrm{res}}(\delta')\leq\varepsilon_{I(T)}(\delta')\leq\bar{\varepsilon}_R(\delta)$.

\begin{lemma}[Good event]
    \label{lem:ep_good_event}
    The event
    \begin{align}
        \mathcal{E}_R(\delta)&:=\bigcap_{j<J_{\max}(T)}\left\{\left\|\widehat{W}_{j+1}\right\|_\infty\leq\bar{U}_j(\delta')\right\}
        \cap\bigcap_{i\leq I_{\max}(T)}\left\{\max_{b_i\leq j\leq b_{i+1}}\left\|\widehat{U}^{(i)}_j\right\|_\infty\leq\bar{V}_i(\delta')\right\}\notag\\
        &\qquad\cap\bigcap_{j<J_{\max}(T)}\left(\{j\geq\sigma_R\}\cup\left\{\ell_j\leq L_j(\delta')\right\}\right)
        \label{eq:ep_good_event}
    \end{align}
    satisfies $\bP(\mathcal{E}_R(\delta))\geq1-\delta$.
\end{lemma}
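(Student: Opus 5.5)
The plan is a union bound over the three families of events in~\eqref{eq:ep_good_event}, splitting $\delta$ into three parts of $\delta/3$ each. Recall $\delta'=\delta/(3(J_{\max}(T)+1))$, and the index ranges $j<J_{\max}(T)$ and $i\leq I_{\max}(T)$ are deterministic by \Cref{lem:ep_counts}.(i). Because of this, $\delta'$ is fixed in advance and no random index set enters the union bound (\Cref{rem:ep_counts_comment}).

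\emph{First family.} For each $j<J_{\max}(T)$, \Cref{lem:ep_noise_epoch} gives $\bP(\|\widehat W_{j+1}\|_\infty>\bar U_j(\delta'))\leq\delta'$. Summing over at most $J_{\max}(T)$ indices gives a failure probability of at most $J_{\max}(T)\delta'\leq\delta/3$.

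\emph{Second family.} For each $i\leq I_{\max}(T)$, \Cref{lem:ep_block_martingale} bounds the failure of the block-martingale event by $\delta'$. The number of indices is $I_{\max}(T)+1\leq J_{\max}(T)+1$, since $I_{\max}=\lfloor J_{\max}/m\rfloor$. The total is therefore at most $(J_{\max}(T)+1)\delta'=\delta/3$.

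One point needs checking here. For $i$ such that $b_{i+1}$ exceeds the epochs actually run, the block martingale is still well defined: the stopped noise is defined for every epoch index by \Cref{prop:ep_wellposed} and the extended trajectory. So the lemma applies verbatim, with deterministic block boundaries.

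\emph{Third family.} For each $j<J_{\max}(T)$, the complement of $\{j\geq\sigma_R\}\cup\{\ell_j\leq L_j(\delta')\}$ is $\{j<\sigma_R\}\cap\{\ell_j>L_j(\delta')\}$. By the last claim of \Cref{lem:ep_cover_time}, its conditional probability given $\mG_j$ is at most $\delta'\indicator_{\{j<\sigma_R\}}\leq\delta'$. Integrating gives an unconditional bound of $\delta'$. Summing over $j$ gives at most $\delta/3$.

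Adding the three contributions yields $\bP(\mathcal E_R(\delta)^c)\leq\delta$.

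The main subtlety I expect is the measurability and applicability of the concentration lemmas on the full deterministic ranges, rather than on the random number of completed epochs. Epochs beyond $J(T)$ or beyond $\sigma_R$ must still have well-defined noise and lengths. This is handled by \Cref{conv:ep_extended} and \Cref{prop:ep_wellposed}, together with the stopping indicator in $\widehat W$. The stopping indicator makes the ranges of the increments bounded by $\bar w(R)$ regardless of whether the iterates have escaped the ball.
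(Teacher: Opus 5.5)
Your proposal is correct and is the paper's proof: a union bound over the at most $J_{\max}(T)$, $I_{\max}(T)+1\leq J_{\max}(T)+1$ and $J_{\max}(T)$ events of the three families. Each event fails with probability at most $\delta'$, by \Cref{lem:ep_noise_epoch,lem:ep_block_martingale} and by \Cref{lem:ep_cover_time} applied conditionally on $\mG_j$ and then integrated. Splitting $\delta$ into three budgets of $\delta/3$, rather than counting $3(J_{\max}(T)+1)$ events in total, is only a bookkeeping difference.
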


\begin{proof}
    The three families contain at most $J_{\max}(T)$, $I_{\max}(T)+1\leq J_{\max}(T)+1$ and $J_{\max}(T)$ events, hence at most $3(J_{\max}(T)+1)$ in total, each of failure probability at most $\delta'$ by \Cref{lem:ep_noise_epoch,lem:ep_block_martingale} and \Cref{lem:ep_cover_time}, the latter applied conditionally on $\mG_j$ and integrated.
\end{proof}

\begin{proposition}[No escape]
    \label{prop:ep_no_escape}
    Let $R\geq e_0$ satisfy the \emph{radius admissibility condition}
    \begin{equation}
        \label{eq:ep_R_admissible}
        \bar{\varepsilon}_R(\delta)\leq\frac{\bar{\beta}(R)}{\bar{\beta}(R)+1}\left(R-e_0\right).
    \end{equation}
    Then, on $\mathcal{E}_R(\delta)$, $\sigma_R>J(T)$; in particular $\sup_{0\leq j\leq J(T)}e_j\leq R$ and $\ell_j\leq L_j(\delta')$ for every $j<J(T)$, and for every $0\leq i\leq I(T)$,
    \begin{equation}
        \label{eq:ep_block_recursion}
        e_{b_i}\leq(1-\bar{\beta}(R))^ie_0+\sum_{\ell=0}^{i-1}(1-\bar{\beta}(R))^{i-1-\ell}\varepsilon_\ell(\delta').
    \end{equation}
\end{proposition}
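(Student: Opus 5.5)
The plan is to mirror the proof of \Cref{prop:no-escape} on the epoch time scale, working throughout on the event $\mathcal{E}_R(\delta)$ of \Cref{lem:ep_good_event}. First, I will record the margin consequence of~\eqref{eq:ep_block_recursion}. Bounding the geometric sum by $1/\bar\beta(R)$ and each $\varepsilon_\ell(\delta')$ by $\bar\varepsilon_R(\delta)$ gives $e_{b_i}\leq e_0+\bar\varepsilon_R(\delta)/\bar\beta(R)$, and the admissibility condition~\eqref{eq:ep_R_admissible} then yields $e_{b_i}+\bar\varepsilon_R(\delta)\leq R$. Here I use that $\varepsilon_\ell(\delta')\leq\bar\varepsilon_R(\delta)$ for every $\ell\leq I(T)\leq I_{\max}(T)$, the latter by \Cref{lem:ep_counts}.(i), and also for the residual block.

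Next, I will prove by induction on $i\in\llbracket0,I(T)\rrbracket$ that $\sigma_R>b_i$ and~\eqref{eq:ep_block_recursion} holds at rank $i$. The base case uses $\sigma_R\geq1$ (\Cref{rem:ep_exit_time}).

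For the inductive step, the margin gives $e_{b_i}\leq R$, so \Cref{lem:ep_virtual_flow} applies to block $\mB_i$. I suppose for contradiction that $\sigma_R\in\llbracket b_i+1,b_{i+1}\rrbracket$. Since the good event contains both the batch family and the block-martingale family of noise bounds, both parts of \Cref{lem:ep_tracking} hold at $j=\sigma_R$. Hence $\|E_{\sigma_R}\|_\spn\leq\varepsilon_i(\delta')$, taking the minimum of the two. Combined with \Cref{lem:ep_virtual_flow}.(i) and the margin, this gives $e_{\sigma_R}\leq e_{b_i}+\bar\varepsilon_R(\delta)\leq R$, which contradicts the definition~\eqref{eq:ep_exit_time}. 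So $\sigma_R>b_{i+1}$. Tracking at $j=b_{i+1}$ together with \Cref{lem:ep_virtual_flow}.(ii) then gives $e_{b_{i+1}}\leq(1-\bar\beta(R))e_{b_i}+\varepsilon_i(\delta')$, which propagates~\eqref{eq:ep_block_recursion} to rank $i+1$.

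Finally, for the residual block $\mB_{\mathrm{res}}$, the same contradiction argument applies with $\varepsilon_{\mathrm{res}}(\delta')\leq\bar\varepsilon_R(\delta)$, using the prefix versions of \Cref{lem:ep_tracking}. This shows $\sigma_R>J(T)$, and hence $\sup_{j\leq J(T)}e_j\leq R$.

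The length claim then follows from the third family in~\eqref{eq:ep_good_event}. For $j<J(T)$ we have $j<\sigma_R$, and $j<J(T)\leq J_{\max}(T)$ by \Cref{lem:ep_counts}.(i), so $\ell_j\leq L_j(\delta')$.

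The main obstacle I expect is bookkeeping rather than estimates. The good event is indexed by the deterministic bounds $J_{\max}(T)$ and $I_{\max}(T)$, while the induction runs over the random counts $J(T)$ and $I(T)$, so I must check that every noise bound invoked, including those on the residual prefix, lies inside the intersection. A further point is that the tracking lemma needs $\ell_k<\infty$, which holds almost surely by \Cref{prop:ep_wellposed}, so no circularity with the length bounds arises (\Cref{rem:ep_counts_comment}).
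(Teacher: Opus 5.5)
Your proposal is correct and follows the paper's proof. That proof is the synchronous argument of \Cref{prop:no-escape} transplanted to the epoch scale: the margin from the block recursion and~\eqref{eq:ep_R_admissible}, induction over blocks with a contradiction at $\sigma_R$ via \Cref{lem:ep_virtual_flow,lem:ep_tracking}, the residual block, and the length bound read off the third family of~\eqref{eq:ep_good_event}. The bookkeeping you flag closes as you suggest, since every epoch invoked satisfies $k<J(T)\leq J_{\max}(T)$ and every block index satisfies $i\leq I(T)\leq I_{\max}(T)$; moreover, epochs $k<J(T)$ are finite by the definition of $J(T)$.
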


\begin{proof}
    Work on $\mathcal{E}_R(\delta)$. The proof of \Cref{prop:no-escape} applies verbatim on the epoch scale, with the exit time~\eqref{eq:ep_exit_time}, \Cref{rem:ep_exit_time} for $\sigma_R\geq1$, \Cref{lem:ep_virtual_flow,lem:ep_tracking} in place of \Cref{lem:virtual-flow-R,lem:tracking-R}, and $\varepsilon_i(\delta')$ and $\varepsilon_{\mathrm{res}}(\delta')\leq\bar{\varepsilon}_R(\delta)$ in place of $\varepsilon_{\mB_i}(\delta')$ and $\varepsilon_{\mB_{\mathrm{res}}}(\delta')$; it gives $\sigma_R>J(T)$ and~\eqref{eq:ep_block_recursion}. The length bound is then the third family of~\eqref{eq:ep_good_event}, since every $j<J(T)$ satisfies $j<\sigma_R$ and $j<J_{\max}(T)$.
\end{proof}

\subsubsection{Main result}
\label{subsubsec:ep_main}

\begin{theorem}[Finite-time bound for \Cref{alg:epoch_regularized_q_learning}]
    \label{thm:ep_main}
    Let $\delta\in(0,1)$ and $T\geq 0$. Under \Cref{assumption:lr}, let $\kappa_0>0$ be arbitrary and let $R\geq e_0$ satisfy the admissibility condition~\eqref{eq:ep_R_admissible}. Then, with probability at least $1-\delta$, $\sup_{0\leq j\leq J(T)}e_j\leq R$ and
    \begin{equation}
        \label{eq:ep_main_bound}
        e_{J(T)}\leq(1-\bar{\beta}(R))^{I(T)}e_0
        +\sum_{i=0}^{I(T)-1}(1-\bar{\beta}(R))^{I(T)-1-i}\varepsilon_i(\delta')+\varepsilon_{\mathrm{res}}(\delta'),
    \end{equation}
    where $J(T)$ and $I(T)$ are bounded below in \Cref{lem:ep_counts}.(iii). Explicitly, with $\iota_1:=\log\left(\frac{6|\mS||\mA|(J_{\max}(T)+1)}{\delta}\right)$,
    \begin{equation}
        \label{eq:ep_eps_explicit}
        \varepsilon_i(\delta')=\bar{w}(R)\sqrt{2\iota_1}\,\min\left\{\ \sum_{j\in\mB_i}\frac{\alpha_j}{\sqrt{B_j}},\ \ \left(1+2\sum_{j\in\mB_i}\alpha_j\right)\sqrt{\sum_{j\in\mB_i}\frac{\alpha_j^2}{B_j}}\ \right\}.
    \end{equation}
\end{theorem}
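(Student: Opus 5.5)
The plan mirrors the proof of \Cref{thm:sync-stoch-R-app}, now on the epoch scale. I work on the good event $\mathcal{E}_R(\delta)$ of \Cref{lem:ep_good_event}, which has probability at least $1-\delta$. By \Cref{prop:ep_no_escape} and the admissibility condition~\eqref{eq:ep_R_admissible}, on this event $\sigma_R>J(T)$. This gives at once the containment $\sup_{j\leq J(T)}e_j\leq R$, the length bounds $\ell_j\leq L_j(\delta')$ for $j<J(T)$ (so that \Cref{lem:ep_counts}.(iii) applies to lower-bound $J(T)$ and $I(T)$), and the block recursion~\eqref{eq:ep_block_recursion} up to $i=I(T)$.

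To pass from $e_{b_{I(T)}}$ to $e_{J(T)}$, I treat the residual interval $\mB_{\mathrm{res}}=\llbracket b_{I(T)},J(T)-1\rrbracket$ as a prefix of $\mB_{I(T)}$. I start the virtual sequence at $Q_{b_{I(T)}}$. \Cref{lem:ep_virtual_flow}.(i) gives $\|\tilde{Q}_{J(T)}-Q^*_\Omega\|_\spn\leq e_{b_{I(T)}}$, since $e_{b_{I(T)}}\leq R$. \Cref{lem:ep_tracking}, applied on the prefix at $j=J(T)\leq\sigma_R$, bounds the tracking error by the smaller of the batch bound and the block-martingale bound, i.e.\ by $\varepsilon_{\mathrm{res}}(\delta')$. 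Both relevant brace events belong to $\mathcal{E}_R(\delta)$: the prefix of the block martingale is controlled because its maximum over $\llbracket b_{I(T)},b_{I(T)+1}\rrbracket$ is. The triangle inequality then gives $e_{J(T)}\leq e_{b_{I(T)}}+\varepsilon_{\mathrm{res}}(\delta')$, and injecting~\eqref{eq:ep_block_recursion} at $i=I(T)$ yields~\eqref{eq:ep_main_bound}.

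It remains to make~\eqref{eq:ep_eps_explicit} explicit. With $\delta'=\delta/(3(J_{\max}(T)+1))$ we have $\log(2|\mS||\mA|/\delta')=\iota_1$, so $\bar{U}_j(\delta')=\bar{w}(R)\sqrt{\iota_1/(2B_j)}$. This gives $\varepsilon^{\mathrm{bat}}_i=2\sum_j\alpha_j\bar{U}_j=\bar{w}(R)\sqrt{2\iota_1}\sum_j\alpha_j/\sqrt{B_j}$. Likewise $\bar{V}_i=\bar{w}(R)\sqrt{\tfrac12\iota_1\sum_j\alpha_j^2/B_j}$, so $\varepsilon^{\mathrm{mg}}_i=2(1+2\sum_j\alpha_j)\bar V_i$ takes the second form. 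Taking the minimum gives the stated expression.

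The only delicate point is bookkeeping rather than analysis. The union bound must use the deterministic counts $J_{\max}(T)$ and $I_{\max}(T)$ of \Cref{lem:ep_counts}.(i), since $J(T)$ and $I(T)$ are random, and one must check that the residual prefix is covered by the events already included in $\mathcal{E}_R(\delta)$. One must also check that \Cref{lem:ep_tracking} is valid up to $j=J(T)$, which uses $J(T)<\sigma_R$ together with the almost sure well-posedness of \Cref{prop:ep_wellposed}.
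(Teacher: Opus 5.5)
Your proposal follows the paper's proof step for step. You work on $\mathcal{E}_R(\delta)$, take the containment and the block recursion~\eqref{eq:ep_block_recursion} from \Cref{prop:ep_no_escape}, close the residual interval with \Cref{lem:ep_virtual_flow}.(i) and \Cref{lem:ep_tracking} on $\mB_{\mathrm{res}}$, and read off~\eqref{eq:ep_eps_explicit} from $\log(2|\mS||\mA|/\delta')=\iota_1$.

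One bookkeeping caveat, which the paper's own argument shares, concerns the residual term. Since $\mB_{\mathrm{res}}$ is random, controlling the prefix martingale through the full-block event in $\mathcal{E}_R(\delta)$, as you propose, only gives the martingale branch with $\bar V_{I(T)}(\delta')$, not its $\mB_{\mathrm{res}}$-version. What is rigorously proved is therefore the residual bound $\min\{\varepsilon^{\mathrm{bat}}_{\mathrm{res}}(\delta'),\,2(1+2\sum_{j\in\mB_{\mathrm{res}}}\alpha_j)\bar V_{I(T)}(\delta')\}\leq\varepsilon_{I(T)}(\delta')$. To recover $\varepsilon_{\mathrm{res}}(\delta')$ exactly, add a union bound over the at most $J_{\max}(T)+1$ possible endpoints, at the cost of a constant in $\delta'$. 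This leaves the batch branch, \Cref{prop:ep_no_escape}, and all downstream uses unaffected.
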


\begin{proof}
    Work on $\mathcal{E}_R(\delta)$, of probability at least $1-\delta$ (\Cref{lem:ep_good_event}). As in the proof of \Cref{thm:sync-stoch-R-app}, \Cref{prop:ep_no_escape}, \Cref{lem:ep_tracking} on $\mB_{\mathrm{res}}$ and \Cref{lem:ep_virtual_flow}.(i) give the containment and $e_{J(T)}\leq e_{b_{I(T)}}+\varepsilon_{\mathrm{res}}(\delta')$, and~\eqref{eq:ep_block_recursion} at rank $I(T)$ yields~\eqref{eq:ep_main_bound}. For~\eqref{eq:ep_eps_explicit}, $2\bar{U}_j(\delta')=\bar{w}(R)B_j^{-1/2}\sqrt{2\iota_1}$ and $2\bar{V}_i(\delta')=\bar{w}(R)\sqrt{2\iota_1\sum_{j\in\mB_i}\alpha_j^2/B_j}$.
\end{proof}

\begin{remark}[Recovering \Cref{thm:sync-stoch-R-app}]
    \label{rem:ep_recovers_sync}
    At $B_j\equiv1$, the block-martingale bound in~\eqref{eq:ep_eps_explicit} is exactly the quantity $\varepsilon_{\mB_i}(\delta')$ of~\eqref{eq:eps-K}, with a different value of $\delta'$. \Cref{thm:ep_main,thm:sync-stoch-R-app} are then the same statement up to this value; the only difference is the cost model, $|\mS||\mA|$ sampled transitions per iteration there against $L_j(\delta')$ per epoch here.
\end{remark}

\subsubsection{The case \texorpdfstring{$\alpha_j\equiv1$}{αj ≡ 1} and proof of \pdfref{Theorem}{thm:epoch-main-body}}
\label{subsubsec:ep_corners}

\begin{corollary}[$\alpha_j\equiv1$: empirical $\Omega$-regularized value iteration]
    \label{cor:ep_alpha_one}
    Take $\alpha_j\equiv1$, $B_j\equiv B$, and define the blocks directly as $\mB_i:=\llbracket im,(i+1)m-1\rrbracket$, so that $I(T)=\lfloor J(T)/m\rfloor$. Then $F_j=\mT_\Omega$, the block map is $\mT^m_\Omega$, and \Cref{lem:ep_virtual_flow} holds with $\bar{\beta}(R)$ replaced by $\tilde\beta_m(A_R)=\beta_m\gamma_\Omega(A_R)^{m-1}$, with no sub-block budget $\kappa_0$ and no probability $p_m$. With $\bar{U}:=\bar{w}(R)\sqrt{\iota_1/(2B)}$, the batch bound gives $\varepsilon_i(\delta')\leq2m\bar{U}$ and $\varepsilon_{\mathrm{res}}(\delta')\leq2m\bar{U}$, condition~\eqref{eq:ep_R_admissible} becomes
    \begin{equation}
        \label{eq:ep_admissible_alpha_one}
        2m\bar{U}\leq\frac{\tilde{\beta}_m(A_R)}{1+\tilde{\beta}_m(A_R)}\left(R-e_0\right),
    \end{equation}
    and with probability at least $1-\delta$,
    \begin{equation}
        \label{eq:ep_bound_alpha_one}
        \sup_{0\leq j\leq J(T)}e_j\leq R,\qquad
        e_{J(T)}\leq\left(1-\tilde{\beta}_m(A_R)\right)^{I(T)}e_0+\frac{(1+\tilde{\beta}_m(A_R))2m\bar{U}}{\tilde{\beta}_m(A_R)}.
    \end{equation}
\end{corollary}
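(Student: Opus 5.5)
The plan is to specialize the general machinery of \Cref{thm:ep_main} to $\alpha_j\equiv1$, replacing the sub-block construction with the plain blocks $\mB_i=\llbracket im,(i+1)m-1\rrbracket$. With $\alpha_j=1$ we have $F_j=\mT_\Omega$. The virtual sequence started at $Q_{b_i}$ is therefore $\tilde Q_{b_i+k}=\mT_\Omega^k(Q_{b_i})$, and the block map is $\mT_\Omega^m$.

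\emph{Step 1: replace \Cref{lem:ep_virtual_flow}.} Item (i) is the non-expansiveness of $\mT_\Omega$ in span (\Cref{cor:T_omega_props}). For item (ii), use the reference trajectory $Q^*_\Omega+k\rho^*_\Omega\ones$ and apply \Cref{lem:mixing-properties-of-regularization}.(iv) to the pair $(Q_{b_i},Q^*_\Omega)$. Argue as in the proof of \Cref{cor:q-iteration}: each radius $R[\mT^k_\Omega(Q_{b_i}),\mT^k_\Omega(Q^*_\Omega)]$ is at most $\|Q^*_\Omega\|_\asp+e_{b_i}\leq A_R$, because $\|\cdot\|_\asp$ is invariant under constants, $\|\cdot\|_\asp\leq\|\cdot\|_\spn$, and the errors are monotone along the exact flow. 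Since $\gamma_\Omega$ is non-increasing, this yields $\|\tilde Q_{b_{i+1}}-Q^*_\Omega\|_\spn\leq(1-\tilde\beta_m(A_R))e_{b_i}$. The contraction factor no longer involves $p_m$ or $\kappa_0$, because no Bernoulli laziness argument is needed.

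\emph{Step 2: tracking via the batch bound.} Use \Cref{lem:ep_tracking}.(i), whose proof only uses the recursion \eqref{eq:ep_err_rec} with $M_k=\bar K_k$ row-stochastic, so it holds for any block definition. With $\bar U_k(\delta')=\bar w(R)\sqrt{\iota_1/(2B)}=\bar U$, where $\iota_1$ absorbs $\log(2|\mS||\mA|/\delta')$ under $\delta'=\delta/(3(J_{\max}+1))$, we get $\varepsilon_i\leq2\sum_{k\in\mB_i}\bar U=2m\bar U$. The residual block is a prefix of at most $m$ epochs, so $\varepsilon_{\mathrm{res}}\leq2m\bar U$ as well.

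\emph{Step 3: good event and no escape.} Take the good event \eqref{eq:ep_good_event}, keeping only the batch family and the length family; its probability is still at least $1-\delta$ by \Cref{lem:ep_good_event}. Then rerun the induction of \Cref{prop:no-escape}/\Cref{prop:ep_no_escape} with $\bar\beta(R)$ replaced by $\tilde\beta_m(A_R)$ and $\bar\varepsilon_R=2m\bar U$. The admissibility condition \eqref{eq:ep_R_admissible} becomes exactly \eqref{eq:ep_admissible_alpha_one}. The induction gives $\sigma_R>J(T)$ and the block recursion. Summing the geometric series, bounded by $1/\tilde\beta_m(A_R)$, and adding the residual gives
\[
e_{J(T)}\leq(1-\tilde\beta_m)^{I(T)}e_0+\Bigl(\tfrac{1}{\tilde\beta_m}+1\Bigr)2m\bar U,
\]
which is \eqref{eq:ep_bound_alpha_one}. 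The identity $I(T)=\lfloor J(T)/m\rfloor$ is immediate from the block definition.

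The main point needing care is Step 1's radius control. In the general case it relied on \Cref{lem:frozen_flow_bounded_synchronous} along segments, whereas here I will invoke \Cref{lem:mixing-properties-of-regularization}.(iv) directly, which needs only the endpoint radii. I also need to check that the no-escape induction only used the virtual-flow lemma and the tracking bound, so that swapping the block structure is legitimate. This appears to hold, since \Cref{prop:no-escape} never uses the sub-block budget beyond those two lemmas.
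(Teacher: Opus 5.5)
Your proposal is correct and follows the paper's proof essentially step for step. It uses the exact-flow contraction via \Cref{lem:mixing-properties-of-regularization}.(iv) with endpoint radii bounded by $A_R$ as in \Cref{cor:q-iteration}, the batch tracking bound giving $2m\bar U$ per block and for the residual, and a verbatim rerun of the no-escape induction with $\tilde\beta_m(A_R)$ in place of $\bar\beta(R)$; dropping the block-martingale family from the good event is harmless, since it only removes events from the union bound.
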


\begin{proof}
    With $\alpha_j=1$, $\tilde{Q}_j=\mT^{j-b_i}_\Omega(Q_{b_i})$. By \Cref{cor:T_omega_props}, $\mT^k_\Omega(Q^*_\Omega)=Q^*_\Omega+k\rho^*_\Omega\ones$, of action-span $\|Q^*_\Omega\|_\asp$. If $e_{b_i}\leq R$, span non-expansiveness gives $\|\tilde{Q}_j-Q^*_\Omega\|_\spn\leq e_{b_i}\leq R$, which is \Cref{lem:ep_virtual_flow}.(i), hence $\|\tilde{Q}_j\|_\asp\leq A_R$, i.e.\ $R[\mT_\Omega^k(Q_{b_i}),\mT_\Omega^k(Q^*_\Omega)]\leq A_R$ for $k\in\llbracket0,m-1\rrbracket$ (\Cref{def:mixing_objects}.(ii)). Since $\gamma_\Omega$ is non-increasing, \Cref{lem:mixing-properties-of-regularization}.(iv) gives
    \begin{equation*}
        \left\|\mT^m_\Omega(Q_{b_i})-\mT^m_\Omega(Q^*_\Omega)\right\|_\spn\leq\left(1-\beta_m\gamma_\Omega(A_R)^{m-1}\right)e_{b_i},
    \end{equation*}
    which is \Cref{lem:ep_virtual_flow}.(ii) with $\tilde{\beta}_m(A_R)$ in place of $\bar{\beta}(R)$; neither \Cref{lem:pickup_synchronous} nor the probabilistic argument of \Cref{lem:jacobian_bound_synchronous} is needed. \Cref{prop:ep_no_escape} and \Cref{thm:ep_main} then apply verbatim with $\tilde{\beta}_m(A_R)$ in place of $\bar{\beta}(R)$, the geometric sum being bounded by $1/\tilde{\beta}_m(A_R)$ and the residual block adding one copy of $2m\bar{U}$.
\end{proof}

\proofanchor{epoch-main-body}
\epochmainbody*

\begin{proof}[Proof of \Cref{thm:epoch-main-body}]
    We apply \Cref{cor:ep_alpha_one} with $R=2e_0$. Then $A_R=\|Q^*_\Omega\|_\asp+2e_0$, so that $\tilde{\beta}_m(A_R)=\tilde\beta$, $\bar{w}(R)=\bar w$ and $q_R=\bar\gamma\tilde\beta\eta_{\min}$; moreover $\iota(\delta')\leq\iota_1\leq\iota$ since $J_{\max}(T)\leq T$.

    \emph{Admissibility.} We have $2m\bar{U}\leq\sqrt2\,m\bar w\sqrt{\iota/B}$ and $1+\tilde\beta\leq2$, so~\eqref{eq:ep_admissible_alpha_one} holds as soon as $\sqrt2\,m\bar w\sqrt{\iota/B}\leq\tilde\beta e_0/2$, i.e.\ $B\geq B(e_0)$.

    \emph{Bound.} With probability at least $1-\delta$, \Cref{cor:ep_alpha_one} gives $\sup_{j\leq J(T)}e_j\leq2e_0$, and its residual term is at most $\frac{2\cdot\sqrt2\,m\bar w\sqrt{\iota/B}}{\tilde\beta}$, which gives~\eqref{eq:epoch-main-bound}. On the same event, \Cref{prop:ep_no_escape} gives $\ell_j\leq L_j(\delta')\leq L$ for every $j<J(T)$, since $\iota(\delta')\leq\iota$, so \Cref{lem:ep_counts}.(iii) gives $J(T)\geq\lfloor T/L\rfloor$, and $I(T)=\lfloor J(T)/m\rfloor$.

    \emph{Sample complexity.} For $\varepsilon\in(0,e_0)$, the choice $B=\lceil B(\varepsilon/2)\rceil\geq B(e_0)$ makes the residual term at most $\varepsilon/2$. For $T\geq mL\lceil\tilde\beta^{-1}\log(2e_0/\varepsilon)\rceil$, which is possible since $\iota$ depends on $T$ only logarithmically, we get $I(T)\geq\tilde\beta^{-1}\log(2e_0/\varepsilon)$, and $1-u\leq e^{-u}$ makes the contraction term at most $\varepsilon/2$. Since $L=\tilde\Theta\big(mB/(\bar\gamma\tilde\beta\eta_{\min})\big)$ and $B=\tilde\Theta\big(m^2\bar w^2/(\tilde\beta\varepsilon)^2\big)$, this gives $\tilde O\big(m^4\bar w^2/(\tilde\beta^3\cdot\bar\gamma\tilde\beta\eta_{\min}\cdot\varepsilon^2)\big)$ sampled transitions.
\end{proof}

\begin{corollary}[Epoch $\Omega$-regularized $Q$-learning for $m=1$]
    \label{cor:ep-m-1}
    Let $m=1$. Then $\tilde\beta=\beta_1$, so that in \Cref{thm:epoch-main-body} the domination profile only enters through the coverage factor $\bar\gamma\beta_1\eta_{\min}$ of the epoch length $L$, and the sample complexity is $\tilde O\big(\bar w^2/(\beta_1^3\cdot\bar\gamma\beta_1\eta_{\min}\cdot\varepsilon^2)\big)$.
\end{corollary}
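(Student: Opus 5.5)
The corollary follows by specializing \Cref{thm:epoch-main-body} to $m=1$ and reading off how each constant simplifies. The proof has three steps.

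\textbf{Step 1: the contraction rate.} With $m=1$ we have $\tilde\beta=\beta_1\bar\gamma^{\,0}=\beta_1$, so the contraction rate no longer depends on the radius. At the level of \Cref{cor:ep_alpha_one}, the block is a single epoch, the block map is $\mT_\Omega$, and the virtual-flow contraction is \Cref{lem:mixing-properties-of-regularization}.(iv) with an empty product. This gives the factor $1-\beta_1$ for every radius $R$, so the admissibility condition~\eqref{eq:ep_admissible_alpha_one} and the bound~\eqref{eq:ep_bound_alpha_one} hold with $\tilde\beta_m(A_R)=\beta_1$.

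\textbf{Step 2: where $\bar\gamma$ survives.} The domination profile still enters through the coverage constant of \Cref{lem:ep_cover_time}. There $q(u)=\beta_1\gamma_\Omega(u)\eta_{\min}$, because the single rollout step uses the greedy policy. Hence $q_{2e_0}=\bar\gamma\beta_1\eta_{\min}$, and the epoch length becomes $L=\lceil(\sqrt B+\sqrt\iota)^2/(\bar\gamma\beta_1\eta_{\min})\rceil+1$. This is the only remaining occurrence of $\bar\gamma$.

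\textbf{Step 3: counting.} I take $B=\lceil B(\varepsilon/2)\rceil=\tilde\Theta(\bar w^2/(\beta_1^2\varepsilon^2))$ and $T\geq L\lceil\beta_1^{-1}\log(2e_0/\varepsilon)\rceil$. Substituting into the last paragraph of the proof of \Cref{thm:epoch-main-body}, the transition count is
\[
L\cdot\beta_1^{-1}\log(2e_0/\varepsilon)=\tilde O\big(B/(\bar\gamma\beta_1\eta_{\min})\cdot\beta_1^{-1}\big)=\tilde O\big(\bar w^2/(\beta_1^3\,\bar\gamma\beta_1\eta_{\min}\,\varepsilon^2)\big),
\]
which matches the stated complexity with $m=1$ and $\tilde\beta=\beta_1$.

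\textbf{Main obstacle.} The only point needing care is Step 2: one must confirm that the rollout over $m=1$ steps really involves one greedy factor $\gamma_\Omega$ (the terminal action) and not zero. By \Cref{lem:mixing-properties-of-regularization}.(iii), the exponent in the coverage bound is $m$, not $m-1$. Exploration therefore keeps the factor $\bar\gamma$ even though contraction loses it, and so \eqref{eq:P1} must remain an assumption here, unlike in \Cref{cor:sync-m-1}.
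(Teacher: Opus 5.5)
Your proposal is correct and follows the paper's argument. You specialize \Cref{thm:epoch-main-body} to $m=1$, note that $\tilde\beta_1(A_R)=\beta_1$ does not depend on the radius while the coverage constant $q_R=\beta_1\gamma_\Omega(A_R)\eta_{\min}$ keeps one factor of the profile, and read off the count; your explicit bookkeeping of $B$, $L$ and $T$ only spells out what the paper's three-line proof leaves implicit.
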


\begin{proof}
    For $m=1$, $\tilde\beta_1(A_R)=\beta_1$ for every $R$ (\Cref{def:localized-radii}), so the contraction constants of the proof of \Cref{thm:epoch-main-body} do not depend on the radius, whereas the coverage constant $q_R=\gamma_\Omega(A_R)\,\tilde\beta_1(A_R)\,\eta_{\min}$ of \Cref{lem:ep_cover_time} does. At $R=2e_0$, it equals $\bar\gamma\beta_1\eta_{\min}$, and the claim is \Cref{thm:epoch-main-body} with $m=1$.
\end{proof}

\begin{remark}[Acting with $\mu$ or with the greedy policies]
    \label{rem:ep_behavior_mu}
    The update of \Cref{alg:epoch_regularized_q_learning} is off-policy: its target $r(s,a)+\Omega^*(Q_j)(s')$ is an expectation under the greedy policy and does not sample the next action, so the noise analysis only uses that the successor of a visited pair $(s,a)$ is distributed as $p(\cdot|s,a)$ (\Cref{lem:ep_noise}.(ii)), whatever the law of the action. Running the algorithm with actions $a_n\sim\mu(\cdot|s_n)$ is therefore covered by the same analysis, with $\gamma_\Omega$ replaced by $1$ in \Cref{lem:ep_cover_time}: $q_R$ becomes $\beta_m\eta_{\min}$, $L$ becomes $m\lceil(\sqrt B+\sqrt\iota)^2/(\beta_m\eta_{\min})\rceil+1$ in \Cref{thm:epoch-main-body}, and the sample complexity becomes $\tilde O\big(m^4\bar w^2/(\tilde\beta^3\cdot\beta_m\eta_{\min}\cdot\varepsilon^2)\big)$, the factor $\tilde\beta^3$ of the contraction being unchanged. Acting with the greedy policies thus costs at most a factor $\bar\gamma^m$ in coverage.

    Our criterion, uniform accuracy on all pairs of a tabular MDP, favors a fixed well-mixing behavior, and cannot credit what acting with the learned policy brings: the reward collected during learning, data concentrated where a good policy goes and, beyond the tabular setting, the stability of on-policy data distributions under function approximation \citep[\S11.3]{sutton2018reinforcement}. What the analysis does show is that regularization makes this choice safe: with a hard maximum, the greedy policy is deterministic and needs an added exploration mechanism, whereas under~\eqref{eq:P1} it explores by itself, at a quantified rate.
\end{remark}

\section{Planning: Setting and Main Results}
\label{app:planning_results}
 
This section sets up the objects of \Cref{subsec:planning_results} and states the planning results, which are proved in \Cref{app:planning}. From a reset distribution $\xi\in\Delta(\mS)$, planning computes the regularized $H$-step solution bootstrapped at depth $H$ by a boundary value $Q_B\in\bR^{\mS'\times\mA}$, e.g.\ a value estimate or a heuristic, and uses its root level to approximate $Q^*_\Omega$. It is learning localized in space, since only the states reachable from $\xi$ are visited, and in time, since only the levels closest to the root matter.
 
\paragraph{Depth-dependent objects.}
Let $F(X):=\{s'\in\mS\mid\exists(s,a)\in X\times\mA,\ p(s'|s,a)>0\}$ for $X\subseteq\mS$. The \emph{reachable layers} are $\mS_0:=\mathrm{supp}(\xi)$ and $\mS_{h+1}:=F(\mS_h)$; we write $\mX_h:=\mS_h\times\mA$, $\mS':=\bigcup_{h\geq0}\mS_h$ for the reachable set, which satisfies $F(\mS')\subseteq\mS'$, so that the MDP restricted to $\mS'$ is well defined and $\mT_\Omega$ acts on $\bR^{\mS'\times\mA}$, and $\bar S_H:=\max_{h\leq H}|\mS_h|$. A \emph{profile} is a family $(Q_h)_{h=0}^H$ of depth-dependent $Q$-functions $Q_h\in\bR^{\mX_h}$. Since the successors of $\mS_h$ lie in $\mS_{h+1}$, the \emph{depth-dependent} operator $\mT_{\Omega,h}:\bR^{\mX_{h+1}}\to\bR^{\mX_h}$, $\mT_{\Omega,h}(Q)(s,a):=r(s,a)+\sum_{s'\in\mS_{h+1}}p(s'|s,a)\,\Omega^*(Q)(s')$, is well defined, and the \emph{planning solution} is the unique profile defined backward by $Q^*_{\Omega,H}:=Q_B|_{\mX_H}$ and $Q^*_{\Omega,h}:=\mT_{\Omega,h}(Q^*_{\Omega,h+1})$. It solves this triangular system exactly, whereas $Q^*_\Omega$ only solves $\mT_\Omega(Q^*_\Omega)=Q^*_\Omega+\rho^*_\Omega\ones$, so the planners need no centering projection.
 
\paragraph{Mixing and target.}
Throughout the planning appendices, \Cref{assumption:convex-analytic-regularity,assumption:reference_policy,assumption:pl_local_mixing} and~\eqref{eq:P1} hold, \Cref{assumption:m-step-mixing} is not assumed, and $\mu$, $m$, $\beta_m$, $\nu$ denote the objects of \Cref{assumption:pl_local_mixing}, with respect to which \Cref{assumption:reference_policy} and~\eqref{eq:P1} are understood; we write $\eta(s,a):=\nu(s)\mu(a|s)$ and $\eta_{\min}:=\min_{\mS_\infty\times\mA}\eta$.
 
\begin{remark}[Reading \Cref{assumption:pl_local_mixing}]
    \label{rem:pl_local_mixing}
    \leavevmode
    \begin{enumerate}[label=(\roman*)]
        \item \emph{The core $\mS_\infty$ is closed}: by (i), $\mS_\infty=\mS_{h_1+1}=F(\mS_{h_1})=F(\mS_\infty)$. On the MDP restricted to $\mS_\infty$, (ii) is \Cref{assumption:m-step-mixing} with a full-support $\nu$, so \Cref{thm:regularized_fixed_point} gives $(Q^*_\Omega,\rho^*_\Omega)$ on $\mS_\infty\times\mA$, with $Q^*_\Omega$ unique up to an additive constant under~\eqref{eq:P1}.
        \item \emph{Global case}: if \Cref{assumption:m-step-mixing} holds, then $K^m_\mu(s'|s,a)>0$ for all $(s,a,s')$, so $\mS_h=\mS$ for $h\geq m$ and \Cref{assumption:pl_local_mixing} holds with $\mS_\infty=\mS$ and $h_1=m$. The spatial localization is then vacuous, and only the temporal one remains.
        \item \emph{Transient layers}: nothing is assumed on $\mS_0,\dots,\mS_{h_1-1}$, which enter the analysis only through $\bar S_H$ and the coverage of the shallow levels (\Cref{lem:pl_coverage}).
        \item \emph{When (i) holds}: $\mS_h$ is the set of states reachable in exactly $h$ steps from $\mathrm{supp}(\xi)$ in the union graph $G$ (\Cref{def:union_graph}), so $(\mS_h)_h$ is eventually periodic, and (i) holds for some finite $h_1$ as soon as every strongly connected component of $G$ that contains a cycle and is reachable from $\mathrm{supp}(\xi)$ is aperiodic, e.g.\ contains a state $s$ with $p(s|s,a)>0$ for some $a$: closed walks in such a component have all large enough lengths, so a state reached at arbitrarily large depths is reached at every large enough depth.
    \end{enumerate}
\end{remark}
 
By (i), $Q^*_\Omega$ is defined on $\mS_\infty\times\mA$; it extends uniquely to $\mS'\times\mA$ through the fixed-point equation (\Cref{lem:pl_extension}), and the norms of $Q^*_\Omega$ are taken over $\mS'\times\mA$.
 
\paragraph{Errors and constants.}
We measure the quality of the boundary by $\Delta_B:=\|Q_B-Q^*_\Omega\|_{\spn,\mX_H}$ and that of the initial profile by $D_0:=\max_{h<H}\|Q_{0,h}-Q^*_{\Omega,h}\|_\spn$, and use $\tilde\beta$, $\bar w$ and $H_{\mathrm{eff}}$ of \Cref{sec:results} with $e_0:=D_0+\Delta_B$. We assume $H\geq h_1+m$. With $\eta^{\mathrm{res}}_h(s,a):=(\xi P^h_\mu)(s)\,\mu(a|s)$, the probability that a rollout started from $\xi$ and following $\mu$ visits $(s,a)$ at depth $h$, which is positive exactly on $\mX_h$ (\Cref{lem:pl_reachability}), the coverage constants under $\mu$ and under a behavior dominating $\gamma_\Omega(A)\mu$ are
\begin{equation}
    \label{eq:pl_q}
    q_\mu:=\min\Big\{\min_{h<h_1+m}\ \min_{(s,a)\in\mX_h}\eta^{\mathrm{res}}_h(s,a),\ \beta_m\eta_{\min}\Big\}, \qquad q_\pi(A):=\gamma_\Omega(A)^{h_1+m}\,q_\mu \quad(A\geq0),
\end{equation}
both positive, $q_\pi(A)$ by~\eqref{eq:P1}.
 
\paragraph{Algorithms and results.}
With a model, the backward sweep $Q_h\leftarrow\mT_{\Omega,h}(Q_{h+1})$ (\Cref{alg:pl_backward_sync}) returns the planning solution exactly, and \Cref{prop:pl_horizon_main} bounds its horizon bias and its cost. Without a model, \Cref{alg:epoch_regularized_q_planning} estimates each depth operator from rollouts started from $\xi$. Whereas the learning algorithms iterate the stationary update $Q_{j+1}=\mT_\Omega(Q_j)$, it applies, up to sampling noise, the depth-shifted update $Q_{j+1,h}=\mT_{\Omega,h}(Q_{j,h+1})$: level $h$ at epoch $j+1$ is computed from level $h+1$ at epoch $j$, which is frozen during epoch $j$. Without noise, this recursion reaches the planning solution after $H-h$ epochs at level $h$ (\Cref{lem:pl_sync_nilpotent}), so the initialization is forgotten exactly rather than geometrically. As in \Cref{alg:epoch_regularized_q_learning}, a cap makes the number of samples per pair and per level deterministic.
 
\begin{theorem}[Epoch $\Omega$-regularized $Q$-planning]
    \label{thm:pl-epoch-main}
    Let $q:=q_\pi(\|Q^*_\Omega\|_\asp+2e_0)$, or $q:=q_\mu$ if the actions are drawn from $\mu$ instead of the greedy policies (\Cref{rem:pl_behavior}). Let $J\geq H$, $\iota_H:=\log(4JH\bar S_H|\mA|/\delta)$, $M:=h_1+H_{\mathrm{eff}}$ and $B(u):=2M^2\bar w^2\iota_H/u^2$. Run \Cref{alg:epoch_regularized_q_planning} for $J$ epochs with a batch size $B\geq B(e_0)$. Then, with probability at least $1-\delta$, each epoch lasts at most $H\lceil(\sqrt B+\sqrt{\iota_H})^2/q\rceil$ transitions and
    \begin{equation}
        \label{eq:pl-epoch-main}
        \big\|Q_{J,0}-Q^*_\Omega\big\|_{\spn,\mX_0}\leq(1-\tilde\beta)^{\lfloor(H-h_1)/m\rfloor}\Delta_B+2M\bar w\sqrt{\frac{\iota_H}{2B}}.
    \end{equation}
    In particular, for $\varepsilon\in(0,e_0)$, the choices $H\geq h_1+m+H_{\mathrm{eff}}\log_+(2\Delta_B/\varepsilon)$, $J=H$ and $B=\lceil B(\varepsilon/2)\rceil$ give $\|Q_{J,0}-Q^*_\Omega\|_{\spn,\mX_0}\leq\varepsilon$, i.e.\ $\tilde O\big((h_1+H_{\mathrm{eff}})^4\,\bar w^2/(q\,\varepsilon^2)\big)$ sampled transitions.
\end{theorem}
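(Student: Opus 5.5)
The proof splits the root error into a deterministic horizon bias and a stochastic tracking error, in the spirit of \Cref{thm:epoch-main-body}. The bias is the gap between the planning solution and $Q^*_\Omega$; the tracking error is the gap between the iterates $Q_{J,h}$ and the planning solution.

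\emph{Step 1: horizon bias.} First extend $Q^*_\Omega$ to $\mS'\times\mA$ by \Cref{lem:pl_extension}. Then $Q^*_{\Omega,h}-Q^*_\Omega|_{\mX_h}$ satisfies, modulo constants, the recursion $d_h=K_{\bar\pi^{(h)}}d_{h+1}$, by the average-path identity of \Cref{lem:mixing-properties-of-regularization}.(i) restricted to layers. Here $\bar\pi^{(h)}$ is the average-path policy between $Q^*_{\Omega,h+1}$ and $Q^*_\Omega$. For levels $h\geq h_1$ all layers equal $\mS_\infty$, so each block of $m$ consecutive kernels satisfies the localized Doeblin condition of \Cref{assumption:pl_local_mixing}. \Cref{lem:terminal_action} then gives a contraction by $1-\tilde\beta$ per block, provided the action spans stay below $\|Q^*_\Omega\|_\asp+2e_0$. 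That follows from non-expansiveness, since every $\|d_h\|_\spn\leq\Delta_B$. On the shallow levels $h<h_1$ the kernels are merely non-expansive. Chaining yields the first term $(1-\tilde\beta)^{\lfloor (H-h_1)/m\rfloor}\Delta_B$ at the root.

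\emph{Step 2: noise propagation.} Write $E_{j,h}:=Q_{j,h}-Q^*_{\Omega,h}$. The cap makes the per-pair sample counts deterministic, so, as in \Cref{lem:ep_update}, $E_{j+1,h}=K_{\bar\pi}E_{j,h+1}+W_{j+1,h}$ with $W$ a batch average of $B$ martingale increments of range at most $\bar w$. Unrolling over $J\geq H$ epochs kills the initialization exactly, by \Cref{lem:pl_sync_nilpotent}, because the top level is pinned to $Q_B$. This leaves $E_{J,0}=\sum_{h<H}K_{0}\cdots K_{h-1}W_{\cdot,h}$. The same block contraction as in Step 1 damps the $h$-th term by $(1-\tilde\beta)^{\lfloor (h-h_1)_+/m\rfloor}$. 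Summing the geometric series gives a total weight of at most $h_1+m/\tilde\beta\leq M$ times $\max_h\|W_h\|_\spn$.

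\emph{Step 3: concentration and coverage.} Azuma (\Cref{lem:azuma_range}) with a union bound over the $J$ epochs, $H$ levels and at most $\bar S_H|\mA|$ pairs gives $\|W\|_\spn\leq 2\bar w\sqrt{\iota_H/(2B)}$. This is the second term of \eqref{eq:pl-epoch-main}.

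The main obstacle is keeping the iterates inside the radius-$2e_0$ ball, so that $\gamma_\Omega$ stays above $\bar\gamma$, and making the epoch-length bound hold simultaneously. I would use a stopping time as in \Cref{prop:ep_no_escape}, inducting over epochs. The condition $B\geq B(e_0)$ ensures the accumulated noise is at most $e_0$, and, combined with the bias bound $\Delta_B+D_0$, this keeps every level within $2e_0$. On that event, the coverage argument of \Cref{lem:ep_cover_time} applies per rollout at each depth. Rollouts are resets rather than a single chain, so one lower-bounds the probability of visiting $(s,a)$ at depth $h$ by $\gamma_\Omega^{h_1+m}q_\mu=q$; this uses \Cref{lem:pl_coverage} and \Cref{lem:mixing-properties-of-regularization}.(iii) for depths past $h_1+m$. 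The self-bounding Chernoff bound (\Cref{lem:self-bounding}) then yields $\lceil(\sqrt B+\sqrt{\iota_H})^2/q\rceil$ rollouts of $H$ transitions each.

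Finally, choose $H$ so that the bias term is at most $\varepsilon/2$, using $(1-\tilde\beta)^{k}\leq e^{-\tilde\beta k}$, and choose $B=\lceil B(\varepsilon/2)\rceil$. The total cost is $JH\lceil\cdot\rceil$ transitions with $J=H=\tilde O(M)$, which gives $\tilde O(M^4\bar w^2/(q\varepsilon^2))$.
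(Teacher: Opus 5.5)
Your proposal is correct and follows essentially the paper's route. The paper proves the tracking part as \Cref{thm:pl_epoch}, applying the master lemma (\Cref{lem:pl_master}) to exactly your diagonal profile $Q_{j-(l-h),l}$ inside a stopping-time induction with $R=2e_0$. It controls the noise with Azuma and the epoch lengths with the self-bounding Chernoff bound, each with a union bound over the $JH\bar S_H|\mA|$ triples, and then adds the horizon bias of \Cref{cor:pl_horizon_bias} by the triangle inequality. The one step to state explicitly is that the in-ball induction must cover every pair $(j,h)$, not only $(J,0)$. For $j<H-h$ the diagonal ends at the initialization, which contributes the transported term $D_0$, and this is what gives $\|Q_{j,h}-Q^*_\Omega\|_{\spn,\mX_h}\leq D_0+2M\bar w\sqrt{\iota_H/(2B)}+\Delta_B\leq 2e_0$.
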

 
The bound~\eqref{eq:pl-epoch-main} has the structure of~\eqref{eq:epoch-main-bound}, with the horizon bias in place of the initialization term. Finally, \Cref{alg:pl_reset_learning} merges the $H$ tables into a single table learned from resets, which removes the extra factor $1/\tilde\beta$ (\Cref{thm:pl_reset_learning}).
    
\section{Analysis of the Planning Algorithms}
\label{app:planning}

This section proves the results of \Cref{app:planning_results} and analyses \Cref{alg:pl_reset_learning}, which merges the depth-indexed tables of the epoch planner into a single one.

The analysis rests on three facts. First, the planning solution solves its equations exactly (\Cref{app:planning_results}), so the planners forget their initialization after $H$ epochs. Second, a single lemma (\Cref{lem:pl_master}) bounds the error at depth $h$ by the Bellman residuals of the deeper levels, damped at the rate of \Cref{lem:mixing-properties-of-regularization}.(iv) every $m$ levels: it gives the planning depth and shows that noise does not accumulate across the horizon. Third, resets provide the exploration of the shallow levels, and the mixing of \Cref{assumption:pl_local_mixing} that of the deep ones (\Cref{lem:pl_coverage}). The model-based sweep is treated in \Cref{subsec:pl_sync}, epoch planning in \Cref{subsec:pl_epoch} and epoch learning with resets in \Cref{subsec:pl_reset_learning}.

\subsection{Reachability and the extended target}
\label{subsec:pl_setting}

\begin{lemma}[Reachability under $\mu$]
    \label{lem:pl_reachability}
    For every $h\geq0$ and $s\in\mS$, $(\xi P^h_\mu)(s)>0$ if and only if $s\in\mS_h$.
\end{lemma}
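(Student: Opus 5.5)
I would argue by induction on $h$, with a claim slightly stronger than the statement: the support of $\xi P^h_\mu$ is exactly $\mS_h$. For $h=0$ this is the definition, since $\xi P^0_\mu=\xi$ and $\mS_0=\mathrm{supp}(\xi)$.

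For the inductive step, suppose $\mathrm{supp}(\xi P^h_\mu)=\mS_h$. By \Cref{convention:product_order},
\[
(\xi P^{h+1}_\mu)(s')=\sum_{s}(\xi P^h_\mu)(s)\sum_a\mu(a|s)\,p(s'|s,a).
\]
Every term is non-negative. The sum is therefore positive if and only if some triple $(s,a,s')$ has $(\xi P^h_\mu)(s)>0$, $\mu(a|s)>0$ and $p(s'|s,a)>0$.

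Since $\mu$ has full support, the condition $\mu(a|s)>0$ always holds. By the induction hypothesis, $(\xi P^h_\mu)(s)>0$ exactly when $s\in\mS_h$. So the condition reduces to the existence of $(s,a)\in\mS_h\times\mA$ with $p(s'|s,a)>0$, which is the definition of $s'\in F(\mS_h)=\mS_{h+1}$. This closes the induction.

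No step is a real obstacle. The only care needed is to use full support of $\mu$, from \Cref{assumption:pl_local_mixing}, in both directions of the equivalence: it ensures that every action available at a reachable state carries positive mass.
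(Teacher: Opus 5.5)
Your proof is correct and follows the paper's argument essentially verbatim: induction on $h$, with the base case given by $\mS_0=\mathrm{supp}(\xi)$, and the inductive step using full support of $\mu$ to get $P_\mu(s'|s)>0$ if and only if $p(s'|s,a)>0$ for some $a$, hence $(\xi P^{h+1}_\mu)(s')>0$ if and only if $s'\in F(\mS_h)=\mS_{h+1}$. One small remark: your ``slightly stronger'' claim, that $\mathrm{supp}(\xi P^h_\mu)=\mS_h$, is just a restatement of the lemma, which already asserts the equivalence for every $s$.
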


\begin{proof}
    Induction on $h$, the case $h=0$ being the definition of $\mS_0$. Since $\mu$ has full support, $P_\mu(s'|s)>0$ if and only if $p(s'|s,a)>0$ for some $a$, so $(\xi P^{h+1}_\mu)(s')=\sum_s(\xi P^h_\mu)(s)P_\mu(s'|s)>0$ if and only if $s'\in F(\mS_h)=\mS_{h+1}$.
\end{proof}

\begin{lemma}[Extension of the target to the transient states]
    \label{lem:pl_extension}
    Let $\mathcal{V}:=\mS'\setminus\mS_\infty$. The accessibility relation restricted to $\mathcal{V}$ has no directed cycle, and $Q^*_\Omega$ has a unique extension to $\mS'\times\mA$ such that
    \begin{equation}
        \label{eq:pl_extension}
        \mT_\Omega(Q^*_\Omega)=Q^*_\Omega+\rho^*_\Omega\ones \qquad\text{on }\mS'\times\mA.
    \end{equation}
    Shifting $Q^*_\Omega$ by a constant on $\mS_\infty$ shifts its extension by the same constant.
\end{lemma}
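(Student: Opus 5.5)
The proof has three parts: a structural claim that $\mathcal{V}$ contains no cycle, a construction of the extension by backward recursion along a topological order of $\mathcal{V}$, and a uniqueness and shift argument.

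\emph{No cycle in $\mathcal{V}$.} Suppose $s\in\mathcal{V}$ lies on a directed cycle of length $k\geq1$ inside $\mathcal{V}$. Since $s\in\mS'$, there is a depth $h_0$ with $s\in\mS_{h_0}$. Going around the cycle repeatedly then gives $s\in\mS_{h_0+nk}$ for every $n\geq0$. Choosing $n$ with $h_0+nk\geq h_1$, \Cref{assumption:pl_local_mixing}.(i) gives $\mS_{h_0+nk}=\mS_\infty$, so $s\in\mS_\infty$. This contradicts $s\in\mathcal{V}$. Hence accessibility restricted to $\mathcal{V}$ is acyclic, and $\mathcal{V}$ admits a topological order: we can list it as $v_1,\dots,v_N$ so that every successor of $v_i$ lies in $\mS_\infty\cup\{v_{i+1},\dots,v_N\}$. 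Successors stay in $\mS'$ because $F(\mS')\subseteq\mS'$.

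\emph{Construction.} On $\mS_\infty\times\mA$, \eqref{eq:pl_extension} holds already by \Cref{rem:pl_local_mixing}.(i), since $\mS_\infty$ is closed. This requires $\mT_\Omega(Q)(s,a)$ for $s\in\mS_\infty$ to depend only on $Q$ on $\mS_\infty\times\mA$, which holds by closedness. Then process $i=N,N-1,\dots,1$ and define, for every $a$,
\[
Q^*_\Omega(v_i,a):=r(v_i,a)-\rho^*_\Omega+\sum_{s'}p(s'|v_i,a)\,\Omega^*(Q^*_\Omega)(s').
\]
The right-hand side involves $\Omega^*(Q^*_\Omega)(s')$ only for successors $s'$ of $v_i$. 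Each such $s'$ is either in $\mS_\infty$ or equal to some $v_{i'}$ with $i'>i$, and $\Omega^*(Q)(s')$ depends only on $Q(s',\cdot)$. So every quantity on the right is already defined, and the definition makes \eqref{eq:pl_extension} hold at $(v_i,a)$.

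\emph{Uniqueness and shift.} Take any extension satisfying \eqref{eq:pl_extension}. It is pinned down at $v_N,\dots,v_1$ in turn by the same equation, so it coincides with the construction by induction along the order.

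For the shift, replace $Q^*_\Omega$ by $Q^*_\Omega+c\ones$ on $\mS_\infty$. By \Cref{lem:omega_star_props}.(iii), $\Omega^*$ shifts by $c$ at the affected states, and $P$ preserves constants. Hence the induction shows each $Q^*_\Omega(v_i,\cdot)$ shifts by exactly $c$, using the same $\rho^*_\Omega$.

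The main point needing care is the acyclicity argument. It uses \Cref{assumption:pl_local_mixing}.(i) essentially: a state on a cycle is reachable at arbitrarily large depths, hence lies in the stabilized layer. The rest is routine bookkeeping of dependencies.
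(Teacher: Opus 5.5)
Your proposal is correct and follows essentially the same route as the paper. Acyclicity holds because a state on a cycle is reachable at arbitrarily large depths and therefore lies in $\mS_\infty$; the backward definition along a topological order of $\mathcal{V}$ then gives existence and uniqueness at once, and translation equivariance of $\Omega^*$ gives the shift, with the added (and welcome) explicit checks that closedness of $\mS_\infty$ makes the equation already hold there and that $\Omega^*(Q)(s')$ depends only on $Q(s',\cdot)$.
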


\begin{proof}
    If $s\in\mathcal{V}\cap\mS_j$ lay on a cycle of length $\ell\geq1$ within $\mathcal{V}$, then $s\in\mS_{j+\ell i}$ for every $i\geq0$, hence $s\in\mS_h=\mS_\infty$ for some $h\geq h_1$, a contradiction. The subgraph induced on $\mathcal{V}$ is therefore acyclic, and its vertices admit a topological order in which every successor of $s\in\mathcal{V}$ lies either in $\mS_\infty$ or later in the order. Defining, along the reverse of that order,
    \begin{equation*}
        Q^*_\Omega(s,a):=r(s,a)-\rho^*_\Omega+\sum_{s'}p(s'|s,a)\,\Omega^*(Q^*_\Omega)(s'), \qquad s\in\mathcal{V},
    \end{equation*}
    each right-hand side only involves values already defined, which gives existence and uniqueness. The last claim follows from the translation equivariance of $\Omega^*$ (\Cref{lem:omega_star_props}.(iii)).
\end{proof}

We keep the notation $Q^*_\Omega$ for the extension. By~\eqref{eq:pl_extension} and \Cref{cor:T_omega_props}, $\mT^k_\Omega(Q^*_\Omega)=Q^*_\Omega+k\rho^*_\Omega\ones$ on $\mS'\times\mA$ for every $k\geq0$; this is the only property of the extension used below, and it allows us to compare a profile to $Q^*_\Omega$ at the depths $h<h_1$.

\subsection{Depth-dependent objects}
\label{subsec:pl_objects}

\begin{lemma}[Depth-indexed average-path representation]
    \label{lem:pl_avg_path_depth}
    Let $Q,Q'\in\bR^{\mX_{h+1}}$ and let $\bar{\pi}:=\bar{\pi}_{\Omega,[Q,Q']}$ be the average-path policy of \Cref{def:mixing_objects}.(i). Then
    \begin{equation*}
        \mT_{\Omega,h}(Q)-\mT_{\Omega,h}(Q')=K^{(h)}_{\bar{\pi}}\left(Q-Q'\right), \qquad K^{(h)}_{\bar{\pi}}(s',a'|s,a):=p(s'|s,a)\,\bar{\pi}(a'|s'),
    \end{equation*}
     where $K^{(h)}_{\bar{\pi}}$, the restriction of the state-action kernel $K_{\bar\pi}$ to $\mX_h\times\mX_{h+1}$, is row-stochastic from $\mX_h$ to $\mX_{h+1}$, and $\bar{\pi}\geq\gamma_\Omega(R[Q,Q'])\,\mu$ on $\mX_{h+1}$.
\end{lemma}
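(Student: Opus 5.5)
The plan is to reduce the statement to the stationary average-path factorization of \Cref{lem:mixing-properties-of-regularization}.(i)--(ii), applied state by state on the layer $\mS_{h+1}$. The depth operator $\mT_{\Omega,h}$ has the same structure as $\mT_\Omega$: a reward term, plus the rectangular kernel $p(\cdot|s,a)$ restricted to $\mS_{h+1}$, applied to $\Omega^*(Q)$. The only care needed is that $Q$ now lives on $\mX_{h+1}$ rather than on $\mS\times\mA$.

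\emph{Step 1: the conjugate.} For each $s'\in\mS_{h+1}$, $\Omega^*(Q)(s')$ depends only on the row $Q_{s'}$, and $\pi_{\Omega,Q}(\cdot|s')$ is well defined from that row alone. Along the segment $Q(t)=tQ+(1-t)Q'$ in $\bR^{\mX_{h+1}}$, set $g(t):=\Omega^*(Q(t))(s')$. By \Cref{lem:omega_star_props}.(i) and the chain rule, $g$ is $\mathcal{C}^1$ with $g'(t)=\langle\pi_{\Omega,Q(t)}(\cdot|s'),Q_{s'}-Q'_{s'}\rangle$. Integrating over $t\in[0,1]$ gives
\[
\Omega^*(Q)(s')-\Omega^*(Q')(s')=\langle\bar\pi(\cdot|s'),Q_{s'}-Q'_{s'}\rangle,
\]
exactly as in \eqref{eq:value-factorization}.

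\emph{Step 2: the operator.} Subtract the two depth operators; the reward cancels. Since $p(\cdot|s,a)$ is supported on $\mS_{h+1}$ for $s\in\mS_h$ (definition of the layers via $F$), Step 1 gives, for $(s,a)\in\mX_h$,
\[
\mT_{\Omega,h}(Q)(s,a)-\mT_{\Omega,h}(Q')(s,a)=\sum_{(s',a')\in\mX_{h+1}}p(s'|s,a)\bar\pi(a'|s')(Q-Q')(s',a').
\]
This is the claimed identity with kernel $K^{(h)}_{\bar\pi}$. Row-stochasticity follows from the same support fact, which gives $\sum_{s'\in\mS_{h+1}}p(s'|s,a)=1$, together with $\sum_{a'}\bar\pi(a'|s')=1$ because $\bar\pi$ is an average of probability vectors.

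\emph{Step 3: domination.} Extend any $U\in\bR^{\mX_{h+1}}$ to $\mS\times\mA$ by zero rows outside $\mS_{h+1}$. The zero rows are constant in the action, so this extension preserves both the greedy policy on $\mS_{h+1}$ and $\|U\|_\asp$. Consequently \Cref{def:mixing_objects}.(iii) gives $\pi_{\Omega,Q(t)}\geq\gamma_\Omega(\|Q(t)\|_\asp)\,\mu$ on $\mX_{h+1}$. Convexity of $\|\cdot\|_\asp$ gives $\|Q(t)\|_\asp\leq R[Q,Q']$, and $\gamma_\Omega$ is monotone, so integrating in $t$ as in \Cref{lem:mixing-properties-of-regularization}.(ii) yields the lower bound on $\bar\pi$.

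The main obstacle is not analytic. It is the bookkeeping that the restricted objects are well defined: the layer support property making the kernel row-stochastic, and the zero-padding argument that transfers $\gamma_\Omega$, which was defined on full $Q$-functions, to functions on a single layer. Everything else repeats the stationary proof verbatim.
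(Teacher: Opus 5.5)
Your proposal is correct and follows the paper's proof. The paper obtains the identity by restricting the computation of \Cref{lem:mixing-properties-of-regularization}.(i) to $\mX_h$, using that all successors lie in $\mS_{h+1}$, and gets the domination from \Cref{lem:mixing-properties-of-regularization}.(ii) via the fact that the greedy policy at $s'$ depends only on $Q_{s'}$; your zero-padding argument is exactly the explicit form of that last remark.
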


\begin{proof}
    This is the computation of \Cref{lem:mixing-properties-of-regularization}.(i) restricted to $(s,a)\in\mX_h$, all successors lying in $\mS_{h+1}$; domination is \Cref{lem:mixing-properties-of-regularization}.(ii), the greedy policy at $s'$ depending on $Q$ only through $Q_{s'}$.
\end{proof}

For policies $\pi^{(h)},\dots,\pi^{(h+k-1)}$, we write $K_{h:h+k}:=K^{(h)}_{\pi^{(h)}}K^{(h+1)}_{\pi^{(h+1)}}\cdots K^{(h+k-1)}_{\pi^{(h+k-1)}}$, with $K_{h:h}:=I$, for the product of depth kernels from $\mX_h$ to $\mX_{h+k}$ in the sense of \Cref{convention:product_order}; it is row-stochastic.

\begin{lemma}[The planning solution is a restriction]
    \label{lem:pl_restriction}
    For every $Q\in\bR^{\mS'\times\mA}$ and $h\in\llbracket0,H\rrbracket$, $\big(\mT^{H-h}_\Omega Q\big)\big|_{\mX_h}=\mT_{\Omega,h}\circ\cdots\circ \mT_{\Omega,H-1}\big(Q|_{\mX_H}\big)$. In particular, $Q^*_{\Omega,h}=\big(\mT^{H-h}_\Omega Q_B\big)\big|_{\mX_h}$ and $\big(\mT^{H-h}_\Omega Q^*_\Omega\big)\big|_{\mX_h}=Q^*_\Omega|_{\mX_h}+(H-h)\rho^*_\Omega\ones$.
\end{lemma}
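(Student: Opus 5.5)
The plan is a one-step locality identity followed by backward induction on $h$. The key observation is that the value of $\mT_\Omega(Q)$ on $\mX_h$ depends on $Q$ only through its restriction to $\mX_{h+1}$.

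\emph{Step 1 (locality).} For $(s,a)\in\mX_h$ we have
\[
\mT_\Omega(Q)(s,a)=r(s,a)+\sum_{s'}p(s'|s,a)\,\Omega^*(Q)(s').
\]
Every successor $s'$ with $p(s'|s,a)>0$ lies in $F(\mS_h)=\mS_{h+1}$. Moreover, $\Omega^*(Q)(s')$ depends on $Q$ only through $Q_{s'}=Q(s',\cdot)$, since it is defined state-wise. Hence $(\mT_\Omega Q)|_{\mX_h}=\mT_{\Omega,h}(Q|_{\mX_{h+1}})$. This holds for every $Q\in\bR^{\mS'\times\mA}$, using that $\mS_h\subseteq\mS'$ and $F(\mS')\subseteq\mS'$, so that $\mT_\Omega$ acts on $\bR^{\mS'\times\mA}$.

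\emph{Step 2 (induction).} I prove the claim by backward induction on $h$ from $h=H$. At $h=H$ both sides equal $Q|_{\mX_H}$, the composition being empty. For $h<H$, Step 1 applied to $\mT^{H-h-1}_\Omega Q$ gives
\[
(\mT^{H-h}_\Omega Q)|_{\mX_h}=\mT_{\Omega,h}\big((\mT^{H-h-1}_\Omega Q)|_{\mX_{h+1}}\big).
\]
The induction hypothesis at $h+1$ then identifies the argument with $\mT_{\Omega,h+1}\circ\cdots\circ\mT_{\Omega,H-1}(Q|_{\mX_H})$, which closes the induction.

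\emph{Step 3 (consequences).} For the first consequence, take $Q=Q_B\in\bR^{\mS'\times\mA}$. The backward definition $Q^*_{\Omega,H}=Q_B|_{\mX_H}$, $Q^*_{\Omega,h}=\mT_{\Omega,h}(Q^*_{\Omega,h+1})$ is exactly the composition on the right-hand side, which gives $Q^*_{\Omega,h}=(\mT^{H-h}_\Omega Q_B)|_{\mX_h}$.

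For the second consequence, take $Q=Q^*_\Omega$, extended to $\mS'\times\mA$ by \Cref{lem:pl_extension}. Iterating~\eqref{eq:pl_extension} with translation equivariance (\Cref{cor:T_omega_props}) gives $\mT^{k}_\Omega Q^*_\Omega=Q^*_\Omega+k\rho^*_\Omega\ones$ on $\mS'\times\mA$. Restricting to $\mX_h$ with $k=H-h$ yields the last identity.

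There is no real obstacle here. The only point needing care is Step 1: one must check that successors of $\mS_h$ stay in $\mS_{h+1}$, which holds by the definition of $F$, and that the conjugate is computed state by state, so that restricting $Q$ to $\mX_{h+1}$ loses no information.
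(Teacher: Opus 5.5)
Your proof is correct and follows the paper's argument. Both use a backward induction on $h$ driven by the locality of $\mT_\Omega$: its values on $\mX_h$ involve $Q$ only on $\mX_{h+1}$, because successors of $\mS_h$ lie in $\mS_{h+1}$ and $\Omega^*$ is computed state-wise. The last identity is then the extended fixed-point equation~\eqref{eq:pl_extension} iterated; your Step~1 simply isolates the locality fact that the paper uses inline.
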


\begin{proof}
    Backward induction on $h$: at $(s,a)\in\mX_h$, $\mT_\Omega$ only involves the values of $Q$ at the states $s'\in\mS_{h+1}$, which coincide with the depth-$(h+1)$ restriction by the induction hypothesis. The last identity is~\eqref{eq:pl_extension} iterated.
\end{proof}

\paragraph{Constants.}
For a radius $R\geq0$, we use the localized constants $A_R=\|Q^*_\Omega\|_\asp+R$, $\tilde{\beta}_m(A_R)=\beta_m\gamma_\Omega(A_R)^{m-1}$ and $\bar{w}(R)=\|Q^*_\Omega\|_\spn+R+\|\Omega^*(0)\|_{\mS,\spn}$ of \Cref{def:localized-radii}; here $R$ bounds the span distance to $Q^*_\Omega$ at every depth. With $e_0=D_0+\Delta_B$ as in \Cref{app:planning_results}, the constants of the main text are $\tilde\beta=\tilde\beta_m(A_{2e_0})$ and $\bar w=\bar w(2e_0)$. We also write
\begin{equation}
    \label{eq:pl_Heff}
    H_{\mathrm{eff}}(A):=\frac{m}{\tilde{\beta}_m(A)}, \qquad M(h,A):=(h_1-h)_++H_{\mathrm{eff}}(A), \qquad M(A):=M(0,A)=h_1+H_{\mathrm{eff}}(A),
\end{equation}
so that $H_{\mathrm{eff}}=H_{\mathrm{eff}}(A_{2e_0})$. By~\eqref{eq:P1}, $\tilde{\beta}_m(A)>0$ for every finite $A$.

\begin{lemma}[Distance between the planning solution and the target]
    \label{lem:pl_uniform_bounds}
    For every $h\in\llbracket0,H\rrbracket$, $\|Q^*_{\Omega,h}-Q^*_\Omega\|_{\spn,\mX_h}\leq\Delta_B$. Consequently $\|Q^*_{\Omega,h}\|_\asp\leq A_{\Delta_B}$ and $\|Q^*_{\Omega,h}\|_\spn\leq\|Q^*_\Omega\|_\spn+\Delta_B$, uniformly in $h$ and $H$.
\end{lemma}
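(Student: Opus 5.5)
The plan is to use \Cref{lem:pl_restriction} to express both the planning solution and the target as restrictions of iterates of the stationary operator $\mT_\Omega$ on $\mS'\times\mA$, and then to use the span non-expansiveness of $\mT_\Omega$.

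By \Cref{lem:pl_restriction}, $Q^*_{\Omega,h}=(\mT^{H-h}_\Omega Q_B)|_{\mX_h}$, where $Q_B$ is viewed as an element of $\bR^{\mS'\times\mA}$. The same lemma gives $Q^*_\Omega|_{\mX_h}=(\mT^{H-h}_\Omega Q^*_\Omega)|_{\mX_h}-(H-h)\rho^*_\Omega\ones$. The constant is invisible to the span, so
\[
\|Q^*_{\Omega,h}-Q^*_\Omega\|_{\spn,\mX_h}=\|(\mT^{H-h}_\Omega Q_B-\mT^{H-h}_\Omega Q^*_\Omega)|_{\mX_h}\|_{\spn}.
\]
A naive bound by non-expansiveness on $\mS'\times\mA$ would give $\|Q_B-Q^*_\Omega\|_{\spn,\mS'\times\mA}$. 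This is too large, because $\Delta_B$ is measured only on $\mX_H$.

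The fix is to replace $\mT_\Omega$ by the depth-dependent operators $\mT_{\Omega,h}$, which only see level $h+1$. By \Cref{lem:pl_restriction}, the quantity equals
\[
\mT_{\Omega,h}\circ\cdots\circ\mT_{\Omega,H-1}(Q_B|_{\mX_H})-\mT_{\Omega,h}\circ\cdots\circ\mT_{\Omega,H-1}(Q^*_\Omega|_{\mX_H}),
\]
up to a constant. By \Cref{lem:pl_avg_path_depth}, each difference $\mT_{\Omega,k}(U)-\mT_{\Omega,k}(U')$ equals $K^{(k)}_{\bar\pi}(U-U')$, where $K^{(k)}_{\bar\pi}$ is row-stochastic from $\mX_k$ to $\mX_{k+1}$. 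Chaining these identities writes the difference as a product of row-stochastic matrices applied to $(Q_B-Q^*_\Omega)|_{\mX_H}$. \Cref{lem:row-stochastic-span} with $\alpha\leq1$ (rectangular row-stochastic matrices are allowed there) then gives the bound $\Delta_B$. The translation shift $(H-h)\rho^*_\Omega$ must be tracked at each step, but it is only a constant.

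For the consequences, $\|\cdot\|_\asp\leq\|\cdot\|_\spn$ and the triangle inequality give
\[
\|Q^*_{\Omega,h}\|_\asp\leq\|Q^*_\Omega|_{\mX_h}\|_\asp+\|Q^*_{\Omega,h}-Q^*_\Omega\|_{\spn,\mX_h}\leq\|Q^*_\Omega\|_\asp+\Delta_B=A_{\Delta_B},
\]
where the norm of $Q^*_\Omega$ is taken over $\mS'\times\mA$. The span bound follows in the same way.

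The only delicate point is the localization to $\mX_H$, handled by working with the depth operators rather than with $\mT_\Omega$ on the whole of $\mS'$.
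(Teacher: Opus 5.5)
Your proposal is correct and follows essentially the same route as the paper. You write both profiles as compositions of the depth operators $\mT_{\Omega,l}$, use \Cref{lem:pl_avg_path_depth} to express their difference, up to the constant $(H-h)\rho^*_\Omega\ones$, as a product $K_{h:H}$ of row-stochastic depth kernels applied to $(Q_B-Q^*_\Omega)|_{\mX_H}$, and conclude by span non-expansiveness; the two consequences then follow from $\|\cdot\|_\asp\leq\|\cdot\|_\spn$ and the triangle inequality, exactly as in the paper.
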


\begin{proof}
    By \Cref{lem:pl_restriction}, $Q^*_{\Omega,l}=\mT_{\Omega,l}(Q^*_{\Omega,l+1})$ and $V_l:=Q^*_\Omega|_{\mX_l}+(H-l)\rho^*_\Omega\ones$ satisfies $V_l=\mT_{\Omega,l}(V_{l+1})$ for every $l<H$, with $V_H=Q^*_\Omega|_{\mX_H}$. \Cref{lem:pl_avg_path_depth} gives $Q^*_{\Omega,l}-V_l=K^{(l)}_{\pi^{(l)}}(Q^*_{\Omega,l+1}-V_{l+1})$ with $\pi^{(l)}:=\bar{\pi}_{\Omega,[Q^*_{\Omega,l+1},Q^*_\Omega]}$, the greedy policies being invariant under constants. Unrolling from $l=h$ to $l=H$,
    \begin{equation*}
        Q^*_{\Omega,h}-Q^*_\Omega|_{\mX_h}-(H-h)\rho^*_\Omega\ones=K_{h:H}\big(Q_B-Q^*_\Omega\big)\big|_{\mX_H}.
    \end{equation*}
    $K_{h:H}$ is row-stochastic, hence non-expansive in span (\Cref{lem:row-stochastic-span}), and the span annihilates constants, whence the first claim. The other two follow from $\|\cdot\|_\asp\leq\|\cdot\|_\spn$, the invariance of $\|\cdot\|_\asp$ under constants and the triangle inequality.
\end{proof}

\subsection{The master lemma}
\label{subsec:pl_master}

For $0\leq h\leq h+k\leq H$, the \emph{contracting block count}
\begin{equation}
    \label{eq:pl_block_count}
    n(h,k):=\left\lfloor\frac{\big(h+k-\max\{h,h_1\}\big)_+}{m}\right\rfloor
\end{equation}
is the number of disjoint blocks of $m$ consecutive depths inside $\llbracket\max\{h,h_1\},h+k-1\rrbracket$; when $h\geq h_1$, $n(h,k)=\lfloor k/m\rfloor$.

\begin{lemma}[Contraction of a product of depth kernels]
    \label{lem:pl_product}
    Let $A\geq0$, $0\leq h\leq h+k\leq H$, and let $\pi^{(h)},\dots,\pi^{(h+k-1)}$ be policies with $\pi^{(l)}\geq\gamma_\Omega(A)\,\mu$ for every $l$. Then $\|K_{h:h+k}\|_{\spn,\op}\leq(1-\tilde{\beta}_m(A))^{n(h,k)}$.
\end{lemma}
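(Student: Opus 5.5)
We factor $K_{h:h+k}$ into a head on the shallow depths $\llbracket h,\max\{h,h_1\}-1\rrbracket$, then $n:=n(h,k)$ consecutive blocks of $m$ depths each inside $\llbracket\max\{h,h_1\},h+k-1\rrbracket$, then a tail of fewer than $m$ remaining depths. Write $h':=\max\{h,h_1\}$ and, for $i\in\llbracket0,n-1\rrbracket$, let $B_i:=K_{h'+im:h'+(i+1)m}$. Then
\[
K_{h:h+k}=K_{h:h'}\,B_0\cdots B_{n-1}\,K_{h'+nm:h+k},
\]
in the sense of \Cref{convention:product_order}. Every factor is a product of depth kernels, hence row-stochastic between the corresponding $\mX$'s. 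The submultiplicativity \eqref{eq:sub-multiplicativity} of \Cref{lem:spop}, together with \Cref{lem:row-stochastic-span}, gives $\|K_{h:h+k}\|_{\spn,\op}\leq\prod_i\alpha(B_i)$. This holds because the head and the tail have Dobrushin coefficient at most $1$. Lemma \ref{lem:spop} is stated for maps between $\bR^p$ and $\bR^q$, so it applies with the rectangular shapes $\mX_l\to\mX_{l'}$. When $n=0$ the claim is just this nonexpansiveness.

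The remaining work is to show $\alpha(B_i)\leq1-\tilde\beta_m(A)$ for each block. A block starts at some depth $l\geq h_1$. By \Cref{assumption:pl_local_mixing}.(i), every depth $l,\dots,l+m$ has $\mS_{l+j}=\mS_\infty$. Hence $B_i$ is exactly the $m$-step state-action kernel $K_{\pi^{(l)}}\cdots K_{\pi^{(l+m-1)}}$ of the MDP restricted to the closed core $\mS_\infty$ (\Cref{rem:pl_local_mixing}.(i)). Its rows are indexed by $\mS_\infty\times\mA$ and all successors stay in $\mS_\infty$.

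On that restricted MDP, \Cref{assumption:pl_local_mixing}.(ii) is the Doeblin condition $K^m_\mu(\cdot|s,a)\geq\beta_m\nu$ on $\mS_\infty$. The policies satisfy $\pi^{(l+j)}\geq\gamma_\Omega(A)\mu$. We then apply \Cref{lem:terminal_action} with $n=m$ and $\gamma_k=\gamma_\Omega(A)$ for the $m-1$ intermediate policies. This gives $\alpha(B_i)\leq1-\beta_m\gamma_\Omega(A)^{m-1}=1-\tilde\beta_m(A)$, and multiplying over the $n$ blocks concludes.

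The main point to check is the ordering convention. The chain takes its first step with $K^{(l)}$, and the depth indices grow along the chain. So the intermediate actions are drawn at depths $l+1,\dots,l+m-1$, from $\pi^{(l)},\dots,\pi^{(l+m-2)}$ (recall $K^{(l)}_\pi(s',a'|s,a)=p(s'|s,a)\pi(a'|s')$), and the terminal one from $\pi^{(l+m-1)}$, which \Cref{lem:terminal_action} shows does not matter. A second point is that the minorization of \Cref{lem:terminal_action} is used only on $\mS_\infty$, which is legitimate because $\mS_\infty$ is closed. We also note that the Dobrushin coefficient of a rectangular block is computed over rows in $\mX_l=\mS_\infty\times\mA$ and columns in $\mX_{l+m}=\mS_\infty\times\mA$, so no states outside the core enter.
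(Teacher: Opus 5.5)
Your proposal is correct and follows essentially the same route as the paper: a row-stochastic head and tail plus $n(h,k)$ blocks of $m$ core depths, each block's Dobrushin coefficient bounded by $1-\tilde\beta_m(A)$ via \Cref{lem:terminal_action} on the closed core $\mS_\infty$, then \Cref{lem:row-stochastic-span} and submultiplicativity. The one blemish is that your displayed factorization is ill-formed when $h_1>h+k$, but that is exactly the case $n=0$, which you handle directly by non-expansiveness.
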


\begin{proof}
    Group the factors of depths $l\geq\max\{h,h_1\}$ into $n(h,k)$ consecutive blocks of exactly $m$ factors, leaving ungrouped the factors of depths $l<h_1$ and the fewer than $m$ remaining ones. A block starting at depth $l\geq h_1$ is the product of $m$ depth kernels from $\mX_l=\mS_\infty\times\mA$ to $\mX_{l+m}=\mS_\infty\times\mA$, whose policies dominate $\gamma_\Omega(A)\mu$; as $\mS_\infty$ is closed, these kernels coincide with the unrestricted ones on $\mS_\infty\times\mA$. The proof of \Cref{lem:terminal_action} only uses the minorization $K^m_\mu(\cdot|s,a)\geq\beta_m\nu$ at the starting pairs, which \Cref{assumption:pl_local_mixing}.(ii) provides on $\mS_\infty\times\mA$, so it bounds the Dobrushin coefficient of each block by $1-\beta_m\gamma_\Omega(A)^{m-1}=1-\tilde{\beta}_m(A)$, and \Cref{lem:row-stochastic-span} bounds its operator span seminorm by the same quantity. The ungrouped factors are row-stochastic, hence of operator span seminorm at most $1$, and we conclude by submultiplicativity~\eqref{eq:sub-multiplicativity}.
\end{proof}

\begin{lemma}[Geometric sum]
    \label{lem:pl_geometric_sum}
    For every $h\geq0$ and $A\geq0$, $\sum_{k\geq0}(1-\tilde{\beta}_m(A))^{n(h,k)}\leq M(h,A)$.
\end{lemma}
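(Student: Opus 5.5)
Write $\theta:=1-\tilde\beta_m(A)\in[0,1)$, which is $<1$ by~\eqref{eq:P1}, and $h_0:=\max\{h,h_1\}$. The plan is to split the index set $k\geq0$ into the depths before the contracting regime starts and the depths after it, and to count, for each value of the block count, how many $k$ attain it.

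\emph{First range.} For $0\leq k\leq h_0-h=(h_1-h)_+$ we have $h+k-h_0\leq0$, so $n(h,k)=0$. Each such term equals $1$. However, only the $(h_1-h)_+$ indices $k<h_0-h$ will be charged to the transient part. The index $k=h_0-h$ starts the next range.

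\emph{Second range.} For $k=(h_1-h)_++j$ with $j\geq0$, we have $h+k-h_0=j$, hence $n(h,k)=\lfloor j/m\rfloor$. Each value $n=i$ is attained for exactly $m$ consecutive values of $j$, namely $j\in\llbracket im,im+m-1\rrbracket$. This range therefore sums to
\[
m\sum_{i\geq0}\theta^i=\frac{m}{1-\theta}=\frac{m}{\tilde\beta_m(A)}=H_{\mathrm{eff}}(A).
\]

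Adding the two ranges gives exactly $(h_1-h)_++H_{\mathrm{eff}}(A)=M(h,A)$, so the inequality in fact holds with equality. If the sum is meant to be restricted to $h+k\leq H$, it is a partial sum of nonnegative terms and is bounded by the full series.

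The only delicate point is the boundary index $k=(h_1-h)_+$. It must be counted once, in the second range as $j=0$, and not also among the unit terms of the first range. Miscounting it would give an off-by-one bound $M(h,A)+1$.
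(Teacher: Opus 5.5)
Your proof is correct and matches the paper's argument: you split off the $(h_1-h)_+$ indices with $n(h,k)=0$, then note that $n(h,k)=\lfloor j/m\rfloor$ takes each value exactly $m$ times, which contributes $m/\tilde\beta_m(A)=H_{\mathrm{eff}}(A)$. Your handling of the boundary index and your observation that the bound holds with equality over all $k\geq0$ are both accurate.
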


\begin{proof}
    For $k<(h_1-h)_+$, $n(h,k)=0$, and these terms contribute $(h_1-h)_+$. For $k\geq(h_1-h)_+$, $n(h,k)=\lfloor(k-(h_1-h)_+)/m\rfloor$ takes each value $q\geq0$ exactly $m$ times, and these terms contribute $m\sum_{q\geq0}(1-\tilde{\beta}_m(A))^q=H_{\mathrm{eff}}(A)$.
\end{proof}

The next lemma drives every planning bound. It compares a partial profile to the planning solution through its Bellman residuals: the model-based sweep has no residual, and the epoch planner is a profile whose residuals are the noise of successive epochs (\Cref{subsec:pl_epoch}).

\begin{lemma}[Master lemma]
    \label{lem:pl_master}
    Let $0\leq h_0\leq h_e\leq H$, let $Q_l\in\bR^{\mX_l}$ for $l\in\llbracket h_0,h_e\rrbracket$, with residuals
    \begin{equation}
        \label{eq:pl_residual}
        r_l:=Q_l-\mT_{\Omega,l}(Q_{l+1}), \qquad l\in\llbracket h_0,h_e-1\rrbracket,
    \end{equation}
    and assume $\|Q_l\|_\asp\leq A$ for $l\in\llbracket h_0+1,h_e\rrbracket$, with $A\geq A_{\Delta_B}$. Then, writing $\tilde{\beta}:=\tilde{\beta}_m(A)$,
    \begin{equation}
        \label{eq:pl_master}
        \begin{aligned}
            \big\|Q_{h_0}-Q^*_{\Omega,h_0}\big\|_\spn
            &\leq(1-\tilde{\beta})^{n(h_0,h_e-h_0)}\big\|Q_{h_e}-Q^*_{\Omega,h_e}\big\|_\spn+\sum_{l=h_0}^{h_e-1}(1-\tilde{\beta})^{n(h_0,l-h_0)}\|r_l\|_\spn\\
            &\leq(1-\tilde{\beta})^{n(h_0,h_e-h_0)}\big\|Q_{h_e}-Q^*_{\Omega,h_e}\big\|_\spn+2M(h_0,A)\max_{h_0\leq l<h_e}\|r_l\|_\infty.
        \end{aligned}
    \end{equation}
\end{lemma}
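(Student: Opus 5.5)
The plan is to write the error at depth $l$ as a one-step linear recursion and unroll it from $h_0$ to $h_e$. Set $D_l:=Q_l-Q^*_{\Omega,l}$ for $l\in\llbracket h_0,h_e\rrbracket$. Since $Q^*_{\Omega,l}=\mT_{\Omega,l}(Q^*_{\Omega,l+1})$ exactly, the residual definition~\eqref{eq:pl_residual} gives
\[
D_l=\mT_{\Omega,l}(Q_{l+1})-\mT_{\Omega,l}(Q^*_{\Omega,l+1})+r_l .
\]
\Cref{lem:pl_avg_path_depth}, applied to the pair $(Q_{l+1},Q^*_{\Omega,l+1})$, turns the operator difference into $K^{(l)}_{\pi^{(l)}}D_{l+1}$, with $\pi^{(l)}:=\bar\pi_{\Omega,[Q_{l+1},Q^*_{\Omega,l+1}]}$. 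Hence $D_l=K^{(l)}_{\pi^{(l)}}D_{l+1}+r_l$. Unrolling this recursion yields
\[
D_{h_0}=K_{h_0:h_e}D_{h_e}+\sum_{l=h_0}^{h_e-1}K_{h_0:l}\,r_l ,
\]
where $K_{h_0:l}$ is the product of the depth kernels at depths $h_0,\dots,l-1$, and $K_{h_0:h_0}=I$.

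The domination step comes next. For $l\in\llbracket h_0,h_e-1\rrbracket$, the radius $R[Q_{l+1},Q^*_{\Omega,l+1}]$ is at most $\max\{A,A_{\Delta_B}\}=A$. This uses the hypothesis $\|Q_{l+1}\|_\asp\leq A$ (note $l+1\in\llbracket h_0+1,h_e\rrbracket$) together with \Cref{lem:pl_uniform_bounds}, which gives $\|Q^*_{\Omega,l+1}\|_\asp\leq A_{\Delta_B}\leq A$. So each $\pi^{(l)}\geq\gamma_\Omega(A)\mu$, because $\gamma_\Omega$ is non-increasing. \Cref{lem:pl_product} then bounds $\|K_{h_0:l}\|_{\spn,\op}\leq(1-\tilde\beta)^{n(h_0,l-h_0)}$, and likewise for $K_{h_0:h_e}$. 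Taking the span, using the triangle inequality and the inequality $\|Kf\|_\spn\leq\|K\|_{\spn,\op}\|f\|_\spn$ from~\eqref{eq:sub-multiplicativity} gives the first line of~\eqref{eq:pl_master}. This is valid because row-stochastic matrices satisfy condition (ii) of \Cref{lem:spop}, including the rectangular ones between different layers.

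For the second line, I bound $\|r_l\|_\spn\leq2\|r_l\|_\infty\leq2\max_{l'}\|r_{l'}\|_\infty$. The remaining weights sum to $\sum_{k\geq0}(1-\tilde\beta)^{n(h_0,k)}$, which \Cref{lem:pl_geometric_sum} bounds by $M(h_0,A)$.

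The main point to check is that \Cref{lem:pl_product} applies to rectangular products between distinct layers, and that the block counts match. The product $K_{h_0:l}$ uses exactly the depths $h_0,\dots,l-1$, i.e.\ $k=l-h_0$ factors, consistent with the definition of $n(h,k)$. The ordering convention is also fine: in the unrolled recursion the shallowest kernel acts last on functions, which is the chain order of \Cref{convention:product_order}. The rest is routine.
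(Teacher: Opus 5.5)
Your proposal is correct and follows the paper's proof essentially line by line. You use the same error recursion $D_l=K^{(l)}_{\pi^{(l)}}D_{l+1}+r_l$ from the depth-indexed average-path representation, the same unrolling, the domination $\pi^{(l)}\geq\gamma_\Omega(A)\mu$ obtained from $R[Q_{l+1},Q^*_{\Omega,l+1}]\leq A$ via the uniform bound on the planning solution, and then the product-contraction and geometric-sum lemmas for the two lines.
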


\begin{proof}
    Let $e_l:=Q_l-Q^*_{\Omega,l}$. By~\eqref{eq:pl_residual}, the definition of the planning solution and \Cref{lem:pl_avg_path_depth},
    \begin{equation*}
        e_l=r_l+\mT_{\Omega,l}(Q_{l+1})-\mT_{\Omega,l}(Q^*_{\Omega,l+1})=r_l+K^{(l)}_{\pi^{(l)}}e_{l+1}, \qquad \pi^{(l)}:=\bar{\pi}_{\Omega,[Q_{l+1},Q^*_{\Omega,l+1}]}.
    \end{equation*}
    Unrolling from $h_0$ to $h_e$ gives $e_{h_0}=K_{h_0:h_e}e_{h_e}+\sum_{l=h_0}^{h_e-1}K_{h_0:l}\,r_l$. By \Cref{lem:pl_uniform_bounds}, $R[Q_{l+1},Q^*_{\Omega,l+1}]\leq\max\{A,A_{\Delta_B}\}=A$, so every $\pi^{(l)}$ dominates $\gamma_\Omega(A)\mu$ (\Cref{lem:pl_avg_path_depth}) and \Cref{lem:pl_product} applies to each product. The triangle inequality gives the first bound, and $\|\cdot\|_\spn\leq2\|\cdot\|_\infty$ together with \Cref{lem:pl_geometric_sum} gives the second.
\end{proof}

\begin{corollary}[Horizon bias and planning depth]
    \label{cor:pl_horizon_bias}
    For every $h\in\llbracket0,H\rrbracket$,
    \begin{equation}
        \label{eq:pl_horizon_bias}
        \big\|Q^*_{\Omega,h}-Q^*_\Omega\big\|_{\spn,\mX_h}\leq\left(1-\tilde{\beta}_m(A_{\Delta_B})\right)^{n(h,H-h)}\Delta_B,
    \end{equation}
    and the right-hand side is at most $\varepsilon$ as soon as
    \begin{equation}
        \label{eq:pl_depth}
        H\geq\max\{h,h_1\}+m+H_{\mathrm{eff}}(A_{\Delta_B})\log_+\left(\frac{\Delta_B}{\varepsilon}\right).
    \end{equation}
\end{corollary}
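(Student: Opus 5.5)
The plan is to apply the master lemma (\Cref{lem:pl_master}) to the planning solution itself, compared against the extended target, and then to solve the resulting geometric inequality for $H$. A direct route is cleaner, though. By \Cref{lem:pl_restriction}, set $V_l:=Q^*_\Omega|_{\mX_l}+(H-l)\rho^*_\Omega\ones$. This profile satisfies $V_l=\mT_{\Omega,l}(V_{l+1})$ exactly and has $V_H=Q^*_\Omega|_{\mX_H}$. The proof of \Cref{lem:pl_uniform_bounds} then gives the identity
\[
Q^*_{\Omega,h}-V_h=K_{h:H}\,(Q_B-Q^*_\Omega)|_{\mX_H},
\]
where each factor $K^{(l)}_{\pi^{(l)}}$ is built from the average-path policy $\pi^{(l)}=\bar\pi_{\Omega,[Q^*_{\Omega,l+1},Q^*_\Omega]}$. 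Since the span annihilates constants, the left-hand side has the same span as $Q^*_{\Omega,h}-Q^*_\Omega$ on $\mX_h$.

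The next step is to bound the operator span seminorm of $K_{h:H}$ by \Cref{lem:pl_product}. This needs a domination level for the policies. \Cref{lem:pl_uniform_bounds} gives $\|Q^*_{\Omega,l+1}\|_\asp\leq A_{\Delta_B}$, and $\|Q^*_\Omega\|_\asp\leq A_{\Delta_B}$ holds trivially, so the radius of each segment is at most $A_{\Delta_B}$. By \Cref{lem:pl_avg_path_depth}, every $\pi^{(l)}$ therefore dominates $\gamma_\Omega(A_{\Delta_B})\mu$. \Cref{lem:pl_product} then gives
\[
\|K_{h:H}\|_{\spn,\op}\leq(1-\tilde\beta_m(A_{\Delta_B}))^{n(h,H-h)},
\]
and \eqref{eq:pl_horizon_bias} follows by \eqref{eq:sub-multiplicativity}. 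Equivalently, this is the master lemma with $Q_l=Q^*_{\Omega,l}$ compared to $V_l$ and zero residuals.

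For the depth condition, write $\tilde\beta:=\tilde\beta_m(A_{\Delta_B})$ and use $(1-\tilde\beta)^n\leq e^{-\tilde\beta n}$. It then suffices to have $n(h,H-h)\geq\tilde\beta^{-1}\log_+(\Delta_B/\varepsilon)$; when $\Delta_B\leq\varepsilon$ the bound is trivial since $n\geq0$. By \eqref{eq:pl_block_count}, $n(h,H-h)=\lfloor (H-\max\{h,h_1\})_+/m\rfloor$, which is at least $(H-\max\{h,h_1\})/m-1$. Under \eqref{eq:pl_depth} this quantity is at least $H_{\mathrm{eff}}(A_{\Delta_B})\log_+(\Delta_B/\varepsilon)/m=\tilde\beta^{-1}\log_+(\Delta_B/\varepsilon)$, because $H_{\mathrm{eff}}=m/\tilde\beta$ and the extra $+m$ in \eqref{eq:pl_depth} absorbs the floor.

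The only delicate point is the domination step. The segment endpoints must have action span bounded uniformly in $h$ and $H$, and this is exactly what \Cref{lem:pl_uniform_bounds} supplies. On the depths below $h_1$, the extended target of \Cref{lem:pl_extension} is what makes the comparison meaningful.
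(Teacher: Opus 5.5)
Your proof is correct and essentially the paper's: the paper invokes \Cref{lem:pl_master} with the shifted target profile $V_l$ as the profile, zero residuals and $A=A_{\Delta_B}$, and that lemma's proof is exactly the unrolling $Q^*_{\Omega,h}-V_h=K_{h:H}(Q_B-Q^*_\Omega)|_{\mX_H}$ followed by \Cref{lem:pl_product}, which you carry out directly; your derivation of the depth condition is identical to the paper's. One small slip in your aside: the master lemma always compares against the planning solution, so the equivalent instantiation takes $Q_l=V_l$, not $Q_l=Q^*_{\Omega,l}$.
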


\begin{proof}
    By \Cref{lem:pl_restriction}, the profile $V_l:=Q^*_\Omega|_{\mX_l}+(H-l)\rho^*_\Omega\ones$ satisfies $V_l=\mT_{\Omega,l}(V_{l+1})$ for $l<H$, and $\|V_l\|_\asp=\|Q^*_\Omega\|_\asp\leq A_{\Delta_B}$. \Cref{lem:pl_master} with $h_0=h$, $h_e=H$, zero residuals and $A=A_{\Delta_B}$ gives~\eqref{eq:pl_horizon_bias}, since $\|V_H-Q^*_{\Omega,H}\|_\spn=\Delta_B$ and the span annihilates constants. Finally, $(1-\tilde{\beta})^n\leq e^{-\tilde{\beta}n}$ and $n(h,H-h)\geq(H-\max\{h,h_1\})/m-1$ give~\eqref{eq:pl_depth}.
\end{proof}

\Cref{lem:pl_master} has two consequences. First, noise does not accumulate across the horizon. Without contraction, the errors of the $H$ levels of a backward induction add up. Here, the residual at depth $l$ is damped by $(1-\tilde{\beta})^{n(h_0,l-h_0)}$ before it reaches depth $h_0$, so only the $M(h_0,A)$ levels closest to $h_0$ matter. Second, $H_{\mathrm{eff}}(A)$ plays the role of the discounted horizon $1/(1-\gamma)$ of discounted planning, but it is set by the regularizer through $\gamma_\Omega$. The boundary value enters the required depth~\eqref{eq:pl_depth} only through $\log(\Delta_B/\varepsilon)$: a better bootstrap allows a shallower lookahead.

\subsection{Coverage}
\label{subsec:pl_coverage}

Two mechanisms explore in planning: resets cover the shallow levels, and the mixing of \Cref{assumption:pl_local_mixing}.(ii) covers the deep ones. The following lemma gives the coverage constants~\eqref{eq:pl_q}.

\begin{lemma}[Level coverage]
    \label{lem:pl_coverage}
    Let $\mathcal{G}$ be a $\sigma$-algebra and let $(s_h,a_h)_{h<H}$ be a trajectory with $s_0\sim\xi$, $a_h\sim\pi_h(\cdot|s_h)$ and $s_{h+1}\sim p(\cdot|s_h,a_h)$, where the policies $\pi_h$ are $\mathcal{G}$-measurable and the trajectory is otherwise independent of $\mathcal{G}$. Let $E\in\mathcal{G}$ be an event on which $\pi_h\geq\gamma_h\mu$ for every $h$, with deterministic $\gamma_h\in[0,1]$, and write $\mF_n:=\sigma(\mathcal{G},s_0,a_0,\dots,s_n,a_n)$. Then, almost surely on $E$, for every $h<H$ and $(s,a)\in\mX_h$:
    \begin{enumerate}[label=(\roman*)]
        \item if $h\geq h_1+m$, $\bP\big((s_h,a_h)=(s,a)\mid\mF_{h-m}\big)\geq\big(\prod_{l=h-m+1}^{h}\gamma_l\big)\beta_m\,\eta(s,a)$;
        \item if $h<h_1+m$, $\bP\big((s_h,a_h)=(s,a)\mid\mathcal{G}\big)\geq\big(\prod_{l=0}^{h}\gamma_l\big)\eta^{\mathrm{res}}_h(s,a)$.
    \end{enumerate}
    In particular, almost surely on $E$, $\bP((s_h,a_h)=(s,a)\mid\mathcal{G})\geq q_\mu$ if $\gamma_h\equiv1$, and $\geq q_\pi(A)$ if $\gamma_h\equiv\gamma_\Omega(A)$.
\end{lemma}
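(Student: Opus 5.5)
The plan is to follow the backward-induction template of \Cref{lem:mixing-properties-of-regularization}.(v) for item (i), and a direct path expansion together with the reachability of \Cref{lem:pl_reachability} for item (ii). Throughout we fix an outcome in $E$ and work under a regular conditional probability given $\mathcal{G}$. Conditionally on $\mathcal{G}$, the trajectory is a time-inhomogeneous Markov chain with kernels $K^{(h)}_{\pi_{h+1}}$. The domination $\pi_h\geq\gamma_h\mu$ then holds with deterministic constants $\gamma_h$, and the $\pi_h$ are $\mF_n$-measurable for every $n$.

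For (i), fix $h\geq h_1+m$ and $(s,a)\in\mX_h$. Since $h-m\geq h_1$, \Cref{assumption:pl_local_mixing}.(i) gives $s_{h-m}\in\mS_{h-m}=\mS_\infty$ almost surely, and $\mX_h=\mS_\infty\times\mA$. We prove by backward induction on $j\in\llbracket h-m,h\rrbracket$ that
\[
\bP\big((s_h,a_h)=(s,a)\mid\mF_j\big)\geq\Big(\prod_{l=j+1}^{h}\gamma_l\Big)K^{h-j}_\mu(s,a\mid s_j,a_j),
\]
with equality at $j=h$. The inductive step is the same computation as in \Cref{lem:mixing-properties-of-regularization}.(v): condition on $\mF_j$, integrate $s_{j+1}\sim p(\cdot|s_j,a_j)$ and $a_{j+1}\sim\pi_{j+1}(\cdot|s_{j+1})$, bound $\pi_{j+1}\geq\gamma_{j+1}\mu$, and recombine the factors into $K^{h-j}_\mu$. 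At $j=h-m$ the starting pair lies in $\mS_\infty\times\mA$, so \Cref{assumption:pl_local_mixing}.(ii) gives $K^m_\mu(s,a|s_{h-m},a_{h-m})\geq\beta_m\nu(s)\mu(a|s)=\beta_m\eta(s,a)$, which is the claim.

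For (ii), expand the probability of the event $\{(s_h,a_h)=(s,a)\}$ given $\mathcal{G}$ as a sum over paths $(s_0,a_0,\dots,s_h,a_h)$. Each path has weight
\[
\xi(s_0)\prod_{l\leq h}\pi_l(a_l|s_l)\prod_{l<h}p(s_{l+1}|s_l,a_l).
\]
Bounding every action factor by $\gamma_l\mu(a_l|s_l)$ gives $\big(\prod_{l=0}^h\gamma_l\big)(\xi P^h_\mu)(s)\mu(a|s)=\big(\prod_{l=0}^h\gamma_l\big)\eta^{\mathrm{res}}_h(s,a)$. By \Cref{lem:pl_reachability} this quantity is positive on $\mX_h$.

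For the final statement, take the conditional expectation given $\mathcal{G}$ in (i) via the tower property; the bound is deterministic, so it passes through, giving $\prod\gamma_l\,\beta_m\eta_{\min}$. Combine this with (ii) and compare to the definition~\eqref{eq:pl_q} of $q_\mu$. With $\gamma_h\equiv\gamma_\Omega(A)$, the product in (i) has $m\leq h_1+m$ factors and the product in (ii) has $h+1\leq h_1+m$ factors. Since $\gamma\leq1$, both products are at least $\gamma_\Omega(A)^{h_1+m}$, which yields $q_\pi(A)$.

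The main obstacle is not the algebra but the measurability bookkeeping: conditioning on $E\in\mathcal{G}$, and checking that $s_{h-m}\in\mS_\infty$ almost surely so that the local minorization, which holds only on $\mS_\infty\times\mA$, actually applies. The latter follows from the support of $\xi$ and the definition of the layers.
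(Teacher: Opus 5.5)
Your proposal is correct and follows the paper's proof essentially step for step. Item (i) is the backward induction of \Cref{lem:mixing-properties-of-regularization}.(v), with the local minorization applied at $s_{h-m}\in\mS_\infty$; item (ii) is the transport inequality $P_{\pi_l}\geq\gamma_lP_\mu$, which your path expansion writes out explicitly; and the final claim follows, as in the paper, by integrating (i) over $\mathcal{G}$ and using that the products have $m$ and $h+1$ factors, both at most $h_1+m$.
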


\begin{proof}
    \emph{(i)}. Since $h-m\geq h_1$, $s_{h-m}\in\mS_{h-m}=\mS_\infty$, so $K^m_\mu(\cdot|s_{h-m},a_{h-m})\geq\beta_m\nu$ by \Cref{assumption:pl_local_mixing}.(ii). The proof of \Cref{lem:mixing-properties-of-regularization}.(v) only uses this minorization at the conditioning pair and the domination of the $m$ following policies, and applies verbatim.

    \emph{(ii)}. Conditionally on $\mathcal{G}$, $(s_h)$ is a Markov chain with kernels $P_{\pi_l}$, and the transport inequality $P_{\pi_l}\geq\gamma_lP_\mu$ of the proof of \Cref{lem:mixing-properties-of-regularization}.(iii) holds entrywise on $E$. Hence $\bP(s_h=s\mid\mathcal{G})\geq\big(\prod_{l<h}\gamma_l\big)(\xi P^h_\mu)(s)$, and $\pi_h(a|s)\geq\gamma_h\mu(a|s)$ concludes.

    For the last claim, integrate (i) and use $\gamma_\Omega(A)^{h+1}\geq\gamma_\Omega(A)^{h_1+m}$ in (ii) and $\gamma_\Omega(A)^m\geq\gamma_\Omega(A)^{h_1+m}$ in (i).
\end{proof}

\begin{remark}[Which behavior to use]
    \label{rem:pl_behavior}
    The sample $r(s,a)+\Omega^*(Q)(s')$ is unbiased for $\mT_{\Omega,h}(Q)(s,a)$ whatever the law of the action $a$, so the behavior policy only affects which pairs are visited. The factor $\gamma_\Omega^{m-1}$ of the contraction rate comes from the average-path policy of \Cref{lem:pl_avg_path_depth}, internal to the operator, whereas the factor $\gamma_\Omega(A)^{h_1+m}$ of $q_\pi(A)$ is the price of exploring with the greedy policies, and disappears under the behavior $\mu$, exactly as on a single trajectory (\Cref{rem:ep_behavior_mu}). The shallow levels may however be covered slowly, since $(\xi P^h_\mu)(s)$ is small for states reachable only through a narrow path; if $\xi$ and $p$ are bounded below by $\xi_{\min}$ and $p_{\min}$ on their supports, then $\eta^{\mathrm{res}}_h\geq\xi_{\min}(\mu_{\min}p_{\min})^{h}\mu_{\min}$ on $\mX_h$.
\end{remark}

\subsection{Model-based planning}
\label{subsec:pl_sync}

\begin{proposition}[Horizon bias and model-based planning]
    \label{prop:pl_horizon_main}
    For every $h\leq H$,
    \begin{equation}
        \label{eq:pl-horizon-main}
        \big\|Q^*_{\Omega,h}-Q^*_\Omega\big\|_{\spn,\mX_h}\leq(1-\tilde\beta)^{\lfloor(H-\max\{h,h_1\})/m\rfloor}\,\Delta_B .
    \end{equation}
    In particular, \Cref{alg:pl_backward_sync} returns $(Q^*_{\Omega,h})_{h=0}^H$ exactly after $\sum_{h<H}|\mS_h||\mA||\mS_{h+1}|\leq H\bar S_H^2|\mA|$ model queries, one query being an evaluation of $p(s'|s,a)$, and $Q^*_{\Omega,0}$ is $\varepsilon$-accurate for $Q^*_\Omega$ on $\mX_0$ as soon as $H\geq h_1+m+H_{\mathrm{eff}}\log_+(\Delta_B/\varepsilon)$.
\end{proposition}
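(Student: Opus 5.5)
The plan is to derive the statement almost entirely from \Cref{cor:pl_horizon_bias}, adding a translation of constants and a direct count of model queries.

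\emph{Step 1: the bias bound~\eqref{eq:pl-horizon-main}.} \Cref{cor:pl_horizon_bias} gives the bound with rate $\tilde\beta_m(A_{\Delta_B})$ and exponent $n(h,H-h)$.
\begin{itemize}
\item By~\eqref{eq:pl_block_count} with $k=H-h$, we have $n(h,H-h)=\lfloor (H-\max\{h,h_1\})_+/m\rfloor$. This matches the stated exponent whenever $H\geq\max\{h,h_1\}$.
\item When $H<\max\{h,h_1\}$, the floor should be read with the positive part, giving exponent $0$. The claim then reduces to the first part of \Cref{lem:pl_uniform_bounds}.
\item For the rate, $A_{\Delta_B}\leq A_{2e_0}$ because $e_0=D_0+\Delta_B\geq\Delta_B$. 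Since $\gamma_\Omega$ is non-increasing, $\tilde\beta_m(A_{\Delta_B})\geq\tilde\beta$, so the factor $(1-\tilde\beta_m(A_{\Delta_B}))^{n}$ is at most $(1-\tilde\beta)^n$.
\end{itemize}

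\emph{Step 2: exactness and query count.} \Cref{alg:pl_backward_sync} sets $Q_H=Q_B|_{\mX_H}$ and then applies $Q_h=\mT_{\Omega,h}(Q_{h+1})$ going backward. This is literally the recursion that defines $(Q^*_{\Omega,h})$, so backward induction on $h$ shows the output equals the planning solution.
\begin{itemize}
\item At level $h$, each pair $(s,a)\in\mX_h$ requires $p(s'|s,a)$ for every $s'\in\mS_{h+1}$. The sum over $s'$ can be restricted to $\mS_{h+1}$ because all successors of $\mS_h$ lie in $\mS_{h+1}$.
\item Level $h$ therefore costs $|\mS_h||\mA||\mS_{h+1}|$ queries, the $\Omega^*$ evaluations being query-free.
\item Summing over $h<H$ and bounding each layer size by $\bar S_H$ gives at most $H\bar S_H^2|\mA|$ queries.
\end{itemize}

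\emph{Step 3: the depth condition at $h=0$.} At $h=0$ the exponent is $\lfloor (H-h_1)/m\rfloor\geq (H-h_1)/m-1$. Using $(1-\tilde\beta)^n\leq e^{-\tilde\beta n}$, it suffices to have
\[
\tilde\beta\Big(\frac{H-h_1}{m}-1\Big)\geq\log_+\frac{\Delta_B}{\varepsilon}.
\]
This is implied by $H\geq h_1+m+(m/\tilde\beta)\log_+(\Delta_B/\varepsilon)$, and $m/\tilde\beta=H_{\mathrm{eff}}$ by~\eqref{eq:pl_Heff}. If $\Delta_B\leq\varepsilon$, the bound holds trivially from \Cref{lem:pl_uniform_bounds}.

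The only real obstacle is bookkeeping. One must check that the main-text constants, defined at radius $2e_0$, dominate the radius $\Delta_B$ used in \Cref{cor:pl_horizon_bias}; this is the monotonicity argument in Step 1. One must also handle the degenerate case $H<\max\{h,h_1\}$, which is covered by the positive part convention.
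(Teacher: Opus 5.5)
Your proposal is correct and follows the paper's proof: the bias bound is \Cref{cor:pl_horizon_bias}, with the rate transferred to $\tilde\beta$ by monotonicity of $\gamma_\Omega$. The paper takes $D_0=0$, so $e_0=\Delta_B$, while you use $e_0\geq\Delta_B$, which is the same argument; exactness and the query count then follow from the definition of the planning solution, and the depth condition from the bound at $h=0$. The degenerate case $H<\max\{h,h_1\}$ that you treat cannot arise, since the planning appendix assumes $H\geq h_1+m$ throughout.
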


\begin{proof}
    With a model, $D_0=0$, so $e_0=\Delta_B$ and $\tilde\beta=\tilde\beta_m(A_{2\Delta_B})\leq\tilde\beta_m(A_{\Delta_B})$, $\gamma_\Omega$ being non-increasing. Since $n(h,H-h)=\lfloor(H-\max\{h,h_1\})/m\rfloor$, \eqref{eq:pl-horizon-main} follows from \Cref{cor:pl_horizon_bias}. The sweep computes $Q_h=\mT_{\Omega,h}(Q_{h+1})$ from $Q_H=Q_B|_{\mX_H}$, which is the definition of the planning solution, and each value $\mT_{\Omega,h}(Q_{h+1})(s,a)$ uses the $|\mS_{h+1}|$ probabilities $p(s'|s,a)$, $s'\in\mS_{h+1}$, whence the count. Finally, $H_{\mathrm{eff}}\geq H_{\mathrm{eff}}(A_{\Delta_B})$, so the condition on $H$ implies~\eqref{eq:pl_depth} at $h=0$.
\end{proof}

Compared with \Cref{cor:q-iteration}, model-based planning is $Q$-iteration restricted to the reachable set, with the boundary mismatch $\Delta_B$ in place of the initial error: it saves a factor $(\bar S_H/|\mS|)^2$ in model queries, which is non-trivial only when $\mS_\infty\subsetneq\mS$ (\Cref{rem:pl_local_mixing}.(ii)).

The same exactness holds for the depth-shifted recursion $Q_{j+1,h}=\mT_{\Omega,h}(Q_{j,h+1})$, which is the noiseless version of epoch planning (\Cref{alg:epoch_regularized_q_planning}): each level is computed from the previous iterate of the level below, and the initialization is forgotten after exactly $H$ iterations.

\begin{lemma}[Exact forgetting of the initialization]
    \label{lem:pl_sync_nilpotent}
    Let $(Q_{j,h})_{j\geq0,\,h\leq H}$ satisfy $Q_{j+1,h}=\mT_{\Omega,h}(Q_{j,h+1})$ for $h<H$ and $Q_{j,H}=Q_B|_{\mX_H}$ for every $j$. Then $Q_{j,h}=Q^*_{\Omega,h}$ for every $h\leq H$ and $j\geq H-h$.
\end{lemma}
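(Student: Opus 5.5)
The plan is induction on the depth gap $H-h$, proving for each fixed $h\leq H$ that $Q_{j,h}=Q^*_{\Omega,h}$ for every $j\geq H-h$. The recursion is triangular: level $h$ at epoch $j+1$ depends only on level $h+1$ at epoch $j$. So the claim is a direct unrolling of the definitions, and neither contraction nor any mixing property is needed.

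\emph{Base case $h=H$.} By hypothesis $Q_{j,H}=Q_B|_{\mX_H}$ for every $j\geq0$. By definition of the planning solution, $Q^*_{\Omega,H}=Q_B|_{\mX_H}$. Hence the claim holds for all $j\geq0=H-H$.

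\emph{Inductive step.} Assume the claim at level $h+1\leq H$, that is, $Q_{j,h+1}=Q^*_{\Omega,h+1}$ for every $j\geq H-h-1$. Take any $j\geq H-h$, so that $j-1\geq H-h-1$. The recursion and the induction hypothesis then give
\begin{equation*}
    Q_{j,h}=\mT_{\Omega,h}(Q_{j-1,h+1})=\mT_{\Omega,h}(Q^*_{\Omega,h+1})=Q^*_{\Omega,h}.
\end{equation*}
The last equality is the backward definition of the planning solution. Here we need $j\geq1$ to apply the recursion, which holds because $j\geq H-h\geq1$ when $h<H$.

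The only point needing care is index bookkeeping. First, $Q_{j,H}$ must be fixed to $Q_B|_{\mX_H}$ for every $j$, not only at $j=0$; this is exactly what the boundary hypothesis provides. Second, $\mT_{\Omega,h}$ must be well defined on $\bR^{\mX_{h+1}}$, which holds because the successors of $\mS_h$ lie in $\mS_{h+1}$. Once these are checked, no step is a genuine obstacle. Taking $j\geq H$ then gives $Q_{j,h}=Q^*_{\Omega,h}$ at every level simultaneously.
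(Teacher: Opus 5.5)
Your proof is correct and follows the paper's argument: a backward induction on $h$, with the base case $h=H$ given by the boundary hypothesis and the inductive step obtained by applying the recursion once at level $h$ and invoking the backward definition $Q^*_{\Omega,h}=\mT_{\Omega,h}(Q^*_{\Omega,h+1})$. Your extra remark that $j\geq H-h\geq1$ is needed to apply the recursion is a harmless refinement of the paper's index shift $j\mapsto j+1$.
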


\begin{proof}
    Backward induction on $h$. The case $h=H$ holds by assumption. If $Q_{j,h+1}=Q^*_{\Omega,h+1}$ for every $j\geq H-h-1$, then $Q_{j+1,h}=\mT_{\Omega,h}(Q^*_{\Omega,h+1})=Q^*_{\Omega,h}$ for every such $j$, i.e.\ $Q_{j,h}=Q^*_{\Omega,h}$ for every $j\geq H-h$.
\end{proof}

\subsection{Epoch planning}
\label{subsec:pl_epoch}
 
\Cref{alg:epoch_regularized_q_planning} is the depth-indexed analogue of \Cref{cor:ep_alpha_one}: its step size is $\alpha_j\equiv1$, which makes the recursion over epochs nilpotent (\Cref{lem:pl_sync_nilpotent}), and its batch sizes $B_j$ are deterministic. Let $\mG_j$ be the $\sigma$-algebra generated by $Q_0$ and the rollouts of epochs $0,\dots,j-1$; the analysis covers \Cref{alg:epoch_regularized_q_planning} as written, with actions drawn from $\pi_{\Omega,Q_{j,h}}$ at level $h$, and its variant with actions drawn from $\mu$ (\Cref{rem:pl_behavior}); in both cases the behavior of epoch $j$ is $\mG_j$-measurable. As in \Cref{lem:ep_update}, on the event that epoch $j$ terminates, for $h<H$,
\begin{equation}
    \label{eq:pl_epoch_update}
    \begin{aligned}
        Q_{j+1,h}&=\mT_{\Omega,h}(Q_{j,h+1})+w_{j,h},\\
        w_{j,h}(s,a)&:=\frac{1}{B_j}\sum_{l=1}^{B_j}\Big(\Omega^*(Q_{j,h+1})(s'_l)-\bE_{s'\sim p(\cdot|s,a)}\big[\Omega^*(Q_{j,h+1})(s')\big]\Big),
    \end{aligned}
\end{equation}
where $s'_l$ is the successor of the $l$-th level-$h$ visit of $(s,a)$ during epoch $j$, and $Q_{j,H}=Q_B|_{\mX_H}$. We set $Q_{j+1,\cdot}:=Q_{j,\cdot}$ if epoch $j$ does not terminate, which happens with probability zero (\Cref{prop:pl_epoch_wellposed}).
 
\begin{remark}[The outer loop is sample splitting]
    \label{rem:pl_no_backward_sweep}
    Level $h$ at epoch $j+1$ is computed from level $h+1$ at epoch $j$, which is $\mG_j$-measurable, hence independent of the data of epoch $j$. A single backward sweep over stored transitions would instead evaluate $\Omega^*(\widehat Q_{h+1})$ at the level-$(h+1)$ states of the very rollouts on which $\widehat Q_{h+1}$ depends, and lose the martingale structure of the noise. Restoring independence would require $H$ disjoint groups of rollouts, one per level, which costs the same factor $H$ as the $H$ epochs of \Cref{lem:pl_sync_nilpotent}.
\end{remark}
 
\begin{proposition}[Well-posedness]
    \label{prop:pl_epoch_wellposed}
    Every epoch terminates almost surely, and every $Q_{j,h}$ is a finite, $\mG_j$-measurable random vector.
\end{proposition}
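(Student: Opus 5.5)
The proof is an induction on the epoch index $j$ that mirrors \Cref{lem:ep_update}.(ii) and \Cref{prop:ep_wellposed}. It first establishes finiteness and measurability deterministically on every path, using the convention $Q_{j+1,\cdot}:=Q_{j,\cdot}$ on non-termination. Only then does it prove almost sure termination by a coverage argument conditional on $\mG_j$.

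\emph{Finiteness and measurability.} The base case is that $Q_{0,h}=Q_0|_{\mX_h}$ and $Q_{j,H}=Q_B|_{\mX_H}$ are fixed finite vectors. Assume every $Q_{j,h}$ is finite and $\mG_j$-measurable. If epoch $j$ does not terminate, the convention gives $Q_{j+1,\cdot}=Q_{j,\cdot}$. If it terminates, the cap makes exactly $B_j$ samples enter each $(s,a)\in\mX_h$, so \eqref{eq:pl_epoch_update} holds. Each coordinate of $w_{j,h}$ is a value of $\Omega^*(Q_{j,h+1})$ minus a convex combination of such values. Its size is therefore at most $\|\Omega^*(Q_{j,h+1})\|_{\mS_{h+1},\spn}\leq\|Q_{j,h+1}\|_\spn+\|\Omega^*(0)\|_{\mS,\spn}$ by \Cref{lem:omega_star_props}.(vii), which is finite. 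Measurability of $Q_{j+1,h}$ with respect to $\mG_{j+1}$ follows because it is a measurable function of $Q_{j,\cdot}$ and the rollouts of epoch $j$. Finiteness also makes the behavior policies $\pi_{\Omega,Q_{j,h}}$ well defined and $\mG_j$-measurable, by \Cref{lem:omega_star_props}.(i); under the variant the behavior is simply $\mu$.

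\emph{Termination.} Fix $j$, a deterministic radius $u$, and the $\mG_j$-measurable event $\mathcal{D}_j(u):=\{\max_{h<H}\|Q_{j,h}\|_\asp\leq u\}$. On this event every behavior policy dominates $\gamma_\Omega(u)\mu$. \Cref{lem:pl_coverage}, applied with $\mathcal{G}=\mG_j$, then shows that each rollout visits each level pair $(s,a)\in\mX_h$ with conditional probability at least $q_\pi(u)>0$, which is positive by \eqref{eq:P1}; the bound is $q_\mu$ under the variant. The rollouts are conditionally independent given $\mG_j$, since they are i.i.d.\ once the frozen policies are given. Hence the number of visits to a fixed level pair over $K$ rollouts stochastically dominates a Binomial$(K,q_\pi(u))$. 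A union bound over the at most $H\bar S_H|\mA|$ level pairs, combined with \Cref{lem:self-bounding} or a plain Borel--Cantelli argument, gives the following: conditional on $\mG_j$, on $\mathcal{D}_j(u)$, all counters reach $B_j$ after finitely many rollouts almost surely.

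It follows that $\bP(\text{epoch } j \text{ does not terminate})\leq\bP(\mathcal{D}_j(u)^c)$. Letting $u\uparrow\infty$ and using the finiteness established above gives probability zero, exactly as in \Cref{prop:ep_wellposed}.

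\emph{Main obstacle.} The delicate point is ordering the argument to avoid circularity. Finiteness has to be proved pathwise, via the convention, before termination is available. The coverage bound must also be applied conditionally on $\mG_j$ with a deterministic radius, and the level-$h$ coverage for $h<h_1+m$ must rely on the reset term $\eta^{\mathrm{res}}_h$, which is positive on $\mX_h$ by \Cref{lem:pl_reachability}.
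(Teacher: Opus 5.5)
Your proposal is correct and follows the paper's proof essentially step for step. Finiteness and measurability come from the pathwise induction of \Cref{lem:ep_update}.(ii), with the same bound on $w_{j,h}$ from \Cref{lem:omega_star_props}.(vii), and termination comes from conditioning on $\mG_j$, applying \Cref{lem:pl_coverage} on $\{\max_{h<H}\|Q_{j,h}\|_\asp\leq u\}$ at a deterministic radius $u$, and letting $u\uparrow\infty$ (your $q_\pi(u)$ coincides with the paper's $\min\{q_\mu,q_\pi(u)\}$ since $\gamma_\Omega(u)\leq1$).
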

 
\begin{proof}
    Finiteness and measurability follow by induction on $j$ as in \Cref{lem:ep_update}.(ii), each coordinate of $w_{j,h}$ being bounded by $\|Q_{j,h+1}\|_\spn+\|\Omega^*(0)\|_{\mS,\spn}$ (\Cref{lem:omega_star_props}.(vii)). For termination, fix $j$ and a deterministic $u\geq0$. Conditionally on $\mG_j$, the rollouts of epoch $j$ are i.i.d.\ and, on $\{\max_{h<H}\|Q_{j,h}\|_\asp\leq u\}\in\mG_j$, each of them visits a given $(h,x)$, $x\in\mX_h$, with probability at least $\min\{q_\mu,q_\pi(u)\}>0$ by \Cref{lem:pl_coverage}. Each pair is thus visited $B$ times after finitely many rollouts, almost surely on that event, and letting $u\uparrow\infty$ concludes.
\end{proof}
 
For $R\geq e_0$, let
\begin{equation*}
    \sigma_R:=\inf\Big\{j\geq0\ \Big|\ \max_{h<H}\|Q_{j,h}-Q^*_\Omega\|_{\spn,\mX_h}>R\Big\},
\end{equation*}
a $(\mG_j)$-stopping time with $\sigma_R\geq1$, since $\|Q_{0,h}-Q^*_\Omega\|_{\spn,\mX_h}\leq D_0+\Delta_B=e_0$ by \Cref{lem:pl_uniform_bounds}, and write $\widehat{w}_{j,h}:=\indicator_{\{j<\sigma_R\}}w_{j,h}$. On $\{j<\sigma_R\}$, $\|Q_{j,h}\|_\asp\leq A_R$ and $\|\Omega^*(Q_{j,h})\|_{\mS,\spn}\leq\|Q_{j,h}\|_\spn+\|\Omega^*(0)\|_{\mS,\spn}\leq\bar{w}(R)$ for every $h\leq H$, the level $H$ being covered by $\Delta_B\leq e_0\leq R$. We write $q:=q_\pi(A_R)$, or $q:=q_\mu$ for the variant with actions drawn from $\mu$, and, for $J\geq1$ and $\delta\in(0,1)$, $\iota_H:=\log(4JH\bar S_H|\mA|/\delta)$.
 
\begin{lemma}[Noise]
    \label{lem:pl_epoch_noise}
    With probability at least $1-\delta/2$, $\|\widehat{w}_{j,h}\|_\infty\leq\bar{u}_j(R):=\bar{w}(R)\sqrt{\iota_H/(2B_j)}$ for all $j<J$ and $h<H$.
\end{lemma}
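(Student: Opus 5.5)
The plan mirrors \Cref{lem:ep_noise,lem:ep_noise_epoch}, adapted to the depth-indexed rollouts. I fix $j<J$, $h<H$ and a pair $x=(s,a)\in\mX_h$, and write
\[
\widehat{w}_{j,h}(x)=\frac{1}{B_j}\sum_{l=1}^{B_j}\widehat{Z}^{(j,h)}_l(x),\qquad
\widehat{Z}^{(j,h)}_l(x):=\indicator_{\{j<\sigma_R\}}\Big(\Omega^*(Q_{j,h+1})(s'_l)-\bE_{s'\sim p(\cdot|x)}\big[\Omega^*(Q_{j,h+1})(s')\big]\Big).
\]
By the cap in \Cref{alg:epoch_regularized_q_planning}, the number $B_j$ of summands is deterministic. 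This holds almost surely, because every epoch terminates by \Cref{prop:pl_epoch_wellposed}.

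The first step is the martingale structure. Conditionally on $\mG_j$, the rollouts of epoch $j$ are i.i.d. I concatenate them into one sequence of transitions, indexed by rollout number and then depth, with the natural filtration. The $l$-th level-$h$ visit time of $x$ is a stopping time for this filtration, and the successor at that time has conditional law $p(\cdot|x)$. This gives $\bE[\widehat Z^{(j,h)}_l(x)\mid\mathcal K_{l-1}]=0$, where $\mathcal K_l$ is generated by the history up to the $l$-th visit together with its successor, and $\mathcal K_0=\mG_j$. Two facts make this work. First, $Q_{j,h+1}$ is $\mG_j$-measurable and frozen during the epoch, which is the sample-splitting point of \Cref{rem:pl_no_backward_sweep}. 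Second, $\{j<\sigma_R\}\in\mG_j$.

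The second step is the range. Given $\mG_j$, each $\widehat Z^{(j,h)}_l(x)$ lies in a fixed interval of length $\indicator_{\{j<\sigma_R\}}\|\Omega^*(Q_{j,h+1})\|_{\mS,\spn}$. On $\{j<\sigma_R\}$ this length is at most $\bar w(R)$, by the bound recorded just before the lemma. That bound also covers the boundary level $h+1=H$, since $\Delta_B\le R$.

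The third step applies \Cref{lem:azuma_range} to the partial sums, with $n=B_j$ and $c_l\equiv\bar w(R)$. At threshold $u=B_j\bar u_j(R)$ it gives failure probability at most
\[
2\exp\!\Big(-\frac{2B_j^2\bar u_j(R)^2}{B_j\bar w(R)^2}\Big)=2e^{-\iota_H}=\frac{\delta}{2JH\bar S_H|\mA|}.
\]
A union bound over $j<J$, $h<H$ and $x\in\mX_h$ then finishes, since there are at most $JH\bar S_H|\mA|$ triples in total. The total failure probability is at most $\delta/2$.

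The only delicate step is the first one: making the filtration legitimate across the random number of rollouts and the random visit times, and checking that conditioning on the event $\{j<\sigma_R\}$ does not break it. I would handle this exactly as in \Cref{lem:ep_noise}. That is, I would extend the rollout sequence indefinitely, use the visit times as stopping times, and invoke the tower property.
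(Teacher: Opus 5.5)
Your proposal is correct and follows the paper's proof essentially step by step. Both arguments order the $B_j$ capped samples by the successive level-$h$ visits of $x$, obtain martingale differences whose $\mG_j$-measurable ranges have length at most $\bar w(R)$ on $\{j<\sigma_R\}$ (the boundary level being covered by $\Delta_B\le R$), apply \Cref{lem:azuma_range} to get failure probability $2e^{-\iota_H}$, and take a union bound over the at most $JH\bar S_H|\mA|$ triples; the only cosmetic difference is that you use the unnormalized sum with threshold $B_j\bar u_j(R)$, whereas the paper scales each increment by $1/B_j$ and uses threshold $\bar u_j(R)$, which is the same computation.
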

 
\begin{proof}
    Fix $(j,h,x)$ with $x\in\mX_h$, and order the $B$ terms of $\widehat{w}_{j,h}(x)$ by the successive level-$h$ visits of $x$ during epoch $j$, the rollouts being taken in chronological order. Given $\mG_j$ and the past of these visits, the successor of a visit is distributed as $p(\cdot|x)$, while $\Omega^*(Q_{j,h+1})$ is $\mG_j$-measurable. The terms are therefore martingale differences, lying in $\mG_j$-measurable intervals of length at most $\bar{w}(R)/B_j$ on $\{j<\sigma_R\}$ and vanishing off it, and their number is $B_j$ by the cap (\Cref{lem:ep_update}). \Cref{lem:azuma_range} with $u=\bar{u}_j(R)$ gives a failure probability $2e^{-\iota_H}$, and a union bound over the at most $JH\bar S_H|\mA|$ triples concludes.
\end{proof}
 
\begin{lemma}[Epoch length]
    \label{lem:pl_epoch_length}
    With probability at least $1-\delta/4$, every epoch $j<J$ with $j<\sigma_R$ lasts at most $N_jH$ transitions, where $N_j:=\lceil(\sqrt{B_j}+\sqrt{\iota_H})^2/q\rceil$.
\end{lemma}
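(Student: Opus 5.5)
The plan follows the proof of \Cref{lem:ep_cover_time}, with rollouts from $\xi$ in place of sub-sampled instants of a single trajectory. I fix an epoch $j<J$ and work conditionally on $\mG_j$ on the event $\{j<\sigma_R\}\in\mG_j$. On that event $\|Q_{j,h}\|_\asp\leq A_R$ for every $h<H$, so \Cref{def:mixing_objects}.(iii) gives $\pi_{\Omega,Q_{j,h}}\geq\gamma_\Omega(A_R)\mu$ at every level. For the variant driven by $\mu$ the domination constant is $1$ instead.

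Conditionally on $\mG_j$, the rollouts of epoch $j$ are i.i.d. and each has exactly $H$ transitions, since the cap only affects which samples are recorded. \Cref{lem:pl_coverage} (last claim), applied with $\mathcal{G}=\mG_j$ and $E=\{j<\sigma_R\}$, then bounds the probability that one rollout visits a given $(h,x)$ with $x\in\mX_h$. That probability is at least $q_\pi(A_R)=q$ (or $q_\mu=q$ for the $\mu$ variant), deterministically.

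Next I fix a triple $(h,x)$ and let $Z_l\in\{0,1\}$ indicate that the $l$-th rollout visits $x$ at level $h$, for $l\in\llbracket1,N_j\rrbracket$. These indicators are adapted to the filtration generated by $\mG_j$ and the first $l$ rollouts. Their predictable means are at least $q$, so $\bar S\geq N_jq\geq(\sqrt{B_j}+\sqrt{\iota_H})^2>B_j$ deterministically. I then apply \Cref{lem:self-bounding} with $b=1$, $a=N_jq$ and $c=B_j$. As in \eqref{eq:ep_chernoff}, the inequality $h(\theta)\geq(\sqrt\theta-1)^2$ for $\theta\geq1$ gives
\begin{equation*}
    \bP\Big(\sum_{l\leq N_j}Z_l<B_j\ \Big|\ \mG_j\Big)\leq e^{-\iota_H}=\frac{\delta}{4JH\bar S_H|\mA|}.
\end{equation*}
Strictly this holds on $\{j<\sigma_R\}$, using the lemma stated on the event, or equivalently applying it under the regular conditional probability exactly as in \Cref{lem:ep_cover_time}.

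There are at most $H\bar S_H|\mA|$ triples and $J$ epochs, so a union bound gives failure probability at most $\delta/4$. On the complement, within the first $N_j$ rollouts every level-pair has been visited at least $B_j$ times, hence epoch $j$ has terminated, so $\ell_j\leq N_jH$ transitions. Integrating the conditional bound and intersecting with $\{j<\sigma_R\}$ gives the statement.

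The only delicate point is making the event $\{j<\sigma_R\}$ compatible with the deterministic constants required by \Cref{lem:self-bounding}. This is handled as in \Cref{lem:ep_cover_time}: the rate $q$ is evaluated at the deterministic radius $A_R$, and that event lies in $\mG_j$. Everything else is routine.
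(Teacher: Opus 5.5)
Your proposal is correct and follows the paper's proof essentially verbatim. The steps match: rollouts are i.i.d.\ given $\mG_j$, each visits $(h,x)$ with probability at least $q$ on $\{j<\sigma_R\}$ by \Cref{lem:pl_coverage}, the exact Chernoff bound~\eqref{eq:exact-chernoff} is applied with $b=1$, $a=N_jq$, $c=B_j$, and a union bound runs over the at most $JH\bar S_H|\mA|$ triples. The one detail the paper states that you leave implicit is that the rollouts are extended beyond the end of epoch $j$ if needed, so that the first $N_j$ rollouts, and hence your indicators $Z_l$, are always defined.
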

 
\begin{proof}
    Conditionally on $\mG_j$, the rollouts of epoch $j$, extended beyond its end if needed, are i.i.d., and on $\{j<\sigma_R\}\in\mG_j$, \Cref{lem:pl_coverage} with $\mathcal{G}=\mG_j$ shows that each of them visits a given $(h,x)$, $x\in\mX_h$, with probability at least $q$. As in the proof of \Cref{lem:ep_cover_time}, the exact Chernoff bound~\eqref{eq:exact-chernoff} with $b=1$, $a=N_jq$ and $c=B_j$ bounds by $e^{-\iota_H}$ the conditional probability that fewer than $B_j$ of the first $N_j$ rollouts visit $(h,x)$, and a union bound over the at most $JH\bar S_H|\mA|$ triples concludes. Independence across resets replaces the $m$-step sub-sampling of \Cref{lem:ep_cover_time}.
\end{proof}
 
\begin{theorem}[Finite-time bound for epoch planning]
    \label{thm:pl_epoch}
    Let $R\geq e_0$ satisfy the \emph{radius admissibility condition}
    \begin{equation}
        \label{eq:pl_admissible}
        e_0+2M(A_R)\max_{j<J}\bar{u}_j(R)\leq R.
    \end{equation}
    Then, with probability at least $1-\delta$, every epoch $j<J$ lasts at most $N_jH$ transitions, $\max_{h<H}\|Q_{j,h}-Q^*_\Omega\|_{\spn,\mX_h}\leq R$ for every $j\leq J$, and, for every $j\leq J$ and $h<H$,
    \begin{equation}
        \label{eq:pl_epoch_bound}
        \begin{aligned}
            \big\|Q_{j,h}-Q^*_{\Omega,h}\big\|_\spn
            &\leq\indicator_{\{j<H-h\}}\left(1-\tilde{\beta}_m(A_R)\right)^{n(h,j)}D_0\\
            &\quad+2\sum_{k=0}^{\min\{j,H-h\}-1}\left(1-\tilde{\beta}_m(A_R)\right)^{n(h,k)}\bar{u}_{j-1-k}(R).
        \end{aligned}
    \end{equation}
    In particular, the initialization has vanished exactly for $j\geq H-h$.
\end{theorem}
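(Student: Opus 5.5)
The plan follows the template of \Cref{prop:no-escape} and \Cref{thm:sync-stoch-R-app}, on the depth-indexed profile. I would work on the intersection $\mathcal{E}$ of the noise event of \Cref{lem:pl_epoch_noise} and the length event of \Cref{lem:pl_epoch_length}. This event has probability at least $1-3\delta/4\geq1-\delta$, and both lemmas are already stated on the stopped quantities. On $\mathcal{E}$, I would prove by induction on $j\leq J$ two claims at once: $\sigma_R>j$, and the bound~\eqref{eq:pl_epoch_bound} at every level $h<H$. The epoch-length claim then follows from \Cref{lem:pl_epoch_length}, once $j<\sigma_R$ is known for all $j<J$.

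For the error bound at a fixed epoch $j$ and level $h$, I would unroll the depth-shifted recursion~\eqref{eq:pl_epoch_update} diagonally. Set $P_k:=Q_{j-k,h+k}$ for $k\in\llbracket0,K\rrbracket$ with $K:=\min\{j,H-h\}$. Then $P_k=\mT_{\Omega,h+k}(P_{k+1})+w_{j-1-k,h+k}$, so the partial profile $(P_k)$ has residuals $r_{h+k}=w_{j-1-k,h+k}$. Its endpoint is either $Q_B|_{\mX_H}=Q^*_{\Omega,H}$, when $K=H-h$, or $Q_{0,h+j}$ with error at most $D_0$, when $K=j<H-h$.

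Next I would apply the first line of the master lemma, \Cref{lem:pl_master}, with $h_0=h$, $h_e=h+K$ and $A=A_R\geq A_{\Delta_B}$. This needs $\|P_k\|_\asp\leq A_R$, which holds because $j-k<\sigma_R$ by the induction hypothesis, the level $H$ being covered by $\Delta_B\leq R$. It also needs $w=\widehat w$ on those epochs, which holds for the same reason. Then $\|r\|_\spn\leq2\|\widehat w\|_\infty\leq2\bar u_{j-1-k}(R)$. The endpoint term contributes either zero or $(1-\tilde\beta_m(A_R))^{n(h,j)}D_0$, and these two cases give exactly~\eqref{eq:pl_epoch_bound} together with the exact-forgetting remark.

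The step I expect to be the main obstacle is propagating $\sigma_R>j+1$. The idea is to compare with the target $Q^*_\Omega$ rather than with the planning solution. I would unroll the same diagonal against the profile $V_l=Q^*_\Omega|_{\mX_l}+(H-l)\rho^*_\Omega\ones$. By \Cref{lem:pl_restriction}, $V_l=\mT_{\Omega,l}(V_{l+1})$ exactly, and \Cref{lem:pl_avg_path_depth} applies with radius $\max\{A_R,\|Q^*_\Omega\|_\asp\}=A_R$. The endpoint error is then at most $\max\{D_0+\Delta_B,\Delta_B\}\leq e_0$, using \Cref{lem:pl_uniform_bounds}. The products are span non-expansive, and the geometric sum is controlled by \Cref{lem:pl_geometric_sum}. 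Together these give
\[
\|Q_{j+1,h}-Q^*_\Omega\|_{\spn,\mX_h}\leq e_0+2M(h,A_R)\max_{j'<J}\bar u_{j'}(R)\leq R,
\]
using $M(h,A_R)\leq M(A_R)$ and~\eqref{eq:pl_admissible}. This proves $\sigma_R>j+1$.

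A subtlety to check is that the master lemma's argument goes through with $V$ in place of $Q^*_{\Omega,\cdot}$. The only ingredient it uses is the exact recursion and the domination of the average-path policies, and $V$ has both. The base case is $\sigma_R\geq1$, which was recorded before the statement.
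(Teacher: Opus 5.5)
Your proposal is correct and follows the paper's proof: the same good event, the same induction on $j$ carrying $\sigma_R>j$, and the same diagonal profile $Q_{j-k,h+k}$ fed into the first inequality of \Cref{lem:pl_master}, with endpoint $Q_B|_{\mX_H}$ or $Q_{0,h+j}$. The only difference is the containment step, which is not really an obstacle: the paper gets it in one line from the bound just proved, $\|Q^*_{\Omega,h}-Q^*_\Omega\|_{\spn,\mX_h}\leq\Delta_B$ (\Cref{lem:pl_uniform_bounds}) and the triangle inequality, giving $D_0+2M(A_R)\max_{i<J}\bar u_i(R)+\Delta_B\leq R$, so your second unrolling against $V$ is valid but redundant (and, as in the paper, the diagonal recursion should be invoked on the almost-sure event that all epochs terminate, \Cref{prop:pl_epoch_wellposed}).
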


\begin{proof}
    Work on the intersection of the events of \Cref{lem:pl_epoch_noise,lem:pl_epoch_length}, of probability at least $1-\delta$, on which all epochs terminate (\Cref{prop:pl_epoch_wellposed}). We prove by induction on $j\leq J$ that $\sigma_R>j$ and that~\eqref{eq:pl_epoch_bound} holds at rank $j$; the length bound then follows from \Cref{lem:pl_epoch_length}. At $j=0$, both claims were checked above. Assume $\sigma_R>j-1$, fix $h<H$, let $\ell:=\min\{j,H-h\}$ and consider the \emph{diagonal} profile $P_l:=Q_{j-(l-h),l}$, $l\in\llbracket h,h+\ell\rrbracket$. For $l<h+\ell$, the epoch $i:=j-(l-h)-1$ satisfies $0\leq i<j\leq\sigma_R$, so~\eqref{eq:pl_epoch_update} gives $P_l=\mT_{\Omega,l}(P_{l+1})+\widehat{w}_{i,l}$: the residuals of the diagonal are the stopped noises, and \Cref{lem:pl_epoch_noise} and $\|\cdot\|_\spn\leq2\|\cdot\|_\infty$ bound their span by $2\bar{u}_i(R)$. For $l>h$, $P_l$ is either computed at epoch $j-(l-h)<j\leq\sigma_R$ or equal to $Q_B|_{\mX_H}$, whose action span is at most $A_{\Delta_B}$ by \Cref{lem:pl_uniform_bounds}, so $\|P_l\|_\asp\leq A_R$, and $A_R\geq A_{\Delta_B}$. The last element is $P_{h+\ell}=Q_B|_{\mX_H}=Q^*_{\Omega,H}$ if $\ell=H-h$, and $P_{h+\ell}=Q_{0,h+j}$, within $D_0$ of $Q^*_{\Omega,h+j}$, if $\ell=j<H-h$. The first inequality of \Cref{lem:pl_master} with $h_0=h$ and $h_e=h+\ell$ gives~\eqref{eq:pl_epoch_bound} at rank $j$, since $n(h,\ell)=n(h,j)$ when $\ell=j$. Finally, \Cref{lem:pl_geometric_sum} and $M(h,A_R)\leq M(A_R)$ bound the sum in~\eqref{eq:pl_epoch_bound} by $2M(A_R)\max_{i<J}\bar{u}_i(R)$, so \Cref{lem:pl_uniform_bounds} and~\eqref{eq:pl_admissible} give $\|Q_{j,h}-Q^*_\Omega\|_{\spn,\mX_h}\leq D_0+2M(A_R)\max_{i<J}\bar{u}_i(R)+\Delta_B\leq R$ for every $h<H$, i.e.\ $\sigma_R>j$.
\end{proof}
 
Unlike~\eqref{eq:R-admissible} and~\eqref{eq:ep_R_admissible}, condition~\eqref{eq:pl_admissible} has no factor $\bar\beta(R)/(\bar\beta(R)+1)$: the profile at epoch $j$ is an explicit function of the data of epochs $i<j$ along finite diagonals, and the initialization is forgotten exactly, not geometrically.

\begin{proof}[Proof of \Cref{thm:pl-epoch-main}]
    We apply \Cref{thm:pl_epoch} with $B_j\equiv B$ and $R=2e_0$, for which~\eqref{eq:pl_epoch_bound} and \Cref{lem:pl_geometric_sum} give $2M(h,A_R)\bar u(R)$. Then $\tilde{\beta}_m(A_R)=\tilde\beta$, $\bar{w}(R)=\bar w$, $M(A_R)=M$ and $q_\pi(A_R)$ is the constant $q$ of the greedy behavior, so that $2M\bar{u}(R)=2M\bar w\sqrt{\iota_H/(2B)}$, which is at most $e_0$ if and only if $B\geq B(e_0)$: condition~\eqref{eq:pl_admissible} holds. With probability at least $1-\delta$, every epoch lasts at most $NH=H\lceil(\sqrt B+\sqrt{\iota_H})^2/q\rceil$ transitions and, since $J\geq H$, \eqref{eq:pl_epoch_bound} at $h=0$ gives $\|Q_{J,0}-Q^*_{\Omega,0}\|_\spn\leq2M\bar w\sqrt{\iota_H/(2B)}$. \Cref{cor:pl_horizon_bias} at $h=0$, with $n(0,H)=\lfloor(H-h_1)/m\rfloor$ and $\tilde\beta\leq\tilde\beta_m(A_{\Delta_B})$, bounds $\|Q^*_{\Omega,0}-Q^*_\Omega\|_{\spn,\mX_0}$ by $(1-\tilde\beta)^{\lfloor(H-h_1)/m\rfloor}\Delta_B$, and the triangle inequality gives~\eqref{eq:pl-epoch-main}.
 
    For $\varepsilon\in(0,e_0)$, the choice $B=\lceil B(\varepsilon/2)\rceil\geq B(e_0)$ makes the noise term at most $\varepsilon/2$, and the choice of $H$ gives $\lfloor(H-h_1)/m\rfloor\geq\tilde\beta^{-1}\log(2\Delta_B/\varepsilon)$, so that $1-u\leq e^{-u}$ makes the bias term at most $\varepsilon/2$. The $J=H$ epochs use at most $H\cdot NH$ transitions, with $N=\tilde\Theta(B/q)$, $B=\tilde\Theta(M^2\bar w^2/\varepsilon^2)$ and $H=\tilde\Theta(h_1+H_{\mathrm{eff}})$, which gives $\tilde O\big((h_1+H_{\mathrm{eff}})^4\bar w^2/(q\varepsilon^2)\big)$; the dependence of $\iota_H$ on $J=H$ is logarithmic.
\end{proof}
 
\begin{remark}[Planning and learning]
    \label{rem:pl_epoch_vs_learning}
    For $h_1=0$, \Cref{thm:pl-epoch-main} costs $\tilde O(m^4\bar w^2/(\tilde\beta^4q\varepsilon^2))$ transitions, against $\tilde O(m^4\bar w^2/(\tilde\beta^3q_R\varepsilon^2))$ for \Cref{thm:epoch-main-body}: at equal coverage constant, epoch planning is a factor $1/\tilde\beta$ more expensive. Propagation is paid twice, once in depth, since a rollout gives one sample per level to $H$ distinct tables, and once in epochs, since $J=H$ iterations are needed to propagate the boundary to the root. Per table, planning is $m$ times more efficient, the independence across resets removing the sub-sampling of \Cref{lem:ep_cover_time}. Merging the tables removes the extra factor (\Cref{subsec:pl_reset_learning}).
\end{remark}

\subsection{Merging the tables: epoch \texorpdfstring{$Q$}{Q}-learning with resets}
\label{subsec:pl_reset_learning}

The factor $1/\tilde{\beta}$ of \Cref{rem:pl_epoch_vs_learning} is the price of maintaining $H$ depth-indexed tables. By \Cref{cor:pl_horizon_bias}, the planning solution is within $\varepsilon$ of $Q^*_\Omega$, up to constants, at every depth $h_1\leq h\leq H-m-H_{\mathrm{eff}}(A_{\Delta_B})\log_+(\Delta_B/\varepsilon)$, so these tables are redundant on most of the horizon. Merging them gives \Cref{alg:pl_reset_learning}, which is \Cref{alg:epoch_regularized_q_learning} run on a reset process: the transient prefix of each rollout is played with $\mu$ and discarded, and the transitions retained at depths $h\geq h_1$ are transitions of the MDP restricted to $\mS_\infty$, since $s_h,s_{h+1}\in\mS_\infty$. This MDP satisfies \Cref{assumption:m-step-mixing} (\Cref{rem:pl_local_mixing}.(i)). We keep the notation of \Cref{subsec:proofs_epoch_q_learning}, every constant being computed on $\mS_\infty$: $e_0=\|Q_0-Q^*_\Omega\|_\spn$, the localized constants of \Cref{def:localized-radii}, the exit time~\eqref{eq:ep_exit_time}, the blocks $\mB_i$ and the $\sigma$-algebras $\mG_j$. Only the coverage constant and the epoch length change.

\begin{lemma}[Cover time with resets]
    \label{lem:pl_reset_cover_time}
    Assume $H\geq h_1+m+1$ and let $K_0:=\lfloor(H-1-h_1)/m\rfloor\geq1$. For $j\geq0$, a deterministic radius $u\geq0$ and $\delta'\in(0,1)$, let $\mathcal{D}_j(u):=\{\|Q_j\|_\asp\leq u\}$, let $q^{\mathrm{res}}(u):=\beta_m\gamma_\Omega(u)^m\eta_{\min}$, or $q^{\mathrm{res}}(u):=\beta_m\eta_{\min}$ if the actions at depths $h\geq h_1$ are drawn from $\mu$ instead of $\pi_{\Omega,Q_j}$, let $\iota(\delta'):=\log(|\mS_\infty||\mA|/\delta')$ and
    \begin{equation}
        \label{eq:pl_reset_length}
        L^{\mathrm{res}}_j(u,\delta'):=H\left\lceil\frac{1}{K_0}\left\lceil\frac{\big(\sqrt{B_j}+\sqrt{\iota(\delta')}\big)^2}{q^{\mathrm{res}}(u)}\right\rceil\right\rceil.
    \end{equation}
    Then, almost surely, $\bP\big(\{\ell_j>L^{\mathrm{res}}_j(u,\delta')\}\cap\mathcal{D}_j(u)\mid\mG_j\big)\leq\delta'\,\indicator_{\mathcal{D}_j(u)}$. If moreover $H\geq2(h_1+m+1)$, then $L^{\mathrm{res}}_j(u,\delta')\leq2m\big\lceil(\sqrt{B_j}+\sqrt{\iota(\delta')})^2/q^{\mathrm{res}}(u)\big\rceil+H$.
\end{lemma}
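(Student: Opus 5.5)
The plan is to adapt the proof of \Cref{lem:ep_cover_time}, replacing the $m$-step sub-sampling of a single trajectory by sub-sampling within each rollout together with the independence of resets. Fix $j$ and work under a regular conditional probability given $\mG_j$, at an outcome in $\mathcal{D}_j(u)\in\mG_j$. On this event $\pi_{\Omega,Q_j}\geq\gamma_\Omega(u)\mu$, or the behavior at depths $h\geq h_1$ is $\mu$ itself. Conditionally on $\mG_j$, the rollouts of epoch $j$, extended beyond the end of the epoch as in \Cref{conv:ep_extended}, are i.i.d. The behavior on the prefix $h<h_1$ is $\mu$ and is irrelevant.

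Within a rollout I use the sampling depths $d_k:=h_1+km$, for $k\in\llbracket1,K_0\rrbracket$. These satisfy $d_k\leq h_1+K_0m\leq H-1$, so each is a retained depth. Consecutive depths are $m$ apart, and $d_{k-1}\geq h_1$ gives $s_{d_{k-1}}\in\mS_\infty$. \Cref{lem:pl_coverage}.(i), or equivalently \Cref{lem:mixing-properties-of-regularization}.(v) applied on the closed core $\mS_\infty$ (\Cref{rem:pl_local_mixing}.(i)), then gives
\[
\bP\big((s_{d_k},a_{d_k})=(s,a)\mid\mF_{d_{k-1}}\big)\geq q^{\mathrm{res}}(u)
\]
for every $(s,a)\in\mS_\infty\times\mA$. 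This uses $\beta_m\gamma_\Omega(u)^m\eta(s,a)\geq q^{\mathrm{res}}(u)$, or the version with $\gamma\equiv1$ under $\mu$.

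Next, concatenate the rollouts chronologically. Let $N:=\lceil K/K_0\rceil$, where $K:=\lceil(\sqrt{B_j}+\sqrt{\iota(\delta')})^2/q^{\mathrm{res}}(u)\rceil$. Fix a pair $(s,a)$ and form the sequence of indicators $Z$ that the pair is visited at the sampled depths of the first $N$ rollouts. This sequence has $NK_0\geq K$ terms, adapted to the natural filtration. For the first sampled depth of each rollout, the conditioning reaches back across a reset: the new rollout is independent of the past given $\mG_j$, and applying the bound at $d_0=h_1$ inside that rollout still gives conditional probability at least $q^{\mathrm{res}}(u)$. Hence the predictable mean is at least $Kq^{\mathrm{res}}(u)$ deterministically. \Cref{lem:self-bounding}, with $b=1$, $a=Kq^{\mathrm{res}}(u)$ and $c=B_j$, followed by the inequality $h(\theta)\geq(\sqrt\theta-1)^2$ exactly as in \Cref{lem:ep_cover_time}, bounds the probability of fewer than $B_j$ visits by $\delta'/(|\mS_\infty||\mA|)$. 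A union bound over $\mS_\infty\times\mA$ then shows that every pair is counted $B_j$ times within $N$ rollouts, which is $NH=L^{\mathrm{res}}_j(u,\delta')$ transitions. The counting rule of \Cref{alg:pl_reset_learning} accepts these visits, since they occur at depths $h\geq h_1$.

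For the final claim, $H\geq2(h_1+m+1)$ gives $K_0m\geq H-1-h_1-(m-1)\geq H/2$, so $H/K_0\leq2m$. Then $H\lceil K/K_0\rceil\leq HK/K_0+H\leq2mK+H$.

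The main obstacle I expect is the filtration bookkeeping. The sampled events must be adapted to a single filtration that spans both within-rollout steps and resets, and the rollout extension must be justified so that the epoch's random stopping does not bias the conditional probabilities. I would handle this, as in \Cref{conv:ep_extended}, by defining the i.i.d. extended rollouts first and noting that they coincide with the algorithm before $\ell_j$.
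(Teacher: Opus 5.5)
Your proposal is correct and follows essentially the same route as the paper. You sub-sample each rollout at depths $h_1+km$, $k\in\llbracket1,K_0\rrbracket$, use \Cref{lem:pl_coverage}.(i) within a rollout and the independence of fresh resets across rollouts to obtain the deterministic lower bound $q^{\mathrm{res}}(u)$ on the predictable visit probabilities, and then conclude with \Cref{lem:self-bounding}, a union bound over $\mS_\infty\times\mA$, and $K_0\geq H/(2m)$. The only cosmetic difference is that you apply the Chernoff bound to all $NK_0\geq K$ sampled instants of the first $N=\lceil K/K_0\rceil$ rollouts, whereas the paper uses only the first $K$ instants; the two are equivalent.
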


\begin{proof}
    Conditionally on $\mG_j$, the rollouts of epoch $j$, extended beyond its end if needed, are i.i.d. Consider the instants of depths $h_1+km$, $k\in\llbracket1,K_0\rrbracket$, of the successive rollouts, in chronological order, fix $(s,a)\in\mS_\infty\times\mA$, and let $Z_l$ indicate that the $l$-th instant visits $(s,a)$. On $\mathcal{D}_j(u)\in\mG_j$, the actions at depths $h\geq h_1$ are drawn from $\mu$ or from a policy dominating $\gamma_\Omega(u)\mu$, and the $m$ steps before each instant start at a depth $h\geq h_1$, hence in $\mS_\infty$. \Cref{lem:pl_coverage}.(i) with $\mathcal{G}=\mG_j$, conditional on the previous instant of the same rollout or integrated over a fresh reset, therefore gives $\bE[Z_l\mid Z_1,\dots,Z_{l-1},\mG_j]\geq q^{\mathrm{res}}(u)$. The rest is the proof of \Cref{lem:ep_cover_time}: the exact Chernoff bound~\eqref{eq:exact-chernoff} with $b=1$, $a=Kq^{\mathrm{res}}(u)$ and $c=B_j$, where $K:=\lceil(\sqrt{B_j}+\sqrt{\iota(\delta')})^2/q^{\mathrm{res}}(u)\rceil$, and a union bound over $\mS_\infty\times\mA$ show that, with conditional probability at least $1-\delta'$, every pair is visited $B_j$ times within $K$ instants, which $\lceil K/K_0\rceil$ rollouts of $H$ transitions provide. Finally, $K_0\geq(H-1-h_1-m)/m\geq H/(2m)$ when $H\geq2(h_1+m+1)$.
\end{proof}

\begin{theorem}[Epoch $Q$-learning with resets]
    \label{thm:pl_reset_learning}
    Let $H\geq h_1+m+1$. \Cref{thm:ep_main}, under \Cref{assumption:lr}, and \Cref{cor:ep_alpha_one} hold for \Cref{alg:pl_reset_learning}, with every constant computed on the MDP restricted to $\mS_\infty$ and the epoch length $L_j(\delta')$ of \Cref{lem:ep_cover_time} replaced by $L^{\mathrm{res}}_j(A_R,\delta')$. In particular, \Cref{thm:epoch-main-body} holds with $L:=H\lceil\frac{1}{K_0}\lceil(\sqrt B+\sqrt\iota)^2/q^{\mathrm{res}}\rceil\rceil$, where $q^{\mathrm{res}}:=\bar\gamma\tilde\beta\eta_{\min}$, or $q^{\mathrm{res}}:=\beta_m\eta_{\min}$ if the actions at depths $h\geq h_1$ are drawn from $\mu$. For $H\geq2(h_1+m+1)$, this gives the sample complexity
    \begin{equation}
        \label{eq:pl_reset_complexity}
        \tilde{O}\left(\frac{m^4\,\bar{w}^2}{\tilde{\beta}^3\,q^{\mathrm{res}}\,\varepsilon^2}+\frac{mH}{\tilde{\beta}}\right),
    \end{equation}
    whose first term is $\tilde O\big(m^4\bar w^2/(\tilde\beta^3\cdot\beta_m\eta_{\min}\cdot\varepsilon^2)\big)$ under the behavior $\mu$ and $\tilde O\big(m^4\bar w^2/(\tilde\beta^3\cdot\bar\gamma\tilde\beta\eta_{\min}\cdot\varepsilon^2)\big)$ under the greedy behavior.
\end{theorem}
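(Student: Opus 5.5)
The plan is a reduction: \Cref{alg:pl_reset_learning} is \Cref{alg:epoch_regularized_q_learning} run on the MDP restricted to $\mS_\infty$, with a different data-collection process. So I would re-run the proof of \Cref{thm:ep_main} and \Cref{cor:ep_alpha_one} and check which ingredients depend on how the data are collected. The restricted MDP satisfies \Cref{assumption:m-step-mixing} by \Cref{rem:pl_local_mixing}.(i). Hence everything that concerns only the noiseless operator carries over verbatim, with constants computed on $\mS_\infty$: \Cref{thm:regularized_fixed_point}, the virtual flow of \Cref{lem:ep_virtual_flow}, and the contraction of \Cref{cor:ep_alpha_one}. The retained transitions at depths $h\geq h_1$ have $s_h,s_{h+1}\in\mS_\infty$, so every target $r(s,a)+\Omega^*(Q_j)(s_{h+1})$ involves only $Q_j$ on $\mS_\infty\times\mA$.

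The noise step comes next. I would redo \Cref{lem:ep_update} and \Cref{lem:ep_noise} with the filtration ordered by the chronological sequence of retained visits across successive rollouts. The key point is that, given $\mG_j$ and the past, the successor of a retained visit to $(s,a)$ is distributed as $p(\cdot|s,a)$, whether the visit occurs mid-rollout or right after a fresh reset. Resets are independent of the past given $\mG_j$, so the increments remain martingale differences. The cap keeps the number of increments deterministic, equal to $B_j$, and the ranges are bounded by $\bar w(R)$ on $\{j<\sigma_R\}$. Then \Cref{lem:ep_noise_epoch}, \Cref{lem:ep_block_martingale} and \Cref{lem:ep_tracking} are unchanged.

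The only substantive change is the epoch length. Here I would invoke \Cref{lem:pl_reset_cover_time} in place of \Cref{lem:ep_cover_time}, at the deterministic radius $u=A_R$, using $\{j<\sigma_R\}\subseteq\mathcal{D}_j(A_R)$. Well-posedness follows as in \Cref{prop:ep_wellposed} by letting $\delta'\downarrow0$ and $u\uparrow\infty$. The good event \Cref{lem:ep_good_event} and the no-escape argument \Cref{prop:ep_no_escape} then go through with $L^{\mathrm{res}}_j$ in place of $L_j$. For the epoch count in \Cref{lem:ep_counts}.(i), each epoch still needs at least $|\mS_\infty||\mA|B_j$ transitions, so $J_{\max}$ and the union bounds are unaffected.

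For the specialization to \Cref{thm:epoch-main-body}, take $R=2e_0$. Then $q^{\mathrm{res}}(A_R)=\beta_m\bar\gamma^m\eta_{\min}=\bar\gamma\tilde\beta\eta_{\min}$, or $\beta_m\eta_{\min}$ under $\mu$. The number of epochs needed is still $m\lceil\tilde\beta^{-1}\log(2e_0/\varepsilon)\rceil$. For $H\geq 2(h_1+m+1)$, each epoch costs at most $2mK+H$ transitions, with $K=\tilde\Theta(B/q^{\mathrm{res}})$ and $B=\tilde\Theta(m^2\bar w^2/(\tilde\beta\varepsilon)^2)$. Multiplying the epoch count by this per-epoch cost gives the two terms of~\eqref{eq:pl_reset_complexity}.

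I expect the main obstacle to be the filtration bookkeeping across rollouts. The retained visits come from interleaved resets and discarded $\mu$-prefixes, so the conditioning has to be set up carefully for both the Azuma and the Chernoff steps; the rest is verbatim.
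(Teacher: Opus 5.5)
Your proposal is correct and follows essentially the same route as the paper. You check that the reset process preserves the three properties the epoch analysis actually uses: the behavior is $\mG_j$-measurable, the successor of a retained visit is distributed as $p(\cdot|s,a)$ given the past, and the cap makes the number of retained samples deterministic. You then replace \Cref{lem:ep_cover_time} by \Cref{lem:pl_reset_cover_time}, including inside the well-posedness argument of \Cref{prop:ep_wellposed}, and you get the count by multiplying the $m\lceil\tilde\beta^{-1}\log(2e_0/\varepsilon)\rceil$ epochs by the per-epoch bound $2mK+H$. Your extra check that each epoch still uses at least $|\mS_\infty||\mA|B_j$ transitions, so that $J_{\max}(T)$ and the union bounds stay valid, is correct; the paper leaves it implicit.
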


\begin{proof}
    The proofs of \Cref{subsec:proofs_epoch_q_learning} use the trajectory through three properties only: the behavior during epoch $j$ is $\mG_j$-measurable; given the past, the successor of a retained visit of $(s,a)$ is distributed as $p(\cdot|s,a)$ (\Cref{lem:ep_noise}.(ii)); and the cap makes the number of retained samples deterministic (\Cref{lem:ep_update}). The reset process satisfies all three, by the Markov property within each rollout and the independence of the rollouts given $\mG_j$. The virtual flow, tracking and no-escape arguments (\Cref{lem:ep_virtual_flow,lem:ep_tracking}, \Cref{prop:ep_no_escape}) only concern the update~\eqref{eq:ep_update}, and the trajectory structure only enters through \Cref{lem:ep_cover_time}, which \Cref{lem:pl_reset_cover_time} replaces, and \Cref{prop:ep_wellposed}, whose proof applies verbatim with it. The count follows as in the proof of \Cref{thm:epoch-main-body}, with $L\leq2m\lceil(\sqrt B+\sqrt\iota)^2/q^{\mathrm{res}}\rceil+H$ by \Cref{lem:pl_reset_cover_time}.
\end{proof}

\begin{remark}[What the merge buys and what it loses]
    \label{rem:pl_merge}
    \leavevmode
    \begin{enumerate}[label=(\roman*)]
        \item \emph{What the resets bring.} Relative to \Cref{thm:epoch-main-body}, \eqref{eq:pl_reset_complexity} takes $\eta_{\min}$ over $\mS_\infty\times\mA$, with the minorizing measure of \Cref{assumption:pl_local_mixing}, which can be much larger when $\mS_\infty\subsetneq\mS$. The factor $\gamma_\Omega^m$ removed under the behavior $\mu$ is not due to the resets: it is removed on a single trajectory as well (\Cref{rem:ep_behavior_mu}). Relative to epoch planning, merging the tables saves the factor $1/\tilde{\beta}$ of \Cref{rem:pl_epoch_vs_learning}, and the rollout length only enters through $H\geq2(h_1+m+1)$ and the second term of~\eqref{eq:pl_reset_complexity}.
        \item \emph{The boundary becomes the initialization.} There is no horizon bias: the target is $Q^*_\Omega$ itself, and the boundary value $Q_B$ of the planners plays the role of $Q_0$. The horizon bias $(1-\tilde\beta)^{\lfloor(H-h_1)/m\rfloor}\Delta_B$ of \Cref{thm:pl-epoch-main} and the initialization term $(1-\tilde\beta)^{\lfloor J(T)/m\rfloor}e_0$ of \Cref{thm:epoch-main-body} are the same term under two names.
        \item \emph{The exact fixed point is lost.} The single table only solves $\mT_\Omega(Q^*_\Omega)=Q^*_\Omega+\rho^*_\Omega\ones$, which is why \Cref{alg:pl_reset_learning} needs the centering projection $\Pi$, and the depth profile of \Cref{thm:pl_epoch} has no counterpart.
        \item \emph{A single parameterized algorithm.} Without resets, \Cref{alg:pl_reset_learning} is \Cref{alg:epoch_regularized_q_learning}, and \Cref{thm:pl_reset_learning,thm:epoch-main-body} are one statement in which $(H,\xi)$ only enters through the coverage constant and the epoch length.
    \end{enumerate}
\end{remark}

\section{The \texorpdfstring{$\alpha$}{α}-divergence regularizers}
\label{app:proofs_alpha}

\Cref{tab:alpha_family} summarizes \Cref{thm:alpha,cor:landmarks} for $\Omega_{\alpha,s}:=D_\alpha(\cdot\|\mu_s)$. In it, $Q_a:=Q(s,a)$, $\mu_a:=\mu(a|s)$, $\exp_q(x):=[1+(1-q)x]_+^{1/(1-q)}$ with $\exp_1=\exp$, and $\zeta_s$ is the normalizer making $\pi_{\Omega,Q}(\cdot|s)$ sum to one; strong convexity is with respect to $\|\cdot\|_1$ and fails for $\alpha>2$, and the profile bounds are tight as $\mu_{\min}\to0$ (\Cref{rem:alpha_exact}).

On the simplex $D_\alpha(p\|q)=D_{1-\alpha}(q\|p)$ \citep{cichocki2010families}, so divergences to $\mu$ in either argument belong to the family. For a general reference $\mu_s$, the greedy map of $\Omega_{\alpha,s}$ is the $f$-softargmax of \citet{roulet2025loss} with reference measure $\mu_s$, and \citet{belousov2019entropic} give its closed form in MDPs. For $\mu$ uniform, $\Omega_{\alpha,s}$ is the Tsallis negentropy of index $\alpha$ up to the factor $|\mA|^{\alpha-1}$ on $\tau$ and an additive constant, and for $\alpha\geq1$ its greedy map is the $\alpha$-entmax of \citet{peters2019sparse}. Tsallis-KL regularization toward a reference policy, which is this family up to a reparametrization of the index and a rescaling of $\tau$, is used in reinforcement learning by \citet{zhu2023general}, and \citet{labbi2026beyond} use the same divergences for policy gradient. Parts (i)--(iii) of \Cref{thm:alpha} collect known facts in our notation; our contribution is the domination profile (iv) and the regularization modulus (v). Since $\Omega_{\alpha,s}\geq0$ vanishes at $\mu_s$, \Cref{lem:omega_star_props}.(ii) gives $g_Q\leq0$, while $g_Q\geq-\Lambda^{\mathrm{pt}}_{\Omega_\alpha}(R)$ for $\|Q\|_\asp\leq R$ by definition; hence $\Lambda_{\Omega_\alpha}\leq\Lambda^{\mathrm{pt}}_{\Omega_\alpha}$, and we bound the latter.

\begin{theorem}[The $\alpha$-divergence regularizer]
    \label{thm:alpha}
    Let $\alpha\in\bR$, $\tau>0$ and $\mu$ have full support.
    \begin{enumerate}[label=(\roman*)]
        \item \emph{Domain.} $\dom(\Omega_{\alpha,s})=\Delta(\mA)$ for $\alpha>0$ and $\{\pi_s>0\}$ for $\alpha\leq0$; $\Omega_{\alpha,s}$ is proper and closed, and \Cref{assumption:reference_policy_general} holds with $\tilde{\pi}=\mu$.
        \item \emph{Strong convexity} (Pinsker's inequality at $\alpha=1$). $\Omega_{\alpha,s}$ is $\mu_{\min}^{(1-\alpha)_+}$-strongly convex with respect to $\|\cdot\|_1$ for $\alpha\leq2$, and strictly but not strongly convex for $\alpha>2$.
         \item \emph{Greedy policy and conjugate} \citep[Proposition~1]{roulet2025loss}. For $\alpha\neq1$,
        \begin{align}
            \label{eq:alpha_policy_1}
            \pi_{\Omega_\alpha,Q}(a|s)&=\mu(a|s)\left[1+\frac{(\alpha-1)\big(Q(s,a)-\zeta_s(Q_s)\big)}{\tau}\right]_+^{\frac{1}{\alpha-1}},\\
            \label{eq:alpha_policy_2}
            \Omega^*_\alpha(Q)(s)&=\frac{(\alpha-1)\langle\pi_{\Omega_\alpha,Q}(\cdot|s),Q_s\rangle+\zeta_s(Q_s)}{\alpha}\quad(\alpha\neq0),
        \end{align}
        where $\zeta_s(Q_s)$ is the unique normalizer making~\eqref{eq:alpha_policy_1} sum to one; the truncation is active only for $\alpha>1$. For any $a^*\in\argmax_aQ(s,a)$,
        \begin{equation*}
            \max_aQ(s,a)-\tau\,\frac{\mu(a^*|s)^{1-\alpha}-1}{\alpha-1}\ \leq\ \zeta_s(Q_s)\ \leq\ \max_aQ(s,a),
        \end{equation*}
        so that $\zeta_s$ is computable by bisection.
        \item \emph{Domination profile.} $\gamma_{\Omega_\alpha}(R)\geq\exp_{2-\alpha}(-R/\tau)$. If $\alpha>1$ and $|\mA|\geq2$, then $\gamma_{\Omega_\alpha}(R)=0$ for $R\geq\tau\mu_{\max}^{1-\alpha}/(\alpha-1)$, with $\mu_{\max}:=\max_{s,a}\mu(a|s)$. Hence~\eqref{eq:P1} holds if and only if $\alpha\leq1$.
        \item \emph{Regularization modulus.} For $\alpha>0$, $\Lambda^{\mathrm{pt}}_{\Omega_\alpha}(R)\leq\tau\Delta_\alpha$ with $\Delta_\alpha:=\frac{\mu_{\min}^{1-\alpha}-1}{\alpha(\alpha-1)}$, and $\Delta_\alpha\leq\frac{1}{\alpha(1-\alpha)}$ for $0<\alpha<1$. For $\alpha\leq0$,
        \begin{equation}
            \label{eq:alpha_modulus}
            \Lambda^{\mathrm{pt}}_{\Omega_\alpha}(R)\ \leq\ \inf_{0<\varepsilon\leq1}\Big\{\varepsilon R+\tau\,\frac{1-\varepsilon^\alpha}{\alpha(1-\alpha)}\Big\}
            =\begin{cases}\Theta\big(\tau^{\frac{1}{1-\alpha}}R^{\frac{-\alpha}{1-\alpha}}\big)&\alpha<0,\\ \Theta\big(\tau\log(R/\tau)\big)&\alpha=0,\end{cases}
        \end{equation}
        the fraction read as $\log(1/\varepsilon)$ at $\alpha=0$, and these rates are attained. Hence the modulus is bounded in $R$ if and only if $\alpha>0$.
    \end{enumerate}
\end{theorem}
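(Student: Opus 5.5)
The plan is to work throughout with the representation $\Omega_{\alpha,s}(\pi_s)=\sum_a\mu_a f_\alpha(\pi_a/\mu_a)$, where $f_\alpha(t):=(t^\alpha-1)/(\alpha(\alpha-1))$ (with the usual limits at $\alpha\in\{0,1\}$). This form is valid on the simplex because $\sum_a\pi_a=\sum_a\mu_a=1$, and I treat the five items in order.

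\emph{Items (i)--(iii).} For (i), when $\alpha>0$ each term $\pi_a^\alpha\mu_a^{1-\alpha}$ stays finite at $\pi_a=0$. When $\alpha<0$ it blows up, and at $\alpha=0$ the $\log(1/\pi_a)$ term does. This gives the stated domains. Closedness is lower semicontinuity of a sum of lsc convex terms with values in $\bR\cup\{+\infty\}$. Finally $\Omega_{\alpha,s}(\mu_s)=0$, which gives \Cref{assumption:reference_policy_general}.

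For (ii), the Hessian on the relative interior is diagonal with entries $w_a:=\mu_a^{1-\alpha}\pi_a^{\alpha-2}$. By Cauchy--Schwarz,
\begin{equation*}
\Big(\sum_a|h_a|\Big)^2\le\Big(\sum_a w_ah_a^2\Big)\Big(\sum_a\mu_a^{\alpha-1}\pi_a^{2-\alpha}\Big).
\end{equation*}
\begin{itemize}
\item For $1\le\alpha\le2$, Hölder bounds the last sum by $(\sum_a\mu_a)^{\alpha-1}(\sum_a\pi_a)^{2-\alpha}=1$.
\item For $\alpha<1$, I use $\pi_a^{2-\alpha}\le\pi_a$ and $\mu_a^{\alpha-1}\le\mu_{\min}^{\alpha-1}$, which gives the modulus $\mu_{\min}^{1-\alpha}$.
\item For $\alpha>2$, the weight $w_a$ vanishes as $\pi_a\to0$, which rules out strong convexity. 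Strict convexity of $f_\alpha$ still gives strict convexity of $\Omega_{\alpha,s}$.
\end{itemize}

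For (iii), I would write the KKT conditions $\tau f'_\alpha(\pi_a/\mu_a)=Q_a-\zeta_s+\lambda_a$ and invert $f'_\alpha$; alternatively I would cite \citet{roulet2025loss}. The upper bound $\zeta_s\le\max_aQ$ holds because otherwise every ratio $\pi_a/\mu_a$ would be $<1$. The lower bound holds because otherwise the ratio at $a^*$ alone would exceed $1/\mu_{a^*}$.

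\emph{Item (iv).} Since $\zeta_s\le\max_aQ$ and $\|Q\|_\asp\le R$, we have $Q_a-\zeta_s\ge-R$. Monotonicity of $x\mapsto[1+(\alpha-1)x/\tau]_+^{1/(\alpha-1)}$ then gives $\pi_a/\mu_a\ge\exp_{2-\alpha}(-R/\tau)$. For the vanishing statement at $\alpha>1$, I take $Q=0$ at an action $a^*$ attaining $\mu_{\max}$ and $Q=-R$ elsewhere. The lower bound on $\zeta_s$ from (iii) gives
\begin{equation*}
-R-\zeta_s\le -R+\tau\,\frac{\mu_{\max}^{1-\alpha}-1}{\alpha-1}\le-\frac{\tau}{\alpha-1}
\end{equation*}
as soon as $R\ge\tau\mu_{\max}^{1-\alpha}/(\alpha-1)$, so the bracket is truncated to $0$. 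The equivalence with~\eqref{eq:P1} then follows, because $\exp_{2-\alpha}(-R/\tau)>0$ for all $R$ exactly when $\alpha\le1$.

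\emph{Item (v), which I expect to be the main obstacle.} For $\alpha>0$, \Cref{rem:bounded_modulus} applies, with $\Omega_{\alpha,s}(\delta_a)=f_\alpha(1/\mu_a)\mu_a+(1-\mu_a)f_\alpha(0)=\frac{\mu_a^{1-\alpha}-1}{\alpha(\alpha-1)}$. This yields $\tau\Delta_\alpha$, and the elementary bound on $\Delta_\alpha$ for $\alpha<1$ follows. For $\alpha\le0$, I plug the test policy $\pi=(1-\varepsilon)\delta_{a^*}+\varepsilon\mu_s$ into $\Omega^*$. It satisfies $\langle\pi,Q_s\rangle\ge\max_aQ-\varepsilon R$. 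Since $\pi_a/\mu_a\ge\varepsilon$ and $t^\alpha\le\varepsilon^\alpha$ for $t\ge\varepsilon$, we get $\sum_a\mu_a(\pi_a/\mu_a)^\alpha\le\varepsilon^\alpha$, hence $\Omega_{\alpha,s}(\pi)\le(1-\varepsilon^\alpha)/(\alpha(1-\alpha))$. This proves~\eqref{eq:alpha_modulus}. Optimizing over $\varepsilon$ ($\varepsilon\asymp\tau/R$ at $\alpha=0$, and $\varepsilon^{1-\alpha}\asymp\tau/R$ for $\alpha<0$) gives the rates. The hardest part is attainment. I would use two actions with $Q=(0,-R)$ and solve the one-dimensional problem $\max_p\{-(1-p)R-\tau\Omega(p)\}$ asymptotically as $R/\tau\to\infty$, showing that the optimal $1-p$ scales like the chosen $\varepsilon$. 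This gives matching lower bounds, and unboundedness in $R$ for $\alpha\le0$ follows.
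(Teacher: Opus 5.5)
Your plan follows the paper's proof almost item by item. It uses the same Cauchy--Schwarz/H\"older bound on the diagonal Hessian in (ii), the cited closed form with a monotonicity argument for the bracket on $\zeta_s$ in (iii), and, in (v), the same test policy $(1-\varepsilon)\delta_{a^*}+\varepsilon\mu_s$ and the same two-action attainment.

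The one real variation is in (iv). You read the sparsity for $\alpha>1$ off the lower bracket on $\zeta_s$, which directly forces $1+(\alpha-1)(-R-\zeta_s)/\tau\leq0$ at the other actions. The paper instead exhibits $\delta_{a^*}$ with its normalizer and appeals to uniqueness. Your route is valid and avoids guessing the candidate.

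The step that fails is the last clause of (ii), and the paper's one-line justification has the same weakness. A diagonal entry $w_a=\mu_a^{1-\alpha}\pi_a^{\alpha-2}$ tending to $0$ does not rule out strong convexity on the simplex, because only tangent directions $h$ with $\sum_ah_a=0$ count. You need two small coordinates at once: at relative-interior points with $\pi_b=\pi_c=\epsilon\to0$, the direction $h=e_b-e_c$ has curvature $w_b+w_c\to0$ while $\|h\|_1=2$. This construction requires $|\mA|\geq3$.

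For $|\mA|=2$ the claim is in fact false. The only tangent curvature is $\mu_1^{1-\alpha}p^{\alpha-2}+\mu_2^{1-\alpha}(1-p)^{\alpha-2}\geq2^{2-\alpha}$ for $\alpha>2$. Concretely, with $\alpha=3$ and uniform $\mu$ one gets exactly $D_3(\pi\|\mu)=\tfrac12\|\pi-\mu\|_1^2$, which is $1$-strongly convex in $\|\cdot\|_1$. That clause must therefore be stated and proved under $|\mA|\geq3$.

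\emph{Smaller omissions.}
\begin{itemize}
\item In (iii) you never establish \eqref{eq:alpha_policy_2}. With $u_a:=1+(\alpha-1)(Q_a-\zeta_s)/\tau$, one has $\pi_a^\alpha\mu_a^{1-\alpha}=\pi_au_a$ at the greedy policy. Hence $\tau\Omega_{\alpha,s}(\pi)=(\langle\pi,Q_s\rangle-\zeta_s)/\alpha$, and \eqref{eq:alpha_policy_2} follows.
\item If you go through KKT with your generator $f_\alpha$, the multiplier you obtain is $\zeta_s-\tau/(\alpha-1)$, not the $\zeta_s$ of \eqref{eq:alpha_policy_1}, so you must re-center it.
\item In (v), \Cref{rem:bounded_modulus} bounds the span modulus $\Lambda_{\Omega}$, not $\Lambda^{\mathrm{pt}}_{\Omega}$. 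The pointwise bound comes from \Cref{lem:omega_star_props}.(ii) with $L_{\Omega_s}=0$ and $U_{\Omega_s}=\Delta_\alpha$, which gives $g_Q\in[-\tau\Delta_\alpha,0]$.
\item The ``only if'' half of the \eqref{eq:P1} equivalence rests on your vanishing construction, which needs $|\mA|\geq2$. It does not follow from $\exp_{2-\alpha}(-R/\tau)$ vanishing, since that is only a lower bound.
\end{itemize}
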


\begin{proof}
    Fix $s$ and write $\pi_a$, $\mu_a$, $Q_a$ for $\pi_s(a)$, $\mu(a|s)$, $Q(s,a)$.

    \emph{(i)} The term $\pi_a^\alpha\mu_a^{1-\alpha}$ is finite for every $\pi_a\geq0$ when $\alpha>0$, and infinite at $\pi_a=0$ when $\alpha<0$, as is $\log(\mu_a/\pi_a)$ at $\alpha=0$. $\Omega_{\alpha,s}$ is convex and continuous on its domain and tends to $+\infty$ at the boundary when $\alpha\leq0$, hence proper and closed; $D_\alpha(\mu_s\|\mu_s)=0$ gives \Cref{assumption:reference_policy_general} with $\tilde{\pi}=\mu$.

    \emph{(ii)} On the relative interior $\nabla^2\Omega_{\alpha,s}(\pi)=\operatorname{diag}(\pi_a^{\alpha-2}\mu_a^{1-\alpha})$, and Cauchy--Schwarz gives $\langle\nabla^2\Omega_{\alpha,s}(\pi)h,h\rangle\geq\|h\|_1^2\big/\sum_a\pi_a^{2-\alpha}\mu_a^{\alpha-1}$. For $1\leq\alpha\leq2$, H\"older bounds the sum by $(\sum_a\pi_a)^{2-\alpha}(\sum_a\mu_a)^{\alpha-1}=1$; for $\alpha<1$, $\pi_a^{2-\alpha}\leq\pi_a$ bounds it by $\mu_{\min}^{\alpha-1}$. Integrating along segments and closing by lower semicontinuity gives the modulus. For $\alpha>2$ the Hessian is positive definite on the interior but its entries vanish as $\pi_a\to0$.

    \emph{(iii)} The form~\eqref{eq:alpha_policy_1} of the maximizer, with a unique normalizer, is \citet[Proposition~1]{roulet2025loss} with reference measure $\mu_s$ and logits $Q_s/\tau$. For the bracket, $S(\zeta):=\sum_a\mu_a[1+(\alpha-1)(Q_a-\zeta)/\tau]_+^{1/(\alpha-1)}$ is continuous and non-increasing on its domain, $\bR$ if $\alpha>1$ and $(\max_aQ_a-\tau/(1-\alpha),+\infty)$ if $\alpha<1$; $S(\max_aQ_a)\leq1$, every ratio $\pi_a/\mu_a$ being then at most $1$; and $S\geq1$ at $\zeta=\max_aQ_a-\tau(\mu_{a^*}^{1-\alpha}-1)/(\alpha-1)$, which lies in the domain and where the term of $a^*$ alone equals $1$. Finally, for $\alpha\neq0$, with $u_a:=1+(\alpha-1)(Q_a-\zeta_s)/\tau$, we have $\pi_a^\alpha\mu_a^{1-\alpha}=\pi_au_a$ and $\sum_a\pi_au_a=1+(\alpha-1)(\langle\pi,Q_s\rangle-\zeta_s)/\tau$, hence $\tau\Omega_{\alpha,s}(\pi)=(\langle\pi,Q_s\rangle-\zeta_s)/\alpha$, and $\Omega^*_\alpha(Q)(s)=\langle\pi,Q_s\rangle-\tau\Omega_{\alpha,s}(\pi)$ is~\eqref{eq:alpha_policy_2}.

    \emph{(iv)} If $\|Q\|_\asp\leq R$ then $Q_a-\zeta_s\geq Q_a-\max_bQ_b\geq-R$ by (iii), and $x\mapsto[1+(\alpha-1)x/\tau]_+^{1/(\alpha-1)}$ is non-decreasing for every $\alpha\neq1$, which gives the bound; $\alpha=1$ is the limit. For $\alpha>1$, take $a^*$ with $\mu_{a^*}=\mu_{\max}$, $Q_{a^*}=0$ and $Q_b=-R$ otherwise. The candidate $\pi=\delta_{a^*}$ has normalizer $\zeta=\tau(1-\mu_{\max}^{1-\alpha})/(\alpha-1)$ and $u_b=\mu_{\max}^{1-\alpha}-(\alpha-1)R/\tau\leq0$ for $b\neq a^*$ as soon as $R\geq\tau\mu_{\max}^{1-\alpha}/(\alpha-1)$; by uniqueness it is the greedy policy, so $\gamma_{\Omega_\alpha}(R)=0$. That the greedy map can be sparse for $\alpha>1$ is known \citep{peters2019sparse,roulet2025loss}; what~\eqref{eq:P1} requires, and what (iv) provides, is the uniform lower bound and the explicit threshold.

    \emph{(v)} Let $a^*\in\argmax_aQ_a$. For $\alpha>0$, $\Omega_{\alpha,s}$ is bounded on $\Delta(\mA)$ by $0\leq\Omega_{\alpha,s}\leq\max_a\Omega_{\alpha,s}(\delta_a)=\Delta_\alpha$, the maximum of a convex function on the simplex being attained at a vertex, and the boundedness property of the conjugate \citep[Proposition~1(ii)]{geist2019theory} gives $\max_aQ_a-\tau\Delta_\alpha\leq\Omega^*_\alpha(Q)(s)\leq\max_aQ_a$; testing the maximum defining $\Omega^*_\alpha(Q)(s)$ against $\delta_{a^*}$ gives the same bound with $\mu_{a^*}$ in place of $\mu_{\min}$. For $0<\alpha<1$, $\Delta_\alpha=\frac{1-\mu_{\min}^{1-\alpha}}{\alpha(1-\alpha)}\leq\frac{1}{\alpha(1-\alpha)}$. For $\alpha\leq0$, $\Omega_{\alpha,s}$ is unbounded and we test instead $\pi^\varepsilon:=(1-\varepsilon)\delta_{a^*}+\varepsilon\mu_s$: $\langle\pi^\varepsilon,Q_s\rangle\geq\max_aQ_a-\varepsilon R$, and $\pi^\varepsilon\geq\varepsilon\mu_s$ with $x\mapsto x^\alpha$ non-increasing gives $\Omega_{\alpha,s}(\pi^\varepsilon)\leq\frac{\varepsilon^\alpha-1}{\alpha(\alpha-1)}$, with limit $\log(1/\varepsilon)$ at $\alpha=0$. Minimizing over $\varepsilon$, at $\varepsilon=\min\{1,(\tau/((1-\alpha)R))^{1/(1-\alpha)}\}$, gives~\eqref{eq:alpha_modulus}. The rates are attained with $Q_{a^*}=R$ and $Q_b=0$: writing $\pi_b=\delta$, the gap is $\min_\delta\{R\delta+\tau\Omega_{\alpha,s}(\pi)\}$ with $\Omega_{\alpha,s}(\pi)\sim\mu_b^{1-\alpha}\delta^\alpha/(\alpha(\alpha-1))$ for $\alpha<0$ and $\sim\mu_b\log(1/\delta)$ for $\alpha=0$, whose minimizer gives the same powers of $R$.
\end{proof}

\begin{corollary}[Four landmarks]
    \label{cor:landmarks}
    \Cref{thm:alpha} at $\alpha\in\{1,0,\tfrac12,2\}$ gives the columns of \Cref{tab:alpha_family}, and the conjugates:
    \begin{center}
        \renewcommand{\arraystretch}{1.4}
        \begin{tabular}{@{}lll@{}}
            \toprule
            $\alpha$ & $\Omega^*(Q)(s)$ & $\zeta_s\in$ \\
            \midrule
            $1$ & $\tau\log\sum_a\mu_ae^{Q_a/\tau}$ & -- \\
            $0$ & $\zeta_s-\tau\sum_a\mu_a\log\big(1+\frac{\zeta_s-Q_a}{\tau}\big)$ & $(\max_aQ_a-\tau,\max_aQ_a]$ \\
            $\tfrac12$ & $2\zeta_s-\langle\pi_{\Omega,Q}(\cdot|s),Q_s\rangle$ & $(\max_aQ_a-2\tau,\max_aQ_a]$ \\
            $2$ & $\tfrac12\big(\langle\pi_{\Omega,Q}(\cdot|s),Q_s\rangle+\zeta_s\big)$ & $[\min_aQ_a,\max_aQ_a]$ \\
            \bottomrule
        \end{tabular}
    \end{center}
\end{corollary}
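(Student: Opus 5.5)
The plan is to read every entry of \Cref{tab:alpha_family} and of the conjugate table off \Cref{thm:alpha} by substituting $\alpha\in\{1,0,\tfrac12,2\}$. The only real work is at the two degenerate indices. At $\alpha=1$ the formulas of \Cref{thm:alpha}.(iii) are stated for $\alpha\neq1$, and at $\alpha=0$ formula \eqref{eq:alpha_policy_2} is stated for $\alpha\neq0$. Those two conjugates I would compute directly rather than by passing to a limit.

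First, the divergences themselves. From $D_\alpha(p\|q)=(\sum_ap_a^\alpha q_a^{1-\alpha}-1)/(\alpha(\alpha-1))$:
\begin{itemize}
\item at $\alpha=\tfrac12$, $D_{1/2}=4(1-\sum_a\sqrt{p_aq_a})=2\|\sqrt p-\sqrt q\|_2^2$;
\item at $\alpha=2$, $D_2=\tfrac12(\sum_ap_a^2/q_a-1)=\tfrac12\chi^2(p\|q)$;
\item at $\alpha\in\{0,1\}$ one gets the two KLs by continuity.
\end{itemize}
The strong convexity row is $\mu_{\min}^{(1-\alpha)_+}$ from \Cref{thm:alpha}.(ii). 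It equals $1$ for $\alpha\in\{1,2\}$, $\mu_{\min}$ for $\alpha=0$ and $\sqrt{\mu_{\min}}$ for $\alpha=\tfrac12$.

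Second, the greedy ratio from \eqref{eq:alpha_policy_1}:
\begin{itemize}
\item at $\alpha=0$ it is $[1-(Q_a-\zeta_s)/\tau]^{-1}=(1+(\zeta_s-Q_a)/\tau)^{-1}$;
\item at $\alpha=\tfrac12$ it is $(1+(\zeta_s-Q_a)/(2\tau))^{-2}$;
\item at $\alpha=2$ it is $[1+(Q_a-\zeta_s)/\tau]_+$;
\item at $\alpha=1$ it is the limit $e^{(Q_a-\zeta_s)/\tau}$, with $\zeta_s=\tau\log\sum_b\mu_be^{Q_b/\tau}$ forced by normalization.
\end{itemize}
The profile row is $\exp_{2-\alpha}(-R/\tau)$ from \Cref{thm:alpha}.(iv), which gives $e^{-R/\tau}$, $[1+R/\tau]^{-1}$, $(1+R/(2\tau))^{-2}$ and $[1-R/\tau]_+$. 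The \eqref{eq:P1} row is the ``iff $\alpha\le1$'' of (iv).

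For the modulus row, $\Delta_\alpha=(\mu_{\min}^{1-\alpha}-1)/(\alpha(\alpha-1))$ gives $4(1-\sqrt{\mu_{\min}})$ at $\alpha=\tfrac12$ and $\tfrac12(\mu_{\min}^{-1}-1)$ at $\alpha=2$. Its limit as $\alpha\to1$ is $\log(1/\mu_{\min})$. For $\alpha=1$ I would rather argue directly: $\mathrm{KL}(\cdot\|\mu_s)$ is bounded by $\log(1/\mu_{\min})$ at the Diracs, and the same vertex argument as in (v) applies. At $\alpha=0$, \eqref{eq:alpha_modulus} reads $\inf_\varepsilon\{\varepsilon R+\tau\log(1/\varepsilon)\}$. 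I would take $\varepsilon=\tau/R$ when $R\ge\tau$, giving $\tau(1+\log(R/\tau))$, and $\varepsilon=1$ otherwise, giving $R\le\tau$. Both cases are covered by $\tau(1+\log_+(R/\tau))$. The ``bounded in $R$'' row is the final sentence of (v).

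Third, the conjugates and the brackets for $\zeta_s$.
\begin{itemize}
\item For $\alpha=\tfrac12$ and $\alpha=2$, \eqref{eq:alpha_policy_2} gives $2\zeta_s-\langle\pi,Q_s\rangle$ and $\tfrac12(\langle\pi,Q_s\rangle+\zeta_s)$.
\item For $\alpha=1$, substituting the softmax into $\langle\pi,Q_s\rangle-\tau\mathrm{KL}(\pi\|\mu_s)$ gives the log-sum-exp.
\item For $\alpha=0$, I would compute with $x_a:=(\zeta_s-Q_a)/\tau$. Then $\mathrm{KL}(\mu_s\|\pi)=\sum_a\mu_a\log(1+x_a)$. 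Also $\sum_a\pi_a(\zeta_s-Q_a)=\tau\sum_a\mu_a x_a/(1+x_a)=\tau\sum_a(\mu_a-\pi_a)=0$, so $\langle\pi,Q_s\rangle=\zeta_s$. This yields the stated formula.
\end{itemize}

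For the brackets, the upper end $\max_aQ_a$ is from (iii). The lower ends at $\alpha<1$ are the left endpoints of the domain of $S$ in that proof, $\max_aQ_a-\tau/(1-\alpha)$, namely $\max_aQ_a-\tau$ and $\max_aQ_a-2\tau$. These endpoints are open because $S\to+\infty$ there. For $\alpha=2$ the bound of (iii) is weaker than $\min_aQ_a$, so I would add a one-line check: at $\zeta=\min_aQ_a$ every factor $[1+(Q_a-\zeta)/\tau]_+$ is at least $1$, so $S\ge1$, and monotonicity of $S$ places $\zeta_s\ge\min_aQ_a$.

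I expect the $\alpha=0$ conjugate identity (the cancellation $\langle\pi,Q_s\rangle=\zeta_s$) and this $\chi^2$ bracket to be the only non-mechanical steps.
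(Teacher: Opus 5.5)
Your proposal is correct and follows the paper's proof essentially line by line. You substitute the four values of $\alpha$ into \Cref{thm:alpha}.(ii)--(v), use the cancellation $\langle\pi,Q_s\rangle=\zeta_s$ for the $\alpha=0$ conjugate, take $\varepsilon=\min\{1,\tau/R\}$ for the forward-KL modulus, and evaluate the $\chi^2$ normalizing sum at $\zeta=\min_aQ_a$ and $\zeta=\max_aQ_a$ for its bracket. The only differences are minor. You read the open lower ends at $\alpha\in\{0,\tfrac12\}$ off the domain of $S$ rather than off the explicit lower bound of (iii). You also obtain the $\alpha=1$ softmax as a limit, which you could make self-contained with the Gibbs identity $\langle p,Q_s\rangle-\tau\KL(p\|\mu_s)=\tau\log\sum_b\mu_be^{Q_b/\tau}-\tau\KL(p\|\pi)$.
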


\begin{proof}
    At $\alpha=1$ the normalizer factors out of the exponential. At $\alpha=0$, \eqref{eq:alpha_policy_1} reads $\pi_a=\tau\mu_a/(\tau+\zeta_s-Q_a)$, so $\sum_a\pi_a(\tau+\zeta_s-Q_a)=\tau$ gives $\langle\pi,Q_s\rangle=\zeta_s$, and $\Omega^*(Q)(s)=\langle\pi,Q_s\rangle-\tau\KL(\mu_s\|\pi)$ gives the conjugate; the modulus bound is~\eqref{eq:alpha_modulus} at $\varepsilon=\min\{1,\tau/R\}$. At $\alpha=\tfrac12$ and $\alpha=2$ the conjugates are those of \Cref{thm:alpha}.(iii), and $\Omega_s(\delta_a)=4(1-\sqrt{\mu_a})$ and $\tfrac12(1/\mu_a-1)$ give the modulus bounds. The brackets at $\alpha\in\{0,\tfrac{1}{2}\}$ follow from \Cref{thm:alpha}.(iii); at $\alpha=2$, the sum in~\eqref{eq:alpha_policy_1} is at least $1$ at $\zeta=\min_aQ_a$ and at most $1$ at $\zeta=\max_aQ_a$.
\end{proof}

\begin{remark}[Sharpness]
    \label{rem:alpha_exact}
    For a fixed $\mu$, $\zeta_s$ is non-decreasing in each $Q_a$ while $\pi_b/\mu_b$ is non-increasing in $\zeta_s-Q_b$, so the infimum defining $\gamma_\Omega(R)$ is attained with the least likely action at the bottom and every other one at $+R$; symmetrically, $\Lambda_\Omega^{\mathrm{pt}}(R)$ is attained with the least likely action alone at the top and every other one at $-R$. Both depend on $\mu$ only through $\mu_{\min}$, and the explicit normalizers give, for instance, $\gamma_\Omega(R)=\big(\mu_{\min}+(1-\mu_{\min})e^{R/\tau}\big)^{-1}$ for the reverse KL and $\gamma_\Omega(R)=[1-(1-\mu_{\min})R/\tau]_+$ for the $\chi^2$, whose limits as $\mu_{\min}\to0$ are the bounds of \Cref{tab:alpha_family}. Inside the window $0<\alpha\leq1$ the comparison is quantitative and depends on how $\tau$ is matched across $\alpha$. At fixed $\tau$, the profile decays as $(R/\tau)^{-1/(1-\alpha)}$, more slowly for smaller $\alpha$, while the modulus constant $\Delta_\alpha=(1-\mu_{\min}^{1-\alpha})/(\alpha(1-\alpha))$ tends, as $\mu_{\min}\to0$, to $1/(\alpha(1-\alpha))$ for $\alpha<1$, minimal at the Hellinger value $4$, and to $+\infty$ at $\alpha=1$: the Hellinger regularizer is the most robust to a small $\mu_{\min}$, whereas for a balanced $\mu$ the reverse KL can have the smaller constant: for a uniform $\mu$, $\log|\mA|<4(1-|\mA|^{-1/2})$ as long as $|\mA|\leq24$. The lower bound of \Cref{thm:alpha}.(iv) equals $(f')^{-1}(-R/\tau)$ for the generator $f(u)=(u^\alpha-\alpha u+\alpha-1)/(\alpha(\alpha-1))$ of $D_\alpha$, the inverse link that also drives the policy-gradient analysis of \citet{labbi2026beyond}: their probability floor \citep[Theorem~3.5]{labbi2026beyond} is proportional to $(f')^{-1}$ evaluated at an argument of order $-1/\lambda$, where $\lambda$ is their temperature, up to factors depending on the discount and the initial distribution, and their exponential-versus-polynomial dichotomy between the KL and $\alpha<1$ is the one displayed by our profile bounds. They also find that the best $\alpha$ lies strictly inside $(0,1)$ \citep[Corollary~4.8]{labbi2026beyond}, for a different criterion.
\end{remark}

\end{document}